\newif\iflong
\tikzstyle{snode} = [draw,rounded corners,minimum size=4mm,font=\small,fill=gray!15,label distance=-1mm,node distance=0.1cm]
\tikzset{
    >=stealth',
    args/.style={circle,draw=black, thick},
    minimum size=7mm
}
\tikzstyle{box}=[ellipse,minimum width=15mm, minimum height=6mm, very
\tikzstyle{arg}=[draw,thick,circle,minimum size=0.7cm]
\tikzstyle{argm}=[draw,thick,circle,minimum size=0.7cm,scale=0.6]
\tikzstyle{attack}=[->,left,thick]
\tikzstyle{attackc}=[attack,trcolor]
\newcommand{\standardbox}[1]{$\textcolor{white}{d}#1\textcolor{white}{p}$}
\newcommand{\hideandshow}[1]{%
 \ifthenelse{\isundefined{\showme}}{}{#1}}
\newlength{\outsidemarginparwidth}
\newlength{\insidemarginparwidth}
\tikzstyle{literal}=[draw, rounded corners, text centered, text width=5mm, minimum height=5mm]
\tikzstyle{causation}=[->, semithick]
\tikzstyle{time}=[shape=circle, draw]
\tikzstyle{action}=[->, semithick]
\tikzstyle{inference}=[-latex, dashed]
\tikzstyle{reflexive}=[to path={.. controls +(-1,1) and +(1,1) .. (\tikztotarget) \tikztonodes}]
\tikzstyle{object}=[draw, rounded corners, fill=gray!15, text width=33mm, text centered, minimum height=1cm]
\tikzstyle{invisible}=[]
\tikzstyle{defrelation}=[->, semithick]
\tikzstyle{translation}=[-latex, dashed]
\tikzstyle{reflexive}=[to path={.. controls +(-1,1) and +(1,1) .. (\tikztotarget) \tikztonodes}]
\newcommand{\outerspace}{\vfill}
\newcommand{\innerspace}{\vspace*{8mm}}
\newenvironment{headedtextblock}[1]{%
  \begin{center}%
    \begin{minipage}[h]{0.75\linewidth}%
      \parindent 1em%
      \begin{center}%
        \larger[2]\bfseries#1
      \end{center}}{%
    \end{minipage}\end{center}}
\newenvironment{abstract}{\begin{headedtextblock}{Abstract}}{\end{headedtextblock}}
\newenvironment{acknowledgements}{\begin{headedtextblock}{Acknowledgements}}{\end{headedtextblock}}
\newtheorem{theorem}{Theorem}[chapter]
\newtheorem{lemma}[theorem]{Lemma}
\newtheorem{proposition}[theorem]{Proposition}
\newtheorem{corollary}[theorem]{Corollary}
\newtheorem{fact}[theorem]{Fact}
\newtheorem{conjecture}[theorem]{Conjecture}
\newtheorem{observation}{Observation}
\theoremstyle{definition}
\newtheorem{definition}[theorem]{Definition}
\newtheorem{example}[theorem]{Example}
\newcommand{\Cn}{\ensuremath{\mathit{Cn}}}
\newcommand{\Cnsem}{\ensuremath{\mathit{Cn}^\sem}}
\newcommand{\Cnsemp}{\ensuremath{\mathit{Cn}^{\sem'}}}
\newcommand{\CnsemC}{\ensuremath{\mathit{Cn}^{\sem_C}}}
\newcommand{\T}{\mathcal{T}}
\renewcommand{\K}{\mathcal{K}}
\renewcommand{\C}{\mathcal{C}}
\newcommand{\To}{\implies}
\newcommand{\oT}{\impliedby}
\newcommand{\ToT}{\iff}
\newcommand{\eqdef}{\mathrel{\mathop:}=}
\newcommand{\pair}{[B,S]}
\newcommand{\m}[1]{\ensuremath{\mathcal{#1}}}
\newcommand{\interm}[2]{$#1$-$#2$-intermediate}
\newcommand{\nop}[1]{}
\newcommand{\card}[1]{\ensuremath{\left|#1\right|}}
\newcommand{\N}{\mathbb{N}}
\renewcommand{\r}{\mathfrak{r}}
\renewcommand{\Mod}{\operatorname{Mod}}
\newcommand{\AF}{\mathit{F}}
\renewcommand{\AH}{\mathit{H}}
\renewcommand{\E}{\mathit{E}}
\renewcommand{\k}{\mathit{k}}
\newcommand{\iden}{\mathit{id}}
\newcommand{\AG}{\mathit{G}}
\newcommand{\F}{\mathit{F}}
\newcommand{\G}{\mathit{G}}
\renewcommand{\H}{\mathit{H}}
\newcommand{\I}{\mathit{I}}
\newcommand{\A}{\mathcal{A}}
\newcommand{\stable}{{\mathit{stb}}}
\newcommand{\stb}{\stable}
\newcommand{\adm}{\mathit{ad}}
\newcommand{\prf}{\mathit{pr}}
\newcommand{\grd}{\mathit{gr}}
\newcommand{\com}{\mathit{co}}
\newcommand{\stg}{\mathit{stg}}
\newcommand{\stgzwei}{\mathit{stg2}}
\newcommand{\nav}{\mathit{na}}
\newcommand{\id}{\mathit{il}}
\newcommand{\semi}{\mathit{ss}}
\newcommand{\eag}{\mathit{eg}}
\newcommand{\cf}{\mathit{cf}}
\newcommand{\cfzwei}{\mathit{cf2}}
\newcommand{\sad}{\textit{sad}}
\newcommand{\sta}{\textit{sta}}
\newcommand{\sm}{\setminus}
\renewcommand{\L}{\mathcal{L}}
\newcommand{\M}{\mathit{M}}
\newcommand{\Ext}{\m{E}}
\newcommand{\Lab}{\m{L}}
\newcommand{\XC}{\mathbb{X}}
\newcommand{\LI}{\L^\text{\tiny I}}
\newcommand{\LO}{\L^\text{\tiny O}}
\newcommand{\LU}{\L^\text{\tiny U}}
\newcommand{\MI}{\M^\text{\tiny I}}
\newcommand{\ddef}{\textit{def}}
\newcommand{\CAF}{\textit{CAF}}
\newcommand{\XAF}{\textit{XAF}}
\newcommand{\Def}{\mathcal{D}}
\newcommand{\CDef}{\mathcal{C\!D}}
\newcommand{\ssucc}{\operatorname{s}}
\newcommand{\set}[1]{\left\{#1\right\}}
\newcommand{\powerset}[1]{2^{#1}}
\newcommand{\Ss}{\mathbb{S}}
\newcommand{\Tt}{\mathbb{T}}
\newcommand{\Args}{\textit{Args}}
\newcommand{\Pairs}{\textit{Pairs}}
\newcommand{\dcl}{\textit{dcl}}
\newcommand{\Card}[1]{\left\|#1\right\|}
\newcommand{\lp}{\mathit{Loop}}
\newcommand{\lpaf}{\lp(\F\!,\!\G)}
\newcommand{\att}{\mathit{Att}}
\newcommand{\attsigma}{\att^{\sigma}}
\newcommand{\attadaf}{\att^{\adm}(\F\!,\!\G)}
\newcommand{\attsigmaaf}{\att^{\sigma}(\F\!,\!\G)}
\newcommand{\tvt}{\mathbf{t}}
\newcommand{\tvf}{\mathbf{f}}
\newcommand{\two}{\set{\tvt,\tvf}}
\newcommand{\cleq}{\sqsubseteq}
\newcommand{\cgeq}{\sqsupseteq}
\newcommand{\ccap}{\sqcap}
\newcommand{\bigccup}{\bigsqcup}
\newcommand{\bigccap}{\bigsqcap}
\mathchardef\mathhyphen="2D 
\newcommand{\citet}[1]{\citeA{#1}}
\newcommand{\citep}[1]{\cite{#1}}
\newcommand{\secsme}{covered\xspace} 
\newcommand{\define}[1]{\emph{#1}}
\newcommand{\guard}{\ \middle\vert\ }
\renewcommand{\eqdef}{=}
\renewcommand{\mod}{\ensuremath{\mathit{mod}}}
\newcommand{\phiff}{\mathrel{\phantom{\iff}}}
\newcommand{\pheq}{\mathrel{\phantom{=}}}
\newcommand{\dcup}{\mathrel{\dot{\cup}}}
\newcommand{\dbigcup}{\mathop{\dot{\bigcup}}}
\newcommand{\dsub}{\mathrel{\dot{\subseteq}}}
\newcommand{\LP}{\ensuremath{\mathit{LP}}}
\newcommand{\LLPfin}{\ensuremath{{\left(2^{\L_\LP}\right)_{\!\fin}}}}
\newcommand{\AFl}{\ensuremath{\mathit{AF}}}
\newcommand{\Can}{\ensuremath{\mathit{C}}}
\newcommand{\DL}{\ensuremath{\mathit{DL}}}
\newcommand{\sem}{\sigma}
\newcommand{\semc}{\sigma'}
\th\renewcommand{\th}{\tau}\else\newcommand{\th}{\tau}\fi
\renewcommand{\hat}{\widehat}
\newcommand{\fin}{\textrm{\smaller fin}}
\newcommand{\semf}{\sigma_{\!\fin}}
\newcommand{\semfc}{\sigma'_{\!\fin}}
\newcommand{\semfcc}{\sigma''_{\!\fin}}
\newcommand{\Lfin}{\left(2^\L\right)_{\!\fin}}
\newcommand{\LAFfin}{\left(2^{\L_\AFl}\right)_{\!\fin}}
\newcommand{\dom}{\mathop{dom}}
\newcommand{\naf}{\mathop{\sim\!}}
\newcommand{\SM}{\ensuremath{\mathit{stb}}}
\newcommand{\SU}{\ensuremath{\mathit{sup}}}
\newcommand{\default}[3]{\ensuremath{{#1}:{#2}/{#3}}}
\newcommand{\dkeyword}[1]{\text{\underbar{\normalfont\bfseries\ttfamily#1}}}
\newcommand{\deffect}[3]{\dkeyword{action}\ #1\ \dkeyword{causes}\ #3\ifthenelse{\equal{#2}{\top}\or\equal{#2}{}}{}{\ \dkeyword{if}\ #2}}
\newcommand{\dsd}[2]{\dkeyword{normally}\ #2\ifthenelse{\equal{#1}{\top}\or\equal{#1}{}}{}{\ \dkeyword{if}\ #1}}
\newcounter{cexample}
\begin{document}


\pagenumbering{roman}
\pagestyle{plain}

 \begin{titlepage}
 \begin{center}
   \vfill
   \outerspace
   {\LARGE On the Existence of Characterization Logics and Fundamental Properties of Argumentation Semantics
	}\\
  
   \innerspace
 {\Large Ringo Baumann}\\
   \innerspace
   
   \outerspace
   \vfill
 \end{center}
 \end{titlepage}

\begin{abstract} \label{abstract}

Given the large variety of existing logical formalisms it is of utmost importance
to select the most adequate one for a specific purpose, e.g. for representing the
knowledge relevant for a particular application or for using the formalism as a
modeling tool for problem solving. Awareness of the nature of a logical formalism, 
in other words, of its fundamental intrinsic properties, is indispensable and
provides the basis of an informed choice. 

One such intrinsic property of logic-based knowledge representation languages is the context-dependency of pieces of knowledge. In classical propositional logic, for example, there is no such context-dependence: whenever two sets of formulas are
equivalent in the sense of having the same models (\textit{ordinary equivalence}), then they are mutually replaceable in arbitrary contexts (\textit{strong equivalence}). However, a large number of commonly used
formalisms are not like classical logic which leads to a series of interesting developments. It turned out that sometimes, to characterize strong equivalence in formalism~\m{L}, we
can use ordinary equivalence in formalism~\m{L'}: for example, strong equivalence in normal
logic programs under stable models can be characterized by the standard semantics of the
logic of here-and-there. Such results about the existence of \textit{characterizing logics} has rightly been recognized
as important for the study of concrete knowledge representation formalisms and raise a fundamental question: \textit{Does every formalism have one?} In this thesis, we answer this question
with a qualified ``yes''. More precisely, we show that
the important case of considering only finite knowledge bases guarantees the existence of a canonical characterizing formalism. Furthermore, we argue that those characterizing formalisms can
be seen as \textit{classical}, monotonic logics which are uniquely
determined (up to isomorphism) regarding their model theory.

The other main part of this thesis is devoted to argumentation semantics which
play the flagship role in Dung's abstract argumentation theory. Almost
all of them are motivated by an easily understandable intuition of what
should be acceptable in the light of conflicts. However, although these
intuitions equip us with short and comprehensible formal definitions it
turned out that their intrinsic properties such as \textit{existence and uniqueness}, \textit{expressibility}, \textit{replaceability} and \textit{verifiability} are not that easily accessible. We review the mentioned properties for almost all semantics available in the literature. In doing so we include two main axes: namely first,
the distinction between extension-based and labelling-based versions and
secondly, the distinction of different kind of argumentation frameworks
such as finite or unrestricted ones.

\end{abstract}

\begin{acknowledgements} \label{acknowledgements}

Many factors have contributed in one way or another to the successful
completion of this thesis. Among them are: scientific challenges, an inspiring working atmosphere, thinking, asking questions, fruitful discussions with colleagues, research stays, listening to and giving talks at conferences, writing papers, getting them accepted or rejected, comments of anonymous reviewers, giving lessons, answering student questions, supervision of degree theses, watching and doing sports, listening to and making music and of course, spending time with friends and family. 

I was lucky to meet and get in contact with many inspiring colleagues. Some of them deserve a separate mentioning as they influenced my own research in a variety of ways: Pietro Baroni, Gerd Brewka, Wolfgang Dvo\v{r}\'ak, Dov Gabbay, Heinrich Herre, Thomas Linsbichler, Frank Loebe, Christof Spanring, Hannes Straß and Stefan Woltran.

\end{acknowledgements}


\cleardoublepage
\hypertarget{toc}{\phantomsection}
\label{toc}
\tableofcontents

\cleardoublepage
\hypertarget{lof}{\phantomsection}
\label{lof}
\listoffigures



\cleardoublepage
\pagenumbering{arabic}
\pagestyle{fancy}


\chapter{Introduction}
\label{sec:intro}

Given the large variety of existing logical formalisms it is of utmost importance to select the most adequate one for a specific purpose, e.g. for representing the knowledge relevant for a particular application or for using the formalism as a modeling tool for problem solving.
Awareness of the nature of a logical formalism or, in other words, of its fundamental intrinsic properties, is indispensable and provides the basis of an informed choice. Apart from a deeper understanding of the considered formalism, the study of such intrinsic properties can help to identify interesting fragments or to develop useful extensions of a formalism. Moreover, the obtained insights can be used to refine existing algorithms or even give rise to new ones.

Presumably, the best-known intrinsic property of logics is \textit{monotonicity}. \textit{Monotonic logics} like propositional logic or first order logic are perfectly suitable for the formalization of universal truths since in these logics, whenever a formula $\phi$ is a logical consequence of a theory $T$, it remains true forever and without any exception even
if we add further information to $T$. 
In contrast, formalisms which do not satisfy monotonicity, commonly referred to as \textit{nonmonotonic logics}, allow for defeasible reasoning, i.e.\ it is possible to withdraw former
conclusions (cf. \cite{Bre92,GabHR94} for excellent overviews). Both kinds of logics have their traditional application domains and apart from this fundamental choice there are many other criteria of comparison influencing the decision which logic or which specific semantics of a logic to use in a certain context.

One of the first intrinsic properties which comes to mind is \textit{computational complexity}, i.e.\ how expensive it is to solve typical decision problems in the candidate formalism. A related issue is \textit{modularity} which is, among other things, concerned with the question of whether it is possible to divide a given theory in subtheories, s.t.\ the formal semantics of the entire theory can be obtained by constructing the semantics of the subtheories. Both topics were studied in-depth for mainstream nonmonotonic formalisms such as default logic \cite{Got92,Tur96}, logic programming under certain semantics \cite{DanEGV97,LifT94} as well as abstract argumentation frameworks (AFs) under various argumentation semantics \cite{DvoD18,BarGL18}.

In this habilitation treatise we elaborate a theory of four further intrinsic properties of abstract argumentation semantics. 
In brief, Dung-style AFs consist of arguments and attacks which are treated as primitives, i.e., the internal
structure of arguments is not considered. 
The major focus
is on resolving conflicts. 
To this end a variety of semantics
have been defined, each of them specifying acceptable sets
of arguments, so-called extensions, in a particular way.
Another main approach to argumentation semantics are so-called labelling-based semantics which contain more information then their extension-based counterparts. A labelling explicitly classifies each argument either as accepted, rejected or undecided.

\pagebreak
\begin{enumerate}

\item \textit{existence and uniqueness}\quad Is it possible, and if so how, to \textit{guarantee} the existence of at least one or exactly one extension/labelling by considering the structure of a given AF $\AF$ only? \hfill{(Chapter~\ref{cha:ex})}

\item \textit{expressibility}\quad Is it possible, and if so how, to \textit{realize} a given candidate set of extensions/labellings within a single AF $\AF$? \hfill{(Chapter~\ref{cha:realizability})}

\item \textit{replaceability}\quad Is it possible, and if so how, to \textit{simplify} parts of a given AF $\AF$ s.t.\ the modified version $\AF'$ and $\AF$ cannot be distinguished semantically by further information
that might be added later to both simultaneously? \hfill{(Chapter~\ref{cha:replace})}

\item \textit{verifiability}\quad Is it possible, and if so how, to \textit{compute} the semantics of an AF $\AF$ unambiguously, given that we are faced with strictly less information than the entire framework $\F$? \hfill{(Chapter~\ref{cha:ver})}

\end{enumerate}

The question whether a certain formalism always provides one with a formal meaning or even with a uniquely determined semantical answer is a crucial factor for its suitability for the application in mind. 
For instance, in problem solving a plurality of solutions may possibly be desired, whereas in decision making one might be interested in guaranteeing a single answer. 
It is well-known that a given theory in propositional logic neither has to possess a model nor, in case of existence, has there to be exactly one. 
The same applies to logic programs under stable model semantics. 
In contrast, a propositional theory of only positive formulae is always satisfiable and definite logic programs constitute a subclass of logic programs where even uniqueness of a model is guaranteed. In Chapter~\ref{cha:ex} we will see that Dung's abstract argumentation semantics behave in a similar way, i.e.\ the existence and uniqueness of extensions/labellings depend on structural restrictions of argumentation frameworks.

Expressibility is concerned with the expressive power of logical formalisms. The question here is which kinds of models are realizable, that is, can be the set of models of a single knowledge base of the formalism. This is a decisive property from an application angle since potential necessary or sufficient properties of model sets may rule out a logic or make it perfectly appropriate for representing certain solutions. For instance, it is well-known that in case of propositional logic every finite set of two-valued interpretations is realizable. This means, given such a finite set~\m{I}, we always find a set of formulae $T$, s.t.\ $\Mod(T) = \m{I}$. In case of normal logic programs it is obvious that not all model sets can be expressed, since any set of stable models forms a $\subseteq$-antichain. Remarkably, being such an antichain is not only necessary but even sufficient for realizability w.r.t.\ stable model semantics \cite{EitFPTW13,Str15}. In case of abstract argumentation we are equipped with a high number of semantics, for which we will see in Chapter~\ref{cha:realizability} that characterizing properties are not that easy to find. Moreover, as expected, representational limits highly depend on the chosen semantics.

In case of propositional logic -- in contrast to all nonmonotonic logics available in the literature -- we have that standard equivalence, i.e.\ sharing the same models, even guarantees intersubstitutability in any logical context without loss of information. As an aside, it is not the monotonicity of a certain logic but rather the so-called \textit{intersection property} which guarantees this behavior \cite{BauS16}. Substitutability is of great importance for dynamically evolving scenarios since it allows one to simplify parts of a theory without looking at the rest. For this reason, much effort has been devoted to characterizing \textit{strong equivalence} for nonmonotonic formalisms, such as logic programs~\cite{DBLP:journals/tocl/LifschitzPV01}, causal theories \cite{DBLP:conf/lpnmr/Turner04}, default logic \cite{Turner01} and nonmonotonic logics in general~\cite{DBLP:journals/amai/Truszczynski06,BauS16}. In Chapter~\ref{cha:replace} we will see that characterization theorems in case of abstract argumentation are quite different
from those for the aforementioned formalisms in that being strongly equivalent can be decided by looking at the syntax only.

Verifiability is a topic that is very specific for abstract argumentation. Over the last two decades a bunch of argumentation semantics were introduced \cite{BaroniCG18}. The
motivations of these semantics range from the desired treatment of specific examples
to fulfilling a number of abstract principles. Mathematically speaking, a semantics takes as input an AF and returns acceptable sets of arguments.
Verifiability deals with the question whether we really need the entire AF to compute a certain argumentation semantics $\sigma$? In other words, is it possible to unambiguously determine acceptable sets  w.r.t.\ $\sigma$, given only partial information of the underlying framework $\AF$. In Chapter~\ref{cha:ver} we will see that most of the existing semantics can be computed with indeed less information than the entire framework. We will
categorize the amount of information required by taking the conflict-free sets as a basis and distinguishing between
different amounts of knowledge about the neighborhood of these sets, so-called \textit{verification
classes}. These classes allow us to categorize semantics 
with respect to the information needed to verify whether a
certain set of arguments is acceptable. The study of this topic as well as the properties mentioned before pave the way for a more general view on argumentation
semantics, their 
common features, and their inherent differences. 

Let us return to knowledge representation formalisms in general. Two knowledge bases are strongly equivalent if and only if they are mutually interchangeable in arbitrary contexts. This notion is of high interest for any logical formalism since it allows one
to locally replace, and thus potentially give rise to simplification, parts of a given theory
without changing the semantics of the latter.
As already said, since it is possible to find
ordinarily but not strongly equivalent objects for any nonmonotonic formalism available in the literature a lot of research has been devoted to characterizing strong equivalence.
For example, it turned out that strong equivalence for logic programs under stable models can be characterized by so-called HT-models \citep{DBLP:journals/tocl/LifschitzPV01}. 
More precisely, two logic programs are strongly equivalent
if and only if they are standard equivalent in the logic of here-and-there. This means the logic of here-and-there can be seen as a characterizing formalism for logic programs under stable model semantics.
In Chapter~\ref{cha:intersect} we will study whether the existence of such characterization logics can be guaranteed for every logic.
One main result is that every knowledge representation formalism that allows for a notion of strong equivalence on its finite knowledge bases also possesses a canonical characterizing formalism. In particular, we argue that those characterizing formalisms can be seen as classical, monotonic logics which are uniquely determined (up to isomorphism) regarding their model theory.

\section{Publications}

Most of the results presented in this habilitation thesis have already been published as conference papers or
journal articles, as a chapter of a handbook as well as in a book in honour of Gerhard Brewka. Moreover, an extended version of \cite{BauS16} is currently under review in the Journal of Artificial Intelligence Research (JAIR). In the
following we list the involved publications together with the chapters or sections where they
are mainly used. 

\begin{itemize}
\item conference papers
	
	\begin{enumerate}
	\item  \cite{Bau14} \hfill{(Sections \ref{sec:furthequi} and \ref{sec:exceptdel})}\\
		\textit{Context-free and context-sensitive kernels: Update and deletion equivalence in abstract argumentation}\\
		European Conference on Artificial Intelligence (ECAI)
		\item \cite{compact} \hfill{(Section \ref{sec:sigcompact})}\\
		\textit{Compact argumentation frameworks}\\
		European Conference on Artificial Intelligence (ECAI)
		\item \cite{BauB15} \hfill{(Section \ref{sec:argu})}\\
		\textit{AGM meets abstract argumentation: Expansion and revision for Dung frameworks}\\
		International Joint Conference on Artificial Intelligence (IJCAI)
		\item \cite{Bau16} \hfill{(Section~\ref{sec:equivlabel})}\\
\textit{Characterizing equivalence notions for labelling-based semantics}\\
 International Conference on Principles of Knowledge Representation and Reasoning (KR)
\item \cite{BauS16} \hfill{(Chapters \ref{cha:intersect})}\\
		 \textit{An abstract logical approach to characterizing strong equivalence in logic-based knowledge representation formalisms}\\
		International Conference on Principles of Knowledge Representation and Reasoning (KR)
\item \cite{BauLW16a} \hfill{(Chapter~\ref{cha:ver})}\\
\textit{Verifiability of argumentation semantics}\\
 Computational Models of Argument (COMMA)
		
		\item \cite{BauS17} \hfill{(Sections \ref{sec:exunrestrict}, \ref{sec:sigunrestrict} and \ref{chap:equiunrestr})}\\
		 \textit{A study of unrestricted abstract argumentation frameworks}\\
		International Joint Conference on Artificial Intelligence (IJCAI)

\end{enumerate}
\item journal articles
	
	\begin{enumerate} \setcounter{enumi}{7}
		\item \cite{compactj} \hfill{(Sections \ref{sec:sigcompact} and \ref{sec:analytic})}\\
		\textit{On rejected arguments and implicit conflicts: The hidden power of argumentation semantics}\\ 
Journal of Artificial Intelligence (AIJ)
\end{enumerate}
	\item book contribution
	
	\begin{enumerate} \setcounter{enumi}{8}
		\item \cite{Bau18} \mbox{} \hfill{(Chapters \ref{cha:ex}, \ref{cha:realizability}, \ref{cha:replace} and \ref{cha:ver})}\\
		\textit{On the nature of argumentation semantics: Existence and uniqueness, expressibility, and replaceability}\\
	  Handbook of Formal Argumentation
		\item \cite{BauS15} \hfill{(Chapters \ref{cha:ex})}\\
		\textit{Infinite argumentation frameworks - On the existence and uniqueness of extensions}\\
		Advances in Knowledge Representation, Logic Programming, and Abstract Argumentation -- Essays Dedicated to Gerhard Brewka on the Occassion of His 60th Birthday
	\end{enumerate}

\end{itemize}

\chapter{On the Existence of Characterization Logics for Knowledge Representation Formalisms}

\label{cha:intersect}

Reusability of human-made artifacts is of paramount importance in computer science.
To assess the reusability of (parts of) knowledge bases in logic-based knowledge representation languages, we have to know whether pieces of knowledge make certain context-dependent assumptions.
In classical propositional logic, for example, there is no such context-dependence:
whenever two (sets of) formulas are equivalent in the sense of having the same models, then they are mutually replaceable in arbitrary contexts.

In the field of knowledge representation and reasoning, however, not all commonly used formalisms are like classical logic in this regard.
For example, the answer-set-programming paradigm uses the formalism of normal logic programs to encode combinatorial problems such that the answer sets (stable models) of the logic programs correspond to and encode solutions to the encoded problem~\cite{gebser12asp}.
Alas, for normal logic programs, having the same stable models does \emph{not} amount to mutual replaceability.
For this, a stronger property is needed: it is called \define{strong equivalence}, and holds for two logic programs if and only if they keep the same stable models even if they are both extended with an arbitrary third logic program.
Formally -- as a logic program is a set of rules --, extending a logic program can be modeled by ordinary set union.
Consequently, the notion of strong equivalence can be defined in a similar way for other knowledge representation formalisms for which set union is an adequate formalization of appending or otherwise combining knowledge bases.

In a series of interesting developments, researchers have succeeded in precisely characterizing strong equivalence for several formalisms, among them normal logic programs under the stable model semantics~\citep{DBLP:journals/tocl/LifschitzPV01,Turner01}.
What is more, it turned out that sometimes, to characterize strong equivalence in formalism $\F$, we can use ordinary equivalence in formalism $\F'$:
for example, strong equivalence in normal logic programs under stable models can be characterized by the standard semantics of the logic of here-and-there~\citep{DBLP:journals/tocl/LifschitzPV01}.
Such results have rightly been recognized as important for the study of these concrete knowledge representation formalisms, as having a characterization of strong equivalence gives us a deeper level of understanding of the meaning of pieces of knowledge in that formalism.

However, such results about the existence of characterizing formalisms also raise a fundamental question:
\emph{Does every formalism have one?}
In the following we will see that the answer to this question is a qualified ``yes''.
More precisely, while not every formalism has one, we show that the important case of considering only \emph{finite} knowledge bases (but still possibly infinite languages) guarantees the existence of a characterizing formalism, and that in a very general setting.
Existing results on characterizing formalisms make use of specifics of each formalism~\cite{DBLP:journals/tocl/LifschitzPV01,Turner01,DBLP:conf/aaai/Truszczynski07,DBLP:conf/aaai/TruszczynskiW08,DBLP:conf/kr/CabalarD14}.
The presented approach completely abstract away from formalism specifics and addresses the core of the problem, the nature of strong equivalence itself.
In fact, we will not only show the existence of just any characterizing formalism, but of characterizing formalisms whose model theory is \emph{uniquely determined} (up to isomorphism), and structurally resembles that of classical logics.
At this point, we appeal to the reader's intuition on what makes logics classical;
we will later define what we mean by ``classical logic'' in a precise mathematical way.
From our point of view, the main is a surprising and important insight, as it tells us that for the overwhelming majority of knowledge representation formalisms, strong equivalence can be approached using established techniques from classical logic.

While our work is in its essence derived from first principles, building mostly upon classical logic and lattice theory, there have been important inspirations.
Foremost, \cite{DBLP:journals/amai/Truszczynski06} presented a general, algebraic account of strong equivalence within approximation fixpoint theory~\cite{denecker00approximations}.
His setting is indeed quite general, but most of this generality derives from algebraic commonalities in the semantics of logic programs and default logic. 
It is not immediately clear, for example, if and how it captures Dung's theory of abstract argumentation \cite{Dung95}, which will be discussed in depth in the subsequent chapters.
More precisely, while argumentation frameworks (AFs) with all their semantics can be captured by approximation fixpoint theory~\cite{strass13approximating}, Truszczy{\'n}ski's notion of expanding an operator does not coincide with the corresponding notion of expanding AFs and his results are not directly applicable.
In other words, while the operator associated to the union of two logic programs corresponds to the union of their respective associated operators, the same does not hold for the (componentwise) union of two AFs and their operators.
Thus although AFs are essentially a restricted subclass of normal logic programs with respect to the ordinary equivalence of having the same models, this does not carry over to strong equivalence because the respective notions of knowledge base union are different in AFs and normal logic programs.
For example, the AF where $b$ attacks $a$ corresponds to the logic program \mbox{$P_1=\set{a\gets\naf b}$};
likewise, \mbox{$P_2=\set{a\gets\naf c}$} corresponds to the AF where $c$ attacks $a$~\cite{osorio05inferring,wu09completeextensions,strass13approximating}.
However, the AF where both $b$ and $c$ attack $a$ (the union of the two AFs above) corresponds to the logic program \mbox{$P_3=\set{a\gets \naf b,\,\naf c}$}, where obviously the three programs are not subset-related: \mbox{$P_1\not\subseteq P_3$} and \mbox{$P_2\not\subseteq P_3$}.

In contrast, we show how the approach we develop in this paper can be directly applied to argumentation frameworks.
Thus as a consequence of our main theorem, we get the first semantical characterization of strong equivalence in AFs in contrast to the currently known syntactical characterizations \cite{strong}.

The chapter proceeds as follows.
In the next section, we introduce the general setting in which we derive our results and present our conception of the term ``classical logic''.
Afterwards, we define characterization logics and show two classes of formalisms that always possess them.
We next apply our results, chiefly to abstract argumentation frameworks but briefly also to normal logic programs. Finally, we conclude the chpater with a discussion of related and future work.

\section{An Abstract View on Model Theory}
\label{sec:model-theory}

What is a classical logic?

We will spend this section introducing an abstract notion of logics with model-theoretic semantics and explaining when we call some of them classical.
Formally, we consider logical languages $\L$, that is, non-empty sets of language elements.
We make no assumption on the internal structure of pieces of knowledge \mbox{$f\in\L$}.
These pieces of knowledge could be formulas of classical propositional logic, normal logic program rules, attacks between arguments, or Reiter-style defaults.
A model-theoretic semantics for a language $\L$ uses a set $\I$ of interpretations and a \emph{model} function \mbox{$\sem : 2^\L \to 2^\I$} with the intuition that $\sem$ assigns each language subset \mbox{$T\subseteq\L$}, a \define{theory}, the set $\sem(T)$ of its models.
We make no assumptions on the internal structure of interpretations -- there need not be an underlying vocabulary of atoms or the like (although in the concrete cases we consider there often will be) that are the same among syntax and semantics.
This is the main abstraction in our setting.
It goes beyond what is known from classical logic in that meaning is not assigned to language elements (formulas), but only to \emph{theories}, that is, sets of language elements.
This is a necessary requirement for being able to model a number of established knowledge representation formalisms:
for example, in normal logic programs, meaning is not assigned to single rules, but only to sets thereof.
Likewise, in default logic, meaning is not assigned to single defaults, but only to sets thereof.
We illustrate our definitions so far by showing more precisely how existing formalisms can be embedded into our setting.

\begin{example}
  \label{exm:known-formalisms}
  Consider a set $\A$ of propositional atoms.
  \begin{description}
  \item[\normalfont Classical propositional logic:]
    The underlying language $\L_\PL$ is the set of all classical propositional formulas over $\A$ and can be defined as usual by induction.
    The set of interpretations is then given by the set \mbox{$\I_\PL=\set{ v:\A\to\two }$} of all two-valued interpretations of $\A$.
    Lastly, $\sem_\mod(T)$ is the set of all models of the theory \mbox{$T\subseteq\L_\PL$}, that is, the set of all interpretations satisfying all formulas in $T$.
  \item[\normalfont Normal logic programs:]
    The underlying language $\L_\LP$ is the set of all normal logic program rules \linebreak \mbox{$a_0\gets a_1,\ldots,a_m,\naf a_{m+1},\ldots,\naf a_n$} with \mbox{$0\leq m\leq n$} and \mbox{$a_0,a_1,\ldots,a_n\in\A$}.
    The set $\I_\LP$ of interpretations is then the set \mbox{$\I_\LP=2^\A$} of all possible stable model candidates.
      Accordingly, $\sem_\SM(T)$ returns the set of stable models of the theory (normal logic program) \mbox{$T\subseteq\L_\LP$}~\cite{gelfond-lifschitz88thestablemodel};
      we could also define $\sem_\SU(T)$ returning the set of supported models of $T$~\citep{clark78negation}, or $\sem_\mod(T)$ the set of classical models of $T$, interpreting $\gets$ as material implication and $\naf$ as classical negation~\citep{vanemden76thesemantics}.%
    \item[\normalfont Propositional default logic~\citep{reiter80alogic}:]
    The underlying language $\L_\DL$ is the set of all defaults over formulas over $\A$, that is, \mbox{$\L_\DL=\set{ (\varphi,\Psi,\xi) \guard \varphi,\xi\in\L_\PL, \Psi\subseteq\L_\PL }$} where the triple \linebreak $(\varphi,\set{\psi_1,\ldots,\psi_n},\xi)$ represents the default \mbox{$\default{\varphi}{\psi_1,\ldots,\psi_n}{\xi}$}.
    The possible interpretations are given by \mbox{$\I_\DL=2^\L$}, as each default theory is assigned a set of logical theories called extensions:\linebreak
    \mbox{$\sem_\DL(T)=\set{ E\subseteq\L \guard E \text{ is an extension of } T }$}.
  \item[\normalfont Abstract argumentation frameworks~\citep{Dung95}:]
    The language $\L_\AFl$ contains the fundamental building blocks of AFs, i.e.\ arguments and attacks:
    $\L_\AFl=\set{ (\set{a},\emptyset), (\set{a,b},\set{(a,b)}) \guard a,b\in\A}.$
    Extension-based semantics can be incorporated by setting \mbox{$\I_\AFl = 2^\A$} and, depending on the argumentation semantics $\rho$ we use, we set \mbox{$\sem_\rho(T)=\rho(F_T)$}, where \mbox{$F_T=(\bigcup_{(A,R)\in T}A, \bigcup_{(A,R)\in T}R)$} represents the AF associated to \mbox{$T\subseteq\L_\AFl$} (cf.\ Section~\ref{sec:argu} for more detailed information and discussion).
  \end{description}
\end{example}

Before we delve into the main aim, namely characterizing strong equivalence, we briefly analyze some foundational properties of our way of abstractly modeling knowledge representation languages.
We begin with the relationship between the model function and the consequence function of a logical language.

\subsection{Models and Consequences}
\label{sec:models-and-consequences}

A consequence function for a language $\L$ is a function \mbox{$\Cn:2^\L\to 2^\L$} that assigns a given set $T$ of language elements another set $\Cn(T)$ of language elements.
Intuitively, $\Cn(T)$ is understood to be the set of logical consequences of the theory $T$.
Given a language, we can define the consequence function in terms of the semantics.
In words, the set of consequences of a given theory $T$ is the union of all theories $S$ such that any model of $T$ is a model of $S$.
\begin{definition}
  \label{def:consequence}
  Let $\L$ be a language and \mbox{$\sem:2^\L\to 2^\I$} be a model function.
  The \define{canonical consequence function of $\sem$} is defined as follows\/:
  \begin{gather*} 
    \Cnsem:2^\L\to 2^\L,\qquad T\mapsto\mathop{\bigcup_{S\subseteq\L,}}_{\sem(T)\subseteq\sem(S)}S
  \end{gather*}
\end{definition}

In classical definitions of logical consequence, one typically says that a single formula is a consequence of a theory if all models of the theory are models of the formula.
In our case, the focus is primarily on theories both for presumptions and consequences, so we stick to the above definition.
For classical logic $\L_\PL$, this definition coincides with the standard notion of logical consequence.

It will be of great interest that certain algebraic properties of the semantics induce certain useful properties of the consequence relation.
We now introduce the most important properties.

\begin{definition}
  \label{def:operator-properties}
  Let $\L$ be a language.
  \begin{itemize}
  \item A model function \mbox{$\sem:2^\L\to 2^\I$} is \define{antimonotone} iff
    \begin{quote}
      for all \mbox{$T_1, T_2\in 2^\L$}:
      \mbox{$T_1\subseteq T_2 \implies \sem(T_2)\subseteq\sem(T_1)$}.
    \end{quote}
  \item A consequence operator \mbox{$\Cn:2^\L\to 2^\L$} is \define{monotone} iff
    \begin{quote}
      for all \mbox{$T_1, T_2\in 2^\L$}:
      \mbox{$T_1\subseteq T_2 \implies \Cn(T_1) \subseteq \Cn(T_2)$}.
    \end{quote}
  \item A consequence operator \mbox{$\Cn:2^\L\to 2^\L$} is \define{increasing} iff
    \begin{quote}
      for all \mbox{$T\in 2^\L$}, we find \mbox{$T\subseteq \Cn(T)$}.
    \end{quote}
    \item A consequence operator \mbox{$\Cn:2^\L\to 2^\L$} is \define{idempotent} iff
      \begin{quote}
        for all \mbox{$T\in 2^\L$}, we find \mbox{$\Cn(\Cn(T))\subseteq\Cn(T)$}.
      \end{quote}
    \item A consequence operator \mbox{$\Cn:2^\L\to 2^\L$} is a \define{closure operator} iff
      \begin{quote}
        $\Cn$ is monotone, increasing, and idempotent.
      \end{quote}
  \end{itemize}
\end{definition}

In what follows, we define a \emph{logic} as a tuple $(\L,\I,\sem)$ consisting of a language $\L$, an interpretation set $\I$, and a model function \mbox{$\sem : 2^\L \to 2^\I$}.
We will sometimes associate the canonical consequence function of $\sem$ to the whole logic for convenience.

To start out in analyzing how semantics and consequence relate in our setting, we show some straightforward properties of the canonical consequence function defined above. More precisely, any induced consequence operator is increasing and moreover, it is antimonotone if the considered model function is antimonotone.
\begin{proposition}
  \label{thm:antimonotone-monotone-increasing}
  Let \mbox{$(\L,\I,\sem)$} be a logic and $\Cnsem$ its canonical consequence function.
  \begin{enumerate}
  \item\label{itm:increasing} $\Cnsem$ is increasing.
  \item\label{itm:antimonotone-monotone} If $\sem$ is antimonotone, then $\Cnsem$ is monotone.
  \end{enumerate}
	\pagebreak
  \begin{proof} \hfill
    \begin{enumerate}
    \item
      Clearly for any \mbox{$T\subseteq\L$} we find \mbox{$\sem(T)\subseteq\sem(T)$} whence \mbox{$\displaystyle T\subseteq\mathop{\bigcup_{S\subseteq\L,}}_{\sem(T)\subseteq\sem(S)}S=\Cnsem(T)$}.
    \item Let $\sem$ be antimonotone and \mbox{$T_1\subseteq T_2$}.
      Then clearly \mbox{$\sem(T_2)\subseteq\sem(T_1)$} by antimonotonicity.
      Thus for any \mbox{$S\subseteq\L$} with \mbox{$\sem(T_1)\subseteq\sem(S)$} we also find \mbox{$\sem(T_2)\subseteq\sem(S)$}.      
      It follows that $$\mathop{\bigcup_{S\subseteq\L,}}_{\sem(T_1)\subseteq\sem(S)}S\subseteq\mathop{\bigcup_{S\subseteq\L,}}_{\sem(T_2)\subseteq\sem(S)}S,$$ that is, \mbox{$\Cnsem(T_1)\subseteq\Cnsem(T_2)$}.
      Since $T_1$ and $T_2$ were arbitrary, $\Cnsem$ is monotone.
    \end{enumerate}
  \end{proof}
\end{proposition}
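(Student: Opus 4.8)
The plan is to unfold the definition of the canonical consequence function \Cnsem\ and verify both claims directly; neither part requires more than elementary set-theoretic reasoning, so I do not anticipate a genuine obstacle.

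For the first part, I would observe that $\Cnsem(T)$ is by definition the union $\bigcup S$ ranging over all $S \subseteq \L$ with $\sem(T) \subseteq \sem(S)$. Since reflexivity of $\subseteq$ gives $\sem(T) \subseteq \sem(T)$, the set $T$ itself participates in this union, and therefore $T \subseteq \Cnsem(T)$. This establishes that \Cnsem\ is increasing, and it is worth noting that it uses no hypothesis on $\sem$ whatsoever.

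For the second part, I would assume $\sem$ antimonotone and fix arbitrary $T_1 \subseteq T_2$. Antimonotonicity immediately yields $\sem(T_2) \subseteq \sem(T_1)$. The key step is then to compare the two index sets of the respective unions: every $S$ with $\sem(T_1) \subseteq \sem(S)$ also satisfies $\sem(T_2) \subseteq \sem(S)$ by transitivity of $\subseteq$ through $\sem(T_1)$. Hence the family of sets indexing $\Cnsem(T_1)$ is a subfamily of the one indexing $\Cnsem(T_2)$, and a union over a smaller family is contained in the union over the larger family; this gives $\Cnsem(T_1) \subseteq \Cnsem(T_2)$. As $T_1, T_2$ were arbitrary, \Cnsem\ is monotone.

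If anything here counts as the hard part, it is only the conceptual observation behind the second claim: antimonotonicity says that passing to a larger theory can only shrink its model set, which in turn \emph{relaxes} the constraint $\sem(T) \subseteq \sem(S)$ and so admits more consequences — the contravariance of $\sem$ and the covariance of \Cnsem\ are two sides of the same coin. No case distinctions, induction, or appeals to earlier results are needed.
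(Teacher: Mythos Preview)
Your proof is correct and follows essentially the same approach as the paper: for part~1 you use reflexivity of $\subseteq$ to include $T$ in the indexing family, and for part~2 you use antimonotonicity together with transitivity to show the indexing family for $\Cnsem(T_1)$ is contained in that for $\Cnsem(T_2)$. The closing remark on contravariance of $\sem$ versus covariance of $\Cnsem$ is a nice conceptual gloss not present in the paper, but the mathematical argument is identical.
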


The reverse of Item~\ref{itm:antimonotone-monotone} does not hold, that is, monotone consequence functions do not necessitate antimonotone model functions.

\begin{example}
  \label{exm:monotone-increasing--not-antimonotone}
  Consider \mbox{$\L=\set{a}$} and \mbox{$\I=\set{1}$} with
  \mbox{$\sem(\emptyset)=\emptyset$} and \mbox{$\sem(\set{a})=\set{1}$}.
  We get the following canonical consequence function\/:
  \begin{align*}
    \Cnsem(\emptyset) &= \mathop{\bigcup_{S\subseteq\L,}}_{\sem(\emptyset)\subseteq\sem(S)} S = \mathop{\bigcup_{S\subseteq\L,}}_{\emptyset\subseteq\sem(S)}S = \bigcup_{S\subseteq\L}S = \L = \set{a} \\
    \Cnsem(\set{a}) &= \mathop{\bigcup_{S\subseteq\L,}}_{\sem(\set{a})\subseteq\sem(S)} S = \mathop{\bigcup_{S\subseteq\L,}}_{\set{1}\subseteq\sem(S)}S = \set{a}
  \end{align*}
  Thus \mbox{$\Cnsem(\emptyset)=\set{a}=\Cnsem(\set{a})$} and $\Cnsem$ is monotone and increasing albeit $\sem$ is not antimonotone.
\end{example}

Also, not every antimonotone model function induces a closure operator, that is, an operator that is monotone, increasing and idempotent.
\begin{example}
  \label{exm:antimonotone-idempotent}
  Consider the logic $(\L,\I,\sem)$ with language \mbox{$\L=\set{a,b,c}$}, interpretation set \mbox{$\I=\set{1}$} and model function \mbox{$\sem:2^\L\to 2^\I$} given by
  \begin{gather*}
    \sem(T) =
    \begin{cases}
      \set{1} & \text{if } T\in\set{\emptyset,\set{a},\set{b}} \\
      \emptyset & \text{otherwise}
    \end{cases}
  \end{gather*}
  In addition to $\sem$ being antimonotone we have \mbox{$\Cnsem(\emptyset)=\set{a,b}$} and \mbox{$\Cnsem(\set{a,b})=\set{a,b,c}$}, whence
  \begin{gather*}
    \set{a,b,c}=\Cnsem(\set{a,b})=\Cnsem(\Cnsem(\emptyset))\not\subseteq\Cnsem(\emptyset)=\set{a,b}
  \end{gather*}
  which shows that $\Cnsem$ is not idempotent.
\end{example}

We can show that our restriction to semantics via model functions is not overly limiting.
We could have chosen to start out from consequence functions as well, as long as these consequence functions are increasing (what is contained in a theory follows from it) and idempotent (all consequences are obtained in one step).
More precisely, when given a consequence operator satisfying these two requirements, we can also define a canonical model function whose associated canonical consequence function is exactly the given consequence function we started with.

\begin{proposition}
  \label{thm:con-sem}
  Let $\L$ be a language and \mbox{$C:2^\L\to 2^\L$} be a consequence function which is increasing and idempotent.
	
  Then the model function \mbox{$\sem_C:2^\L\to 2^\L$} with \mbox{$T \mapsto \L\setminus C(T)$} is such that \mbox{$\CnsemC=C$}.\footnote{Please note that the assigned interpretation set $\I$ of $\sem_C$ is just the language $\L$.}
  \begin{proof}
    Let \mbox{$T\subseteq\L$}.
    We find that
    \begin{align*}
      \CnsemC(T)
      &= \mathop{\bigcup_{S\subseteq\L,}}_{\sem_C(T)\subseteq\sem_C(S)}S & \text{(Def.~$\Cnsem$)} \\
      &= \mathop{\bigcup_{S\subseteq\L,}}_{\L\setminus C(T)\subseteq \L\setminus C(S)}S & \text{(Def.~$\sem_C$)} \\
      &= \mathop{\bigcup_{S\subseteq\L,}}_{C(S)\subseteq C(T)}S & \text{(set algebra)} \\
      &= \mathop{\bigcup_{S\subseteq\L,}}_{S\subseteq C(S)\subseteq C(T)}S & \text{($C$ is increasing)}
    \end{align*}
    Firstly, this shows that \mbox{$\CnsemC(T)\subseteq C(T)$}.
    Moreover, in the last equation we can substitute \mbox{$S=C(T)$} to obtain that \mbox{$C(T)\subseteq\CnsemC(T)$}.
  \end{proof}
\end{proposition}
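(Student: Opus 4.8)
The plan is to compute $\CnsemC(T)$ for an arbitrary theory $T\subseteq\L$ straight from the definition of the canonical consequence function and then verify the equality $\CnsemC=C$ by establishing the two inclusions separately. First I would unfold $\CnsemC(T)=\bigcup\set{S\subseteq\L : \sem_C(T)\subseteq\sem_C(S)}$ and substitute the definition $\sem_C(X)=\L\setminus C(X)$, so that the index condition reads $\L\setminus C(T)\subseteq\L\setminus C(S)$. A one-line set-algebra step — complementation within $\L$ is an inclusion-reversing bijection on $2^\L$ — rewrites this equivalently as $C(S)\subseteq C(T)$. Hence $\CnsemC(T)=\bigcup\set{S\subseteq\L : C(S)\subseteq C(T)}$, and it remains to compare this union with $C(T)$.

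For $\CnsemC(T)\subseteq C(T)$ I would use that $C$ is increasing: any $S$ occurring in the union satisfies $S\subseteq C(S)\subseteq C(T)$, so every member of the union is contained in $C(T)$, and therefore so is the union. For the reverse inclusion $C(T)\subseteq\CnsemC(T)$ it suffices to exhibit $C(T)$ itself among the sets $S$ ranging over the union, i.e.\ to check that $C(T)$ satisfies the index condition, which here is $C(C(T))\subseteq C(T)$ — and that is precisely idempotence of $C$. Once $S=C(T)$ is admitted into the union, $C(T)\subseteq\CnsemC(T)$ is immediate.

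I do not expect a genuine obstacle: the argument is essentially bookkeeping with the definition plus the one complementation identity. The only point worth a moment's care is that the paper's notion of idempotence in Definition~\ref{def:operator-properties} is the one-sided $C(C(T))\subseteq C(T)$, and this is exactly — and only just — what the substitution $S=C(T)$ needs; the companion inclusion $C(T)\subseteq C(C(T))$ is never used (it would anyway follow from $C$ being increasing). Symmetrically, ``increasing'' is invoked only for the first inclusion. So beyond being routine, the proof has the pleasant feature of cleanly pinpointing which of the two hypotheses on $C$ powers which half of the equality $\CnsemC=C$.
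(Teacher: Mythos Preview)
Your proposal is correct and follows essentially the same route as the paper: unfold the definition of $\CnsemC$, pass through the complementation step to reduce the index condition to $C(S)\subseteq C(T)$, then use increasingness for one inclusion and the substitution $S=C(T)$ (justified by idempotence) for the other. Your explicit attribution of each hypothesis to its respective inclusion is a nice clarifying remark that the paper leaves implicit.
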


It is clear from Item~\ref{itm:increasing} of Proposition~\ref{thm:antimonotone-monotone-increasing} that no non-increasing consequence function $C$ can be mimicked by $\Cnsem$ for any $\sem$.
However, we consider the restrictions of possible consequence functions $C$ having to be increasing and idempotent not to be too severe.

\subsection{Standard and strong equivalence}
\label{sec:strong-equivalence}

This paper is chiefly about characterizing strong equivalence in one logic via standard equivalence in another logic.
We will now formally introduce these concepts.

\begin{definition} \label{def:strongord}
  Let $(\L,\I,\sem)$ be a logic and \mbox{$T_1,T_2\subseteq\L$} theories.
  We say that $T_1$ and $T_2$ are
  \begin{itemize}
  \item \define{ordinarily equivalent} if and only if \mbox{$\sem(T_1)=\sem(T_2)$};
  \item \define{strongly equivalent} if and only if \mbox{$\forall U\subseteq\L : \sem(T_1\cup U)=\sem(T_2\cup U)$}.
  \end{itemize}
  
\end{definition}

The notion of strong equivalence is intimately connected with the possibility to simplify parts of a given
theory without affecting its semantics. Consider the following example. 

\begin{example} Given a logic $(\L,\I,\sem)$, a theory $S$ and a subtheory $T_1$ of it, i.e.\ $T_1\subseteq S$. Now, we may replace $T_1$ with any $T_2$ being strongly equivalent to it without changing the semantics of $S$. More precisely, $\sem(S) = \sem(S[T_1|T_2])$ with $S[T_1|T_2] = T_2 \cup (S\sm T_1)$. This can be seen as follows: Since $T_1\subseteq S$ we have $T_1 \cup (S\sm T_1) = S$. Moreover, due to the assumed strong equivalence of $T_1$ and $T_2$ we obtain \linebreak $\sigma(T_1 \cup (S\sm T_1)) = \sigma(T_1 \cup (S\sm T_1))$. Hence, $\sem(S) = \sem(S[T_1|T_2])$ as claimed.
\end{example}

Clearly, strongly equivalent theories are ordinarily equivalent by definition. What about the converse direction? It is a matter of fact that in case of well-known
nonmonotonic formalisms, such as logic programs \cite{DBLP:journals/tocl/LifschitzPV01}, default logic \cite{Turner01}, causal theories \cite{DBLP:conf/lpnmr/Turner04} and abstract argumentation \cite{strong,Bau16} strong equivalence and ordinary equivalence are indeed different concepts. However, there are logics like propositional logic or first order logic where both concepts coincide. In the following we will say that the model function $\sem$ has the \define{replacement property} if ordinary equivalence implies strong equivalence. The following natural question arises: What properties must a logic possess in order for ordinary and strong equivalence to coincide? Propositional as well as first order logic possess a monotone consequence function. Does monotony of the consequence operator ensure the coincidence of both concepts? The following example provides us with a negative answer.

\begin{example}
  \label{exm:monotone-strong}
  Consider the language \mbox{$\L=\set{a,b}$} with interpretation set \mbox{$\I=\set{1,2}$} and model function~$\sem$ given by%
  \begin{align*}
    \sem(\emptyset) &=\set{1,2} \\
    \sem(\set{a})   &=\set{1,2} \\
    \sem(\set{b})   &=\set{2} \\
    \sem(\set{a,b}) &=\emptyset
  \end{align*}
	
	It is easy to verify that the semantics $\sem$ is antimonotone.
  Therefore, by Proposition~\ref{thm:antimonotone-monotone-increasing}, its consequence function \mbox{$\Cnsem$} is monotone.
  However, while $\emptyset$ and $\set{a}$ are obviously ordinarily equivalent, they are not strongly equivalent, which can be seen by extending both with the theory $\set{b}$\/:
  \begin{gather*}
    \sem(\emptyset\cup\set{b})=\sem(\set{b})=\set{2}\neq\emptyset=\sem(\set{a,b})=\sem(\set{a}\cup\set{b})
  \end{gather*}
  We also inspect the induced consequence operator\/:
  \begin{align*}
    \Cnsem(\emptyset)&=\bigcup\set{ S\subseteq\L \guard \set{1,2}\subseteq\sem(S) }=\set{a} \\
    \Cnsem(\set{a})&=\bigcup\set{ S\subseteq\L \guard \set{1,2}\subseteq\sem(S) }=\set{a} \\
    \Cnsem(\set{b})&=\bigcup\set{ S\subseteq\L \guard \set{2}\subseteq\sem(S) }=\set{a,b} \\
    \Cnsem(\set{a,b})&=\bigcup\set{ S\subseteq\L \guard \emptyset\subseteq\sem(S) }=\set{a,b}
  \end{align*}
  Since the codomain of $\Cnsem$ consists entirely of fixpoints, $\Cnsem$ is idempotent.
  Therefore the induced consequence operator $\Cnsem$ is increasing, monotone, and idempotent, thus a closure operator.
  Yet, the inducing semantics does not have the replacement property.
\end{example}%

So having a monotone consequence function is, by itself, insufficient to guarantee the replacement property.
We can however identify a property that is strong enough to guarantee replacement on its own.
We call it the intersection property, because it basically says that the semantics of a theory can be obtained by only considering the semantics of the singleton sets constituting the theory.
\begin{definition}
  \label{def:intersection}
  Let $(\L,\I,\sem)$ be a logic.
  Its model function \mbox{$\sem : 2^\L \to 2^\I$} has the \define{intersection property} iff for all \mbox{$T\subseteq\L$}:
  \begin{gather*}
    \sem(T) = \bigcap_{F\in T}\sem(\set{F})
  \end{gather*}
\end{definition}
It follows from the definition that in particular for any two theories \mbox{$T_1,T_2\subseteq\L$}, we have that\linebreak \mbox{$\sem(T_1\cup T_2)=\sem(T_1)\cap\sem(T_2)$}.
The intersection property is a certain locality, independence, or compositionality criterion.
Towards an explanation of Example~\ref{exm:monotone-strong} we can now remark that its model function~$\sem$ does not have the intersection property\/:
\begin{gather*}
  \sem(\set{a,b})=\emptyset\neq \set{2}=\set{1,2}\cap\set{2}=\sem(\set{a})\cap\sem(\set{b})
\end{gather*}

Indeed, this is necessarily so:
as we will show next (and as is easy to show), satisfying the intersection property is sufficient for satisfying the replacement property.

\begin{proposition}
  \label{thm:ordinary-v-strong}
  Let $(\L,\I,\sem)$ be a logic.
  If $\sem$ satisfies the intersection property, then standard equivalence coincides with strong equivalence.
  \begin{proof}
    Let $\sem$ satisfy the intersection property.
    It is clear that strong equivalence implies ordinary equivalence (set \mbox{$U=\emptyset$}), so it remains to show the converse.
    Let \mbox{$T_1,T_2\subseteq\L$} such that \mbox{$\sem(T_1)=\sem(T_2)$} and consider any \mbox{$U\subseteq\L$}.
    We have
    \begin{align*}
      \sem(T_1\cup U) 
      &= \sem(T_1) \cap \sem(U) & \text{(intersection)} \\
      &= \sem(T_2) \cap \sem(U) & \text{(presumption)} \\
      &= \sem(T_2\cup U) & \text{(intersection)}
    \end{align*}%
  \end{proof}%
\end{proposition}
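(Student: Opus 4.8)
The plan is to prove the two implications separately, and to observe that one of them needs no hypothesis at all. That strong equivalence implies ordinary equivalence is immediate for \emph{any} logic: instantiating the universally quantified $U$ in Definition~\ref{def:strongord} with $U=\emptyset$ gives $\sem(T_1)=\sem(T_1\cup\emptyset)=\sem(T_2\cup\emptyset)=\sem(T_2)$. So the entire content of the proposition lies in the converse direction, and this is exactly where the intersection property will be used.

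For the converse, I would take two ordinarily equivalent theories $T_1,T_2\subseteq\L$, so that $\sem(T_1)=\sem(T_2)$, fix an arbitrary $U\subseteq\L$, and rewrite $\sem(T_1\cup U)$ step by step. The workhorse is the binary-union form of the intersection property, namely $\sem(A\cup B)=\sem(A)\cap\sem(B)$ for all $A,B\subseteq\L$ (this is the remark recorded right after Definition~\ref{def:intersection}). Applying it with $A=T_1$, $B=U$ yields $\sem(T_1\cup U)=\sem(T_1)\cap\sem(U)$; substituting the hypothesis $\sem(T_1)=\sem(T_2)$ turns the right-hand side into $\sem(T_2)\cap\sem(U)$; and applying the same identity once more, now in the other direction, gives $\sem(T_2\cup U)$. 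Since $U$ was chosen arbitrarily, $\sem(T_1\cup U)=\sem(T_2\cup U)$ holds for every $U$, i.e. $T_1$ and $T_2$ are strongly equivalent.

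I do not expect a genuine obstacle here; the proposition is essentially immediate once the binary-union identity is available. The only point worth a moment's care is deriving that identity from the stated form $\sem(T)=\bigcap_{F\in T}\sem(\set{F})$: one just notes that $\bigcap_{F\in A\cup B}\sem(\set{F})$ splits as $\bigl(\bigcap_{F\in A}\sem(\set{F})\bigr)\cap\bigl(\bigcap_{F\in B}\sem(\set{F})\bigr)$ by commutativity and associativity of intersection, which is routine. Degenerate cases are consistent with the definition as well — for $T=\emptyset$ the empty intersection is the whole interpretation set $\I$, so $\sem(\emptyset)=\I$ and $\sem(T_1\cup\emptyset)=\sem(T_1)\cap\I=\sem(T_1)$ — so nothing special needs to be said about them.
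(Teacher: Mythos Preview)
Your proposal is correct and follows essentially the same approach as the paper: both note that strong implies ordinary by taking $U=\emptyset$, and both prove the converse via the three-line chain $\sem(T_1\cup U)=\sem(T_1)\cap\sem(U)=\sem(T_2)\cap\sem(U)=\sem(T_2\cup U)$ using the binary form of the intersection property. Your additional remarks on deriving the binary form from the singleton form and on the degenerate case $U=\emptyset$ are sound but not needed, since the paper already records the binary identity immediately after Definition~\ref{def:intersection}.
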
%
Notably, monotonicity properties were not even needed in the above result.
So why is it that all formalisms we know of that have the replacement property also happen to have monotone consequence functions?
It holds because $\sem$ having the intersection property implies that $\sem$ is antimonotone (which in turn implies that $\Cnsem$ is monotone).
\begin{proposition}
  \label{thm:intersection-antimonotone}
  Let $(\L,\I,\sem)$ be a logic. If $\sem$ has the intersection property,
  then $\sem$ is antimonotone.
  \begin{proof}
    Let \mbox{$T_1\subseteq T_2\subseteq\L$}.
    Then \mbox{$T_1\cup T_2=T_2$}, and we conclude the desired subset-inclusion via \linebreak
    \mbox{$\sem(T_2)=\sem(T_1\cup T_2)=\sem(T_1)\cap\sem(T_2)\subseteq\sem(T_1)$}.
  \end{proof}
\end{proposition}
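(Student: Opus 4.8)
The plan is to unpack the definition of antimonotonicity and reduce it directly to the intersection property. Concretely, I would start by fixing arbitrary theories \mbox{$T_1\subseteq T_2\subseteq\L$} and aiming to show \mbox{$\sem(T_2)\subseteq\sem(T_1)$}, since that inclusion for all such pairs is exactly what antimonotonicity asks for.

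The key observation is purely set-theoretic: from \mbox{$T_1\subseteq T_2$} we get \mbox{$T_1\cup T_2=T_2$}. I would then feed this identity into the intersection property, which lets me rewrite \mbox{$\sem(T_2)=\sem(T_1\cup T_2)$} as \mbox{$\sem(T_1)\cap\sem(T_2)$} (using the already-noted consequence of Definition~\ref{def:intersection} that \mbox{$\sem(T_1\cup T_2)=\sem(T_1)\cap\sem(T_2)$}, or equivalently by applying the defining equation of the intersection property to the union). Since an intersection is contained in each of its operands, \mbox{$\sem(T_1)\cap\sem(T_2)\subseteq\sem(T_1)$}, and chaining these equalities and inclusions yields \mbox{$\sem(T_2)\subseteq\sem(T_1)$}.

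Finally, because \mbox{$T_1$} and \mbox{$T_2$} were arbitrary with \mbox{$T_1\subseteq T_2$}, this establishes antimonotonicity of \mbox{$\sem$}. There is essentially no obstacle here: the whole argument is a single line of substitution plus the trivial fact that \mbox{$X\cap Y\subseteq X$}. The only thing to be a little careful about is making sure the intermediate step \mbox{$\sem(T_1\cup T_2)=\sem(T_1)\cap\sem(T_2)$} is justified — it follows immediately from the intersection property applied to \mbox{$T_1\cup T_2$} together with distributivity of intersection over the indexed intersection, and the excerpt already states it as a direct consequence of the definition, so I would simply cite that.
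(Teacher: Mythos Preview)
Your proposal is correct and matches the paper's proof essentially line for line: fix $T_1\subseteq T_2$, use $T_1\cup T_2=T_2$, apply the intersection property to get $\sem(T_2)=\sem(T_1)\cap\sem(T_2)\subseteq\sem(T_1)$. There is nothing to add.
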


In the other direction, we can observe that the replacement property on its own does not guarantee antimonotonicity.
\begin{example}
  \label{exm:replacement-weak}
  Consider the language \mbox{$\L=\set{a}$} and interpretation set \mbox{$\I=\set{1}$}.
  For semantics $\sem$ with \mbox{$\sem(\emptyset)=\emptyset$} and \mbox{$\sem(\set{a})=\set{1}$}, we can see that the replacement property holds trivially since there are no semantically equivalent theories that are syntactically different.
  However, $\sem$ is not antimonotone (as \mbox{$\sem(\set{a})=\set{1}\not\subseteq\emptyset=\sem(\emptyset)$}) and does not have the intersection property\/:
  \begin{gather*}
    \sem(\emptyset\cup\set{a})=\sem(\set{a})=\set{1}\neq\emptyset=\emptyset\cap\set{1}=\sem(\emptyset)\cap\sem(\set{a})
  \end{gather*}
\end{example}

It is easy to see that classical propositional logic $\L_\PL$ has the intersection property simply \emph{by definition}:
the standard model semantics is typically firstly defined for single formulas \mbox{$\varphi\in\L_\PL$} and then generalized to theories $T$ by \emph{setting} \mbox{$\sem_\mod(T)=\bigcap_{\varphi\in T}\sem_\mod(\set{\varphi})$}.

As it turns out, for all logics, the intersection property also guarantees that each theory $T$ has the same models as the set of all canonical consequences of $T$.

\begin{proposition}
  \label{thm:intersection-theory-models}
  Let $(\L, \I, \sem)$ be a logic that has the intersection property.
  Then for each \mbox{$T\subseteq\L$} we find that \mbox{$\sem(T)=\sem\left(\Cnsem(T)\right)$}.
  \begin{proof}
    \begin{align*}
      \sem\left(\Cnsem(T)\right)
      &= \sem\!\left(\mathop{\bigcup_{S\subseteq\L,}}_{\sem(T)\subseteq\sem(S)}S\right) & \text{(Definition~\ref{def:consequence})} \\
      &= \mathop{\bigcap_{S\subseteq\L,}}_{\sem(T)\subseteq\sem(S)}\sem(S) & \text{(intersection property)} \\
      &= \sem(T)
    \end{align*}
    In the last equality, the $\supseteq$-direction is clear as we intersect only supersets of $\sem(T)$, and the $\subseteq$-direction is clear as we can substitute $T$ for $S$ in the line above.
  \end{proof}
\end{proposition}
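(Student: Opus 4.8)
The plan is to unfold the definition of $\Cnsem(T)$ and then push the model function $\sem$ through the resulting big union, exploiting the intersection property. Before doing that I would record the obvious generalisation of the intersection property from singletons to arbitrary \emph{non-empty} families of theories: if $(S_j)_{j\in J}$ is a family of subsets of $\L$ with $J\neq\emptyset$, then $\sem\!\left(\bigcup_{j\in J}S_j\right)=\bigcap_{j\in J}\sem(S_j)$. This holds because $F\in\bigcup_{j\in J}S_j$ iff $F\in S_j$ for some $j$, so by the intersection property $\sem\!\left(\bigcup_{j\in J}S_j\right)=\bigcap_{F\in\bigcup_j S_j}\sem(\set{F})=\bigcap_{j\in J}\bigcap_{F\in S_j}\sem(\set{F})=\bigcap_{j\in J}\sem(S_j)$. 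This is just the binary remark made right after Definition~\ref{def:intersection}, stated for an arbitrary index set.

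Next I would instantiate this with the index family $J=\set{S\subseteq\L \guard \sem(T)\subseteq\sem(S)}$. By Definition~\ref{def:consequence} we have $\Cnsem(T)=\bigcup_{S\in J}S$, and $J\neq\emptyset$ since $T\in J$ (because $\sem(T)\subseteq\sem(T)$). Hence the family version above yields $\sem(\Cnsem(T))=\bigcap_{S\in J}\sem(S)$. Now the two inclusions are immediate. Every member $\sem(S)$ of this intersection satisfies $\sem(T)\subseteq\sem(S)$ by the defining condition of $J$, so $\sem(T)\subseteq\bigcap_{S\in J}\sem(S)=\sem(\Cnsem(T))$. Conversely, since $T\in J$, the set $\sem(T)$ is itself one of the intersected sets, so $\sem(\Cnsem(T))=\bigcap_{S\in J}\sem(S)\subseteq\sem(T)$. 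Combining the two gives $\sem(T)=\sem(\Cnsem(T))$.

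I do not expect a genuine obstacle here; the result is essentially a bookkeeping exercise once the intersection property is available. The only two points requiring a little care are (i) that the intersection property is stated only for singletons, so lifting it to $\sem\!\left(\bigcup_{S\in J}S\right)$ requires the one-line family version above, and (ii) that one must verify $J\neq\emptyset$, so that $\bigcap_{S\in J}\sem(S)$ does not collapse vacuously to all of $\I$ — both handled by the observation $T\in J$. Everything else is pure set algebra, exactly as in the two inclusion arguments sketched above.
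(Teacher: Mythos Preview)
Your proposal is correct and follows essentially the same route as the paper's proof: unfold $\Cnsem(T)$ as a union, push $\sem$ through via the intersection property to obtain an intersection of supersets of $\sem(T)$, and conclude by the two obvious inclusions (one from the defining constraint on the index set, the other by taking $S=T$). You are simply more explicit than the paper about justifying the family version of the intersection property and about non-emptiness of the index set; the paper invokes the former tersely and establishes it formally only later (Proposition~\ref{thm:intersection:general}).
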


Finally, this means that the intersection property only holds for semantics whose canonical consequence functions are closure operators.

\begin{proposition}
  \label{thm:intersection-closure}
  Let $(\L, \I, \sem)$ be a logic that has the intersection property.
  Then $\Cnsem$ is a closure operator.
  \begin{proof} We have to show that $\Cnsem$ is increasing, monotone and idempotent.
    First, $\Cnsem$ is increasing in any case (Proposition~\ref{thm:antimonotone-monotone-increasing}). Moreover, since $\sem$ has the intersection property, it is also antimonotone (Proposition~\ref{thm:intersection-antimonotone}) and thus, $\Cnsem$ is monotone (Proposition~\ref{thm:antimonotone-monotone-increasing}).
   Finally, it follows from Proposition~\ref{thm:intersection-theory-models} that $\Cnsem$ is also idempotent.  More precisely, if \mbox{$\varphi\in\Cnsem(\Cnsem(T))$} then there is an \mbox{$S\subseteq\L$} with \mbox{$\varphi\in S$} and \mbox{$\sem(\Cnsem(T))\subseteq\sem(S)$}, and by \mbox{$\sem(T)=\sem(\Cnsem(T))$} it is clear that in this case \mbox{$\sem(T)\subseteq\sem(S)$} and \mbox{$\varphi\in S\subseteq\Cnsem(T)$}. Hence, $\Cnsem$ is a closure operator.
  \end{proof}
\end{proposition}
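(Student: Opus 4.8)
The plan is to verify the three defining conditions of a closure operator for $\Cnsem$ --- \emph{increasing}, \emph{monotone}, and \emph{idempotent} --- drawing entirely on the chain of auxiliary results established just before the statement, so that the intersection property is only ever invoked through those lemmas.

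First I would dispatch the two easy conditions. That $\Cnsem$ is \emph{increasing} holds for every logic whatsoever and hence is immediate from Item~\ref{itm:increasing} of Proposition~\ref{thm:antimonotone-monotone-increasing}; it does not even require the intersection property. For \emph{monotonicity}, the intersection property is used only indirectly: by Proposition~\ref{thm:intersection-antimonotone} it forces $\sem$ to be antimonotone, and then Item~\ref{itm:antimonotone-monotone} of Proposition~\ref{thm:antimonotone-monotone-increasing} yields that $\Cnsem$ is monotone. Both of these steps are thus essentially citations.

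The real content is \emph{idempotence}, and here the key lever is Proposition~\ref{thm:intersection-theory-models}, which tells us that under the intersection property a theory $T$ and its set of canonical consequences $\Cnsem(T)$ have exactly the same models, i.e.\ $\sem(T)=\sem(\Cnsem(T))$. Given this, I would unfold the definition of $\Cnsem$ twice and rewrite the index condition of the union using that equality of model sets:
\begin{gather*}
  \Cnsem(\Cnsem(T)) = \mathop{\bigcup_{S\subseteq\L,}}_{\sem(\Cnsem(T))\subseteq\sem(S)} S = \mathop{\bigcup_{S\subseteq\L,}}_{\sem(T)\subseteq\sem(S)} S = \Cnsem(T).
\end{gather*}
This actually gives the equality $\Cnsem(\Cnsem(T))=\Cnsem(T)$, which is stronger than the $\subseteq$-inclusion demanded by Definition~\ref{def:operator-properties}, so idempotence follows a fortiori. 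If one prefers to stay in the elementwise style of the preceding proofs, the same argument reads: if $\varphi\in\Cnsem(\Cnsem(T))$ then $\varphi\in S$ for some $S\subseteq\L$ with $\sem(\Cnsem(T))\subseteq\sem(S)$; since $\sem(\Cnsem(T))=\sem(T)$ this means $\sem(T)\subseteq\sem(S)$, whence $\varphi\in S\subseteq\Cnsem(T)$.

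I do not expect any genuine obstacle: all the substantive work has been front-loaded into Propositions~\ref{thm:intersection-antimonotone} and~\ref{thm:intersection-theory-models}. The only point requiring care is bookkeeping --- making sure each of the three closure conditions is explicitly accounted for, and that the rewriting of the union's index set in the idempotence step is justified purely by the equality $\sem(T)=\sem(\Cnsem(T))$, with no further hidden appeal to antimonotonicity beyond what Proposition~\ref{thm:antimonotone-monotone-increasing} already supplies for monotonicity.
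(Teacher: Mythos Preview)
Your proposal is correct and follows exactly the same approach as the paper: increasing for free, monotone via antimonotonicity of $\sem$, and idempotence via $\sem(T)=\sem(\Cnsem(T))$. Your elementwise argument for idempotence is essentially verbatim the paper's, and your displayed equality is just a slightly cleaner packaging of the same step.
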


We have shown in Proposition~\ref{thm:con-sem} that we also could have started with a consequence function.
Clearly the replacement property could be easily defined for consequence functions \mbox{$C:2^\L\to 2^\L$} in the sense that $C$ has the replacement property if and only if for all theories \mbox{$T_1,T_2\subseteq\L$}, ``classical consequence-equivalence'' \mbox{$C(T_1)=C(T_2)$} coincides with ``strong consequence-equivalence''  \mbox{$\forall U\subseteq\L:$} \linebreak $C(T_1\cup U)=C(T_2\cup U)$.
It is also clear that any semantics having the replacement property induces a canonical consequence function having the consequence-function version of the replacement property.
However, we must remark that we have not found a consequence-function equivalent of the intersection property.
Even setting \mbox{$C(T) = \bigcup_{F\in T}C(\set{F})$} would be too weak to capture interactions between different subtheories of $T$.
On the other hand, and on the positive side, we will see that the intersection property for model functions as we defined it in Definition~\ref{def:intersection} will give us a good handle on characterizing strong equivalence.

\subsection{Galois correspondences}
\label{sec:galois}

Up to here, we have considered various properties of logics in our abstract setting.
Mostly, those were algebraic properties of the model-theoretic semantics.
In this subsection, we conclude our argument for defining as ``classical'' those logics whose semantics satisfy the intersection property.

For this, it is firstly necessary to slightly extend (and, for the time being, also slightly constrain) our abstract notion of ``logic''.
Up to now, we only assumed the existence of a model function \mbox{$\sem:2^\L\to 2^\I$} that assigns a set of interpretations to a theory (intuitively, its models).
In the converse direction, we now also assume a \define{theory function} \mbox{$\th:2^\I\to 2^\L$}, that takes as input a set \mbox{$K\subseteq\I$} of interpretations and intuitively returns the set $\th(K)$ of all language elements that are true under all interpretations in $K$.
A \emph{logic} will now be a tuple $(\L,\I,\sem,\th)$ with $\L,\I,\sem$ as before and \mbox{$\th:2^\I\to 2^\L$} a theory function.

This immediately yields another way of defining a consequence operator:
given $\sem$ and~$\th$, we can define \mbox{$\Cn^{\sem,\th}(T) = \th(\sem(T))$}.
Symmetrically, we can define an operator on interpretation sets by \linebreak \mbox{$K\mapsto\sem(\th(K))$}.

We next consider a class of logics where the interplay of model function and theory function satisfies certain conditions.
Below, we denote the composition of functions \mbox{$f:A\to B$} and \mbox{$g:B\to C$} by \mbox{$g\circ f$}, that is, \mbox{$g\circ f:A\to C$} with \mbox{$x\mapsto g(f(x))$}.
\begin{definition}
  \label{def:galois}
  Let $(\L,\I,\sem,\th)$ be a logic.
  The functions $\sem$ and $\th$ are in \define{Galois correspondence} if and only if\/:
  \begin{enumerate}
  \item $\sem$ is antimonotone and $\th$ is antimonotone;
  \item $\sem\circ\th$ is increasing and $\th\circ\sem$ is increasing.
  \end{enumerate}
\end{definition}
If $\sem$ and $\th$ are in Galois correspondence, we also say that $(\L,\I,\sem,\th)$ is a \define{Galois logic}.

The first two properties together imply that the resulting consequence function is monotone; by the last property, it is also increasing.
Galois correspondences have been studied in model theory~\citep{goguen84introducing}, and in an even more abstract way in lattice theory~\citep{ore44galois,birkhoff73lattice,cohn81universal,davey-priestley}.

Indeed, there is also another (equivalent) formulation~\citep{davey-priestley}.
\begin{proposition}
  \label{thm:galois:alternative}
  Let $(\L,\I,\sem,\th)$ be a logic.
  It holds that $\sem$ and $\th$ are in {Galois correspondence} iff for all \mbox{$T\subseteq\L$} and \mbox{$K\subseteq\I$}\/:
  \begin{gather*}
    K\subseteq\sem(T) \iff T\subseteq\th(K)
  \end{gather*}%
  \begin{proof}
    \begin{description}
    \item[\normalfont if:]
      Assume that for all \mbox{$T\subseteq\L$} and \mbox{$K\subseteq\I$} we have, \mbox{$K\subseteq\sem(T)$} iff \mbox{$T\subseteq\th(K)$}.
      \begin{enumerate}
      \item We start with showing that $\th\circ\sem$ is increasing.\\
			Let \mbox{$T\subseteq\L$}.
        Obviously, \mbox{$\sem(T)\subseteq\sem(T)$}. Hence, if substituting $K=\sem(T)$ we obtain by presumption \mbox{$T\subseteq\th(\sem(T))$}.\\
				In the same spirit one may easily show that  $\sem\circ\th$ is increasing.
      \item We show now that $\sem$ is antimonotone.\\ 
			Let \mbox{$T_1\subseteq T_2$}.
        Then \mbox{$T_1\subseteq T_2\subseteq\th(\sem(T_2))$} by the above.
        By presumption (with \mbox{$K=\sem(T_2)$}), it follows that \mbox{$\sem(T_2)\subseteq\sem(T_1)$}.\\
				Analogously one may show that $\th$ is antimonotone.
       
      \end{enumerate}
    \item[\normalfont only if:]
      Let \mbox{$T\subseteq\L$} and \mbox{$K\subseteq\I$}.
      If \mbox{$K\subseteq\sem(T)$}, then \mbox{$\th(\sem(T))\subseteq\th(K)$}, whence we conclude that \linebreak \mbox{$T\subseteq\th(\sem(T))\subseteq\th(K)$}.
      The reverse implication follows symmetrically.
    \end{description}
  \end{proof}
\end{proposition}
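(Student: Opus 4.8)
The plan is to prove the biconditional by treating the two implications separately, in each case unwinding only the relevant half of Definition~\ref{def:galois}. This is the familiar equivalence between the ``two-clause'' presentation of an antitone Galois connection and its ``adjunction'' presentation, so I expect the whole argument to be short and purely formal, with no estimates or case analysis.

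For the \emph{only if} direction I would assume that $\sem$ and $\th$ are in Galois correspondence and fix arbitrary $T\subseteq\L$ and $K\subseteq\I$. If $K\subseteq\sem(T)$, then applying the antimonotone map $\th$ gives $\th(\sem(T))\subseteq\th(K)$, and since $\th\circ\sem$ is increasing we obtain $T\subseteq\th(\sem(T))\subseteq\th(K)$. The converse implication is the mirror image: from $T\subseteq\th(K)$, antimonotonicity of $\sem$ yields $\sem(\th(K))\subseteq\sem(T)$, and increasingness of $\sem\circ\th$ gives $K\subseteq\sem(\th(K))\subseteq\sem(T)$.

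For the \emph{if} direction I would assume the equivalence $K\subseteq\sem(T)\iff T\subseteq\th(K)$ and recover the four conditions of Definition~\ref{def:galois}. The crucial move is to instantiate the equivalence at $K=\sem(T)$: the trivially true inclusion $\sem(T)\subseteq\sem(T)$ then forces $T\subseteq\th(\sem(T))$, i.e.\ $\th\circ\sem$ is increasing; symmetrically, instantiating at $T=\th(K)$ shows $\sem\circ\th$ is increasing. To get antimonotonicity of $\sem$, take $T_1\subseteq T_2$; chaining with the inclusion $T_2\subseteq\th(\sem(T_2))$ just obtained gives $T_1\subseteq\th(\sem(T_2))$, and the equivalence (now with $K=\sem(T_2)$) yields $\sem(T_2)\subseteq\sem(T_1)$. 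Antimonotonicity of $\th$ follows by the analogous computation with the roles of $\sem$ and $\th$ swapped.

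I do not anticipate a genuine obstacle: the proof is entirely a matter of pushing inclusions through monotone and antimonotone maps together with reflexivity of $\subseteq$. The one step that deserves a second's attention is the substitution $K=\sem(T)$ (and dually $T=\th(K)$), which is what converts reflexivity of $\subseteq$ into the increasingness of the two composites and thereby drives the ``if'' direction; once that observation is in place the rest is bookkeeping.
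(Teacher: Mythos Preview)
Your proposal is correct and follows essentially the same approach as the paper's proof: both directions use the same substitutions ($K=\sem(T)$ and $T=\th(K)$) to extract increasingness, and then chain that with antimonotonicity exactly as you describe. The only difference is cosmetic---you spell out the symmetric halves that the paper leaves to the reader.
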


It also follows that Galois correspondences induce closure operators, that is, operators that are monotone, increasing, and idempotent.
\begin{proposition}
  \label{thm:galois:closure}
  Let $(\L,\I,\sem,\th)$ be a Galois logic.
  Then the operators
  \begin{itemize}
  \item $\sem\circ\th:2^\I\to 2^\I$, $K\mapsto\sem(\th(K))$
  \item $\th\circ\sem:2^\L\to 2^\L$, $T\mapsto\th(\sem(T))$
  \end{itemize}
  are closure operators.
  \begin{proof}
    We have to show that \mbox{$\sem\circ\th$} and \mbox{$\th\circ\sem$} are (1) increasing, (2) monotone and (3) idempotent.
    We only consider \mbox{$\sem\circ\th$} as the proof for \mbox{$\th\circ\sem$} is absolutely symmetric.
    \begin{enumerate}
    \item The second item of Definition~\ref{def:galois} states verbatim that \mbox{$\sem\circ\th$} is increasing.
    \item It is easy to show that \mbox{$\sem\circ\th$} is monotone since both \mbox{$\sem$} and \mbox{$\th$} are antimonotone:
      if \mbox{$K_1\subseteq K_2$}, then \mbox{$\th(K_2)\subseteq\th(K_1)$} whence \mbox{$\sem(\th(K_1)) \subseteq \sem(\th(K_2))$}, which means \mbox{$(\sem\circ\th)(K_1)\subseteq (\sem\circ\th)(K_2)$}.
    \item Consider \mbox{$K\subseteq\I$}.
      We have to show that \mbox{$(\sem\circ\th)((\sem\circ\th)(K)) \subseteq (\sem\circ\th)(K)$}, that is,\linebreak \mbox{$\sem(\th(\sem(\th(K))))\subseteq\sem(\th(K))$}.

      Since \mbox{$\th\circ\sem$} is increasing by definition, we have that \mbox{$\th(K)\subseteq\th(\sem(\th(K)))$}.
      Since \mbox{$\sem$} is antimonotone, \mbox{$\sem(\th(\sem(\th(K)))) \subseteq \sem(\th(K))$}.
    \end{enumerate}
  \end{proof}
\end{proposition}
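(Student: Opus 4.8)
The plan is to check, for $\sem\circ\th$, the three defining conditions of a closure operator from Definition~\ref{def:operator-properties} — that it is increasing, monotone, and idempotent — and then to note that the statement for $\th\circ\sem$ follows by an entirely symmetric argument, since Definition~\ref{def:galois} treats $\sem$ and $\th$ (and hence $2^\L$ and $2^\I$) symmetrically, with all four hypotheses coming in symmetric pairs.

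Increasingness is free: the second clause of Definition~\ref{def:galois} states verbatim that $\sem\circ\th$ is increasing. For monotonicity I would simply compose the two antimonotonicity hypotheses from the first clause of Definition~\ref{def:galois}: if $K_1\subseteq K_2$, then $\th(K_2)\subseteq\th(K_1)$ by antimonotonicity of $\th$, and then $\sem(\th(K_1))\subseteq\sem(\th(K_2))$ by antimonotonicity of $\sem$, i.e.\ $(\sem\circ\th)(K_1)\subseteq(\sem\circ\th)(K_2)$.

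Idempotency is the only step needing a small idea, namely using the \emph{other} round-trip map. Fix $K\subseteq\I$. Increasingness of $\sem\circ\th$ already yields $\sem(\th(K))\subseteq\sem(\th(\sem(\th(K))))$, so it suffices to prove the reverse inclusion $\sem(\th(\sem(\th(K))))\subseteq\sem(\th(K))$. Since $\th\circ\sem$ is increasing, applying it to the theory $\th(K)$ gives $\th(K)\subseteq\th(\sem(\th(K)))$; antimonotonicity of $\sem$ then flips this to $\sem(\th(\sem(\th(K))))\subseteq\sem(\th(K))$, completing the argument. (If one only wants the inclusion required by Definition~\ref{def:operator-properties} for idempotency, the reverse direction is not even needed; it is what makes $\sem\circ\th$ a closure operator on the nose.)

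I do not anticipate a genuine obstacle: this is the classical fact that the two compositions associated with a Galois connection are closure operators. The only points requiring care are bookkeeping in the idempotency step — invoking increasingness of $\th\circ\sem$, not of $\sem\circ\th$, at the right moment — and verifying that the symmetric argument for $\th\circ\sem$ really goes through clause for clause, which it does precisely because the Galois correspondence conditions are self-dual under swapping the two functions.
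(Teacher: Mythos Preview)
Your proposal is correct and follows essentially the same approach as the paper: increasingness is read off directly from the definition, monotonicity follows by composing the two antimonotonicity hypotheses, and idempotency is obtained by applying increasingness of $\th\circ\sem$ to the theory $\th(K)$ and then using antimonotonicity of $\sem$. Your additional remark that the other inclusion for idempotency is automatic from increasingness (and hence not strictly required by Definition~\ref{def:operator-properties}) is a correct and helpful observation that the paper does not spell out.
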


Towards showing that having the intersection property and being in a Galois correspondence are one and the same, we firstly derive a slightly modified characterization of the intersection property.
Instead of decomposing theories into its singletons, this version considers families of theories.

\begin{proposition}
  \label{thm:intersection:general}
  Let $(\L,\I,\sem)$ be a logic.
  Its model function \mbox{$\sem : 2^\L \to 2^\I$} satisfies the intersection property if and only if for all \mbox{$\T\subseteq 2^\L$}:
  \begin{gather*}
    \sem\!\left(\bigcup_{T\in\T}T\right) = \bigcap_{T\in\T}\sem(T)
  \end{gather*}
  \begin{proof}
    \begin{description}
    \item[\normalfont if:]
      Let \mbox{$T\subseteq\L$} and define \mbox{$\T=\set{ \set{F} \guard F\in T }$}.
      Clearly \mbox{$T=\bigcup_{F\in T}\set{F}$}, whence 
      \begin{align*}
        \sem(T)
        &=\sem\!\left(\bigcup_{F\in T}\set{F}\right) \\
        &=\sem\!\left(\bigcup_{\set{F}\in\T}\set{F}\right) \\
        &=\bigcap_{\set{F}\in\T}\sem(\set{F}) \\
        &=\bigcap_{F\in T}\sem(\set{F})
      \end{align*}
    \item[\normalfont only if:]
      Let \mbox{$\T\subseteq 2^\L$} and define \mbox{$T=\bigcup_{U\in\T}U$}.
      Now we have that
      \begin{align*}
        \sem\!\left(\bigcup_{U\in\T}U\right)
        &= \sem(T) \\
        &= \bigcap_{F\in T}\sem(\set{F}) \\
        &= \bigcap_{F\in\bigcup_{U\in\T}U}\sem(\set{F}) \\
        &= \bigcap_{U\in\T}\left(\bigcap_{F\in U}\sem(\set{F})\right)\\
        &= \bigcap_{U\in\T}\sem(U)
      \end{align*}
    \end{description}
  \end{proof}
\end{proposition}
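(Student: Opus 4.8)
The plan is to prove the two directions of the stated equivalence separately, and in each case to reduce the general ``family'' identity to the singleton version of the intersection property (Definition~\ref{def:intersection}) by choosing a convenient index set and reorganising an iterated intersection. No heavy machinery is needed; everything follows from the definition together with elementary facts about unions and intersections of families of sets.

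For the direction ``general identity $\Rightarrow$ intersection property'', I would fix a theory \mbox{$T\subseteq\L$} and instantiate the family as the collection of all singletons occurring in $T$, that is, \mbox{$\T=\set{\set{F}\guard F\in T}$}. Since \mbox{$\bigcup_{\set{F}\in\T}\set{F}=T$}, the assumed identity gives \mbox{$\sem(T)=\bigcap_{\set{F}\in\T}\sem(\set{F})=\bigcap_{F\in T}\sem(\set{F})$}, which is exactly the intersection property. (This also silently covers \mbox{$T=\emptyset$}: both sides reduce to the empty intersection over $\I$.)

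For the converse, given an arbitrary family \mbox{$\T\subseteq 2^\L$}, I would set \mbox{$T=\bigcup_{U\in\T}U$} and apply the intersection property to $T$, obtaining \mbox{$\sem(T)=\bigcap_{F\in T}\sem(\set{F})$}. The rest is purely set-theoretic bookkeeping: because $F$ ranges over \mbox{$\bigcup_{U\in\T}U$}, the single intersection can be split as the nested intersection \mbox{$\bigcap_{U\in\T}\bigl(\bigcap_{F\in U}\sem(\set{F})\bigr)$}, and each inner term \mbox{$\bigcap_{F\in U}\sem(\set{F})$} collapses to $\sem(U)$ by a second application of the intersection property. Chaining these equalities yields \mbox{$\sem\!\left(\bigcup_{U\in\T}U\right)=\bigcap_{U\in\T}\sem(U)$}, as desired.

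I do not anticipate a real obstacle; the only points that deserve a moment's care are the degenerate cases. When \mbox{$\T=\emptyset$} the left-hand side is $\sem(\emptyset)$ and the right-hand side is the empty intersection over $\I$, and these agree precisely because the intersection property forces \mbox{$\sem(\emptyset)=\I$}. Likewise, overlapping members of $\T$ cause no difficulty in the reindexing step \mbox{$\bigcap_{F\in\bigcup_{U\in\T}U}=\bigcap_{U\in\T}\bigcap_{F\in U}$}, since intersection is idempotent, so no disjointness hypothesis on the family is needed.
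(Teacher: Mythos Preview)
Your proposal is correct and follows essentially the same route as the paper's proof: the same singleton instantiation \mbox{$\T=\set{\set{F}\guard F\in T}$} for the ``if'' direction, and the same reindexing \mbox{$\bigcap_{F\in\bigcup_{U\in\T}U}=\bigcap_{U\in\T}\bigcap_{F\in U}$} followed by a second application of the intersection property for the ``only if'' direction. Your explicit treatment of the degenerate cases (\mbox{$\T=\emptyset$} and overlapping members) is a helpful addition that the paper leaves implicit.
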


In what follows, we will make implicit use of this result and consider the two formulations of the intersection property to be interchangeable.

We now conclude this section with its main result.
It states that for any logic $(\L,\I,\sem)$, the conditions ``$\sem$ has the intersection property'' and ``$\sem$ is in a Galois correspondence with some $\th$'' are equivalent.

The proof can be adapted from the literature \cite[Propositions~7.31 and 7.33]{davey-priestley} to our slightly different setting with acceptable effort.

\begin{theorem}
  \label{thm:galois-iff-intersection}
  Let $(\L,\I,\sem)$ be a logic.
  \begin{enumerate}
  \item If there is a \mbox{$\th: 2^\I\to 2^\L$} such that $\sem$ and $\th$ are in Galois correspondence, then $\sem$ has the intersection property.
  \item If $\sem$ has the intersection property, then we can define a theory function \mbox{$\th:2^\I\to 2^\L$} with
  \begin{gather*}
    K\mapsto\mathop{\bigcup_{T\subseteq\L,}}_{K\subseteq\sem(T)}T
  \end{gather*}
  such that $\sem$ and $\th$ are in Galois correspondence.
  \end{enumerate}
  \begin{proof}
    \begin{enumerate}
    \item 
      Assume the presumption.
      We have to show that for any \mbox{$\T\subseteq 2^\L$}, we find that
      \begin{gather*}
        \sem\!\left(\bigcup_{T\in \T}T\right) = \bigcap_{T\in \T}\sem(T)
      \end{gather*}
      Let \mbox{$Z=\bigcup_{T\in \T}T$}.
      We first show that $\sem(Z)$ is a lower bound for the set \mbox{$\sem(\T)=\set{\sem(T)\guard T\in \T}$}.
      Clearly, for each \mbox{$T\in \T$} we have \mbox{$T\subseteq Z$}, whence by antimonotonicity of $\sem$ we get \mbox{$\sem(Z)\subseteq\sem(T)$} for each \mbox{$T\in \T$}.
      Now let \mbox{$Q\subseteq\I$} be any lower bound for \mbox{$\sem(\T)$}.
      Then \mbox{$Q\subseteq\sem(T)$} for all \mbox{$T\in\T$}.
      By Proposition~\ref{thm:galois:alternative}, we get \mbox{$T\subseteq\th(Q)$} for all\ \,\mbox{$T\in\T$}.
      By definition, this entails that \mbox{$\bigcup_{T\in\T}T=Z\subseteq\th(Q)$}.
      Now using Proposition~\ref{thm:galois:alternative} again yields \mbox{$Q\subseteq\sem(Z)$} whence $\sem(Z)$ is the greatest lower bound of $\sem(\T)$.
    \item 
      Let $\sem$ have the intersection property.
      By Proposition~\ref{thm:intersection-antimonotone}, it follows that $\sem$ is antimonotone.
      For antimonotonicity of $\th$, consider \mbox{$K_1\subseteq K_2$}.
      Then
      \begin{align*}
        \th(K_2) = \mathop{\bigcup_{T\subseteq\L,}}_{K_2\subseteq\sem(T)}T \subseteq \mathop{\bigcup_{T\subseteq\L,}}_{K_1\subseteq\sem(T)}T  = \th(K_1)
      \end{align*}
      Now for showing that $\sem\circ\th$ and $\th\circ\sem$ are increasing.
      
      Let \mbox{$K\subseteq\I$}.
      We find
      \begin{align*}
        K \subseteq \mathop{\bigcap_{T\subseteq\L,}}_{K\subseteq\sem(T)}\sem(T) 
        = \sem\!\left(\mathop{\bigcup_{T\subseteq\L,}}_{K\subseteq\sem(T)}T\right) 
        = \sem(\th(K))
      \end{align*}
      Now let \mbox{$T\subseteq\L$}.
      By using $\sem(T)\subseteq \sem(T)$ it is easy to verify that
      \begin{align*}
        T 
        \subseteq \mathop{\bigcup_{S\subseteq\L,}}_{\sem(T)\subseteq\sem(S)}S 
        =\th(\sem(T))
      \end{align*}
    \end{enumerate}    
  \end{proof}
\end{theorem}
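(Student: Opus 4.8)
The plan is to run both directions through the two equivalent reformulations already at hand: the adjunction form of Galois correspondence (Proposition~\ref{thm:galois:alternative}), namely $K\subseteq\sem(T)\iff T\subseteq\th(K)$ for all $T\subseteq\L$ and $K\subseteq\I$, and the family form of the intersection property (Proposition~\ref{thm:intersection:general}), namely $\sem(\bigcup_{T\in\T}T)=\bigcap_{T\in\T}\sem(T)$ for all $\T\subseteq 2^\L$. This turns the statement into an instance of the standard fact that an adjunction between powerset lattices sends arbitrary joins on one side to the corresponding meets on the other.

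For item~1, assume $\sem$ and $\th$ are in Galois correspondence. I fix an arbitrary family $\T\subseteq 2^\L$, put $Z=\bigcup_{T\in\T}T$, and argue that $\sem(Z)$ is the \emph{greatest lower bound} of the set $\set{\sem(T)\guard T\in\T}$ in the lattice $(2^\I,\subseteq)$; since meets in a powerset lattice are just intersections, this delivers $\sem(Z)=\bigcap_{T\in\T}\sem(T)$, which by Proposition~\ref{thm:intersection:general} is the intersection property. That $\sem(Z)$ is a lower bound follows from antimonotonicity of $\sem$, since $T\subseteq Z$ for each $T\in\T$. That it is the greatest one: given any lower bound $Q\subseteq\I$, i.e.\ $Q\subseteq\sem(T)$ for every $T\in\T$, the adjunction gives $T\subseteq\th(Q)$ for every $T\in\T$, hence $Z\subseteq\th(Q)$, and applying the adjunction a second time (now with $T:=Z$, $K:=Q$) yields $Q\subseteq\sem(Z)$. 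This argument is uniform and also covers the case $\T=\emptyset$, in which it forces $\sem(\emptyset)=\I$.

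For item~2, assume $\sem$ has the intersection property and define $\th$ as in the statement. I verify the two clauses of Definition~\ref{def:galois}. Antimonotonicity of $\sem$ is Proposition~\ref{thm:intersection-antimonotone}; antimonotonicity of $\th$ is immediate, since enlarging $K$ shrinks the index set $\set{T\guard K\subseteq\sem(T)}$ over which $\th$ takes a union. For $\sem\circ\th$ increasing: given $K\subseteq\I$, the family form of the intersection property yields $\sem(\th(K))=\sem\bigl(\bigcup_{K\subseteq\sem(T)}T\bigr)=\bigcap_{K\subseteq\sem(T)}\sem(T)$, and every set occurring in that intersection contains $K$ by the index condition, so $K\subseteq\sem(\th(K))$; here one uses that the index set is nonempty, which holds because the intersection property forces $\sem(\emptyset)=\I$ (the empty family in Proposition~\ref{thm:intersection:general}). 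For $\th\circ\sem$ increasing: given $T\subseteq\L$, the inclusion $\sem(T)\subseteq\sem(T)$ is trivially true, so $T$ is itself one of the sets unioned to form $\th(\sem(T))$, whence $T\subseteq\th(\sem(T))$ --- note that $\th\circ\sem$ coincides here with the canonical consequence operator $\Cnsem$.

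The only genuinely delicate step is the lattice-theoretic part of item~1: the right move is \emph{not} to attack $\sem(Z)=\bigcap_{T\in\T}\sem(T)$ head-on, but to characterize $\sem(Z)$ as the greatest lower bound and then invoke that intersections realize meets in $2^\I$, and the loop only closes by using the adjunction of Proposition~\ref{thm:galois:alternative} in \emph{both} directions. Item~2 is essentially bookkeeping, the one thing to watch being that the intersection property pins down $\sem(\emptyset)=\I$, which keeps the relevant index sets nonempty and rules out any degenerate empty-union / empty-intersection behaviour.
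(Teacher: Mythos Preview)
Your proposal is correct and follows essentially the same route as the paper: item~1 is the greatest-lower-bound argument via the adjunction form (Proposition~\ref{thm:galois:alternative}) applied in both directions, and item~2 verifies Definition~\ref{def:galois} directly using Proposition~\ref{thm:intersection-antimonotone} and the family form of the intersection property. Your additional remarks (that $\sem(\emptyset)=\I$ and that $\th\circ\sem=\Cnsem$) are correct side observations not present in the paper's proof, though the nonemptiness of the index set is not strictly needed since the empty intersection in $2^\I$ is $\I$ by convention.
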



This is the main motivation for our definition saying that \define{classical} logics are exactly those that have the intersection property, or, equivalently, that have model and theory functions that are in Galois correspondence.
Furthermore, as partly mentioned before, many well-studied logics (that we would call classical due to their ubiquity alone) \emph{have} the intersection property simply by definition, as their fundamental building blocks are \emph{formulas} instead of theories.

\section{Characterization Logics}
\label{sec:characterization:general}

From now on we omit $\I$ from the presentation of logics and thus write $(\L, \sem)$, since concrete interpretations are immaterial for strong equivalence.
Furthermore, we will distinguish between two important cases regarding the domain of $\sem$. The first one is \mbox{$\dom(\sem) = 2^\L$} (called \define{full} logics) and the second one is \mbox{$\dom(\sem) = \Lfin = \{T\in 2^\L\mid T \text{ is finite}\}$} (\define{finite-theory} logics), the restriction of $\L$ to finite knowledge bases.

\begin{definition}
  Let $(\L, \sem)$ be a logic.
  The binary relation \define{strong equivalence} \mbox{${\equiv^\sem_s} \subseteq \dom(\sem)\times \dom(\sem)$} is defined by \mbox{$T_1 \equiv^\sem_s T_2\iff\forall U\in\dom(\sem):\sem(T_1\cup U)=\sem(T_2\cup U)$}.
\end{definition}
It is straightforward to show that $\equiv^\sem_s$ is an equivalence relation;
we denote the equivalence class of a theory \mbox{$T\in\dom(\sem)\subseteq 2^\L$} by $[T]^\sem_s$.
We recall and will pervasively use that for all theories \mbox{$T_1,T_2\subseteq\L$}, we have \mbox{$T_1\in [T_2]^\sem_s$ if and only if $[T_1]^\sem_s=[T_2]^\sem_s$}.

Given an arbitrary logic $(\L,\sem)$, we want to find a characterizing classical logic, that is, a semantics~$\semc$ that has the intersection property and whose ordinary equivalence coincides with strong $\sem$-equivalence.
Such logics get a name.%
\begin{definition}
  \label{def:characterization-logic}
  Let $(\L, \sem)$ be a (full) logic.
  The logic $(\L, \sem')$ is a (full) \define{characterization logic for $(\L, \sem)$} if and only if\/:
  \begin{align*}
    &\forall T_1,T_2\subseteq\L : \sem'(T_1)=\sem'(T_2) \iff [T_1]^\sem_s=[T_2]^\sem_s \tag{\text{characterization}} \\
    &\forall \T\subseteq 2^\L: \sem'\!\left(\bigcup_{T\in \T}T\right) = \bigcap_{T\in\T}\sem'(T) \tag{\text{intersection}}
  \end{align*}
\end{definition}

Since characterization logics are the centerpiece of our study we would like to take a look to this central definition from an other angle, namely in terms of consequence functions. Note that the analysis of consequence relations have been very prominent in the formative years of nonmonotonic reasoning. We refer the interested reader to \cite{Gabbay85,Kraus90,Makinson94}. Let us consider the canonical consequence functions of $\sigma$ and $\sigma'$ according to Definition~\ref{def:consequence}. Doing so reveals that the characterizing and characterized consequence relations fit
together appropriately as stated in the following assertion. It is part of future work to study further properties in terms of consequence relations.

\begin{proposition}
 Given a logic $(\L, \sem)$ and a characterization logic $(\L, \sem')$ of it. Let $\Cnsem$ and $\Cnsemp$ be the canonical consequence functions of the corresponding logics. Given $T\subseteq\L$.
\begin{enumerate}
	\item $\Cnsemp\big(T\big) \subseteq \Cnsem\big(T\big)$ \hfill{(sublogic)}
	\item $\Cnsem\big(T\big) = \Cnsemp\big(\Cnsem\big(T\big)\big)$ \hfill{(left absorption)}
	\item $\Cnsem\big(T\big) = \Cnsem\big(\Cnsemp\big(T\big)\big)$ \hfill{(right absorption)}
\end{enumerate}
\end{proposition}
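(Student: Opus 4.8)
The plan is to isolate one \emph{translation lemma} converting statements about the characterizing semantics $\semc$ into statements about strong $\sem$-equivalence, and then to read off the three items from it using the closure-operator facts already established for logics with the intersection property. Concretely, since $\semc$ has the intersection property, \cref{thm:intersection:general} gives $\semc(T\cup S)=\semc(T)\cap\semc(S)$ for all $S,T\subseteq\L$, so $\semc(T)\subseteq\semc(S)$ iff $\semc(T\cup S)=\semc(T)$; and by the characterization clause of \cref{def:characterization-logic} this last equation is equivalent to $[T\cup S]^\sem_s=[T]^\sem_s$. Hence, for a characterization logic,
\[
  \semc(T)\subseteq\semc(S)\ \iff\ T\cup S\equiv^\sem_s T .
\]
I will also use that $\Cnsemp$ is a closure operator, in particular increasing (\cref{thm:intersection-closure}); that $\semc(\Cnsemp(T))=\semc(T)$ for every $T$ (\cref{thm:intersection-theory-models} applied to $\semc$); and, directly from \cref{def:consequence}, that $\Cnsem(A)$ depends on $A$ only through $\sem(A)$, so that $\sem(A)=\sem(B)$ implies $\Cnsem(A)=\Cnsem(B)$.

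For Item~1 (sublogic), take $f\in\Cnsemp(T)$, so there is some $S\subseteq\L$ with $f\in S$ and $\semc(T)\subseteq\semc(S)$. By the translation lemma $T\cup S\equiv^\sem_s T$, hence in particular $\sem(T\cup S)=\sem(T)$, so $\sem(T)\subseteq\sem(T\cup S)$; since $f\in T\cup S$, the set $T\cup S$ witnesses $f\in\Cnsem(T)$. For Item~3 (right absorption), from $\semc(\Cnsemp(T))=\semc(T)$ and the characterization clause we get $\Cnsemp(T)\equiv^\sem_s T$, hence $\sem(\Cnsemp(T))=\sem(T)$ (instantiate the extension $U$ with $\emptyset$ in the definition of strong equivalence), and since $\Cnsem$ only sees the $\sem$-image of its argument, $\Cnsem(\Cnsemp(T))=\Cnsem(T)$.

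For Item~2 (left absorption), the inclusion $\Cnsem(T)\subseteq\Cnsemp(\Cnsem(T))$ holds because $\Cnsemp$ is increasing. For the reverse inclusion, applying Item~1 with $\Cnsem(T)$ in place of $T$ gives $\Cnsemp(\Cnsem(T))\subseteq\Cnsem(\Cnsem(T))$, so it suffices to prove $\Cnsem(\Cnsem(T))=\Cnsem(T)$, i.e.\ that $\Cnsem$ is idempotent; using once more that $\Cnsem$ depends only on the $\sem$-image, this reduces to the single equation $\sem(\Cnsem(T))=\sem(T)$. I expect this reduction to be the main obstacle of the proposition: for the characterizing logic the analogous identity is precisely \cref{thm:intersection-theory-models}, but $\sem$ itself need not satisfy the intersection property, so one cannot invoke that result for $\sem$ and must argue directly from the definition of $\Cnsem$ and from how $\sem$, strong equivalence and $\semc$ interlock --- showing, roughly, that every theory $S$ contributing a member to $\Cnsem(T)$ can be absorbed into $T$ without leaving its strong-$\sem$-equivalence class, so that passing from $T$ to $\Cnsem(T)$ does not shrink the model set. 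The remaining parts of the argument are routine unfolding of definitions.
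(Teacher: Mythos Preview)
Your arguments for Items~1 and~3 are correct and essentially the paper's: Item~1 is routed through $\semc(T)=\semc(T\cup U)\Rightarrow[T]^\sem_s=[T\cup U]^\sem_s\Rightarrow\sem(T)=\sem(T\cup U)$, and Item~3 through $\semc(T)=\semc(\Cnsemp(T))$ together with the characterization clause. Your packaging via a single translation lemma is a clean way to organize this.

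For Item~2 there is a genuine gap, and it cannot be filled. You correctly isolate idempotency of $\Cnsem$ as the crux and correctly note that $\sem$ need not have the intersection property; but the equation $\sem(\Cnsem(T))=\sem(T)$ you then set out to prove is false in general. In the logic of \cref{exm:antimonotone-idempotent} one has $\Cnsem(\emptyset)=\{a,b\}$ while $\sem(\{a,b\})=\emptyset\neq\{1\}=\sem(\emptyset)$. Worse, Item~2 itself fails there: the strong-equivalence classes are $\{\emptyset\}$, $\{\{a\}\}$, $\{\{b\}\}$, and the remaining five theories form one class, and for the canonical characterization logic of \cref{thm:main:covered} one computes $\Cnsemp(\{a,b\})=\{a,b,c\}$, so $\Cnsemp(\Cnsem(\emptyset))=\{a,b,c\}\neq\{a,b\}=\Cnsem(\emptyset)$. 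The paper's proof does not escape this either: its opening sentence quietly assumes that $\sem$ (not only $\semc$) has the intersection property, so that $\Cnsem$ is a closure operator via \cref{thm:intersection-closure}, after which $\Cnsemp(\Cnsem(T))\subseteq\Cnsem(\Cnsem(T))=\Cnsem(T)$ follows from Item~1 and idempotency. Under that additional hypothesis your outline and the paper's coincide; without it, no argument can succeed because the claim is false.
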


\begin{proof} 
In the following proofs we will often use that $(\L, \sem)$ and $(\L, \sem')$ are characterization logics implying that $\sem$ and $\sem'$ possess the intersection property (Definition~\ref{def:characterization-logic}) which in turn means that $\Cnsem$ and $\Cnsemp$ are closure operators (Proposition~\ref{thm:intersection-closure}).
\begin{enumerate}
\item In order to show $\Cnsemp\big(T\big) \subseteq \Cnsem\big(T\big)$ we first prove the subsequent property (*). For any $U\subseteq\L$: If $U\subseteq \Cnsemp\big(T\big)$, then $\Cnsemp\big(T\big) = \Cnsemp(T\cup U)$. Since $T\subseteq T\cup U$ we derive $\Cnsemp\big(T\big) \subseteq \Cnsemp(T\cup U)$ (monotonicity). Moreover, we have $T \subseteq \Cnsemp\big(T\big)$ (increasing) and $U \subseteq \Cnsemp\big(T\big)$ (assumption) justifying $T\cup U \subseteq \Cnsemp\big(T\big)$. Hence, $\Cnsemp(T\cup U) \subseteq \Cnsemp\big(\Cnsemp\big(T\big)\big)$ (monotonicity) and finally, $\Cnsemp(T\cup U) \subseteq \Cnsemp\big(T\big)$ (idempotency) concluding the proof for property (*).\\
   Now, let $U\subseteq\Cnsemp\big(T\big)$. We have to show $U\subseteq\Cnsem\big(T\big)$. Due to Proposition~\ref{thm:intersection-theory-models} and property (*) we obtain $\sem'(T) = \sem'\big(\Cnsemp\big(T\big)\big) = \sem'\big(\Cnsemp\big(T\cup U\big)\big) = \sem'(T\cup U)$. Since $(\L, \sem')$ is assumed to be a characterization logic we derive $[T]^\sem_s=[T\cup U]^\sem_s$. Consequently, $\sem(T) = \sem(T\cup U)$ implying $\Cnsem(T) = \Cnsem(T\cup U)$ in consideration of Definition~\ref{def:consequence}. Using the previous equality justifies $U\subseteq T\cup U \subseteq \Cnsem(T)$.  
\item On the one hand, we have $\Cnsem\big(T\big) \subseteq \Cnsemp\big(\Cnsem\big(T\big)\big)$ since $\Cnsemp$ is increasing. On the other hand, $\Cnsemp\big(\Cnsem\big(T\big)\big)\subseteq\Cnsem\big(\Cnsem\big(T\big)\big)$ due to item~1 (sublogic) and finally, $\Cnsemp\big(\Cnsem\big(T\big)\big)\subseteq\Cnsem\big(T\big)$ since $\Cnsem$ is increasing too. Thus, $\Cnsem\big(T\big) = \Cnsemp\big(\Cnsem\big(T\big)\big)$ as claimed. 
	\item Due to Proposition~\ref{thm:intersection-theory-models} we have $\sem'(T) = \sem'\big(\Cnsemp\big(T\big)\big)$. Since $(\L, \sem')$ is a characterization logic we further conclude $[T]^\sem_s = \big[\Cnsemp\big(T\big)\big]^\sem_s$. Hence, $\sem(T) = \sem\big(\Cnsemp\big(T\big)\big)$ which guarantees $\Cnsem\big(T\big) = \Cnsem\big(\Cnsemp\big(T\big)\big)$ by Definition~\ref{def:consequence}. 
\end{enumerate}
\end{proof}

We now start our analysis of characterization logics in terms of model functions. We first show that characterization logics are unique up to isomorphism.
More precisely, for any model function $\sem$, the algebras corresponding to the model theories of any two characterizing model functions $\sem'$ and $\sem''$ are isomorphic.
To do that, we first show that the model theory of any characterization logic\footnote{The proof of Theorem~\ref{thm:characterization-lattice} reveals that $(\L, \sem')$ does not necessarily has to be a characterization logic of $(\L, \sem)$. Indeed, the stated properties regarding the model theory hold for any logic possessing the intersection property.} is a complete lattice, that is, a partially ordered set where each subset of the carrier set has both a greatest lower bound (glb) and a least upper bound (lub).
\begin{theorem}
  \label{thm:characterization-lattice}
  Let $(\L, \sem)$ be a full logic with characterization logic $(\L, \sem')$.
  The pair $\left(\sem'\!\left(2^\L\right), \subseteq\right)$ is a complete lattice where 
  glb $\bigwedge$ and lub $\bigvee$ are given such that
  for all \mbox{$\K\subseteq\sem'\!\left(2^\L\right)$}, 
  \begin{gather*}
    \hfill
    \bigwedge_{K\in \K}K=\bigcap_{K\in\K}K
    \hfill\text{ and }\hfill
    \bigvee_{K\in\K}K=\bigwedge_{L\in\K^u}L
    \hfill
  \end{gather*}
  where \mbox{$\K^u = \set{ L\in\sem'\!\left(2^\L\right) \guard \forall K\in\K:K\subseteq L }$} is the set of upper bounds of $\K$.
  \begin{proof}
    Let \mbox{$\K\subseteq\sem'\!\left(2^\L\right)$};
    we first show \mbox{$\bigcap_{K\in\K}K\in\sem'\!\left(2^\L\right)$}.
    Clearly for each \mbox{$K\in\K\subseteq\sem'\!\left(2^\L\right)$} there exists a \mbox{$T\subseteq\L$} with \mbox{$\sem'(T)=K$}.
    Thus by the axiom of choice there is a \mbox{$\T\subseteq 2^\L$} that contains a \mbox{$T\in\T$} with \mbox{$\sem'(T)=K$} for each \mbox{$K\in\K$}.
    Since \mbox{$\bigcup_{T\in\T}T\in 2^\L$} and $\sem'$ has the intersection property, 
    \mbox{$\bigcap_{K\in\K}K=\bigcap_{T\in\T}\sem'(T)=\sem'\!\left(\bigcup_{T\in\T}T\right)\in\sem'\!\left(2^\L\right)$}.

    Now consider \mbox{$\bigvee_{K\in\K}K$}.
    We show that \mbox{$\bigwedge_{L\in\K^u}L$} is the least element of $\K^u$, the set of all upper bounds of $\K$.
    Clearly, \mbox{$L\in\K^u$} implies that \mbox{$\forall K\in\K:K\subseteq L$}.
    Thus, \mbox{$\forall K\in\K:$} \mbox{$K\subseteq\bigcap_{L\in\K^u}L$} whence \mbox{$\bigcap_{L\in\K^u}L\in\K^u$}.
    In particular, if \mbox{$M\in\K^u$} then \mbox{$\bigcap_{L\in\K^u}L\subseteq M$} and \mbox{$\bigcap_{L\in\K^u}L$} is the least upper bound of~$\K$.
  \end{proof}
\end{theorem}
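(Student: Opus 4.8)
The plan is to verify directly that $(\sem'(2^\L),\subseteq)$ satisfies the defining condition of a complete lattice: every subset has both a greatest lower bound and a least upper bound, and these are given by the stated formulas. Throughout I will use only that $\sem'$ has the intersection property (which holds because $(\L,\sem')$ is a characterization logic, by Definition~\ref{def:characterization-logic}), and in particular its ``families'' reformulation from Proposition~\ref{thm:intersection:general}. As the footnote to the theorem anticipates, the characterization condition itself is never needed, so the argument applies to any logic whose model function has the intersection property.

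The crucial step is to show that $\sem'(2^\L)$ is closed under arbitrary intersections, i.e.\ that $\bigcap_{K\in\K}K\in\sem'(2^\L)$ for every $\K\subseteq\sem'(2^\L)$. Given such a $\K$, for each $K\in\K$ there is by definition some theory $T_K\subseteq\L$ with $\sem'(T_K)=K$; invoking the axiom of choice I collect these into a family $\T=\set{T_K\guard K\in\K}\subseteq 2^\L$. Since $\dom(\sem')=2^\L$, the union $\bigcup_{T\in\T}T$ is again a legitimate theory, and the generalized intersection property yields $\sem'\!\left(\bigcup_{T\in\T}T\right)=\bigcap_{T\in\T}\sem'(T)=\bigcap_{K\in\K}K$, so the intersection is a model set. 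The boundary case $\K=\emptyset$ is harmless: $\bigcap_{K\in\emptyset}K$ is the whole interpretation set, which coincides with $\sem'(\emptyset)$, and $\sem'(\emptyset)$ is the greatest element of the poset since antimonotonicity of $\sem'$ (Proposition~\ref{thm:intersection-antimonotone}) gives $\sem'(T)\subseteq\sem'(\emptyset)$ for every $T$.

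With closure under intersections in hand, the glb claim is immediate: $\bigcap_{K\in\K}K$ lies in $\sem'(2^\L)$, it is a lower bound of $\K$ since it is contained in each $K\in\K$, and any lower bound $Q\in\sem'(2^\L)$ satisfies $Q\subseteq K$ for all $K\in\K$, hence $Q\subseteq\bigcap_{K\in\K}K$. For the lub I would use the standard fact that a poset closed under arbitrary meets automatically has arbitrary joins. Concretely, set $\K^u=\set{L\in\sem'(2^\L)\guard\forall K\in\K:K\subseteq L}$; this is a subset of $\sem'(2^\L)$ and is nonempty (it contains the top element $\sem'(\emptyset)$), so by the closure step $\bigcap_{L\in\K^u}L\in\sem'(2^\L)$. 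This element is an upper bound of $\K$, because for each $K\in\K$ we have $K\subseteq L$ for every $L\in\K^u$ and hence $K\subseteq\bigcap_{L\in\K^u}L$; thus it lies in $\K^u$, and it is contained in every member of $\K^u$, so it is the least upper bound. Since $\K$ was arbitrary, $(\sem'(2^\L),\subseteq)$ is a complete lattice with meet $\bigcap_{K\in\K}K$ and join $\bigwedge_{L\in\K^u}L$ as stated.

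I expect the only genuinely delicate point to be the closure-under-intersections step, specifically the use of the axiom of choice to lift a set of model sets back to a set of theories so that the intersection property can be applied; everything after that is a routine rehearsal of how arbitrary meets force arbitrary joins in a poset. It is also worth double-checking the empty-family case and the nonemptiness of $\K^u$, both of which hinge on $\sem'(\emptyset)$ being the top element, i.e.\ on the antimonotonicity of $\sem'$ established earlier.
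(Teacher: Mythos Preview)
Your proposal is correct and follows essentially the same route as the paper: use the axiom of choice together with the intersection property (in its family form) to show that $\sem'(2^\L)$ is closed under arbitrary intersections, then obtain joins as meets of upper bounds. You are slightly more explicit than the paper about the edge case $\K=\emptyset$ and about why $\K^u$ is nonempty, but the substance is identical.
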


It is vital that the least upper bound is defined in terms of the greatest lower bound, as ordinary set union will not work.
\begin{example}
  \label{exm:se-cleq-lattice-not-distributive}
  Consider \mbox{$\L=\set{a,b,c}$} and semantics $\sem$ with
  \begin{align*}
    [\emptyset]^\sem_s&=\set{\emptyset}, \\
    [\set{a}]^\sem_s&=\set{\set{a}},\\
    [\set{b}]^\sem_s&=\set{\set{b}},\\
    [\set{c}]^\sem_s&=\set{\set{c}},\\
    [\set{a,b}]^\sem_s&=[\set{a,c}]^\sem_s=[\set{b,c}]^\sem_s=[\set{a,b,c}]^\sem_s =\set{\set{a,b},\set{a,c},\set{b,c},\set{a,b,c}}
  \end{align*}
  Assume that $(\L, \sem')$ is a characterization logic for $(\L, \sem)$.
  We claim that
  $$\sem'(\set{b}) \cup \sem'(\set{c})\notin\sem'\!\left(2^\L\right)$$
  whence $\sem'\!\left(2^\L\right)$ is not closed under set union and thus cannot be the carrier set of any sublattice of $(2^{\I'},\subseteq)$.
  Assume to the contrary that there is a \mbox{$T\subseteq\L$} with \mbox{$\sem'(T)=\sem'(\set{b})\cup\sem'(\set{c})$}.
  Then
  \begin{align*}
    \sem'(\set{a}\cup T)
    &= \sem'(\set{a}) \cap \sem'(T) \\
    &= \sem'(\set{a}) \cap ( \sem'(\set{b}) \cup \sem'(\set{c}) ) \\
    &= ( \sem'(\set{a}) \cap \sem'(\set{b}) ) \cup ( \sem'(\set{a}) \cap \sem'(\set{c}) ) \\
    &= \sem'(\set{a,b}) \cup \sem'(\set{a,c}) \\
    &= \sem'(\set{a,b}) = \sem'(\set{a,c}) = \sem'(\set{b,c}) = \sem'(\set{a,b,c})
  \end{align*}
  Thus \mbox{$T=\set{b}$} or \mbox{$T=\set{c}$} or \mbox{$T\in [\set{a,b,c}]^\sem_s$}.
  Since $\sem'$ has the intersection property, it is in particular antimonotone~(Proposition~\ref{thm:intersection-antimonotone}).
  Hence for \mbox{$T\in [\set{a,b,c}]^\sem_s$}, we get \mbox{$\sem'(T)\subseteq\sem'(\set{b})$} and since \mbox{$[\set{b}]^\sem_s\neq [\set{a,b,c}]^\sem_s$} even \mbox{$\sem'(T)\subsetneq\sem'(\set{b})$}.
  Substituting $c$ for $b$ in the argument above, we obtain \linebreak \mbox{$\sem'(T)\subsetneq\sem'(\set{c})$} in the same fashion.
  In combination \mbox{$\sem'(T)\subsetneq\sem'(\set{b})\cup\sem'(\set{c})$}, contradiction.
  Thus \mbox{$T=\set{b}$} or \mbox{$T=\set{c}$}.
  Clearly \mbox{$[\set{b}]^\sem_s\neq [\set{b,c}]^\sem_s$} and \mbox{$[\set{c}]^\sem_s\neq [\set{b,c}]^\sem_s$} imply that 
  \begin{align*}
    \sem'(\set{b})\cap\sem'(\set{c})=\sem'(\set{b,c})&\subsetneq\sem'(\set{b}) \\
    \sem'(\set{b})\cap\sem'(\set{c})=\sem'(\set{b,c})&\subsetneq\sem'(\set{c})
  \end{align*}
  Thus \mbox{$\sem'(\set{b})\setminus\sem'(\set{c})\neq\emptyset$} and \mbox{$\sem'(\set{c})\setminus\sem'(\set{b})\neq\emptyset$}.
  Therefore
  $$\sem'(\set{b})\subsetneq\sem'(\set{b})\cup\sem'(\set{c})=\sem'(T)$$
  as well as
  $$\sem'(\set{c})\subsetneq\sem'(\set{b})\cup\sem'(\set{c})=\sem'(T).$$
  Contradiction.
  Thus \mbox{$T$} does not exist and \mbox{$\sem'(\set{b}) \cup \sem'(\set{c})\notin\sem'\!\left(2^\L\right)$}.
\end{example}
This example shows in particular that the resulting complete lattice induced by the characterization logic is not necessarily distributive, not even in the case of finite logics.

After these necessary preliminaries, we now present the result on uniqueness of characterization logics.
\begin{theorem}
  \label{thm:unique-characterization}
  Let $(\L, \sem)$ be a full logic with characterization logics $(\L, \sem')$ and $(\L, \sem'')$.
  Then the complete lattices $\left(\sem'\!\left(2^\L\right), \subseteq\right)$ and $\left(\sem''\!\left(2^\L\right), \subseteq\right)$ are isomorphic.
  \begin{proof}
    We provide a bijection \mbox{$\phi:\sem'\!\left(2^\L\right)\to\sem''\!\left(2^\L\right)$} with
    \begin{gather*}
      \hfill
      \phi\!\left(\bigwedge_{K\in\K}K\right) = \bigwedge_{K\in\K}\phi(K)
      \hfill\text{ and }\hfill
      \phi\!\left(\bigvee_{K\in\K}K\right) = \bigvee_{K\in\K}\phi(K)
      \hfill
    \end{gather*}
    Given any \mbox{$K\in\sem'\!\left(2^\L\right)$}, there clearly exists a \mbox{$T\subseteq\L$} with \mbox{$\sem'(T)=K$}; now define $\phi(K)$ such that \mbox{$\phi(K)=\sem''(T)$}. This means, $\phi(\sem'(T)) = \phi(\sem''(T))$.
    \begin{itemize}
    \item $\phi$ is injective:
      Let \mbox{$K_1,K_2\in\sem'\!\left(2^\L\right)$} with \mbox{$\phi(K_1)=\phi(K_2)$}.
      Clearly there exist \mbox{$T_1,T_2\subseteq\L$} such that \mbox{$\sem'(T_1)=K_1$} and \mbox{$\sem'(T_2)=K_2$}.
      Thus \mbox{$\phi(K_1)=\phi(K_2)$} implies \mbox{$\sem''(T_1)=\phi(K_1)=\phi(K_2)=\sem''(T_2)$}.
      Since $\sem''$ has the characterization property, we get \mbox{$[T_1]^\sem_s=[T_2]^\sem_s$}.
      Since $\sem'$ has the characterization property, we get \mbox{$K_1=\sem'(T_1)=\sem'(T_2)=K_2$}.
    \item $\phi$ is surjective:
      Let \mbox{$M\in\sem''\!\left(2^\L\right)$}.
      Then there exists a \mbox{$T\subseteq\L$} with \mbox{$\sem''(T)=M$}.
      It follows by definition that \mbox{$\phi(\sem'(T))=\sem''(T)=M$}.
    \item $\phi$ is structure-preserving: 
      Let \mbox{$\K\subseteq\sem'\!\left(2^\L\right)$} and define \mbox{$\T\subseteq 2^\L$} such that for each \mbox{$K\in\K$}, the family $\T$ contains an element of the preimage of $K$ with respect to $\sem'$.
      ($\T$ need not be unique, but exists by the axiom of choice.)
      \begin{align*}
         \pheq \phi\!\left(\bigwedge_{K\in\K}K\right) 
        &= \phi\!\left(\bigcap_{K\in\K}K\right) 
        = \phi\!\left(\bigcap_{T\in\T}\sem'(T)\right) 
        = \phi\!\left(\sem'\!\left(\bigcup_{T\in\T}T\right)\right)\\ 
        &= \sem''\!\left(\bigcup_{T\in\T}T\right)
        = \bigcap_{T\in\T}\sem''(T) 
        = \bigcap_{K\in\K}\phi(K) 
        = \bigwedge_{K\in\K}\phi(K)
      \end{align*}
      Since $\bigvee$ can be defined in terms of $\bigwedge$, it follows that
      \begin{gather*}
        \hspace*{-1em}\phi\!\left(\bigvee_{K\in\K}K\right) 
        = \phi\!\left(\bigwedge_{K\in\K^u}K\right)
        = \bigwedge_{K\in\K^u}\phi(K)
        = \bigvee_{K\in\K}\phi(K)
      \end{gather*}
    \end{itemize}
    Thus $\left(\sem'\!\left(2^\L\right), \subseteq\right)$ and $\left(\sem''\!\left(2^\L\right), \subseteq\right)$ are isomorphic.
  \end{proof}
\end{theorem}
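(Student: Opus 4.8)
The plan is to construct an explicit order isomorphism \mbox{$\phi \colon \sem'\!\left(2^\L\right) \to \sem''\!\left(2^\L\right)$} sending $\sem'(T)$ to $\sem''(T)$ for every \mbox{$T\subseteq\L$}. The key preliminary observation is that the two characterization logics share the same ordinary equivalence, namely strong $\sem$-equivalence: by the characterization clause of Definition~\ref{def:characterization-logic}, \mbox{$\sem'(T_1)=\sem'(T_2)\iff [T_1]^\sem_s=[T_2]^\sem_s\iff\sem''(T_1)=\sem''(T_2)$}. From this chain, well-definedness of $\phi$ is immediate (if \mbox{$\sem'(T_1)=\sem'(T_2)$} then \mbox{$[T_1]^\sem_s=[T_2]^\sem_s$}, hence \mbox{$\sem''(T_1)=\sem''(T_2)$}), and reading the same equivalences backwards gives injectivity. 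Surjectivity is trivial, since any element of \mbox{$\sem''\!\left(2^\L\right)$} has the form $\sem''(T)$ and \mbox{$\phi(\sem'(T))=\sem''(T)$}.

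It then remains to verify that $\phi$ is structure-preserving with respect to the lattice operations of the two complete lattices supplied by Theorem~\ref{thm:characterization-lattice}. Since in both lattices $\bigvee$ is definable from $\bigwedge$ (the least upper bound of $\K$ is $\bigwedge$ of the set $\K^u$ of its upper bounds), it suffices to show that $\phi$ preserves arbitrary meets; preservation of joins then follows by applying the meet result to $\K^u$, using that a meet-preserving bijection maps upper bounds to upper bounds. For the meet case, given \mbox{$\K\subseteq\sem'\!\left(2^\L\right)$}, I would use the axiom of choice to pick for each \mbox{$K\in\K$} a theory $T_K$ with \mbox{$\sem'(T_K)=K$} and set \mbox{$\T=\set{T_K\mid K\in\K}$}; then, since $\bigwedge$ is $\bigcap$ in both lattices (Theorem~\ref{thm:characterization-lattice}) and both $\sem'$ and $\sem''$ have the intersection property (being characterization logics), one obtains
\[
\phi\!\left(\bigwedge_{K\in\K}K\right) = \phi\!\left(\sem'\!\left(\bigcup_{T\in\T}T\right)\right) = \sem''\!\left(\bigcup_{T\in\T}T\right) = \bigcap_{T\in\T}\sem''(T) = \bigwedge_{K\in\K}\phi(K).
\]

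The main obstacle is not conceptual but a matter of care in the bookkeeping: the definition of $\phi$ must be checked to be independent of the chosen representative $T$ (which is precisely the shared-kernel observation above), and in the join step one must argue carefully that $\phi$ restricted to $\K^u$ really lands on — and is a bijection onto — the set of upper bounds of the image of $\K$, so that the least upper bound is mapped to the least upper bound. Beyond the intersection property and the complete-lattice description from Theorem~\ref{thm:characterization-lattice}, no further ingredient is required.
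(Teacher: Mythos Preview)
Your proposal is correct and follows essentially the same approach as the paper: define $\phi(\sem'(T))=\sem''(T)$, use the characterization property for well-definedness and injectivity, observe surjectivity, and establish structure-preservation by proving meet-preservation via the intersection property and then deriving join-preservation from the description of $\bigvee$ in terms of $\bigwedge$ from Theorem~\ref{thm:characterization-lattice}. If anything, your treatment is slightly more careful, since you explicitly flag the well-definedness check and the need to verify that $\phi$ carries $\K^u$ onto the upper bounds of $\phi(\K)$---points the paper's proof passes over tacitly.
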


Thus if a classical characterization logic exists, it is (up to isomorphism on its model theory) uniquely determined.
However, as we show next, in some cases there simply is no characterization logic.
\begin{example}
  \label{exm:uncharacterizable}
  Let \mbox{$\L=\N$} be the natural numbers and \mbox{$\I\neq\emptyset$} arbitrary.
  We define the semantics \linebreak \mbox{$\sem:2^\L\to 2^\I$} such that
  \begin{gather*}
    \sem(T) =
    \begin{cases}
      \emptyset & \text{if } T \text{ is finite,} \\
      \I & \text{otherwise.}
    \end{cases}
  \end{gather*}
  There are two strong equivalence classes:
  $[\emptyset]^\sem_s$, the set of all finite subsets of $\N$, and $[\N]^\sem_s$, the set of all infinite subsets of $\N$.
  Assume that $(\L,\sem')$ is a characterization logic for $(\L,\sem)$.
  By the model intersection property, we get
  \begin{gather*}
    \sem'(\N) 
    = \sem'\!\left(\bigcup_{n\in\N}\set{n}\right) 
    = \bigcap_{n\in\N}\sem'(\set{n})
    = \sem'(\emptyset)
  \end{gather*}
  in contradiction to the characterization property.
\end{example}

The example above motivates the study of special features of logics warranting the existence of characterization logics.
We now proceed with some useful properties that will be needed in this endeavour later on.
Most importantly, we show that strong equivalence classes have an expansion property:
It is not completely obvious, but follows easily from the definition of strong equivalence that two strongly equivalent theories can both be expanded (via set union) with the same theory and are again strongly equivalent;
the converse holds as well.
Furthermore, the union of two strongly equivalent theories is again strongly equivalent to the two theories.
Finally, for any two strongly equivalent theories that are in subset relation, every theory in between them is also strongly equivalent to them.
\begin{lemma}
  \label{thm:se-expansion}
  Let $(\L, \sem)$ be a full logic and \mbox{$T, T_1, T_2, T_3\subseteq\L$}.
  \begin{enumerate}
  \item Strong equivalence is invariant to expansion\/:
    $$[T_1]^\sem_s = [T_2]^\sem_s \iff \Bigl( \forall U\subseteq\L: [T_1\cup U]^\sem_s = [T_2\cup U]^\sem_s \Bigr)$$
  \item\label{itm:semilattice} Each strong equivalence class is a join-semilattice\/:
    $$T_1,T_2\in [T]^\sem_s \implies T_1\cup T_2\in [T]^\sem_s$$
  \item\label{itm:convex} Each strong equivalence class is convex\/:
    $$T_1\in [T]^\sem_s \land T_3\in [T]^\sem_s \land T_1\subseteq T_2\subseteq T_3 \implies T_2\in [T]^\sem_s$$
  \end{enumerate}
  \begin{proof}
    \begin{enumerate}
    \item 
      \begin{align*}
        &\phiff [T_1]^\sem_s = [T_2]^\sem_s \\
        &\iff \forall U\subseteq\L: \sem(T_1\cup U)=\sem(T_2\cup U) & \text{(Def.~$\equiv^\sem_s$)} \\
        &\iff \forall U'\subseteq\L: \forall U''\subseteq\L:\sem(T_1\cup (U'\cup U''))=\sem(T_2\cup (U'\cup U'')) & \text{(write $U=U'\cup U''$)} \\
        &\iff \forall U'\subseteq\L: \forall U''\subseteq\L:\sem((T_1\cup U')\cup U'')=\sem((T_2\cup U')\cup U'') & \text{(associativity $\cup$)} \\
        &\iff \forall U'\subseteq\L: [T_1\cup U']^\sem_s=[T_2\cup U']^\sem_s & \text{(Def.~$\equiv^\sem_s$)} \\
        &\iff \forall U\subseteq\L: [T_1\cup U]^\sem_s=[T_2\cup U]^\sem_s & \text{(rename $U'=U$)}
      \end{align*}
    \item 
      \begin{align*}
        &\phiff T_1,T_2\in [T]^\sem_s \\
        &\iff [T_1]^\sem_s = [T_2]^\sem_s = [T]^\sem_s \\
        &\implies [T_1\cup T_2]^\sem_s = [T_2 \cup T_2]^\sem_s = [T_2]^\sem_s = [T]^\sem_s \\
        &\implies T_1\cup T_2 \in [T]^\sem_s
      \end{align*}
    \item 
      \begin{align*}
        &\phiff T_1,T_3\in [T]^\sem_s \land T_1\subseteq T_2\subseteq T_3 \\
        &\implies \Bigl( T_1\cup T_2 \in [T]^\sem_s \iff T_3\cup T_2\in [T]^\sem_s \Bigr) \land T_1\subseteq T_2\subseteq T_3 \\
        &\implies \Bigl( T_2 \in [T]^\sem_s \iff T_3 \in [T]^\sem_s \Bigr) \\
        &\implies T_2 \in [T]^\sem_s
      \end{align*}
    \end{enumerate}
  \end{proof}
\end{lemma}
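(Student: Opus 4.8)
The plan is to establish item~1 directly from the definition of $\equiv^\sem_s$ and then obtain items~2 and~3 as quick consequences of it. For item~1, I would unfold both sides. By definition the left-hand side reads $\forall U\subseteq\L:\sem(T_1\cup U)=\sem(T_2\cup U)$. Unfolding the inner strong equivalence on the right-hand side gives $\forall U'\subseteq\L\;\forall U''\subseteq\L:\sem((T_1\cup U')\cup U'')=\sem((T_2\cup U')\cup U'')$, which by associativity of $\cup$ equals $\forall U'\subseteq\L\;\forall U''\subseteq\L:\sem(T_1\cup(U'\cup U''))=\sem(T_2\cup(U'\cup U''))$. The right-to-left direction then follows by instantiating $U''=\emptyset$, and the left-to-right direction follows because every $U\subseteq\L$ can be written as $U=U'\cup U''$, so the single universal quantifier over $U$ already subsumes all the double instances. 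Hence item~1 is pure quantifier rearrangement together with associativity of union.

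For item~2, assuming $T_1,T_2\in[T]^\sem_s$ means $[T_1]^\sem_s=[T_2]^\sem_s=[T]^\sem_s$ (using that $\equiv^\sem_s$ is an equivalence relation). Applying item~1 to the pair $T_1,T_2$ with the expansion $U:=T_2$ yields $[T_1\cup T_2]^\sem_s=[T_2\cup T_2]^\sem_s=[T_2]^\sem_s=[T]^\sem_s$, so $T_1\cup T_2\in[T]^\sem_s$. For item~3, assuming $T_1,T_3\in[T]^\sem_s$ and $T_1\subseteq T_2\subseteq T_3$, we have $[T_1]^\sem_s=[T_3]^\sem_s$; applying item~1 with the expansion $U:=T_2$ gives $[T_1\cup T_2]^\sem_s=[T_3\cup T_2]^\sem_s$. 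Since $T_1\subseteq T_2$ we have $T_1\cup T_2=T_2$, and since $T_2\subseteq T_3$ we have $T_3\cup T_2=T_3$; substituting, $[T_2]^\sem_s=[T_3]^\sem_s=[T]^\sem_s$, i.e.\ $T_2\in[T]^\sem_s$.

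I do not expect a genuine obstacle: the whole lemma is bookkeeping around the definition of strong equivalence, and the only part with any content is item~1. The single point requiring a moment's care there is to notice that ``expand by $U'$ and then test strong equivalence of the results'' ranges over exactly the family of expansions $\set{U'\cup U''\guard U',U''\subseteq\L}=2^\L$, the same family used in the original definition, so nothing is lost. It is also worth recording at the outset that $\equiv^\sem_s$ is an equivalence relation, so that $T_i\in[T]^\sem_s$ and $[T_i]^\sem_s=[T]^\sem_s$ may be used interchangeably throughout items~2 and~3.
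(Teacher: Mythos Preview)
Your proposal is correct and essentially identical to the paper's proof: item~1 is the same unfolding-and-associativity biconditional chain, and items~2 and~3 are obtained by the same instantiation $U:=T_2$ followed by the simplifications $T_2\cup T_2=T_2$, $T_1\cup T_2=T_2$, $T_3\cup T_2=T_3$. The only cosmetic difference is that in item~3 the paper phrases the intermediate step as a membership biconditional whereas you write it as equality of equivalence classes.
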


We also take a closer look on the structure of strong equivalence classes and their relationships with each other.
We first define a binary relation $\cleq$ on strong equivalence classes where two classes are in $\cleq$-relation if and only if there are $\subseteq$-comparable representatives of the classes.

\begin{definition}
  \label{def:cleq}
  Let $(\L, \sem)$ be a logic.
  Define the following relation on strong equivalence classes\/:
  \begin{gather*}
    [S]^\sem_s \cleq [T]^\sem_s \iff \exists S'\in [S]^\sem_s:\exists T'\in [T]^\sem_s: S'\subseteq T'
  \end{gather*}
\end{definition}

\noindent It is easy to show (and later useful) that $\cleq$ is a partial order.

\begin{lemma}
  \label{thm:cleq-partial-order}
  The relation $\cleq$ is a partial order, that is, reflexive, antisymmetric and transitive.
  \begin{proof}
    Reflexivity is clear.
    \begin{description}
    \item[\normalfont antisymmetric:]
      Let \mbox{$[S]^\sem_s \cleq [T]^\sem_s$} and \mbox{$[T]^\sem_s \cleq [S]^\sem_s$}.
      Then there are \mbox{$S_1, S_2\in [S]^\sem_s$} as well as \mbox{$T_1,T_2\in [T]^\sem_s$} with \mbox{$S_1\subseteq T_1$} and \mbox{$T_2\subseteq S_2$}.
      Clearly
      \begin{align*}
        [S]^\sem_s 
        &= [S_1\cup S_2]^\sem_s & \text{(} S_1,S_2\in [S]^\sem_s \text{; Lemma~\ref{thm:se-expansion}, Item~\ref{itm:semilattice})} \\
        &= [T_2\cup S_1\cup S_2]^\sem_s & \text{(} T_2\subseteq S_2 \text{)} \\
        &= [T_1\cup T_2\cup S_1\cup S_2]^\sem_s &\text{(} [T_2]^\sem_s=[T_1\cup T_2]^\sem_s \text{)} \\
        &= [T_1\cup T_2\cup S_1]^\sem_s &\text{(} [S_1\cup S_2]^\sem_s=[S_1]^\sem_s\text{)} \\
        &= [T_1\cup T_2]^\sem_s &\text{(} S_1\subseteq T_1 \text{)} \\
        &= [T]^\sem_s &\text{(} T_1,T_2\in [T]^\sem_s \text{; Lemma~\ref{thm:se-expansion}, Item~\ref{itm:semilattice})}
      \end{align*}
    \item[\normalfont transitive:] 
      Let \mbox{$[T_1]^\sem_s \cleq [T_2]^\sem_s$} and \mbox{$[T_2]^\sem_s \cleq [T_3]^\sem_s$}.
      Then there exist \mbox{$T_1'\in [T_1]^\sem_s$}, \mbox{$T_2',T_2''\in [T_2]^\sem_s$} and \mbox{$T_3''\in [T_3]^\sem_s$} with \mbox{$T_1'\subseteq T_2'$} and \mbox{$T_2''\subseteq T_3''$}.
      Since \mbox{$[T_2']^\sem_s=[T_2'']^\sem_s$}, we conclude that \mbox{$[T_2'\cup T_3'']=[T_2''\cup T_3'']=[T_3'']^\sem_s$}.
      Now it is clear that \mbox{$T_1'\subseteq T_2'\subseteq T_2'\cup T_3''$} with \mbox{$T_1'\in [T_1]^\sem_s$} and \mbox{$T_2'\cup T_3''\in [T_3]^\sem_s$} imply that \mbox{$[T_1]^\sem_s \cleq [T_3]^\sem_s$}.
    \end{description}
  \end{proof}
\end{lemma}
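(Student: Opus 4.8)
The plan is to verify the three defining properties of a partial order one at a time, with expansion invariance and the join-semilattice structure of strong equivalence classes (Lemma~\ref{thm:se-expansion}) doing all the real work. Reflexivity is immediate from Definition~\ref{def:cleq}: for any theory $S$ one has $S\subseteq S$ with $S\in [S]^\sem_s$ serving as witness on both sides, so $[S]^\sem_s\cleq [S]^\sem_s$.

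For transitivity I would unfold the hypotheses $[T_1]^\sem_s\cleq [T_2]^\sem_s$ and $[T_2]^\sem_s\cleq [T_3]^\sem_s$ to obtain representatives $T_1'\in [T_1]^\sem_s$, $T_2'\in [T_2]^\sem_s$ with $T_1'\subseteq T_2'$, and $T_2''\in [T_2]^\sem_s$, $T_3''\in [T_3]^\sem_s$ with $T_2''\subseteq T_3''$. The crux is to manufacture a representative of $[T_3]^\sem_s$ lying above $T_2'$: since $[T_2']^\sem_s=[T_2'']^\sem_s$, expanding both sides by $T_3''$ (the expansion-invariance item of Lemma~\ref{thm:se-expansion}) gives $[T_2'\cup T_3'']^\sem_s=[T_2''\cup T_3'']^\sem_s=[T_3'']^\sem_s=[T_3]^\sem_s$, where the middle equality uses $T_2''\subseteq T_3''$. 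Then $T_1'\subseteq T_2'\subseteq T_2'\cup T_3''$ with $T_1'\in [T_1]^\sem_s$ and $T_2'\cup T_3''\in [T_3]^\sem_s$ is exactly $[T_1]^\sem_s\cleq [T_3]^\sem_s$.

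Antisymmetry is the step I expect to be the most delicate. From $[S]^\sem_s\cleq [T]^\sem_s$ and $[T]^\sem_s\cleq [S]^\sem_s$ I would extract $S_1,S_2\in [S]^\sem_s$ and $T_1,T_2\in [T]^\sem_s$ with $S_1\subseteq T_1$ and $T_2\subseteq S_2$, then merge the two representatives on each side using the join-semilattice property (Item~\ref{itm:semilattice} of Lemma~\ref{thm:se-expansion}), so that $S_1\cup S_2\in [S]^\sem_s$ and $T_1\cup T_2\in [T]^\sem_s$. The target $[S]^\sem_s=[T]^\sem_s$ then follows from a short chain of rewrites starting at $[S_1\cup S_2]^\sem_s$: absorb $T_2$ (a subset of $S_2$) into it; swap $T_2$ for the strongly equivalent $T_1\cup T_2$ inside the class via expansion invariance; collapse $S_1\cup S_2$ back to $S_1$ (again expansion invariance, as these are strongly equivalent); and finally absorb $S_1\subseteq T_1$ into $T_1\cup T_2$. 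The only thing demanding attention here is bookkeeping — keeping straight which steps are plain set identities (absorption of a subset into a union) and which are genuine appeals to stability of strong equivalence under expansion — so I would write the chain out with each justification annotated; once that is done the verification is routine.
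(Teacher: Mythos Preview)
Your proposal is correct and follows essentially the same approach as the paper's proof: the transitivity argument is identical, and your outlined antisymmetry chain (absorb $T_2$, replace $T_2$ by $T_1\cup T_2$ via expansion invariance, collapse $S_1\cup S_2$ to $S_1$, absorb $S_1$ into $T_1$) reproduces the paper's five-step equality chain step for step, with the same appeals to Lemma~\ref{thm:se-expansion}.
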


It follows from Lemma~\ref{thm:se-expansion} in particular that in the case of logics $(\L, \sem)$ with $\L$ finite, each strong equivalence class $[T]^\sem_s$ has a $\subseteq$-greatest element that equals the union of all elements.
However, for logics with infinite $\L$, this need not be the case:
in the logic of Example~\ref{exm:uncharacterizable}, the class $[\emptyset]^\sem_s$ has no maximal elements, in particular no greatest element;
the class $[\N]^\sem_s$ has no minimal elements, in particular no least element.
We will see that having a $\subseteq$-greatest element in each equivalence class is sufficient for the existence of a characterization logic.
We therefore decided to name this class of logics and study it in some greater detail.

\subsection{Covered Logics}
\label{sec:covered-logics}

\begin{definition}
  \label{def:secsme}
  Let $(\L, \sem)$ be a logic with strong equivalence relation $\equiv^\sem_s$.
  For an equivalence class \mbox{$[T]^\sem_s\in\left(2^\L\right)_{\!/{\equiv^\sem_s}}=\set{ [T]^\sem_s \guard T\subseteq\L }$}, we define its \define{cover} to be the set
  $$\widehat{[T]^\sem_s}=\bigcup_{S\in [T]^\sem_s}S$$
  and say that a logic $(\L, \sem)$ is \define{\secsme} if and only if \mbox{$\forall T\subseteq\L: \widehat{[T]^\sem_s} \in [T]^\sem_s$}.
\end{definition}
Roughly, the existence of greatest elements in equivalence classes guarantees that these classes are closed under arbitrary set union.\footnote{Strong equivalence classes are always closed under set unions with non-empty, finite index sets~(Lemma~\ref{thm:se-expansion}).}
Clearly any finite logic is \secsme.
Furthermore, two familiar representatives of \secsme logics are classical logic and abstract argumentation theory.
In the former case, it is clear that arbitrary unions of families of equivalent theories are again theories that are equivalent to each of its members.
In the latter case it is not immediately clear but can be shown with reasonable effort.

Towards the main result of this section, we show that \secsme logics behave ``nicely'' in an algebraic sense, which will pave the way for obtaining characterization logics for them.
Most importantly, while we know from Theorem~\ref{thm:characterization-lattice} that the strong equivalence classes of any logic form a complete lattice, for covered logics we can even specify the join and meet operations directly via set operations on covers and subsequent class formation.
\begin{lemma}
  \label{thm:se-cleq-lattice}
  Let $(\L,\sem)$ be a \secsme logic with strong equivalence relation $\equiv^\sem_s$.
  The pair \mbox{$\left(\left(2^\L\right)_{\!/{\equiv^\sem_s}},\cleq\right)$} is a complete lattice with operations
  \begin{gather*}
    \hfill
    \bigccup_{C\in \C}C = \left[\bigcup_{C\in\C}\hat{C}\right]^\sem_s
    \hfill\text{and}
    \bigccap_{C\in\C}C = \left[\bigcap_{C\in\C}\hat{C}\right]^\sem_s 
    \hfill
  \end{gather*}
	
  \begin{proof}
    Let \mbox{$\C\subseteq \set{ [T]^\sem_s \guard T\subseteq\L }$}.
    \begin{itemize}
    \item \mbox{$D=\bigccup_{C\in \C}C$} is the least upper bound of $\C$:

      For any \mbox{$C\in\C$}, we get \mbox{$\hat{C}\subseteq\bigcup_{B\in\C}\hat{B}$} immediately and
      \mbox{$\widehat{C}\in C$} since $(\L,\sem)$ is covered.
      The fact that \mbox{$\bigcup_{B\in\C}\hat{B}\in\left[\bigcup_{B\in\C}\hat{B}\right]^\sem_s=D$} is also immediate, whence \mbox{$C\cleq D$} and by arbitrary choice of $C$ we get that $D$ is an upper bound of $\C$.

      Now let $E$ be any upper bound of $\C$ and consider an arbitrary \mbox{$C\in\C$}.
      Since \mbox{$C\cleq E$}, there are \mbox{$T_C\in C$} and \mbox{$T_C'\in E$} such that \mbox{$T_C\subseteq T_C'$}.
      Since $(\L,\sem)$ is \secsme, \mbox{$[T_C]^\sem_s=C=\left[\hat{C}\right]^\sem_s$} whence by strong equivalence \mbox{$[T_C\cup T_C']^\sem_s=\left[\hat{C}\cup T_C'\right]^\sem_s$}.
      That is, \mbox{$\hat{C}\cup T_C'\in [T_C\cup T_C']^\sem_s=[T_C']^\sem_s=E$}.
      Since $C$ was arbitrary, we have that \mbox{$\forall C\in\C:\exists T_C'\in E: \hat{C}\cup T_C'\in E$}.
      In particular, \mbox{$\forall C\in\C:\exists T_C'\in E: \hat{C}\cup T_C'\subseteq \hat{E}$} whence \mbox{$\bigcup_{C\in\C}\left(\hat{C}\cup T_C'\right)\subseteq \hat{E}$}.
      Now fix a specific \mbox{$B\in\C$};
      then we have in particular that there exists a \mbox{$T_B'\in E$} such that \mbox{$\hat{B}\cup T_B'\in E$}.
      Furthermore, \mbox{$\hat{B}\cup T_B' \subseteq \bigcup_{C\in\C}\left(\hat{C}\cup T_C'\right) \subseteq \hat{E}$} with \mbox{$\hat{E}\in E$} since $(\L,\sem)$ is \secsme.
      By Lemma~\ref{thm:se-expansion}~(Item~\ref{itm:convex}) saying that strong equivalence classes are convex, we get \mbox{$\bigcup_{C\in\C}\left(\hat{C}\cup T_C'\right)\in E$}.
      Together with \mbox{$\bigcup_{C\in\C}\hat{C}\subseteq\bigcup_{C\in\C}\left(\hat{C}\cup T_C'\right)$} and \mbox{$\bigcup_{C\in\C}\hat{C}\in D$} we get \mbox{$D\cleq E$} and $D$ is the least upper bound of $\C$.
    \item \mbox{$D=\bigccap_{C\in \C}C$} is the greatest lower bound of $\C$:

      As above, \mbox{$C\in\C$} implies \mbox{$\bigcap_{B\in\C}\hat{B}\subseteq\hat{C}$}, whence by \mbox{$\hat{C}\in C$} and \mbox{$\bigcap_{B\in\C}\hat{B}\in D$} we get \mbox{$D\cleq C$}.
      Since \mbox{$C\in\C$} was arbitrarily chosen, $D$ is a lower bound of $\C$.

      Now let $E$ be any lower bound of $\C$.
      Thus by definition, for each \mbox{$C\in\C$} there exist \mbox{$T_C\in C$} and \mbox{$T_C'\in E$} such that \mbox{$T_C'\subseteq T_C$}.
      Now fix one \mbox{$C\in\C$} and consider an arbitrary \mbox{$B\in\C$}.
      Clearly \mbox{$[T_{C}']^\sem_s=[T_{B}']^\sem_s=E$} whence \mbox{$[T_{C}'\cup T_{B}]^\sem_s=[T_{B}'\cup T_{B}]^\sem_s=[T_B]^\sem_s$} by strong equivalence and \linebreak  since \mbox{$T_{B}'\subseteq T_{B}$}.
      Therefore, \mbox{$T_{C}'\cup T_{B}\in B$}, that is, \mbox{$T_{C}'\subseteq T_{C}'\cup T_{B}\subseteq \hat{B}$} with \mbox{$\hat{B}\in B$}  since $(\L,\sem)$ is covered.
      Since $B$ was chosen arbitrarily, we get the following:
      for every \mbox{$B\in\C$}, we have that \mbox{$T_{C}'\subseteq \hat{B}$}.
      Consequently \mbox{$T_{C}'\subseteq \bigcap_{B\in\C}\hat{B}$} with \mbox{$T_{C}'\in E$} and \mbox{$\bigcap_{B\in\C}\hat{B}\in D$}.
      Thus \mbox{$E\cleq D$} and $D$ is the greatest lower bound of $\C$.
    \end{itemize}
  \end{proof}
\end{lemma}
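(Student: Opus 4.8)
The relation $\cleq$ is already a partial order by Lemma~\ref{thm:cleq-partial-order}, so the plan is just to show that an arbitrary family $\C\subseteq\left(2^\L\right)_{\!/{\equiv^\sem_s}}$ of strong equivalence classes has a least upper bound and a greatest lower bound, and that these coincide with the announced formulas; completeness of the lattice, and the claimed descriptions of $\bigccup$ and $\bigccap$, then follow immediately. Concretely, I would put $D_\vee:=\left[\bigcup_{C\in\C}\hat{C}\right]^\sem_s$ and $D_\wedge:=\left[\bigcap_{C\in\C}\hat{C}\right]^\sem_s$ (well defined, since $\bigcup_{C\in\C}\hat{C}$ and $\bigcap_{C\in\C}\hat{C}$ are subsets of $\L$) and prove that $D_\vee$ is the supremum and $D_\wedge$ the infimum of $\C$. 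The two ingredients I expect to lean on throughout are \emph{coveredness}, which gives $\hat{C}\in C$ for every class $C$ so that covers are honest representatives, and \emph{invariance of strong equivalence under expansion} (Lemma~\ref{thm:se-expansion}, Item~1), which lets me replace a representative by its cover without leaving the class, even after forming a union.

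\textbf{The bound halves (routine).} For each $C\in\C$ we have $\hat{C}\subseteq\bigcup_{B\in\C}\hat{B}$ and $\bigcap_{B\in\C}\hat{B}\subseteq\hat{C}$; since $\hat{C}\in C$ by coveredness while $\bigcup_{B\in\C}\hat{B}\in D_\vee$ and $\bigcap_{B\in\C}\hat{B}\in D_\wedge$ trivially (each set lies in its own class), the definition of $\cleq$ gives $C\cleq D_\vee$ and $D_\wedge\cleq C$ at once. As $C$ was arbitrary, $D_\vee$ is an upper and $D_\wedge$ a lower bound of $\C$.

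\textbf{Minimality and maximality (the substantive part).} Let $E$ be any upper bound of $\C$. For each $C\in\C$ pick, from $C\cleq E$, representatives $T_C\in C$ and $T_C'\in E$ with $T_C\subseteq T_C'$. Since $\hat{C}$ and $T_C$ both lie in $C$, expansion-invariance yields $[\hat{C}\cup T_C']^\sem_s=[T_C\cup T_C']^\sem_s=[T_C']^\sem_s=E$ (using $T_C\subseteq T_C'$), so $\hat{C}\cup T_C'\in E$ and hence $\hat{C}\subseteq\hat{C}\cup T_C'\subseteq\hat{E}$. Taking the union over $C\in\C$ gives $\bigcup_{C\in\C}\hat{C}\subseteq\hat{E}$, and since the left side represents $D_\vee$ while $\hat{E}\in E$ by coveredness, $D_\vee\cleq E$. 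The argument for $D_\wedge$ is dual: given a lower bound $E$, fix one $C_0\in\C$ together with a witness $T_0\in E$ of $E\cleq C_0$; for every $B\in\C$ take $T_B'\in E$ and $T_B\in B$ with $T_B'\subseteq T_B$, and expansion-invariance gives $[T_0\cup T_B]^\sem_s=[T_B'\cup T_B]^\sem_s=[T_B]^\sem_s=B$, so $T_0\cup T_B\in B$ and hence $T_0\subseteq\hat{B}$; intersecting over $B\in\C$ yields $T_0\subseteq\bigcap_{B\in\C}\hat{B}$, so $E\cleq D_\wedge$.

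\textbf{Loose ends and the main obstacle.} The empty family is unproblematic: then $D_\vee=[\emptyset]^\sem_s$ is the bottom and $D_\wedge=[\L]^\sem_s$ the top of the lattice, which is consistent with the formulas. The real work is entirely in the minimality/maximality step, and its crux is the \emph{inflation move} --- replacing an arbitrary comparable pair $T_C\subseteq T_C'$ by $\hat{C}\subseteq\hat{C}\cup T_C'$ while staying inside $E$ --- which is where both coveredness and expansion-invariance are genuinely needed; note this is precisely the feature that fails in Example~\ref{exm:uncharacterizable}, so the coveredness hypothesis cannot simply be dropped. Everything else reduces to unfolding the definition of $\cleq$.
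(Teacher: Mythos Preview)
Your proof is correct and follows essentially the same approach as the paper: verify the bound property directly via covers, then establish extremality using expansion-invariance (Lemma~\ref{thm:se-expansion}) to replace arbitrary representatives by covers. There is one noteworthy simplification in your supremum argument: having shown $\hat{C}\cup T_C'\in E$ for each $C$, you pass directly to $\bigcup_{C\in\C}\hat{C}\subseteq\hat{E}$ and use $\hat{E}\in E$ (coveredness) as the witness for $D_\vee\cleq E$, whereas the paper instead sandwiches $\bigcup_{C\in\C}(\hat{C}\cup T_C')$ between an element of $E$ and $\hat{E}$ and invokes convexity (Lemma~\ref{thm:se-expansion}, Item~\ref{itm:convex}) to conclude that this union lies in $E$. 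Your route is shorter and avoids the convexity lemma entirely; the infimum argument is essentially identical in both.
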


In particular, the least and greatest elements of that lattice are given by
\begin{gather*}
  \hfill
  \bigccup_{C\in \emptyset}C = \left[\bigcup_{C\in\emptyset}\hat{C}\right]^\sem_s = \left[\emptyset\right]^\sem_s
  \hfill\quad\text{and}\quad\hfill
  \bigccap_{C\in\emptyset}C = \left[\bigcap_{C\in\emptyset}\hat{C}\right]^\sem_s = \left[\L\right]^\sem_s.
  \hfill
\end{gather*}

As we show next, it follows that the mapping \mbox{$[\cdot]^\sem_s:2^\L\to {\left(2^\L\right)_{\!/\equiv^{\sem}_s}}$} assigning a theory $T$ its equivalence class $[T]^\sem_s$ is join-preserving for arbitrary joins.
This is akin to the intersection property for the characterizing semantics, only that the semantics is not yet a model theory in the form of a complete lattice of sets but, more generally, a complete lattice (that is not necessarily one of sets).
\begin{lemma}
  \label{thm:se-cleq-lattice-join}
  Let $(\L,\sem)$ be a \secsme logic with strong equivalence relation $\equiv^\sem_s$.
  For all \mbox{$\T\subseteq 2^\L$}, we have\/:
  \begin{gather*}
    \left[ \bigcup_{T\in\T}T \right]^\sem_s = \bigccup_{T\in\T}[T]^\sem_s
  \end{gather*}
  \begin{proof}
    Let \mbox{$\T\subseteq 2^\L$} and denote \mbox{$\C = \set{ [T]^\sem_s \guard T\in\T }$}.
    Clearly \mbox{$\bigccup_{T\in\T}[T]^\sem_s=\bigccup\C$}.
    Consequently, it suffices to show that
    $$\left[ \bigcup_{T\in\T}T \right]^\sem_s=\bigccup\C.$$
     This will be done by showing that \mbox{$D=\left[ \bigcup_{T\in\T}T \right]^\sem_s$} is the least upper bound of $\C$ in \mbox{$\left(\left(2^\L\right)_{\!/{\equiv^\sem_s}},\cleq\right)$}.

    Let \mbox{$C\in\C$}.
    Then there is a \mbox{$T\in\T$} with \mbox{$C=[T]^\sem_s$}.
    Clearly \mbox{$T\in C$} and
    $$T\subseteq\bigcup_{S\in\T}S\subseteq\widehat{\left[\bigcup_{S\in\T}S\right]^\sem_s}=\hat{D}$$
    with \mbox{$\hat{D}\in D$} since $(\L,\sem)$ is covered.
    Thus \mbox{$C\cleq D$}.
    Since \mbox{$C\in\C$} was arbitrarily chosen, $D$ is an upper bound of $\C$.

    Now let $E$ be any upper bound of $\C$.
    Consider an arbitrary \mbox{$C\in\C$}.
    There clearly is a \mbox{$T\in\T$} with \mbox{$C=[T]^\sem_s$} and since $E$ is an upper bound for $\C$ there also are \mbox{$T_C\in C$} and \mbox{$T_C'\in E$} with \mbox{$T_C\subseteq T_C'$}.
    
    We have \mbox{$[T\cup T_C']^\sem_s=[T_C\cup T_C']^\sem_s=[T_C']^\sem_s=[E]^\sem_s$} whence \mbox{$T\cup T_C'\in E$}.
    Now since every \mbox{$C\in\C$} originates in some \mbox{$T\in\T$} we get that for every \mbox{$T\in\T$} there exists a \mbox{$T_C'\in E$} with \mbox{$T\cup T_C'\in E$}, that is, \mbox{$T\cup T_C'\subseteq\hat{E}$}.
    Then clearly \mbox{$\bigcup_{T\in \T}T \subseteq \bigcup_{T\in\T}(T\cup T_C') \subseteq \hat{E}$}, and by \mbox{$\bigcup_{T\in\T}T\in D$} and \mbox{$\hat{E}\in E$} (the logic $(\L,\sem)$ is covered) we get \mbox{$D\cleq E$}.
    Since $E$ was an arbitrarily chosen upper bound of $\C$, $D$ is the least upper bound of $\C$.
  \end{proof}
\end{lemma}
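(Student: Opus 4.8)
The plan is to reduce the claim to the lattice structure already supplied by Lemma~\ref{thm:se-cleq-lattice}. That lemma says $\left(\left(2^\L\right)_{\!/{\equiv^\sem_s}},\cleq\right)$ is a complete lattice whose arbitrary joins are $\bigccup_{C\in\C}C = \left[\bigcup_{C\in\C}\hat C\right]^\sem_s$. Writing $\C = \set{[T]^\sem_s \guard T\in\T}$ and noting that a join of a family equals the join of its set of values, we have $\bigccup_{T\in\T}[T]^\sem_s = \bigccup\C$; hence it suffices to show that the single class $D = \left[\bigcup_{T\in\T}T\right]^\sem_s$ is the least upper bound of $\C$ with respect to $\cleq$. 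This splits into the two standard obligations: $D$ is an upper bound of $\C$, and $D \cleq E$ for every upper bound $E$ of $\C$.

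For the upper-bound part I would fix $C\in\C$, choose $T\in\T$ with $C = [T]^\sem_s$, and observe that $T$ and $\bigcup_{S\in\T}S$ already witness $C\cleq D$: indeed $T\in C$, $\bigcup_{S\in\T}S\in D$, and $T\subseteq\bigcup_{S\in\T}S$. (If one prefers to phrase everything via covers, one may instead use $\hat D\in D$, which is legitimate because the logic is covered.) As $C$ was arbitrary, $D$ is an upper bound of $\C$.

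The genuinely delicate part is minimality. Let $E$ be an arbitrary upper bound of $\C$. For each $C\in\C$, say $C=[T]^\sem_s$ with $T\in\T$, the relation $C\cleq E$ hands us representatives $T_C\in C$ and $T_C'\in E$ with $T_C\subseteq T_C'$. The crucial manoeuvre is to transport this comparison from the anonymous witness $T_C$ to the chosen generator $T$: since $T\in C=[T_C]^\sem_s$, invariance of strong equivalence under expansion (Lemma~\ref{thm:se-expansion}, Item~1) gives $[T\cup T_C']^\sem_s=[T_C\cup T_C']^\sem_s=[T_C']^\sem_s=E$, using $T_C\cup T_C'=T_C'$ because $T_C\subseteq T_C'$. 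Hence $T\cup T_C'\in E$, so $T\subseteq T\cup T_C'\subseteq\hat E$. Letting $C$ (equivalently $T$) range over $\C$ then yields $\bigcup_{T\in\T}T\subseteq\hat E$; since $\bigcup_{T\in\T}T\in D$ and $\hat E\in E$ (here the covered assumption enters, to know $\hat E$ actually lies in its class), this witnesses $D\cleq E$. Combining the two parts, $D=\bigccup\C=\bigccup_{T\in\T}[T]^\sem_s$.

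I expect the main obstacle to be precisely the bookkeeping of the minimality step: the identity we want concerns $\bigcup_{T\in\T}T$, but the $\cleq$-witnesses delivered by ``$C\cleq E$'' involve arbitrary members of the classes, so one must first pull each witness back to a theory in $\T$ (via the expansion lemma) before taking the big union. Coveredness is used only lightly — to guarantee $\hat E$, and optionally $\hat D$, genuinely belong to their equivalence classes, so that $\cleq$-relations can be read off from plain set inclusions — but it is indispensable, since without it the relevant suprema may fail to be realised by any single theory, as Example~\ref{exm:uncharacterizable} shows.
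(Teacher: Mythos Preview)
Your proposal is correct and follows essentially the same approach as the paper's proof: both reduce to showing that $D=\left[\bigcup_{T\in\T}T\right]^\sem_s$ is the least upper bound of $\C$ in $\left(\left(2^\L\right)_{\!/{\equiv^\sem_s}},\cleq\right)$, handle the upper-bound direction via the obvious inclusion $T\subseteq\bigcup_{S\in\T}S$, and for minimality use the expansion-invariance lemma to transport each $\cleq$-witness $T_C\subseteq T_C'$ to the chosen generator $T$, then absorb everything into $\hat E$ using coveredness. Your version is marginally cleaner in the upper-bound step, where you observe that $\bigcup_{S\in\T}S\in D$ already suffices without passing through $\hat D$.
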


While it preserves arbitrary joins, the mapping from theories to their strong equivalence classes does not necessarily preserve meets.
It is easy to see that for the meet operation in the complete lattice of strong equivalence classes we have 
\mbox{$\left[ \bigcap_{T\in\T}T \right]^\sem_s \cleq \bigccap_{T\in\T}[T]^\sem_s$}, as
$$\bigcap_{T\in\T}T\subseteq\bigcap_{T\in\T}\widehat{[T]^\sem_s}\in\left[ \bigcap_{T\in\T}\widehat{[T]^\sem_s} \right]^\sem_s = \bigccap_{T\in\T}[T]^\sem_s$$
The reverse relation does not hold, as is witnessed by the following small, finite logic.
\begin{example}
  \label{exm:se-cleq-lattice-meet}
  Let \mbox{$\L=\set{a,b}$} and \mbox{$\equiv^\sem_s$} such that \mbox{$\emptyset\not\equiv^\sem_s\set{a}\equiv^\sem_s\set{b}\equiv^\sem_s\set{a,b}$}.
  Then we get\/:
  \begin{align*}
    [\set{a}\cap\set{b}]^\sem_s & =[\emptyset]^\sem_s=\set{\emptyset} \\
    [\set{a}]^\sem_s\ccap [\set{b}]^\sem_s &= [\set{a,b}]^\sem_s\ccap [\set{a,b}]^\sem_s = [\set{a,b}]^\sem_s = \set{\set{a},\set{b},\set{a,b}} \neq \set{\emptyset}
  \end{align*}
\end{example}

However, this is not a hindrance since the intersection property only needs to hold for arbitrary unions of theories and does not say anything about theory intersection.

We conclude this section with its main theorem showing that any full logic being \secsme possesses a characterization logic. 
The relevant characterization logic can even be defined (more or less) explicitly via a Herbrand-style canonical construction:
roughly, the semantics of the characterization logic maps a theory $T$ of the original language to the union of all strong equivalence classes $[S]^\sem_s$ that are in $\cgeq$-relation to the class $[T]^\sem_s$ of the input theory.

\begin{theorem}
  \label{thm:main:covered}
  Let $(\L, \sem)$ be a logic.
  If $(\L, \sem)$ is \secsme then a characterization logic for $(\L, \sem)$ is given by $(\L, \semc)$ with
\begin{gather*}
    \hfill
    \semc : 2^\L\to 2^{2^\L}
    \hfill,\quad\hfill
    T \mapsto \mathop{\bigcup_{S\in 2^\L,}}_{[T]^\sem_s\cleq[S]^\sem_s}[S]^\sem_s
    \hfill
\end{gather*}
\begin{proof}
  \begin{description}
  \item[\normalfont characterization:]
    We have to show \mbox{$\forall T_1,T_2\subseteq\L : \sem'(T_1)=\sem'(T_2) \iff [T_1]^\sem_s=[T_2]^\sem_s$}.
    
    Let \mbox{$T_1,T_2\subseteq\L$}.
    \begin{description}
    \item[\normalfont $\implies$:]
      Let \mbox{$\sem'(T_1)=\sem'(T_2)$}.
      Then by definition of $\sem'$ we get
      \begin{gather*}
        \mathop{\bigcup_{S\in 2^\L,}}_{[T_1]^\sem_s\cleq[S]^\sem_s}[S]^\sem_s = \mathop{\bigcup_{S\in 2^\L,}}_{[T_2]^\sem_s\cleq[S]^\sem_s}[S]^\sem_s
      \end{gather*}
      that is, for any \mbox{$S\in 2^\L$} we find that \mbox{$[T_1]^\sem_s\cleq [S]^\sem_s$} iff \mbox{$[T_2]^\sem_s\cleq [S]^\sem_s$}.
      The set \mbox{$\set{ [S]^\sem_s \guard S\in 2^\L }$} corresponds to a partition of ${2^\L}$;
      thus for any element \mbox{$T\in\mathop{\bigcup_{S\in 2^\L,}}_{[T_1]^\sem_s\cleq[S]^\sem_s}[S]^\sem_s$} we have \mbox{$[T_1]^\sem_s\cleq [T]^\sem_s$}, and if also \mbox{$T\in\mathop{\bigcup_{S\in 2^\L,}}_{[T_2]^\sem_s\cleq[S]^\sem_s}[S]^\sem_s$}, then \mbox{$[T_2]^\sem_s\cleq [T]^\sem_s$} as well.
      Symmetry applies to obtain the above conclusion.

      In particular, \mbox{$T_1,T_2\in 2^\L$} whence \mbox{$[T_1]^\sem_s\cleq [T_1]^\sem_s$} iff \mbox{$[T_2]^\sem_s\cleq [T_1]^\sem_s$}, and \mbox{$[T_1]^\sem_s\cleq [T_2]^\sem_s$} iff \linebreak \mbox{$[T_2]^\sem_s\cleq [T_2]^\sem_s$}.
      This shows \mbox{$[T_1]^\sem_s\cleq [T_2]^\sem_s$} and \mbox{$[T_2]^\sem_s\cleq [T_1]^\sem_s$}, that is, \mbox{$[T_1]^\sem_s=[T_2]^\sem_s$}.
    \item[\normalfont $\impliedby$:]
      Let \mbox{$[T_1]^\sem_s=[T_2]^\sem_s$}.
      Then immediately
      \begin{gather*}
        \sem'(T_1) = \mathop{\bigcup_{S\in 2^\L,}}_{[T_1]^\sem_s\cleq[S]^\sem_s}[S]^\sem_s = \mathop{\bigcup_{S\in 2^\L,}}_{[T_2]^\sem_s\cleq[S]^\sem_s}[S]^\sem_s = \sem'(T_2)
      \end{gather*}
    \end{description}
  \item[\normalfont intersection:]
    We have to show \mbox{$\forall \T\subseteq 2^\L: \sem'\!\left(\bigcup_{T\in \T}T\right) = \bigcap_{T\in\T}\sem'(T)$}.
    
    Let \mbox{$\T\subseteq 2^\L$}.
    From Theorem~\ref{thm:se-cleq-lattice-join}, we know that \mbox{$\left[ \bigcup_{T\in\T}T \right]^\sem_s=\bigccup_{T\in\T}[T]^\sem_s$} is the least upper bound of \mbox{$\set{ [T]^\sem_s \guard T\in\T }$}, whence for all \mbox{$S\in 2^\L$}, we have that \mbox{$\bigccup_{T\in\T}[T]^\sem_s\cleq [S]^\sem_s$} if and only if $[S]^\sem_s$ is an upper bound of \mbox{$\set{ [T]^\sem_s \guard T\in\T }$}, that is,
    \begin{gather*}
      \left[\bigcup_{T\in \T}T\right]^\sem_s\cleq[S]^\sem_s
      \quad\iff\quad
      \forall T\in\T:[T]^\sem_s\cleq [S]^\sem_s
    \end{gather*}
    Using this relationship, we get that
    \begin{align*}
      \sem'\!\left(\bigcup_{T\in \T}T\right)
      &= \mathop{\bigcup_{S\in 2^\L,}}_{\left[\bigcup_{T\in \T}T\right]^\sem_s\cleq[S]^\sem_s}[S]^\sem_s \\
      &= \mathop{\bigcup_{S\in 2^\L,}}_{\forall T\in\T:[T]^\sem_s\cleq [S]^\sem_s}[S]^\sem_s \\
      &= \bigcap_{T\in\T}\left(\mathop{\bigcup_{S\in 2^\L,}}_{[T]^\sem_s\cleq[S]^\sem_s}[S]^\sem_s\right) \\
      &= \bigcap_{T\in\T}\sem'(T)
    \end{align*}
  \end{description}
\end{proof}
\end{theorem}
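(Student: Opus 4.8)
The plan is to verify, one at a time, the two conditions that Definition~\ref{def:characterization-logic} demands of a characterization logic --- \emph{characterization} and \emph{intersection} --- while leaning on the lattice structure of strong equivalence classes that Lemma~\ref{thm:se-cleq-lattice} and Lemma~\ref{thm:se-cleq-lattice-join} already establish for covered logics. The organizing observation is that $\semc(T)$ is simply the principal $\cgeq$-up-set of $[T]^\sem_s$ inside the poset $\left(\left(2^\L\right)_{\!/{\equiv^\sem_s}},\cleq\right)$, rendered concretely as a subset of $2^\L$ by unioning together all classes that lie $\cgeq$-above $[T]^\sem_s$. Because these classes form a partition of $2^\L$, a theory $S$ belongs to $\semc(T)$ if and only if $[T]^\sem_s\cleq[S]^\sem_s$; I will use this passage between ``elements of $\semc(T)$'' and ``classes above $[T]^\sem_s$'' repeatedly.

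For \textbf{characterization}, the direction $[T_1]^\sem_s=[T_2]^\sem_s\implies\semc(T_1)=\semc(T_2)$ is immediate, since the right-hand side of the definition of $\semc(T)$ depends on $T$ only through $[T]^\sem_s$. For the converse, assume $\semc(T_1)=\semc(T_2)$; by the partition remark this says that for every $S\subseteq\L$ one has $[T_1]^\sem_s\cleq[S]^\sem_s$ iff $[T_2]^\sem_s\cleq[S]^\sem_s$. Instantiating $S:=T_1$ and then $S:=T_2$, and using reflexivity of $\cleq$ (Lemma~\ref{thm:cleq-partial-order}), yields $[T_1]^\sem_s\cleq[T_2]^\sem_s$ and $[T_2]^\sem_s\cleq[T_1]^\sem_s$; antisymmetry of $\cleq$ then gives $[T_1]^\sem_s=[T_2]^\sem_s$.

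For \textbf{intersection}, fix $\T\subseteq 2^\L$. By Lemma~\ref{thm:se-cleq-lattice-join}, $\left[\bigcup_{T\in\T}T\right]^\sem_s=\bigccup_{T\in\T}[T]^\sem_s$ is the least upper bound of $\set{[T]^\sem_s\guard T\in\T}$ in the complete lattice of Lemma~\ref{thm:se-cleq-lattice}. Hence, for every $S\subseteq\L$, $\left[\bigcup_{T\in\T}T\right]^\sem_s\cleq[S]^\sem_s$ holds exactly when $[S]^\sem_s$ is an upper bound of that family, i.e. when $[T]^\sem_s\cleq[S]^\sem_s$ for all $T\in\T$. Feeding this equivalence into the definition of $\semc$ and again invoking the partition property (a theory lies in a union of classes iff its own class is among them) rewrites $\semc\!\left(\bigcup_{T\in\T}T\right)$ as $\bigcup_{S:\ \forall T\in\T,\ [T]^\sem_s\cleq[S]^\sem_s}[S]^\sem_s$, which in turn equals $\bigcap_{T\in\T}\bigcup_{S:\ [T]^\sem_s\cleq[S]^\sem_s}[S]^\sem_s=\bigcap_{T\in\T}\semc(T)$. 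The edge case $\T=\emptyset$ is subsumed: then $\bigcup_{T\in\T}T=\emptyset$, the quantifier ``$\forall T\in\T$'' is vacuous, $[\emptyset]^\sem_s$ is the bottom of the lattice, and both sides reduce to $\bigcup_{S\subseteq\L}[S]^\sem_s=2^\L$.

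I do not anticipate a genuine obstacle here: the substantive work --- that $\cleq$ is a partial order, that a covered logic induces a complete lattice of classes, and that the class of a union of theories is the join of their classes --- is already carried out in Lemmas~\ref{thm:cleq-partial-order}, \ref{thm:se-cleq-lattice} and \ref{thm:se-cleq-lattice-join}. The point that needs the most care is the repeatedly used ``partition step'', namely that for a subcollection $\mathcal{P}$ of the partition $\set{[S]^\sem_s\guard S\subseteq\L}$ one has $T\in\bigcup\mathcal{P}$ iff $[T]^\sem_s\in\mathcal{P}$; this hinges on the index set ranging over \emph{all} of $2^\L$, so that $[T]^\sem_s$ is always itself among the classes under consideration. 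With that elementary fact in hand, both conditions drop out of the displayed chains of equalities.
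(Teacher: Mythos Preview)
Your proposal is correct and follows essentially the same approach as the paper's proof: both parts rest on the observation that $S\in\semc(T)$ iff $[T]^\sem_s\cleq[S]^\sem_s$ (the ``partition step''), the characterization direction is handled by instantiating $S:=T_1,T_2$ and invoking antisymmetry of $\cleq$ from Lemma~\ref{thm:cleq-partial-order}, and the intersection property is obtained by feeding the least-upper-bound characterization from Lemma~\ref{thm:se-cleq-lattice-join} into the definition of $\semc$. Your explicit treatment of the $\T=\emptyset$ case and the cleaner articulation of the partition step are minor expository improvements, but the argument is the same.
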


The construction from the statement of the theorem looks quite abstract, so we illustrate with a small concrete logic.

\begin{example}
  \label{exm:covered-characterization}
  We reconsider the logic from Example~\ref{exm:se-cleq-lattice-not-distributive}.
  There, \mbox{$\L=\set{a,b,c}$} and 
  \begin{align*}
    [\emptyset]^\sem_s&=\set{\emptyset}\!,\\
    [\set{a}]^\sem_s&=\set{\set{a}}\!,\\
    [\set{b}]^\sem_s&=\set{\set{b}}\!,\\
    [\set{c}]^\sem_s&=\set{\set{c}}\!,\\
    [\set{a,b}]^\sem_s&=[\set{a,c}]^\sem_s=[\set{b,c}]^\sem_s=[\set{a,b,c}]^\sem_s =\set{\set{a,b},\set{a,c},\set{b,c},\set{a,b,c}}\!.
  \end{align*}
  The resulting lattice of strong equivalence classes is depicted below\/: 
  \begin{center}
    \begin{tikzpicture}[xscale=3,yscale=2]
      \node (e) at (0,-1) {$[\emptyset]^\sem_s$};
      \node (a) at (-1,0) {$[\set{a}]^\sem_s$};
      \node (b) at (0,0)  {$[\set{b}]^\sem_s$};
      \node (c) at (1,0)  {$[\set{c}]^\sem_s$};
      \node (abc) at (0,1) {$[\set{a,b,c}]^\sem_s$};
      \path (e) edge[-] (a);
      \path (e) edge[-] (b);
      \path (e) edge[-] (c);
      \path (a) edge[-] (abc);
      \path (b) edge[-] (abc);
      \path (c) edge[-] (abc);
    \end{tikzpicture}
  \end{center}
  According to Theorem~\ref{thm:main:covered}, the characterization logic \mbox{$\sem':2^\L\to 2^{2^\L}$} assigns as follows\/:
  \begin{align*}
    \sem'(\emptyset) &= \mathop{\bigcup_{S\in 2^\L,}}_{[\emptyset]^\sem_s\cleq[S]^\sem_s}[S]^\sem_s = \bigcup_{S\in 2^\L}[S]^\sem_s&&= 2^\L \\
    \sem'(\set{a}) &= \mathop{\bigcup_{S\in 2^\L,}}_{[\set{a}]^\sem_s\cleq [S]^\sem_s}[S]^\sem_s = [\set{a}]^\sem_s \cup [\set{a,b,c}]^\sem_s &&= \set{ \set{a}, \set{a,b}, \set{a,c}, \set{b,c}, \set{a,b,c} } \\
    \sem'(\set{b}) &= \mathop{\bigcup_{S\in 2^\L,}}_{[\set{b}]^\sem_s\cleq [S]^\sem_s}[S]^\sem_s = [\set{b}]^\sem_s \cup [\set{a,b,c}]^\sem_s &&= \set{ \set{b}, \set{a,b}, \set{a,c}, \set{b,c}, \set{a,b,c} } \\
    \sem'(\set{c}) &= \mathop{\bigcup_{S\in 2^\L,}}_{[\set{c}]^\sem_s\cleq [S]^\sem_s}[S]^\sem_s = [\set{c}]^\sem_s \cup [\set{a,b,c}]^\sem_s &&= \set{ \set{c}, \set{a,b}, \set{a,c}, \set{b,c}, \set{a,b,c} } \\
    \sem'(\set{a,b,c}) &= \mathop{\bigcup_{S\in 2^\L,}}_{[\set{a,b,c}]^\sem_s\cleq [S]^\sem_s}[S]^\sem_s = [\set{a,b,c}]^\sem_s &&= \set{ \set{a,b}, \set{a,c}, \set{b,c}, \set{a,b,c} }
  \end{align*}
  Now it holds for example that
  \begin{align*}
    &\pheq \sem'(\set{a}) \cap \sem'(\set{b}) \\
    &= \set{ \set{a}, \set{a,b}, \set{a,c}, \set{b,c}, \set{a,b,c} } \cap \set{ \set{b}, \set{a,b}, \set{a,c}, \set{b,c}, \set{a,b,c} } \\
    &= \set{ \set{a,b}, \set{a,c}, \set{b,c}, \set{a,b,c} } \\
    &= \sem'(\set{a,b}) \\
    &= \sem'(\set{a} \cup \set{b})
  \end{align*}
\end{example}

\subsection{Finite-Theory Characterization Logics}
\label{sec:characterization:finitary}

In the field of knowledge representation it is a common assumption that knowledge bases are finite.
This is indeed not overly limiting, as finite knowledge bases will be most relevant for practical purposes.
The following definition translates this assumption into our setting:
the \textit{finite-theory version} of a given logic (or simply, a \textit{finite-theory logic}) considers only the finite knowledge bases of a language.

\begin{definition}
  \label{def:finitary}
  Given a full logic $(\L, \sem)$, the finite-theory version $(\L, \semf)$ of $(\L, \sem)$ is defined by the semantics
  $$\semf: \Lfin \to \sigma\!\left(2^\L\right) \qquad\text{with}\qquad \semf(T) = \sem(T)$$
  where $\Lfin = \{T\in 2^\L\mid T \text{ is finite}\}$.
\end{definition}

For finite-theory restrictions of logics, we adequately relax our requirements on characterization logics.

\begin{definition}
  \label{def:fin-characterization-logic}
  Let $(\L, \sem)$ be a full logic and $(\L, \semf)$ its finite-theory version.
  We say that $(\L, \semfc)$ is a \define{finite-theory characterization logic for $(\L, \sem)$} if and only if\/:
  \begin{enumerate}
  \item \mbox{$\forall T_1,T_2\in\Lfin: \semfc(T_1)=\semfc(T_2)$ iff $[T_1]^{\semf}_s=~[T_2]^{\semf}_s$}; \hfill (finite-theory characterization)
  \item \mbox{$\forall T_1,T_2\in\Lfin: \semfc (T_1 \cup T_2) = \semfc(T_1) \cap \semfc(T_2)$}. \hfill (finite-theory intersection)
  \end{enumerate}
\end{definition}
The second item requires binary intersection only;
this is due to the fact that arbitrary unions of (finite) theories are not necessarily finite, and thus their semantics might not be well-defined.

As we did in the general case before, we first analyze the algebraic structure of the resulting model theories.
We show that the model theory of any finite-theory characterization logic forms a lattice, that is, a partially ordered set where each non-empty finite subset has both a greatest lower bound and a least upper bound.
(This is in contrast to \emph{complete} lattices in the general case.)
The proof is, although similar in procedure, slightly more involved than in the general case.

\begin{proposition}
  \label{thm:fin-characterization-lattice}
  Let $(\L, \semf)$ be a finite-theory logic with characterization logic $(\L, \semfc)$.
  Denoting \mbox{$\K=\set{\semfc(T)\guard T\in\Lfin }$}, the pair $\left(\K, \subseteq\right)$ is a lattice where 
  glb and lub are given such that
  for all \mbox{$K_1,K_2\in\K$}\/:
  \begin{gather*}
    \hfill
    K_1\wedge K_2 = K_1\cap K_2
    \hfill\text{ and }\hfill
    K_1\vee K_2 = \bigwedge\set{K_1,K_2}^u
    \hfill
  \end{gather*}
  where \mbox{$\set{K_1,K_2}^u = \set{ K\in\K \guard K_1\subseteq K, K_2\subseteq K }$}. 
  \begin{proof}
    Let \mbox{$K_1,K_2\in\K$}.
    Clearly there exist finite \mbox{$T_1,T_2\subseteq\L$} with \mbox{$\semfc(T_1)=K_1$} as well as \mbox{$\semfc(T_2)=K_2$}.
    \begin{description}
    \item[\normalfont glb:] 
      Obviously \mbox{$T_1\cup T_2\in\Lfin$}.
      Therefore, it follows that 
      \begin{align*}
        K_1\wedge K_2=\semfc(T_1)\wedge\semfc(T_2)=\semfc(T_1)\cap\semfc(T_2)=\semfc(T_1\cup T_2)\in\K
      \end{align*}
			\pagebreak
    \item[\normalfont lub:] 
      It follows from the finite intersection property that $\semfc$ is antimonotone.
      Now \mbox{$\emptyset\subseteq T_1$} implies that \mbox{$K_1=\semfc(T_1)\subseteq\semfc(\emptyset)$} and likewise for $T_2$.
      Thus \mbox{$\semfc(\emptyset)$} is an upper bound of $K_1$ and $K_2$, whence the set \mbox{$\set{K_1,K_2}^u$} is non-empty.
      We will now show that $\set{K_1,K_2}^u$ is finite.  
     Clearly both $T_1$ and $T_2$ have only finitely many subsets $T_1'$ and $T_2'$.
      For each of these subsets, $\semfc$ being antimonotone means that \mbox{$\emptyset\subseteq T_1'\subseteq T_1$} implies \mbox{$\semfc(T_1)\subseteq\semfc(T_1')\subseteq\semfc(\emptyset)$}.
      Thus both $K_1$ and $K_2$ have only finitely many supersets in $\K$, that is, the sets \mbox{$K_1^\uparrow$} and \mbox{$K_2^\uparrow$} are finite, whence \mbox{$\set{K_1,K_2}^u\subseteq K_1^\uparrow\cup K_2^\uparrow$} is finite.
      Consequently, \mbox{$\set{K_1,K_2}^u$} is a finite, non-empty subset of $\K$.
			It therefore possesses a greatest lower bound \mbox{$K_l=\bigwedge\set{K_1,K_2}^u\in\K$}.
			Since $\set{K_1,K_2}^u$ is closed under intersection we have that \mbox{$K_l=\bigwedge\set{K_1,K_2}^u=\bigcap\set{K_1,K_2}^u\in\set{K_1,K_2}^u$} is the least element of $\set{K_1,K_2}^u$ and therefore the least upper bound of $K_1$ and $K_2$ concluding the proof.
    \end{description}
  \end{proof}
\end{proposition}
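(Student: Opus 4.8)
The plan is to verify the lattice axioms for $(\K,\subseteq)$ by producing binary meets and joins \emph{inside} $\K$; the full complete-lattice statement of Theorem~\ref{thm:characterization-lattice} is unavailable here precisely because arbitrary unions of finite theories need not be finite, so one must settle for finite meets and joins. Throughout I would use that, since $(\L,\semfc)$ is a characterization logic, $\semfc$ has the binary intersection property, and hence — repeating the one-line argument of Proposition~\ref{thm:intersection-antimonotone} for binary unions — $\semfc$ is antimonotone on $\Lfin$; this fixes the direction of all inclusions below.

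\emph{Meets.} Given $K_1,K_2\in\K$, choose finite $T_1,T_2$ with $\semfc(T_i)=K_i$. Then $T_1\cup T_2\in\Lfin$, so the finite intersection property yields $\semfc(T_1\cup T_2)=\semfc(T_1)\cap\semfc(T_2)=K_1\cap K_2$; thus $K_1\cap K_2\in\K$, and as the set-theoretic intersection it is visibly the greatest lower bound of $\{K_1,K_2\}$ in $\K$. This part is routine bookkeeping.

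\emph{Joins.} First, $\{K_1,K_2\}^u\neq\emptyset$: since $\emptyset\subseteq T_i$, antimonotonicity gives $K_i\subseteq\semfc(\emptyset)$, so $\semfc(\emptyset)\in\{K_1,K_2\}^u$ (indeed $\semfc(\emptyset)$ is the top of $\K$). Second, $\{K_1,K_2\}^u$ is closed under binary intersection: if $K,K'$ each contain both $K_1$ and $K_2$, so does $K\cap K'$, and $K\cap K'\in\K$ by the meet step. Consequently it suffices to show that $\{K_1,K_2\}^u$ has a $\subseteq$-least element; given closure under intersection this is the same as showing $\{K_1,K_2\}^u$ is \emph{finite}, for then $\bigcap\{K_1,K_2\}^u$ lies in $\{K_1,K_2\}^u$, is its least element, and is therefore the least upper bound of $K_1$ and $K_2$ — giving $K_1\vee K_2=\bigwedge\{K_1,K_2\}^u$ as claimed.

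\emph{The main obstacle.} Everything hinges on the finiteness of $\{K_1,K_2\}^u$, and this is the step I expect to be delicate. The natural lever is that $T_1$ and $T_2$ are finite: each $T_i$ has only finitely many subsets $T_i'$, and by antimonotonicity every $\semfc(T_i')$ satisfies $K_i=\semfc(T_i)\subseteq\semfc(T_i')\subseteq\semfc(\emptyset)$. To conclude one would want that \emph{every} $K\in\K$ with $K_i\subseteq K$ is already of the form $\semfc(T_i')$ for some $T_i'\subseteq T_i$, so that $K_i^{\uparrow}=\{K\in\K\mid K_i\subseteq K\}\subseteq\{\semfc(T_i')\mid T_i'\subseteq T_i\}$ is finite and hence $\{K_1,K_2\}^u\subseteq K_1^{\uparrow}\cap K_2^{\uparrow}$ is finite. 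Establishing this inclusion is the crux: from a finite witness $S$ with $\semfc(S)\supseteq\semfc(T_i)$ one would have to ``trim'' $S$ down to a subset of $T_i$ with the same $\semfc$-value, and I would attempt this using closure of strong-equivalence classes under finite unions and their convexity (Lemma~\ref{thm:se-expansion}) together with the characterization property relating $\semfc$-equality to $\equiv^{\semf}_s$. If that trimming cannot be pushed through unconditionally, the fallback is to argue directly that $\bigwedge\{K_1,K_2\}^u$ exists in $\K$; either way, pinning down this finiteness/least-element statement is the heart of the proposition, the meet step and the non-emptiness and closure observations being straightforward.
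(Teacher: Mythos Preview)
Your outline matches the paper's proof essentially line for line: meets via the finite intersection property, then non-emptiness of $\{K_1,K_2\}^u$ via $\semfc(\emptyset)$, closure under binary intersection, and finally finiteness of $\{K_1,K_2\}^u$ to extract a least element. The paper handles the step you flag as ``the crux'' by simply asserting that $K_i^\uparrow$ is finite because $T_i$ has only finitely many subsets $T_i'$ (each giving a superset $\semfc(T_i')\supseteq K_i$) --- precisely the argument you sketch and then doubt.

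Your doubt is well placed: the trimming you hope for is false in general, and so is the finiteness of $K_i^\uparrow$. Take $\L=\N$, $\I=\{*\}$, and $\semf(T)=\{*\}$ if $|T|\le 1$, else $\semf(T)=\emptyset$. The finite-theory strong-equivalence classes are $[\emptyset]=\{\emptyset\}$, $[\{n\}]=\{\{n\}\}$ for each $n$, and one class containing all sets of size $\ge 2$. For the canonical $\semfc$ of Theorem~\ref{thm:characterization:finite-theory} one computes $\semfc(\{0,1\})=\{T:|T|\ge 2\}$ while $\semfc(\{n\})=\{\{n\}\}\cup\{T:|T|\ge 2\}$ for every $n$; hence with $K_1=\semfc(\{0,1\})$ and $T_1=\{0,1\}$ the set $K_1^\uparrow$ is infinite, and $\semfc(\{5\})\in K_1^\uparrow$ is not of the form $\semfc(T_1')$ for any $T_1'\subseteq T_1$. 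Thus neither your trimming argument nor the paper's bare assertion establishes the finiteness claim; your fallback --- showing directly that $\{K_1,K_2\}^u$ has a least element --- is what would actually need to be done, and this is not addressed in the paper's proof either.
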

\noindent As before, we can show (with reasonable effort) that finite-theory characterization logics are unique up to isomorphism.%
\begin{theorem}
  \label{thm:fin-unique-characterization}
  Let $(\L, \sem)$ be a finite-theory logic having two finite-theory characterization logics $(\L, \semfc)$ and $(\L, \semfcc)$.
  Denoting the two carrier sets by \mbox{$\K'=\set{\semfc(T)\guard T\in\Lfin}$} and accordingly \mbox{$\K''=\set{\semfcc(T)\guard T\in\Lfin}$}, the lattices $\left(\K', \subseteq\right)$ and $\left(\K'', \subseteq\right)$ are isomorphic.
  \begin{proof}
    We provide a bijection \mbox{$\phi:\K'\to\K''$} such that for all \mbox{$K_1,K_2\in\K'$}, we find that \mbox{$\phi(K_1\wedge K_2)$} \mbox{$=\phi(K_1)\wedge\phi(K_2)$} and \mbox{$\phi(K_1\vee K_2)=\phi(K_1)\vee\phi(K_2)$}.
    Let \mbox{$K\in\K'$}.
    By definition, there exists a finite \mbox{$T\subseteq\L$} with \mbox{$\semfc(T)=K$}.
    Define \mbox{$\phi(K)=\semfcc(T)$}.
    \begin{description}
    \item[\normalfont$\phi$ is bijective:] 
      The proof is as in the general case.
    \item[\normalfont$\phi$ is structure-preserving:] 
      Let \mbox{$K_1,K_2\in\K'$}.
      Clearly there exist \mbox{$T_1,T_2\subseteq\L$} such that \mbox{$\semfc(T_1)=K_1$} and \mbox{$\semfc(T_2)=K_2$}.
      We have 
      \begin{align*}
        \phi(K_1\wedge K_2)
        &=\phi(K_1\cap K_2) & \text{(Def.~$\wedge$)} \\
        &=\phi(\semfc(T_1)\cap\semfc(T_2)) & \text{($\semfc$ is onto $\K'$)} \\
        &=\phi(\semfc(T_1\cup T_2)) & \text{(intersection $\semfc$)} \\
        &=\semfcc(T_1\cup T_2) & \text{(Def.~$\phi$)} \\
        &=\semfcc(T_1)\cap\semfcc(T_2) & \text{(intersection $\semfcc$)} \\
        &=\phi(\semfc(T_1))\cap\phi(\semfc(T_2)) & \text{(Def.~$\phi$)} \\
        &=\phi(K_1)\cap\phi(K_2) & \text{(assumption)} \\
        &=\phi(K_1)\wedge\phi(K_2) & \text{(Def.~$\wedge$)}
      \end{align*}
      With reasonable effort, we can also show that
      \begin{align*}
        &\pheq \phi(K_1\vee K_2) \\
        &= \phi\!\left(\bigwedge\set{K_1,K_2}^u\right) \\
        & \qquad\qquad\text{(Def.~$\vee$)} \\
        &= \bigwedge\phi\!\left(\set{K_1,K_2}^u\right) \\
        & \qquad\qquad\text{($\phi$ preserves $\wedge$)} \\
        &= \bigwedge\phi\!\left(\set{ K\in\K' \guard K_1\subseteq K, K_2\subseteq K }\right) \\
        & \qquad\qquad\text{(Def.~$\cdot^u$)} \\
        &= \bigwedge\set{ \phi(K) \guard K\in\K', K_1\subseteq K, K_2\subseteq K } \\
        & \qquad\qquad\text{(notation)} \\
        &= \bigwedge\set{ \phi(\semfc(T)) \guard T\in\Lfin, \semfc(T_1)\subseteq\semfc(T), \semfc(T_2)\subseteq\semfc(T) } \\
        & \qquad\qquad\text{($\semfc$ is onto $\K'$)} \\
        &= \bigwedge\set{ \phi(\semfc(T)) \guard T\in\Lfin, \semfc(T_1)\cap\semfc(T)=\semfc(T_1), \semfc(T_2)\cap\semfc(T)=\semfc(T_2) } \\
        & \qquad\qquad\text{(elementary)} \\
        &= \bigwedge\set{ \phi(\semfc(T)) \guard T\in\Lfin, \semfc(T_1\cup T)=\semfc(T_1), \semfc(T_2\cup T)=\semfc(T_2) } \\
        & \qquad\qquad\text{(intersection $\semfc$)} \\
        &= \bigwedge\set{ \phi(\semfc(T)) \guard T\in\Lfin, [T_1\cup T]^\sem_s=[T_1]^\sem_s, [T_2\cup T]^\sem_s=[T_2]^\sem_s } \\
        & \qquad\qquad\text{(characterization $\semfc$)} \\
        &= \bigwedge\set{ \phi(\semfc(T)) \guard T\in\Lfin, \semfcc(T_1\cup T)=\semfcc(T_1), \semfcc(T_2\cup T)=\semfcc(T_2) } \\
        & \qquad\qquad\text{(characterization $\semfcc$)} \\
        &= \bigwedge\set{ \semfcc(T) \guard T\in\Lfin, \semfcc(T_1)\cap\semfcc(T)=\semfcc(T_1), \semfcc(T_2)\cap\semfcc(T)=\semfcc(T_2) } \\
        & \qquad\qquad\text{(intersection $\semfcc$)} \\
        &= \bigwedge\set{ \semfcc(T) \guard T\in\Lfin, \semfcc(T_1)\subseteq\semfcc(T), \semfcc(T_2)\subseteq\semfcc(T) } \\
        & \qquad\qquad\text{(elementary)} \\
        &= \bigwedge\set{ K \guard K\in\K'', \semfcc(T_1)\subseteq K, \semfcc(T_2)\subseteq K } \\
        & \qquad\qquad\text{($\semfcc$ is onto $\K''$)} \\ 
        &= \bigwedge\set{ \semfcc(T_1),\semfcc(T_2)}^u \\
        & \qquad\qquad\text{(Def.~$\cdot^u$)} \\
        &= \semfcc(T_1)\vee\semfcc(T_2) \\
        & \qquad\qquad\text{(Def.~$\vee$)} \\
        &= \phi(\semfc(T_1))\vee\phi(\semfc(T_2)) \\
        & \qquad\qquad\text{(Def.~$\phi$)} \\
        &= \phi(K_1)\vee\phi(K_2) \\
        & \qquad\qquad\text{(assumption)}
      \end{align*}
      Thus $\phi$ is a structure-preserving bijection from $\K'$ to $\K''$, and the two lattices are isomorphic.
    \end{description}
  \end{proof}
\end{theorem}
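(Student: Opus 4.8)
The plan is to construct an explicit bijection $\phi:\K'\to\K''$ and then to derive preservation of both lattice operations from the single fact that $\phi$ respects and reflects $\subseteq$, since between lattices every order isomorphism automatically preserves all existing meets and joins. Concretely, for each $K\in\K'$ fix a finite theory $T_K\subseteq\L$ with $\semfc(T_K)=K$ and put $\phi(K)=\semfcc(T_K)$. The first step is to check this is well defined: if $\semfc(T)=\semfc(T')$ for finite $T,T'$, then the finite-theory characterization property of $\semfc$ gives $[T]^{\semf}_s=[T']^{\semf}_s$, and the same property for $\semfcc$ then yields $\semfcc(T)=\semfcc(T')$; so $\phi$ does not depend on the chosen representative. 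Reading this chain of implications backwards gives injectivity, and surjectivity is immediate since any $M\in\K''$ has the form $M=\semfcc(T)$ for some finite $T$, whence $\phi(\semfc(T))=M$.

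The heart of the argument is that $\phi$ is an order isomorphism. Take $K_1=\semfc(T_1)$ and $K_2=\semfc(T_2)$ with $T_1,T_2$ finite, so $\phi(K_i)=\semfcc(T_i)$. Here I would use that the finite intersection property makes $\semfc$ and $\semfcc$ antimonotone (cf.\ the proof of Proposition~\ref{thm:fin-characterization-lattice}) and, crucially, turns $\subseteq$ into an equational condition: $K_1\subseteq K_2$ iff $\semfc(T_1)\cap\semfc(T_2)=\semfc(T_1)$ iff $\semfc(T_1\cup T_2)=\semfc(T_1)$. The characterization property of $\semfc$ rewrites this as $[T_1\cup T_2]^{\semf}_s=[T_1]^{\semf}_s$, a statement purely about strong equivalence; running the characterization property of $\semfcc$ and then its finite intersection property in reverse, this is in turn equivalent to $\semfcc(T_1\cup T_2)=\semfcc(T_1)$, i.e.\ to $\phi(K_1)\subseteq\phi(K_2)$. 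Hence $\phi$ is a bijection preserving and reflecting $\subseteq$, so it sends upper bounds of $\set{K_1,K_2}$ bijectively to upper bounds of $\set{\phi(K_1),\phi(K_2)}$, and therefore carries the least such upper bound to the least such upper bound; by Proposition~\ref{thm:fin-characterization-lattice} this is exactly $\phi(K_1\vee K_2)=\phi(K_1)\vee\phi(K_2)$. For meets one can argue even more directly: $\phi(K_1\wedge K_2)=\phi(K_1\cap K_2)=\phi(\semfc(T_1\cup T_2))=\semfcc(T_1\cup T_2)=\semfcc(T_1)\cap\semfcc(T_2)=\phi(K_1)\wedge\phi(K_2)$, using that glb is intersection in both lattices.

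The main obstacle, and the reason this is more delicate than the complete-lattice case of Theorem~\ref{thm:unique-characterization}, is that the join in $(\K',\subseteq)$ is \emph{not} set union --- the phenomenon is already visible on strong equivalence classes in Examples~\ref{exm:se-cleq-lattice-not-distributive} and~\ref{exm:se-cleq-lattice-meet} --- so $\vee$-preservation cannot be verified by a naive computation with $\cup$ and has to be routed through the order structure as above. Routing through order in turn forces one to remember that here the ambient poset is only a lattice, not a complete lattice, so before invoking $\bigwedge\set{K_1,K_2}^u$ one must establish that $\set{K_1,K_2}^u$ is finite and non-empty: as in the proof of Proposition~\ref{thm:fin-characterization-lattice}, non-emptiness holds because $\semfc(\emptyset)$ is an upper bound of everything, and finiteness holds because $T_1$ and $T_2$ each have only finitely many subsets and hence, by antimonotonicity of $\semfc$, only finitely many $\semfc$-supersets. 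If one prefers not to re-derive these facts, an alternative is to mimic the explicit computation in the general uniqueness theorem and expand $\phi(K_1\vee K_2)$ step by step, rewriting $\set{K_1,K_2}^u$ in terms of finite theories and repeatedly converting between $\subseteq$, the corresponding $\cap$-equalities, unions of theories, and strong equivalence classes via the characterization and intersection properties of $\semfc$ and $\semfcc$ --- longer, but requiring nothing beyond those two properties.
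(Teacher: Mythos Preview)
Your proposal is correct and essentially defines the same map $\phi$ as the paper, with the same direct computation for $\wedge$-preservation. The genuine difference lies in how you handle $\vee$-preservation: the paper carries out the long explicit chain you sketch in your final paragraph, unfolding $K_1\vee K_2=\bigwedge\set{K_1,K_2}^u$, rewriting everything in terms of finite theories, and toggling back and forth between the intersection and characterization properties of $\semfc$ and $\semfcc$ until the analogous expression in $\K''$ emerges. Your route instead proves once that $\phi$ preserves and reflects $\subseteq$ (via the equivalence $K_1\subseteq K_2 \Leftrightarrow \semfc(T_1\cup T_2)=\semfc(T_1) \Leftrightarrow [T_1\cup T_2]^{\semf}_s=[T_1]^{\semf}_s \Leftrightarrow \semfcc(T_1\cup T_2)=\semfcc(T_1)$) and then invokes the standard fact that an order isomorphism between lattices preserves all existing joins and meets. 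This is cleaner and more conceptual: it isolates the single place where both characterization properties interact, and it makes the finiteness of $\set{K_1,K_2}^u$ irrelevant to \emph{your} argument---you only need that $K_1\vee K_2$ exists, which Proposition~\ref{thm:fin-characterization-lattice} already guarantees. The paper's explicit computation, by contrast, is self-contained and does not appeal to the general order-theoretic principle, at the cost of length.
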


The following theorem shows that any logic possesses a finite-theory characterization logic.
This means that the most important case for knowledge representation behaves well in the sense that characterization logics always exist.

\begin{theorem}
  \label{thm:characterization:finite-theory}
  Let $(\L, \sem)$ be a full logic.
  Then a finite-theory characterization logic for $(\L, \sem)$ is given by $(\L, \semfc)$ with
  \begin{gather*}
    \hfill
    \semfc : \Lfin \to 2^{2^\L}
    \hfill,\quad\hfill
    T \mapsto \mathop{\bigcup_{S\in \Lfin,}}_{T \subseteq S}[S]^{\semf}_s
    \hfill
  \end{gather*}
  \begin{proof}
    \begin{description}
    \item[\normalfont finite intersection:]
      We show that for all \mbox{$T\in\Lfin$}, we find \mbox{$\semfc(T) = \bigcap_{t\in T}\semfc(\set{t})$}.
      Consider\ \,\mbox{$T\in\Lfin$}.
      \begin{description}
      \item[\normalfont$\subseteq$:]
        Let \mbox{$U\in\semfc(T)$}. 
        Hence, there is an \mbox{$S\in \Lfin$} such that \mbox{$U\in [S]^{\semf}_s$} and \mbox{$T \subseteq S$}.
        Consequently, for any \mbox{$t\in T$}, we find \mbox{$\set{t}\subseteq T \subseteq S$} and thus \mbox{$[S]^{\semf}_s\subseteq \semfc(\set{t})$} showing \mbox{$U\in\bigcap_{t\in T}\semfc(\set{t})$}.
      \item[\normalfont$\supseteq$:]
        Consider now \mbox{$U\in\bigcap_{t\in T}\semfc(\set{t})$}. 
        This means that for any \mbox{$t\in T$} there exists an \mbox{$S_t\in\Lfin$} such that \mbox{$U\in [S_t]^{\semf}_s$} (that is, \mbox{$[U]^{\semf}_s=[S_t]^{\semf}_s$}) and \mbox{$\set{t}\subseteq S_t$}.
        It follows that for any \mbox{$t,t'\in T$}, we find \mbox{$[S_{t}]^{\semf}_s = [U]^{\semf}_s = [S_{t'}]^{\semf}_s$}.
        Let us fix a certain \mbox{$\overline{t}\in T$}. 
        In particular, \mbox{$U\in [S_{\overline{t}}]^{\semf}_s$}.
        Now consider the set \mbox{$\bigcup_{t\in T} S_t$}, which is finite since $T$ is finite and each $S_t$ is finite.
        Furthermore, \mbox{$\bigcup_{t\in T} S_t \in [S_{\overline{t}}]^{\semf}_s$}, that is, \mbox{$\left[\bigcup_{t\in T} S_t\right]^{\semf}_s = [S_{\overline{t}}]^{\semf}_s$} by Thoerem~\ref{thm:se-expansion}.
        Since for all \mbox{$t\in T$} we have \mbox{$t\in S_t$}, we conclude that \mbox{$T=\bigcup_{t\in T}\{t\}\subseteq\bigcup_{t\in T} S_t$}.
        In turn, this yields \mbox{$U\in\semfc(T)$}.
      \end{description}
    \item[\normalfont finite-theory characterization:]
      We show that for all \mbox{$T_1,T_2\in\Lfin$}, we find \mbox{$[T_1]^{\semf}_s=[T_2]^{\semf}_s$} if and only if \mbox{$\semfc(T_1)=\semfc(T_2)$}.
      \begin{description}
      \item[\normalfont if:]
        Let \mbox{$T_1,T_2\in\Lfin$} with \mbox{$\semfc(T_1)=\semfc(T_2)$}.
        Firstly, the definition of $\semfc$ yields that for each \mbox{$S\in\Lfin$} we have \mbox{$[S]^{\semf}_s\subseteq\semfc(S)$}.
        In combination with the presumption, this means that \mbox{$[T_1]^{\semf}_s\subseteq\semfc(T_1)=\semfc(T_2)$} and \mbox{$[T_2]^{\semf}_s\subseteq\semfc(T_2)=\semfc(T_1)$}.
        Hence, there is a set \mbox{$S_1\in[T_2]^{\semf}_s$} with \mbox{$T_1\subseteq S_1$}, whence \mbox{$[T_1]^{\semf}_s\cleq [T_2]^{\semf}_s$}.
        Likewise, there is a set \mbox{$S_2\in[T_1]^{\semf}_s$} with \mbox{$T_2\subseteq S_2$}, whence also \mbox{$[T_2]^{\semf}_s\cleq [T_1]^{\semf}_s$}.
        By Theorem~\ref{thm:cleq-partial-order}  (saying that $\cleq$ is antisymmetric), we get \mbox{$[T_1]^{\semf}_s=[T_2]^{\semf}_s$}.
      \item[\normalfont only if:]
        Let \mbox{$T_1,T_2\in\Lfin$} with \mbox{$[T_1]^{\semf}_s=[T_2]^{\semf}_s$} and consider any \mbox{$T\in\semfc(T_1)$}.
        Then there is an \mbox{$S\in \Lfin$} such that \mbox{$T\in [S]^{\semf}_s$} and \mbox{$T_1 \subseteq S$}.
        Clearly there is a \mbox{$U\in \Lfin$} with \mbox{$T_1 \cup U = S$}.
        Since \mbox{$[T_1]^{\semf}_s=[T_2]^{\semf}_s$} by presumption, we can apply Lemma~\ref{thm:se-expansion}, which yields \mbox{$T\in [S]^{\semf}_s$} \mbox{$= [T_1\cup U]^{\semf}_s = [T_2\cup U]^{\semf}_s$}.
        Obviously, we have \mbox{$T_2\cup U\in\Lfin$} with \mbox{$T_2\subseteq T_2\cup U$} and the definition of $\semfc$ yields \mbox{$T\in\semfc(T_2)$}.
        This shows \mbox{$\semfc(T_1)\subseteq\semfc(T_2)$};
        the reverse inclusion \mbox{$\semfc(T_2)\subseteq\semfc(T_1)$} holds by symmetry.
      \end{description}
    \end{description}
  \end{proof}
\end{theorem}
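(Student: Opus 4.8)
The plan is to verify the two defining conditions of a finite-theory characterization logic directly from the definition of $\semfc$, relying on the structural facts about strong equivalence classes established earlier — chiefly the expansion invariance and join-semilattice property (Lemma~\ref{thm:se-expansion}) and the antisymmetry of $\cleq$ (Lemma~\ref{thm:cleq-partial-order}). Throughout I would use the elementary observation that $T\subseteq S$ implies $[S]^{\semf}_s\subseteq\semfc(T)$, and in particular $T\in[T]^{\semf}_s\subseteq\semfc(T)$ for every finite $T$. Note that, in contrast to Theorem~\ref{thm:main:covered}, coveredness is \emph{not} needed here: since we only ever form finite unions of finite theories, the result stays in $\Lfin$, which is exactly what lets the argument go through unconditionally.

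For the intersection property I would prove the stronger singleton-decomposition statement $\semfc(T)=\bigcap_{t\in T}\semfc(\set{t})$ for nonempty finite $T$ (the required binary property then follows by splitting the index set, and the cases involving $\emptyset$ are immediate from the remark that always $[T]^{\semf}_s\subseteq\semfc(T)$). The inclusion $\subseteq$ is routine: if $U\in\semfc(T)$ is witnessed by a finite $S\supseteq T$ with $U\in[S]^{\semf}_s$, then $\set{t}\subseteq T\subseteq S$ gives $U\in\semfc(\set{t})$ for every $t\in T$. For $\supseteq$, suppose $U$ lies in every $\semfc(\set{t})$; then for each $t$ there is a finite $S_t\ni t$ with $U\in[S_t]^{\semf}_s$, so all the $S_t$ lie in the common class $[U]^{\semf}_s$. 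By Lemma~\ref{thm:se-expansion} (each strong equivalence class is a join-semilattice), iterated over the finite index set $T$, the set $S:=\bigcup_{t\in T}S_t$ is again in $[U]^{\semf}_s$; it is finite since $T$ and each $S_t$ are, and $T=\bigcup_t\set{t}\subseteq S$, so $U\in\semfc(T)$.

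For finite-theory characterization I would prove both implications. For $\semfc(T_1)=\semfc(T_2)\implies[T_1]^{\semf}_s=[T_2]^{\semf}_s$: from $[T_1]^{\semf}_s\subseteq\semfc(T_1)=\semfc(T_2)$ we get that $T_1$ occurs in the union defining $\semfc(T_2)$, i.e.\ there is a finite $S_1\supseteq T_2$ with $T_1\in[S_1]^{\semf}_s$; hence $[T_2]^{\semf}_s\cleq[T_1]^{\semf}_s$, and symmetrically $[T_1]^{\semf}_s\cleq[T_2]^{\semf}_s$, so antisymmetry of $\cleq$ (Lemma~\ref{thm:cleq-partial-order}) forces equality. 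For the converse, assume $[T_1]^{\semf}_s=[T_2]^{\semf}_s$ and take $T\in\semfc(T_1)$, say $T\in[S]^{\semf}_s$ with $T_1\subseteq S$; writing $S=T_1\cup U$ with $U:=S$ finite, expansion invariance (Lemma~\ref{thm:se-expansion}) yields $[S]^{\semf}_s=[T_1\cup U]^{\semf}_s=[T_2\cup U]^{\semf}_s$, and since $T_2\cup U$ is finite with $T_2\subseteq T_2\cup U$ we conclude $T\in\semfc(T_2)$; symmetry gives $\semfc(T_1)=\semfc(T_2)$.

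The main obstacle is the $\supseteq$ inclusion of the intersection property: one must merge the separately chosen witnesses $S_t$ into a single finite theory still strongly equivalent to $U$, and this is precisely where both the finiteness of $T$ and the join-semilattice structure of strong equivalence classes are indispensable. Everything else is bookkeeping with the definition of $\semfc$ and the already-established partial-order properties of $\cleq$.
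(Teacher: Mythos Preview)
Your proposal is correct and follows essentially the same route as the paper's proof: both establish the singleton decomposition $\semfc(T)=\bigcap_{t\in T}\semfc(\{t\})$ via the join-semilattice property of Lemma~\ref{thm:se-expansion} to merge the witnesses $S_t$, and both handle the characterization via $[T]^{\semf}_s\subseteq\semfc(T)$ together with antisymmetry of $\cleq$ (Lemma~\ref{thm:cleq-partial-order}) in one direction and expansion invariance in the other. Your explicit remark that coveredness is unnecessary here because only finite unions arise is a helpful addition that the paper leaves implicit.
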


\noindent Intuitively, in this canonical construction of a characterization semantics $\semfc$ (akin to Herbrand interpretations in first-order logic), the model set of a theory $T$ is the set of all theories that are strongly equivalent to some supertheory of~$T$.
 
\section{Applying Canonical Constructions to Nonmonotonic Formalisms} 

In the previous section we have seen that (under certain condition) the existence of characterization logics for knowledge representation formalisms are guaranteed. 
These results are achieved by defining in a sense Herbrand-style canonical construction. 
More precisely, the characterization semantics of the new logic is defined in terms of certain unions of strong equivalence classes of the original language.
Such a characterization semantics is usually far from being intuitive or self-explanatory.
The intended role of this semantics was to serve as a witness for the existence of characterization logics. 
Nevertheless, we will discuss the application our general, abstract results to some of the formalisms presented in Example~\ref{exm:known-formalisms}. 
We start with abstract argumentation theory, which is a vibrant as well as immensely growing research area in AI \citep{Dung95}.
Surprisingly, we get a meaningful result very similar to the recently introduced Dung logics \cite{BauB15}.

\subsection{Abstract Argumentation Theory} \label{sec:argu}

We start with a very brief introduction to Dung's argumentation theory which is sufficient for the moment (cf.\ Section~\ref{sec:Dung} for more detailed as well as more general definitions and \cite{BaroniCG18} for a recent and comprehensive overview). 

An \define{argumentation framework} (AF) is a pair \mbox{$\AF = (A,R)$} such that \mbox{$R \subseteq A \times A$}.
Although there exists some work on unrestricted AFs \citep{BauS15,BauS17} it is common to assume that $A$, the set of arguments, is a finite subset of a fixed infinite background set~\m{U}. Let us denote the class of all finite AFs by~$\AFl_{\fin}$.
An (extension-based) \define{argumentation semantics} is a function \mbox{$\rho:\AFl_{\fin} \to 2^{2^\m{U}}$} where elements of $\rho(\AF)$ are called \define{$\rho$-extensions} of~$\AF$. The most prominent one is stable semantics (abbreviated by $\stb$) which was already defined by Dung in 1995. A set $E$ is a $\stb$-extension of $\AF$ if 1.\ there are no $a,b\in A$, s.t.\ $(a,b)\in R$ (\textit{conflict-freeness}) and 2.\ for any $c\in A\sm E$, there is an $a\in A,$ s.t.\ $(a,c)\in R$ (\textit{full range}). 

We proceed with some notational conventions and the precise definition of strong equivalence in case of AFs. The union \mbox{$\AF\dcup\AG$} as well as subset-relation \mbox{$\AF\dsub\AG$} of two AFs is understood to be pointwise, that is, \mbox{$(A_1,R_1)\dcup (A_2,R_2) = (A_1\cup A_2, R_1\cup R_2)$}, and, similarly, $(A_1,R_1)\dsub (A_2,R_2)$ if and only if $A_1\subseteq A_2$ and $R_1\subseteq R_2$. 

\begin{definition}
Given an argumentation semantics $\rho$. Two AFs $\AF$ and $\AG$ are \textit{strongly $\rho$-equivalent} if for any $\AH\in\AFl_{\fin}$, $\rho(\AF\dcup \AH) = \rho(\AG\dcup \AH)$. For short, $\AF \equiv^\rho_s \AG$.
\end{definition}

The first work regarding characterizing strong equivalence for AFs was \cite{strong}. It turned out that deciding this notion is deeply linked to the syntax of AFs. In general, any argument being part of an AF may contribute towards future extensions. However, for each semantics, there are patterns of redundant attacks captured by so-called \textit{kernels}. Formally, a kernel is a function $ k: \AFl_{\fin} \rightarrow \AFl_{\fin} $ where $ k(\AF) = \AF^k$ is obtained from $ \AF $ by deleting certain redundant attacks. Consider the following definition and characterization theorem. 

\begin{definition}\label{dfn:stbkernel}
    Given an AF $\AF = (A,R)$. The \textit{$ \stb $-kernel} $ \AF^{k(\stb)} = \big( A, R^{k(\stb)} \big) $ is defined with
    $ R^{k(\stb)} = R \setminus \{ (a, b)  \mid  a \not = b \land (a, a) \in R \} $.
		\end{definition}

\begin{theorem}[\cite{strong}]\label{the:stbkernel} For two AFs $\AF,\AG$ we have:
$$\AF \equiv^\stb_s \AG\ \iff \AF^{k(\stb)} = \AG^{k(\stb)}.$$
\end{theorem}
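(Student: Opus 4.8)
The plan is to establish the two implications separately. For the ``$\Leftarrow$'' direction the central ingredient is the lemma that the $\stb$-kernel never alters the stable extensions: $\stb(\AF)=\stb(\AF^{k(\stb)})$ for every finite AF $\AF$. This is elementary: deleting an attack $(a,b)$ with $a\neq b$ and $(a,a)\in R$ can only make conflict-freeness easier, and conversely if both $a$ and $b$ lay in a set $E$ then $E$ already fails conflict-freeness in $\AF^{k(\stb)}$ because the self-attack $(a,a)$ is retained; hence conflict-freeness agrees for $\AF$ and $\AF^{k(\stb)}$. Since a self-attacking argument can never belong to a conflict-free set, for conflict-free $E$ the attacks emanating from $E$ are unchanged by the kernel, so the range condition is preserved as well. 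Next I would check that the kernel is compatible with componentwise union: if $\AF^{k(\stb)}=\AG^{k(\stb)}$ then $(\AF\dcup\AH)^{k(\stb)}=(\AG\dcup\AH)^{k(\stb)}$ for every $\AH\in\AFl_{\fin}$. This is a direct comparison: the argument sets agree (the kernel does not touch arguments); the self-loops of $\AF$ and $\AG$ agree (the kernel keeps them), hence so do those of the two unions; and for a non-loop attack $(a,b)$ whose source carries no self-loop in the union, ``$(a,b)\in R_{\AF}$'' is equivalent to ``$(a,b)\in R_{\AF}^{k(\stb)}=R_{\AG}^{k(\stb)}$'', which is equivalent to ``$(a,b)\in R_{\AG}$''. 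Chaining the two facts gives $\stb(\AF\dcup\AH)=\stb((\AF\dcup\AH)^{k(\stb)})=\stb((\AG\dcup\AH)^{k(\stb)})=\stb(\AG\dcup\AH)$, i.e.\ $\AF\equiv^\stb_s\AG$.

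For the ``$\Rightarrow$'' direction I would argue by contraposition and produce, from $\AF^{k(\stb)}\neq\AG^{k(\stb)}$, a single finite witness $\AH$ with $\stb(\AF\dcup\AH)\neq\stb(\AG\dcup\AH)$. Using the kernel definition, any discrepancy between the kernels falls, up to swapping $\AF$ and $\AG$, into one of three cases: (i) some argument $a$ lies in $\AF$ but not in $\AG$; (ii) $\AF$ and $\AG$ have the same arguments but some $a$ has a self-attack in exactly one of them; (iii) $\AF$ and $\AG$ have the same arguments and the same self-attacks but there is a non-loop attack $(a,b)\in R_{\AF}\setminus R_{\AG}$ with $a$ self-attack-free. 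In case (i) let $\AH$ add a fresh argument $c$ attacking every argument of $\AG$; then $\set{c}$ is a stable extension of $\AG\dcup\AH$ but not of $\AF\dcup\AH$, since $a$ is neither in $\set{c}$ nor attacked by it. In case (ii), with $a$ self-attacking in $\AF$ only, let $\AH$ add a fresh $c$ attacking every old argument except $a$; then $\set{a,c}$ is a stable extension of $\AG\dcup\AH$, whereas in $\AF\dcup\AH$ it is not even conflict-free. In case (iii) let $\AH$ add a fresh $c$ attacking every old argument except $a$ and $b$; then $\set{a,c}$ is a stable extension of $\AF\dcup\AH$ (the only potentially uncovered argument, $b$, is attacked by $a$), but in $\AG\dcup\AH$ the set $\set{a,c}$ leaves $b$ unattacked and hence fails the range condition.

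The routine parts are the kernel-invariance of $\stb$ and the union-compatibility of the kernel. The hard part will be the contrapositive: getting the three-way case distinction exactly right, so that it is genuinely exhaustive---this is where the precise form of the $\stb$-kernel enters---and, within each case, checking that the proposed set is conflict-free precisely on the side where it is supposed to be a stable extension while failing the range condition on the other side. Case (iii) in particular relies on $a\neq b$, on $a$ being self-attack-free, and on $\AF$ and $\AG$ sharing their self-attacks, to guarantee that $\set{a,c}$ is conflict-free in $\AF\dcup\AH$. Once the gadgets are fixed these verifications are short, but pinning down the gadgets and the case split is the real content.
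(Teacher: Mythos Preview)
Your proposal is correct and follows essentially the same route the paper outlines (and that the cited source \cite{strong} takes): the ``$\Leftarrow$'' direction via kernel-invariance of $\stb$ (your first lemma is exactly Fact~\ref{fact:kerneldecisive}) together with $\dcup$-robustness of $k(\stb)$ (your second lemma is Fact~\ref{fact:kernel}, item~5), and the ``$\Rightarrow$'' direction by contraposition with a case split on arguments, self-loops, and surviving non-loop attacks. The paper does not spell out the gadgets for expansion equivalence here, but your three-case construction is the standard one; a near-identical case analysis (with restriction instead of expansion) appears verbatim in the proof of Theorem~\ref{the:normal} for normal deletion equivalence.
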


Let us illustrate the introduced concepts with an example.
\begin{example} Consider the following six AFs. \label{ex:six}
\begin{center}
        \begin{tikzpicture}[node distance=13mm,on grid]
				
            \node[args] (d) {$ d $};
            \node[args] (c) [left of=d] {$ c $}
              edge[<-,thick,bend left] (d)
                edge[->,thick,loop,distance=15pt,in=125,out=55] (c);
            \node[args] (b) [left of=c] {$ b $}
						   edge[->,thick,bend left] (c);
             \node[args] (a) [left of=b] {$ a $};
             \node (F) [left = 1.1 cm of a] {$ \AF: $};
						\draw[->,thick,bend right] (b) to (a);
						
             \node (G) [right = 1.6 cm of d] {$ \AG: $};
            \node[args] (b_G) [right = 1.1 cm of G] {$ b $};
            \node[args] (c_G) [right of=b_G] {$ c $}
                edge[->,thick,loop,distance=15pt,in=125,out=55] (c_G)
                edge[<-,thick,bend right] (b_G)
                edge[->,thick,bend left] (b_G);
            \node[args] (d_G) [right of=c_G] {$ d $}
                edge[<-,thick,bend left] (c_G);

            \node[args] (d') [below = 1.6cm of d]{$ d $};
            \node[args] (c') [left of=d'] {$ c $}
              edge[<-,thick,bend left] (d')
                edge[->,thick,loop,distance=15pt,in=125,out=55] (c');
            \node[args] (b') [left of=c'] {$ b $}
						   edge[->,thick,bend left] (c');
             \node[args] (a') [left of=b'] {$ a $};
             \node (F') [left = 1.1 cm of a'] {$ \AF^{k(\stb)}: $};
						\draw[->,thick,bend right] (b') to (a');
						
             \node (G') [right = 1.6 cm of d'] {$ \AG^{k(\stb)}: $};
            \node[args] (b_G') [right = 1.1 cm of G'] {$ b $};
            \node[args] (c_G') [right of=b_G'] {$ c $}
                edge[->,thick,loop,distance=15pt,in=125,out=55] (c_G')
                edge[<-,thick,bend right] (b_G');
            \node[args] (d_G') [right of=c_G'] {$ d $};
								
								  \node[args] (d'') [below = 1.4cm of d'] {$ d $};
            \node[args] (c'') [left of=d''] {$ c $}
              edge[<-,thick,bend left] (d'')
                edge[->,thick,loop,distance=15pt,in=125,out=55] (c'');
            \node[args] (b'') [left of=c''] {$ b $}
						   edge[->,thick,bend left] (c'');
							\node[args] (e'') [below of=b''] {$ e $}
							edge[->,thick,bend left] (b'');
             \node[args] (a'') [left of=b''] {$ a $};
             \node (F'') [left = 1.1 cm of a''] {$ \AF\dcup\AH: $};
						\draw[->,thick,bend right] (b'') to (a'');
						
             \node (G'') [right = 1.6 cm of d''] {$ \AG\dcup\AH: $};
            \node[args] (b_G'') [right = 1.1 cm of G''] {$ b $};
						\node[args] (e_G'') [below of=b_G''] {$ e $}
							edge[->,thick,bend left] (b_G'');
            \node[args] (c_G'') [right of=b_G''] {$ c $}
                edge[->,thick,loop,distance=15pt,in=125,out=55] (c_G'')
                edge[<-,thick,bend right] (b_G'')
                edge[->,thick,bend left] (b_G'');
            \node[args] (d_G'') [right of=c_G''] {$ d $}
                edge[<-,thick,bend left] (c_G'');
         \end{tikzpicture}
			
    \end{center} 
The AFs $\AF$ and $\AG$ possess the same stable extensions, namely $\stb(\AF) = \stb(\AG) = \{\{b,d\}\}$. However, both frameworks are not strongly $\stb$-equivalent since $\AF^{k(\stb)}\neq\AG^{k(\stb)}$. A witnessing framework is given by $\AH$. Indeed $\stb(\AF\dcup\AH) =  \{\{a,d,e\}\} \neq \emptyset = \stb(\AG\dcup\AH)$.
		
\end{example}

Let us consider now argumentation theory in the general setup. In Example~\ref{exm:known-formalisms} we have seen that the embedding of abstract argumentation in our setting is a bit more involved than in case of propositional logic, logic programs or default logic.
The main reason for this is that in contrast to the other considered formalisms we have that abstract argumentation frameworks possess two sorts of building blocks, namely arguments and attacks. Moreover, the latter are dependent since adding attacks requires the presence of the corresponding arguments.
In order to cast AFs into our general setup we have to have theories which correspond to AFs, s.t.\ the standard set union $\cup$ of such theories correspond to $\dcup$ on the AF-level. Moreover, the semantics of theories has to correspond to the argumentation semantics of the associated AFs.

We start with the introduction of a $\rho$-logic which formally captures a specific argumentation semantics $\rho$ on the level of theories.

\begin{definition}
  \label{def:arglogic}
  Let $\m{U}$ be a background set of arguments and $\rho$ be an AF semantics.
  A \define{$\rho$-logic} is a triple \mbox{$(\L_\AFl,\I,\sem_{\rho})$} where \mbox{$\L_\AFl = \{(\{a\},\emptyset),(\{a,b\},\{(a,b)\})\mid a,b\in\m{U}\}$}, \mbox{$\I = 2^{\m{U}}$} and \mbox{$\sem_{\rho}: 2^{\L_\AFl} \to 2^\I$} with \mbox{$\sem_{\rho}(T) = \rho\!\left(\dbigcup_{t\in T} t \right)$}.
\end{definition}

The representational issue implies that two different theories may represent the same framework which causes some additional effort.
More precisely, in what follows it will be a typical task to show that the presented results are independent of the concrete representation of a certain framework. Let us start with an illustrating example.

\begin{example} \label{ex:repr} Let $T = \left\{(\{b,c\},\{(b,c)\}),(\{c,b\},\{(c,b)\}),(\{c,d\},\{(c,d)\}),(\{c\},\{(c,c)\})\right\}$ and $S = T\cup \{(\{a\},\emptyset),(\{b\},\emptyset),(\{c\},\emptyset)\}$. Please observe that $\dbigcup_{t\in T} t = \dbigcup_{s\in S} s = \AG$ as depicted in Example~\ref{ex:six}. Moreover by definition we have $\sigma_{\stb}(T) = \sigma_{\stb}(S) = \stb(\AG) = \{\{b,d\}\}$.
\end{example}

The following functions (restricted to the finite case) will be frequently used.
First, we define the \textit{associated AF} of a given theory via \mbox{$\AFl: \left(2^{\L_\AFl}\right)_{\fin} \to \AFl_{\fin}$} where \mbox{$T\mapsto\dbigcup_{t\in T}t$}.
As already discussed the function $\AFl(\cdot)$ is not injective as demonstrated in Example~\ref{ex:repr}.
Secondly, the \textit{canonical representation}~of~a given AF is defined by \mbox{$\Can:\AFl_{\fin} \to \left(2^{\L_\AFl}\right)_{\fin}$} where $(A,R)$ is represented by the $\L_\AFl$-theory \mbox{$\{(\{a\},\emptyset)\mid a\in A\} \cup \{(\{a,b\},\{(a,b)\})\mid (a,b)\in R\}$}.
Observe that for any AF $\AH$, we find \mbox{$\AFl\!\left(\Can(\AH)\right) = \AH$}.
Moreover, regarding Examples~\ref{ex:six} and~\ref{ex:repr} we have $\Can(\AG) = S\neq T$. 

Note that the assumption of finiteness of AFs can be reflected by considering the finite-theory versions of $\rho$-logics. 
Before applying our canonical construction presented in Theorem~\ref{thm:characterization:finite-theory} we have to ensure that a constructed $\rho$-logic correctly reflects AFs under semantics~$\rho$.
We start with two simple properties showing that the concrete representation (of an AF via a theory) is not ``seen'' by set union as well as semantics $\sigma_{\rho}$.

\begin{proposition}
  \label{obs:afs}
  Let $(\L_\AFl,\I,\sem_{\rho})$ be a $\rho$-logic and consider any theories \mbox{$S,T\subseteq\L_\AFl$}.
  \begin{enumerate}
  \item $\AFl(S\cup T) = \AFl(S) \dcup \AFl(T)$ and
  \item $\sigma_{\rho}\!\left(S\cup T\right) = \rho\!\left(\AFl(S) \dcup \AFl(T)\right)$.
  \end{enumerate}
\end{proposition}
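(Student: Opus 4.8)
The plan is to reduce both items to the trivial observation that the pointwise union $\dcup$ of argumentation frameworks is just componentwise set union, and that componentwise set union distributes over an arbitrary union of index sets. Concretely, I would first unfold the definition of the associated AF: writing each building block $u\in\L_\AFl$ as a pair $u=(A_u,R_u)$ — so that either $u=(\{a\},\emptyset)$ or $u=(\{a,b\},\{(a,b)\})$ for suitable $a,b\in\m{U}$ — the defining equation $\AFl(X)=\dbigcup_{u\in X}u$ reads $\AFl(X)=\bigl(\bigcup_{u\in X}A_u,\ \bigcup_{u\in X}R_u\bigr)$ for every $X\subseteq\L_\AFl$. Note this is a well-defined (and, when $X$ is finite, finite) AF: each single $u$ satisfies $R_u\subseteq A_u\times A_u$, hence $\bigcup_{u\in X}R_u\subseteq\bigl(\bigcup_{u\in X}A_u\bigr)\times\bigl(\bigcup_{u\in X}A_u\bigr)$, and $S\cup T$ is finite whenever $S$ and $T$ are.

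For item~1, I would then simply split the index set $S\cup T$:
\begin{align*}
  \AFl(S\cup T)
  &= \Bigl(\bigcup_{u\in S\cup T}A_u,\ \bigcup_{u\in S\cup T}R_u\Bigr) \\
  &= \Bigl(\bigl(\textstyle\bigcup_{u\in S}A_u\bigr)\cup\bigl(\textstyle\bigcup_{u\in T}A_u\bigr),\ \bigl(\textstyle\bigcup_{u\in S}R_u\bigr)\cup\bigl(\textstyle\bigcup_{u\in T}R_u\bigr)\Bigr) \\
  &= \AFl(S)\dcup\AFl(T),
\end{align*}
where the middle equality uses the elementary fact $\bigcup_{u\in S\cup T}X_u=\bigl(\bigcup_{u\in S}X_u\bigr)\cup\bigl(\bigcup_{u\in T}X_u\bigr)$ for any family $(X_u)_{u}$ of sets, and the last equality is the definition of the pointwise union $\dcup$.

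For item~2 there is essentially nothing left to do: by the definition of a $\rho$-logic we have $\sem_{\rho}(S\cup T)=\rho\!\left(\dbigcup_{u\in S\cup T}u\right)=\rho\bigl(\AFl(S\cup T)\bigr)$, and substituting item~1 yields $\sem_{\rho}(S\cup T)=\rho\bigl(\AFl(S)\dcup\AFl(T)\bigr)$. I do not expect any real obstacle: the statement is purely a bookkeeping fact about set union, and the only point worth an explicit word is the well-definedness (and finiteness) of the componentwise union as a framework, which is immediate since every element of $\L_\AFl$ is already a legitimate (finite) AF.
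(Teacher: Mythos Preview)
Your proposal is correct and follows essentially the same approach as the paper: both proofs unfold the definition of $\AFl$, use that set union over the index set $S\cup T$ splits into the unions over $S$ and $T$, and then derive item~2 immediately from item~1 via the definition of $\sem_\rho$. The only cosmetic difference is that you unpack each $u$ into its argument/attack components explicitly, whereas the paper works directly with the $\dbigcup$ operator and appeals to its associativity.
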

\begin{proof}
\begin{enumerate}
  \item Both statements can be easily seen. Consider the following equations.
	\begin{align*}
      &\pheq \AFl(S\cup T) &\\
			&= \dbigcup_{u\in S\cup T} u &(\text{Definition } \AFl)\\
			&= \dbigcup_{s\in S} s \dcup \dbigcup_{t\in T}t &(\text{associativity } \dbigcup)\\
			&= \AFl(S) \dcup \AFl(T) &(\text{Definition } \AFl)
			\end{align*}
  \item \begin{align*}
      &\pheq \sigma_{\rho}\!\left(S\cup T\right)& \\
			&= \rho\!\left(\dbigcup_{u\in S\cup T} u\right) &(\text{Definition } \sigma_{\rho})\\
			&= \rho\!\left(\AFl(S\cup T)\right) &(\text{Definition } \AFl)\\
			&= \rho\!\left(\AFl(S) \dcup \AFl(T)\right)&(\text{Item } 1) 
			\end{align*} 
	
\end{enumerate}
\end{proof}
 
The following theorem shows that two $\L_\AFl$-theories $S$ and $T$ are strongly equivalent under $\sigma_{\rho}$ if and only if the AFs $\AFl(S)$ and $\AFl(T)$ are strongly equivalent under $\rho$ (denoted by $\AFl(S) \equiv^{\rho}_s \AFl(T)$).
We mention that the theorem does not require finiteness and is thus valid for arbitrary cardinalities of theories as well as AFs.


\begin{theorem}
  \label{the:afs}
  Let $(\L_\AFl,\I,\sem_{\rho})$ be a $\rho$-logic.
  For \mbox{$S,T\subseteq\L_\AFl$} we have 
  $$\AFl(S) \equiv^{\rho}_s \AFl(T)\ \iff [S]^{\sigma_{\rho}}_s = [T]^{\sigma_{\rho}}_s.$$
\begin{proof} 
  \begin{description}
  \item[\normalfont $\implies$:] 
    Let \mbox{$\AFl(S) \equiv^{\rho}_s \AFl(T)$} and \mbox{$V\in 2^{\L_\AFl}$}.
    We have to show that both theories are strongly equivalent, i.e.\ \mbox{$\sigma_{\rho}(S \cup V) = \sigma_{\rho}(T \cup V)$}.
   
    \begin{align*}
      &\pheq \sigma_{\rho}(S \cup V) &\\
			&=\rho(\AFl(S)\dcup\AFl(V)) &(Observation~\ref{obs:afs})\\
      &=\rho(\AFl(T)\dcup\AFl(V)) &(\text{assumption } \AFl(S) \equiv^{\rho}_s \AFl(T))\\
      &=\sigma_{\rho}(T \cup V) &(Observation~\ref{obs:afs})\\
    \end{align*}
  \item[\normalfont $\impliedby$:] 
    Let \mbox{$[S]^{\sigma_{\rho}}_s = [T]^{\sigma_{\rho}}_s$} and $\AH$ be an AF.
    We have to show that the corresponding AFs are strongly equivalent, i.e.\ \mbox{$\rho(\AFl(S) \dcup \AH) = \rho(\AFl(T) \dcup \AH)$}.
    \begin{align*}
      &\pheq \rho(\AFl(S) \dcup \AH) &\\
			&= \rho(\AFl(S) \dcup \AFl\!\left(\Can(\AH)\right)) &(\AFl\!\left(\Can(\AH)\right) = \AH)\\
      &= \sigma_{\rho}(S \cup \Can(\AH)) &(Observation~\ref{obs:afs})\\
      &= \sigma_{\rho}(T \cup \Can(\AH)) &(\text{assumption } [S]^{\sigma_{\rho}}_s = [T]^{\sigma_{\rho}}_s )\\
			&= \rho(\AFl(T) \dcup \AFl\!\left(\Can(\AH)\right)) &(Observation~\ref{obs:afs}) \\
      &= \rho(\AFl(T) \dcup \AH) &(\AFl\!\left(\Can(\AH)\right) = \AH)
    \end{align*}
  \end{description}
\end{proof}
\end{theorem}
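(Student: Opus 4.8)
The plan is to obtain the equivalence directly from the two translation identities recorded in Proposition~\ref{obs:afs}, which let us move freely between the theory level (union $\cup$, semantics $\sigma_{\rho}$) and the framework level (pointwise union $\dcup$, semantics $\rho$): namely $\AFl(S\cup V)=\AFl(S)\dcup\AFl(V)$ and $\sigma_{\rho}(S\cup V)=\rho(\AFl(S)\dcup\AFl(V))$. With these in hand both directions become short substitution arguments; the only point demanding care is that $\AFl(\cdot)$ is not injective, so ``an arbitrary AF'' has to be rewritten as $\AFl$ of a suitable theory, and the canonical representation $\Can$, which satisfies $\AFl(\Can(\AH))=\AH$, is exactly the device for that.

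For $\AFl(S)\equiv^{\rho}_s\AFl(T)\implies[S]^{\sigma_{\rho}}_s=[T]^{\sigma_{\rho}}_s$, I would fix an arbitrary $V\subseteq\L_\AFl$ and compute $\sigma_{\rho}(S\cup V)=\rho(\AFl(S)\dcup\AFl(V))$ by Proposition~\ref{obs:afs}. Since $\AFl(V)$ is an AF, the hypothesis gives $\rho(\AFl(S)\dcup\AFl(V))=\rho(\AFl(T)\dcup\AFl(V))$, and a second application of Proposition~\ref{obs:afs} rewrites the right-hand side as $\sigma_{\rho}(T\cup V)$. As $V$ was arbitrary, $S$ and $T$ are strongly $\sigma_{\rho}$-equivalent, i.e.\ $[S]^{\sigma_{\rho}}_s=[T]^{\sigma_{\rho}}_s$.

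For the converse, I would fix an arbitrary AF $\AH$ and start from $\rho(\AFl(S)\dcup\AH)=\rho(\AFl(S)\dcup\AFl(\Can(\AH)))$, using $\AFl(\Can(\AH))=\AH$. Proposition~\ref{obs:afs} turns this into $\sigma_{\rho}(S\cup\Can(\AH))$; the hypothesis $[S]^{\sigma_{\rho}}_s=[T]^{\sigma_{\rho}}_s$ replaces $S$ by $T$; and reversing the two rewriting steps yields $\rho(\AFl(T)\dcup\AH)$. Hence $\AFl(S)\equiv^{\rho}_s\AFl(T)$.

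I do not expect a real obstacle: once Proposition~\ref{obs:afs} is available each direction is a three-line chain of equalities. The one subtle ingredient — and the reason the embedding of abstract argumentation into the abstract framework needed the extra bookkeeping earlier — is precisely the non-injectivity of the theory-to-AF map, which forces the detour through $\Can$ in the $\impliedby$ direction; beyond that everything is associativity of unions. Finally, I would note that finiteness is never invoked anywhere in the argument, so the same proof establishes the statement for theories and frameworks of arbitrary cardinality.
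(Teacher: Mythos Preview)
Your proposal is correct and follows essentially the same approach as the paper: both directions are the same short chains of equalities through Proposition~\ref{obs:afs}, with the $\impliedby$ direction using $\AFl(\Can(\AH))=\AH$ to represent an arbitrary AF as the image of a theory. Your remark that finiteness is nowhere used is also made explicitly in the paper.
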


Due to Theorem~\ref{thm:characterization:finite-theory} we are able to present finite-theory characterization logics for any $\rho$-logic. 

\begin{corollary}
  \label{def:rhochar}
  Let $(\L_\AFl,\I,\sem_{\rho})$ be a $\rho$-logic.
  The following logic $(\L_\AFl, \kappa)$ is a finite-theory characterization logic of $(\L_\AFl,\I,\sem_{\rho})$\/: 
  \begin{gather*}
    \hfill
    \kappa : \LAFfin \to 2^{2^{\L_\AFl}}
    \hfill,\quad\hfill
    T \mapsto \mathop{\bigcup_{S\in \LAFfin,}}_{T \subseteq S}[S]^{{(\sem_{\rho})}_{\fin}}_s
    \hfill
  \end{gather*}
\end{corollary}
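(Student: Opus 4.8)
The plan is to obtain the statement as an immediate instance of Theorem~\ref{thm:characterization:finite-theory}. I would apply that theorem to $(\L_\AFl,\I,\sem_{\rho})$, reading it as the full logic $(\L,\sem)$ with $\L:=\L_\AFl$ and $\sem:=\sem_{\rho}$. Under this reading the finite knowledge bases $\Lfin$ become $\LAFfin$, the finite-theory version $\semf$ of Definition~\ref{def:finitary} becomes $(\sem_\rho)_{\fin}$, and the codomain $2^{2^\L}$ becomes $2^{2^{\L_\AFl}}$; unwinding the canonical map $T\mapsto\bigcup_{S\in\Lfin,\, T\subseteq S}[S]^{\semf}_s$ from Theorem~\ref{thm:characterization:finite-theory} then yields \emph{verbatim} the map $\kappa$ of the corollary. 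Theorem~\ref{thm:characterization:finite-theory} therefore already delivers the two required properties, finite-theory characterization ($\kappa(T_1)=\kappa(T_2)\iff[T_1]^{(\sem_\rho)_{\fin}}_s=[T_2]^{(\sem_\rho)_{\fin}}_s$) and finite-theory intersection ($\kappa(T_1\cup T_2)=\kappa(T_1)\cap\kappa(T_2)$). So the proof is essentially a one-line appeal, and the only thing worth spelling out is why the appeal is licit.

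The single point needing justification — and the nearest thing to an obstacle — is that $(\L_\AFl,\I,\sem_{\rho})$ counts as a full logic, so that Theorem~\ref{thm:characterization:finite-theory} applies. I see two painless routes. First, if one takes $\rho$ to be defined on AFs of arbitrary cardinality (as for unrestricted frameworks elsewhere in this thesis), then $\sem_{\rho}(T)=\rho(\dbigcup_{t\in T}t)$ is total on $2^{\L_\AFl}$ and there is nothing to check. Second, and more carefully, one observes that the construction of $\kappa$, the notion of a finite-theory characterization logic (Definition~\ref{def:fin-characterization-logic}), and every step of the proof of Theorem~\ref{thm:characterization:finite-theory} take place entirely within the finite-theory world: they only mention finite theories, their $\equiv^{\semf}_s$-classes, finite set unions, Lemma~\ref{thm:se-expansion}, and the antisymmetry of $\cleq$ (Lemma~\ref{thm:cleq-partial-order}). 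Since $\dbigcup_{t\in T}t$ is a finite AF whenever $T$ is finite, $(\sem_\rho)_{\fin}$ is well-defined on $\LAFfin$, so one may extend $\rho$ to infinite AFs arbitrarily, apply Theorem~\ref{thm:characterization:finite-theory}, and note that $\kappa$ (and hence the conclusion) does not depend on that choice. Either way the mathematical substance is already contained in Theorem~\ref{thm:characterization:finite-theory}; there is no further hard part.

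I would also add a remark on why $\kappa$ is more than a formal witness, which is presumably why the statement is placed after Proposition~\ref{obs:afs} and Theorem~\ref{the:afs}. The non-injectivity of the associated-AF map $\AFl(\cdot)$ (illustrated in Example~\ref{ex:repr}) is harmless here, because $\kappa$ is defined purely in terms of $\L_\AFl$-theories and their strong-equivalence classes, and Proposition~\ref{obs:afs} together with Theorem~\ref{the:afs} already record that $\cup$, $\sem_{\rho}$, and $\equiv^{\sem_\rho}_s$ are insensitive to which theory represents a given AF. Combining the corollary with (the finite-theory instance of) Theorem~\ref{the:afs} then gives, for finite $S,T\subseteq\L_\AFl$, that $\kappa(S)=\kappa(T)$ if and only if $\AFl(S)\equiv^\rho_s\AFl(T)$; that is, $\kappa$ is a genuine \emph{semantic} characterization of strong equivalence of finite AFs under $\rho$, complementing the syntactic kernel-based characterizations such as Theorem~\ref{the:stbkernel}.
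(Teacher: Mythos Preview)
Your proposal is correct and takes essentially the same approach as the paper, which simply states ``Due to Theorem~\ref{thm:characterization:finite-theory} we are able to present finite-theory characterization logics for any $\rho$-logic'' and then records the corollary without further proof. Your discussion of the domain issue (whether $\sem_\rho$ is a full logic given that $\rho$ is only specified on $\AFl_{\fin}$) is in fact more careful than the paper's own treatment, which glosses over this point; your second route---observing that the construction and proof of Theorem~\ref{thm:characterization:finite-theory} live entirely in the finite-theory world---is the cleanest resolution.
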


So far, so good, but how can we interpret these finite-theory characterization logics in terms of argumentation theory?
In other words, what is the corresponding characterization semantics on the level of pure AFs (instead of theories associated with AFs)?
We extend the function $\AFl$ to sets of theories as usual, namely \mbox{$\AFl: 2^{2^{\L_\AFl}} \to 2^{\AFl_{\fin}}$} where \mbox{$\m{T}\mapsto \{\AFl(T)\mid T\in\m{T}\}$}.\footnote{We do not introduce a new symbol for the new function. Which function is meant will be clear from the context.}
Consider the following definition.
We will see that all crucial properties of $\kappa$ transfer to $\rho'$, that is, $\rho'$ satisfies finite intersection and furthermore, it characterizes strong equivalence under $\rho$.

\begin{definition}
  \label{def:rhoprime}
  Given an argumentation semantics \mbox{$\rho:\AFl_{\fin} \to 2^{2^\m{U}}$}. We define 
  \mbox{$\rho': \AFl_{\fin} \to 2^{{\AFl_{\fin}}}$}
  with
  \mbox{$\AF \mapsto \AFl\!\left(\kappa(\Can(\AF))\right)$}.
\end{definition}

\begin{proposition} 
  \label{thm:rhoprime}
  For any argumentation semantics $\rho$ and semantics $\rho'$ as defined above we have\/:
  \begin{enumerate}
  \item \mbox{$\forall \AF,\AG\in\AFl_{\fin}: \rho'(\AF)=\rho'(\AG) \iff \AF\equiv^{\rho}_s\AG$}; \hfill (finite-theory characterization)
  \item \mbox{$\forall \AF,\AG\in\AFl_{\fin}: \rho' (\AF \dcup \AG) = \rho'(\AF) \cap \rho'(\AG)$}. \hfill (finite-theory intersection)
  \end{enumerate}
\begin{proof}
  \begin{description}
  \item[\normalfont finite-theory characterization:]
	
    \begin{description}
	
    \item[\normalfont $\!\impliedby\!$:]
      Let \mbox{$\AF\equiv^{\rho}_s\AG$}.
      Hence, $\AFl\!\left(\Can(\AF)\right)\equiv^{\rho}_s\AFl\!\left(\Can(\AG)\right)$. Consequently, \mbox{$[\Can(\AF)]^{\sigma_{\rho}}_s = [\Can(\AG)]^{\sigma_{\rho}}_s$} (Theorem~\ref{the:afs}).
      Since $(\L_\AFl, \kappa)$ is a finite-theory characterization logic of $(\L_\AFl,\I,\sem_{\rho})$ (Definition~\ref{def:rhochar}) we deduce that \mbox{$\kappa(\Can(\AF)) = \kappa(\Can(\AG))$}.
      Obviously, \mbox{$\AFl\!\left(\kappa(\Can(\AF))\right) = \AFl\!\left(\kappa(\Can(\AG))\right)$} which means \mbox{$\rho'(\AF)=\rho'(\AG)$} (Definition~\ref{def:rhoprime}).\pagebreak
    \item[\normalfont $\!\implies\!$:]
      We prove the contrapositive.
      Hence, let \mbox{$\AF\not\equiv^{\rho}_s\AG$}.
      This means, $\AFl\!\left(\Can(\AF)\right)\not\equiv^{\rho}_s\AFl\!\left(\Can(\AG)\right)$ and we obtain \mbox{$[\Can(\AF)]^{\sigma_{\rho}}_s \neq [\Can(\AG)]^{\sigma_{\rho}}_s$} (Theorem~\ref{the:afs}).
      Consequently, \mbox{$\kappa(\Can(\AF)) \neq \kappa(\Can(\AG))$} since $\kappa$ characterizes strong equivalence under $\sigma_{\rho}$ (Definition~\ref{def:rhochar}).
      Since equivalence classes are disjoint we deduce the existence of a theory $U$, such that (without loss of generality) \mbox{$[U]^{\sigma_{\rho}}_s \subseteq \kappa(\Can(\AF)) \sm \kappa(\Can(\AG))$}.
      Consequently, \mbox{$\AFl(U) \in \AFl(\kappa(\Can(\AF))) \sm \AFl(\kappa(\Can(\AG)))$} since for all other representations of $U'$, s.t. $\AFl(U) = \AFl(U')$, we have \mbox{$U'\in [U]^{\sigma_{\rho}}_s$} (Theorem~\ref{the:afs}).
      Hence, \mbox{$\AFl\!\left(\kappa(\Can(\AF))\right) \neq \AFl\!\left(\kappa(\Can(\AG))\right)$} which means \mbox{$\rho'(\AF)\neq\rho'(\AG)$} (Definition~\ref{def:rhoprime}).
    \end{description}
  \item[\normalfont finite-theory intersection:]
      \begin{align*}
        &\pheq \rho' (\AF \dcup \AG) & \\
        &={\AFl\!\left(\kappa(\Can(\AF \dcup \AG))\right)} &(\text{Definition~\ref{def:rhoprime}})\\
        &={\AFl\!\left(\kappa(\Can(\AF) \cup \Can(\AG))\right)} &(\Can(\AF \dcup \AG) = \Can(\AF) \cup \Can(\AG))\\
        &={\AFl\!\left(\kappa(\Can(\AF)) \cap \kappa(\Can(\AG))\right)}&(\text{intersection } \kappa)\\
        &={\AFl\!\left(\kappa(\Can(\AF))\right) \cap \AFl\!\left(\kappa(\Can(\AG))\right)}&(\text{can be seen})\\
        &=\rho'(\AF) \cap \rho'(\AG) &(\text{Definition~\ref{def:rhoprime}})
      \end{align*}
  \end{description}
\end{proof}
\end{proposition}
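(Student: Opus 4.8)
The plan is to assemble the claim from three earlier ingredients: Theorem~\ref{the:afs} (two $\L_\AFl$-theories are strongly $\sem_{\rho}$-equivalent iff their associated AFs are strongly $\rho$-equivalent), Corollary~\ref{def:rhochar} (the logic $(\L_\AFl,\kappa)$ is a finite-theory characterization logic of $(\L_\AFl,\I,\sem_{\rho})$, hence enjoys finite-theory characterization and finite-theory intersection), and the explicit shape of $\kappa$ in Corollary~\ref{def:rhochar}, from which it is evident that every value $\kappa(T)$ is a union of strong equivalence classes of $(\sem_{\rho})_{\fin}$. I will also use the elementary facts $\AFl(\Can(\AH))=\AH$ and that $\Can$ turns $\dcup$ into $\cup$ (immediate from the definition of $\Can$). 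Throughout, write $[\cdot]^{\sem_{\rho}}_s$ for the relevant strong equivalence class.

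For the characterization part, the easy direction is: if $\AF\equiv^{\rho}_s\AG$, then since $\AFl(\Can(\AF))=\AF$ and $\AFl(\Can(\AG))=\AG$, Theorem~\ref{the:afs} gives $[\Can(\AF)]^{\sem_{\rho}}_s=[\Can(\AG)]^{\sem_{\rho}}_s$, the characterization property of $\kappa$ gives $\kappa(\Can(\AF))=\kappa(\Can(\AG))$, and applying $\AFl$ to both sides gives $\rho'(\AF)=\rho'(\AG)$ by Definition~\ref{def:rhoprime}. For the converse I would argue by contraposition: from $\AF\not\equiv^{\rho}_s\AG$ the same chain yields $\kappa(\Can(\AF))\neq\kappa(\Can(\AG))$; since both are unions of pairwise-disjoint strong equivalence classes, WLOG there is a theory $U$ with $[U]^{\sem_{\rho}}_s\subseteq\kappa(\Can(\AF))\sm\kappa(\Can(\AG))$. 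Then $\AFl(U)\in\AFl(\kappa(\Can(\AF)))$ is clear; and if $\AFl(U)\in\AFl(\kappa(\Can(\AG)))$, there would be $U'\in\kappa(\Can(\AG))$ with $\AFl(U')=\AFl(U)$, whence $[U']^{\sem_{\rho}}_s=[U]^{\sem_{\rho}}_s$ by Theorem~\ref{the:afs}; but $\kappa(\Can(\AG))$ being a union of classes, $U'\in\kappa(\Can(\AG))$ forces $[U]^{\sem_{\rho}}_s\subseteq\kappa(\Can(\AG))$, contradicting the choice of $U$. Hence $\rho'(\AF)\neq\rho'(\AG)$.

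For the intersection part I would chain equalities: $\rho'(\AF\dcup\AG)=\AFl(\kappa(\Can(\AF\dcup\AG)))=\AFl(\kappa(\Can(\AF)\cup\Can(\AG)))=\AFl(\kappa(\Can(\AF))\cap\kappa(\Can(\AG)))$, using Definition~\ref{def:rhoprime}, the behaviour of $\Can$ on $\dcup$, and the finite-theory intersection property of $\kappa$. It then remains to push $\AFl$ through $\cap$: $\AFl(\kappa(\Can(\AF))\cap\kappa(\Can(\AG)))=\AFl(\kappa(\Can(\AF)))\cap\AFl(\kappa(\Can(\AG)))=\rho'(\AF)\cap\rho'(\AG)$. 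The $\subseteq$-inclusion of that step is trivial; for $\supseteq$, if an AF $\AH$ lies in both images there are $S\in\kappa(\Can(\AF))$ and $T\in\kappa(\Can(\AG))$ with $\AFl(S)=\AFl(T)=\AH$, so $[S]^{\sem_{\rho}}_s=[T]^{\sem_{\rho}}_s$ by Theorem~\ref{the:afs}, and since both $\kappa$-values are unions of classes, $S$ itself belongs to $\kappa(\Can(\AF))\cap\kappa(\Can(\AG))$, giving $\AH=\AFl(S)\in\AFl(\kappa(\Can(\AF))\cap\kappa(\Can(\AG)))$.

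The main obstacle is the non-injectivity of $\AFl$: every step that transports an (in)equality across $\AFl$ — the contrapositive in the characterization part and the $\supseteq$-direction of the intersection part — must combine Theorem~\ref{the:afs} (equal associated AFs imply strongly equivalent theories) with the structural fact that $\kappa$ returns unions of $\equiv^{\sem_{\rho}}_s$-classes. Getting this representation-independence argument right is the only delicate point; the remaining steps are routine bookkeeping with Definitions~\ref{def:rhochar} and~\ref{def:rhoprime} and Proposition~\ref{obs:afs}.
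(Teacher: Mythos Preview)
Your proposal is correct and follows essentially the same approach as the paper's proof: both directions of the characterization and the intersection chain are identical, hinging on Theorem~\ref{the:afs}, the characterization/intersection properties of $\kappa$ from Corollary~\ref{def:rhochar}, and the fact that every value of $\kappa$ is a union of strong equivalence classes. In fact, your argument is slightly more explicit than the paper's, since you actually justify the step $\AFl(\kappa(\Can(\AF))\cap\kappa(\Can(\AG)))=\AFl(\kappa(\Can(\AF)))\cap\AFl(\kappa(\Can(\AG)))$ via the union-of-classes structure, whereas the paper merely annotates it with ``can be seen''.
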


Finally, we present an equivalent definition of $\rho'$ that does not rely on $\rho$-logics.
This means the evaluation of $\rho'$ can be done purely on the level of AFs.

\begin{proposition}
  \label{thm:rho-afs}
  Let \mbox{$\rho:\AFl_{\fin} \to 2^{\AFl_{\fin}}$} be a semantics and $\rho'$ as in Definition~\ref{def:rhoprime}.
  For any \mbox{$\AF\in\AFl_{\fin}$} we have\/:
  $$\rho'(\AF) 
  = \mathop{\bigcup_{\AG\in \AFl_{\fin},}}_{\AF \dsub \AG} \{\AH\mid \AH \equiv^{\rho}_s \AG\}$$
\begin{proof}
  \begin{align*}
    \rho'(\AF) &= {\AFl\!\left(\kappa(\Can(\AF))\right)} & \text{(Definition~\ref{def:rhoprime})} \\
    &={\AFl\left(\mathop{\bigcup_{S\in \LAFfin,}}_{\Can(\AF) \subseteq S}[S]^{{(\sem_{\rho})}_{\fin}}_s\right)}&(\text{Definition~\ref{def:rhochar}})\\ 
    &={\AFl\left(\mathop{\bigcup_{S\in \LAFfin,}}_{\Can(\AF) \subseteq \Can(\AFl(S))}[S]^{{(\sem_{\rho})}_{\fin}}_s\right)}&(S\subseteq \Can(\AFl(S)), [S]^{{(\sem_{\rho})}_{\fin}}_s = [\Can(\AFl(S))]^{{(\sem_{\rho})}_{\fin}}_s)\\
    &={\mathop{\bigcup_{S\in \LAFfin,}}_{\Can(\AF) \subseteq \Can(\AFl(S))}\AFl\left([S]^{{(\sem_{\rho})}_{\fin}}_s\right)}&(\text{Definition }\AFl(\cdot))\\
    &={\mathop{\bigcup_{S\in \LAFfin,}}_{\AF \dsub \AFl(S)}\AFl\left([S]^{{(\sem_{\rho})}_{\fin}}_s\right)}&(\Can(\AF) \subseteq \Can(\AFl(S)) \iff \AF \dsub \AFl(S))\\
		 &={\mathop{\bigcup_{S\in \LAFfin,}}_{\AF \dsub \AFl(S)}\left\{\AFl(T)\mid T\in [S]^{{(\sem_{\rho})}_{\fin}}_s\right\}}&(\text{Definition }\AFl(\cdot))\\
		 &={\mathop{\bigcup_{S\in \LAFfin,}}_{\AF \dsub \AFl(S)}\left\{\AFl(T)\mid [T]^{{(\sem_{\rho})}_{\fin}}_s = [S]^{{(\sem_{\rho})}_{\fin}}_s\right\}}&(\text{equivalence relation})\\
		&={\mathop{\bigcup_{\AG\in \AFl_{\text{\fin}},}}_{\AF \dsub \AG}\left\{\AH\mid \AH \equiv^{\rho}_s \AG\right\}}&(\AFl(S) = \AG, \AFl(T) = \AH, \text{Theorem~\ref{the:afs}})
  \end{align*}
  \[\mbox{}\]
\end{proof}
\end{proposition}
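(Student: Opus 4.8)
The plan is to establish the identity by a chain of equalities that unfolds the definitions of $\rho'$ and $\kappa$ and then re-indexes the resulting union from the level of $\L_\AFl$-theories to the level of AFs. First I would start from Definition~\ref{def:rhoprime}, which gives $\rho'(\AF) = \AFl(\kappa(\Can(\AF)))$, and substitute the explicit form of $\kappa$ from Corollary~\ref{def:rhochar} (itself an instance of Theorem~\ref{thm:characterization:finite-theory}), obtaining $\rho'(\AF) = \AFl\!\left(\bigcup_{S\in\LAFfin,\ \Can(\AF)\subseteq S}[S]^{{(\sem_{\rho})}_{\fin}}_s\right)$. Since the extension of $\AFl$ to sets of theories is just the image map, it commutes with arbitrary unions, so this equals $\bigcup_{S\in\LAFfin,\ \Can(\AF)\subseteq S}\AFl\!\left([S]^{{(\sem_{\rho})}_{\fin}}_s\right)$.

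The core of the argument is the re-indexing of this union. I would use two facts about the canonical representation: for every finite theory $S$ one has $S\subseteq\Can(\AFl(S))$, and $\AFl(S)=\AFl(\Can(\AFl(S)))$, so $S$ and $\Can(\AFl(S))$ represent the same AF and are therefore strongly $\sem_{\rho}$-equivalent (either directly, or via Theorem~\ref{the:afs} applied to the trivially strongly $\rho$-equivalent pair $\AFl(S),\AFl(S)$). Hence $[S]^{{(\sem_{\rho})}_{\fin}}_s = [\Can(\AFl(S))]^{{(\sem_{\rho})}_{\fin}}_s$, which lets me replace the index condition $\Can(\AF)\subseteq S$ by the equivalent condition $\Can(\AF)\subseteq\Can(\AFl(S))$ in the union without changing its value. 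Finally, $\Can(\AF)\subseteq\Can(\AFl(S))$ holds if and only if $\AF\dsub\AFl(S)$, since $\Can$ records arguments and attacks faithfully and is therefore order-reflecting.

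With the index set now described by $\AF\dsub\AFl(S)$, I would expand $\AFl\!\left([S]^{{(\sem_{\rho})}_{\fin}}_s\right)=\{\AFl(T)\mid T\in[S]^{{(\sem_{\rho})}_{\fin}}_s\}=\{\AFl(T)\mid [T]^{{(\sem_{\rho})}_{\fin}}_s=[S]^{{(\sem_{\rho})}_{\fin}}_s\}$, and invoke Theorem~\ref{the:afs} once more to rewrite $[T]^{{(\sem_{\rho})}_{\fin}}_s=[S]^{{(\sem_{\rho})}_{\fin}}_s$ as $\AFl(T)\equiv^{\rho}_s\AFl(S)$. Setting $\AG=\AFl(S)$ and $\AH=\AFl(T)$, the union becomes $\bigcup_{S\in\LAFfin,\ \AF\dsub\AFl(S)}\{\AH\mid\AH\equiv^{\rho}_s\AFl(S)\}$; and since $\AFl(\cdot):\LAFfin\to\AFl_{\fin}$ is surjective (every finite $\AG$ equals $\AFl(\Can(\AG))$), the index variable can be switched to $\AG=\AFl(S)$ ranging over all finite AFs with $\AF\dsub\AG$, yielding exactly the right-hand side.

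I expect the main obstacle to be the bookkeeping in the re-indexing step: one has to make sure that collapsing $\Can(\AF)\subseteq S$ to $\AF\dsub\AFl(S)$, and replacing the range of $S$ by the range of $\AG=\AFl(S)$, neither drops nor duplicates any member of the union. This rests precisely on the fact that the strong-equivalence class of a theory depends only on the AF it represents (Theorem~\ref{the:afs}) together with the surjectivity of $\AFl$ on finite AFs; everything else is routine definition-chasing with Observation~\ref{obs:afs} and the identity $\AFl(\Can(\AH))=\AH$.
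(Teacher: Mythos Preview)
Your proposal is correct and follows essentially the same route as the paper's proof: unfold $\rho'$ via $\kappa$, re-index the union from theories to canonical representations using $S\subseteq\Can(\AFl(S))$ together with $[S]^{{(\sem_{\rho})}_{\fin}}_s=[\Can(\AFl(S))]^{{(\sem_{\rho})}_{\fin}}_s$, then translate to the AF level via Theorem~\ref{the:afs} and the surjectivity of $\AFl$. The only cosmetic difference is that you pull $\AFl$ through the union one step earlier than the paper does, and you are a bit more explicit about surjectivity in the final change of variable; note that when you call the two index conditions ``equivalent'' you mean equivalent at the level of the resulting unions (as you clarify later), not pointwise equivalent as predicates on $S$.
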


Recently, so-called \textit{Dung-logics} were introduced to be able to perform AGM-style revision for Dung's abstract argumentation frameworks \citep{BauB15}. These Dung-logics are very similar yet different from the characterization logics presented in Proposition~\ref{thm:rho-afs}. The main difference is that theories in Dung-logics are sets of AFs in contrast to the newly presented characterization logic where theories correspond to single AFs.
We mention that Dung-logics possess the intersection property (Definition~\ref{def:dunglogic}) and furthermore, two AFs $\AF$ and $\AG$ are strongly equivalent with respect to an argumentation semantics $\rho$ if and only if the singletons of $\AF$ and $\AG$ are ordinarily equivalent with respect to the semantics defined by Theorem~3 of~\cite{BauB15}.

We start with the formal definition of a Dung-logic in case of the most prominent argumentation semantics, namely stable semantics.

\begin{definition} \label{def:dunglogic} The Dung-logic in case of stable semantics is a pair $(\AFl_{\fin},\delta)$ with 
$$\delta:2^{\AFl_{\fin}} \to 2^{\AFl_{\fin}} \quad \m{U}\mapsto \bigcap_{\AF\in\m{U}} \Mod^{\k(\stb)}(\AF)$$
whereas \mbox{$\Mod^{\k(\stb)}(\AF) = \left\{\AG\in\AFl_{\fin}\mid \AF^{\k(\stb)}\dsub\AG^{\k(\stb)}\right\}$.}
\end{definition}
In order to see the similarity we consider the above definition for singletons of AFs.

\begin{observation} Let \mbox{$\stb:\AFl_{\fin} \to 2^{\AFl_{\fin}}$} be stable semantics and $\delta$ as in Definition~\ref{def:dunglogic}.
  For any \mbox{$\AF\in\AFl_{\fin}$} we have\/:
  $$\delta(\{\AF\}) 
  = \mathop{\bigcup_{\AG\in \AFl_{\fin},}}_{\AF^{\k(\stb)} \dsub \AG^{\k(\stb)}} \{\AH\mid \AH \equiv^{\stb}_s \AG\}$$
\end{observation}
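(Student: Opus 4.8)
The plan is to reduce the claim to the kernel characterization of strong stable equivalence (Theorem~\ref{the:stbkernel}) after unfolding the definition of $\delta$ on a one-element set. \textbf{First}, I would note that the intersection in Definition~\ref{def:dunglogic} ranges over the singleton index set $\{\AF\}$, so it collapses to $\delta(\{\AF\}) = \Mod^{\k(\stb)}(\AF) = \{\AH\in\AFl_{\fin} \mid \AF^{\k(\stb)} \dsub \AH^{\k(\stb)}\}$. The task then becomes showing that this set equals the displayed union $\bigcup_{\AF^{\k(\stb)}\dsub\AG^{\k(\stb)}} \{\AH \mid \AH\equiv^{\stb}_s\AG\}$, which I would prove by two inclusions.

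\textbf{For the left-to-right inclusion}, given $\AH$ with $\AF^{\k(\stb)}\dsub\AH^{\k(\stb)}$ I would simply take $\AG := \AH$ as the witness in the union: the side condition $\AF^{\k(\stb)}\dsub\AG^{\k(\stb)}$ holds by the choice of $\AG$, and $\AH\equiv^{\stb}_s\AG$ holds by reflexivity of strong equivalence, so $\AH$ lies in the union. \textbf{For the converse}, if $\AH$ is in the union, there is some $\AG$ with $\AF^{\k(\stb)}\dsub\AG^{\k(\stb)}$ and $\AH\equiv^{\stb}_s\AG$; applying Theorem~\ref{the:stbkernel} to the latter gives $\AH^{\k(\stb)} = \AG^{\k(\stb)}$, and substituting this equality into the side condition yields $\AF^{\k(\stb)}\dsub\AH^{\k(\stb)}$, i.e.\ $\AH\in\delta(\{\AF\})$.

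There is no real obstacle here: the statement is essentially a direct corollary of Theorem~\ref{the:stbkernel}. The only subtlety worth flagging is that the $\stb$-kernel is \emph{not} monotone with respect to $\dsub$ (adding a self-attack on an argument can delete that argument's outgoing attacks), so one should resist the temptation to rewrite the side condition $\AF^{\k(\stb)}\dsub\AG^{\k(\stb)}$ as $\AF\dsub\AG$; the argument above sidesteps this by always choosing $\AG=\AH$. Read against Proposition~\ref{thm:rho-afs}, the resulting identity exhibits $\delta$ on singletons as the same canonical ``union of strong-equivalence classes of supertheories'' construction as $\rho'$, with the $\subseteq$-condition on frameworks replaced by the corresponding $\dsub$-condition on their $\stb$-kernels.
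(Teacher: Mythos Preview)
Your proof is correct and is exactly the natural argument the paper has in mind. The paper in fact states this as an \emph{Observation} without proof, relying on the reader to unfold $\delta$ on a singleton and invoke Theorem~\ref{the:stbkernel}; your two inclusions (take $\AG:=\AH$ for one direction, use $\AH^{\k(\stb)}=\AG^{\k(\stb)}$ for the other) are precisely that.
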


The only difference in comparison to $\rho'$ is that for $\delta$, we include all equivalence classes of AFs $G$ whose \emph{kernels} are in superset relation with the kernel of $F$, instead of having the superset relation on the AFs themselves. A further analysis will be part of future work.

\subsection{Normal Logic Programs}
\label{sec:normal-logic-programs}


For normal logic programs under the stable model semantics as presented in Example~\ref{exm:known-formalisms}, applying Theorem~\ref{thm:characterization:finite-theory} yields\/:
\begin{corollary}
  \label{thm:logic-programs:characterization}
  For the finite-theory version of logic $(\L_\LP,\sem_\SM)$ of normal logic programs under stable model semantics, a finite-theory characterization logic is given by
  \begin{gather*}
    \hfill
    \sem_\SM' : \LLPfin \to 2^{\LLPfin}
    \hfill,\qquad\hfill
    T \mapsto \mathop{\bigcup_{S\in\LLPfin,}}_{T \subseteq S}[S]^{\sem_\SM}_s
    \hfill
  \end{gather*}
\end{corollary}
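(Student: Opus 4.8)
The plan is to obtain this statement as a direct instance of Theorem~\ref{thm:characterization:finite-theory}, which produces a canonical finite-theory characterization logic for \emph{any} full logic. So the only hypothesis that needs checking is that $(\L_\LP, \sem_\SM)$ is a full logic, that is, that $\dom(\sem_\SM) = 2^{\L_\LP}$. This is immediate from the definition of $\sem_\SM$ in Example~\ref{exm:known-formalisms}: every normal logic program $T \subseteq \L_\LP$ --- finite or not --- is assigned the set $\sem_\SM(T)$ of its stable models, so $\sem_\SM$ is total on $2^{\L_\LP}$. Once this is noted, Theorem~\ref{thm:characterization:finite-theory} applies verbatim with the language taken to be $\L_\LP$ and the semantics taken to be $\sem_\SM$, and the logic it yields is precisely $(\L_\LP, \sem_\SM')$ with $\sem_\SM'$ as displayed in the corollary.

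Two purely cosmetic points remain, and I expect neither to cause difficulty. First, the generic codomain $2^{2^{\L_\LP}}$ from the theorem can be replaced by the tighter $2^{\LLPfin}$ used in the statement: each strong-equivalence class $[S]^{{(\sem_\SM)}_{\fin}}_s$ consists only of finite programs, since finite-theory strong equivalence is a relation on $\LLPfin$, so the union defining $\sem_\SM'(T)$ is a subset of $\LLPfin$. Second, the relation $\equiv^{{(\sem_\SM)}_{\fin}}_s$ on finite programs coincides with the classical strong equivalence of \cite{DBLP:journals/tocl/LifschitzPV01} restricted to finite programs --- extending by an \emph{arbitrary} further program versus by an arbitrary \emph{finite} further program gives the same notion here, because strong equivalence under stable models is captured by the finitary notion of HT-models --- which is what licenses writing $[S]^{\sem_\SM}_s$ in the displayed formula and reconnects the abstract, Herbrand-style construction with the familiar logic of here-and-there.

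There is thus no genuine obstacle: all the mathematical content sits in Theorem~\ref{thm:characterization:finite-theory}, and the corollary is obtained by instantiation. It is worth contrasting this with the argumentation case handled earlier, where one first had to set up $\rho$-logics and prove Proposition~\ref{obs:afs} and Theorem~\ref{the:afs} in order to transfer strong equivalence between the theory level and the AF level, precisely because attacks are not independent building blocks and the same AF admits many $\L_\AFl$-representations. For normal logic programs no such detour is needed, since program rules \emph{are} independent building blocks and ordinary set union of theories already \emph{is} the operation of combining logic programs; hence Theorem~\ref{thm:characterization:finite-theory} can be applied to $(\L_\LP, \sem_\SM)$ with nothing further to reconcile.
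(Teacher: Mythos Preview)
Your approach is correct and matches the paper's exactly: the paper simply prefaces the corollary with ``applying Theorem~\ref{thm:characterization:finite-theory} yields'' and offers no further proof. Your additional remarks on the codomain restriction and the notational switch from $[S]^{(\sem_\SM)_{\fin}}_s$ to $[S]^{\sem_\SM}_s$ are more careful than the paper itself, which leaves these points implicit.
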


An existing, well-known characterizing semantics is given by SE-models.
\begin{definition}
  \label{def:logic-programs:se-models}
  Let $P$ be a normal logic program over $\A$ and \mbox{$X\subseteq Y\subseteq A$}.
  Define semantics \linebreak \mbox{$\sem_\SE:\LLPfin\to 2^{\A\times\A}$} by
  \begin{gather*}
    T\mapsto \set{ (X,Y) \guard X\subseteq Y \text{ and } X,Y\in\sem_\mod\!\left(T^Y\right) }
  \end{gather*}%
  where 
    \begin{align*}
      T^Y &= \set{ a_0\gets a_1,\ldots,a_m \guard a_0\gets a_1,\ldots,a_m,\naf a_{m+1},\ldots,\naf a_n\in T, a_{m+1},\ldots,a_n\notin Y } \\
      \sem_\mod(T) &= \{ M\subseteq\A \mid \forall a_0\gets a_1,\ldots,a_m,\naf a_{m+1},\ldots,\naf a_n\in T: \\
      &\qquad\qquad\qquad ( a_1,\ldots,a_m\in M \land a_{m+1},\ldots,a_n\notin M) \implies a_0\in M \}
    \end{align*}
\end{definition}

SE-models characterize strong equivalence of stable models~\cite[Theorem~1]{DBLP:conf/lpnmr/Turner01};
SE-model semantics also has the intersection property \cite[Lemma~3]{DBLP:journals/amai/Truszczynski06}.

\begin{proposition}
  \label{thm:logic-programs:se-models}
  $(\L_\LP,\sem_\SE)$ is a characterization logic for $(\L_\LP,\sem_\SM)$.
\end{proposition}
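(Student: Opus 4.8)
The plan is to check the two defining properties of a (finite-theory) characterization logic, as in Definition~\ref{def:fin-characterization-logic} with $\sem_\SE$ in the role of the characterizing semantics: the \emph{characterization} clause and the \emph{intersection} clause. Both will follow with almost no work from the two results recalled just above, namely Turner's characterization of strong equivalence via SE-models~\cite[Theorem~1]{DBLP:conf/lpnmr/Turner01} and Truszczy\'nski's observation that the SE-model semantics has the intersection property~\cite[Lemma~3]{DBLP:journals/amai/Truszczynski06}. The only genuine task is to match the abstract notions of our framework with the concrete logic-programming ones.

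For the characterization clause, I would first observe that the abstract strong-equivalence relation $\equiv^{\sem_\SM}_s$ read off from $(\L_\LP,\sem_\SM)$ is literally the usual notion of strong equivalence of normal logic programs under stable models: since $\sem_\SM(T)$ is by definition the set of stable models of the program $T$, and union of $\L_\LP$-theories is exactly union of programs, the condition $\forall U:\sem_\SM(T_1\cup U)=\sem_\SM(T_2\cup U)$ says precisely that $T_1$ and $T_2$ keep the same stable models when extended by any third program. Turner's theorem then states that this holds iff $T_1$ and $T_2$ have the same SE-models, i.e.\ $\sem_\SE(T_1)=\sem_\SE(T_2)$. Combining this with the elementary fact recalled in the text that $[T_1]^{\sem_\SM}_s=[T_2]^{\sem_\SM}_s$ iff $T_1\equiv^{\sem_\SM}_s T_2$ yields exactly the required biconditional $\sem_\SE(T_1)=\sem_\SE(T_2)\iff[T_1]^{\sem_\SM}_s=[T_2]^{\sem_\SM}_s$.

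For the intersection clause, the shortest route is simply to invoke Truszczy\'nski's lemma. If a self-contained argument is preferred, I would unfold Definition~\ref{def:logic-programs:se-models}: since the Gelfond--Lifschitz reduct is computed rule by rule, $(T_1\cup T_2)^Y=T_1^Y\cup T_2^Y$ for every $Y\subseteq\A$, and since the classical models of a union of (positive) programs is the intersection of the two model sets, one gets for every pair $(X,Y)$ with $X\subseteq Y$ that $(X,Y)\in\sem_\SE(T_1\cup T_2)$ iff $X,Y\in\sem_\mod(T_1^Y)\cap\sem_\mod(T_2^Y)$ iff $(X,Y)\in\sem_\SE(T_1)\cap\sem_\SE(T_2)$. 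As the union of two finite programs is again finite this is precisely the binary finite-theory intersection property; the same computation in fact gives $\sem_\SE\bigl(\bigcup_{T\in\T}T\bigr)=\bigcap_{T\in\T}\sem_\SE(T)$ for arbitrary families whenever the union is defined, so the argument is oblivious to the finiteness restriction.

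I do not expect a real obstacle. Both clauses are, modulo the routine translation between the abstract and the concrete settings, restatements of theorems we are entitled to use; the two points deserving a line of care are (i) that $\equiv^{\sem_\SM}_s$ really does coincide with the program-theoretic strong equivalence appearing in Turner's statement, which is immediate from the definitions, and (ii) that the reduct commutes with program union, which is what makes the intersection property go through. Were one to renounce the citation to Turner, the genuinely hard part would be re-proving that sameness of SE-models is both necessary and sufficient for strong equivalence; but with that theorem in hand, the proposition is essentially bookkeeping.
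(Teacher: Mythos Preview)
Your proposal is correct and matches the paper's approach exactly: the paper does not give an explicit proof but simply records, in the sentence preceding the proposition, that SE-models characterize strong equivalence of stable models~\cite[Theorem~1]{DBLP:conf/lpnmr/Turner01} and that SE-model semantics has the intersection property~\cite[Lemma~3]{DBLP:journals/amai/Truszczynski06}, which are precisely the two clauses you verify. Your additional unpacking (that $\equiv^{\sem_\SM}_s$ coincides with program-theoretic strong equivalence, and the optional self-contained argument for intersection via $(T_1\cup T_2)^Y=T_1^Y\cup T_2^Y$) is a welcome elaboration but not a departure.
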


Since finite-theory characterization logics are unique up to isomorphism (Theorem~\ref{thm:fin-unique-characterization}), there is a one-to-one-correspondence between the model sets given by Corollary~\ref{thm:logic-programs:characterization} (as well as any other model set of a certain characterization logic) and sets of SE-models.
More precisely, for any two logic programs \mbox{$T_1,T_2\in\L_\LP$}, we find \mbox{$\sem_\SM'(T_1)\subseteq\sem_\SM'(T_2)$} if and only if \mbox{$\sem_\SE(T_1)\subseteq\sem_\SE(T_2)$}.
However, please note that the set of SE-models of a finite logic program is finite, while $\sem_\SM'$ maps logic programs to infinite model sets in general.
So in the concrete case of logic programs, SE-models are much easier to work with.

\section{Discussion}
\label{sec:discussion}

We presented a general framework for analyzing strong equivalence of knowledge representation formalisms.
The framework abstracts away from all language specifics other than that knowledge bases be expressible as sets of atomic language elements.
For two classes of formalisms, covered and finite-theory logics, we showed that they always possess a classical characterization logic.
We called characterization logics classical because they have the intersection property (that is, the semantics of theories can always be obtained by considering the semantics of its members independently).
We called characterization logics characterizing because their standard equivalence coincides with strong equivalence in the characterized formalism.
As an application of our results, we obtained a first characterization logic for abstract argumentation where single AFs are interpreted as theories. This new logic complements the already existing Dung-logics which consider single AFs as building blocks and hence, theories as sets of AFs~\citep{BauB15}.

Most previous work on characterizing strong equivalence in KR that we know of focused on specific formalisms or on a handful of related formalisms, such as work on strong equivalence in logic programs under stable models \citep{DBLP:journals/tocl/LifschitzPV01,Turner01} and supported models \citep{DBLP:conf/aaai/TruszczynskiW08}, that also give rise to similar developments in default logic~\citep{Turner01,DBLP:conf/aaai/Truszczynski07} and autoepistemic logic~\citep{DBLP:conf/aaai/TruszczynskiW08}.
By considering and exploiting formalism specifics, more fine-grained views on classical, strong and intermediate equivalence notions are possible~\citep{DBLP:conf/iclp/EiterF03,DBLP:conf/aaai/Truszczynski07,DBLP:journals/tplp/Woltran08}.
Such notions are at present not ``visible'' in our setting, but could be incorporated by restricting the set of theories that are allowed for expansion.

We have chosen to consider as ``classical'' all logics whose model function possesses the intersection property.
Other characterizations might be possible when choosing consequence functions instead of model functions as starting point.
For those, we considered closure operators as a special class (implying, for example, cumulativity).
Other properties for future consideration come to mind -- for example compactness, which is independent of the closure property and relevant to proof theory.



Our approach could be further generalized by abstracting away even from knowledge bases as sets and knowledge base expansion as set union.
We could assume a language as equipped with an expansion operator $\oplus$ under which the language is closed and then derive our results completely algebraically.
This would enable us to treat, for example, abstract dialectical frameworks~\citep{brewka-woltran10adfs,brewka13adfs}, a quite recent non-classical KR formalism encompassing both argumentation frameworks and logic programs~\citep{strass13approximating}, 
for which strong equivalence has not been studied yet.

\chapter{Existence and Uniqueness of Argumentation Semantics} \label{cha:ex}

Argumentation has become
one of the major fields within AI over the last two decades~
\cite{RahwanIS09,Bench-CaponD07}.
In particular, 
Dung's argumentation frameworks (AFs) \cite{Dung95} are widely used
and act as integral concepts in 
several advanced argumentation formalisms. 
They focus entirely on conflict resolution among arguments, treating the latter as abstract items without
logical structure. Hence, the only information available in AFs is the so-called attack relation that
determines whether an argument is in a certain conflict with another one.
As already outlined by Dung, AFs provide a 
formally simple basis to capture the essence
of different nonmonotonic formalisms. 

The subsequent sections and chapters are devoted to argumentation semantics which
play the flagship role in Dung's abstract argumentation theory. Almost
all of them are motivated by an easily understandable intuition of what
should be acceptable in the light of conflicts. However, although these
intuitions equip us with short and comprehensible formal definitions it
turned out that their intrinsic properties such as \textit{existence and uniqueness}, \textit{expressibility}, \textit{replaceability} and \textit{verifiability} are not that easily accessible. We will consider all mentioned properties w.r.t.\ almost all semantics available in the literature. In doing so we include two main axes: namely first,
the distinction between extension-based and labelling-based versions and
secondly, the distinction of different kind of argumentation frameworks
such as finite or unrestricted ones.

\section{Abstract Argumentation Theory} \label{sec:Dung}

Phan Minh Dung introduced argumentation frameworks (AFs) as directed graphs \cite[Definition~2]{Dung95}. This means, an AF $\F = (A,R)$ is simply a pair consisting of a set $A$, usually called \textit{set of arguments}, and a binary relation $R\subseteq A\times A$, so-called \textit{attack relation}. In order to judge the truth of sentences like ``For any AF $\F$ we have $\ldots$'' or ``There is no AF $\F$, s.t.\ $\ldots$'' we have to introduce a reference set $\m{U}$, so-called \textit{universe of arguments} and to require, that all possible AFs possess arguments of this set. This means, sentences as mentioned above always refer to AFs $\F = (A,R)$ with $A\subseteq\m{U}$. In the following we use $\m{F}_{\m{U}}$ as an abbreviation for the set of all AFs induced by \m{U}. In order to be able to consider AFs possessing an arbitrary finite number of arguments or even infinitely many we have to request that $\card{\m{U}} \geq \card{\N}$. No further conditions are imposed. For the rest of the thesis we assume that such a universe of arguments is given, i.e.\ we use the set \m{U} as an arbitrary but fixed parameter. For this reason, we often use $\m{F}$ instead of $\m{F}_{\m{U}}$.

\subsection{Important Syntactical Classes of AFs}
AFs can be classified by means of syntactical properties. In the following we list some features which will be frequently used throughout the thesis. Most of them are self-explanatory and/or already well-known from graph theory \cite{Die06}. Most of the research in abstract argumentation typically pertains to finite AFs.

\begin{definition} \label{def:classes}   
 An AF $\F 	= (A,R)$ is called 
 \begin{enumerate}
   \item \textit{empty} if $A = \emptyset$,
	 \item \textit{finite} if $\card{A}\in\N$,
	 \item \textit{finitary} if for any $a\in A$, $\card{\{b\in A \mid (b,a)\in R\}}\in\N$,
	 \item \textit{self-loop-free} if $R \cap id_A = \emptyset$,
	 \item \textit{odd-cycle-free} if there is no odd natural $n$, s.t.\ $a_0,...,a_n\in A$, $a_0=a_n$ and $(a_0,a_1),...,(a_{n-1},a_n)\in R$,
	 \item \textit{symmetric} if $R = R^{-1}$,
	 \item \textit{arbitrary} if $\F\in\m{F}$.
 \end{enumerate}
\end{definition}

\begin{example} The following AF is not empty, finite, finitary, self-loop-free, not odd-cycle-free, not symmetric and arbitrary.
\begin{figure}[H] 
\centering
\begin{tikzpicture}

		\node (C) at (0.25,1.0) [circle, thick, draw, minimum size=0.7cm, ]{$c$};
    \node (B) at (-1.25,1.0) [circle, thick, draw, minimum size=0.7cm]{$b$};
    \node (A) at (-0.5,0.0) [circle, thick, draw, minimum size=0.7cm]{$a$};
    \node (D) at (1.0,0.0) [circle, thick, draw, minimum size=0.7cm]{$d$};
    \node (E) at (2.5,0.0) [circle, thick, draw, minimum size=0.7cm]{$e$};
    \node (F) at (4.0,0.0) [circle, thick, draw, minimum size=0.7cm]{$f$};

\draw[->,thick] (A) to [thick,bend left] (B);
\draw[->,thick] (B) to [thick,bend left] (C);
\draw[->,thick] (C) to [thick,bend left] (A);
\draw[->,thick] (A) to [thick,bend right] (D);

\draw[->,thick] (E) to [thick,bend right] (D);
\draw[->,thick] (E) to [thick,bend right] (F);
\draw[->,thick] (F) to [thick,bend right] (E);
\end{tikzpicture}

\label{fig:first}
\end{figure}
\end{example}

\subsection{Argumentation Semantics - The Flagship of Dung's Abstract Argumentation Theory} \label{sec:flagship}

In order to formalize the notions of \textit{existence} and \textit{uniqueness} in the context of abstract argumentation theory we have to clarify what we precisely mean by a \textit{semantics}. In the literature 
two main approaches to argumentation semantics can be found, namely so-called \textit{extension-based} and \textit{labelling-based} versions.  The main difference is that extension-based versions return a set of sets of arguments (so-called \textit{extensions}) for any given AF in contrast to a set of sets of $n$-tupels (so-called \textit{labellings}) as in case of labelling-based approaches. However, from a mathematical point of view both kinds of semantics are instances of Definition~\ref{def:semantics}. More precisely, extension-based versions are covered by $n = 1$ and labelling-based approaches can be obtained by setting $n \geq 2$. We use $\left(2^\m{U}\right)^n$ to denote the n-ary cartesian power of $2^\m{U}$, i.e.\ $\left(2^\m{U}\right)^n = \underbrace{2^\m{U}\times \cdots \times 2^\m{U}}_{n-\text{times}}$. 

\begin{definition} \label{def:semantics}
A semantics is a function $\sigma: \m{F}\rightarrow 2^{\left(2^\m{U}\right)^n}$ for some natural $n\in\N$, s.t.\linebreak $\F = (A,R) \mapsto \sigma(\F) \subseteq \left(2^A\right)^n$.
\end{definition}

In this chapter we are interested in definedness statuses w.r.t.\ finite, finitary and arbitrary frameworks. Besides conflict-free and admissible sets (abbreviated by $\cf$ and $\adm$) we consider a large number of mature semantics, namely naive, stage, stable, semi-stable, complete, preferred, grounded, ideal, eager semantics as well as the more exotic cf2 and stage2 semantics (abbreviated by
    $\nav,\stg,\stb,\semi,\com,\prf,\grd,\id,\eag, \cfzwei$ and $\stgzwei$
    respectively). In the following we introduce the extension-based versions of these semantics (indicated by $\Ext_{\sigma}$). Any considered semantics possesses a 3-valued labelling-based version (denoted as $\Lab_{\sigma}$). It is important to note that for all considered semantics we do not observe any differences between the definedness statuses of their labelling-based and extension-based versions. For the mature semantics this is due the fact that there is a one-to-one correspondence between $\sigma$-extensions and $\sigma$-labellings implying that $\card{\Ext_{\sigma}(\F)} = \card{\Lab_{\sigma}(\F)}$ for any AF $\F$ (for more details confer Section~\ref{sec:baspro}).

Before presenting the definitions we have to introduce some notational conventions. Given an AF $\F = (A,R)$ and a set $E\subseteq A$. We use $\E^+_{\F}$ or simply, $\E^+$ for \mbox{$\{b\mid (a,b)\in R, a\in \E\}$}. Moreover, $\E^{\oplus}_{\F}$ or simply, $\E^\oplus$ is called the \textit{range} of $E$ and stands for $\E^+\!\cup\E$. We say $a$ \textit{attacks} $b$ (in~$\F$) if $(a,b)\in R$. An argument $a$ is \textit{defended} by $E$ (in~$\AF$) if for each $b\in A$ with $(b,a)\in R$, $b$ is attacked by some $c\in E$. Finally, $\Gamma_\F: 2^A \to 2^A$ with $I\mapsto \{a\in A\mid a \text{ is defended by } I\}$ denotes the so-called \textit{characteristic function} (of $\F$) \cite{Dung95}. 
	
\begin{definition} \label{def:extsem}
     Let $\AF = (A,R)$ be an AF and $\E\subseteq A$.
 \begin{enumerate}
	\item \hspace{-2.5mm} $\E\in\Ext_{\cf}(\F)$ iff for no $a,b\in E$, $(a,b)\in R$,
	\item \hspace{-2.5mm} $\E\in\Ext_{\nav}(\AF)$ iff $\E\in\Ext_{\cf}(\AF)$ and for no $\I\in\Ext_{\cf}(\AF)$, $\E\subset\I$,
	\item \hspace{-2.5mm} $\E\in\Ext_{\stg}(\AF)$ iff $\E\in\Ext_{\cf}(\AF)$ and there is
	  no $\I\in\Ext_{\cf}(\AF)$, s.t.\ \mbox{$\E^\oplus\subset\I^\oplus$},
	\item \hspace{-2.5mm} $\E\in\Ext_{\stb}(\AF)$ iff $\E\in\Ext_{\cf}(\AF)$ and
	  $\E^\oplus=A$,
	\item \hspace{-2.5mm} $\E\in\Ext_{\adm}(\AF)$ iff $\E\in\Ext_{\cf}(\AF)$ and $\E$ defends all its \mbox{elements,}
	\item \hspace{-2.5mm} $\E\in\Ext_{\semi}(\AF)$ iff $\E\in\Ext_{\adm}(\AF)$ and
	  there is no $\I\in\Ext_{\adm}(\AF)$, s.t.\ \mbox{$\E^\oplus\subset\I^\oplus$},
	\item \hspace{-2.5mm} $\E\in\Ext_{\com}(\AF)$ iff $\E\in\Ext_{\adm}(\AF)$ and for any
	  $a\in A$ defended by $\E$~in~$\AF$, $a\in \E$,
	\item \hspace{-2.5mm} \mbox{$\E\in\Ext_{\prf}(\AF)$ iff $\E\in\Ext_{\adm}(\AF)$ and
	  for no $\I\in\Ext_{\com}(\AF)$, $\E\subset\I$,} 
		\item \hspace{-2.5mm} $\E\in\Ext_{\grd}(\AF)$ iff $E$ is the $\subseteq$-least fixpoint of $\Gamma_\F$,
	\item \hspace{-2.5mm} $\E\in\Ext_{\id}(\AF)$ iff $\E\in\Ext_{\adm}(\AF)$,
	  $\E\subseteq\bigcap\Ext_{\prf}(\AF)$ and there is no $\I\in\Ext_{\com}(\AF)$ satisfying
	  $\E\subset\I\subseteq\bigcap\Ext_{\prf}(\AF)$,
		\item \hspace{-2.5mm} $\E\in\Ext_{\eag}(\AF)$ iff $\E\in\Ext_{\adm}(\AF)$,
	  $\E\subseteq\bigcap\Ext_{\semi}(\AF)$ and there is no $\I\in\Ext_{\com}(\AF)$ satisfying
	  $\E\subset\I\subseteq\bigcap\Ext_{\semi}(\AF)$.
    \end{enumerate}
    \end{definition}
		
Finally, we introduce the recursively defined cf2 and stage2 semantics \cite{BarGG05,DvoG12}. 

\begin{definition} Let $\AF = (A,R)$ be an AF and $\E\subseteq A$. \label{def:recursive}
\begin{enumerate}
	\item $\E\in\Ext_{\cf2}(\AF)$ iff
	\begin{itemize}
		\item $\E\in\Ext_\nav(\F)$ if $\card{SCCs_{\F} = 1}$ and
		\item $\forall S\in SCCs_{\F} (E\cap S) \in \Ext_{\cf2}\left(\F|_{UP_\F(S,E)}\right)$,
	\end{itemize}	
		\item $\E\in\Ext_{\stg2}(\AF)$ iff
	\begin{itemize}
		\item $\E\in\Ext_\stg(\F)$ if $\card{SCCs_{\F} = 1}$ and
		\item $\forall S\in SCCs_{\F} (E\cap S) \in \Ext_{\stg2}\left(\F|_{UP_\F(S,E)}\right)$.
		\end{itemize}
	\end{enumerate}
Here $SCCs_{\F}$ denotes the set of all strongly connected components of $\F$, and for any $E,S\subseteq A$, 
$UP_\F(S,E) = \{a\in S\mid \nexists b\in E\sm S: (b,a)\in R\}$.   
\end{definition}

The following proposition summarizes well-known subset relations between the considered semantics. For two semantics $\sigma$, $\tau$ and a certain set of AFs~\m{C} we use $\sigma\subseteq_{\m{C}}\tau$ as a shorthand for $\sigma(\F)\subseteq\tau(\F)$ for any AF $\F\in\m{C}$. The presented relations hold for both extension-based as well as labelling-based versions of the considered semantics. In the interest of readability we present the relations graphically.

%
 %

%
%

    \begin{proposition}
      \label{Pro:semrel}
      For semantics $\sigma$ and $\tau$, $\sigma\subseteq_{\m{F}}\tau$ iff there is a
      path of solid arrows from $\sigma$~to~$\tau$ in Figure~\ref{fig:semrel}. A dotted arrow indicates that the corresponding subset relation is guaranteed for finite frameworks only.
			
      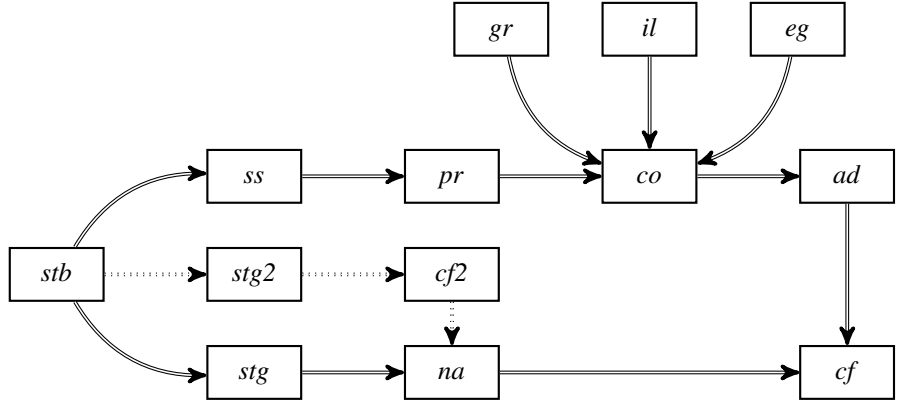
\begin{figure}[H]
	\centering
	\tikzstyle{sembox}=[rectangle,thick,draw,text width=1cm,text centered]
	
	\begin{tikzpicture}[scale=1.3]
	  \node (stb) at (0,0) [sembox]{\standardbox{\stb}};
	  \node (semi) at (2,1) [sembox]{\standardbox{\semi}};
	  \node (stage) at (2,-1) [sembox]{\standardbox{\stg}};
		 \node (stage2) at (2,0) [sembox]{\standardbox{\stgzwei}};
	  \node (pref) at (4,1) [sembox]{\standardbox{\prf}};
	  \node (naive) at (4,-1) [sembox]{\standardbox{\nav}};
		\node (cf2) at (4,0) [sembox]{\standardbox{\cfzwei}};
	  \node (comp) at (6,1) [sembox]{\standardbox{\com}};
	  \node (adm) at (8,1) [sembox]{\standardbox{\adm}};
	  \node (cf) at (8,-1) [sembox]{\standardbox{\cf}};
	  \node (grd) at (4.5,2.5) [sembox]{\standardbox{\grd}};
	  \node (ideal) at (6,2.5) [sembox]{\standardbox{\id}};
	  \node (eager) at (7.5,2.5) [sembox]{\standardbox{\eag}};
	  \draw[->]
	    (stb) edge[bend left,double] (semi)
	    (stb) edge[bend right,double] (stage)
	    (semi) edge[double] (pref)
			(stb) edge[double, dotted] (stage2)
			(stage2) edge[double, dotted] (cf2)
			(cf2) edge[double, dotted] (naive)
	    (stage) edge[double] (naive)
	    (pref) edge[double] (comp)
	    (comp) edge[double] (adm)
	    (adm) edge[double] (cf)
	    (naive) edge[double] (cf)
	    (grd) edge[double, bend right] (comp)
	    (ideal) edge[double] (comp)
	    (eager) edge[double, bend left] (comp)
	    ;
	\end{tikzpicture}
	\caption{Subset Relations between Semantics}
	\label{fig:semrel}
      \end{figure}
    \end{proposition}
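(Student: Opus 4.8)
The plan is to prove the two directions of the equivalence separately, treating the solid arrows as the actual content of the claim $\sigma\subseteq_{\m{F}}\tau$ and handling the dotted arrows as the auxiliary statement about finite frameworks.

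\textbf{Soundness (``if'').} First I would check that every single arrow of Figure~\ref{fig:semrel} denotes a genuine inclusion: for a solid arrow $\sigma\to\tau$ that $\Ext_{\sigma}(\F)\subseteq\Ext_{\tau}(\F)$ for all $\F\in\m{F}$, and for a dotted arrow the same restricted to finite $\F$. Most are literal consequences of Definition~\ref{def:extsem}: $\adm\subseteq\cf$, $\com\subseteq\adm$, $\semi\subseteq\adm$, $\nav\subseteq\cf$, $\id\subseteq\adm$, $\eag\subseteq\adm$ are read off the defining clauses, and $\grd\subseteq\com$, $\prf\subseteq\com$ are Dung's classical results \cite{Dung95} (for $\prf$: an admissible set extends to a complete one by iterating $\Gamma_\F$, so an admissible set not properly contained in any complete set is itself complete). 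The remaining few I would prove in one or two lines each: $\stb\subseteq\semi$ and $\stb\subseteq\stg$ because a stable extension has range $A$, trivially $\subseteq$-maximal among ranges of admissible, resp.\ conflict-free, sets; $\semi\subseteq\prf$ because a range-maximal admissible set cannot be properly contained in an admissible set without that larger set containing an argument attacked by the smaller one, contradicting conflict-freeness; $\stg\subseteq\nav$ because a range-maximal conflict-free set is in particular $\subseteq$-maximal conflict-free; and $\id\subseteq\com$, $\eag\subseteq\com$ by the standard observation that the $\subseteq$-maximal admissible set below $\bigcap\Ext_{\prf}$, resp.\ $\bigcap\Ext_{\semi}$, is complete. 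The dotted chain $\stb\subseteq\stgzwei\subseteq\cfzwei\subseteq\nav$ on finite frameworks I would take from \cite{BarGG05,DvoG12}. Soundness then follows by transitivity of $\subseteq$: a directed path $\sigma=\sigma_0\to\cdots\to\sigma_k=\tau$ gives $\Ext_{\sigma}(\F)\subseteq\Ext_{\sigma_1}(\F)\subseteq\cdots\subseteq\Ext_{\tau}(\F)$ on the appropriate class of frameworks.

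\textbf{Completeness (``only if'').} Here I would show that whenever there is no directed path of solid arrows from $\sigma$ to $\tau$, there is an $\F\in\m{F}$ with $\Ext_{\sigma}(\F)\not\subseteq\Ext_{\tau}(\F)$. The strategy is to fix a small finite library of concrete AFs, compute the full extension set of each under all semantics of Definitions~\ref{def:extsem} and~\ref{def:recursive}, and verify that the pairs $(\sigma,\tau)$ refuted by \emph{some} library AF are exactly the pairs not linked by a solid path. Natural building blocks: the two-cycle $a\leftrightarrow b$ (here $\stb,\nav,\prf,\semi,\stg,\cfzwei,\stgzwei$ all yield the nonempty extension $\{a\}$ while $\grd,\id,\eag$ yield only $\emptyset$, separating the first group from the second); the three-cycle $a\to b\to c\to a$ (here $\{a\}$ is naive/stage/$\cfzwei$/$\stgzwei$ but not admissible, and $\emptyset$ is complete/grounded/ideal/eager/preferred/semi-stable but not naive/stage, giving most of the separations between the conflict-free branch and the admissibility branch); and the AF with a self-looping $a$, a free argument $b$, and an attack $a\to b$, where $\stb(\F)=\emptyset$ while every other semantics has a nonempty extension set, so this single $\F$ refutes $\sigma\subseteq\stb$ for every $\sigma\ne\stb$ (matching the fact that $\stb$ is a source of the diagram). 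One or two further small AFs — in the spirit of the known separating examples for $\semi$ versus $\stg$ and for $\cfzwei,\stgzwei$ versus the admissibility-based semantics — close the remaining incomparabilities. Finally, for the pairs linked by a dotted but no solid path (e.g.\ $\stb$ and $\stgzwei$, or $\stgzwei$ and $\cfzwei$, or $\cfzwei$ and $\nav$), I would additionally exhibit \emph{infinite} AFs on which the corresponding inclusion fails, exploiting that the strongly-connected-component recursion of Definition~\ref{def:recursive} is not well-behaved on infinite frameworks; this simultaneously certifies that those arrows cannot be drawn solid.

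\textbf{Main obstacle.} The soundness direction is essentially bookkeeping plus citations. The real work is completeness: building a \emph{minimal} library of AFs whose extension sets are computable by hand yet which refute \emph{every} missing inclusion, and within this getting the recursively defined $\cfzwei$ and $\stgzwei$ right on each example. The hardest part I expect is the infinite-framework case: one must construct infinite AFs that actually break $\stb\subseteq\stgzwei$, $\stgzwei\subseteq\cfzwei$, and $\cfzwei\subseteq\nav$, i.e.\ witness that the dotted edges genuinely cannot be promoted to solid ones. A secondary subtlety is that $\Ext_{\sigma}(\F)\not\subseteq\Ext_{\tau}(\F)$ is a failure of inclusion between \emph{sets of extensions}, so it does not suffice that $\sigma$ and $\tau$ differ on $\F$; one must pin down a specific $\sigma$-extension of $\F$ that is not a $\tau$-extension — which is precisely why the two- and three-cycle examples, with their fully enumerable extension sets, are the right point of departure.
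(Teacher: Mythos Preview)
The paper does not actually prove this proposition; immediately after stating it, the text reads ``Detailed proofs can be found in \cite[Proposition 2.7]{diss} as well as \cite[Section 3.1]{GagD14}.'' Your two-part plan---verify each arrow directly from Definition~\ref{def:extsem} (resp.\ from \cite{Dung95,BarGG05,DvoG12}), then refute every missing inclusion by a small library of hand-computable AFs---is the standard route and is what those cited sources do. The soundness arguments you sketch are correct, and the two-cycle, three-cycle, and self-loop examples you name already cover most of the required separations.

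There is, however, a genuine conceptual slip in your treatment of the dotted arrows. You propose to ``exhibit infinite AFs on which the corresponding inclusion fails''. But the paper's own explanation, given in the paragraph right after the proposition, is different: the shorthand $\sigma\subseteq_{\m{C}}\tau$ \emph{presupposes} that $\sigma$ and $\tau$ are total on $\m{C}$, and Example~\ref{ex:infrecursion} shows that $\cfzwei$ and $\stgzwei$ are \emph{not} total on $\m{F}$ (the SCC recursion need not terminate). So the dotted arrows are not solid because $\subseteq_{\m{F}}$ is simply undefined once $\cfzwei$ or $\stgzwei$ is involved---not because some infinite AF witnesses a failure of inclusion. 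Your counterexample search might never succeed: it is entirely plausible (and, to my knowledge, open) that whenever $\cfzwei(\F)$ and $\stgzwei(\F)$ \emph{are} well-defined on an infinite $\F$, the inclusions $\stb\subseteq\stgzwei\subseteq\cfzwei\subseteq\nav$ still hold. The fix is easy: drop the search for infinite counterexamples and invoke non-totality directly, exactly as the paper does.
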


Detailed proofs can be found in \cite[Proposition 2.7]{diss} as well as \cite[Section 3.1]{GagD14}. Note that the shorthand $\sigma\subseteq_{\m{C}}\tau$ requires that both semantics are total functions on \m{C} since a framework to which one of these semantics is undefined renders the subset shorthand undefined itself. The following simple example shows that Definition~\ref{def:recursive} does not always provide a definite answer on whether a certain candidate set is an $\cfzwei$-extension or $\stgzwei$-extension, respectively. This is due to the fact that the defined recursion does not terminate necessarily in case of non-finite AFs.\footnote{We mention that the inventors of both semantics considered finite AFs only \cite{BarGG05,DvoG12}. In case of finite AFs any recursion will terminate no matter which candidate set is considered.} Consequently, $\stgzwei$ and $\cfzwei$ are not total functions regarding arbitrary frameworks.

\begin{example}[Infinite Recursion \cite{BauS17}] 
	\label{ex:infrecursion}
	Consider the following AF \linebreak $\AF=(A\cup B,R)$ where
	\begin{itemize}
	  \item $A = \{a_i \mid i\in\N\}$, $B = \{b_i \mid i\in\N\}$ and
	  \item $R = \{(b_i,a_i), (a_{i+1},a_i), (a_i,b_{i+1}) \mid i\in\N\}$ 
   \end{itemize}

 \begin{center}
	  \begin{tikzpicture}[scale=1.05]
	    \path node[arg](b1){$a_1$}
			   +(-0.7,0) node (f){$\F\!:$}
	      
	       +(0,-1.6) node[arg](c1){$b_1$}
	      ++(1.3,0) node[arg](b2){$a_2$}
	      
	       +(0,-1.6) node[arg](c2){$b_2$}
	      ++(1.3,0) node[arg](b3){$a_3$}
	       
	       +(0,-1.6) node[arg](c3){$b_3$}
	      ++(1.3,0) node[arg](b4){$a_4$}
	      
	       +(0,-1.6) node[arg](c4){$b_4$}
	      ++(1.3,0) node[arg,opacity=.7](b5){$a_5$}
				
				+(0,-1.6) node[arg,opacity=.7](c5){$b_5$}
	      ++(1.3,0) node[arg,opacity=.4](b6){$a_6$}
	      
	       +(0,-1.6) node[arg,opacity=.4](c6){$b_6$}
	       +(0.9,0) node(x){\ldots}
				+(0.9,-1.6) node(y){\ldots}
	       ;
	    
	    \path [attack,<-]
	      (b1) edge (c1)
	      (b2) edge (c2)
	      (b3) edge (c3)
	      (b4) edge (c4)
	      (b5) edge[opacity=.7] (c5)
				(b6) edge[opacity=.4] (c6)
	      ;
	   
	    \path [attack]
	      (b1) edge (c2)
	      (b2) edge (c3)
	      (b3) edge (c4)
	      (b4) edge (c5)
				(b5) edge[opacity=.7] (c6)
	      ;
	    \path [attack]
	      (b2) edge (b1)
	      (b3) edge (b2)
	      (b4) edge (b3)
	      (b5) edge[opacity=.7] (b4)
				(b6) edge[opacity=.4] (b5)
	      ;
	   
	  \end{tikzpicture}
	\end{center}  
	
	Let $\sigma\in\{\cfzwei,\stgzwei\}$. We want to check whether the candidate set $E = \left\{b_i \mid i\in\N\right\}$ is a $\sigma$-extension. 
  Observe that the AF $\F$ possesses two SCCs, namely one consisting of the single argument $b_1$ and the other containing the remaining arguments, i.e.\ $S_1 = \{b_1\}$ and $S_2 = \left(A\cup B\right) \sm\{b_1\}$.
For $S_1$ we end up with the base case returning a positive answer. For $S_2$ we have to consider the AF $\F' = \F|_{UP_\F(S_2,E)} = \F|_{\left(A\cup B\right) \sm\{a_1,b_1\}}$ (since $a_1$ is attacked by $b_1\in E\sm S_2$)
 and the set $S' = E\cap S_2 = \{b_i \mid i\in\N, i\geq 2\}$. Obviously, determining whether $S'$ is an $\sigma$-extension
 w.r.t.\ $\F'$ is equivalent to decide whether $S$ is an $\sigma$-extension w.r.t.~$\F$.
This means, the consideration of the candidate set $E$ leads to infinite recursion. 
\end{example}

\subsection{Definedness Statuses of Argumentation Semantics}

We now introduce the two main definedness statuses of argumentation semantics which capture the notions of existence and uniqueness, namely so-called \textit{universal} and \textit{unique definedness}. Both versions are relativized to a certain set of AFs. If clear from context, unimportant or if $\m{C} = \m{F}$ we will not mention explicitly the considered set of AFs. 

\begin{definition} \label{def:uniuni}   
 Given a semantics $\sigma$ and a set $\m{C}$ of AFs. We say that $\sigma$ is
 \begin{enumerate}
	 \item \textit{universally defined w.r.t.\ \m{C}} if $\forall\F\in\m{C}$, $\card{\sigma(\F)}\geq 1$ and
	\item \textit{uniquely defined w.r.t.\ \m{C}} if $\forall\F\in\m{C}$, $\card{\sigma(\F)} = 1$.
\end{enumerate}
\end{definition}

Why are the introduced definedness statuses important? Let us consider an arbitrary logical formalism $\m{L}$ together with its semantics $\sigma_{\m{L}}$. One central question is whether the semantics provides any $\m{L}$-theory $T$ with a formal meaning, i.e.\ $\card{\sigma_{\m{L}}(T)} \geq 1$. A more demanding property than \textit{existence} is \textit{uniqueness}, i.e.\ $\card{\sigma_{\m{L}}(T)} = 1$ for any $\m{L}$-theory $T$. Clearly, these properties are interesting from several perspectives. For instance, in case of uniqueness, we observe a coincidence of sceptical and credulous reasoning modes. More precisely, if $\sigma_{\m{L}}(T) = \{E\}$, then $\bigcap\ \sigma_{\m{L}}(T) = \bigcup\ \sigma_{\m{L}}(T) = E$. Furthermore, if a theory $T$ is interpreted as \textit{meaningful} if and only if $\sigma_{\m{L}}(T)\neq \emptyset$, then existence might be a desired property. If the latter has to be neglected in the general case, then one further challenge is to identify sufficient properties of \m{L}-theories guaranteeing their meaningfulness. 

Let us come back to abstract argumentation frameworks \cite{Dung95}. Due to the practical nature of argumentation most work in the literature restricts itself to the case of finite AFs, i.e.\ any considered AF consists of finitely
many arguments and attacks only. For this class of AFs a proof or disproof of existence or uniqueness  is mostly straightforward. In the general infinite case however conducting such proofs is more intricate. It usually involves the proper use of set theoretic axioms, like the \textit{axiom of choice} or equivalent statements. Dung already proposed the existence of preferred extensions in the case of
    infinite argumentation frameworks. It has later on (e.g.\
    \cite{CamV10}) been pointed out that Dung has not been precise with
    respect to the use of principles. The existence of semi-stable extensions
    for finitary\footnote{An argument is called \textit{finitary} if it receives finitely many attacks only. Moreover, an AF is said to be \textit{finitary} if and only if it consists of finitary arguments only (cf.\ Definition~\ref{def:classes}).} argumentation frameworks was first shown by Weydert
    with the use of model-theoretic techniques \cite{weydert}. Later on, Baumann and Spanring presented a first comprehensive overview of results regarding existence and uniqueness for a whole bunch of semantics
    considered in the literature \cite{BauS15}. They provided
    complete or alternative proofs of already known results and contributed
 missing results for the infinite or finitary case. We mention two interesting results: Firstly, eager semantics is exceptional among the universally defined semantics since either there is exactly one or there
are infinitely many eager extensions. Secondly, stage semantics behaves similarly to semi-stable in the sense that extensions are guaranteed as long as finitary AFs are considered. A further step forward in the systematic analysis of argumentation semantics in the infinite case was presented in \cite{Spa16}. Spanring studied the relation between non-existence of extensions and the number of non-finitary arguments. It was shown that there are AFs where one single non-finitary argument causes a collapse\footnote{The term \textit{collapse} was firstly introduced in \cite{Spa16} and it refers to a semantics not providing any extension/labelling for a given AF. In \cite{BauU18} the authors studied how to repair such a semantical defect.} of semi-stable semantics. Interestingly, all known AFs which do not provide any stage extension possesses infinitely many non-finitary arguments. It is an open question whether this observation applies in general \cite[Conjecture~14]{Spa16}.

\section{Existence and Uniqueness w.r.t.\ Finite AFs}
As a matter of fact, in order to show that a certain semantics $\sigma$ is not universally defined w.r.t.\ a certain set~\m{C} it suffices to present an AF $\F\in\m{C}$, s.t.\ $\sigma(\F)~=~\emptyset$. Contrastingly, an affirmative answer w.r.t.\ universal definedness requires a proof involving all AFs in~\m{C}. Let us consider finite AFs first. It is well-known that stable semantics does not warrant the existence of
      extensions/labellings even in the case of finite AFs. Witnessing examples are given by odd-cycles (cf.\ Example~\ref{ex:nostb}). Interestingly, in case of finite AFs we have that being odd-cycle free is sufficient for warranting at least one stable extension/labelling.\footnote{This is due to the fact that firstly, in case of finite AFs, being odd-cycle free
      coincides with being \textit{limited controversial} \cite[Definition 32]{Dung95} and secondly, any limited controversial AFs warrants the existence of at least one stable extensions \cite[Corollary 36]{Dung95}.}
			
\begin{example} \label{ex:nostb} The following minimalistic AFs cause a collapse of stable semantics, i.e.\ $\stb(\F_1) = \stb(\F_3) = \emptyset$.
      
	\begin{tikzpicture}

		\node (A5) at (0,-2.5) [circle, minimum size=0.7cm, thick, draw, label = left:$\F_1:$] {$a_1$};
    
		\node (A1) at (2.8,-2.5) [circle, minimum size=0.7cm, thick, draw, label = left:$\F_3:$] {$a_1$};
    \node (B1) at (4,-2.5) [circle, minimum size=0.7cm,  thick, draw] {$a_2$};
		\node (C1) at (3.4,-1.5) [circle, minimum size=0.7cm,  thick, draw] {$a_3$};


\draw[->, thick] (A5) to [thick,loop,distance=0.5cm] (A5);
\draw[->, thick] (A1) to [thick,bend right] (B1);
\draw[->, thick] (B1) to [thick,bend right] (C1);
\draw[->, thick] (C1) to [thick,bend right] (A1);

\end{tikzpicture}

Observe that both frameworks do possess semi-stable, stage2 as well as stage extensions/labellings. The extensions are as follows: For any $\sigma\in\{\semi,\stgzwei,\stg\}$, $\tau\in\{\stgzwei,\stg\}$, $\Ext_{\sigma}(\F_1) = \{\emptyset\} = \Ext_{\semi}(\F_3)$ and $\Ext_{\tau}(\F_3) = \{\{a_1\},\{a_2\},\{a_3\}\}$. 

\end{example}

Let us consider now semi-stable semantics. Example~\ref{ex:nostb} shows that AFs may possess semi-stable extensions even in the absence of stable extensions. Are semi-stable extensions possibly guaranteed in case of finite AFs? Consider the following explanations about the existence of semi-stable extensions taken from \cite{semi06}: 

\begin{quote} For every argumentation framework there exists at least one semi-stable extension.
This is because there exists at least one complete extension, and a semi-stable extension
is simply a complete extension in which some property (the union of itself and the
arguments it defeats) is maximal.
\end{quote}

We would like to point out two issues. Firstly, the presented explanation should not be understood as: Since any semi-stable extension is a complete one and complete semantics is universally defined we conclude that semi-stable semantics is universally defined. Accepting this kind of (false) argumentation would imply the universal definedness of stable semantics since also any stable extension is a complete one. The second issue is that the presented explanation is not precise about \textit{why} it is guaranteed that the non-empty set of complete extensions possesses at least one range-maximal member. The following statement gives a more precise explanation \cite{CamCD12}:

\begin{quote} For every (finite) argumentation framework, there exists at least one semi-stable extension. This is
because there exists at least one complete extension (the grounded) and the fact that the argumentation
framework is finite implies that there exist at most a finite number of complete extensions. The semi-stable
extensions are then simply those complete extensions in which some property (its range) is
maximal.
\end{quote}

This means, the additional argument that we have to compare finitely many complete extensions only justifies the universal definedness of semi-stable extensions in case of finite AFs. Obviously, in case of infinite AFs we cannot expect to have finitely many complete extensions implying that this kind of argumentation is no longer valid for finitary as well as infinite AFs in general. 

In the rest of this section we want to argue why all considered semantics except the stable one are universally defined in case of finite AFs.\footnote{We mention that grounded, ideal and eager semantics are even uniquely defined w.r.t.\ finite AFs. This will be a by-product of Theorem~\ref{the:grduni}, Corollary~\ref{the:universalid} as well as Theorem~\ref{the:universaleager}.} Remember that many semantics are looking for certain $\subseteq$-maximal elements. The main advantage in case of finiteness is that it is simply impossible to have infinite $\subseteq$-chains which guarantees the existence of $\subseteq$-maximal elements. Consider the following more detailed explanations. Given a finite AF $\F = (A,R)$, i.e.\ $\card{A} = n \in\N$. Consequently, $1 \leq \card{2^A} = 2^n \in\N$. By definition of any extension-based semantics $\sigma$  we derive $0 \leq \card{\Ext_{\sigma}(\F)} \leq 2^n$ since $\Ext_{\sigma}(\F)\subseteq 2^A$ (cf.\ Definition~\ref{def:semantics}). This means, for any finite $\F$ and any semantics~$\sigma$ we have at least one candidate set for being a $\sigma$-extension (namely, the empty set) and at most finitely many $\sigma$-extensions. In any case, the empty set is conflict-free as well as admissible, i.e.\ $\card{\Ext_{\cf}(\F)}, \card{\Ext_{\adm}(\F)} \geq 1$. Furthermore, naive and preferred semantics are looking for $\subseteq$-maximal conflict-free or admissible sets, respectively. Since we have finitely many conflict-free as well as admissible sets only we derive the universal definedness of naive and preferred semantics in case of finite AFs. Combining $\Ext_{\prf}\subseteq\Ext_{\com}$ and $\card{\Ext_{\prf}(\F)} \geq 1$ yields the universal definedness of complete semantics in case of finite AFs.
Moreover, since $1 \leq \card{\Ext_{\cf}(\F)}, \card{\Ext_{\adm}(\F)} \leq 2^n$ is given we obtain the universal definedness of stage and semi-stable semantics in case of finite AFs because the existence of $\subseteq$-range-maximal is guaranteed. 
Let us consider ideal and eager semantics. Candidate sets of both semantics are admissible sets being in the intersection of all preferred or semi-stable extensions, respectively. Note that there is at least one admissible set satisfying this property, namely the empty one since definitely $\emptyset\subseteq \bigcap \Ext_{\prf}(\F) \subseteq \m{U}$ as well as $\emptyset\subseteq \bigcap \Ext_{\semi}(\F) \subseteq \m{U}$. This means, the sets of candidates are non-empty and finite which guarantees the existence of $\subseteq$-maximal elements implying the universal definedness of ideal and eager semantics in case of finite AFs. 
The grounded extension, i.e.\ the $\subseteq$-least fixpoint of the characteristic function $\Gamma_\AF$, is guaranteed due to the monotonicity of $\Gamma_\AF$ and the famous Knaster-Tarski theorem \cite{Tar55}.
Finally, even the more exotic stage2 as well as cf2 semantics are universally defined w.r.t.\ finite AFs. This can be seen as follows: Obviously, finitely many as well as initial SCCs are guaranteed due to finiteness. Consequently, one may start with computing stage/naive extension on these initial components and ``propagate'' the resulting extensions to the subsequent SCCs and so on. This procedure will definitely terminate and ends up with stage2/cf2 extensions. Apart from stable semantics we have argued that the extension-based versions of all considered semantics are universally defined w.r.t.\ finite AFs. In case of mature semantics, the result carry over to their labelling-based versions since any of these semantics possesses a one-to-one-correspondence between extensions and labellings. This property does not hold in case of admissible as well as conflict-free sets. However, since any admissible/conflict-free set induce at least one admissible/conflict-free labelling the result applies to their labelling versions too (cf.\ Section~\ref{sec:baspro}).

\section{Existence and Uniqueness w.r.t.\ Arbitrary AFs} \label{sec:exunrestrict}

\subsection{Non-well-defined Semantics}

In contrast to all other semantics available in the literature, cf2 as well as stage2 semantics were originally defined recursively. The recursive schema is based on the decomposition of AFs
along their strongly connected components (SCCs). Roughly speaking, the schema takes a base semantics $\sigma$ and proceeds along the induced partial ordering and evaluates the SCCs according to $\sigma$ while propagating relevant results to subsequent SCCs. This procedure defines a $\sigma\mathit{2}$ semantics.\footnote{Following this terminology we have to rename $\cfzwei$ semantics to $\nav\mathit{2}$ semantics since its base semantics is the naive semantics and not conflict-free sets.} Given so-called \textit{SCC-recursiveness} \cite{BarGG05} we have to face some difficulties in drawing conclusions with respect to infinite AFs. Firstly, arbitrary AFs need not to possess initial SCCs which is granted for finite AFs. This makes checking whether a certain set is an $\sigma\mathit{2}$-extension more complicated and in particular, especially due to the recursive definitions not that easy to handle. Secondly, even worse, even if an AF as well as subsequent subframeworks of it possess initial SCCs there is no guarantee that any recursion will stop in finitely many steps. More precisely, as shown in Example~\ref{ex:infrecursion} there might be candidate sets which lead to infinite recursion, i.e.\ the base case will never be considered. In \cite[Propositions 2.12 and 3.2]{GagD14} the authors considered alternative non-recursive definitions of cf2 as well as stage2 semantics in case of finite AFs. It is an open question whether these definitions overcome the problem of undefinedness for arbitrary frameworks.

\subsection{Collapsing Semantics}

Dealing with finite AFs is a common as well as attractive and reasonable
restriction, due to their computational nature. In the section before we have argued that apart from stable semantics all considered semantics are universally defined w.r.t.\ finite AFs. It is an important observation that warranting the existence of $\sigma$-extensions/labellings in case of finite AFs does not necessarily carry over
      to the infinite case, i.e.\ the semantics~$\sigma$ does not need to be
      universally defined w.r.t.\ arbitrary AFs. Take for instance semi-stable and stage semantics.
      To the best of our knowledge the first example showing that semi-stable
      as well as stage semantics does not guarantee extensions/labellings in case of
      non-finite AFs was given in \cite[Example 5.8.]{Ver03} and is picked up
      in the following example. 

      \begin{example}[Collapse of Stage and Semi-stable Semantics]
	\label{ex:noext}
	Consider the AF $\AF=(A\cup B\cup C,R)$ where
	\begin{itemize}
	  \item $A = \{a_i \mid i\in\N\}$, $B = \{b_i \mid i\in\N\}$, $C = \{c_i \mid i\in\N\}$ and
	  \item
	    $R = \{(a_i,b_i), (b_i,a_i), (b_i,c_i),
	      (c_i,c_i) \mid i\in\N\} \cup \{(b_i,b_j),
	      (b_i,c_j)\mid i,j\in\N, j<i\}$  
	    
	\end{itemize}

	  \begin{center}
	  \begin{tikzpicture}
	    \path node[arg](b1){$b_1$}
			   +(-0.7,0) node (f){$\F\!:$}
	       +(0,1.5) node[arg](a1){$a_1$}
	       +(0,-1.6) node[arg](c1){$c_1$}
	      ++(1.5,0) node[arg](b2){$b_2$}
	       +(0,1.5) node[arg](a2){$a_2$}
	       +(0,-1.6) node[arg](c2){$c_2$}
	      ++(1.5,0) node[arg](b3){$b_3$}
	       +(0,1.5) node[arg](a3){$a_3$}
	       +(0,-1.6) node[arg](c3){$c_3$}
	      ++(1.5,0) node[arg](b4){$b_4$}
	       +(0,1.5) node[arg](a4){$a_4$}
	       +(0,-1.6) node[arg](c4){$c_4$}
	      ++(1.5,0) node[arg,gray](b5){$b_5$}
	       +(0,1.5) node[arg,gray](a5){$a_5$}
	       +(0,-1.6) node[arg,gray](c5){$c_5$}
	       +(1.5,0) node(x){\ldots}
	       ;
	    \path [attack,loop,distance=0.5cm,out=-60,in=-120]
	      (c1) edge (c1)
	      (c2) edge (c2)
	      (c3) edge (c3)
	      (c4) edge (c4)
	      (c5) edge[gray] (c5)
	      ;
	    \path [attack,<-]
	      (c1) edge (b1)
	      (c2) edge (b2)
	      (c3) edge (b3)
	      (c4) edge (b4)
	      (c5) edge[gray] (b5)
	      ;
	    \path [attack,<->]
	      (b1) edge (a1)
	      (b2) edge (a2)
	      (b3) edge (a3)
	      (b4) edge (a4)
	      (b5) edge[gray] (a5)
	      ;
	    \path [attack,in=30,out=210]
	      (b2) edge (c1)
	      (b3) edge (c1)
	      (b3) edge (c2)
	      (b4) edge (c1)
	      (b4) edge (c2)
	      (b4) edge (c3)
	      (b5) edge[gray] (c1)
	      (b5) edge[gray] (c2)
	      (b5) edge[gray] (c3)
	      (b5) edge[gray] (c4)
	      ;
	    \path [attack]
	      (b2) edge (b1)
	      (b3) edge (b2)
	      (b4) edge (b3)
	      (b5) edge[gray] (b4)
	      ;
	    \path [attack,in=330,out=210]
	      (b3) edge (b1)
	      (b4) edge (b1)
	      (b4) edge (b2)
	      (b5) edge[gray] (b1)
	      (b5) edge[gray] (b2)
	      (b5) edge[gray] (b3)
	      ;
	  \end{tikzpicture}
	\end{center}  
  
	The set of preferred and naive extensions coincide, in particular
	$\Ext_{\prf}(\AF) = \Ext_{\nav}(\AF) = \{A\} \cup \left\{ E_i \mid i\in\N \right\}$
	where $E_i = (A\setminus\{a_i\}) \cup \{b_i\} $. Furthermore, none of
	these extensions is $\subseteq$-range-maximal since \linebreak
	$A^\oplus\subsetneq\E^\oplus_{i}\subsetneq\E^\oplus_{i+1}$ for any $i\in\N$. In
	consideration of $\semi\subseteq \prf$ and $\stg\subseteq \nav$ (cf.\
	Figure~\ref{fig:semrel}) we conclude that this framework possesses
	neither semi-stable nor stage extensions/labellings. 
      \end{example}

In Example~\ref{ex:infrecursion} we have seen that cf2 as well as stage2 semantics are not well-defined in general. This means, there are infinite AFs and candidate sets leading to an infinite recursion implying that there is no definite answer on whether such a set is an extension. However, the following example shows that even if for any candidate set a definitive decision is possible there need not to be an extension in contrast to finite~AFs. 

\begin{example}[Collapse of Cf2 and Stage2 Semantics]
	\label{ex:nostg}
	Taking into account the AF $\AF=(A\cup B\cup C,R)$ from
	Example~\ref{ex:noext}. Consider the AF $\G = \F|_B$, i.e.\ the restriction of $\F$ to $B$.

\begin{center}
	  \begin{tikzpicture}
	    \path 	node[arg](b1){$b_1$}
			+(-1,0) node(f){$\AG\!:$}
	      ++(1.5,0) node[arg](b2){$b_2$}
	      ++(1.5,0) node[arg](b3){$b_3$}
	      ++(1.5,0) node[arg](b4){$b_4$}
	      ++(1.5,0) node[arg,gray](b5){$b_5$}
	      ++(1.5,0) node[arg,lightgray](b6){$b_6$}
	      +(1.2,0) node(x){\ldots}
	      ;
	    \path [attack]
	      (b2) edge (b1)
	      (b3) edge (b2)
	      (b4) edge (b3)
	      (b5) edge[gray] (b4)
	      (b6) edge[lightgray] (b5)
	      ;
	    \path [attack,in=330,out=210]
	      (b6) edge[lightgray] (b1)
	      (b6) edge[lightgray] (b2)
	      (b6) edge[lightgray] (b3)
	      (b6) edge[lightgray] (b4)
	      (b5) edge[gray] (b1)
	      (b5) edge[gray] (b2)
	      (b5) edge[gray] (b3)
	      (b3) edge (b1)
	      (b4) edge (b1)
	      (b4) edge (b2)
	      ;
	  \end{tikzpicture}
	  \end{center}
		Let $\sigma \in \left\{\cfzwei,\stgzwei \right\}$. Obviously, any argument $b_i$ constitutes a SCC $\{b_i\}$ which is evaluated as
      $\{b_i\}$ by the base semantics of $\sigma$. Consequently, $\emptyset$
      cannot be a $\sigma$-extension. Furthermore, a singleton $\{b_j\}$ cannot
      be a $\sigma$-extension either. The $b_i$'s for $i>j$ are not affected by
      $\{b_j\}$ and thus, the evaluation of $\G|_{UP_\G(\{b_i\},\{b_j\})} = \G|_{\{b_i\}} = (\{b_i\},\emptyset)$ do not return $\emptyset$ as
      required. Finally, any set containing more than two arguments would rule
      out at least one of them and thus, cannot be a $\sigma$-extension. Hence, $\card{\Ext_{\sigma}(\AG)} = \card{\Lab_{\sigma}(\AG)} = 0$. 
 \end{example}

In Example~\ref{ex:noext} we have seen an AF $\F$ without any semi-stable and stage extensions/labellings. In \cite{BauS15} the authors studied the question of existence-dependency between both semantics in case of infinite AFs. More precisely, they studied whether it is possible that some AF does have semi-stable but no
      stage extensions or vice versa, there are stage but no
      semi-stable extensions. The following Example~\ref{ex:nosem} shows that stage extensions might exist even if semi-stable semantics collapses.\footnote{The AF $\G = \AF|_{B}$ depicted in Example~\ref{ex:nostg} witnesses the reverse case. It can be checked that $\Ext_{\semi}(\AG) = \{\emptyset\}$ and $\Ext_{\stg}(\AG) = \emptyset$ (cf.\ \cite[Example~2]{BauS15} for further explanations).}

      \begin{example}[No Semi-Stable but Stage Extensions/Labellings] 
	\label{ex:nosem} \mbox{} \\
Consider~again the AF $\F$ depicted in Example~\ref{ex:noext}. Using the components of $\F$ we define \linebreak $\G=(A\cup B\cup C\cup D\cup
	E,R\cup R')$  where 
	\begin{itemize}
	  \item $D = \{d_i \mid i\in\N\}$ and $E = \{e_i \mid i\in\N\}$ and
	  \item $R' = \{(a_i,d_i), (d_i,a_i), (b_i,d_i), (d_i,b_i), (d_i,c_i), (e_i,d_i), (e_i,e_i) \mid i\in\N\}$
	\end{itemize}

	\begin{center}
	  \begin{tikzpicture}
	    \path 	node[arg](b1){$b_1$}
			  +(-1.7,0) node(g){$\G\!:$}
	      +(-1,.75) node[argm](d1){$d_1$}
	      +(-1,2) node[argm](e1){$e_1$}
	      +(0,1.5) node[arg](a1){$a_1$}
	      +(0,-1.9) node[arg](c1){$c_1$}
	      ++(2,0) node[arg](b2){$b_2$}
	      +(-1,.75) node[argm](d2){$d_2$}
	      +(-1,2) node[argm](e2){$e_2$}
	      +(0,1.5) node[arg](a2){$a_2$}
	      +(0,-1.9) node[arg](c2){$c_2$}
	      ++(2,0) node[arg](b3){$b_3$}
	      +(-1,.75) node[argm](d3){$d_3$}
	      +(-1,2) node[argm](e3){$e_3$}
	      +(0,1.5) node[arg](a3){$a_3$}
	      +(0,-1.9) node[arg](c3){$c_3$}
	      ++(2,0) node[arg](b4){$b_4$}
	      +(-1,.75) node[argm](d4){$d_4$}
	      +(-1,2) node[argm](e4){$e_4$}
	      +(0,1.5) node[arg](a4){$a_4$}
	      +(0,-1.9) node[arg](c4){$c_4$}
	      ++(2,0) node[arg,gray](b5){$b_5$}
	      +(-1,.75) node[argm,darkgray](d5){$d_5$}
	      +(-1,2) node[argm,darkgray](e5){$e_5$}
	      +(0,1.5) node[arg,gray](a5){$a_5$}
	      +(0,-1.9) node[arg,gray](c5){$c_5$}
	      +(1,0) node(x){\ldots}
	      ;
	    \path [attack,loop,distance=0.4cm,out=60,in=120]
	      (e1) edge (e1)
	      (e2) edge (e2)
	      (e3) edge (e3)
	      (e4) edge (e4)
	      (e5) edge[darkgray] (e5)
	      ;
	    \path [attack,loop,distance=0.6cm,out=-60,in=-120]
	      (c1) edge (c1)
	      (c2) edge (c2)
	      (c3) edge (c3)
	      (c4) edge (c4)
	      (c5) edge[gray] (c5)
	      ;
	    \path [attack,->]
	      (e1) edge (d1)
	      (e2) edge (d2)
	      (e3) edge (d3)
	      (e4) edge (d4)
	      (e5) edge[darkgray] (d5)
	      ;
	    \path [attack,<-]
	      (c1) edge (b1)
	      (c2) edge (b2)
	      (c3) edge (b3)
	      (c4) edge (b4)
	      (c5) edge[gray] (b5)
	      ;
	    \path [attack,<->]
	      (b1) edge (d1)
	      (b2) edge (d2)
	      (b3) edge (d3)
	      (b4) edge (d4)
	      (b5) edge[gray] (d5)
	      ;
	    \path [attack,<->]
	      (a1) edge (d1)
	      (a2) edge (d2)
	      (a3) edge (d3)
	      (a4) edge (d4)
	      (a5) edge[gray] (d5)
	      ;
	    \path [attack,<->]
	      (b1) edge (a1)
	      (b2) edge (a2)
	      (b3) edge (a3)
	      (b4) edge (a4)
	      (b5) edge[gray] (a5)
	      ;
	    \path [attack,in=30,out=210]
	      (b2) edge (c1)
	      (b3) edge (c1)
	      (b3) edge (c2)
	      (b4) edge (c1)
	      (b4) edge (c2)
	      (b4) edge (c3)
	      (b5) edge[gray] (c1)
	      (b5) edge[gray] (c2)
	      (b5) edge[gray] (c3)
	      (b5) edge[gray] (c4)
	      ;
	    \path [attack]
	      (b2) edge (b1)
	      (b3) edge (b2)
	      (b4) edge (b3)
	      (b5) edge[gray] (b4)
	      ;
	    \path [attack,in=160,out=240]
	      (d1) edge (c1)
	      (d2) edge (c2)
	      (d3) edge (c3)
	      (d4) edge (c4)
	      (d5) edge[darkgray] (c5)
	      ;
	    \path [attack,in=330,out=210]
	      (b3) edge (b1)
	      (b4) edge (b1)
	      (b4) edge (b2)
	      (b5) edge[gray] (b1)
	      (b5) edge[gray] (b2)
	      (b5) edge[gray] (b3)
	      ;
	    \end{tikzpicture}
	  
	\end{center}

	In comparison to Example~\ref{ex:noext} we do not observe any changes
	as far as preferred and semi-stable semantics are concerned. In
	particular, $\Ext_{\prf}(\G)= \{A\} \cup \left\{ E_i \mid i\in\N \right\}$
	where $E_i = (A\setminus\{a_i\}) \cup \{b_i\} $ and again, none of
	these extensions is $\subseteq$-range-maximal. Hence, $\Ext_{\semi}(\G) =
	\emptyset$. Observe that we do have additional conflict-free as well as
	naive sets, especially the set~$D$. Since any $e\in E$ is
	self-defeating and unattacked and furthermore, $D^\oplus = A\cup B\cup C\cup
	D$ we conclude, $\Ext_{\stg}(\G) = \{D\}$. Due to the one-to-one correspondence the collapse or non-collapse transfer to their labelling-based versions.
      \end{example}

 \subsection{Universally Defined Semantics}
We now turn to semantics which are universally defined w.r.t.\ the whole class of AFs. The first non-trivial result in this line was already proven by Dung himself, namely the universal definedness of the extension-based version of preferred semantics \cite[Corollary 12]{Dung95}. He argued that the Fundamental Lemma (cf.\ \cite[Lemma 10]{Dung95}) immediately implies that the set of all admissible sets is a complete partial order which means that any $\subseteq$-chain possesses a least upper bound. Then (and this was not explicitly stated in \cite{Dung95}), due to the famous \textit{Zorn's lemma} \cite{zorn1935} the existence of $\subseteq$-maximal admissible sets, i.e.\ preferred extensions, is guaranteed. 

In order to get an idea how things work in the general case we illustrate some proofs in more detail. We will see that a proof of universal definedness w.r.t.\ arbitrary AFs is completely different to the argumentation in case of finite ones. In order to keep this section self-contained we start with Zorn's lemma and an equivalent version of it.

  \begin{lemma}[\cite{zorn1935}]
	\label{lem:zorn}
	Given a partially ordered set $(P,\leq)$. If any $\leq$-chain possesses
	an upper bound, then $(P,\leq)$ has a maximal element.
      \end{lemma}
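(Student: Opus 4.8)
The plan is to derive the statement from the Axiom of Choice, since Zorn's lemma is classically equivalent to it and the implication we need (AC $\Rightarrow$ Zorn) is the standard tower argument. So I would suppose, for contradiction, that $(P,\leq)$ has no maximal element. Combined with the hypothesis that every $\leq$-chain has an upper bound, this yields that every chain $C\subseteq P$ has a \emph{strict} upper bound, i.e.\ an $s\in P$ with $c<s$ for all $c\in C$: take an upper bound $b$ of $C$; since $b$ is not maximal there is some $s>b$, and then $s$ is strictly above every element of $C$. Using the Axiom of Choice, fix a function $g$ assigning to each chain $C\subseteq P$ such a strict upper bound $g(C)$.

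Next I would introduce the notion of a \emph{$g$-tower}: a subset $A\subseteq P$ that is well-ordered by $\leq$ and satisfies, for every $a\in A$,
\[
  a \;=\; g\bigl(\set{x\in A \mid x<a}\bigr),
\]
so that $A$ is precisely the strictly increasing transfinite sequence obtained by iterating $g$. The key technical step --- and the one I expect to be the main obstacle --- is the \textbf{comparison lemma}: any two $g$-towers $A$ and $B$ are $\subseteq$-comparable, and the smaller is an initial segment of the larger. One proves it by looking at the first place where $A$ and $B$ disagree and deriving a contradiction from the defining equation together with well-orderedness; the bookkeeping is delicate because one must handle initial segments of two distinct well-orders simultaneously. (An alternative that packages this away is to invoke the Bourbaki--Witt fixed-point theorem on the chain-complete poset of all chains of $P$ ordered by inclusion, with the inflationary map $C\mapsto C\cup\set{g(C)}$; but Bourbaki--Witt is itself proved by essentially the same tower argument, so the difficulty is merely relocated.)

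Granting the comparison lemma, the union $U=\bigcup\set{A\mid A\text{ is a }g\text{-tower}}$ is again a $g$-tower --- well-orderedness and the defining equation are inherited because the towers form a $\subseteq$-chain --- and it is the largest one. Now $U$ is in particular a chain, so $g(U)$ is defined, and $g(U)\notin U$ since $g(U)$ lies strictly above every element of $U$. But then $U\cup\set{g(U)}$ is a $g$-tower properly containing $U$, contradicting maximality of $U$. Hence the initial assumption fails and $(P,\leq)$ has a maximal element.

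For completeness I would also note the quicker route via the well-ordering theorem: well-order $P$ as $\set{p_\xi\mid\xi<\kappa}$ and build, by transfinite recursion, an increasing chain, adjoining $p_\xi$ at stage $\xi$ exactly when the result stays a chain; the union is a $\subseteq$-maximal chain, and its upper bound (which exists by hypothesis) must be maximal in $P$, since any strict extension of it would enlarge the chain. Either way, the only genuine content is the transfinite-recursion uniqueness argument encapsulated in the comparison lemma; everything else is routine.
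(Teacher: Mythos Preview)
Your outline is a correct standard derivation of Zorn's lemma from the Axiom of Choice via the tower/Bourbaki--Witt argument. However, the paper does not prove this statement at all: it is stated as Lemma~\ref{lem:zorn} with a citation to Zorn~(1935) and treated as a black-box tool (subsequently used to establish the existence of preferred and naive extensions). So there is no proof in the paper to compare your proposal against; you have supplied a proof where the paper simply invokes the classical result.
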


  \begin{lemma}
	\label{lem:zorn2}
	Given a partially ordered set $(P,\leq)$. If any $\leq$-chain possesses
	an upper bound, then for any $p\in P$ there exists a maximal element
	$m\in P$, s.t.\ $p\leq m$.
      \end{lemma}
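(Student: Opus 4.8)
The plan is to derive this pointed version of Zorn's lemma from the standard formulation in Lemma~\ref{lem:zorn} by passing to a suitable sub-poset. Fix an arbitrary $p\in P$ and consider the upward-closed subset
$$P_{\geq p}=\set{q\in P\mid p\leq q}$$
equipped with the order inherited from $(P,\leq)$. This is again a partially ordered set, and it is non-empty since $p\leq p$ witnesses $p\in P_{\geq p}$.

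The key step is to check that every $\leq$-chain in $(P_{\geq p},\leq)$ has an upper bound \emph{inside} $P_{\geq p}$. Let $C\subseteq P_{\geq p}$ be a chain. If $C=\emptyset$, then $p$ itself is an upper bound of $C$ in $P_{\geq p}$. If $C\neq\emptyset$, then $C$ is in particular a $\leq$-chain in $P$, so by assumption it has an upper bound $u\in P$; picking any $c\in C$ we get $p\leq c\leq u$ by transitivity, hence $u\in P_{\geq p}$, and $u$ is an upper bound of $C$ in $P_{\geq p}$. Thus $(P_{\geq p},\leq)$ satisfies the hypothesis of Lemma~\ref{lem:zorn}, and therefore possesses a maximal element $m\in P_{\geq p}$.

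It remains to verify that $m$ is maximal in the whole of $P$ and satisfies $p\leq m$. The latter is immediate from $m\in P_{\geq p}$. For the former, suppose $m'\in P$ with $m\leq m'$; then $p\leq m\leq m'$ gives $m'\in P_{\geq p}$, so maximality of $m$ within $P_{\geq p}$ forces $m=m'$. Hence $m$ is a maximal element of $P$ with $p\leq m$, as required. I do not anticipate any genuine obstacle here; the only point needing care is the treatment of the empty chain, which is why non-emptiness of $P_{\geq p}$ (ensured by $p\in P_{\geq p}$) is singled out explicitly.
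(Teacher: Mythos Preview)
Your proof is correct and is the standard derivation of this pointed form from ordinary Zorn's lemma. The paper itself does not prove Lemma~\ref{lem:zorn2}; it merely states it alongside Lemma~\ref{lem:zorn} as an equivalent version of Zorn's lemma, so your argument in fact supplies what the paper omits.
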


Having Lemma~\ref{lem:zorn2} at hand we may easily argue that any conflict-free/admissible set is bounded by a naive/preferred extension. 

      \begin{lemma}
	\label{lem:prna}
	Given $\AF = (A,R)$ and $E\subseteq A$,
	\begin{enumerate}
	  \item if $\E\in\Ext_{\cf}(\AF)$, then there exists $E'\in\Ext_{\nav}(\AF)$ s.t.\
	    $E\subseteq E'$ and
	  \item if $\E\in\Ext_{\adm}(\AF)$, then there exists $E'\in\Ext_{\prf}(\AF)$ s.t.\
	    $E\subseteq E'$.
	\end{enumerate}
      \end{lemma}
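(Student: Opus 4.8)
The plan is to apply Zorn's lemma in the form of Lemma~\ref{lem:zorn2} to two suitably chosen partially ordered sets, one for conflict-free sets and one for admissible sets. For part~(1), I would first observe that the set $\Ext_{\cf}(\AF)$ of conflict-free sets of $\AF$, ordered by $\subseteq$, is a partial order, and that every $\subseteq$-chain $\C\subseteq\Ext_{\cf}(\AF)$ has an upper bound in $\Ext_{\cf}(\AF)$, namely $\bigcup\C$: indeed, if $\bigcup\C$ contained a conflict $(a,b)\in R$, then by directedness of the chain there would be a single $D\in\C$ containing both $a$ and $b$, contradicting conflict-freeness of $D$. (The empty chain is handled by $\emptyset\in\Ext_{\cf}(\AF)$, so $\Ext_{\cf}(\AF)$ is non-empty.) Given $E\in\Ext_{\cf}(\AF)$, Lemma~\ref{lem:zorn2} then yields a $\subseteq$-maximal element $E'$ of $\Ext_{\cf}(\AF)$ with $E\subseteq E'$; by Definition~\ref{def:extsem}, a $\subseteq$-maximal conflict-free set is precisely a naive extension, so $E'\in\Ext_{\nav}(\AF)$.

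For part~(2), the same strategy applies to $\Ext_{\adm}(\AF)$ ordered by $\subseteq$. The key ingredient here is that the union of a $\subseteq$-chain of admissible sets is again admissible; this is exactly the content of Dung's Fundamental Lemma (\cite[Lemma~10]{Dung95}), which I am entitled to use, and which the excerpt already invokes for the existence of preferred extensions. Concretely, if $\C\subseteq\Ext_{\adm}(\AF)$ is a $\subseteq$-chain, then $\bigcup\C$ is conflict-free (as in part~(1)) and defends each of its elements: any $a\in\bigcup\C$ lies in some $D\in\C$, which defends $a$, hence so does the superset $\bigcup\C$. Thus $\bigcup\C\in\Ext_{\adm}(\AF)$ is an upper bound of $\C$ in $\Ext_{\adm}(\AF)$, and the empty chain is covered since $\emptyset\in\Ext_{\adm}(\AF)$. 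Applying Lemma~\ref{lem:zorn2} to $E\in\Ext_{\adm}(\AF)$ gives a $\subseteq$-maximal admissible set $E'\supseteq E$. It then remains to note that $\subseteq$-maximal admissible sets coincide with preferred extensions: by Definition~\ref{def:extsem}, $E'\in\Ext_{\prf}(\AF)$ iff $E'$ is admissible and there is no $I\in\Ext_{\com}(\AF)$ with $E'\subset I$; since complete extensions are admissible and every preferred extension is complete (Figure~\ref{fig:semrel}), maximality among admissible sets and maximality among complete sets agree, so $E'\in\Ext_{\prf}(\AF)$.

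I do not expect a genuine obstacle here; the proof is a clean two-fold application of Zorn's lemma. The one point that deserves care is the chain-closure argument for admissibility in part~(2) — making sure both conflict-freeness and the defence property pass to arbitrary (possibly infinite) unions of chains — but this is precisely the Fundamental Lemma, which may be cited. A secondary nicety is to handle the empty chain explicitly (so that the hypothesis of Lemma~\ref{lem:zorn2} holds for \emph{every} chain, including the empty one), which is immediate since $\emptyset$ is both conflict-free and admissible. No appeal to finiteness is made anywhere, so the statement holds for arbitrary AFs as claimed.
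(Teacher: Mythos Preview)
Your proposal is correct and follows essentially the same route as the paper: apply Lemma~\ref{lem:zorn2} to the posets $(\Ext_{\cf}(\AF),\subseteq)$ and $(\Ext_{\adm}(\AF),\subseteq)$, taking the union of a chain as upper bound and verifying conflict-freeness and admissibility of that union directly. You are in fact slightly more careful than the paper in two places: you explicitly handle the empty chain, and you justify why a $\subseteq$-maximal admissible set satisfies the paper's definition of preferred (no proper \emph{complete} superset) via $\com\subseteq\adm$, a point the paper's proof leaves implicit.
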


      \begin{proof}
	For $\AF = (A,R)$ we have the associated power set lattice
	$(2^A,\subseteq)$. Consider now the partially ordered fragments
	$\m{C} = (\Ext_{\cf}(\AF),\subseteq)$ as well as $\m{A} = (\Ext_{\adm}(\AF),\subseteq)$. In
	accordance with Lemma~\ref{lem:zorn2} the existence of naive and
	preferred supersets is guaranteed if any $\subseteq$-chain possesses an
	upper bound in $\m{C}$ or $\m{A}$, respectively. Given a
	$\subseteq$-chain $\m{E}\subseteq\Ext_{\cf}(\AF)$ or $\m{E}\subseteq\Ext_{\adm}(\AF)$,
	respectively. Consider now $\bar{E} = \bigcup \m{E}$. Obviously, $\bar{E}$
	is an upper bound of $\m{E}$, i.e.\ for any $E\in\m{E}$, $\E\subseteq\bar{E}$. It remains to show that $\bar{E}$ is conflict-free or admissible,
	respectively. Conflict-freeness is a finite condition. This means, if
	there were conflicting arguments $a,b\in\bar{E}$ there would have to be some conflict-free sets $E_a, E_b\in \bar{E}$, s.t.\ $a\in E_a$ and $b\in E_b$. Since \m{E} is a $\subseteq$-chain we have $E_a \subseteq E_b$ or $E_b \subseteq E_a$ which contradicts the conflict-freeness of at least one of them. Assume now $\bar{E}$ is not admissible.
	Consequently, there is some $a\in\bar{E}$ that is not defended by $\bar{E}$.
	Furthermore, there has to be an $E_a\in\m{E}$, s.t.\ $a\in E_a$ contradicting the admissibility of $E_a\in\Ext_{\adm}(\AF)$.
      \end{proof}

According to the last lemma, we may deduce the universal definedness of the extension-based versions of preferred as well as naive semantics as long as, for any AF $\F$, the existence of at least one conflict-free or admissible set is guaranteed. This is an easy task since the empty set is conflict-free as well as admissible even in the case of arbitrary AFs. Consequently, universal definedness of both extension-based semantics is given and the same applies to their labelling-based versions due to their one-to-one correspondence.

      \begin{theorem}
	\label{th:minor}
	Let $\sigma\in\{\prf,\nav\}$. The semantics $\sigma$ is universally defined.
      \end{theorem}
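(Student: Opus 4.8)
The plan is to obtain this as an immediate consequence of Lemma~\ref{lem:prna}, whose content already packages the real work (that $\subseteq$-chains of conflict-free, resp.\ admissible, sets have conflict-free, resp.\ admissible, upper bounds, so that Zorn's lemma applies). First I would record the trivial but crucial base case: for every AF $\F = (A,R)$ we have $\emptyset\in\Ext_{\cf}(\F)$ and $\emptyset\in\Ext_{\adm}(\F)$. Conflict-freeness of $\emptyset$ holds vacuously, since there are no $a,b\in\emptyset$ at all; admissibility then follows because $\emptyset$ is conflict-free and vacuously defends each of its (non-existent) elements. Crucially, neither observation uses any finiteness or structural restriction on $\F$, so they hold for arbitrary frameworks.

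Next, fix $\sigma\in\{\prf,\nav\}$ and an arbitrary $\F$. If $\sigma=\nav$, apply Item~1 of Lemma~\ref{lem:prna} to $E=\emptyset\in\Ext_{\cf}(\F)$ to get some $E'\in\Ext_{\nav}(\F)$ with $\emptyset\subseteq E'$, whence $\card{\Ext_{\nav}(\F)}\geq 1$. If $\sigma=\prf$, apply Item~2 of Lemma~\ref{lem:prna} to $E=\emptyset\in\Ext_{\adm}(\F)$ to get some $E'\in\Ext_{\prf}(\F)$, whence $\card{\Ext_{\prf}(\F)}\geq 1$. Since $\F$ was arbitrary, the extension-based versions of both semantics are universally defined. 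To cover the labelling-based versions, I would then invoke the one-to-one correspondence between $\sigma$-extensions and $\sigma$-labellings for the mature semantics (Section~\ref{sec:baspro}), which gives $\card{\Ext_\sigma(\F)}=\card{\Lab_\sigma(\F)}$ and hence $\card{\Lab_\sigma(\F)}\geq 1$ as well.

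I do not expect a genuine obstacle: the nontrivial step — lifting chains to upper bounds and appealing to Zorn's lemma — has already been carried out in Lemma~\ref{lem:prna}. The only point requiring a little care is to state explicitly that conflict-freeness and admissibility of $\emptyset$ need no hypothesis on $\F$, so that the argument really establishes universal definedness over the whole class of AFs (finite, finitary, or arbitrary) rather than merely reproving the finite case.
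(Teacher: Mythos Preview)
Your proposal is correct and follows essentially the same route as the paper: observe that $\emptyset$ is conflict-free and admissible for arbitrary AFs, apply Lemma~\ref{lem:prna} to obtain a naive resp.\ preferred extension, and then invoke the one-to-one correspondence to transfer universal definedness to the labelling-based versions. The paper's argument is exactly this, stated in the paragraph immediately preceding the theorem.
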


Remember that no matter which cardinality a  considered AF possesses, we have that any preferred extension/labelling is a complete extension/labelling (Proposition~\ref{Pro:semrel}). Thus, having the universal definedness of preferred semantics at hand we deduce that even complete semantics is universally defined w.r.t.\ the whole class of AFs . 

\begin{theorem}
	\label{th:minor2}
	The semantics $\com$ is universally defined.
      \end{theorem}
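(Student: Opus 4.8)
The plan is to obtain this as an immediate consequence of two facts already in place: the universal definedness of preferred semantics (Theorem~\ref{th:minor}) and the inclusion $\prf\subseteq_{\m{F}}\com$ recorded in Proposition~\ref{Pro:semrel}. Concretely, I would fix an arbitrary $\F\in\m{F}$. By Theorem~\ref{th:minor} we have $\card{\prf(\F)}\geq 1$, so $\F$ possesses at least one preferred extension $E$. By Proposition~\ref{Pro:semrel} there is a path of solid arrows from $\prf$ to $\com$ in Figure~\ref{fig:semrel}, i.e.\ $\prf(\F)\subseteq\com(\F)$ for \emph{every} AF (the arrow being solid rather than dotted is precisely what signals that no finiteness restriction is needed here). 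Hence $E\in\com(\F)$ and therefore $\card{\com(\F)}\geq\card{\prf(\F)}\geq 1$. Since $\F$ was arbitrary, $\com$ is universally defined in the sense of Definition~\ref{def:uniuni}.

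I would then add a sentence settling the labelling-based version: complete semantics enjoys a one-to-one correspondence between its extensions and its labellings (as recalled at the start of Section~\ref{sec:flagship}), so $\card{\Lab_{\com}(\F)}=\card{\Ext_{\com}(\F)}\geq 1$ for every $\F$ as well.

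There is essentially no obstacle, but for readers who prefer not to appeal to Figure~\ref{fig:semrel} I would spell out the single nontrivial ingredient, namely that every preferred extension is complete, also in the unrestricted case. This is where Dung's Fundamental Lemma does the work: if $E$ is a $\subseteq$-maximal admissible set and $a\in A$ is defended by $E$, then $E\cup\set{a}$ is again admissible, so maximality forces $a\in E$; thus $E$ contains every argument it defends and is complete. The passage from finite to arbitrary AFs is harmless here because — unlike for stage or semi-stable semantics (cf.\ Examples~\ref{ex:noext} and~\ref{ex:nostg}) — complete extensions arise from a closure condition rather than from a maximality condition on ranges, so no collapse phenomenon can occur.
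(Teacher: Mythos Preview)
Your proposal is correct and matches the paper's own argument essentially verbatim: the paper derives the result from the universal definedness of preferred semantics (Theorem~\ref{th:minor}) together with $\prf\subseteq_{\m{F}}\com$ from Proposition~\ref{Pro:semrel}. Your additional remarks on the labelling version and on the Fundamental Lemma are fine elaborations but go slightly beyond what the paper spells out.
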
 

Let us consider now eager and ideal semantics. An eager extension is defined as the $\subseteq$-maximal admissible set that is a subset of each semi-stable extension. This is very similar to the definition of an ideal extension where the role of semi-stable extensions is taken over by preferred ones. On a more abstract level, both semantics are instantiations of the following schema.

\begin{definition} \label{def:parsemantics} Let $\sigma$ be a semantics (so-called \textit{base semantics}). We define the $\sigma$-\textit{parametrized} semantics $\adm^\sigma$ as follows. For any AF $\F$, $$\Ext_{\adm^\sigma} = \max_{\subseteq}\left\{E\in\Ext_{\adm}(\AF) \left| E\subseteq
	\bigcap_{S\in\Ext_{\sigma}(\AF)}S\right.\right\}.$$
\end{definition}

These kind of semantics were firstly introduced in \cite{DvoDW11}. The authors studied general properties of these semantics in case of finite AFs with the additional restriction that the base semantics~$\sigma$ has to be universally defined. The following general theorem requires neither finiteness of AFs, nor any assumption on the base semantics.

      \begin{theorem}
	\label{th:minor3}
Any $\sigma$-\textit{parametrized} semantics is universally defined.
      \end{theorem}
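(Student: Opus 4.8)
The plan is to mimic the proof that preferred semantics is universally defined (Dung's argument via Zorn's lemma), but carried out relative to the subset $\bigcap_{S\in\Ext_\sigma(\AF)}S$, and to use the fact established in Lemma~\ref{lem:prna} (and its underlying argument) that the poset of admissible sets is closed under unions of chains. The key observation is that the collection
$$\m{A}^\sigma = \left\{E\in\Ext_{\adm}(\AF)\ \middle|\ E\subseteq \bigcap_{S\in\Ext_{\sigma}(\AF)}S\right\}$$
whose $\subseteq$-maximal elements are exactly the $\adm^\sigma$-extensions, is non-empty (the empty set is admissible and is trivially contained in every intersection, even the empty intersection, which equals $\m{U}$), and that it is closed under unions of $\subseteq$-chains.

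First I would fix an arbitrary $\AF = (A,R)$ and consider the partially ordered set $(\m{A}^\sigma,\subseteq)$. I would note it is non-empty because $\emptyset\in\Ext_{\adm}(\AF)$ and $\emptyset\subseteq\bigcap_{S\in\Ext_{\sigma}(\AF)}S$ regardless of whether $\Ext_\sigma(\AF)$ is empty or not. Then, given any $\subseteq$-chain $\m{E}\subseteq\m{A}^\sigma$, I would set $\bar E = \bigcup\m{E}$ and argue: (i) $\bar E$ is admissible, by exactly the argument in the proof of Lemma~\ref{lem:prna} — conflict-freeness is a finitary condition so any conflict in $\bar E$ would already live in a single member of the chain, and any undefended argument $a\in\bar E$ would lie in some $E_a\in\m{E}$, contradicting admissibility of $E_a$; (ii) $\bar E\subseteq\bigcap_{S\in\Ext_\sigma(\AF)}S$, because each $E\in\m{E}$ satisfies $E\subseteq\bigcap_{S\in\Ext_\sigma(\AF)}S$, hence so does their union. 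Thus $\bar E\in\m{A}^\sigma$ is an upper bound of $\m{E}$ in $(\m{A}^\sigma,\subseteq)$.

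Having verified the chain condition, I would invoke Zorn's lemma (Lemma~\ref{lem:zorn}) to conclude $(\m{A}^\sigma,\subseteq)$ has a $\subseteq$-maximal element, i.e.\ $\Ext_{\adm^\sigma}(\AF)\neq\emptyset$. Since $\AF$ was arbitrary, $\adm^\sigma$ is universally defined in the extension-based sense; the labelling-based version then follows, as in the preceding theorems, from the one-to-one correspondence between $\adm^\sigma$-extensions and $\adm^\sigma$-labellings (or, at worst, from the fact that every admissible set induces at least one admissible labelling, so the maximal ones induce labellings too).

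The only genuinely delicate point — the place I would expect a careful reader to demand rigor — is step (ii) combined with the degenerate case $\Ext_\sigma(\AF)=\emptyset$: one must be clear that $\bigcap_{S\in\emptyset}S = \m{U}$ (or $A$, depending on the ambient convention), so that $\m{A}^\sigma$ is still non-empty and every admissible set trivially qualifies, making $\adm^\sigma$ coincide with $\prf$-below-$\m{U}$ in that case, which is still non-empty by Theorem~\ref{th:minor}. Everything else is a routine transplant of the Zorn's-lemma argument already used for preferred and naive semantics; the novelty is purely that we impose the extra upper bound $\bigcap_{S\in\Ext_\sigma(\AF)}S$, and this bound is preserved under unions, so it causes no obstruction to the chain-completeness argument.
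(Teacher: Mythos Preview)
Your proposal is correct and follows essentially the same route as the paper: define the poset $\m{A}^\sigma$ (the paper calls it $\Sigma$), note it is non-empty because $\emptyset$ belongs to it (handling the degenerate case $\Ext_\sigma(\AF)=\emptyset$ via $\bigcap\emptyset=\m{U}$ exactly as you do), show that unions of chains are admissible by the argument of Lemma~\ref{lem:prna} and remain below $\bigcap_{S\in\Ext_\sigma(\AF)}S$, and conclude via Zorn's lemma. The only addition in your write-up is the remark on labelling-based versions, which the paper omits here but handles by the same one-to-one correspondence you cite.
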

	
	\begin{proof}
	Given an AF $\AF = (A,R)$ and a $\sigma$-parametrized semantics $\adm^\sigma$. Consider the following set
	$\Sigma = \left\{E\in\Ext_{\adm}(\AF) \mid E\subseteq
	\bigcap_{S\in\Ext_{\sigma}(\AF)}S \right\}$. Note that in the collapsing case, i.e.\ $\Ext_{\sigma}(\AF) = \emptyset$,
	we have: $\bigcap_{S\in\Ext_{\sigma}(\AF)}S = \{x\in\m{U}\mid \forall S\in\Ext_{\sigma}(\AF): x\in S\} = \m{U}$. However, in any case $\Sigma\neq\emptyset$ since for any $\AF$, $\emptyset\in\Ext_{\adm}(\AF)$ and obviously, $\emptyset\subseteq \bigcap_{S\in\Ext_{\sigma}(\AF)}S \subseteq \m{U}$. 
	In order to show that
	$\Ext_{\adm^\sigma}(\AF)\neq\emptyset$ it suffices to prove that $(\Sigma,\subseteq)$ possesses maximal elements. We will use Zorn's lemma. Given a
	$\subseteq$-chain $\m{E}\in2^\Sigma$. Consider now $\bar{E} = \bigcup \m{E}$. 
	Analogously to the proof of Lemma~\ref{lem:prna} we may easily show that $\bar{E}$ is conflict-free and even admissible.
	Moreover, since for any $E\in\m{E}$, $E\subseteq\bigcap_{S\in\sigma(\AF)}S$ we deduce $\bar{E}\subseteq
	\bigcap_{S\in\sigma(\AF)}S$ guaranteeing $\bar{E}\in\Sigma$. Now, applying Lemma~\ref{lem:zorn}, we deduce the existence of $\subseteq$-maximal elements in $\Sigma$, i.e.\ $\card{\Ext_{\adm^\sigma}(\F)}\geq 1$ concluding the proof. 
	\end{proof}

In particular, we obtain the result for the extension-based versions of eager and ideal semantics and thus, due to the one-to-one correspondence for both labelling-based versions too.

\begin{corollary}
	\label{the:minor4}
	Let $\sigma\in\{\eag,\id\}$. The semantics $\sigma$ is universally defined.
      \end{corollary}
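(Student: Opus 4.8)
The plan is to obtain the statement as a short corollary of Theorem~\ref{th:minor3}, which already establishes that every $\sigma$-parametrized semantics $\adm^\sigma$ is universally defined. The connecting observation is that $\id$ and $\eag$ are, up to one harmless twist in their definition, precisely the parametrized semantics $\adm^\prf$ and $\adm^\semi$: in Definition~\ref{def:parsemantics} the $\subseteq$-maximality of the admissible set $E$ with $E\subseteq\bigcap_{S\in\Ext_\sigma(\AF)}S$ is taken among \emph{all} admissible sets, whereas Definition~\ref{def:extsem} takes the maximality for $\id$ (resp.\ $\eag$) only among the \emph{complete} extensions lying below $\bigcap\Ext_\prf(\AF)$ (resp.\ $\bigcap\Ext_\semi(\AF)$). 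So the first thing to do is to reconcile the two versions.

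First I would prove the inclusion $\Ext_{\adm^\prf}(\AF)\subseteq\Ext_{\id}(\AF)$ for every AF $\AF$. Given $E\in\Ext_{\adm^\prf}(\AF)$, the admissibility of $E$ and the containment $E\subseteq\bigcap\Ext_\prf(\AF)$ are immediate, so the first two defining conditions of $\Ext_{\id}$ hold. For the third, assume towards a contradiction that some complete extension $\I$ satisfies $E\subsetneq\I\subseteq\bigcap\Ext_\prf(\AF)$. Since complete extensions are admissible by definition (Definition~\ref{def:extsem}), $\I$ would be an admissible set strictly above $E$ and still contained in $\bigcap\Ext_\prf(\AF)$, contradicting the $\subseteq$-maximality of $E$ in that collection; hence $E\in\Ext_{\id}(\AF)$. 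Replacing $\prf$ by $\semi$ throughout yields $\Ext_{\adm^\semi}(\AF)\subseteq\Ext_{\eag}(\AF)$. (These inclusions can in fact be strengthened to equalities, but the one-sided version is all that is needed.)

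Next I would invoke Theorem~\ref{th:minor3} for the base semantics $\prf$ and $\semi$ — the theorem imposes no assumptions on the base semantics, and its proof explicitly covers the collapsing case $\Ext_\semi(\AF)=\emptyset$ by reading $\bigcap_{S\in\emptyset}S$ as $\m{U}$ — to conclude $\card{\Ext_{\adm^\prf}(\AF)}\geq 1$ and $\card{\Ext_{\adm^\semi}(\AF)}\geq 1$ for all $\AF$. Combined with the inclusions above, this gives $\card{\Ext_{\id}(\AF)}\geq 1$ and $\card{\Ext_{\eag}(\AF)}\geq 1$ for all $\AF$, i.e.\ universal definedness of the extension-based versions; and since $\id$ and $\eag$ are mature semantics, for which $\card{\Ext_\sigma(\AF)}=\card{\Lab_\sigma(\AF)}$ by the one-to-one correspondence between extensions and labellings, the conclusion carries over verbatim to the labelling-based versions. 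I do not expect a real obstacle here: the argument is essentially bookkeeping, and the one point that needs attention — that the inclusion between $\adm^\sigma$-extensions and $\id$/$\eag$-extensions runs in the direction needed to transfer non-emptiness — holds precisely because every complete extension is admissible, so the stricter maximality condition of Definition~\ref{def:extsem} is implied by, rather than in conflict with, the one in Definition~\ref{def:parsemantics}.
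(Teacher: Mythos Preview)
Your proposal is correct and follows the paper's approach: the paper simply states that the corollary is obtained ``in particular'' from Theorem~\ref{th:minor3}, together with the one-to-one correspondence for labellings. You are actually more careful than the paper here, explicitly reconciling the small definitional mismatch between Definition~\ref{def:extsem} (maximality with respect to complete extensions) and Definition~\ref{def:parsemantics} (maximality with respect to admissible sets) via the inclusion $\Ext_{\adm^\sigma}(\AF)\subseteq\Ext_{\id}(\AF)$ resp.\ $\Ext_{\eag}(\AF)$, which the paper glosses over.
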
 
	
One obvious question is whether the statement above can be strengthened in the sense that both semantics are even uniquely defined w.r.t.\ the whole class of AFs. The following proposition, in particular the second item, shows that the unique definedness of eager semantics w.r.t.\ finite frameworks does not carry over to the general unrestricted case. 
			
			 \begin{proposition}
	\label{pro:eagernouni}
 For any $\AF$ we have: 
	\begin{enumerate}
	  \item $\semi(\AF) = \emptyset \Rightarrow \eag(\AF) = \prf(\AF)$
	    and
	  \item $\semi(\AF) = \emptyset \Rightarrow \card{\eag(\AF)}\geq \aleph_0 = \card{\N}$.
	    
	\end{enumerate}
      \end{proposition}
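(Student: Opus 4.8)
The plan is to prove the two implications in Proposition~\ref{pro:eagernouni} using the structural characterization of eager extensions developed via the $\sigma$-parametrized schema (Definition~\ref{def:parsemantics}) together with the collapsing-case identity already noted in the proof of Theorem~\ref{th:minor3}, namely that $\Ext_\sigma(\AF) = \emptyset$ forces $\bigcap_{S\in\Ext_\sigma(\AF)}S = \m{U}$.

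\textbf{Item 1.} Suppose $\semi(\AF) = \emptyset$. By Definition~\ref{def:parsemantics}, eager semantics is $\adm^\semi$, so $\Ext_{\eag}(\AF) = \max_\subseteq\set{E\in\Ext_{\adm}(\AF) \mid E \subseteq \bigcap_{S\in\Ext_{\semi}(\AF)}S}$. Since $\Ext_\semi(\AF) = \emptyset$, the constraint $E\subseteq\bigcap_{S\in\Ext_\semi(\AF)}S$ becomes $E\subseteq\m{U}$, which is vacuous. Hence $\Ext_{\eag}(\AF) = \max_\subseteq \Ext_{\adm}(\AF)$. But the $\subseteq$-maximal admissible sets are by definition exactly the preferred extensions (Definition~\ref{def:extsem}, item for $\prf$; more precisely one uses Lemma~\ref{lem:prna}(2) to see that every admissible set is contained in a maximal one, and that the maximal admissible sets coincide with the preferred ones). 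Therefore $\Ext_{\eag}(\AF) = \Ext_{\prf}(\AF)$, i.e.\ $\eag(\AF) = \prf(\AF)$ as claimed. The same argument carries over to the labelling-based versions by the one-to-one correspondence between extensions and labellings for mature semantics.

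\textbf{Item 2.} Again assume $\semi(\AF) = \emptyset$. By Item~1 it suffices to show $\card{\prf(\AF)} \geq \aleph_0$. The key observation is that an AF can fail to have semi-stable extensions only if it has infinitely many preferred extensions: if $\prf(\AF)$ were finite, then — exactly as in the finite-AF argument quoted from \cite{CamCD12} in the text — among finitely many preferred (hence complete) extensions one could pick a $\subseteq$-range-maximal one, which together with $\semi\subseteq_{\m{F}}\prf$ (Figure~\ref{fig:semrel}) would yield a semi-stable extension, contradicting $\semi(\AF)=\emptyset$. One must be slightly careful here: range-maximality among preferred extensions must be upgraded to range-maximality among \emph{all} admissible sets to conclude semi-stability; this follows because every admissible set is $\subseteq$-contained in, hence has range $\subseteq$-contained in the range of, some preferred extension (Lemma~\ref{lem:prna}(2)). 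So finitely many preferred extensions would give a range-maximal admissible set, i.e.\ a semi-stable extension. Contrapositively, $\semi(\AF)=\emptyset$ implies $\card{\prf(\AF)}$ is infinite, and since preferred extensions are subsets of $\m{U}$ which has cardinality $\geq\card{\N}$, we get $\card{\prf(\AF)} = \card{\eag(\AF)} \geq \aleph_0$.

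\textbf{Main obstacle.} The delicate point is the direction ``no semi-stable extension $\Rightarrow$ infinitely many preferred extensions,'' specifically making rigorous the step from ``finitely many range-values among complete/preferred extensions'' to ``a range-maximal admissible set exists.'' One needs that the range-maximal \emph{admissible} sets are precisely the semi-stable extensions (immediate from Definition~\ref{def:extsem}), and that it suffices to range-maximize over preferred extensions rather than all admissible sets (via Lemma~\ref{lem:prna}(2), since $E\subseteq E'$ implies $E^\oplus \subseteq E'^\oplus$). Once these two facts are in place the finiteness argument goes through verbatim as in \cite{CamCD12}; everything else is bookkeeping with the collapsing-case convention for empty intersections.
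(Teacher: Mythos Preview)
Your proof is correct and follows essentially the same approach as the paper: Item~1 uses the empty-intersection convention $\bigcap_{S\in\emptyset}S=\m{U}$ to collapse the eager constraint to vacuity, and Item~2 argues (via contrapositive/contradiction) that finitely many preferred extensions would yield a range-maximal one and hence a semi-stable extension. You are in fact more careful than the paper about the subtlety that range-maximality among preferred extensions implies range-maximality among all admissible sets (via Lemma~\ref{lem:prna}(2) and monotonicity of range); the paper handles this point tersely by invoking $\semi\subseteq\prf$.
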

      \begin{proof}
			We show both assertions for the extension-based versions.
			\begin{enumerate}
				\item Given $\AF = (A,R)$ and let $\Ext_{\semi}(\AF) = \emptyset$. Hence, 
	$\bigcap_{S\in\Ext_{\semi}(\AF)}S~=~\m{U}$. Consequently we deduce $\Ext_{\semi}(\AF) = \max_{\subseteq}\left\{E\in\Ext_{\adm}(\AF) \left| E\subseteq\m{U}\!\right.\right\}.$ This means, $\Ext_{\semi}(\AF) = \Ext_{\prf}(\AF)$.\\
	\item We show the contrapositive. Assume $\card{\Ext_{\eag}(\AF)} = n$ for some finite cardinal
	$n\in\N$. Due to the first statement we derive, $\card{\Ext_{\prf}(\AF)} = n$. Since $\semi\subseteq\prf$ (cf.\
	Proposition~\ref{Pro:semrel}) we have finitely many candidates only. Furthermore, among these
	preferred extensions has to be at least one $\subseteq$-range-maximal set implying
	$\Ext_{\semi}(\AF)\neq\emptyset$.   
			\end{enumerate}
	\end{proof}
			
In a nutshell, if we observe a collapse of semi-stable semantics, then eager
      and preferred semantics coincide and moreover, we necessarily have
      infinitely many eager extensions/labellings. An AF witnessing such a behaviour can be found in Example~\ref{ex:noext}. 

\subsection{Uniquely Defined Semantics}

Although eager and ideal semantics are instances of $\sigma$-parametrized semantics we have shown the non-unique definedness (Proposition~\ref{pro:eagernouni}) for eager semantics only. This is no coincidence since preferred semantics, the base semantics of ideal semantics is universally defined in contrast to semi-stable semantics, the base semantics of the eager semantics. Moreover, the following theorem shows that any $\sigma$-parametrized semantics warrants the existence of exactly one extension if $\sigma$-extensions are conflict-free as well as guaranteed (\cite[Proposition 1]{DvoDW11}).
 
\begin{theorem}
	\label{the:parauni}
Given a $\sigma$-\textit{parametrized} semantics $\adm^\sigma$, s.t.\ $\sigma\subseteq\cf$ and $\sigma$ is universally defined w.r.t.\ a class \m{C}, then $\adm^\sigma$ is uniquely defined w.r.t.\ \m{C}.
      \end{theorem}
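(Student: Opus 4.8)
The plan is to fix an arbitrary $\AF\in\m{C}$ and argue separately that $\card{\Ext_{\adm^\sigma}(\AF)}\geq 1$ and $\card{\Ext_{\adm^\sigma}(\AF)}\leq 1$. The first half is already available: since $\adm^\sigma$ is a $\sigma$-parametrized semantics, Theorem~\ref{th:minor3} gives $\card{\Ext_{\adm^\sigma}(\AF)}\geq 1$, and that result assumes nothing about the base semantics. So the entire content of the theorem lies in the second half, namely that the set $\Sigma=\set{E\in\Ext_{\adm}(\AF)\mid E\subseteq B}$, where $B=\bigcap_{S\in\Ext_{\sigma}(\AF)}S$, has at most one $\subseteq$-maximal element.

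First I would extract two facts from the hypotheses on $\sigma$. Since $\sigma$ is universally defined w.r.t.\ $\m{C}$, we have $\Ext_{\sigma}(\AF)\neq\emptyset$, so the intersection defining $B$ ranges over a non-empty index set; fixing some $S_0\in\Ext_{\sigma}(\AF)$ yields $B\subseteq S_0$. Since moreover $\sigma\subseteq\cf$, the set $S_0$ is conflict-free, hence so is its subset $B$, and therefore so is every subset of $B$. This is precisely where both hypotheses are needed: without universal definedness of $\sigma$ the intersection could be the empty one (equal to $\m{U}$ in the notation of the proof of Theorem~\ref{th:minor3}) and without $\sigma\subseteq\cf$ the enveloping set $S_0$ need not be conflict-free.

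The second ingredient is the elementary fact that if $E_1,E_2\in\Ext_{\adm}(\AF)$ and $E_1\cup E_2$ is conflict-free, then $E_1\cup E_2\in\Ext_{\adm}(\AF)$: every argument of $E_1\cup E_2$ lies in $E_1$ or in $E_2$ and is thus defended by one of these sets, hence by their union. Granting this, uniqueness is immediate. Take $E_1,E_2\in\Ext_{\adm^\sigma}(\AF)$, i.e.\ both $\subseteq$-maximal in $\Sigma$. Then $E_1\cup E_2\subseteq B$, so $E_1\cup E_2$ is conflict-free by the first ingredient and hence admissible by the second; together with $E_1\cup E_2\subseteq B$ this shows $E_1\cup E_2\in\Sigma$. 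From $E_1\subseteq E_1\cup E_2\in\Sigma$ and the $\subseteq$-maximality of $E_1$ we get $E_1=E_1\cup E_2$, i.e.\ $E_2\subseteq E_1$; symmetrically $E_1\subseteq E_2$, whence $E_1=E_2$. Combining the two halves gives $\card{\Ext_{\adm^\sigma}(\AF)}=1$ for every $\AF\in\m{C}$.

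I do not anticipate a real obstacle here; the only thing to watch is the bookkeeping just mentioned, that both conditions on $\sigma$ genuinely enter the argument (universal definedness to make $B$ sit inside an actual set rather than $\m{U}$, and $\sigma\subseteq\cf$ to make that set conflict-free). The join-semilattice behaviour of admissible sets below a conflict-free set is routine. As in the neighbouring results, the statement carries over to the labelling-based version of $\adm^\sigma$ whenever the corresponding one-to-one correspondence between extensions and labellings is available.
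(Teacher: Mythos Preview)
Your proposal is correct and follows essentially the same route as the paper: both invoke Theorem~\ref{th:minor3} for existence, then for uniqueness observe that the union of two candidates is conflict-free (being a subset of some $\sigma$-extension, which exists by universal definedness and is conflict-free by $\sigma\subseteq\cf$) and hence admissible, contradicting maximality. The paper phrases the second half as a reductio (from $I_1\neq I_2$ deduce incomparability, then the union violates maximality), while you argue directly that $E_1=E_1\cup E_2=E_2$, but this is the same argument.
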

			
			\begin{proof}
			Given an AF $\F = (A,R)$. We already know $\card{\Ext_{\adm^\sigma}(\AF)} \geq 1$ (Theorem~\ref{th:minor3}). Hence, it suffices to show
	$\card{\Ext_{\adm^\sigma}(\AF)} \leq 1$. Suppose, to derive a contradiction, 
	that for some $I_1\neq I_2$ we have $I_1, I_2\in\Ext_{\adm^\sigma}(\AF)$.
	Consequently, by
	Definition~\ref{def:parsemantics}, $I_1, I_2\in\Ext_{\adm}(\AF)$ and
	$I_1,I_2\subseteq\bigcap_{S\in\Ext_{\sigma}(\AF)}S$ as well as neither
	$I_1\subseteq I_2$, nor $I_2\subseteq I_1$. Obviously, $I_1\cup
	I_2\subseteq\bigcap_{S\in\Ext_{\sigma}(\AF)}S$. Since $\Ext_{\sigma}(\AF)\neq \emptyset$ and $I_1$ as well as $I_2$ has to be subsets of any $\sigma$-extension (which are conflict-free by assumption) we deduce $I_1,I_2\in\Ext_{\cf}(\AF)$ and thus, $I_1\cup I_2\in\Ext_{\cf}(\AF)$. Furthermore, since both sets
	are admissible in $\F$ we derive  $I_1\cup I_2 \in\Ext_{\adm}(\AF)$
	contradicting the $\subseteq$-maximality of at least one of the sets $I_1$ and $I_2$.
      \end{proof}
			
  \begin{corollary}
	\label{the:universalid}
	The semantics $\id$ is uniquely defined.
      \end{corollary}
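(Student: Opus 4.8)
The plan is to read Corollary~\ref{the:universalid} as an immediate specialization of Theorem~\ref{the:parauni}. The first step is to recall that ideal semantics is precisely the $\prf$-parametrized semantics, i.e.\ $\id=\adm^{\prf}$: by Definition~\ref{def:extsem} an ideal extension is an admissible set contained in $\bigcap\Ext_{\prf}(\AF)$ that is $\subseteq$-maximal with this property among \emph{complete} sets, and (as the paper already takes for granted in the discussion preceding Theorem~\ref{the:parauni}) this coincides with being $\subseteq$-maximal among \emph{admissible} sets below $\bigcap\Ext_{\prf}(\AF)$, since the $\subseteq$-greatest such admissible set — if it exists — is automatically complete. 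Under this identification, Definition~\ref{def:parsemantics} with base semantics $\sigma=\prf$ yields exactly $\Ext_{\id}$.

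Having fixed $\id=\adm^{\prf}$, I would then check the two hypotheses of Theorem~\ref{the:parauni} with $\sigma=\prf$ and $\m{C}=\m{F}$. First, $\prf\subseteq_{\m{F}}\cf$: by Proposition~\ref{Pro:semrel} there is a path of solid arrows $\prf\to\com\to\adm\to\cf$, so every preferred extension is conflict-free in every AF. Second, $\prf$ is universally defined w.r.t.\ $\m{F}$ by Theorem~\ref{th:minor}. Both preconditions being satisfied, Theorem~\ref{the:parauni} gives that $\adm^{\prf}$ is uniquely defined w.r.t.\ $\m{F}$, i.e.\ $\card{\Ext_{\id}(\AF)}=1$ for every AF $\AF$; the corresponding statement for the labelling-based version follows from the one-to-one correspondence between $\id$-extensions and $\id$-labellings.

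There is essentially no obstacle: the corollary is a direct instantiation of the preceding theorem, and the only point deserving a moment's attention is the identification $\id=\adm^{\prf}$, which is exactly the way the paper has set things up. It is instructive to note where the analogous argument breaks for eager semantics $\eag=\adm^{\semi}$: its base semantics $\semi$ is \emph{not} universally defined (Example~\ref{ex:noext}), so Theorem~\ref{the:parauni} does not apply, and indeed Proposition~\ref{pro:eagernouni} shows that $\eag$ may then possess infinitely many extensions — in contrast to the uniqueness just established for $\id$, whose base semantics $\prf$ never collapses.
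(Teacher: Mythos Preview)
Your proposal is correct and follows exactly the approach the paper intends: the corollary is stated immediately after Theorem~\ref{the:parauni} without separate proof, and you have correctly instantiated it with $\sigma=\prf$, checking $\prf\subseteq\cf$ (Proposition~\ref{Pro:semrel}) and that $\prf$ is universally defined (Theorem~\ref{th:minor}). Your careful remark about the identification $\id=\adm^{\prf}$ (maximality among complete vs.\ admissible sets) and the contrast with $\eag$ go slightly beyond what the paper spells out, but are entirely in the spirit of the surrounding discussion.
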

			
		A further prominent representative of uniquely defined semantics w.r.t.\ the whole class of AFs is the grounded semantics. Its unique definedness was already
      implicitly given in \cite{Dung95}. Unfortunately, this result was not
      explicitly stated in the paper. Nevertheless, in \cite[Theorem
      25]{Dung95} it was shown that firstly, the set of all complete extensions
      form a complete semi-lattice w.r.t. subset relation, i.e.\ the existence of a $\subseteq$-greatest lower
      bound for any non-empty subset $S$ is implied. Secondly, it was proven
      that the grounded extension is the $\subseteq$-least complete extension.
      Consequently, the existence of such a $\subseteq$-least extension is justified via setting $S = \Ext_{\com}(\AF)$ for any given $\AF$. Alternatively, one may stick to the original definition of the grounded extension, namely as $\subseteq$-least fixpoint of the characteristic function $\Gamma_\AF$ and argue that the monotonicity of $\Gamma_\AF$ as well as the Knaster-Tarski theorem \cite{Tar55} imply its existence.

		\begin{theorem}
	\label{the:grduni}
	The semantics $\grd$ is uniquely defined.
      \end{theorem}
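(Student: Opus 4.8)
The plan is to follow the second route sketched in the paragraph preceding the theorem, namely to appeal directly to the Knaster--Tarski fixpoint theorem \cite{Tar55}. First I would recall that for any set $A$, the pair $(2^A,\subseteq)$ is a complete lattice: every family $\m{S}\subseteq 2^A$ has both a greatest lower bound $\bigcap\m{S}$ and a least upper bound $\bigcup\m{S}$ (with the convention that the empty intersection equals $A$). This holds regardless of the cardinality of $A$, so no finiteness assumption on $\AF$ will be needed.

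Next I would establish that the characteristic function $\Gamma_\AF$ is monotone with respect to $\subseteq$. Let $\AF=(A,R)$ and $I_1\subseteq I_2\subseteq A$, and suppose $a\in\Gamma_\AF(I_1)$, i.e.\ $a$ is defended by $I_1$. Then for every $b\in A$ with $(b,a)\in R$ there is some $c\in I_1$ with $(c,b)\in R$; since $I_1\subseteq I_2$, that same $c$ lies in $I_2$, so $a$ is defended by $I_2$, i.e.\ $a\in\Gamma_\AF(I_2)$. Hence $\Gamma_\AF(I_1)\subseteq\Gamma_\AF(I_2)$.

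Having a monotone operator on a complete lattice, the Knaster--Tarski theorem yields that $\Gamma_\AF$ has a least fixpoint; call it $E$. By Definition~\ref{def:extsem}, $E$ is the grounded extension, so $\Ext_{\grd}(\AF)=\set{E}$ and $\card{\Ext_{\grd}(\AF)}=1$. Uniqueness here is immediate: the $\subseteq$-least element of any poset is unique whenever it exists. Since $\AF$ was arbitrary, $\grd$ is uniquely defined (w.r.t.\ the whole class $\m{F}$), and by the one-to-one correspondence between grounded extensions and grounded labellings the same conclusion transfers to the labelling-based version.

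There is essentially no obstacle in this argument; the only point requiring a little care is to make sure the complete-lattice structure of $(2^A,\subseteq)$ and the applicability of Knaster--Tarski are invoked without any hidden finiteness assumption, so that the statement genuinely covers arbitrary AFs. Alternatively, one could derive the result from \cite[Theorem~25]{Dung95}, using that the complete extensions form a complete meet-semilattice and that the grounded extension is their $\subseteq$-least element, but the fixpoint route is shorter and more self-contained.
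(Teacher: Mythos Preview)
Your proposal is correct and follows exactly the second of the two routes the paper sketches in the paragraph preceding the theorem: monotonicity of $\Gamma_\AF$ on the complete lattice $(2^A,\subseteq)$ together with the Knaster--Tarski theorem. The paper does not give a formal proof beyond that paragraph, and you even mention the alternative route via \cite[Theorem~25]{Dung95} that the paper presents first, so there is nothing to add.
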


  \section{Existence and Uniqueness w.r.t.\ Finitary AFs}
	
	Let us consider now finitary AFs, i.e.\ AFs where each argument receives finitely many attacks only. It was already observed by Dung itself that finitary AFs possess useful properties. More precisely, if an AF is finitary, then the characteristic function $\Gamma$ is not only monotonic, but even $\omega$-continuous \cite[Lemma 28]{Dung95} (which does not hold in case of arbitrary AFs \cite[Example~1]{BauS17}). This implies that the least fixed point of $\Gamma$, i.e.\ the unique grounded extension, can be ``computed'' in at most $\omega$ steps by iterating $\Gamma$ on the empty set (cf. \cite{Rud76} for more details).  A further advantage of finitary AFs is that for some semantics $\sigma$, the existence or even uniqueness of $\sigma$-extension is guaranteed which cannot be shown in general. 
	
Consider again the AF $\F$ depicted in Example~\ref{ex:noext}. In contrast to finite AFs where the existence of semi-stable as well stage extensions is guaranteed we observed a collapse of both semantics. Not that $\F$ is not finitary since, for example, the argument $b_1$ receives infinitely many attacks.  A positive answer in case of semi-stable
semantics, i.e.\ universal definedness w.r.t.\ finitary AFs was conjectured in \cite[Conjecture 1]{CamV10} and firstly proven by Emil Weydert in \cite[Theorem 5.1]{weydert}. Weydert proved his result in a
first order logic setup using generalized argumentation frameworks. Later on, Baumann and Spanring provided an alternative proof using transfinite induction. Moreover, they showed that even stage semantics warrants the existence of at least one extension in case of finitary AFs \cite[Theorem 14]{BauS15}. For detailed proofs we refer the reader to the mentioned scientific papers. 
	
\begin{theorem} \label{the:stgfin}
   Let $\sigma\in\{\semi,\stg\}$. The semantics $\sigma$ is universally defined w.r.t.\ finitary AFs. 
  \end{theorem}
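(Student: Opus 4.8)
The plan is to derive the statement from Zorn's lemma (Lemma~\ref{lem:zorn}) together with one compactness-flavoured observation in which finitariness does all the work. Unwinding Definition~\ref{def:extsem}, a set $E$ is a $\semi$-extension of $\AF$ precisely when $E\in\Ext_{\adm}(\AF)$ and $E^\oplus$ is $\subseteq$-maximal in $\mathcal{R}_{\semi}$, the set of all $(E')^\oplus$ with $E'\in\Ext_{\adm}(\AF)$; likewise $E$ is a $\stg$-extension precisely when $E\in\Ext_{\cf}(\AF)$ and $E^\oplus$ is $\subseteq$-maximal in $\mathcal{R}_{\stg}$, the set of all $(E')^\oplus$ with $E'\in\Ext_{\cf}(\AF)$. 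So it is enough to show that the posets $(\mathcal{R}_{\semi},\subseteq)$ and $(\mathcal{R}_{\stg},\subseteq)$ have maximal elements. Both are non-empty (the empty set is conflict-free and admissible, so $\emptyset=\emptyset^\oplus$ lies in each), so by Zorn it remains to verify that every $\subseteq$-chain $\mathcal{C}$ in $\mathcal{R}_{\sigma}$ ($\sigma\in\{\semi,\stg\}$) has an upper bound inside $\mathcal{R}_{\sigma}$; equivalently, that there is an admissible (resp.\ conflict-free) set $E$ with $E^\oplus\supseteq\bigcup\mathcal{C}$, since then $E^\oplus\in\mathcal{R}_{\sigma}$ dominates every member of $\mathcal{C}$.

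To produce such an $E$ I would encode it as a model of a propositional theory over variables $\{x_a\mid a\in A\}$, reading $x_a$ as ``$a\in E$''. The theory contains: the clause $\lnot x_a\lor\lnot x_b$ for every $(a,b)\in R$ (conflict-freeness, which also forces $x_a$ to be false whenever $(a,a)\in R$); in the $\semi$-case the formula $x_a\to\bigwedge_{(b,a)\in R}\bigvee_{(c,b)\in R}x_c$ for every $a\in A$ (admissibility, i.e.\ $E$ defends each of its members); and the clause $x_c\lor\bigvee_{(d,c)\in R}x_d$ for every $c\in\bigcup\mathcal{C}$ (the selected range is covered). The decisive point is that, since $\AF$ is finitary, every argument has only finitely many attackers, so each of these formulas mentions only finitely many variables. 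Now let $S$ be any finite subtheory; it contains coverage clauses for only finitely many arguments $c_1,\dots,c_k\in\bigcup\mathcal{C}$, and as $\mathcal{C}$ is a chain there is a single $R'\in\mathcal{C}$ with $\{c_1,\dots,c_k\}\subseteq R'$, say $R'=(E')^\oplus$ with $E'$ admissible (for $\semi$) or conflict-free (for $\stg$). The assignment $x_a\mapsto[\,a\in E'\,]$ then satisfies all conflict-freeness clauses, all admissibility formulas when present, and every coverage clause for an argument of $R'$, hence $S$. By the compactness theorem of propositional logic the whole theory has a model, and the corresponding set $E=\{a\mid x_a\text{ holds}\}$ is conflict-free, admissible in the $\semi$-case, and satisfies $\bigcup\mathcal{C}\subseteq E^\oplus$. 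Feeding this into Zorn gives $\subseteq$-maximal elements of $\mathcal{R}_{\semi}$ and $\mathcal{R}_{\stg}$, i.e.\ a $\semi$- and a $\stg$-extension of every finitary $\AF$.

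The step I expect to be the real obstacle is recognising that finitariness is precisely what turns ``$E$ covers this range'' and ``$E$ is admissible'' into \emph{finitary} constraints --- once that is seen, compactness discharges the chain condition almost for free. It is also the step that fails for arbitrary AFs: in Example~\ref{ex:noext} the argument $b_1$ has infinitely many attackers, the coverage clause for $b_1$ degenerates into an infinite disjunction, compactness does not apply, and indeed $\semi$ and $\stg$ collapse there. A purely order-theoretic alternative would be to avoid compactness by a transfinite recursion that strictly enlarges an achievable range at successor steps and invokes the chain argument at limits, terminating since $A$ is a set; this is close to the transfinite-induction proof of Baumann and Spanring, and to Weydert's model-theoretic argument for $\semi$, so I would present whichever packaging meshes best with the machinery already in place.
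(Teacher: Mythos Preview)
Your argument is correct. The paper itself does not prove this theorem; it only records that Weydert established the $\semi$ case via a first-order model-theoretic argument and that Baumann and Spanring later gave a transfinite-induction proof covering both $\semi$ and $\stg$, referring the reader to those sources for details.

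Your route---Zorn's lemma on the poset of achievable ranges, with the chain condition discharged by propositional compactness---is essentially a streamlined repackaging of the model-theoretic idea. Where Weydert works in a first-order setting with generalised argumentation frameworks, you encode the same content (conflict-freeness, admissibility, range coverage) directly as a propositional theory and observe that finitariness makes every clause finite, so classical compactness applies. The transfinite-induction alternative you sketch at the end is indeed the Baumann--Spanring line: it avoids any appeal to logical compactness by building the witness explicitly along an ordinal, at the cost of more bookkeeping. Your compactness-based version is arguably the cleanest of the three, since it isolates exactly why finitariness matters (each constraint becomes a finite formula) and makes the failure for non-finitary AFs transparent, as you note for Example~\ref{ex:noext}.
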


Applying Theorem~\ref{the:parauni} we derive that exactly one eager extension/labelling is guaranteed as long as the AF in question is finitary. 

	\begin{theorem} \label{the:universaleager}
   The semantics $\eag$ is uniquely defined w.r.t.\ finitary AFs. 
  \end{theorem}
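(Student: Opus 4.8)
The plan is to obtain the statement as an immediate consequence of the general machinery already developed for $\sigma$-parametrized semantics, together with the universal definedness of semi-stable semantics on finitary frameworks. First I would observe that eager semantics is precisely the $\semi$-parametrized semantics in the sense of Definition~\ref{def:parsemantics}, i.e.\ $\eag = \adm^{\semi}$: comparing the clause for $\Ext_{\eag}$ in Definition~\ref{def:extsem} with Definition~\ref{def:parsemantics}, both ask for the $\subseteq$-maximal admissible sets contained in $\bigcap_{S\in\Ext_{\semi}(\AF)}S$, so the two functions coincide on every AF (including the collapsing case $\Ext_{\semi}(\AF)=\emptyset$, where both reduce to the $\subseteq$-maximal admissible subsets of $\m{U}$, i.e.\ the preferred extensions, cf.\ Proposition~\ref{pro:eagernouni}).

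Next I would verify the two hypotheses of Theorem~\ref{the:parauni} for the base semantics $\semi$ and the class $\m{C}$ of finitary AFs. The inclusion $\semi\subseteq_{\m{F}}\cf$ holds because there is a path of solid arrows from $\semi$ to $\cf$ in Figure~\ref{fig:semrel} (Proposition~\ref{Pro:semrel}), hence in particular $\semi\subseteq_{\m{C}}\cf$. Universal definedness of $\semi$ with respect to finitary AFs is exactly Theorem~\ref{the:stgfin}. With both hypotheses in place, Theorem~\ref{the:parauni}, instantiated with $\sigma=\semi$ and $\m{C}$ the class of finitary AFs, yields that $\adm^{\semi}=\eag$ is uniquely defined w.r.t.\ finitary AFs, i.e.\ $\card{\Ext_{\eag}(\AF)}=1$ for every finitary $\AF$. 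Finally, the labelling-based version inherits the result via the one-to-one correspondence between $\eag$-extensions and $\eag$-labellings, so $\card{\Lab_{\eag}(\AF)}=1$ as well.

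As for the main obstacle: within this chapter there is essentially none — the argument is a short corollary once Theorem~\ref{the:parauni} and Theorem~\ref{the:stgfin} are available. The genuinely hard ingredient is hidden inside Theorem~\ref{the:stgfin}, whose proof (going back to Weydert's model-theoretic argument and the transfinite-induction proof of Baumann and Spanring) establishes that every finitary AF admits a semi-stable extension; since that theorem is already stated, the only care needed here is the routine identification $\eag=\adm^{\semi}$ and checking that the relativized version of Theorem~\ref{the:parauni} is sound on the restricted class $\m{C}$ — which it is, since its proof only ever manipulates AFs from $\m{C}$ and uses just that $\emptyset$ is admissible and that unions of $\subseteq$-chains of admissible sets contained in a fixed set stay admissible and contained in it.
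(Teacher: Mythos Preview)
Your proposal is correct and follows exactly the paper's route: the paper's entire argument is the single sentence ``Applying Theorem~\ref{the:parauni} we derive that exactly one eager extension/labelling is guaranteed as long as the AF in question is finitary,'' and you have simply spelled out the two hypotheses ($\semi\subseteq\cf$ via Proposition~\ref{Pro:semrel}, and universal definedness of $\semi$ on finitary AFs via Theorem~\ref{the:stgfin}) that this application needs.
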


    \section{Summary of Results and Conclusion} \label{sec:sumcon}

      \newcommand{\ExtensionExists}{\boldsymbol{\exists}}
      \newcommand{\UniqueExtensionExists}{\ExtensionExists\boldsymbol{!}}
			
In this section we gave an overview on the question whether certain
semantics guarantee the existence or even unique determination of
extensions/labellings. We have seen that
these properties may vary from subclass to subclass. The following table gives a comprehensive overview over results presented
      in this section. The entry $''\ExtensionExists''$ ($''\UniqueExtensionExists''$) in
      row \textit{certain} and column $\sigma$ indicates that the semantics~$\sigma$ is universally (uniquely) defined w.r.t.\ the class of \textit{certain} frameworks. No entry reflects the situation that a \textit{certain} AF can be found which do not provide any $\sigma$-extension/labelling, i.e.\ $\sigma$ collapses. The two question marks represent open problems. Note that we already observed that cf2 as well as stage2 semantics are not well-defined in case of finitary as well arbitrary AFs. This means, there are infinite AFs and candidate sets leading to an infinite recursion implying that there is no definite answer on whether such a set is an extension (Example~\ref{ex:infrecursion}). Nevertheless, even if for any candidate set a definitive decision is possible there are infinite (but non-finitary) AFs where both semantics collapse (Example~\ref{ex:nostg}). In \cite[Conjecture 1]{BauS15} it is conjectured that this is impossible in case of finitary frameworks.

      \begin{table}[H]
	\centering
	\begin{tikzpicture}[scale=0.815] 
	  \tikzstyle{literal}=[text centered]
	  \draw[help lines] (1,0) grid (14,4);
	  \draw[help lines] (-1,0) -- (1,0);
	  \draw[help lines] (-1,1) -- (1,1);
	  \draw[help lines] (-1,2) -- (1,2);
	  \draw[help lines] (-1,3) -- (1,3);
	  \draw[help lines] (-1,0) -- (-1,3);
	  \node[literal] (S1) at (1.5,3.5)  {\standardbox{\stb}};
	  \node[literal] (S1) at (2.5,3.5)  {\standardbox{\semi}};
	  \node[literal] (S1) at (3.5,3.5)  {\standardbox{\stg}};
	  \node[literal] (S1) at (4.5,3.5)  {\standardbox{\cfzwei}};
	  \node[literal] (S1) at (5.5,3.5)  {\standardbox{\stgzwei}};
	  \node[literal] (S1) at (6.5,3.5)  {\standardbox{\prf}};
	  \node[literal] (S1) at (7.5,3.5)  {\standardbox{\adm}};
	  \node[literal] (S1) at (8.5,3.5)  {\standardbox{\com}};
	  \node[literal] (S1) at (9.5,3.5)  {\standardbox{\grd}};
	  \node[literal] (S1) at (10.5,3.5)  {\standardbox{\id}};
	  \node[literal] (S1) at (11.5,3.5)  {\standardbox{\eag}};
	  \node[literal] (S1) at (12.5,3.5)  {\standardbox{\nav}};
	  \node[literal] (S1) at (13.5,3.5)  {\standardbox{\cf}};
	  \node[literal] (S1) at (1.5,2.5)  {};
	  \node[literal] (S1) at (2.5,2.5)  {$\ExtensionExists$};
	  \node[literal] (S1) at (3.5,2.5)  {$\ExtensionExists$};
	  \node[literal] (S1) at (4.5,2.5)  {$\ExtensionExists$};
	  \node[literal] (S1) at (5.5,2.5)  {$\ExtensionExists$};
	  \node[literal] (S1) at (6.5,2.5)  {$\ExtensionExists$};
	  \node[literal] (S1) at (7.5,2.5)  {$\ExtensionExists$};
	  \node[literal] (S1) at (8.5,2.5)  {$\ExtensionExists$};
	  \node[literal] (S1) at (9.5,2.5)  {$\UniqueExtensionExists$};
	  \node[literal] (S1) at (10.5,2.5)  {$\UniqueExtensionExists$};
	  \node[literal] (S1) at (11.5,2.5)  {$\UniqueExtensionExists$};
	  \node[literal] (S1) at (12.5,2.5)  {$\ExtensionExists$};
	  \node[literal] (S1) at (13.5,2.5)  {$\ExtensionExists$};
	  \node[literal] (S1) at (1.5,1.5)  {};
	  \node[literal] (S1) at (2.5,1.5)  {$\ExtensionExists$};
	  \node[literal] (S1) at (3.5,1.5)  {$\ExtensionExists$};
	  \node[literal] (S1) at (4.5,1.5)  {$?$};
	  \node[literal] (S1) at (5.5,1.5)  {$?$};
	  \node[literal] (S1) at (6.5,1.5)  {$\ExtensionExists$};
	  \node[literal] (S1) at (7.5,1.5)  {$\ExtensionExists$};
	  \node[literal] (S1) at (8.5,1.5)  {$\ExtensionExists$};
	  \node[literal] (S1) at (9.5,1.5)  {$\UniqueExtensionExists$};
	  \node[literal] (S1) at (10.5,1.5)  {$\UniqueExtensionExists$};
	  \node[literal] (S1) at (11.5,1.5)  {$\UniqueExtensionExists$};
	  \node[literal] (S1) at (12.5,1.5)  {$\ExtensionExists$};
	  \node[literal] (S1) at (13.5,1.5)  {$\ExtensionExists$};
	  \node[literal] (S1) at (1.5,0.5)  {};
	  \node[literal] (S1) at (2.5,0.5)  {};
	  \node[literal] (S1) at (3.5,0.5)  {};
	  \node[literal] (S1) at (4.5,0.5)  {};
	  \node[literal] (S1) at (5.5,0.5)  {};
	  \node[literal] (S1) at (6.5,0.5)  {$\ExtensionExists$};
	  \node[literal] (S1) at (7.5,0.5)  {$\ExtensionExists$};
	  \node[literal] (S1) at (8.5,0.5)  {$\ExtensionExists$};
	  \node[literal] (S1) at (9.5,0.5)  {$\UniqueExtensionExists$};
	  \node[literal] (S1) at (10.5,0.5)  {$\UniqueExtensionExists$};
	  \node[literal] (S1) at (11.5,0.5)  {$\ExtensionExists$};
	  \node[literal] (S1) at (12.5,0.5)  {$\ExtensionExists$};
	  \node[literal] (S1) at (13.5,0.5)  {$\ExtensionExists$};
	  \node[literal] (S1) at (0,2.5)  {finite};
	  \node[literal] (S1) at (0,1.5)  {finitary};
	  \node[literal] (S1) at (0,0.5)  {arbitrary};
	\end{tikzpicture}
	\caption{Definedness Statuses of Semantics}
	\label{Eval:run}
      \end{table}

For a detailed complexity analysis of the associated decision problems, i.e. \textit{Given an AF $\AF$. Is $\card{\sigma(\AF)} \geq 1$ or even, $\card{\sigma(\AF)} = 1$?} we refer the reader to \cite{DvoD18}. The mentioned decisions problems are considered for finite AFs only since the input-length, i.e.\ the length of the formal encoding of an AF has to be finite (for finite representations of infinite AFs we refer the reader to \cite{BarCDG13}). Due to the table above some complexity results are immediately clear. For instance, the existence problem is trivial for all considered semantics except the stable one. An upper bound for the complexity of the uniqueness problem can be obtained via the complexity of the corresponding verification problem, i.e.\ \textit{Given an AF $\AF$ and a set $E$. Is $E\in\Ext_{\sigma}(\AF)$?}. More precisely, an algorithm which decides the uniqueness problem is the following two-step procedure: first, guessing a certain set $E$ non-deterministically and second, verifying whether this set is an $\sigma$-extension. 

As already mentioned, most of the literature concentrate on finite AFs for several reasons, especially due to their computational nature.  However, allowing an infinite number of arguments is essential in applications where upper bounds
on the number of available arguments cannot be established a priori, such as for example in dialogues \cite{BelGM15} or modeling
approaches including time or action sequences \cite{BauS12}. Moreover, even actual infinite AFs frequently occur in the instantiation-based context. More precisely, the semantics of so-called \textit{rule-based argumentation formalisms} (cf.\ Chapter 6 as well as \cite{Pra10}) is given via the evaluation of induced Dung-style AFs. In this context, even a finite set of rules may lead to an infinite set of arguments as observed in (cf.\ \cite{CamO14,Str15}).

In 2011, Baroni et al.\ wrote ``As a matter of fact, we
    are not aware of any systematic literature analysis of argumentation
    semantics properties in the infinite case.'' \cite[Section 4.4]{BarCG11}. Since then only few works have contributed to a better understanding of infinite AFs. In \cite{BarCDG13} the authors studied to which extent infinite AFs can be finitely represented via formal languages and considered several decision problems within this context. In \cite{BauS15} a detailed study of the central properties of existence
and uniqueness as presented in this chapter was given. Recently, the same authors addressed several central
issues like \textit{expressibility}, \textit{intertranslatability} or \textit{replaceability} (cf.\ Chapters~\ref{cha:realizability}~and~\ref{cha:replace}) in the general unrestricted case \cite{BauS17}.

\chapter{Expressibility of Argumentation Semantics}

\label{cha:realizability}
Given a certain logical formalism $\m{L}$ used as knowledge representation language or modelling tool in general. Depending on the application in mind, it might be interesting to know which kinds of model sets are actually expressible in~$\m{L}$?  More formally, if $\sigma_{\m{L}}$ denotes the semantics of $\m{L}$, we are interested in determining the set $\m{R}_{\m{L}}~=~\left\{\sigma_{\m{L}}\left(T\right) \mid T \text{ is an } \m{L}\text{-theory} \right\}$. This task, also known as \textit{realizability} or \textit{defineability}, highly depends on the considered formalism $\m{L}$. Clearly, potential necessary or sufficient properties for being in $\m{R}_{\m{L}}$, i.e.\ being $\sigma_{\m{L}}$-\textit{realizable}, may rule out a logic or make it perfectly appropriate for a certain application. For instance, it is well-known that in case of propositional logic any finite set of two-valued interpretations is realizable. This means, given such a finite set~\m{I}, we always find a set of formulae $T$, s.t.\ $\Mod(T) = \m{I}$.  Differently, in case of normal logic programs under stable model semantics we have that any finite candidate set is realizable if it forms a $\subseteq$-antichain, i.e.\ any two sets of the candidate set have to be incomparable with respect to the subset
relation. Remarkably, being such an $\subseteq$-antichain is not only necessary but even sufficient for realizability w.r.t.\ stable model semantics \cite{EitFPTW13,Str15}. One major application of realizability issues are dynamic evolvements of $\m{L}$-theories like in case of belief revision (cf.\ \cite{AGM,WilliamsA98,QiY08,DelgrandeP15,DelgrandeSTW08,DelgrandePW13,BauB15,DilHLRW15} for several knowledge representation formalisms). Roughly speaking, belief revision deals with the problem of integrating new pieces of information to a current knowledge base which is represented by a certain $\m{L}$-theory $T$. To this end, you are typically faced with the problem of modifying the given theory $T$ in such a way that the revised version $S$ satisfies $\sigma_{\m{L}}\left(S\right) = M$ for some model set $M$. Now, before trying to do this revision in a certain minimal way it is essential to know whether $M$ is realizable at all, i.e.\ $M\in\m{R}_{\m{L}}$. 

The first formal treatment of realizability issues w.r.t.\ extension-based argumentation semantics was recently given by Dunne et al.\ \cite{DunDLW13,DunDLW15}. They coined the term \textit{signature} for the set of all realizable sets of extensions. The authors provided simple criteria for several mature semantics deciding whether a set of extensions 
is contained in the corresponding signature. For instance, two obvious necessary conditions in case of preferred semantics (as well as many other semantics) is that a candidate set $\Ss$ has to be non-empty, due to universal definedness of preferred semantics and second, $\Ss$ has to be a $\subseteq$-antichain, also known as \textit{I-maximality criterion} \cite{BarG07}. However, these conditions are not sufficient 
implying that further requirements has to hold. In case of preferred semantics it turned out that adding the requirement of so-called \textit{conflict-sensitivity} indeed yield a set of characterizing properties. A $\subseteq$-antichain $\Ss$ is conflict-sensitive if for each pair of distinct sets $A$ and $B$ from $\Ss$
there are at least one $a\in A$ and one $b\in B$, s.t.\ $a$ and $b$ do not occur together 
in any set of~$\Ss$. This implies that there exists an AF $\F$ in which the set of its preferred extension coincides with $\Ss = \{\{a,b\},\{a,c\},\{b,d\},\{c,d\}\}$. Furthermore, since $\{a,b\}$ and $\{b,d\}$ are already contained in $\Ss$ it is impossible to realize the set $\Tt = \Ss\cup\{\{a,d\}\}$ under preferred semantics. From a practical point of view, such realizability insights can be used to limit the search space when enumerating preferred extensions. More precisely, applying the mentioned characterization result we obtain that not only $\{a,d\}$, but also any other set $A\subseteq\{a,b,c,d\}$ can not be a further preferred extension of a certain AF given that we already computed all sets contained in~$\Ss$. As a matter of fact, knowing that a certain set is realizable does not provide one automatically with a witnessing AF. Fortunately, there exist \textit{canonical frameworks} showing realizability in a constructive fashion as shown in \cite{DunDLW13,DunDLW15}.

Later on, restricted versions of realizability were considered, namely \textit{compact} as well as \textit{analytic realizability} in case of extension-based semantics \cite{compact,compact2,LinSW15,compactj}. Both versions are motivated by typical phenomena that
can be observed for several semantics. First, there potentially exist arguments in a given AF that do not appear in any
extension, so-called \textit{rejected} arguments. Second, most of the argumentation semantics possess the feature of allowing \textit{implicit conflicts}. An implicit conflict arises when two arguments 
are never jointly accepted although they do not attack each other. In order to understand in which way rejected arguments and implicit conflicts contribute to the expressive power of a certain semantics the notions of compact AFs as well as analytic AFs were introduced. The former kind disallows rejected arguments whereas the latter is free of implicit conflicts. It turned out that for many universally defined semantics the full range of expressiveness indeed relies on the use rejected arguments and implicit conflicts. This means, there are plenty of AFs which do not possess an equivalent AF which is in addition compact or analytic, respectively. 

Recently, a first study of extension-based realizability w.r.t.\ arbitrary frameworks was presented in \cite{BauS17}. The authors compared the expressive power of several mature semantics in the unrestricted setting. Interestingly, the results reveal an intimate connection between arbitrary and finitely compact AFs in terms of expressiveness. Nevertheless, an in-depth analysis of realizability in the unrestricted setting is still missing. For instance, necessary and sufficient properties for being realizable are not considered so far.

There are only few works which have dealt with labelling-based realizability in the context of Dung-style argumentation frameworks. Dyrkolbotn showed that, as long as additional arguments are allowed any finite set of labellings is \textit{realizable under projection} in case of preferred or semi-stable semantics \cite{Dyr14}. In order to realize a set of labellings $\Ss$ under projection it suffices to come up with an AF $\F$, s.t.\ its set of labellings modulo additional arguments coincide with~$\Ss$. The second work by Linsbichler et al. deals with the standard notion of realizability adapted to labelling-based semantics \cite{LinPS16}. The authors presented an algorithm which returns either ``No'' in case of non-realizability or a witnessing AF $\F$ in the positive case. Remarkably, the algorithm is not restricted to the formalism of abstract argumentation frameworks only. In fact, it can also be used to decide realizability in case of the more general abstract dialectical frameworks as well as various of its sub-classes like \textit{bipolar} ones \cite{ADF,ADF2,BauSt17}.

\section{Realizability and Signatures}

Let us start with the two central concepts of this section, namely \textit{realizability} as well as \textit{signature}. In a nutshell, we say that a certain set $\Ss$ is realizable under the semantics $\sigma$, if there is an AF $\F$ such that its set of $\sigma$-labellings/$\sigma$-extensions coincides with $\Ss$. Collecting all realizable sets defines the concept of a signature. In accordance with the existing literature the main part of this section is devoted to finite realizability for extension-based semantics, i.e.\ signatures which contain set of $\sigma$-extensions of finite AFs only. Realizability w.r.t.\ labelling-based semantics as well as the consideration of infinite AFs will be briefly outlined only. Consider the following general definition of realizability in the context of abstract argumentation.

\begin{definition}\label{def:realizable}
Given a semantics $\sigma: \m{F}\rightarrow 2^{\left(2^\m{U}\right)^n}$ and a set $\m{C}\subseteq\m{F}$. A set $\Ss\subseteq \left(2^\m{U}\right)^n$ is $\sigma$-\textit{realizable} w.r.t.\ \m{C} if there is an AF~$\F\in\m{C}$, s.t.\ $\sigma(\F)=\Ss$.
\end{definition}

\begin{definition} \label{def:signatures}
Given a semantics $\sigma$ and a set $\m{C}\subseteq\m{F}$. The $\sigma$-signature w.r.t.\ $\m{C}$ is defined as \linebreak $\Sigma_{\sigma}^\m{C} = \left\{\sigma(\F) \mid \F\in\m{C} \right\}$. 

\end{definition}
If clear from context or unimportant we simply speak of \textit{signatures} and write $\Sigma$ without mentioning a semantics $\sigma$ or set of AFs \m{C}. Similarly, we say that a certain set is \textit{realizable} instead of $\sigma$-realizable w.r.t.\ \m{C}. Please observe that both concepts are intimately connected via the following relation: for any set $\Ss$ we have, $\Ss$ is realizable if and only if $\Ss\in\Sigma$. Consequently, if $\Ss$ is not contained in $\Sigma$, then there is no framework whose extensions/labellings are exactly~$\Ss$. Hence, instead of searching for witnessing AFs (which might not exist) it is very attractive to find necessary as well as sufficient properties for the containment of a set $\Ss$ to a certain signature locally, i.e.\ by properties of $\Ss$ itself.

\section{Signatures w.r.t.\ Finite AFs}
We start with finite realizability. Instantiating Definitions \ref{def:realizable} and \ref{def:signatures} with $\m{C} = \{\F\in\m{F}\mid \F \text{ finite}\}$ formally capture the notions of realizability as well as signatures relativised to finite AFs. Consider the following definitions.

\begin{definition}\label{def:realizable_finite}
Given a semantics $\sigma: \m{F}\rightarrow 2^{\left(2^\m{U}\right)^n}$. A set $\Ss\subseteq \left(2^\m{U}\right)^n$ is \textit{finitely} $\sigma$-\textit{realizable} if there is an AF~$\F\in\{\F\in\m{F}\mid \F \text{ finite}\}$, s.t.\ $\sigma(\F)=\Ss$.
\end{definition}

\begin{definition} \label{def:signatures_finite}
Given a semantics $\sigma$. The finite $\sigma$-signature is defined as $\left\{\sigma(\F) \mid \F\in\m{F}, \F \text{ finite}\right\}$ abbreviated by $\Sigma_{\sigma}^f$. 
\end{definition}

We proceed with further notational shorthands (adjusted to the extension-based approach) which will be used throughout the whole section.

\begin{definition}[\cite{DunDLW15}] \label{def:extension-set}
Given $\Ss \subseteq 2^\m{U}$,
we use
\begin{itemize}
\item
$\Args_\Ss$ to denote $\bigcup_{S\in\Ss} S$ and $\Card{\Ss}$ for $\card{\Args_\Ss}$,
\item
$\Pairs_\Ss$ to denote $\{ (a,b) \mid 
\exists S\in \Ss: \{a,b\}\subseteq S\}$ and
\item $\dcl(\Ss)$ to denote (the so-called \textit{downward-closure})
$\{S' \subseteq S \mid S \in \Ss\}$ 
\end{itemize}
Furthermore, we say that $\Ss$ is an \textit{extension-set} if $\Card{\Ss}$ is a finite cardinal.
\end{definition}

In order to familiarize the reader with the introduced definitions we give the following example. 

\begin{example} Let $\Ss = \{\{a\},\{a,c\},\{a,b,d\}\}$. Then
\begin{itemize}
\item
$\Args_\Ss = \{a,b,c,d\}$ and $\Card{\Ss} = 4$. This means, $\Ss$ is an extension-set.
\item
$\Pairs_\Ss = \{(a,a),(b,b),(c,c),(d,d),(a,b),(a,c),(a,d),(b,d)\}\ \cup$\\ 
\mbox{} \hspace{+3.3em} $\{(b,a),(c,a),(d,a),(d,b)\}$
\item $\dcl(\Ss) = \{\emptyset,\{a\},\{b\},\{c\},\{d\},\{a,b\},\{a,c\},\{a,d\},\{b,d\},\{a,b,d\}\}$
\end{itemize}

Furthermore, since naive extensions are defined as $\subseteq$-maximal sets and obviously, $\{a\}\subset\{a,c\}$ we deduce that $\Ss$ is not $\nav$-realizable, i.e.\ $\Ss\notin\Sigma^f_{\Ext_{\nav}}$. Regarding complete semantics we obtain $\Ss\in\Sigma^f_{\Ext_{\com}}$ witnessed by the following AF $\F$.

\vspace{3mm}
\begin{tikzpicture}
    
		\node (A1'') at (0,-2.5) [circle,minimum size=0.7cm, thick, draw, label = left:$\F:$]{$a$};
    \node (B1'') at (2.4,-2.5) [circle, minimum size=0.7cm, thick, draw]{$c$};
		\node (A2'') at (1.2,-2.5) [circle, minimum size=0.7cm, thick, draw] {$b$};
    \node (B2'') at (3.6,-2.5) [circle, minimum size=0.7cm,  thick, draw] {$d$};


\draw[->, thick] (B1'') to [thick,bend right] (A2'');
\draw[->, thick] (A2'') to [thick,bend right] (B1'');
\draw[->, thick] (B1'') to [thick,bend right] (B2'');
\draw[->, thick] (B2'') to [thick,bend right] (B1'');

\end{tikzpicture}

\end{example}

In the following we consider the signatures of the extension-based versions of stable, semi-stable, stage, naive, preferred, complete as well as grounded semantics \cite{DunDLW13,DunDLW15}. We provide a bunch of properties where certain subsets of them exactly matches the containment conditions for certain signatures. All properties can be decided by looking on the set in question only.

\subsection{Semantics based on Conflict-freeness} 
Our starting point are semantics based on conflict-free sets. Conflict-free sets by themselves inherited their conflict-freeness to any subset of them. More formally, the downward-closure does not vary the set of conflict-free sets for a given AF. A set possessing this property is called \textit{downward-closed}. Clearly, downward-closedness does not hold in case of admissible sets as well as any other reasonable semantics $\sigma$ where conflict-freeness is just one requirement among others for being a $\sigma$-extension. Take for instance naive semantics. Naive extension are defined as $\subseteq$-maximal conflict-free sets. Consequently, the set of all naive extensions is a $\subseteq$-antichain, i.e.\ any two naive extensions are \textit{incomparable} w.r.t.\ subset relation. This property also applies to many other semantics, such as stable and stage semantics as well as any uniquely defined semantics. However, although incomparability is a necessary condition for many considered semantics it is certainly not sufficient. Consider therefore the following example taken from \cite[Example 1]{DunDLW15}.

\begin{example}
\label{ex:ab_ac_bc}
Consider the $\subseteq$-antichain
$\Ss = \{\{a,b\},\{a,c\},\{b,c\}\}$
and a
semantics $\sigma$ which selects its reasonable positions among the conflict-free sets, i.e.\
$\Ext_{\sigma}(\F)\subseteq\Ext_{\cf}(\F)$ for any AF $\F$.
Now suppose there exists an AF $\F$ with
$\Ext_{\sigma}(\F) = \Ss$. Then 
$\F$ must not contain attacks between $a$ and $b$, $a$ and $c$, and respectively 
$b$ and $c$. This means, $\{a,b,c\}\in\Ext_{\cf}(\F)$. But then $\Ext_{\sigma}(\F)$ typically contains $\{a,b,c\}$. 
\end{example}

There are several ways to define the required property which excludes sets
like $\Ss$ from above. It turned out that in order to characterize conflict-free based semantics like stable, stage and naive semantics a rather strong condition is required, so-called \textit{tightness}. Roughly speaking, if an incomparable set is not tight, then there is a set
$S \in \Ss$ and an argument $a$ not belonging to $S$, s.t.\ for any $s \in S$ we find an other $S'\in \Ss$ with
$a$ and $s$ being members of it. The idea behind the notion of being tight is simply that
if an argument $a$ does not occur in some extension $S$
there must be a reason for that. The most simple reason 
one can think of is that there is a conflict between 
$a$ and some $s\in S$, i.e.\ $a$ and $s$ do not occur jointly in 
any extension-set of $\Ss$ or, in other words,
$(a,s) \notin \Pairs_\Ss$. In a way, this limits the multitude of incomparable elements of an extension-set.

We proceed with the formal definitions.

\begin{definition}[\cite{DunDLW13}]\label{def:tightetal}
Given $\Ss \subseteq 2^\m{U}$. We call $\Ss$ 
\begin{itemize}
\item 
\textit{downward-closed} if $\Ss=\dcl(\Ss)$,
\item
\textit{incomparable} if $\Ss$ is a $\subseteq$-antichain and
\item 
\textit{tight} if for all
$S \in \Ss$ and $a \in \Args_\Ss$
it holds that if
$S \cup \{a\} \notin \Ss$
then there exists an
$s \in S$
such that
$(a,s) \notin \Pairs_\Ss$.
\end{itemize}
\end{definition}

Please observe that for incomparable $\Ss$, the premise of the tightness condition, i.e.\ $S \cup \{a\} \notin \Ss$, is always fulfilled. However, tightness and incomparability are independent of each other, i.e.\ neither tightness implies incomparability or comparability, nor incomparability implies tightness or non-tightness.  
 
\begin{example} \label{ex:independent}
Consider again the
extension-set 
$\Ss = \{\{a,b\},\{a,c\},\{b,c\}\}$ from Example~\ref{ex:ab_ac_bc}.
The set $\Ss$ is incomparable but not tight which can be seen as follows. If setting $S = \{a,b\}$ we observe $S\cup \{c\}\notin\Ss$. Moreover, for any $s\in S$ we find an $S'\in \Ss$, s.t.\ $\{s,c\} = S'$ implying that $(s,c)\in\Pairs_\Ss$. More precisely, if $s=a$, then we have $S' = \{a,c\}$ and similarly, if $s=b$ we find $S' = \{b,c\}$.

Furthermore, it can be checked that $\Ss' = \{\{a,b\},\{a,c\},\{b,d\},\{c,d\}\}$ or $\Ss'' = \Ss \cup \{\{a,b,c\}\}$ are witnessing examples for incomparability and tightness or tightness and comparability, respectively.
\end{example}

Clearly, subsets of incomparable sets are incomparable. Such a kind of inheritance does not hold in case of tight sets (cf.\ $\Ss$ and $\Ss''$ as defined in Example~\ref{ex:independent}). Nevertheless, there are non-trivial tight subsets of any tight set. For instance, in any case the set of all $\subseteq$-maximal elements is tight. Furthermore, if a tight set is even incomparable, then any subset of it is tight too.

In the following we present the main statements only. However, in many cases we provide some short comments indicating how to prove the statement in question. For full proofs we refer the reader to the referenced papers.

\begin{lemma}[\cite{DunDLW15}]
\label{lemma:tight_props}
For a tight extension-set $\Ss \subseteq 2^\m{U}$ we have:
\begin{enumerate}
\item
the $\subseteq$-maximal elements in $\Ss$ form a tight set, and
\item
if $\Ss$ is incomparable then each $\Ss' \subseteq \Ss$ is tight.
\end{enumerate}
\end{lemma}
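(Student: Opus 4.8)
The plan is to reduce both claims to the tightness of the original set $\Ss$ by observing how the auxiliary quantities $\Args_{(\cdot)}$ and $\Pairs_{(\cdot)}$ behave under the two restrictions in question, and then to settle a small case distinction on whether the candidate argument $a$ already lies in the set $S$. Throughout I would use that an extension-set is finite: since $\Card{\Ss} = \card{\Args_\Ss}$ is a finite cardinal, $\Ss$ is a finite family of finite sets, so in particular every element of $\Ss$ is contained in some $\subseteq$-maximal element of $\Ss$.

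For part~(1), write $\Ss_m$ for the family of $\subseteq$-maximal elements of $\Ss$. First I would check the two ``invariance'' facts $\Args_{\Ss_m} = \Args_\Ss$ and $\Pairs_{\Ss_m} = \Pairs_\Ss$: the inclusions $\subseteq$ are immediate from $\Ss_m\subseteq\Ss$, and the reverse inclusions follow by pushing any witnessing set $S\in\Ss$ up to a maximal superset $S'\in\Ss_m$. Now take $S\in\Ss_m$ and $a\in\Args_{\Ss_m}$ with $S\cup\{a\}\notin\Ss_m$; I must produce $s\in S$ with $(a,s)\notin\Pairs_{\Ss_m}$. The key observation is that $S$ is $\subseteq$-maximal in $\Ss$, so if $S\cup\{a\}\in\Ss$ then $S\cup\{a\}\subseteq S$, i.e.\ $a\in S$ and $S\cup\{a\}=S\in\Ss_m$, contradicting the hypothesis; hence $a\notin S$ and $S\cup\{a\}\notin\Ss$. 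Since $\Ss$ is tight, $S\in\Ss$ and $a\in\Args_\Ss$, tightness delivers $s\in S$ with $(a,s)\notin\Pairs_\Ss = \Pairs_{\Ss_m}$, as required.

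For part~(2), let $\Ss$ be incomparable and tight and let $\Ss'\subseteq\Ss$. Given $S\in\Ss'$ and $a\in\Args_{\Ss'}$ with $S\cup\{a\}\notin\Ss'$, I want $s\in S$ with $(a,s)\notin\Pairs_{\Ss'}$. If $a\in S$ then $S\cup\{a\}=S\in\Ss'$, a contradiction, so $a\notin S$ and $S\cup\{a\}\supsetneq S$ with $S\in\Ss$; incomparability of $\Ss$ then forces $S\cup\{a\}\notin\Ss$. Applying tightness of $\Ss$ to $S\in\Ss$ and $a\in\Args_{\Ss'}\subseteq\Args_\Ss$ yields $s\in S$ with $(a,s)\notin\Pairs_\Ss$, and since $\Pairs_{\Ss'}\subseteq\Pairs_\Ss$ this gives $(a,s)\notin\Pairs_{\Ss'}$, finishing the proof.

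There is no deep difficulty here; the whole argument is bookkeeping. The one point that needs care --- and the only place a sloppy proof could go wrong --- is the step where the hypothesis ``$S\cup\{a\}$ is not in the smaller family'' is upgraded to ``$S\cup\{a\}$ is not in $\Ss$'', which uses maximality of $S$ in part~(1) and incomparability of $\Ss$ in part~(2); once this is in place, tightness of $\Ss$ together with the monotonicity $\Pairs_{(\cdot)}\subseteq\Pairs_\Ss$ and $\Args_{(\cdot)}\subseteq\Args_\Ss$ does the rest.
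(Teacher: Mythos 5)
Your proof is correct. The paper itself does not prove this lemma --- it is imported verbatim from the cited source with only the remark that full proofs are in the referenced papers --- and your argument is exactly the standard bookkeeping one: the reduction of ``$S\cup\{a\}$ not in the subfamily'' to ``$S\cup\{a\}\notin\Ss$'' via maximality (part~1) or incomparability (part~2), combined with the monotonicity of $\Args_{(\cdot)}$ and $\Pairs_{(\cdot)}$ under passing to subfamilies, is precisely the point that needs to be made, and you make it cleanly.
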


Note that the second statement of Lemma~\ref{lemma:tight_props} implies that
if the downward-closure of an incomparable extension-set $\Ss$ is tight,
then $\Ss$ itself has to be tight too. 

We proceed with a specific AF and check which properties apply to its different sets of extensions.

\begin{example} \label{ex:tight}
Consider the following AF $\F$.

\begin{tikzpicture}[xscale=2,>=stealth]
      \path
			(30:0.9) node[arg] (c1) {$c_1$}
      (-45:.6) node[arg] (x1) {$a_3$}
			+( -30:1) node[arg] (y1) {$b_3$}
			( 90:.6) node[arg] (x2) {$a_2$}
      +( 90:1) node[arg] (y2) {$b_2$}
      (225:.6) node[arg] (x3) {$a_1$}
      +(210:1) node[arg] (y3) {$b_1$}
			+(201:1.3) node (F) {$\F:\ $}
      ;
      \path[<->,thick,bend right]
      (x1) edge (x2)
      (x2) edge (x3)
      (x3) edge (x1)
      ;
      \path[->,bend right, thick]
      (x1) edge (y1)
      (x2) edge (y2)
      (x3) edge (y3)
			;
			\path[->,loop, thick]
			(c1) edge (c1);
    \end{tikzpicture}

Since $c_1$ is self-defeating as well as unattacked we obtain $\Ext_{\stb}(\F) = \emptyset$. Furthermore, $\Ext_{\stg}(\F) = \{\{a_1,b_2,b_3\},
\{a_2,b_1,b_3\}, \{a_3,b_1,b_2\}\}$ and $\Ext_{\nav}(\F) = \Ext_{\stg}(\F) \cup \{\{b_1,b_2,b_3\}\}$. We observe,

\begin{enumerate}
	\item $\Ext_{\stb}(\F), \Ext_{\stg}(\F)$ as well as $\Ext_{\nav}(\F)$ are incomparable,
	\item $\Ext_{\stb}(\F), \Ext_{\stg}(\F)$ as well as $\Ext_{\nav}(\F)$ are tight and additionally, 
	\item $\dcl\left(\Ext_{\nav}(\F)\right)$ and $\dcl\left(\Ext_{\stb}(\F)\right)$ are tight and obviously, 
	\item $\Ext_{\stg}(\F)$ and $\Ext_{\nav}(\F)$ are non-empty.
\end{enumerate}

The first and the last items are not surprising since firstly, all considered semantics satisfy the I-maximality criterion which is just another name for incomparability and secondly, in Theorems~\ref{the:stgfin} and \ref{th:minor} we have already seen that stage extensions are guaranteed for finitary (hence, for finite) frameworks and naive semantics is even universally defined w.r.t.\ the whole class of AFs. This means, incomparability or non-emptiness of the mentioned sets of $\sigma$-extensions do not depend on the specific AF $\F$, but rather apply to any finite AF. Consequently, these properties represent necessary properties regarding realizability. The tightness statements of the second and third items can be checked in a straightforward manner. We now examine that $\dcl\left(\Ext_{\stg}(\F)\right)$ is non-tight. This can be seen as follows: Firstly, $\{b_2,b_3\}\in\dcl\left(\Ext_{\stg}(\F)\right)$. Now, for $b_1$ the premise of Definition~\ref{def:tightetal} is satisfied, i.e.\ $\{b_1,b_2,b_3\}\notin \dcl\left(\Ext_{\stg}(\F)\right)$.  Consequently, since $\{b_1,b_2\},\{b_1,b_3\}\in\dcl\left(\Ext_{\stg}(\F)\right)$ and therefore,
$(b_1,b_2),(b_1,b_3)\in\Pairs_{\dcl\left(\Ext_{\stg}(\F)\right)}$ we deduce the non-tightness of $\dcl\left(\Ext_{\stg}(\F)\right)$. This means, tightness of the downward-closure of a given set can not be a necessary criterion for belonging to the stage signature.

\end{example}

We now present the characterization theorems for conflict-free,
naive, stable as well as stage signatures. It is somehow surprising that only a few simple properties are sufficient to characterize these different signatures.

\begin{theorem}[\cite{DunDLW15}]
\label{the:charcf}
Given a set $\Ss \subseteq 2^\m{U}$, then
\begin{enumerate}
\item $\Ss\in \Sigma^f_{\Ext_{\cf}} \ToT \Ss \text{ is a non-empty, downward-closed, and tight extension-set,}$
\item $\Ss\in \Sigma^f_{\Ext_{\nav}} \ToT \Ss \text{ is a non-empty, incomparable extension-set and } \dcl{(\Ss)} \text{ is tight},$
\item $\Ss\in \Sigma^f_{\Ext_{\stb}} \ToT \Ss \text{ is a incomparable and tight extension-set,}$
\item $\Ss\in \Sigma^f_{\Ext_{\stg}} \ToT \Ss \text{ is a non-empty, incomparable and tight extension-set.}$
\end{enumerate}
\end{theorem}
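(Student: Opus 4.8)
The four characterizations of Theorem~\ref{the:charcf} all have the same structure: an ``if'' direction producing a witnessing finite AF by a canonical construction, and an ``only if'' direction showing the listed properties are forced on every $\sigma$-extension-set. I would treat the four items together wherever possible, since the canonical AFs differ only slightly and the necessity arguments share the same ideas. For all items one first observes that $\Card{\Ss}$ being a finite cardinal is forced because a finite AF has only finitely many arguments, and $\Args_{\sem(\F)}\subseteq A$; so $\Ss$ being an extension-set is immediate on the necessity side and an assumption on the sufficiency side.

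\textbf{Necessity (``$\To$'').} For item~1, conflict-freeness is closed under subsets, so $\Ext_{\cf}(\F)=\dcl(\Ext_{\cf}(\F))$, giving downward-closedness; non-emptiness holds because $\emptyset\in\Ext_{\cf}(\F)$; and tightness is the key point: if $S\in\Ext_{\cf}(\F)$ and $a\in\Args_\Ss$ with $S\cup\{a\}\notin\Ext_{\cf}(\F)$, then since $S$ is conflict-free the new conflict must involve $a$, i.e.\ there is $s\in S$ with $(a,s)\in R$ or $(s,a)\in R$; as $a$ and $s$ then never co-occur in a conflict-free set we get $(a,s)\notin\Pairs_\Ss$. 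For items~2--4, incomparability is the I-maximality criterion: $\Ext_{\nav}(\F)$, $\Ext_{\stb}(\F)$ and $\Ext_{\stg}(\F)$ are each $\subseteq$-antichains by definition (maximality among conflict-free sets, resp.\ range-maximality, resp.\ the $\E^\oplus=A$ condition combined with conflict-freeness). Tightness of $\Ext_{\stb}(\F)$ and $\Ext_{\stg}(\F)$ follows from the same conflict argument as for $\cf$, applied to the underlying conflict-free sets, using that $S\cup\{a\}\notin\Ss$ holds automatically for incomparable $\Ss$ (so the premise is vacuously usable) — one argues that if $a$ had no conflict with any $s\in S$ then $S\cup\{a\}$ would be conflict-free and one could extend it to a (range-maximal) extension strictly above $S$, contradicting incomparability. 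For item~2 one shows instead that $\dcl(\Ext_{\nav}(\F))=\Ext_{\cf}(\F)$, which is tight by item~1's necessity direction. Non-emptiness for $\nav$ and $\stg$ follows from Theorem~\ref{th:minor} and Theorem~\ref{the:stgfin} (universal definedness over finite AFs), while $\stb$ may legitimately be empty, which is why item~3 omits non-emptiness.

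\textbf{Sufficiency (``$\oT$'').} Here I would exhibit canonical frameworks. Given a non-empty tight downward-closed extension-set $\Ss$, define $\F_{\cf}(\Ss)=(\Args_\Ss, R)$ with $R=\{(a,b),(b,a)\mid a,b\in\Args_\Ss,\ (a,b)\notin\Pairs_\Ss\}$ — i.e.\ put a symmetric attack exactly between pairs that never co-occur. One checks $\Ext_{\cf}(\F_{\cf}(\Ss))=\Ss$: a set $S$ is conflict-free iff every pair from $S$ lies in $\Pairs_\Ss$, i.e.\ $S\times S\subseteq\Pairs_\Ss$; tightness plus downward-closedness is precisely what forces such $S$ to actually be in $\Ss$ (this is the delicate combinatorial step). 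For $\stb$ and $\stg$ one needs range-maximality/$A$-coverage; here tightness of $\Ss$ (without downward-closure) lets one use a slightly richer construction — add for each $S\in\Ss$ and each $a\in\Args_\Ss\setminus S$ the attacks witnessing exclusion, or equivalently take $\F_{\cf}(\dcl(\Ss))$ restricted appropriately — so that the $\subseteq$-maximal conflict-free sets are exactly the members of $\Ss$ and each has full range; for $\stb$ one must also verify no extra $\subseteq$-maximal conflict-free set with full range appears, which uses incomparability. For $\nav$ one realizes $\dcl(\Ss)$ as conflict-free sets via $\F_{\cf}(\dcl(\Ss))$ and then the $\subseteq$-maximal ones are exactly $\Ss$ (here we use that $\dcl(\Ss)$ tight $\Rightarrow$ $\Ss$ tight, noted after Lemma~\ref{lemma:tight_props}).

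\textbf{Main obstacle.} The crux is the sufficiency direction's verification that the symmetric ``non-co-occurrence'' attack relation realizes exactly $\Ss$ — specifically, proving that every $S$ with $S\times S\subseteq\Pairs_\Ss$ belongs to $\Ss$. This is where the precise interplay of tightness and downward-closedness is essential, and where a naive construction fails (as Example~\ref{ex:ab_ac_bc} shows, $\{a,b,c\}$ sneaks in precisely when tightness is violated). For $\stg$ and $\stb$ there is the additional subtlety of controlling the \emph{range} $S^\oplus$ and ensuring no spurious range-maximal or $A$-covering set is created; handling this cleanly — rather than by ad hoc case analysis — is the part I expect to require the most care, and I would lean on the canonical-framework constructions and their verification lemmas from \cite{DunDLW13,DunDLW15} to organize it.
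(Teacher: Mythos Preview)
Your necessity arguments and the sufficiency constructions for $\cf$ and $\nav$ match the paper's approach (the canonical framework $\F^\cf_\Ss=(\Args_\Ss,(\Args_\Ss\times\Args_\Ss)\setminus\Pairs_\Ss)$ is exactly what is used there). One minor point: for tightness of $\Ext_\stb(\F)$ and $\Ext_\stg(\F)$, the paper's route is cleaner than your ``extend to a range-maximal extension'' sketch---it simply observes $\stb\subseteq\stg\subseteq\nav$, that $\Ext_\nav(\F)$ is tight and incomparable, and then invokes Lemma~\ref{lemma:tight_props}(2) to conclude every subset (in particular $\Ext_\stb(\F)$ and $\Ext_\stg(\F)$) is tight.

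There is, however, a genuine gap in your sufficiency argument for $\stb$ and $\stg$. You propose to use $\F_{\cf}(\dcl(\Ss))$ ``restricted appropriately'' or to ``add attacks witnessing exclusion'' among the existing arguments. This cannot work. The point is that for a tight incomparable $\Ss$ the downward-closure $\dcl(\Ss)$ need \emph{not} be tight (see Example~\ref{ex:tight}), so $\Ext_\nav(\F^\cf_\Ss)$ may properly contain $\Ss$; these extra naive extensions are then also stable (and stage) in $\F^\cf_\Ss$, as Example~\ref{ex:canonical2} demonstrates with $\Tt=\{\{a_1,b_2,b_3\},\{a_2,b_1,b_3\},\{a_3,b_1,b_2\}\}$ and the spurious extension $\{b_1,b_2,b_3\}$. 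Adding attacks between \emph{existing} arguments cannot eliminate such an $E$: any new attack $(a,b)$ with $a,b$ co-occurring in some $S\in\Ss$ destroys $S$, and if they do not co-occur the attack is already present in $\F^\cf_\Ss$. The paper's fix (Definition~\ref{def:st}) is to introduce, for each unwanted $E\in\Ext_\stb(\F^\cf_\Ss)\setminus\Ss$, a \emph{fresh} self-defeating argument $\bar E$ attacked exactly by the arguments in $\Args_\Ss\setminus E$. Then $E$ no longer covers $\bar E$ (so $E$ is not stable), while every $S\in\Ss$ still attacks $\bar E$ (incomparability guarantees $S\not\subseteq E$, so some $s\in S\setminus E$ attacks $\bar E$). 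This injection of new arguments is the idea your proposal is missing.
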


We mention that a proof of the characterization theorem above requires two directions. Let us fix a certain semantics $\sigma\in\{\cf,\nav,\stb,\stg\}$. The first part is to show that for any finite AF $\F$, $\Ext_{\sigma}(\F)$ satisfies the mentioned properties. Now, for the second part, if a certain extension-set $\Ss$ satisfies the properties in question, then we have to find a finite AF $\F$, s.t.\ $\Ext_{\sigma}(\F) = \Ss$. 

Let us start with the first part. It suffices to consider tightness only since downward-closedness, non-emptiness and incomparability are clear (cf.\ some explanations given in Example~\ref{ex:tight}). It is easy to see that $\Ext_{\cf}(\F)$ is tight because if augmenting a conflict-free set $S$ with a non-conflicting argument $a$ yields a conflicting set, then obviously there has to be at least one element in $s\in S$, s.t.\ $\{a,s\}$ is conflicting. In order to prove that $\dcl\left(\Ext_{\nav}(\F)\right)$ is tight, it suffice to see that $\dcl\left(\Ext_{\nav}(\F)\right) = \Ext_{\cf}(\F)$. Consequently, applying Lemma~\ref{lemma:tight_props} we obtain the tightness of $\Ext_{\nav}(\F)$. Furthermore, with the same lemma, we get that 
every $\Ss \subseteq \Ext_{\nav}(\F)$ is tight. In consideration of $\stb\subseteq\stg\subseteq\nav$ (Proposition~\ref{Pro:semrel}) it follows that $\Ext_{\stb}(\F)$ as well as $\Ext_{\stg}(\F)$ are tight. 

In order to show that the mentioned properties are not only necessary but even sufficient we have to come up with witnessing AFs. Consider therefore the following prototype. 

\begin{definition}[\cite{DunDLW15}]\label{def_canonical1}
Given an extension-set $\Ss$, we define the \textit{canonical argumentation framework} 
for $\Ss$
as
$$
\F_\Ss^\cf = \big( \Args_\Ss, (\Args_\Ss \times \Args_\Ss) \setminus \Pairs_\Ss\big).
$$
\end{definition}
The idea behind 
the framework is simple: we draw a relation between two arguments iff they do not occur jointly in any set $S\in\Ss$.
Consequently, for any~$\Ss$, $F_\Ss^\cf$ is symmetric. Moreover, in any case, it is self-loop-free since $a\in\Args_\Ss$ implies $(a,a)\in\Pairs_\Ss$. Let us consider the following example.

\begin{example} \label{ex:canonical}
Let $\Ss=\{\{a_1,b_2,b_3\},\{a_2,b_1,b_3\},\{a_3,b_1,b_2\},\{b_1,b_2,b_3\}\}$ and consider the corresponding canonical framework $\F_\Ss^\cf$.

\begin{tikzpicture}[xscale=2,>=stealth]
      \path
		
      (-45:.6) node[arg] (x1) {$a_3$}
			+( -30:1) node[arg] (y1) {$b_3$}
			( 90:.6) node[arg] (x2) {$a_2$}
      +( 90:1) node[arg] (y2) {$b_2$}
      (225:.6) node[arg] (x3) {$a_1$}
      +(210:1) node[arg] (y3) {$b_1$}
			+(201:1.32) node (F) {$\F_\Ss^\cf:$}
      ;
      \path[<->,thick,bend right]
      (x1) edge (x2)
      (x2) edge (x3)
      (x3) edge (x1)
      ;
      \path[<->,bend right, thick]
      (x1) edge (y1)
      (x2) edge (y2)
      (x3) edge (y3)
			;
			
    \end{tikzpicture}

Please note that $\Ss$ is non-empty, incomparable as well as possesses a tight downward-closure (cf.\ Example~\ref{ex:tight}). Furthermore, $\F_\Ss^\cf$ realizes $\Ss$ under the naive semantics, i.e.\ $\Ext_{\nav}\!\left(\F_\Ss^\cf\right)\! = \Ss$. 
\end{example}

The following proposition shows that this is no coincidence.

\begin{proposition}[\cite{DunDLW15}]
\label{prop:naive_real}
For each 
non-empty, incomparable ex\-ten\-sion-set 
$\Ss$, where $\dcl(\Ss)$ is tight,
$\Ext_{\nav}\!\left(\F_\Ss^\cf\right)\! = \Ss$.
\end{proposition}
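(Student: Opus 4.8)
\textbf{Proof plan for Proposition~\ref{prop:naive_real}.}

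The goal is to show that for a non-empty, incomparable extension-set $\Ss$ whose downward-closure $\dcl(\Ss)$ is tight, the canonical framework $\F_\Ss^\cf = (\Args_\Ss, (\Args_\Ss\times\Args_\Ss)\setminus\Pairs_\Ss)$ has exactly $\Ss$ as its set of naive extensions. The plan is to proceed in two stages: first characterize the conflict-free sets of $\F_\Ss^\cf$, then take $\subseteq$-maximal ones. For the first stage, I would show that $\Ext_{\cf}(\F_\Ss^\cf) = \dcl(\Ss)$. The inclusion $\dcl(\Ss)\subseteq\Ext_{\cf}(\F_\Ss^\cf)$ is immediate: if $S'\subseteq S$ for some $S\in\Ss$, then any two $a,b\in S'$ lie jointly in $S$, hence $(a,b)\in\Pairs_\Ss$, so $(a,b)$ is not an attack in $\F_\Ss^\cf$; thus $S'$ is conflict-free. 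For the converse $\Ext_{\cf}(\F_\Ss^\cf)\subseteq\dcl(\Ss)$, suppose $E$ is conflict-free in $\F_\Ss^\cf$. Then for all $a,b\in E$ we have $(a,b)\in\Pairs_\Ss$, meaning each pair of elements of $E$ occurs together in some member of $\Ss$. I need to bootstrap this pairwise coexistence up to full coexistence in $\dcl(\Ss)$, i.e.\ that $E\in\dcl(\Ss)$, and this is exactly where tightness of $\dcl(\Ss)$ is used.

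The key inductive argument for $E\in\dcl(\Ss)$ runs as follows. Since $\Card{\Ss}$ is finite, $E$ is finite; enumerate $E=\{a_1,\dots,a_k\}$. Trivially $\emptyset\in\dcl(\Ss)$ (as $\Ss\neq\emptyset$). Inductively assume $\{a_1,\dots,a_i\}\in\dcl(\Ss)$ for some $i<k$, and consider adding $a_{i+1}$. If $\{a_1,\dots,a_i,a_{i+1}\}\in\dcl(\Ss)$ we are done with this step; otherwise, tightness of $\dcl(\Ss)$ applied to the set $S=\{a_1,\dots,a_i\}\in\dcl(\Ss)$ and the argument $a=a_{i+1}\in\Args_\Ss$ (note $a_{i+1}\in E\subseteq\Args_\Ss$) yields some $s\in\{a_1,\dots,a_i\}$ with $(a_{i+1},s)\notin\Pairs_{\dcl(\Ss)}$. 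But $\Pairs_{\dcl(\Ss)}=\Pairs_\Ss$ (a downward-closed set and its generating set have the same pairs, since a pair occurs in some subset iff it occurs in some superset), so $(a_{i+1},s)\notin\Pairs_\Ss$, contradicting that $a_{i+1},s\in E$ are conflict-free in $\F_\Ss^\cf$. Hence $\{a_1,\dots,a_i,a_{i+1}\}\in\dcl(\Ss)$, and the induction closes with $E\in\dcl(\Ss)$. I should double-check the small lemma $\Pairs_{\dcl(\Ss)}=\Pairs_\Ss$ explicitly, as it is the linchpin connecting the hypothesis (tightness of the downward-closure) to a statement about $\Pairs_\Ss$.

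Having established $\Ext_{\cf}(\F_\Ss^\cf)=\dcl(\Ss)$, the second stage is routine: $\Ext_{\nav}(\F_\Ss^\cf)$ is the set of $\subseteq$-maximal elements of $\dcl(\Ss)$. Because $\Ss$ is incomparable (a $\subseteq$-antichain), every member of $\Ss$ is $\subseteq$-maximal in $\dcl(\Ss)$, and conversely every $\subseteq$-maximal element of $\dcl(\Ss)$ must be a member of $\Ss$ (a proper subset of an element of $\Ss$ cannot be maximal in $\dcl(\Ss)$, and the only elements of $\dcl(\Ss)$ that are not proper subsets of some $S\in\Ss$ are the members of $\Ss$ themselves). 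Therefore $\Ext_{\nav}(\F_\Ss^\cf)=\Ss$, as claimed. The main obstacle is the pairwise-to-jointly coexistence step; everything else is bookkeeping. One subtlety worth a remark is that nothing here requires $\F_\Ss^\cf$ to be finite as a graph beyond $\Args_\Ss$ being finite, which is guaranteed since $\Ss$ is an extension-set, so the witness indeed lies in the class of finite AFs and $\Ss\in\Sigma^f_{\Ext_\nav}$.
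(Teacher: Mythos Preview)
Your proof plan is correct. The paper does not give its own proof of this proposition; it merely cites \cite{DunDLW15} and states the result. Your approach---first establishing $\Ext_{\cf}(\F_\Ss^\cf)=\dcl(\Ss)$ via the tightness-based induction, then taking $\subseteq$-maxima and invoking incomparability---is exactly the natural route, and in fact your first stage is essentially Proposition~\ref{prop:cf_real} applied to $\dcl(\Ss)$ (which is non-empty, downward-closed, and tight, and satisfies $\F_{\dcl(\Ss)}^\cf=\F_\Ss^\cf$ by your observation $\Pairs_{\dcl(\Ss)}=\Pairs_\Ss$).
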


Moreover, the canonical framework can also be used as witnessing framework in case of conflict-free sets as stated in the following proposition. 

\begin{proposition}[\cite{DunDLW15}]
\label{prop:cf_real} 
For each non-empty, downward-closed and tight ex\-ten\-sion-set~$\Ss$, 
$\Ext_{\cf}\!\left(\F_\Ss^\cf\right)\!= \Ss$.
\end{proposition}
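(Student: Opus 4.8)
The statement to prove is Proposition~\ref{prop:cf_real}: for each non-empty, downward-closed, tight extension-set $\Ss$, we have $\Ext_{\cf}\!\left(\F_\Ss^\cf\right) = \Ss$. I would prove the two inclusions separately, working directly from the definition $\F_\Ss^\cf = (\Args_\Ss, (\Args_\Ss\times\Args_\Ss)\setminus\Pairs_\Ss)$.

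\textbf{First inclusion: $\Ss \subseteq \Ext_{\cf}\!\left(\F_\Ss^\cf\right)$.} Let $S\in\Ss$. Any two arguments $a,b\in S$ (including the case $a=b$) satisfy $\{a,b\}\subseteq S$, hence $(a,b)\in\Pairs_\Ss$; therefore $(a,b)$ is \emph{not} an attack in $\F_\Ss^\cf$. Since $S\subseteq\Args_\Ss$ by definition of $\Args_\Ss$, this shows $S$ is a conflict-free set of $\F_\Ss^\cf$, i.e.\ $S\in\Ext_{\cf}\!\left(\F_\Ss^\cf\right)$. (Non-emptiness of $\Ss$ is not even needed here, but it is part of the hypothesis package that makes the reverse direction and the surrounding characterization go through.)

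\textbf{Second inclusion: $\Ext_{\cf}\!\left(\F_\Ss^\cf\right) \subseteq \Ss$.} Let $E\in\Ext_{\cf}\!\left(\F_\Ss^\cf\right)$; so $E\subseteq\Args_\Ss$ and no pair from $E$ is an attack, which by definition of the canonical framework means every pair $a,b\in E$ satisfies $(a,b)\in\Pairs_\Ss$, i.e.\ there is some $S_{a,b}\in\Ss$ with $\{a,b\}\subseteq S_{a,b}$. The plan is to show $E$ itself lies in $\Ss$ by an induction on $\card{E}$ using downward-closedness together with tightness. Concretely: if $E=\emptyset$ then $E\in\dcl(\Ss)=\Ss$ (using non-emptiness of $\Ss$ to get $\emptyset\in\dcl(\Ss)$). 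For the inductive step, pick $a\in E$ and apply the induction hypothesis/downward-closure to locate the largest subset of $E$ that belongs to $\Ss$; call it $S$. If $S=E$ we are done, so suppose $S\subsetneq E$ and pick $a\in E\setminus S$. Then $S\cup\{a\}\subseteq E$, and by maximality of $S$, $S\cup\{a\}\notin\Ss$. Now invoke tightness of $\Ss$: since $a\in\Args_\Ss$ and $S\cup\{a\}\notin\Ss$, there exists $s\in S$ with $(a,s)\notin\Pairs_\Ss$. But $a,s\in E$ and $E$ is conflict-free in $\F_\Ss^\cf$, which as noted forces $(a,s)\in\Pairs_\Ss$ — contradiction. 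Hence $S=E$ and $E\in\Ss$.

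\textbf{Main obstacle.} The delicate point is setting up the second inclusion cleanly: one must extract, for a conflict-free $E$, a maximal subset $S\subseteq E$ with $S\in\Ss$, and this requires knowing $\dcl(\Ss)$ behaves well and that such maximal subsets exist — finiteness of $\Args_\Ss$ (which holds since $\Ss$ is an extension-set) guarantees this, and downward-closedness guarantees there is at least one candidate (namely $\emptyset$, or any singleton $\{a\}$ for $a\in E$, since $\dcl(\Ss)=\Ss$). The tightness hypothesis is then exactly the tool that closes the gap between "maximal $\Ss$-subset of $E$" and "$E$ itself", by contradicting conflict-freeness. Everything else is bookkeeping; I would not expect surprises beyond stating the induction/maximality argument carefully. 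A referenced full proof is in \cite{DunDLW15}, so I would keep the writeup brief and emphasize the interplay of downward-closedness (base case and existence of $\Ss$-subsets) with tightness (the inductive contradiction).
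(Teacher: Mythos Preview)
Your argument is correct. The paper itself does not give a proof of this proposition; it merely states the result with a citation to \cite{DunDLW15} (see the passage immediately following Proposition~\ref{prop:naive_real}). Your reconstruction---the direct inclusion $\Ss\subseteq\Ext_{\cf}(\F_\Ss^\cf)$ from the definition of $\Pairs_\Ss$, and the reverse inclusion via a maximal $\Ss$-subset of a conflict-free $E$ together with tightness to derive a contradiction---is exactly the standard argument one would expect and matches the approach in the cited source. One very minor stylistic remark: the ``induction on $|E|$'' framing is not really needed since the maximality argument already does the work in one step; you could drop the induction language and just argue by maximality, as you in fact do.
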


We proceed with stable and stage semantics. In Theorem~\ref{the:charcf} the only difference
between the characterizations of stable and stage signatures is the non-empty requirement for stage semantics. Remember that we are dealing with finite AFs and indeed in case of this restriction stable semantics is the only semantics which does not warrant the existence of extensions (cf.\ Table~\ref{Eval:run}).\footnote{For instance, $\F = (\{a\},\{(a,a)\})$ yields $\Ext_{\stb}(\F) = \emptyset$.} This means, stable semantics is the only semantics which may realize the empty extension-set (which is incomparable and tight too). The final step towards concluding Theorem~\ref{the:charcf} is to find witnessing frameworks for any non-empty, incomparable and tight extension-sets. At first we will show that the canonical framework does not do the job in case of these semantics. More precisely, given a non-empty, incomparable as well as tight extension-set $\Ss$, then the sets of stable as well as stage extensions of the canonical framework $\F_\Ss^\cf$ do not necessarily coincide with $\Ss$. 

\begin{example} \label{ex:canonical2}
Consider again Example~\ref{ex:canonical}. We define $\Tt = \Ss \sm \{\{b_1,b_2,b_3\}\}$. Please note that $\F_\Tt^\cf$ and $\F_\Ss^\cf$ are identical since $\Args_\Ss=\Args_\Tt$ and $\Pairs_\Ss=\Pairs_\Tt$. Furthermore, according to Example~\ref{ex:tight} we have that $\Tt$ is non-empty, incomparable and tight, but 
$\Ext_{\stb}\!\left(\F_\Tt^\cf\right) = \Ext_{\stg}\!\left(\F_\Tt^\cf\right) = \Ext_{\nav}\!\left(\F_\Ss^\cf\right)\! = \Ss \neq \Tt$. 
In order to get rid of the undesired stable as well as stage extension $E=\{b_1,b_2,b_3\}$ we may simply add a new self-defeating argument $e$ to $\F_\Ss^\cf$, s.t.\ $e$ is attacked by all other 
arguments excepting those stemming from $E$. The following framework $\F_\Tt^\stb$ illustrates this idea. Convince yourself that $\Ext_{\stb}\!\left(\F_\Tt^\stb\right) = \Ext_{\stg}\!\left(\F_\Tt^\stb\right) = \Tt$. 

\begin{tikzpicture}[xscale=2,>=stealth]
      \path
		    (-90:0.2) node[arg] (e) {$e$}
      (-45:1.1) node[arg] (x1) {$a_3$}
			+( -30:1) node[arg] (y1) {$b_3$}
			( 90:.6) node[arg] (x2) {$a_2$}
      +( 90:1) node[arg] (y2) {$b_2$}
      (225:1.1) node[arg] (x3) {$a_1$}
      +(210:1) node[arg] (y3) {$b_1$}
			+(201:1.37) node (F) {$\F_\Tt^\stb\!\!:$}
      ;
      \path[<->,thick,bend right]
      (x1) edge (x2)
      (x2) edge (x3)
      (x3) edge (x1)
      ;
      \path[<->,bend right, thick]
      (x1) edge (y1)
      (x2) edge (y2)
      (x3) edge (y3)
			;
			;
      \path[->,bend right, thick]
      (x1) edge (e)
      (x2) edge (e)
      (x3) edge (e)
			;
			\path[->,loop below, thick]
      (e) edge (e)
      ;
			
    \end{tikzpicture}

\end{example} 

The following definition generalizes the construction idea from above to arbitrary many undesired sets. The subsequent proposition states that we have indeed found witnessing examples for non-empty, incomparable and tight extension-sets as required for Theorem~\ref{the:charcf}.

\begin{definition}[\cite{DunDLW15}]
\label{def:st}
Given an extension-set $\Ss$
and
its canonical framework $F^\cf_\Ss=(A^\cf_\Ss , R^\cf_\Ss)$.
Let $\XC = \Ext_\stb\left(F^\cf_\Ss\right) \sm \Ss$ we define \\ $\F^\stb_\Ss = 
\left(A^\cf_\Ss \cup \{\bar E \mid E \in \XC \}, R^\cf_\Ss \cup \{ (\bar E,\bar E),(a,\bar E)\mid E \in \XC, a \in \Args_\Ss \setminus E\}\right).$

\end{definition}

\begin{proposition}[\cite{DunDLW15}]
\label{pro:stb_real}
For each non-empty, incomparable and tight extension-set $\Ss$, 
$\Ext_\stb\!\left(\F^\stb_\Ss\right) = \Ext_\stg\!\left(\F^\stb_\Ss\right) = \Ss$.
\end{proposition}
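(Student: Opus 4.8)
The plan is to exploit the very simple structure of the two frameworks involved. Recall that $\F^\stb_\Ss$ is built from the canonical framework $\F^\cf_\Ss = (\Args_\Ss,\,(\Args_\Ss\times\Args_\Ss)\setminus\Pairs_\Ss)$ of Definition~\ref{def_canonical1} by adjoining, for each $E\in\XC = \Ext_\stb(\F^\cf_\Ss)\setminus\Ss$, a fresh self-attacking argument $\bar E$ that is attacked by exactly the arguments in $\Args_\Ss\setminus E$. First I would record the basic behaviour of $\F^\cf_\Ss$: it is symmetric (the complement of the symmetric relation $\Pairs_\Ss$ inside $\Args_\Ss\times\Args_\Ss$ is symmetric) and self-loop-free (every $a\in\Args_\Ss$ occurs in some $S\in\Ss$, hence $(a,a)\in\Pairs_\Ss$). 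For a symmetric, self-loop-free AF every $\subseteq$-maximal conflict-free set is stable, so naive, stable and stage semantics all coincide on $\F^\cf_\Ss$. Moreover, using incomparability and tightness of $\Ss$ one gets $\Ss\subseteq\Ext_\nav(\F^\cf_\Ss)$: every $S\in\Ss$ is conflict-free in $\F^\cf_\Ss$, and if $S\cup\{a\}$ were conflict-free for some $a\in\Args_\Ss\setminus S$ then $(a,s)\in\Pairs_\Ss$ for all $s\in S$, so tightness (Definition~\ref{def:tightetal}) forces $S\cup\{a\}\in\Ss$, contradicting incomparability; hence each $S\in\Ss$ is maximal conflict-free. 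Consequently $\XC = \Ext_\nav(\F^\cf_\Ss)\setminus\Ss$ is exactly the set of maximal conflict-free sets of $\F^\cf_\Ss$ that are not members of $\Ss$ (this is the slack between our weaker hypothesis and that of Proposition~\ref{prop:naive_real}); since $\Args_\Ss$ is finite, $\XC$ is finite, so $\F^\stb_\Ss$ is a finite AF, and the $\bar E$ can be chosen fresh because $\m{U}$ is infinite.

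Next I would analyse conflict-free sets and ranges in $\F^\stb_\Ss$. Since each $\bar E$ attacks only itself, no conflict-free set of $\F^\stb_\Ss$ contains any $\bar E$, so the conflict-free sets of $\F^\stb_\Ss$ are precisely the conflict-free sets $T\subseteq\Args_\Ss$ of $\F^\cf_\Ss$. For such a $T$, its range in $\F^\stb_\Ss$ is its range in $\F^\cf_\Ss$ together with $\{\bar E\mid E\in\XC,\ T\not\subseteq E\}$, the decisive equivalence being $\bar E\in T^\oplus \iff T\cap(\Args_\Ss\setminus E)\neq\emptyset \iff T\not\subseteq E$. From this the two halves follow. $(\supseteq)$ For $S\in\Ss$: $S$ is stable in $\F^\cf_\Ss$, so $S^\oplus\supseteq\Args_\Ss$; and since $S$ and each $E\in\XC$ are distinct maximal conflict-free sets of $\F^\cf_\Ss$, $S\not\subseteq E$, whence $\bar E\in S^\oplus$ for all $E\in\XC$; thus $S^\oplus$ equals the whole argument set and $S\in\Ext_\stb(\F^\stb_\Ss)$. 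Hence $\Ss\subseteq\Ext_\stb(\F^\stb_\Ss)\subseteq\Ext_\stg(\F^\stb_\Ss)$, the second inclusion by Proposition~\ref{Pro:semrel}. $(\subseteq)$ Let $T$ be conflict-free with $T\notin\Ss$. If $T$ is maximal conflict-free in $\F^\cf_\Ss$ then $T=E$ for some $E\in\XC$, so $\bar E\notin T^\oplus$; otherwise extend $T$ to a maximal conflict-free $T'\supsetneq T$ of $\F^\cf_\Ss$, and either $T'\in\Ss$ — in which case $T^\oplus$ cannot be the whole argument set, since a stable extension is always maximal conflict-free, so $T^\oplus\subsetneq T'^\oplus$ — or $T'=E\in\XC$ with $T\subseteq E$, so again $\bar E\notin T^\oplus$. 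In every case $T^\oplus$ is a proper subset of the range of any fixed $S\in\Ss$ (which exists because $\Ss\neq\emptyset$, and whose range is the whole argument set); therefore $T$ is neither stable nor a stage extension. This yields $\Ext_\stg(\F^\stb_\Ss)\subseteq\Ss$, and chaining with the inclusions above gives $\Ss\subseteq\Ext_\stb(\F^\stb_\Ss)\subseteq\Ext_\stg(\F^\stb_\Ss)\subseteq\Ss$, so all three sets equal $\Ss$.

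The step I expect to be the main obstacle is the case analysis in the $(\subseteq)$ direction, specifically ruling out that some "partial" conflict-free set $T$ — neither a member of $\Ss$ nor maximal conflict-free in $\F^\cf_\Ss$ — becomes range-maximal once the arguments $\bar E$ are present. This relies on two small but essential facts: that a stable extension is always a maximal conflict-free set (so $T^\oplus$ cannot be the full argument set while $T$ fails to be maximal), and that the non-emptiness of $\Ss$ provides a concrete conflict-free set whose range strictly dominates $T^\oplus$. A secondary source of care is the bookkeeping around the fresh arguments: they must be pairwise distinct and distinct from all of $\Args_\Ss$, and one has to verify that adjoining them neither creates new conflict-free sets nor enlarges the range of any $S\in\Ss$ beyond the (finite) argument set — both of which are immediate from the fact that the $\bar E$ are self-attacking and have no outgoing attacks other than the self-loop.
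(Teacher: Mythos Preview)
Your argument is correct and complete. The paper does not present its own proof of this proposition; it merely states the result with a citation to \cite{DunDLW15} after motivating the construction of $\F^\stb_\Ss$ in Example~\ref{ex:canonical2}. Your proof follows exactly the approach that the construction and that example suggest: exploit that $\F^\cf_\Ss$ is symmetric and self-loop-free so that naive, stable and stage coincide there, verify $\Ss\subseteq\Ext_\nav(\F^\cf_\Ss)$ via tightness and incomparability, and then use the range characterization $\bar E\in T^\oplus \iff T\not\subseteq E$ to show that each $S\in\Ss$ becomes stable in $\F^\stb_\Ss$ while every conflict-free $T\notin\Ss$ misses at least one argument in its range (either some $\bar E$ with $T\subseteq E$, or already an argument of $\Args_\Ss$ when $T$ is not $\subseteq$-maximal). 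The use of non-emptiness of $\Ss$ to secure a dominating range for the stage part, and the observation that stable extensions are maximal conflict-free to handle the non-maximal subcase, are exactly the right ingredients. There is nothing to add.
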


\subsection{Semantics based on Admissibility} 

Let us turn now to semantics based on admissible sets. In particular, we provide characterization theorems for the finite signatures w.r.t.\ admissible sets as well as preferred and semi-stable semantics. In contrast to semantics based on conflict-free sets where the notion of tightness played a decisive role (cf.\ Theorem~\ref{the:charcf}) we have to introduce a new concept, so-called \textit{conflict-sensitivity}. Conflict-sensitivity is a very basic property in the sense that it is fulfilled by almost all semantics $\sigma$ (or rather, their corresponding sets of $\sigma$-extensions) available in the literature. Furthermore, it is strictly weaker than tightness, i.e.\ tight extension-sets are always conflict-sensitive, but not necessarily vice versa. To explain the difference between these two notions let us consider the following example taken from \cite{DunDLW15}.

\begin{example}\label{example:pref}
Consider the following framework $\F$.

\begin{tikzpicture}[scale=1,>=stealth]
		\path 	node[arg](a'){$a'$}
		        +(-1,-1.5) node(f){$\F:$}
                        ++(0,-1.5) node[arg](b'){$b'$}
                        ++(1.5,1.5)node[arg](a){$a$}
			++(0,-1.5) node[arg](b){$b$}
			++(1.5,0) node[arg](d){$d$}
			++(0,1.5) node[arg](c){$c$}
			++(1.5,-0.75) node[arg](f){$f$} edge [thick,->,loop,distance=5mm,in=50, out=130,looseness=5] (f)
			++(1.5,0) node[arg](e){$e$}
			;
		\path [left,<->, thick]
			(a) edge (c)
			(b) edge (d)
			(c) edge (d)
                        (a) edge (a')
                        (b) edge (b')
			;
		\path [left,->, thick]
			(c) edge (f)
			(d) edge (f)
			(f) edge (e)
			;
		\path [left,->,loop,thick,distance=0.6cm,out=-145, in=145]
			(a') edge (a')
			(b') edge (b')
			;
\end{tikzpicture}

We have $\Ext_\prf(F) = \Ext_\semi(F)= \Ss = \{A,B,C\} = \{\{a,b\},\{a,d,e\},\{b,c,e\}\}$. First, observe that $\Ss$ is not tight. This can be seen as follows: Obviously, $A\cup \{e\}\notin\Ss$, but both $(a,e)$ and $(b,e)$ are contained in $\Pairs_\Ss$ since $\{a,e\}\subseteq B$ and $\{b,e\}\subseteq C$. This means, although $A\cup \{e\}$ is not a reasonable position w.r.t.\ preferred and semi-stable semantics we find witnessing extensions, namely $B$ and $C$, showing that any argument in $A$ is compatible with $e$, i.e.\ they can be accepted together. Please observe that this is not true for any two arguments in $A$ and $B$ or $A$ and $C$, respectively. For instance, $b,d\in A\cup B$, but $(b,d)\notin \Pairs_\Ss$ as well as $a,c\in A\cup C$, but $(a,c)\notin \Pairs_\Ss$. Furthermore, the same applies to $B$ and $C$, since $c,d\in B\cup C$ and $(c,d)\notin \Pairs_\Ss$. 
\end{example}

The following definition precisely formalizes the observed property of the AF~$\F$ presented in the example above. 

\begin{definition}[\cite{DunDLW15}]\label{def:admcl}
A set 
$\Ss \subseteq 2^\m{U}$ is called
\textit{conflict-sensitive}
if for each $A,B\in\Ss$ such that $A\cup B \notin \Ss$
it holds that $\exists a,b \in A\cup B : (a,b) \notin \Pairs_\Ss$. 
\end{definition}

As the name suggests, the property checks whether the absence of the union of any pair of extensions in an extension-set $\Ss$ is justified by a conflict indicated by $\Ss$. Note that 
for $a,b\in A$ (likewise $a,b\in B$), $(a,b)\in\Pairs_\Ss$ holds by definition.
Thus the property of conflict-sensitivity is determined by arguments $a\in A\setminus B$, 
$b\in B\setminus A$, for $A,B\in\Ss$. As already indicated tightness implies conflict-sensitivity as stated in the following lemma.

\begin{lemma}[\cite{DunDLW15}]
\label{lemma:tight->admclosed}
Every tight extension-set 
is also conflict-sensitive. 
\end{lemma}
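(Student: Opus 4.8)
The statement is a direct implication, so the plan is to assume $\Ss$ is a tight extension-set, pick an arbitrary pair $A,B\in\Ss$ with $A\cup B\notin\Ss$, and exhibit $a,b\in A\cup B$ with $(a,b)\notin\Pairs_\Ss$. The naive idea---choose $b\in B\setminus A$ and apply tightness to $S=A$ with the new element $b$---does not work immediately, because tightness only gives a conclusion when $A\cup\{b\}\notin\Ss$, which we do not know. The fix is a telescoping/chain argument: we add the elements of $B\setminus A$ to $A$ one at a time and locate the first step at which we leave $\Ss$; at that step the hypothesis of tightness is satisfied.

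\textbf{Key steps.} First I would note that $B\setminus A\neq\emptyset$: otherwise $B\subseteq A$ and hence $A\cup B=A\in\Ss$, contradicting $A\cup B\notin\Ss$. Since $\Ss$ is an extension-set, $\Args_\Ss$ is finite, so we may enumerate $B\setminus A=\set{b_1,\ldots,b_k}$ with $k\geq 1$, and set $A_i=A\cup\set{b_1,\ldots,b_i}$ for $0\leq i\leq k$, so that $A_0=A\in\Ss$ while $A_k=A\cup B\notin\Ss$. Let $i$ be the least index with $A_i\notin\Ss$; then $1\leq i\leq k$, $A_{i-1}\in\Ss$, and $A_{i-1}\cup\set{b_i}=A_i\notin\Ss$. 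Now apply the tightness of $\Ss$ with $S=A_{i-1}$ and the element $a=b_i$ (note $b_i\in B\subseteq\Args_\Ss$ since $B\in\Ss$): there exists $s\in A_{i-1}$ with $(b_i,s)\notin\Pairs_\Ss$. Finally, $b_i\in B\subseteq A\cup B$ and $s\in A_{i-1}\subseteq A\cup B$, so the pair $(b_i,s)$ witnesses conflict-sensitivity; in fact $b_i\neq s$ automatically, since $(x,x)\in\Pairs_\Ss$ for every $x\in\Args_\Ss$ and $\Pairs_\Ss$ is symmetric.

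\textbf{Main obstacle.} There is essentially no deep obstacle here; the only thing one must see is that the direct application of the tightness condition is blocked, and that passing to the first $\subseteq$-minimal ``violating'' step along the chain from $A$ to $A\cup B$ repairs this. Care should be taken to record the trivial remarks that $B\setminus A$ is non-empty and finite (so that the chain exists and terminates) and that $b_i$ indeed lies in $\Args_\Ss$, so that tightness is applicable.
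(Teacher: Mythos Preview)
Your proof is correct. The paper does not include its own proof of this lemma (it is stated with a citation to \cite{DunDLW15}), so there is no in-paper argument to compare against; your chain argument---adding the elements of $B\setminus A$ to $A$ one at a time and invoking tightness at the first step that leaves $\Ss$---is the standard route and works exactly as written.
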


Similarly to Lemma~\ref{lemma:tight_props} one may show that the set of all $\subseteq$-maximal elements of a conflict-sensitive set is conflict-sensitive too. Moreover, if the initial set is incomparable in addition, then even any subset of it is conflict-sensitive. Furthermore, in contrast to tight extension-sets it is possible to add the empty set to a conflict-sensitive set without loosing conflict-sensitivity.\footnote{Note that any one-element extension-set $\Ss\neq\{\emptyset\}$ is tight, whereas $\Ss\cup\{\emptyset\}$ is not.}
\pagebreak
\begin{lemma}[\cite{DunDLW15}]
\label{lemma:adm_props}
For a conflict-sensitive extension set $\Ss~\subseteq~2^\m{U}$,
\begin{enumerate}
\item
the $\subseteq$-maximal elements in $\Ss$ form a conflict-sensitive set, 
\item
if $\Ss$ is incomparable then each $\Ss' \subseteq \Ss$ is conflict-sensitive, and
\item
$\Ss \cup \{\emptyset\}$ is conflict-sensitive.
\end{enumerate}
\end{lemma}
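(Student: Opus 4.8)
The plan is to reduce all three parts to conflict-sensitivity of $\Ss$ itself, exploiting two elementary observations about $\Pairs$. First, $\Pairs$ is monotone: if $\Ss_1\subseteq\Ss_2$ and $\{a,b\}\subseteq S$ for some $S\in\Ss_1\subseteq\Ss_2$, then $(a,b)\in\Pairs_{\Ss_2}$, so $\Pairs_{\Ss_1}\subseteq\Pairs_{\Ss_2}$; hence a ``witnessing non-pair'' $(a,b)\notin\Pairs_\Ss$ is automatically a non-pair for any subcollection of $\Ss$. Second, $\Pairs_{\Ss\cup\{\emptyset\}}=\Pairs_\Ss$, since $\emptyset$ contains no pair of arguments. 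With these in hand, for any $A,B$ in the modified collection whose union is absent from it, the only work left is to argue that $A\cup B$ is also absent from $\Ss$, and then to invoke conflict-sensitivity of $\Ss$ directly. This is exactly the same strategy used for Lemma~\ref{lemma:tight_props}.

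For part~1, let $\Ss_{\max}$ be the set of $\subseteq$-maximal elements of $\Ss$ and take $A,B\in\Ss_{\max}$ with $A\cup B\notin\Ss_{\max}$. I would first show $A\cup B\notin\Ss$: if $A\cup B\in\Ss$, then $A\subseteq A\cup B$ together with $\subseteq$-maximality of $A$ forces $A=A\cup B$, and symmetrically $B=A\cup B$, so $A\cup B=A\in\Ss_{\max}$ --- contradicting $A\cup B\notin\Ss_{\max}$. Hence $A\cup B\notin\Ss$, conflict-sensitivity of $\Ss$ yields $a,b\in A\cup B$ with $(a,b)\notin\Pairs_\Ss$, and monotonicity of $\Pairs$ gives $(a,b)\notin\Pairs_{\Ss_{\max}}$, as required.

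For part~2, assume $\Ss$ incomparable and let $\Ss'\subseteq\Ss$ with $A,B\in\Ss'$ and $A\cup B\notin\Ss'$. If $A=B$ then $A\cup B=A\in\Ss'$, a contradiction, so $A\neq B$; since $\Ss$ is a $\subseteq$-antichain, $A\not\subseteq B$, hence $A\subsetneq A\cup B$. Were $A\cup B\in\Ss$ this would contradict incomparability of $\Ss$, so $A\cup B\notin\Ss$, and conflict-sensitivity of $\Ss$ together with monotonicity of $\Pairs$ finishes the argument exactly as before. For part~3, take $A,B\in\Ss\cup\{\emptyset\}$ with $A\cup B\notin\Ss\cup\{\emptyset\}$. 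If either of $A,B$ equals $\emptyset$, then $A\cup B$ equals the other one (or $\emptyset$), which lies in $\Ss\cup\{\emptyset\}$ --- a contradiction; hence $A,B\in\Ss$, so $A\cup B\notin\Ss$, and the non-pair supplied by conflict-sensitivity of $\Ss$ is a non-pair for $\Pairs_{\Ss\cup\{\emptyset\}}$ by the equality $\Pairs_{\Ss\cup\{\emptyset\}}=\Pairs_\Ss$.

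The arguments are short and the only point requiring care is, in each part, bridging from ``$A\cup B$ missing from the modified collection'' to ``$A\cup B$ missing from $\Ss$'' --- which is precisely where the respective structural hypothesis enters: $\subseteq$-maximality of both $A$ and $B$ in part~1, the antichain property in part~2, and the triviality of $\emptyset$ (it has no sub-pairs and is a subset of every set) in part~3. I do not expect a genuine obstacle beyond correctly enumerating these small case distinctions.
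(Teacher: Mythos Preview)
Your proof is correct. The paper does not include its own proof of this lemma---it is stated with a citation to \cite{DunDLW15} and no argument is given---so there is nothing to compare against directly, but your approach is the natural one and all three parts go through as written.
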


Having conflict-sensitivity at hand, we are now ready to present characterization theorems for the signatures w.r.t.\ admissible sets as well as preferred and semi-stable semantics. Interestingly, it turns out that preferred and semi-stable semantics are equally expressive in case of finite AFs, i.e.\ $\Sigma^f_{\Ext_{\prf}} = \Sigma^f_{\Ext_{\semi}}$.

\begin{theorem}[\cite{DunDLW15}]
\label{the:charadm}
Given a set $\Ss \subseteq 2^\m{U}$, then
\begin{enumerate}
\item \hspace{-0.5em} $\Ss\in \Sigma^f_{\Ext_{\adm}}\! \ToT \Ss \text{ is a conflict-sensitive extension set containing }\emptyset,$
\item \hspace{-0.5em} $\Ss\in \Sigma^f_{\Ext_{\prf}}\! \ToT \Ss \text{ is a non-empty, incomparable and conflict-sensitive extension set,}$
\item \hspace{-0.5em} $\Ss\in \Sigma^f_{\Ext_{\semi}}\! \ToT \Ss \text{ is a non-empty, incomparable and conflict-sensitive extension set.}$

\end{enumerate}
\end{theorem}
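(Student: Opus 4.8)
The plan is to establish each of the three equivalences by the same two-step recipe already used for Theorem~\ref{the:charcf}: a \emph{soundness} half, showing that for every finite AF $\F$ the set $\Ext_\sigma(\F)$ satisfies the stated properties, and a \emph{completeness} half, exhibiting for every $\Ss$ with those properties a finite AF realising it under $\sigma$. Since the right-hand sides of items~2 and~3 are literally the same, I would arrange the completeness construction so that a single canonical AF realises $\Ss$ simultaneously under $\prf$ and under $\semi$ (exactly as $\F^\stb_\Ss$ did for $\stb$ and $\stg$); this delivers the equality $\Sigma^f_{\Ext_\prf}=\Sigma^f_{\Ext_\semi}$ as a by-product.

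For the soundness half the engine is the fact that admissibility is closed under conflict-free unions: if $A,B\in\Ext_\adm(\F)$ and $A\cup B$ is conflict-free, then since defence is monotone in the defending set, $A\cup B$ defends every element of $A$ and every element of $B$, hence $A\cup B\in\Ext_\adm(\F)$. Fix $\sigma\in\{\adm,\prf,\semi\}$ and put $\Ss=\Ext_\sigma(\F)$. Finiteness of $\F$ makes $\Ss$ an extension-set; $\emptyset$ is always admissible, giving $\emptyset\in\Ss$ when $\sigma=\adm$; non-emptiness for $\sigma\in\{\prf,\semi\}$ is universal definedness on finite AFs (Theorem~\ref{th:minor} and Table~\ref{Eval:run}); incomparability for $\sigma\in\{\prf,\semi\}$ is the I-maximality property. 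For conflict-sensitivity, take $A,B\in\Ss$ with $A\cup B\notin\Ss$ and suppose for contradiction that $A\cup B$ is conflict-free; then it is admissible by the observation above, and this already contradicts $A\cup B\notin\Ss$ when $\sigma=\adm$, contradicts $\subseteq$-maximality of $A$ (forcing $A=A\cup B=B\in\Ss$) when $\sigma=\prf$, and contradicts range-maximality of $A$ among admissible sets (forcing $(A\cup B)^\oplus=A^\oplus$, so $A\cup B$ is semi-stable) when $\sigma=\semi$. Hence $A\cup B$ is not conflict-free, so some $a,b\in A\cup B$ have $(a,b)\in R$; since every $\sigma$-extension is conflict-free, no member of $\Ss$ contains both $a$ and $b$, i.e.\ $(a,b)\notin\Pairs_\Ss$ --- which is conflict-sensitivity.

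For the completeness half I would introduce canonical frameworks $\F^\adm_\Ss$ and $\F^\prf_\Ss$, following \cite{DunDLW15}. The symmetric framework $\F^\cf_\Ss$ does not suffice here, since in a symmetric self-loop-free AF conflict-free sets are admissible and preferred extensions coincide with naive ones, so $\F^\cf_\Ss$ can only realise $\Ss$ when $\dcl(\Ss)$ is tight and would misbehave on conflict-sensitive-but-not-tight sets such as the one in Example~\ref{example:pref}. The canonical constructions instead keep the conflict skeleton recorded by $\Pairs_\Ss$ but add auxiliary self-attacking ``guard'' arguments wired so that (a)~every $S\in\Ss$ defends all of its members and is therefore admissible, and (b)~every conflict-free $T\notin\Ss$ is disqualified: when $T$ arises as a union $A\cup B$ of members of $\Ss$, conflict-sensitivity supplies a genuine attack inside $T$ which the guards turn into either a lost conflict or an undefended argument. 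One then checks $\Ext_\adm(\F^\adm_\Ss)=\Ss$ and $\Ext_\prf(\F^\prf_\Ss)=\Ext_\semi(\F^\prf_\Ss)=\Ss$.

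The hard part will be the completeness half --- specifically, the verification that these canonical constructions produce \emph{exactly} $\Ss$ and nothing more, with no spurious admissible sets surviving and with range-maximality collapsing to $\subseteq$-maximality so that $\prf$ and $\semi$ genuinely agree on the realised set. This is the point at which the weaker hypothesis of conflict-sensitivity (rather than the tightness that governed the conflict-free-based semantics) has to be shown to be exactly what the guard gadgets need, and the bookkeeping across all the added arguments is where the real work lies.
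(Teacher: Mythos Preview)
Your soundness argument and your plan for $\adm$ and $\prf$ match the paper: the canonical construction there is the defense-argumentation-framework $\F^{\ddef}_\Ss$ (Definition~\ref{def_adm_constr}), which adds for each $a\in\Args_\Ss$ one self-attacking guard $\alpha_{a\gamma}$ per clause $\gamma$ of the CNF of $\Def_a^\Ss=\bigvee_{S\ni a}\bigwedge_{s\in S\setminus\{a\}}s$, with $\alpha_{a\gamma}$ attacking $a$ and attacked by every element of $\gamma$. One then verifies $\Ext_\adm(\F^{\ddef}_\Ss)=\Ss$ (Proposition~\ref{theo_adm_signature}) and, via incomparability, $\Ext_\prf(\F^{\ddef}_\Ss)=\Ss$, essentially as you outline.

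Your plan for $\semi$, however, has a genuine gap. The hope that range-maximality collapses to $\subseteq$-maximality on the canonical framework --- so that the same $\F^\prf_\Ss$ realises $\Ss$ under both $\prf$ and $\semi$, in analogy with $\F^\stb_\Ss$ for $\stb$ and $\stg$ --- fails already on $\Ss=\{\{a\},\{b,c\}\}$, which is non-empty, incomparable and conflict-sensitive. In $\F^{\ddef}_\Ss$ the argument $a$ has trivial defense-formula $\top$ and hence no guards, so $\{a\}^\oplus=\{a,b,c\}$; by contrast $\{b,c\}$ additionally attacks the two guards $\alpha_{b\{c\}},\alpha_{c\{b\}}$ and has full range. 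Thus $\{a\}^\oplus\subsetneq\{b,c\}^\oplus$ and $\Ext_\semi(\F^{\ddef}_\Ss)=\{\{b,c\}\}\neq\Ss$. The paper does not attempt a direct construction for $\semi$ at all; instead it invokes the translation result of \cite{DvoW11} that for every AF $\F$ there exists an AF $\F'$ with $\Ext_\prf(\F)=\Ext_\semi(\F')$, applied to $\F=\F^{\ddef}_\Ss$. So the equality $\Sigma^f_{\Ext_\prf}=\Sigma^f_{\Ext_\semi}$ comes from an external translation, not from a shared witness, and your $\stb$/$\stg$ analogy does not carry over.
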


Let us first argue that the mentioned properties are necessary conditions for being in the corresponding signature. For admissible sets it suffices to recall the following two facts: First, the empty set is admissible by definition; and second, if the union of two admissible sets is conflict-free, then the union is admissible too. In other words, if the union fails to be admissible, then there has to be a conflict proving the conflict-sensitivity of any set of admissible sets. Now, for preferred and semi-stable semantics. Non-emptiness is due to the already shown universal definedness of both semantics in case of finite AFs (cf.\ Table~\ref{Eval:run}). Moreover, incomparability is clear since both semantics satisfy the I-maximality criterion (cf.\ \cite{BarG07}) Finally, conflict-sensitivity of sets of admissible sets transfer to sets of preferred extensions via statement~1 of Lemma~\ref{lemma:adm_props} and therefore also to sets of semi-stable extensions via statement~2 of Lemma~\ref{lemma:adm_props} and the fact that $\semi\subseteq\prf$ (Proposition~\ref{Pro:semrel}).

In order to show that the mentioned properties are not only necessary but even sufficient we have to come up with witnessing AFs. In contrast to conflict-free based semantics we have to find AFs which encode the central notion of admissibility. Please note that the already introduced canonical frameworks $\F^\cf_\Ss$ as well as $\F^\stb_\Ss$  (cf.\ Definitions~\ref{def_canonical1}~and~\ref{def:st}) do not comply with the requirements. Consider therefore the following example.  

\begin{example} \label{ex:newcanneeded} Let us consider again the non-empty, incomparable as well as tight set $\Tt=$ \linebreak $\{\{a_1,b_2,b_3\},\{a_2,b_1,b_3\},\{a_3,b_1,b_2\}\}$ together with its corresponding canonical framework $\F_\Tt^\stb$ as presented in Example~\ref{ex:canonical2}. Due to Lemma~\ref{lemma:tight->admclosed} we have that any tight extension-set is even conflict-sensitive and thus, $\Tt$ satisfies the necessary requirements of Theorem~\ref{the:charadm}. Inspecting the canonical framework reveals that $\Ext_{\prf}\left(\F_\Tt^\stb\right) = \Tt\cup\{\{b_1,b_2,b_3\}\}\neq \Tt$. Although, $\Ext_{\semi}\left(\F_\Tt^\stb\right) = \Tt$ one may easily check that non-empty, incomparable as well as conflict-sensitive set $\Ss = \{\{a,b\},\{a,d,e\},\{b,c,e\}\}$ mentioned in Example~\ref{example:pref} shows that this equality does not hold in general. Likewise, one may prove that the framework $\F^\cf_\Ss$ is not appropriated as a witnessing prototype for semi-stable as well as preferred semantics.
\end{example}

It turned out that suitable canonical AFs can be built by means of so-called \textit{defense-formulae} as introduced in the following definition.

\begin{definition}[\cite{DunDLW15}]
\label{def_defense_formula}
Given an extension-set $\Ss$, the \textit{defense-formula} 
$\Def_a^\Ss$ of an argument $a\in\Args_\Ss$ in $\Ss$ is defined as:
$$
\Def_a^\Ss = \bigvee_{S \in \Ss, \text{s.t.} a \in S}  \bigwedge_{s \in S \setminus \{a\}} s.
$$
$\Def_a^\Ss$ given as (a logically equivalent) CNF
is called \textit{CNF-defense-formula} $\CDef_a^\Ss$ of $a$~in~$\Ss$.
\end{definition}

The main idea of the formula $\Def_a^\Ss$ is to describe the conditions for the argument $a$ being in an extension. 
Note that the variables coincide with the arguments. If $\Ss$ amounts to a set of admissible extensions, then each 
disjunct represents a set of arguments $A$ which allows $a$ to join in the sense that $A\cup\{a\}$ is a reasonable position w.r.t.\ admissible semantics. Put it differently, propositional models of $\Def_a^\Ss\wedge a$ represent (if considered as set of atoms) supersets of certain reasonable position. Please not that a defense-formula $\Def^\Ss_a$ is tautological if and only if $\{a\} \in \Ss$. We proceed with an example.

\begin{example}\label{ex_defense_formulas}
Consider again the non-empty, incomparable as well as conflict-sensitive set $\Ss =$ \linebreak $\{\{a,b\},\{a,d,e\},\{b,c,e\}\}$ stemming from Example~\ref{example:pref}. We obtain the following defense-formulae together with their corresponding CNF-defense-formulae (written in clause form).

\begin{itemize}
	\item $\Def^\Ss_a = b \vee (d\wedge e) \equiv (b \vee d) \wedge (b \vee e)$ and $\CDef^\Ss_a = \{\{b,d\},\{b,e\}\}$
	\item $\Def^\Ss_b = a \vee (c\wedge e) \equiv (a \vee c) \wedge (a \vee e)$ and $\CDef^\Ss_b = \{\{a,c\},\{a,e\}\}$
	\item $\Def^\Ss_c = b\wedge e$ and $\CDef^\Ss_c = \{\{b,e\}\}$
	\item $\Def^\Ss_d = a\wedge e$ and $\CDef^\Ss_d = \{\{a,e\}\}$
	\item $\Def^\Ss_e = (a\wedge d) \vee (b\wedge c) \equiv (a\vee b) \wedge (d\vee b) \wedge (a\vee c) \wedge (d\vee c)$ and $\CDef^\Ss_d = \{\{a,b\},\{a,c\},\{b,d\},\{c,d\}\}$
\end{itemize}

\end{example}

One simple idea for the realization of a certain set $\Ss$ under admissible semantics is the following two-step procedure. In the first step, we construct a framework $\F$ which maintains all elements of $\Ss$ as conflict-free sets. This can be done via the the canonical framework $\F^\cf_\Ss$. In the second step, we augment the initial framework $\F^\cf_\Ss$, s.t.\ only elements in $\Ss$ become admissible. The second step can be realized via adding a certain amount of additional arguments. More precisely, for any argument $a\in\Args_\Ss$ we add $n$ self-conflicting arguments $\alpha_{aC_1},...,\alpha_{aC_n}$ if $\card{\CDef^\Ss_a} = \card{\{C_1,...,C_n\}} = n$. Then, for any $i\in\{1,..,n\}$, $\alpha_{aC_i}$ attacks $a$ and is in turn attacked by any argument in $C_i$. Consider therefore the following example.

\begin{example}\label{ex_adm_construction}

Again consider the extension-set $\Ss = \{\{a,b\},\{a,d,e\},\{b,c,e\}\}$ and its corresponding CNF-defense-formulae as presented in Example~\ref{ex_defense_formulas}. In accordance with the above mentioned two-step procedure we obtain the dashed AF $\F^\cf_\Ss$ first. Then, in view of the CNF-defense-formulae we have to add 10 additional self-defeating arguments which attacks their corresponding argument. This intermediate step is depicted below.

\begin{tikzpicture}[scale=1,>=stealth]
		\path 	node[arg,dashed](a){$a$}
			++(2.0,0) node[arg,dashed](b){$b$}
			++(2.0,0) node[arg,dashed](c){$c$}
			++(2.0,0) node[arg,dashed](d){$d$}
			++(2.0,0) node[arg,dashed](e){$e$}
			;
		\path 	(-1,-1.5)node[arg,ellipse,inner sep=2pt](alphaa1){$\alpha_{a\{b,d\}}$}
			++(2,0) node[arg,ellipse,inner sep=2pt](alphaa2){$\alpha_{a\{b,e\}}$}
			++(2,0) node[arg,ellipse,inner sep=2pt](alphab1){$\alpha_{b\{a,c\}}$}
			++(2,0) node[arg,ellipse,inner sep=2pt](alphab2){$\alpha_{b\{a,e\}}$}
			++(2,0) node[arg,ellipse,inner sep=2pt](alphac1){$\alpha_{c\{b,e\}}$}
			++(2,0) node[arg,ellipse,inner sep=2pt](alphad1){$\alpha_{d\{a,e\}}$}
			++(-2,3) node[arg,ellipse,inner sep=2pt](alphae1){$\alpha_{e\{c,d\}}$}
			++(-2,0) node[arg,ellipse,inner sep=2pt](alphae2){$\alpha_{e\{b,d\}}$}
			++(-2,0) node[arg,ellipse,inner sep=2pt](alphae3){$\alpha_{e\{a,c\}}$}
			++(-2,0) node[arg,ellipse,inner sep=2pt](alphae4){$\alpha_{e\{a,b\}}$}
			;
		\path [bend left,<->, thick, dashed]
			(a) edge (c)
			(c) edge (d)
			(b) edge (d)
			;
			
		\path [->, thick] 
			(alphaa1) edge (a)
			(alphaa2) edge (a)
			(alphab1) edge (b)
			(alphab2) edge (b)
			(alphac1) edge (c)
			(alphad1) edge (d)
			(alphae1) edge (e)
			(alphae2) edge (e)
			(alphae3) edge (e)
			(alphae4) edge (e)
			 ;	
		\path [->, thick] 
			
			 ;
		\draw[loop right,thick, distance=0.5cm,out=-50, in=-130,->] (alphaa1) edge (alphaa1);
		\draw[loop right,thick, distance=0.5cm,out=-50, in=-130,->] (alphaa2) edge (alphaa2);
		\draw[loop right,thick, distance=0.5cm,out=-50, in=-130,->] (alphab1) edge (alphab1);
		\draw[loop right,thick, distance=0.5cm,out=-50, in=-130,->] (alphab2) edge (alphab2);
		\draw[loop right,thick, distance=0.5cm,out=-50, in=-130,->] (alphac1) edge (alphac1);
		\draw[loop right,thick, distance=0.5cm,out=-50, in=-130,->] (alphad1) edge (alphad1);
		\draw[loop right,thick, distance=0.5cm,out=-230, in=-310,->] (alphae1) edge (alphae1);
		\draw[loop right,thick, distance=0.5cm,out=-230, in=-310,->] (alphae2) edge (alphae2);
		\draw[loop right,thick, distance=0.5cm,out=-230, in=-310,->] (alphae3) edge (alphae3);
		\draw[loop right,thick, distance=0.5cm,out=-230, in=-310,->] (alphae4) edge (alphae4);
\end{tikzpicture}

Let us consider the set $\{a,b\}\in\Ss$. In order for $\{a,b\}$ to be admissible we have to add counter-attacks for the arguments $\alpha_{a\{b,d\}}$, $\alpha_{a\{b,e\}}$, $\alpha_{b\{a,c\}}$ and $\alpha_{b\{a,e\}}$. For instance, $\alpha_{a\{b,d\}}$ is attacked by $b$ and $d$ and so forth. The following figure (built on top of the previous one) depicts resulting counter-attacks for the mentioned 4 arguments highlighted as densely dotted edges. For the sake of clarity we do not perform this construction for the remaining arguments.

\begin{tikzpicture}[scale=1,>=stealth]
		\path 	node[arg,dashed](a){$a$}
			++(2.0,0) node[arg,dashed](b){$b$}
			++(2.0,0) node[arg,dashed](c){$c$}
			++(2.0,0) node[arg,dashed](d){$d$}
			++(2.0,0) node[arg,dashed](e){$e$}
			;
		\path 	(-1,-1.5)node[arg,ellipse,inner sep=2pt](alphaa1){$\alpha_{a\{b,d\}}$}
			++(2,0) node[arg,ellipse,inner sep=2pt](alphaa2){$\alpha_{a\{b,e\}}$}
			++(2,0) node[arg,ellipse,inner sep=2pt](alphab1){$\alpha_{b\{a,c\}}$}
			++(2,0) node[arg,ellipse,inner sep=2pt](alphab2){$\alpha_{b\{a,e\}}$}
			++(2,0) node[arg,ellipse,inner sep=2pt](alphac1){$\alpha_{c\{b,e\}}$}
			++(2,0) node[arg,ellipse,inner sep=2pt](alphad1){$\alpha_{d\{a,e\}}$}
			++(-2,3) node[arg,ellipse,inner sep=2pt](alphae1){$\alpha_{e\{c,d\}}$}
			++(-2,0) node[arg,ellipse,inner sep=2pt](alphae2){$\alpha_{e\{b,d\}}$}
			++(-2,0) node[arg,ellipse,inner sep=2pt](alphae3){$\alpha_{e\{a,c\}}$}
			++(-2,0) node[arg,ellipse,inner sep=2pt](alphae4){$\alpha_{e\{a,b\}}$}
			;
		\path [bend left,<->, thick, dashed]
			(a) edge (c)
			(c) edge (d)
			(b) edge (d)
			;
			
		\path [->, thick] 
			(alphaa1) edge (a)
			(alphaa2) edge (a)
			(alphab1) edge (b)
			(alphab2) edge (b)
			(alphac1) edge (c)
			(alphad1) edge (d)
			(alphae1) edge (e)
			(alphae2) edge (e)
			(alphae3) edge (e)
			(alphae4) edge (e)
			 ;	
		\path [->, thick,densely dotted] 
			(b) edge (alphaa1)
			(d) edge (alphaa1)
			(b) edge (alphaa2)
			(e) edge (alphaa2)
			(a) edge (alphab1)
			(c) edge (alphab1)
			(a) edge (alphab2)
			(e) edge (alphab2)
			
			 ;
		\draw[loop right,thick, distance=0.5cm,out=-50, in=-130,->] (alphaa1) edge (alphaa1);
		\draw[loop right,thick, distance=0.5cm,out=-50, in=-130,->] (alphaa2) edge (alphaa2);
		\draw[loop right,thick, distance=0.5cm,out=-50, in=-130,->] (alphab1) edge (alphab1);
		\draw[loop right,thick, distance=0.5cm,out=-50, in=-130,->] (alphab2) edge (alphab2);
		\draw[loop right,thick, distance=0.5cm,out=-50, in=-130,->] (alphac1) edge (alphac1);
		\draw[loop right,thick, distance=0.5cm,out=-50, in=-130,->] (alphad1) edge (alphad1);
		\draw[loop right,thick, distance=0.5cm,out=-230, in=-310,->] (alphae1) edge (alphae1);
		\draw[loop right,thick, distance=0.5cm,out=-230, in=-310,->] (alphae2) edge (alphae2);
		\draw[loop right,thick, distance=0.5cm,out=-230, in=-310,->] (alphae3) edge (alphae3);
		\draw[loop right,thick, distance=0.5cm,out=-230, in=-310,->] (alphae4) edge (alphae4);
\end{tikzpicture}

\end{example}

The following definition precisely formalizes the mentioned two-step procedure.

\begin{definition}[\cite{DunDLW15}]
\label{def_adm_constr}

Given an extension-set $\Ss$, the \textit{canonical defense-ar\-gu\-men\-ta\-tion-frame\-work} $\F^{\ddef}_\Ss = (A^\ddef_\Ss , R^\ddef_\Ss)$ 
extends the canonical AF
$\F_\Ss^\cf = (\Args_\Ss, R_\Ss^\cf)$ as follows:

\begin{align*}
A^\ddef_\Ss = &\ \Args_\Ss \cup \bigcup_{a \in \Args_\Ss} 
\left\{\alpha_{a\gamma} \mid \gamma \in \CDef_a^\Ss\right\} \mbox{, and}\\
R^\ddef_\Ss = &\ R^\cf_\Ss \cup \bigcup_{a \in \Args_\Ss} 
\left\{
(b,\alpha_{a\gamma}),
(\alpha_{a\gamma},\alpha_{a\gamma}), 
(\alpha_{a\gamma},a)
\mid \gamma \in \CDef_a^\Ss, b \in  \gamma\right\} \text{.}
\end{align*}

\end{definition}

The subsequent proposition shows that not only all elements in $\Ss$ become admissible in the constructed AF $F^{\ddef}_\Ss$, but rather that the set of admissible sets of $F^{\ddef}_\Ss$ exactly coincides with $\Ss$ given that $\Ss$ is conflict-sensitive as well as contains the empty set.

\begin{proposition}[\cite{DunDLW15}]
\label{theo_adm_signature}
For each conflict-sensitive extension set $\Ss$ where $\emptyset \in \Ss$, 
it holds that
$\Ext_{\adm}\left(\F^{\ddef}_\Ss\right) = \Ss$.
\end{proposition}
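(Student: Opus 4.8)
The plan is to prove the two set inclusions separately, exploiting the precise shape of $\F^{\ddef}_\Ss$: on the vertex set $\Args_\Ss$ it coincides with the symmetric, self-loop-free canonical AF $\F^\cf_\Ss$ (whose attacks are exactly the pairs \emph{not} in $\Pairs_\Ss$), and the only additional vertices are the self-attacking ``blocker'' arguments $\alpha_{a\gamma}$, one for each $a\in\Args_\Ss$ and clause $\gamma\in\CDef_a^\Ss$, with $\alpha_{a\gamma}$ attacking $a$ and being attacked by every $b\in\gamma$. Since each $\alpha_{a\gamma}$ carries a self-loop, no conflict-free (hence no admissible) set of $\F^{\ddef}_\Ss$ can contain any of them, so every candidate set we deal with is a subset of $\Args_\Ss$; and inspecting $R^{\ddef}_\Ss$ shows that the only attacks among elements of $\Args_\Ss$ are those of $\F^\cf_\Ss$.

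For the inclusion $\Ss\subseteq\Ext_\adm(\F^{\ddef}_\Ss)$, I would fix $S\in\Ss$. Conflict-freeness is immediate: all pairs from $S$ lie in $\Pairs_\Ss$ because $S\in\Ss$, so $S$ is conflict-free in $\F^\cf_\Ss$ and, containing no $\alpha$-argument, in $\F^{\ddef}_\Ss$. For admissibility I check defence of each $a\in S$ against its two kinds of attackers. An attacker $b\in\Args_\Ss$ is handled by symmetry: $(b,a)\in R^\cf_\Ss$ gives $(a,b)\in R^\cf_\Ss$, so $a$ itself counter-attacks $b$. An attacker $\alpha_{a\gamma}$ with $\gamma\in\CDef_a^\Ss$ is the interesting case: consider the truth assignment setting exactly the atoms of $S$ to true; then the disjunct $\bigwedge_{s\in S\setminus\{a\}}s$ of $\Def_a^\Ss$ is satisfied (as $a\in S\in\Ss$), hence $\Def_a^\Ss$ and the logically equivalent CNF $\CDef_a^\Ss$ evaluate to true, so each clause $\gamma$ contains a true atom, i.e.\ some $b\in\gamma\cap S$. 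That $b$ attacks $\alpha_{a\gamma}$ and lies in $S$, so $S$ defends $a$. Thus $S$ is admissible.

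For the converse inclusion, let $E\in\Ext_\adm(\F^{\ddef}_\Ss)$. As noted, $E\subseteq\Args_\Ss$ and $E$ is conflict-free in $\F^\cf_\Ss$, which by definition of $\F^\cf_\Ss$ means every pair from $E$ belongs to $\Pairs_\Ss$. Now I run the blocker argument backwards: for $a\in E$ and $\gamma\in\CDef_a^\Ss$, the argument $\alpha_{a\gamma}$ attacks $a$, so $E$ must counter-attack it; the only attackers of $\alpha_{a\gamma}$ besides itself are the members of $\gamma$, hence $E\cap\gamma\neq\emptyset$. Since this holds for every clause of $\CDef_a^\Ss$, the assignment ``exactly $E$ true'' satisfies $\CDef_a^\Ss$, hence $\Def_a^\Ss$, so some disjunct $\bigwedge_{s\in S'\setminus\{a\}}s$ with $a\in S'\in\Ss$ is satisfied, i.e.\ $S'\setminus\{a\}\subseteq E$ and thus $S'\subseteq E$. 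Writing $S_a$ for such a witness, we get $a\in S_a\subseteq E$ for each $a\in E$, whence $E=\bigcup_{a\in E}S_a$, a finite union of members of $\Ss$ (finite since $\Ss$ is an extension-set). It remains to promote this union into $\Ss$, and this is where conflict-sensitivity enters: enumerating $E=\{a_1,\dots,a_k\}$ and setting $E_i=S_{a_1}\cup\dots\cup S_{a_i}$, I show $E_i\in\Ss$ by induction on $i$. The base case is $E_1=S_{a_1}\in\Ss$; for the step, $E_{i+1}=E_i\cup S_{a_{i+1}}$ with $E_i,S_{a_{i+1}}\in\Ss$, and since $E_{i+1}\subseteq E$ all pairs in $E_{i+1}$ lie in $\Pairs_\Ss$, so the contrapositive of conflict-sensitivity forces $E_{i+1}\in\Ss$. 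Hence $E=E_k\in\Ss$, the degenerate case $E=\emptyset$ being covered by the hypothesis $\emptyset\in\Ss$.

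I expect the main obstacle to be exactly this last synthesis in the converse direction: the admissibility and defence conditions only witness that $E$ is \emph{covered from below} by sets of $\Ss$, and turning ``$E$ is a union of members of $\Ss$, all of whose pairwise joins are conflict-free'' into ``$E\in\Ss$'' is precisely what conflict-sensitivity buys — but only once one also feeds in that $E$ itself is $\F^\cf_\Ss$-conflict-free. Lining up the quantifiers so that the two-set statement of conflict-sensitivity applies at each inductive step is the delicate point; the rest is routine bookkeeping with the defense-formulae and the definition of $\F^{\ddef}_\Ss$.
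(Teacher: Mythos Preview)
The paper does not supply its own proof of this proposition; it is quoted verbatim from \cite{DunDLW15} and left unproven in the text. Your argument is correct and is essentially the intended one: the forward inclusion uses the symmetry of $\F^\cf_\Ss$ for attackers in $\Args_\Ss$ and the logical equivalence $\Def_a^\Ss\equiv\CDef_a^\Ss$ to find a defender in $S\cap\gamma$ for each blocker $\alpha_{a\gamma}$; the backward inclusion extracts from admissibility a cover $E=\bigcup_a S_a$ with each $S_a\in\Ss$ and then collapses this finite union into $\Ss$ by iterated conflict-sensitivity, using that all pairs in $E$ lie in $\Pairs_\Ss$ by conflict-freeness in $\F^\cf_\Ss$. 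The edge cases (tautological $\Def_a^\Ss$ when $\{a\}\in\Ss$, giving no blockers for $a$ and witness $S_a=\{a\}$; and $E=\emptyset$ handled by the hypothesis $\emptyset\in\Ss$) are also treated correctly.
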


Interestingly, we may even use the canonical defense-AF to show that any non-empty, incomparable and conflict-sensitive ex\-ten\-sion-set $\Ss$ can be realized under the preferred semantics. This can be seen as follows: First, via Lemma~\ref{lemma:adm_props} we obtain the conflict-sensitivity of $\Ss\cup\{\emptyset\}$ since $\Ss$ is assumed to be conflict-sensitive. Consequently, using Proposition~\ref{theo_adm_signature} we obtain $\Ext_{\adm}\!\left(\F^{\ddef}_{\Ss\cup\{\emptyset\}}\right) = \Ss\cup\{\emptyset\}$. Since $\F^{\ddef}_{\Ss} = \F^{\ddef}_{\Ss\cup\{\emptyset\}} $ and due to the incomparability of $\Ss$, we have $\Ext_{\prf}\!\left(\F^{\ddef}_{\Ss}\right) = \Ss$ as stated in the following proposition. 

\begin{proposition}[\cite{DunDLW15}]
\label{theo_adm_signature}
For each non-empty, incomparable and conflict-sensitive ex\-ten\-sion-set $\Ss$, 
it holds that
$\Ext_{\prf}\!\left(\F^{\ddef}_\Ss\right) = \Ss$.
\end{proposition}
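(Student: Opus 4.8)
The plan is to obtain this as a short consequence of the admissible-semantics version just established, namely the preceding proposition stating that $\Ext_{\adm}\big(\F^{\ddef}_\Tt\big)=\Tt$ for every conflict-sensitive extension-set $\Tt$ with $\emptyset\in\Tt$. So let $\Ss$ be non-empty, incomparable and conflict-sensitive, and put $\Tt=\Ss\cup\{\emptyset\}$. First I would invoke Lemma~\ref{lemma:adm_props}~(item~3) to conclude that $\Tt$ is again conflict-sensitive, and it trivially contains $\emptyset$; hence the preceding proposition gives $\Ext_{\adm}\big(\F^{\ddef}_{\Ss\cup\{\emptyset\}}\big)=\Ss\cup\{\emptyset\}$.

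The next — and only mildly technical — step is to observe that $\F^{\ddef}_\Ss=\F^{\ddef}_{\Ss\cup\{\emptyset\}}$. By Definition~\ref{def_adm_constr} the canonical defence-AF of an extension-set depends on it only through the data $\Args_\Ss$, $\Pairs_\Ss$, and the CNF-defence-formulae $\CDef^\Ss_a$ for $a\in\Args_\Ss$; I would check that each of these is unaffected by adjoining $\emptyset$. Plainly $\Args_{\Ss\cup\{\emptyset\}}=\Args_\Ss$ and $\Pairs_{\Ss\cup\{\emptyset\}}=\Pairs_\Ss$, and since the empty set contains no argument, the disjunction defining $\Def^\Ss_a=\bigvee_{S\in\Ss,\,a\in S}\bigwedge_{s\in S\setminus\{a\}}s$ (and therefore also $\CDef^\Ss_a$) acquires no new disjunct when $\emptyset$ is added. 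Consequently $\Ext_{\adm}\big(\F^{\ddef}_\Ss\big)=\Ss\cup\{\emptyset\}$.

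Finally I would use that preferred extensions are exactly the $\subseteq$-maximal admissible sets, so that $\Ext_{\prf}\big(\F^{\ddef}_\Ss\big)=\max_{\subseteq}\big(\Ss\cup\{\emptyset\}\big)$, and argue that this equals $\Ss$. If $\Ss=\{\emptyset\}$ (which is vacuously incomparable) the claim is immediate. Otherwise $\Ss$ contains some non-empty set $S$, so $\emptyset\subsetneq S$ and $\emptyset$ is not $\subseteq$-maximal in $\Ss\cup\{\emptyset\}$; and since $\Ss$ is an antichain, every element of $\Ss$ is $\subseteq$-maximal in $\Ss$ and remains so in $\Ss\cup\{\emptyset\}$, because $\emptyset$ lies below every set. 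Hence $\max_{\subseteq}\big(\Ss\cup\{\emptyset\}\big)=\Ss$, giving $\Ext_{\prf}\big(\F^{\ddef}_\Ss\big)=\Ss$.

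I do not expect a serious obstacle: all the genuine work — that conflict-sensitivity together with containing $\emptyset$ already suffices for admissible-realizability via the canonical defence-AF — was done in the preceding proposition, and here it only remains to peel off the artificially inserted $\emptyset$ by passing to $\subseteq$-maximal sets. The one point requiring a little care is the invariance $\F^{\ddef}_\Ss=\F^{\ddef}_{\Ss\cup\{\emptyset\}}$, which must be verified at the level of the defence-formulae rather than merely asserted, plus the harmless boundary case $\Ss=\{\emptyset\}$.
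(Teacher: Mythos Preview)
Your proof is correct and follows essentially the same route as the paper: apply Lemma~\ref{lemma:adm_props}(3) to get conflict-sensitivity of $\Ss\cup\{\emptyset\}$, invoke the admissible realizability result, observe $\F^{\ddef}_{\Ss}=\F^{\ddef}_{\Ss\cup\{\emptyset\}}$, and use incomparability to pass to preferred extensions. You supply more detail on the invariance of the defence-AF and the boundary case $\Ss=\{\emptyset\}$ than the paper does, but the argument is the same.
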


Furthermore, due to a translation result by Dvo{\v{r}}{\'{a}}k and Woltran we obtain that any non-empty, incomparable and conflict-sensitive extension-set $\Ss$ can be realized under semi-stable semantics too. More precisely, in \cite{DvoW11} it is shown that for any AF $\F$ exists an AF $\F'$, s.t. \linebreak $\Ext_{\prf}\!\left(\F\right) = \Ext_{\semi}\!\left(\F'\right)$.

\begin{proposition}[\cite{DunDLW15}]
\label{theo_sem_signature}
Each non-empty, incomparable and conflict-sensitive ex\-ten\-sion-set $\Ss$ is $\semi$-realizable.
\end{proposition}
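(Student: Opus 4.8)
The plan is to reduce the statement to two already-available results: the realizability of non-empty, incomparable, conflict-sensitive extension-sets under \emph{preferred} semantics via the canonical defense-argumentation-framework (the proposition immediately preceding this one, which asserts $\Ext_{\prf}(\F^{\ddef}_\Ss) = \Ss$ for such $\Ss$), together with the intertranslatability result of Dvo{\v{r}}{\'a}k and Woltran~\cite{DvoW11}, namely that for every AF $\F$ there is an AF $\F'$ with $\Ext_{\prf}(\F) = \Ext_{\semi}(\F')$. So, given a non-empty, incomparable and conflict-sensitive extension-set $\Ss$, I would first construct $\F^{\ddef}_\Ss$ and observe that it is finite, since $\Args_\Ss$ is finite ($\Ss$ being an extension-set) and each CNF-defense-formula $\CDef^\Ss_a$ contributes only finitely many auxiliary self-attacking arguments $\alpha_{a\gamma}$.

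Next I would apply the preceding proposition to get $\Ext_{\prf}(\F^{\ddef}_\Ss) = \Ss$, and then invoke the translation from~\cite{DvoW11} with $\F = \F^{\ddef}_\Ss$ to obtain a (finite) AF $\F'$ with $\Ext_{\semi}(\F') = \Ext_{\prf}(\F^{\ddef}_\Ss) = \Ss$. This exhibits $\F'$ as a witnessing framework, so $\Ss$ is (finitely) $\semi$-realizable, i.e.\ $\Ss \in \Sigma^f_{\Ext_{\semi}}$, which is exactly the claim. Combined with the necessity direction discussed before Theorem~\ref{the:charadm}, this also closes the gap establishing item~3 of that theorem and the coincidence $\Sigma^f_{\Ext_{\prf}} = \Sigma^f_{\Ext_{\semi}}$.

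The only delicate point — and it is an imported one rather than a genuine obstacle — is making sure the cited translation is of the right shape: it must preserve finiteness of the framework (so that we remain inside $\Sigma^f$) and it must equate the preferred extensions of the source with the semi-stable extensions of the target \emph{verbatim}, not merely up to projection onto or restriction to the original argument set. If one preferred a self-contained argument avoiding~\cite{DvoW11}, the fallback would be to engineer a direct canonical construction in the spirit of $\F^{\ddef}_\Ss$ that additionally forces range-maximality of exactly the elements of $\Ss$ — for instance by attaching to each $S \in \Ss$ a fresh self-attacking argument defeated precisely by the arguments in $\Args_\Ss \setminus S$, in the manner of Definition~\ref{def:st} — and then proving the two inclusions $\Ext_{\semi}(\cdot) \subseteq \Ss$ and $\Ss \subseteq \Ext_{\semi}(\cdot)$ by hand, using conflict-sensitivity for the former and admissibility of each $S$ together with maximality of its range for the latter; but the route via the translation keeps the proof to a few lines.
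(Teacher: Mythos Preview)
Your proposal is correct and follows essentially the same approach as the paper: the paper explicitly derives this proposition from the preceding one ($\Ext_{\prf}(\F^{\ddef}_\Ss) = \Ss$) together with the translation result of Dvo{\v{r}}{\'a}k and Woltran~\cite{DvoW11} guaranteeing for any AF $\F$ an AF $\F'$ with $\Ext_{\prf}(\F) = \Ext_{\semi}(\F')$. Your additional remarks on finiteness and the self-contained fallback construction go beyond what the paper spells out, but the core argument is identical.
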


\subsection{Uniquely Defined Semantics} 
Let us finally turn to grounded, ideal and eager semantics. Remember that all mentioned semantics warrants the existence of exactly one extension given that the frameworks in question are finite (cf.\ Table~\ref{Eval:run}). Furthermore, it is hardly surprising that this property is even sufficient for being in the corresponding signature, since any one-element extension-set $\Ss = \{E\}$ can be realized via $\F_{E} = (E,\emptyset)$. In particular, we obtain that all three semantics are equally expressive.

\begin{theorem}[\cite{DunSLW16}]
\label{the:charid}
Given a set $\Ss \subseteq 2^\m{U}$, then
\begin{enumerate}
\item $\Ss\in \Sigma^f_{\Ext_{\grd}} \ToT \Ss \text{ is an extension-set with} \card{\Ss} = 1,$
\item $\Ss\in \Sigma^f_{\Ext_{\id}}  \ToT \Ss \text{ is an extension-set with} \card{\Ss} = 1$ and
\item $\Ss\in \Sigma^f_{\Ext_{\eag}}  \ToT \Ss \text{ is an extension-set with} \card{\Ss} = 1.$

\end{enumerate}
\end{theorem}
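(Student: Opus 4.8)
The plan is to prove each of the three equivalences by establishing the two directions separately; since the arguments for $\grd$, $\id$ and $\eag$ are almost identical, I would carry them out in parallel. Fix $\sigma\in\{\grd,\id,\eag\}$.

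\textbf{Necessity.} Suppose $\Ss = \Ext_{\sigma}(\F)$ for some finite AF $\F=(A,R)$. By Definition~\ref{def:semantics} we have $\Ext_{\sigma}(\F)\subseteq 2^A$, hence $\Args_\Ss=\bigcup_{S\in\Ss}S\subseteq A$ is finite, so $\Card{\Ss}=\card{\Args_\Ss}$ is a finite cardinal and $\Ss$ is an extension-set. Moreover, $\grd$, $\id$ and $\eag$ are uniquely defined with respect to finite AFs (Theorem~\ref{the:grduni}, Corollary~\ref{the:universalid}, Theorem~\ref{the:universaleager}; see also Table~\ref{Eval:run}), whence $\card{\Ss}=\card{\Ext_{\sigma}(\F)}=1$.

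\textbf{Sufficiency.} Conversely, let $\Ss=\{E\}$ be an extension-set; then $E\subseteq\m{U}$ is finite, so the attack-free AF $\F_E=(E,\emptyset)$ is a finite AF, and I would verify that $\Ext_{\sigma}(\F_E)=\{E\}$ for each of the three semantics. Since $\F_E$ has no attacks, every subset of $E$ is conflict-free and admissible, and every argument of $E$ is vacuously defended by every $I\subseteq E$; hence the characteristic function satisfies $\Gamma_{\F_E}(I)=E$ for all $I\subseteq E$. Consequently its $\subseteq$-least fixpoint is $E$, so $\Ext_{\grd}(\F_E)=\{E\}$; a complete extension must contain every argument it defends, so $\Ext_{\com}(\F_E)=\{E\}$; $E$ is the unique $\subseteq$-maximal admissible set, so $\Ext_{\prf}(\F_E)=\{E\}$; and $E$ has range $E^\oplus=E=A$, which is $\subseteq$-maximal, so $\Ext_{\semi}(\F_E)=\{E\}$. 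For the ideal extension we then have $\bigcap\Ext_{\prf}(\F_E)=E$, and the only admissible $\E$ with no $\I\in\Ext_{\com}(\F_E)=\{E\}$ satisfying $\E\subset\I\subseteq E$ is $\E=E$, so by Definition~\ref{def:extsem} $\Ext_{\id}(\F_E)=\{E\}$; replacing $\Ext_{\prf}$ by $\Ext_{\semi}$ in the same argument gives $\Ext_{\eag}(\F_E)=\{E\}$. Thus $\Ss=\Ext_{\sigma}(\F_E)\in\Sigma^f_{\Ext_{\sigma}}$.

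\textbf{Main obstacle.} There is no deep difficulty here: the realizing framework $\F_E=(E,\emptyset)$ is essentially forced, and the verification is a direct unwinding of Definition~\ref{def:extsem}. The only points requiring a little care are (i) confirming that the realizing AF is finite, which is automatic because $\Ss$ being an extension-set means $\Args_\Ss$, and a fortiori $E$, is finite; (ii) the nested form of the $\id$ and $\eag$ definitions, where one must note that intersecting the (singleton) set of preferred resp.\ semi-stable extensions returns $E$; and (iii) the degenerate case $E=\emptyset$, where $\F_\emptyset$ is the empty AF and $\Ext_{\sigma}(\F_\emptyset)=\{\emptyset\}$ for all three semantics, so the equivalence still holds. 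It is worth remarking that the necessity direction re-derives, and the sufficiency direction complements, the collapse of $\grd$, $\id$ and $\eag$ onto a single extension over finite AFs, so these three semantics are indeed equally expressive in the finite setting.
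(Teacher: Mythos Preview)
Your proposal is correct and follows essentially the same approach as the paper: unique definedness over finite AFs (Table~\ref{Eval:run}) for the necessity direction, and the attack-free framework $\F_E=(E,\emptyset)$ for sufficiency. The paper's own argument is just the one-sentence sketch preceding the theorem statement, so your version is a faithful and more detailed unfolding of that sketch, including the explicit verification of $\Ext_\sigma(\F_E)=\{E\}$ via Definition~\ref{def:extsem} and the treatment of the corner case $E=\emptyset$.
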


\subsection{Summary of Results and Further Remarks}

In this subsection we provide a comprehensive overview of characterization results w.r.t.\ extension-based realizability in case of finite AFs. The following table collect and combine the results of the previous three 
 subsections. The table has to be interpreted as follows: Consider a certain column $\sigma$. Then, the entries ``$\times$'' in rows $r_1$,...,$r_n$ indicate that for any extension-set $\Ss$, $\Ss\in\Sigma^f_{\Ext_{\sigma}} \ToT r_1,...,r_n$. Moreover, an entry ``$\to$'' in row $r$ reflects the fact that the collection of the properties $r_1,...,r_n$ imply property $r$.

 \begin{table}[H]
	\centering
	\begin{tikzpicture}[scale=0.88] 
	  \tikzstyle{literal}=[text centered]
	  \draw[help lines] (1,-5) grid (11,4);
	  \draw[help lines] (-3,0) -- (1,0);
		\draw[help lines] (-3,1) -- (1,1);
	  \draw[help lines] (-3,2) -- (1,2);
	  \draw[help lines] (-3,3) -- (1,3);
		\draw[help lines] (-3,-1) -- (1,-1);
	  \draw[help lines] (-3,-2) -- (1,-2);
		\draw[help lines] (-3,-3) -- (1,-3);
	  \draw[help lines] (-3,-4) -- (1,-4);
		\draw[help lines] (-3,-5) -- (1,-5);
	  \draw[help lines] (-3,-5) -- (-3,3);

	  \node[literal] (S1) at (1.5,3.5)  {\standardbox{\cf}};
	  \node[literal] (S1) at (2.5,3.5)  {\standardbox{\nav}};
	  \node[literal] (S1) at (3.5,3.5)  {\standardbox{\stb}};
	  \node[literal] (S1) at (4.5,3.5)  {\standardbox{\stg}};
	  \node[literal] (S1) at (5.5,3.5)  {\standardbox{\adm}};
	  \node[literal] (S1) at (6.5,3.5)  {\standardbox{\prf}};
	  \node[literal] (S1) at (7.5,3.5)  {\standardbox{\semi}};
	  \node[literal] (S1) at (8.5,3.5)  {\standardbox{\grd}};
	  \node[literal] (S1) at (9.5,3.5)  {\standardbox{\id}};
	  \node[literal] (S1) at (10.5,3.5)  {\standardbox{\eag}};

	  \node[literal] (S1) at (1.5,2.5)  {$\times$};
	  \node[literal] (S1) at (2.5,2.5)  {$\times$};
	  \node[literal] (S1) at (3.5,2.5)  {};
	  \node[literal] (S1) at (4.5,2.5)  {$\times$};
	  \node[literal] (S1) at (5.5,2.5)  {$\to$};
	  \node[literal] (S1) at (6.5,2.5)  {$\times$};
	  \node[literal] (S1) at (7.5,2.5)  {$\times$};
	  \node[literal] (S1) at (8.5,2.5)  {$\to$};
	  \node[literal] (S1) at (9.5,2.5)  {$\to$};
	  \node[literal] (S1) at (10.5,2.5)  {$\to$};

	  \node[literal] (S1) at (1.5,1.5)  {$\to$};
	  \node[literal] (S1) at (2.5,1.5)  {};
	  \node[literal] (S1) at (3.5,1.5)  {};
	  \node[literal] (S1) at (4.5,1.5)  {};
	  \node[literal] (S1) at (5.5,1.5)  {$\times$};
	  \node[literal] (S1) at (6.5,1.5)  {};
	  \node[literal] (S1) at (7.5,1.5)  {};
	  \node[literal] (S1) at (8.5,1.5)  {};
	  \node[literal] (S1) at (9.5,1.5)  {};
	  \node[literal] (S1) at (10.5,1.5)  {};
	  \node[literal] (S1) at (11.5,1.5)  {};
	  \node[literal] (S1) at (12.5,1.5)  {};
	  \node[literal] (S1) at (13.5,1.5)  {};
	  \node[literal] (S1) at (1.5,0.5)  {};
	  \node[literal] (S1) at (2.5,0.5)  {};
	  \node[literal] (S1) at (3.5,0.5)  {};
	  \node[literal] (S1) at (4.5,0.5)  {};
	  \node[literal] (S1) at (5.5,0.5)  {};
	  \node[literal] (S1) at (6.5,0.5)  {};
	  \node[literal] (S1) at (7.5,0.5)  {};
	  \node[literal] (S1) at (8.5,0.5)  {$\times$};
	  \node[literal] (S1) at (9.5,0.5)  {$\times$};
	  \node[literal] (S1) at (10.5,0.5)  {$\times$};
		
		\node[literal] (S1) at (1.5,-0.5)  {};
	  \node[literal] (S1) at (2.5,-0.5)  {$\times$};
	  \node[literal] (S1) at (3.5,-0.5)  {};
	  \node[literal] (S1) at (4.5,-0.5)  {};
	  \node[literal] (S1) at (5.5,-0.5)  {};
	  \node[literal] (S1) at (6.5,-0.5)  {};
	  \node[literal] (S1) at (7.5,-0.5)  {};
	  \node[literal] (S1) at (8.5,-0.5)  {$\to$};
	  \node[literal] (S1) at (9.5,-0.5)  {$\to$};
	  \node[literal] (S1) at (10.5,-0.5)  {$\to$};
		
		\node[literal] (S1) at (1.5,-1.5)  {};
	  \node[literal] (S1) at (2.5,-1.5)  {$\times$};
	  \node[literal] (S1) at (3.5,-1.5)  {$\times$};
	  \node[literal] (S1) at (4.5,-1.5)  {$\times$};
	  \node[literal] (S1) at (5.5,-1.5)  {};
	  \node[literal] (S1) at (6.5,-1.5)  {$\times$};
	  \node[literal] (S1) at (7.5,-1.5)  {$\times$};
	  \node[literal] (S1) at (8.5,-1.5)  {$\to$};
	  \node[literal] (S1) at (9.5,-1.5)  {$\to$};
	  \node[literal] (S1) at (10.5,-1.5)  {$\to$};
		
		\node[literal] (S1) at (1.5,-2.5)  {$\times$};
	  \node[literal] (S1) at (2.5,-2.5)  {$\to$};
	  \node[literal] (S1) at (3.5,-2.5)  {$\times$};
	  \node[literal] (S1) at (4.5,-2.5)  {$\times$};
	  \node[literal] (S1) at (5.5,-2.5)  {};
	  \node[literal] (S1) at (6.5,-2.5)  {};
	  \node[literal] (S1) at (7.5,-2.5)  {};
	  \node[literal] (S1) at (8.5,-2.5)  {$\to$};
	  \node[literal] (S1) at (9.5,-2.5)  {$\to$};
	  \node[literal] (S1) at (10.5,-2.5)  {$\to$};
		
		\node[literal] (S1) at (1.5,-3.5)  {$\to$};
	  \node[literal] (S1) at (2.5,-3.5)  {$\to$};
	  \node[literal] (S1) at (3.5,-3.5)  {$\to$};
	  \node[literal] (S1) at (4.5,-3.5)  {$\to$};
	  \node[literal] (S1) at (5.5,-3.5)  {$\times$};
	  \node[literal] (S1) at (6.5,-3.5)  {$\times$};
	  \node[literal] (S1) at (7.5,-3.5)  {$\times$};
	  \node[literal] (S1) at (8.5,-3.5)  {$\to$};
	  \node[literal] (S1) at (9.5,-3.5)  {$\to$};
	  \node[literal] (S1) at (10.5,-3.5)  {$\to$};
		
		\node[literal] (S1) at (1.5,-4.5)  {$\times$};
	  \node[literal] (S1) at (2.5,-4.5)  {};
	  \node[literal] (S1) at (3.5,-4.5)  {};
	  \node[literal] (S1) at (4.5,-4.5)  {};
	  \node[literal] (S1) at (5.5,-4.5)  {};
	  \node[literal] (S1) at (6.5,-4.5)  {};
	  \node[literal] (S1) at (7.5,-4.5)  {};
	  \node[literal] (S1) at (8.5,-4.5)  {};
	  \node[literal] (S1) at (9.5,-4.5)  {};
	  \node[literal] (S1) at (10.5,-4.5)  {};

	  \node[literal] (S1) at (-1,2.5)  {$\Ss\neq\emptyset$};
	  \node[literal] (S1) at (-1,1.5)  {$\emptyset\in\Ss$};
	  \node[literal] (S1) at (-1,0.5)  {$\card{\Ss}=1$};
		\node[literal] (S1) at (-1,-0.5)  {$\dcl(\Ss)$ is tight};
		\node[literal] (S1) at (-1,-1.5)  {$\Ss$ is incomparable};
		\node[literal] (S1) at (-1,-2.5)  {$\Ss$ is tight};
	  \node[literal] (S1) at (-1,-3.5)  {$\Ss$ is conflict-sensitive};
		\node[literal] (S1) at (-1,-4.5)  {$\dcl(\Ss)=\Ss$};
	\end{tikzpicture}   
	\caption{Characterizing Properties for Realizable Extension-sets}
	\label{fig:sumcharacrel}
      \end{table}

Remember that the decision whether a certain extension-set $\Ss$ is realizable can not be done via brute force (i.e., enumerating AFs and checking whether their extensions coincide with $\Ss$) since there are no a priori bounds on the number of required arguments. Consequently, the results depicted in Table~\ref{fig:sumcharacrel} put us in a very good position since now, the question of realizability can be decided locally, i.e.\ by inspecting the set in question itself. Moreover, all mentioned properties can checked in polynomial time w.r.t.\ the size of the extensions as shown in \cite[Theorem 6]{DunDLW15}. For the majority of the properties tractability is immediately apparent. The only exception is tightness of the downward-closure of a given extension-set $\Ss$ since its size is not polynomially bounded in the size of $\Ss$ (cf.\ \cite[Proposition 12]{DunDLW15} for a way out of this problem).

By inspecting the respective properties as depicted in Table~\ref{fig:sumcharacrel}, we can immediately 
put the signatures of different semantics in relation to each other. The following theorem includes the signature w.r.t.\ complete semantics in addition. The reason why we did not included complete semantics in our considerations is simply that a precise characterization of the complete signature is still an open problem. Nevertheless, certain necessary properties are already found \cite[Proposition 4]{DunDLW15} justifying items 3 and 4 of the following theorem.

\begin{theorem}[\cite{DunDLW15}]\label{thm:rel}
The following relations hold 

\begin{enumerate}
	\item $\Sigma^f_{\Ext_\nav} \subset \Sigma^f_{\Ext_\stg} \subset \Sigma^f_{\Ext_\semi} = \Sigma^f_{\Ext_\prf}$,
	\item $\Sigma^f_{\Ext_\stb} = \Sigma^f_{\Ext_\stg} \cup \{\emptyset\}$,
	\item $\Sigma^f_{\Ext_\cf} \subset \Sigma^f_{\Ext_\adm} \subset \Sigma^f_{\Ext_\com}$,
	\item $\Sigma^f_{\Ext_\sigma} \subset \Sigma^f_{\Ext_\tau}$ where $\sigma\in\{\grd,\id,\eag\}$, $\tau\in\{\nav,\stb,\stg,\prf,\semi,\com\}$ and
	\item $\left\{\Ss \cup \{\emptyset\} \mid \Ss \in \Sigma^f_{\Ext_\prf}\right\} \subset \Sigma^f_{\Ext_\adm}$.
	
\end{enumerate}

\end{theorem}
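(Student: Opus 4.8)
The plan is to prove the five relations of Theorem~\ref{thm:rel} by combining the characterization results of Theorems~\ref{the:charcf}, \ref{the:charadm}, and \ref{the:charid} (as summarized in Table~\ref{fig:sumcharacrel}) with the inheritance lemmas for tightness and conflict-sensitivity (Lemmas~\ref{lemma:tight_props}, \ref{lemma:tight->admclosed}, \ref{lemma:adm_props}). For each inclusion I would first argue set containment by observing that the characterizing property-bundle of the smaller signature logically implies that of the larger one, and then exhibit a concrete separating extension-set to establish strictness.

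For item~1, the inclusions $\Sigma^f_{\Ext_\nav}\subseteq\Sigma^f_{\Ext_\stg}\subseteq\Sigma^f_{\Ext_\semi}$ follow since ``non-empty, incomparable, $\dcl(\Ss)$ tight'' implies ``non-empty, incomparable, tight'' (Lemma~\ref{lemma:tight_props}(2): if $\dcl(\Ss)$ is tight and $\Ss$ is incomparable then $\Ss$ is tight), which in turn implies ``non-empty, incomparable, conflict-sensitive'' (Lemma~\ref{lemma:tight->admclosed}); the equality $\Sigma^f_{\Ext_\semi}=\Sigma^f_{\Ext_\prf}$ is immediate from Theorem~\ref{the:charadm}(2),(3). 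For strictness of the first $\subset$, I would use $\Ss=\{\{a_1,b_2,b_3\},\{a_2,b_1,b_3\},\{a_3,b_1,b_2\}\}$ from Example~\ref{ex:tight}: it is non-empty, incomparable, and tight, so it is in $\Sigma^f_{\Ext_\stg}$, but $\dcl(\Ss)$ is not tight (shown in Example~\ref{ex:tight}), so it is not in $\Sigma^f_{\Ext_\nav}$. For strictness of the second $\subset$, I would take $\Ss=\{\{a,b\},\{a,d,e\},\{b,c,e\}\}$ from Example~\ref{example:pref}: it is non-empty, incomparable and conflict-sensitive (hence in $\Sigma^f_{\Ext_\prf}$) but not tight (Example~\ref{example:pref} shows $A\cup\{e\}\notin\Ss$ while $(a,e),(b,e)\in\Pairs_\Ss$), so it is not in $\Sigma^f_{\Ext_\stg}$. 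Item~2 is a direct comparison: a set is in $\Sigma^f_{\Ext_\stb}$ iff it is incomparable and tight (including $\emptyset$), and in $\Sigma^f_{\Ext_\stg}$ iff additionally non-empty; since $\emptyset$ is trivially incomparable and tight, $\Sigma^f_{\Ext_\stb}=\Sigma^f_{\Ext_\stg}\cup\{\emptyset\}$, and $\emptyset\notin\Sigma^f_{\Ext_\stg}$ by the non-emptiness clause.

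For item~3, $\Sigma^f_{\Ext_\cf}\subseteq\Sigma^f_{\Ext_\adm}$: a non-empty, downward-closed, tight extension-set contains $\emptyset$ (by downward-closedness and non-emptiness) and is conflict-sensitive (Lemma~\ref{lemma:tight->admclosed}), hence realizable under $\adm$; strictness follows from any conflict-sensitive non-downward-closed set containing $\emptyset$, e.g.\ $\{\emptyset,\{a,b\}\}$ — I would double-check it is conflict-sensitive and observe it is not downward-closed since $\{a\}\notin\Ss$. The inclusion $\Sigma^f_{\Ext_\adm}\subseteq\Sigma^f_{\Ext_\com}$ and its strictness rely on the known necessary properties of the complete signature in \cite[Proposition~4]{DunDLW15}; I would cite those to justify containment, and use a separating set (e.g.\ one realizable by complete but violating conflict-sensitivity or the $\emptyset\in\Ss$ requirement of $\adm$) for strictness. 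Item~4 combines the $\card{\Ss}=1$ characterizations of $\grd,\id,\eag$ (Theorem~\ref{the:charid}) with the observation that every singleton extension-set $\{E\}$ lies in each of the larger signatures (it is non-empty, incomparable, downward-closed-when-appropriate, tight, conflict-sensitive — all trivially for a single set, modulo checking $\dcl(\{E\})$ tightness for $\nav$), and strictness is witnessed by any signature-member with two or more extensions. Item~5 follows because for incomparable conflict-sensitive $\Ss$, the set $\Ss\cup\{\emptyset\}$ is conflict-sensitive and contains $\emptyset$ (Lemma~\ref{lemma:adm_props}(3)), hence in $\Sigma^f_{\Ext_\adm}$; strictness comes from an admissible signature-member not of the form $\Ss'\cup\{\emptyset\}$ with $\Ss'$ incomparable, e.g.\ $\dcl(\{\{a,b\}\})=\{\emptyset,\{a\},\{b\},\{a,b\}\}$, which is conflict-sensitive and contains $\emptyset$ but whose non-empty part is comparable.

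The main obstacle will be the complete-semantics inclusion in item~3: the excerpt only supplies necessary (not sufficient) conditions for $\Sigma^f_{\Ext_\com}$, so I cannot derive $\Sigma^f_{\Ext_\adm}\subseteq\Sigma^f_{\Ext_\com}$ purely from a property-bundle comparison and must instead invoke a direct realizability argument — given an AF $\F$ with $\Ext_\adm(\F)=\Ss$, produce (or cite the construction of) an AF $\F'$ with $\Ext_\com(\F')=\Ss$; I expect this to lean on the canonical defense-framework machinery of Definition~\ref{def_adm_constr} together with \cite{DunDLW15}. The remaining steps are essentially bookkeeping: verifying the small separating examples satisfy exactly the claimed properties, which is the kind of routine check I would carry out explicitly but not belabor here.
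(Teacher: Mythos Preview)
Your proposal is correct and follows essentially the same approach as the paper: the paper simply states that the relations follow ``by inspecting the respective properties as depicted in Table~\ref{fig:sumcharacrel}'' and defers items~3 and~4 (the complete-semantics parts) to \cite[Proposition~4]{DunDLW15}, which is exactly what you do in more explicit form --- comparing the characterizing property bundles and exhibiting separating extension-sets. One small correction: for $\Sigma^f_{\Ext_\nav}\subseteq\Sigma^f_{\Ext_\stg}$ you should invoke Lemma~\ref{lemma:tight_props}(1) (the $\subseteq$-maximal elements of the tight set $\dcl(\Ss)$ form a tight set, and these maxima are exactly $\Ss$ when $\Ss$ is incomparable) rather than item~(2), whose hypothesis requires the \emph{outer} set to be incomparable.
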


The following Venn-diagram provides a compact overview of subset relations between the considered signatures. A bordered area represents a set of extension-sets. The outer ellipse $\m{ES} = \{\Ss\subseteq 2^\m{U} \mid \Ss \text{ is an extension set}\}$ stands for the set of all extension-sets over $\m{U}$. Clearly, all other signatures are subsets of $\m{ES}$ by definition. Furthermore, we use $\{\{\emptyset\}\}$ or $\{\emptyset\}$ the set consisting of the single extension-set $\{\emptyset\}$ (realizable by all considered semantics) or the set containing the empty extension-set (realizable by stable semantics only), respectively. 
The right side of Figure~\ref{fig:venn} shows signatures of semantics
providing only incomparable extension-sets. The intersection of these signatures with $\Sigma^f_{\Ext_\com}$ exactly coincides with $\Sigma^f_{\Ext_\grd}$ as well as $\Sigma^f_{\Ext_\id}$ and $\Sigma^f_{\Ext_\eag}$ which contain all extension-sets $\Ss$ with $|\Ss|=1$. Moreover, the only extension-set they have in common with the signatures of
conflict-free and admissible sets is the extension-set containing the empty extension. This fact causes the ``missing'' intersection in the middle of Figure~\ref{fig:venn}.

\def\firstell{(0.75,1) ellipse (1 and 0.85)}
\def\secondell{(-0.75,1) ellipse (1 and 0.85)}
\begin{figure}[H]
\centering
\begin{tikzpicture}[scale=1.25]

\fill[gray!30] (1.75,1) ellipse (2 and 1.15);
\fill[gray!50] (1.25,1) ellipse (1.5 and 1);
\fill[gray!70] \firstell;
\fill[gray!20] (-1.65,1) ellipse (2.1 and 1.15);
\fill[gray!40] (-1.25,1) ellipse (1.5 and 1);
\fill[gray!60] \secondell;
\fill[gray!50] (0,-0.5) circle (0.25);

\begin{scope}
  \clip \firstell;
  \fill[gray] \secondell;
\end{scope}

\draw[black,very thin] (0,1) ellipse (4 and 2);
\draw[black,very thin] (1.75,1) ellipse (2 and 1.15);
\draw[black,very thin] (1.25,1) ellipse (1.5 and 1);
\draw[black,very thin] (0.75,1) ellipse (1 and 0.85);
\draw[black,very thin] (-1.65,1) ellipse (2.1 and 1.15);
\draw[black,very thin] (-1.25,1) ellipse (1.5 and 1);
\draw[black,very thin] (-0.75,1) ellipse (1 and 0.85);
\draw[black,very thin] (0,-0.5) circle (0.25);

\draw (0,2.5) node {$\m{ES}$};
\draw (0.00,1) node {$\{\!\{\!\emptyset\!\}\!\}$};
\draw (0.95,1) node {$\Sigma^f_{\Ext_\nav}$};
\draw (2.2,1.35) node {$\Sigma^f_{\Ext_\stg}$};
\draw (2.2,1) node {$=$};
\draw[font=\scriptsize] (2.2,0.7) node {$\Sigma^f_{\Ext_\stb}\!\!\setminus\!\{\emptyset\}$};
\draw (3.2,1.35) node {$\Sigma^f_{\Ext_\prf}$};
\draw (3.2,1) node {$=$};
\draw (3.2,0.7) node {$\Sigma^f_{\Ext_\semi}$};
\draw (-0.85,1) node {$\Sigma^f_{\Ext_\cf}$};
\draw (-2.2,1) node {$\Sigma^f_{\Ext_\adm}$};
\draw (-3.2,1) node {$\Sigma^f_{\Ext_\com}$};
\draw (0,-0.5) node {$\{\emptyset\}$};
\end{tikzpicture}
\caption{Subset Relations between Finite Signatures}
\label{fig:venn}
\end{figure}
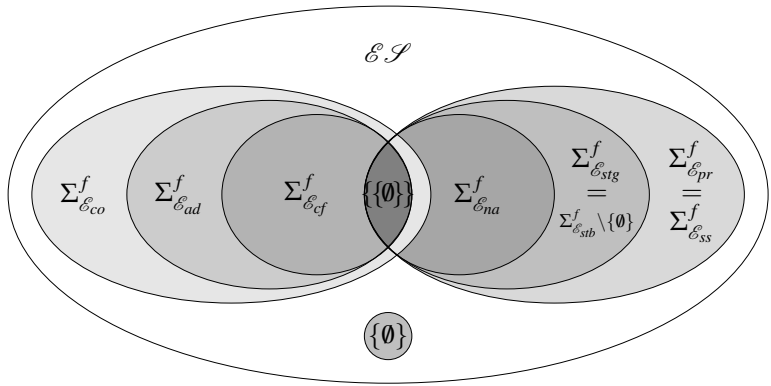

Finally, we want to mention that all considered finite signatures, apart from the complete signature, are closed under non-empty intersections. More precisely, if two finitely $\sigma$-realizable sets $\Ss$ and $\Tt$ possess a non-empty intersection, then $\Ss\cap\Tt$ is finitely $\sigma$-realizable too. This feature is mainly due to the fact that subsets of incomparable and tight as well as incomparable and conflict-sensitive sets maintain these properties (cf.\ Lemmas~\ref{lemma:tight_props} and \ref{lemma:adm_props}).

\begin{theorem}[\cite{DunDLW15}]
Let $\sigma \in \{\cf, \adm, \nav, \stb, \stg, \prf, \semi\}$. For any two finite AFs
$\F_1, \F_2$ exists an finite AF $\F$, s.t.\ 
$\Ext_{\sigma}(\F) = \Ext_{\sigma}(\F_1) \cap \Ext_{\sigma}(\F_2)$
given that $\Ext_{\sigma}(\F_1) \cap \Ext_{\sigma}(\F_2) \neq \emptyset$.\footnote{
The prerequisite of a non-empty intersection can be dropped in case of stable semantics.}
\end{theorem}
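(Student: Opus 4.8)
The plan is to prove this closure-under-intersection result by leveraging the characterization theorems (Theorems~\ref{the:charcf} and~\ref{the:charadm}) together with the inheritance lemmas (Lemmas~\ref{lemma:tight_props} and~\ref{lemma:adm_props}). The overall strategy is: given two finite AFs $\F_1,\F_2$, consider the set $\Ss = \Ext_\sigma(\F_1) \cap \Ext_\sigma(\F_2)$, show that $\Ss$ satisfies the characterizing properties for signature membership, and then invoke the relevant canonical-construction result to obtain a witnessing finite AF $\F$ with $\Ext_\sigma(\F) = \Ss$. The crucial point is that realizability membership is decided \emph{locally} by properties of the extension-set, so we never need to manipulate $\F_1$ and $\F_2$ directly once we know $\Ss$ has the right shape.

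First I would organize the semantics into the two familiar groups. For the conflict-free-based semantics $\sigma\in\{\cf,\nav,\stb,\stg\}$: since $\Ext_\sigma(\F_i)$ is a tight extension-set for $i=1,2$ and $\Ss$ is a \emph{subset} of, say, $\Ext_\sigma(\F_1)$, I want to argue $\Ss$ inherits the needed properties. For $\sigma\in\{\nav,\stb,\stg\}$ the target properties are incomparability plus tightness (plus non-emptiness, which is exactly the excluded degenerate case, except for $\stb$); incomparability is trivially inherited by subsets, and tightness of a subset of an \emph{incomparable} tight set follows from Lemma~\ref{lemma:tight_props}(2). So $\Ss$ is incomparable and tight, hence realizable by Theorem~\ref{the:charcf}(2)--(4) via the canonical frameworks $\F^\cf_\Ss$ or $\F^\stb_\Ss$. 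For $\sigma=\cf$ the target is non-empty, downward-closed, tight; here downward-closedness needs a small argument: $\dcl(\Ext_\cf(\F_i)) = \Ext_\cf(\F_i)$, and $\dcl$ distributes over intersection of downward-closed sets, so $\Ss$ is downward-closed; tightness of $\Ss$ as a downward-closed subset of a tight downward-closed set again follows since a downward-closed set is in particular such that every subset obtained by intersection stays tight (alternatively reduce to the incomparable case via Lemma~\ref{lemma:tight_props} applied to maximal elements, then re-close downward). Then Theorem~\ref{the:charcf}(1) and Proposition~\ref{prop:cf_real} finish it.

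For the admissibility-based semantics $\sigma\in\{\adm,\prf,\semi\}$ the argument is parallel but uses conflict-sensitivity instead of tightness. For $\sigma\in\{\prf,\semi\}$: $\Ss$ is an incomparable subset of the incomparable conflict-sensitive set $\Ext_\sigma(\F_1)$, so by Lemma~\ref{lemma:adm_props}(2) it is conflict-sensitive, hence (being non-empty by hypothesis, incomparable, conflict-sensitive) realizable by Theorem~\ref{the:charadm}(2)--(3) via $\F^{\ddef}_\Ss$ and Dvo\v{r}\'ak--Woltran's translation for the semi-stable case. For $\sigma=\adm$: the target is ``conflict-sensitive and contains $\emptyset$''. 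Since $\emptyset\in\Ext_\adm(\F_i)$ for both $i$, we have $\emptyset\in\Ss$, so $\Ss\neq\emptyset$ automatically and the non-emptiness caveat is vacuous for $\adm$ (consistent with the footnote). Conflict-sensitivity of $\Ss$ as a subset of a conflict-sensitive set is \emph{not} immediate in general --- Lemma~\ref{lemma:adm_props}(2) requires incomparability, which admissible-set collections lack --- so this is where I expect the main obstacle. I would handle it by a direct check from Definition~\ref{def:admcl}: take $A,B\in\Ss$ with $A\cup B\notin\Ss$; then $A\cup B\notin\Ext_\adm(\F_1)$ or $A\cup B\notin\Ext_\adm(\F_2)$, say the former, and since $A,B\in\Ext_\adm(\F_1)$ and that set is conflict-sensitive we get $a,b\in A\cup B$ with $(a,b)\notin\Pairs_{\Ext_\adm(\F_1)}$; but $\Pairs_\Ss\subseteq\Pairs_{\Ext_\adm(\F_1)}$ since $\Ss\subseteq\Ext_\adm(\F_1)$, so $(a,b)\notin\Pairs_\Ss$, establishing conflict-sensitivity of $\Ss$. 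Then Proposition~\ref{theo_adm_signature} (the $\adm$ version) gives $\Ext_\adm(\F^{\ddef}_\Ss) = \Ss$. This same $\Pairs$-monotonicity observation in fact simplifies the tight/incomparable cases as well, so I would state it once as a small preliminary remark: for $\Ss\subseteq\Tt$ we have $\Pairs_\Ss\subseteq\Pairs_\Tt$ and $\Args_\Ss\subseteq\Args_\Tt$, which makes both tightness-inheritance (for incomparable sets) and conflict-sensitivity-inheritance straightforward. The only genuinely semantics-specific ingredients are then the matching canonical constructions, all of which are already available in the excerpt.
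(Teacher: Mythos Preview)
Your overall approach matches the paper's one-sentence justification (invoking Lemmas~\ref{lemma:tight_props} and~\ref{lemma:adm_props}), and your arguments for $\cf$, $\adm$, $\stb$, $\stg$, $\prf$, $\semi$ are essentially correct; the $\Pairs$-monotonicity observation you isolate at the end is exactly what makes the $\cf$ and $\adm$ cases go through where the inheritance lemmas do not apply verbatim.

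There is, however, a genuine gap in the $\nav$ case. You group $\nav$ with $\stb$ and $\stg$ and claim the target property is ``incomparability plus tightness,'' but Theorem~\ref{the:charcf}(2) requires that $\dcl(\Ss)$ be tight, not merely $\Ss$. Lemma~\ref{lemma:tight_props}(2) delivers tightness of $\Ss$ as a subset of the incomparable tight set $\Ext_\nav(\F_1)$, yet says nothing about $\dcl(\Ss)$, and this missing step cannot be repaired. Concretely: let $\F_1$ have arguments $a,b,c,x,y,z$ with mutual attacks exactly between the pairs $\{x,c\},\{y,b\},\{z,a\},\{x,y\},\{x,z\},\{y,z\}$, and let $\F_2$ have the additional argument $w$ with extra mutual attacks $\{w,x\},\{w,y\},\{w,z\}$. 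Then $\Ext_\nav(\F_1)=\{\{a,b,c\},\{a,b,x\},\{a,c,y\},\{b,c,z\}\}$ and $\Ext_\nav(\F_2)=\{\{a,b,c,w\},\{a,b,x\},\{a,c,y\},\{b,c,z\}\}$, so $\Ss=\{\{a,b,x\},\{a,c,y\},\{b,c,z\}\}$. Here $\dcl(\Ss)$ is not tight: $\{a,b\}\in\dcl(\Ss)$, $c\in\Args_\Ss$, $\{a,b,c\}\notin\dcl(\Ss)$, but $(c,a),(c,b)\in\Pairs_\Ss$. In fact $\Ss$ is not naive-realizable at all, since in any realizing AF the set $\{a,b,c\}$ would be conflict-free (each pair occurs in some element of $\Ss$) and hence contained in some naive extension, yet no element of $\Ss$ contains $\{a,b,c\}$. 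So the $\nav$ clause of the statement, as formulated, appears to be false.
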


\section{Signatures w.r.t.\ Finite, Compact AFs} \label{sec:sigcompact}

So far we considered realizibility without any restriction (apart from finiteness) for witnessing AFs. This means, realizing AFs may contain \textit{rejected} arguments, i.e.\ arguments which do not appear in any extension. Rejected arguments are natural ingredients in typical argumentation scenarios and it is a priori completely unclear in
which ways rejected arguments contribute to the expressibility of a particular semantics. In order to have a handle for analyzing the effect of rejected arguments,
the class of \textit{compact} AFs and its induced signatures were introduced and studied \cite{compact,compact2,compactj}. An AF is compact with respect to a semantics
$\sigma$, if it does not contain rejected arguments, i.e.\ each of its arguments appears in at least one $\sigma$-extension. Now, the main question is whether it is possible to get rid of rejected arguments without changing the outcome? or, in other words: Under which circumstances can AFs be transformed into equivalent compact ones? Note that studying compactness is far from being an academic exercise since there
is a fundamental computational significance: When
searching for extensions, arguments span the search space, since extensions
are to be found among the subsets of the set of all arguments.
Hence the more arguments, the larger the search space. Compact
AFs are argument-minimal since none of the arguments can be
removed without changing the outcome, thus leading to a minimal
search space.

Let us first have a brief look on the naive semantics, which is defined
as $\subseteq$-maximal conflict-free sets: Here, it is rather easy to see that any AF
can be transformed into an equivalent compact AF by just removing all self-defeating
arguments. In other words, the same outcome (in terms of the naive
extensions) can be achieved by a simplified AF without rejected arguments. This means, naive semantics does not lose expressive power if we stick to compact AFs. However, it is not hard to find semantics where this coincidence does not hold implying that for such semantics the full range of expressiveness indeed relies on the concepts of rejected arguments. Consider therefore the following non-compact AF~$\F$.
\vspace{2mm}

\begin{tikzpicture}

    \node (a') at (0,0) [circle, minimum size = 0.7cm, thick, draw, label = left:$\F:$] {$a'$};
    \node (a) at (1.4,0) [circle, minimum size = 0.7cm, thick, draw] {$a$};
    \node (b) at (2.8,0) [circle, minimum size=0.7cm,  thick, draw] {$b$};
		\node (b') at (4.2,0) [circle, minimum size=0.7cm,  thick, draw] {$b'$};


\draw[->,thick] (a') to [bend left,thick] (a);
\draw[->,thick] (b') to [bend left,thick] (b);
\draw[->,thick] (b') to [loop,thick, distance=0.5cm] (b');
\draw[->,thick] (a') to [loop,thick, distance=0.5cm] (a');
\draw[->,thick] (a) to [bend left,thick] (b');
\draw[->,thick] (b) to [bend left,thick] (a');

\end{tikzpicture}

Let us consider admissible sets. We obtain $\Ss = \Ext_{\adm}(\F) = \{\emptyset,\{a,b\}\}$. Obviously, any attempt of realizing $\Ss$ with a compact AF $\G = (\{a,b\},R)$ is doomed to failure since if $\{a,b\}$ is admissible in $\G$ we necessarily obtain the admissibility of $\{a\}$ as well as $\{b\}$ proving $\Ss \neq \Ext_{\adm}(\G)$. It was one main result in \cite{compact} to show that the finite, compact signatures w.r.t.\ stable, preferred, semi-stable, and
stage semantics are strict subsets of their corresponding finite signatures. This means, in case of those semantics, sticking to finite, compact AFs implies a loss of expressive power. 

\subsection{Central Definitions and Preliminary Observations}
\label{sec:compact_signatures}

In the following we formally introduce the central notions of \textit{compact argumentation frameworks}, \textit{compact realizibility} as well as \textit{compact signatures}. As already stated, the main idea behind compact AFs is the absence of
rejected arguments. For labelling-based semantics $\sigma$ (i.e., a semantics returning $n$-tuples) we assume that the first component of their associated $\sigma$-labellings are interpreted as \textit{acceptable sets of arguments} in analogy to $\sigma$-extensions in case of extension-based semantics. This means, if a certain argument occur in no first component of given $\sigma$-labellings we classify it as \textit{rejected}. For a given labelling $\L$ we use $\LI$ to refer to its first component. 

\begin{definition}
\label{def:caf}
Given a semantics $\sigma: \m{F}\rightarrow 2^{\left(2^\m{U}\right)^n}$. An AF $\F = (A,R)$ is compact for $\sigma$ (or simply, $\sigma$-compact) if 
$\Args_{\Ext_{\sigma}(\F)} = A$ (in case of $n = 1$) or $\Args_{\{\LI \mid \L\in\Lab_{\sigma}(\F)\}} = A$ (for $n \geq 2$), respectively.

\end{definition}

Although extension-based and labelling-based semantics are formally different semantics (according to Definition~\ref{def:semantics}) we often speak of the extension-based version or labelling-based version of a certain semantics. This can be formally justified for the considered semantics since there is a close relationship between both versions (cf.\ Facts~\ref{fact1} and \ref{fact2} for some formal relations). The following fact shows that for all considered semantics $\sigma$ there is no need to distinguish between $\sigma$-compactness w.r.t.\ the extension-based version of $\sigma$ and $\sigma$-compactness w.r.t.\ the labelling-based version of $\sigma$. As an aside, such a coincidence does not require a one-to-one correspondence between the extension-based and labelling-based version of a semantics $\sigma$. It suffices that any $\sigma$-extension induces a $\sigma$-labelling and vice versa in such a way that accepted arguments are preserved (cf.\ statements 1 and 2 of Fact~\ref{fact1}). 

\begin{fact} \label{fact:extlabcom} For any $\sigma\in\{\stb,\semi,\stg,\cfzwei,\stgzwei,\prf,\adm,\com,\grd,\id,\eag,\nav,\cf\}$ and any\footnote{Indeed, no finiteness restriction is required here.} AF $\F$ we have: $\F$ is compact for $\Ext_\sigma$ iff $\F$ is compact for $\Lab_\sigma$.
\end{fact}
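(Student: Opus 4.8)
The plan is to prove Fact~\ref{fact:extlabcom} by establishing, for each semantics $\sigma$ in the list, a correspondence between $\sigma$-extensions and $\sigma$-labellings that preserves exactly the set of "accepted" arguments (the first component $\LI$ of a labelling, or the extension $E$ itself). Concretely, I would show the following two-sided property (this is essentially what Fact~\ref{fact1} will record): for every AF $\F = (A,R)$, (i) for each $E \in \Ext_\sigma(\F)$ there exists $\L \in \Lab_\sigma(\F)$ with $\LI = E$, and (ii) for each $\L \in \Lab_\sigma(\F)$ there exists $E \in \Ext_\sigma(\F)$ with $E = \LI$. From this it is immediate that $\Args_{\Ext_\sigma(\F)} = \bigcup_{E \in \Ext_\sigma(\F)} E = \bigcup_{\L \in \Lab_\sigma(\F)} \LI = \Args_{\{\LI \mid \L \in \Lab_\sigma(\F)\}}$, since the two unions range over the same collection of sets. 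Hence $\Args_{\Ext_\sigma(\F)} = A$ if and only if $\Args_{\{\LI \mid \L \in \Lab_\sigma(\F)\}} = A$, which is precisely the claimed equivalence between compactness for $\Ext_\sigma$ and compactness for $\Lab_\sigma$.

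The key step, then, is to justify properties (i) and (ii) for each semantics. For the \emph{mature} semantics $\stb, \semi, \stg, \cfzwei, \stgzwei, \prf, \com, \grd, \id, \eag$ this is the cleanest case: as the excerpt states repeatedly (and as will be developed in Section~\ref{sec:baspro}), each of these possesses a genuine one-to-one correspondence between $\sigma$-extensions and $\sigma$-labellings under which an extension $E$ maps to the labelling whose "in"-set is exactly $E$ (the "out" arguments being $E^+$ and the rest "undecided", with the precise recipe depending on $\sigma$). A bijection that sends $E \mapsto \L$ with $\LI = E$ trivially satisfies both (i) and (ii), so the argument collapses to citing that correspondence. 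For $\cfzwei$ and $\stgzwei$ one should note, as the paper does, that no finiteness is needed for the \emph{statement} of the correspondence itself even where the semantics may fail to be total; where both sides are defined, the in-set is preserved.

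For $\adm$ and $\cf$ the situation is slightly different, because here the map from extensions to labellings is \emph{not} injective — a single admissible (resp.\ conflict-free) set $E$ typically induces several admissible (resp.\ conflict-free) labellings, differing on how the non-$E$, non-attacked arguments are split between "out" and "undecided". But injectivity is irrelevant for our purpose: what we need is only that (a) every admissible/conflict-free set $E$ induces \emph{at least one} admissible/conflict-free labelling with in-set $E$ (take, e.g., the labelling with $\LI = E$, $\LO = E^+$, rest undecided — one checks this is $\cf$/$\adm$ as appropriate), and (b) every admissible/conflict-free labelling $\L$ has $\LI$ an admissible/conflict-free set. Both are routine consequences of the definitions of the labelling-based versions. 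This already appears in the excerpt's remark that "any admissible/conflict-free set induce at least one admissible/conflict-free labelling". Once (a) and (b) hold, the two unions $\bigcup E$ and $\bigcup \LI$ again coincide, and the equivalence follows exactly as above.

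The main obstacle I anticipate is purely bookkeeping rather than conceptual: one must be careful to invoke the \emph{correct} labelling-based definition for each of the twelve semantics and verify the in-set-preservation clause, especially for the recursively defined $\cfzwei$ and $\stgzwei$ (where the correspondence must be threaded through the SCC-recursion) and for the parametrized semantics $\id, \eag$ (where one relies on the correspondence already established for the base semantics together with the fact that the parametrization is phrased in terms of accepted-argument sets). Since the paper defers the detailed extension/labelling correspondences to Section~\ref{sec:baspro}, in the actual write-up I would structure the proof as: first state the abstract reduction (equality of the two big unions follows from (i)+(ii)); then dispatch the mature semantics by citing their one-to-one correspondence; then handle $\adm$ and $\cf$ by exhibiting the canonical labelling $(E, E^+, A \setminus E^\oplus)$ and checking it lies in $\Lab_\sigma(\F)$, plus the trivial reverse direction $\LI \in \Ext_\sigma(\F)$.
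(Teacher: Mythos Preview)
Your proposal is correct and follows essentially the same approach as the paper. The paper does not give a formal proof of this Fact; its justification is the sentence immediately preceding it, which says exactly what you argue: a one-to-one correspondence is not required, and it suffices that every $\sigma$-extension induces a $\sigma$-labelling and vice versa with accepted arguments preserved (the paper cites statements~1 and~2 of Fact~\ref{fact1} for this), from which equality of the two unions---and hence the equivalence of the two compactness notions---is immediate.
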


In the following we use $\CAF_\sigma$ for AFs compact for $\sigma$. Moreover, we use $\CAF^f_\sigma$ to indicate that the considered frameworks are finite in addition. It is intuitively clear that there are AFs $\F$ being $\sigma$-compact without being $\tau$-compact for two different semantics $\sigma$ and $\tau$. The following example firstly presented in \cite[Figure~1]{compact} provides us with a witnessing framework.

\begin{example}
\label{ex:semistage_notssubseteq_pref}
Consider the following AF $\F$.\footnote{
The construct in the lower part of the figure
represents symmetric attacks between each pair of distinct arguments.
We will make use of this style in illustrations throughout the whole section.
}
\vspace{2mm}

\begin{tikzpicture}[scale=1,>=stealth]
\centering
	\tikzstyle{arg}=[draw, thick, circle, fill=gray!15,inner sep=2pt]
		\path (0,0)     node[arg](a3){$a_3$}
			++(1.2,-0.2)	node[arg](a1){$a_1$}
			++(1.2,0.2)	node[arg](a2){$a_2$}
			++(2,0)	node[arg](b3){$b_3$}
			++(1.2,-0.2)	node[arg](b1){$b_1$}
			++(1.2,0.2)	node[arg](b2){$b_2$}
			;
		\path (0,-1.4)  node[arg](x1){$x_1$}
		  ++(-0.7,0) node(F){$\F:$}
			++(1.9,0) node[arg](x2){$x_2$}
			++(1.2,0) node[arg](x3){$x_3$}
			++(2,0) node[arg](y1){$y_1$}
            ++(1.2,0) node[arg](y2){$y_2$}
            ++(1.2,0) node[arg](y3){$y_3$}
			;
                \path (3.4,-0.4)node[arg,inner sep=3pt](z){$z$};
		\path [->, thick]
		    (x1) edge (a3)
			(x2) edge (a1)
			(x3) edge (a2)
			(y1) edge (b3)
			(y2) edge (b1)
			(y3) edge (b2)
            (a3) edge (a1)
            (a1) edge (a2)
            (b3) edge (b1)
            (b1) edge (b2)
            [bend right, out=-20, in=-160](a2) edge (a3)
            (b2) edge (b3)
			 ;
                \draw[<->,rounded corners=4pt, thick]
				(x1) -- (0,-2) -- (6.8,-2) -- (y3);
                \draw[->,thick] (1.2,-2) -- (x2);
                \draw[->,thick] (2.4,-2) -- (x3);
                \draw[->,thick] (4.4,-2) -- (y1);
                \draw[->,thick] (5.6,-2) -- (y2);
                \path[<->,thick,out=-160,in=30] (z) edge (x1);
                \path[<->,thick,out=-135,in=30] (z) edge (x2);
                \path[<->,thick,out=-110,in=20] (z) edge (x3);
                \path[<->,thick,out=-70,in=160] (z) edge (y1);
                \path[<->,thick,out=-45,in=150] (z) edge (y2);
                \path[<->,thick,out=-20,in=150] (z) edge (y3);                
\end{tikzpicture}

The preferred extensions of $\F$ are $\Ext_{\prf}(\F) = \{\{z\},$
$\{x_1,a_1\},$
$\{x_2,a_2\},$
$\{x_3,a_3\},$
$\{y_1,b_1\},$
$\{y_2,b_2\},$
$\{y_3,b_3\}\}$,
meaning that $\F$ is $\prf$-compact ($\F\in \CAF^f_\prf$)
since each argument occurs in at least one preferred extension.
On the other hand observe that
$\Ext_{\semi}(\F) = \Ext_{\prf}(\F) \sm \{\{z\}\}$ and
$\Ext_{\stg}(\F) = \{\{x_i,a_i,b_j\},\{y_i,b_i,a_j\} \mid 1\leq i,j \leq 3\}$,
i.e.\ $z$ is not contained in any semi-stable or stage extension.
Therefore $\F$ is neither compact for semi-stable nor compact for
stage semantics (i.e.\ $\F \notin \CAF^f_\semi$ and $\F\notin \CAF^f_\stg$).
\end{example}

How are the different sets of compact AFs related? We start with an easy observation.

\begin{lemma}[\cite{compactj}]
\label{lemma:subset}
For any two semantics $\sigma$ and $\tau$ such that
for each AF\ $F$,
for every $S \in \Ext_{\sigma}(\F)$
there is some $S' \in \Ext_{\tau}(\F)$
with $S \subseteq S'$,
we have $\CAF_\sigma \subseteq \CAF_\tau$.
\end{lemma}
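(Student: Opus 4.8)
The statement to prove is Lemma~\ref{lemma:subset}: if $\sigma$ and $\tau$ are semantics such that for every AF $F$ and every $S\in\Ext_\sigma(\F)$ there is some $S'\in\Ext_\tau(\F)$ with $S\subseteq S'$, then $\CAF_\sigma\subseteq\CAF_\tau$. The plan is to unfold the definition of compactness and show that the arguments witnessed by $\sigma$-extensions are a subset of those witnessed by $\tau$-extensions, so that full coverage under $\sigma$ forces full coverage under $\tau$.

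First I would fix an arbitrary AF $\F=(A,R)$ with $\F\in\CAF_\sigma$, so that $\Args_{\Ext_\sigma(\F)}=A$ by Definition~\ref{def:caf}; the goal is to show $\Args_{\Ext_\tau(\F)}=A$. The inclusion $\Args_{\Ext_\tau(\F)}\subseteq A$ is immediate since every $\tau$-extension is a subset of $A$ (Definition~\ref{def:semantics}). For the reverse inclusion, take any $a\in A$. Since $\Args_{\Ext_\sigma(\F)}=A$, there is a $\sigma$-extension $S\in\Ext_\sigma(\F)$ with $a\in S$. By the hypothesis of the lemma, there exists $S'\in\Ext_\tau(\F)$ with $S\subseteq S'$, hence $a\in S'$, which shows $a\in\Args_{\Ext_\tau(\F)}$. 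Therefore $A\subseteq\Args_{\Ext_\tau(\F)}$, and combining the two inclusions gives $\Args_{\Ext_\tau(\F)}=A$, i.e.\ $\F\in\CAF_\tau$.

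Since $\F$ was an arbitrary element of $\CAF_\sigma$, this establishes $\CAF_\sigma\subseteq\CAF_\tau$. The argument is entirely elementary and there is no real obstacle; the only point requiring a modicum of care is the direction of the containment in the definition of compactness (one must use that $\Ext_\tau(\F)\subseteq 2^A$ for the easy inclusion and the subset-domination hypothesis for the hard inclusion), and the observation that the statement is purely about the \emph{accepted-argument projection} of the extension sets, not about the extension sets themselves. Should one wish to phrase the lemma for labelling-based semantics as well, one invokes Fact~\ref{fact:extlabcom} to transfer compactness between the extension-based and labelling-based versions, but the core combinatorial step is exactly the one above.
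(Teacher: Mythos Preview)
Your proof is correct and is exactly the natural elementary argument one would expect. The paper itself does not supply a proof of this lemma; it is stated as an ``easy observation'' and cited from \cite{compactj}, and your unfolding of Definition~\ref{def:caf} together with the subset-domination hypothesis is precisely the intended one-line justification.
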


Note that $\sigma\subseteq\tau$ is a special case of the premise of Lemma~\ref{lemma:subset}. Thus, $\CAF_\sigma \subseteq \CAF_\tau$, whenever $\sigma\subseteq\tau$ (see Figure~\ref{fig:semrel} for an overview). Strict subset relations have to be proven by providing a witnessing AF as presented in Example~\ref{ex:semistage_notssubseteq_pref}. Moreover, $\CAF_\prf = \CAF_\com = \CAF_\adm$ as well as $\CAF_\nav = \CAF_\cf$ is justified by Lemma~\ref{lem:prna} and the fact that $\prf\subseteq\com\subseteq\adm$ and $\nav\subseteq\cf$. Finally, in case of the uniquely defined grounded and ideal semantics we have, $\F = (A,R)$ is compact if and only if $R = \emptyset$. This in turn implies that $\F$ is compact for stable semantics. This means, $\CAF_\grd = \CAF_\id \subset \CAF_\stb$. Remember that eager semantics is uniquely defined w.r.t.\ finitary AFs only (Theorem~\ref{the:universaleager}, Example~\ref{ex:noext}). Consequently, we may conclude $\CAF^f_\grd = \CAF^f_\eag$ only. Although, the majority of the results do not require the finiteness restriction we present the following theorem in terms of finite AFs. Detailed
proofs for the relations between stable, semi-stable, preferred, stage and naive semantics
can be found in \cite[Theorem~2]{compactj}. 

\begin{theorem}
\label{thm:caf_relations}
The following relations hold:
\begin{enumerate}
\item $\CAF^f_\grd = \CAF^f_\id = \CAF^f_\eag$,
\item $\CAF^f_\prf = \CAF^f_\com = \CAF^f_\adm$,
\item $\CAF^f_\nav = \CAF^f_\cf$,
\item $\CAF^f_\grd \subset \CAF^f_\stb \subset \CAF^f_\semi \subset \CAF^f_\prf \subset \CAF^f_\nav$,
\item $\CAF^f_\stb \subset \CAF^f_\stg \subset \CAF^f_\nav$ and
\item $\CAF^f_\stg \not\subseteq \CAF^f_\sigma$ as well as $\CAF^f_\sigma \not\subseteq \CAF^f_\stg$ for any $\sigma \in \{\prf,\semi\}$.
\end{enumerate}
\end{theorem}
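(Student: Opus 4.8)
The plan is to handle the equalities (items~1--3) first by cheap structural arguments, then derive the non-strict inclusions of items~4 and~5 essentially for free from the subset relations of Proposition~\ref{Pro:semrel} together with Lemma~\ref{lemma:subset}, and finally spend the real effort on exhibiting concrete finite AFs that witness all the strictness claims and the incomparability in item~6. Throughout, by Fact~\ref{fact:extlabcom} it suffices to argue with extension-based versions, since compactness for the extension-based and labelling-based variant of each semantics considered here coincide.

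\textbf{Equalities.} For a semantics $\sigma$ that is uniquely defined on a class containing $\F=(A,R)$, the AF $\F$ is $\sigma$-compact iff its single $\sigma$-extension equals $A$; since grounded, ideal and eager extensions are always conflict-free, this forces $R=\emptyset$, and conversely $R=\emptyset$ makes $A$ itself the grounded, the ideal, the eager \emph{and} a stable extension. Grounded and ideal are uniquely defined on every AF, and eager is uniquely defined on finitary --- in particular finite --- AFs (Theorem~\ref{the:universaleager}). Hence $\CAF^f_\grd = \CAF^f_\id = \CAF^f_\eag = \set{(A,R)\in\AFl_{\fin} \mid R=\emptyset}$, which yields item~1 and also $\CAF^f_\grd \subseteq \CAF^f_\stb$ (the leftmost inclusion of item~4). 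For items~2 and~3 I would invoke Lemma~\ref{lem:prna}: it gives $\Args_{\Ext_{\adm}(\F)} = \Args_{\Ext_{\prf}(\F)}$ and $\Args_{\Ext_{\cf}(\F)} = \Args_{\Ext_{\nav}(\F)}$ for every $\F$ (each admissible, resp.\ conflict-free, set lies in a preferred, resp.\ naive, extension, and conversely $\prf\subseteq\com\subseteq\adm$ and $\nav\subseteq\cf$ by Proposition~\ref{Pro:semrel}), so the accepted-argument sets agree across $\set{\adm,\com,\prf}$ and across $\set{\cf,\nav}$ and the corresponding compact classes coincide.

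\textbf{Inclusions.} Whenever $\sigma\subseteq\tau$ in the sense of Proposition~\ref{Pro:semrel}, the premise of Lemma~\ref{lemma:subset} holds with $S'=S$, so $\CAF_\sigma\subseteq\CAF_\tau$; this covers $\CAF^f_\stb\subseteq\CAF^f_\semi\subseteq\CAF^f_\prf$ and $\CAF^f_\stb\subseteq\CAF^f_\stg$ (Figure~\ref{fig:semrel}). For $\CAF^f_\prf\subseteq\CAF^f_\nav$ and $\CAF^f_\stg\subseteq\CAF^f_\nav$, note that $\prf\not\subseteq\nav$ and $\stg\subseteq\nav$ but in either case a preferred or stage extension is conflict-free, hence contained in a naive extension by Lemma~\ref{lem:prna}; thus the premise of Lemma~\ref{lemma:subset} is again satisfied and the inclusions follow. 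Together with the previous paragraph this establishes all non-strict inclusions in items~4 and~5.

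\textbf{Strictness and incomparability.} What remains is to separate the classes by explicit finite AFs, and this is the combinatorial heart of the proof. For $\CAF^f_\grd\subsetneq\CAF^f_\stb$ a single symmetric conflict $a\leftrightarrow b$ suffices: its stable extensions $\set{a},\set{b}$ cover both arguments, yet $R\neq\emptyset$. For the separations involving stage, semi-stable and preferred I would reuse the framework $\F$ of Example~\ref{ex:semistage_notssubseteq_pref}, which satisfies $\F\in\CAF^f_\prf$ while $\F\notin\CAF^f_\semi$ and $\F\notin\CAF^f_\stg$; this immediately gives $\CAF^f_\semi\subsetneq\CAF^f_\prf$ and $\CAF^f_\prf\not\subseteq\CAF^f_\stg$, hence $\CAF^f_\prf\subsetneq\CAF^f_\nav$, and --- since stable extensions form a subset of semi-stable ones on $\F$ --- a mirror construction in the spirit of Definition~\ref{def:st} (adjoining a self-defeated argument attacked exactly by the arguments outside the would-be offending extensions) produces a framework in $\CAF^f_\stg\setminus\CAF^f_\prf$, settling $\CAF^f_\stg\not\subseteq\CAF^f_\prf$, hence $\CAF^f_\stg\not\subseteq\CAF^f_\semi$, and completing item~6 together with $\CAF^f_\semi\not\subseteq\CAF^f_\stg$ (again via $\F$, since $\CAF^f_\semi\subseteq\CAF^f_\prf$ but $\F$ must be replaced by a witness genuinely in $\CAF^f_\semi$). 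The remaining strict inclusions $\CAF^f_\stb\subsetneq\CAF^f_\semi$, $\CAF^f_\stb\subsetneq\CAF^f_\stg$ and $\CAF^f_\stg\subsetneq\CAF^f_\nav$ require small ad hoc gadgets combining an odd cycle (which destroys stable extensions) with attached arguments that are nonetheless captured by the range-maximal semantics, arranged so that \emph{every} argument appears in some extension of the weaker semantics while the comparison semantics omits at least one; designing these carefully is the main obstacle, and I would follow the constructions of \cite[Theorem~2]{compactj}. The overall difficulty is thus not conceptual but bookkeeping: for each strictness claim one must produce an AF provably compact for the weaker semantics and provably not compact for the stronger one.
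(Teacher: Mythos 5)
Your proposal follows essentially the same route as the paper: Lemma~\ref{lemma:subset} (with the containment premise, not literal $\sigma\subseteq\tau$, for the $\prf$/$\stg$ versus $\nav$ cases) for the non-strict inclusions, Lemma~\ref{lem:prna} together with $\prf\subseteq\com\subseteq\adm$ and $\nav\subseteq\cf$ for items 2--3, the characterization of $\CAF^f_\grd$ as the attack-free finite AFs for item 1 and the leftmost inclusion of item 4, and Example~\ref{ex:semistage_notssubseteq_pref} plus \cite[Theorem~2]{compactj} for the separating witnesses --- the paper itself outsources those witnesses to the same reference. One step of your sketch would fail as written, though: a ``mirror construction in the spirit of Definition~\ref{def:st}'' cannot yield a member of $\CAF^f_\stg\setminus\CAF^f_\prf$, because the adjoined arguments are self-attacking, hence lie in no conflict-free set and in no stage extension, so the resulting AF is never stage-compact. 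No such gadget is needed: the plain $3$-cycle $\F_3$ of Example~\ref{ex:nostb} satisfies $\Ext_\stg(\F_3)=\{\{a_1\},\{a_2\},\{a_3\}\}$ (covering all arguments) while $\Ext_\prf(\F_3)=\Ext_\semi(\F_3)=\{\emptyset\}$ and $\Ext_\stb(\F_3)=\emptyset$, so it simultaneously witnesses $\CAF^f_\stg\not\subseteq\CAF^f_\prf$, $\CAF^f_\stg\not\subseteq\CAF^f_\semi$ and $\CAF^f_\stb\subset\CAF^f_\stg$.
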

The following figure concisely summarizes all relations mentioned in the theorem above. Directed arrows between two boxes have to be interpreted as strict subset relations between the mentioned sets of compact AFs in these boxes. 

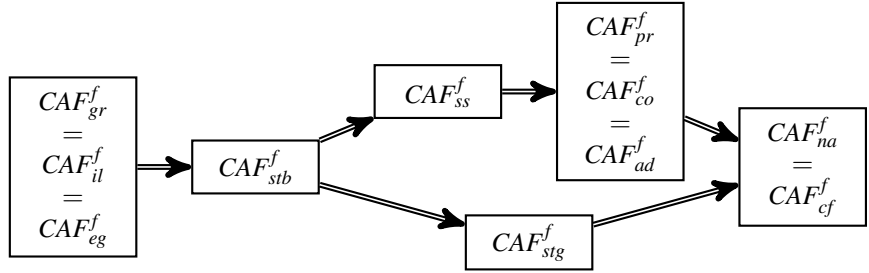
\begin{figure}[H]
\centering
\begin{tikzpicture}[scale=1.0]

\node (I) at (-1.6,1.5) [rectangle, thick, draw,text width = 1.43cm, text centered]{$\CAF^f_\grd$\\
$=$\\
$\CAF^f_\id$\\
$=$\\
$\CAF^f_\eag$};

\node (B) at (0.8,1.5) [rectangle, thick, draw,text width = 1.43cm, text centered]{$\CAF^f_\stb$};

\node (F) at (4.4,0.5) [rectangle, thick, draw,text width = 1.43cm, text centered]{$\CAF^f_\stg$};

\node (W) at (5.6,2.5) [rectangle, thick, draw,text width = 1.43cm, text centered]{$\CAF^f_\prf$\\
$=$\\
$\CAF^f_\com$\\
$=$\\
$\CAF^f_\adm$};

\node (X) at (3.2,2.5) [rectangle, thick, draw,text width = 1.43cm, text centered]{$\CAF^f_\semi$};

\node (O) at (8,1.5) [rectangle, thick, draw,text width = 1.43cm, text centered]{$\CAF^f_\nav$\\
$=$\\
$\CAF^f_\cf$};

\draw[->, thick, double] (I) to [thick,double] (B);

\draw[->, thick, double] (B) to [thick,double] (X);
\draw[->, thick, double] (X) to [thick,double] (W);
\draw[->, thick, double] (B) to [thick,double] (F);
\draw[->, thick, double] (F) to [thick,double] (O);
\draw[->, thick, double] (W) to [thick,double] (O);

\end{tikzpicture}
\caption{\label{fig:subcom}Subset Relations between Finite, Compact AFs}
\end{figure}

Instantiating Definitions \ref{def:realizable} and \ref{def:signatures} with $\m{C} = \CAF^f_\sigma$ formalize the notions of realizibility as well as signatures relativised to finite, compact AFs. Consider the following definitions.

\begin{definition}\label{def:realizable_finite_compact}
Given a semantics $\sigma: \m{F}\rightarrow 2^{\left(2^\m{U}\right)^n}$. A set $\Ss\subseteq \left(2^\m{U}\right)^n$ is \textit{finitely, compactly} $\sigma$-\textit{realizable} if there is an AF~$\F\in\CAF^f_\sigma$, s.t.\ $\sigma(\F)=\Ss$.
\end{definition}

\begin{definition} \label{def:signatures_finite_compact}
Given a semantics $\sigma$. The finite, compact $\sigma$-signature is defined as $\left\{\sigma(\F) \mid \F\in\CAF^f_\sigma \right\}$ abbreviated by $\Sigma_{\sigma}^{f,c}$. 
\end{definition}

It is clear that $\Sigma_{\sigma}^{c,f} \subseteq \Sigma^f_{\sigma}$ holds for any semantics $\sigma$, i.e.\ finite, compact realizibility implies finite realizibility. In the following we shed light on the question whether the mentioned subset relation is strict for a given semantics? In other words, we answer the question whether we indeed lose expressive power if sticking to compact AFs. 

\subsection{The Loss or Stability of Expressive Power}

Let us consider the uniquely defined grounded, ideal and eager semantics first. We already stated that a set $\Ss$ is realizable w.r.t.\ these semantics if and if only if $\Ss$ is an one-element extension-set if considering finite AFs (Theorem~\ref{the:charid}). Furthermore, it is immediate that an extension-set $\Ss = \{E\}$ can be compactly realized via $\F_{E} = (E,\emptyset)$. This means, these semantics do not lose expressive power if we restrict ourselves to compact AFs. Furthermore, the attentive reader may have noticed that the canonical argumentation framework $\F_\Ss^\cf$, which was used as a witnessing framework for conflict-free sets and naive semantics (cf.\ Definition~\ref{def_canonical1} as well as Propositions~\ref{prop:naive_real}~and~\ref{prop:cf_real}), does not involve further artificial arguments. Thus, it verifies finite, compact realizibility and shows that there is no expressive loss in case of conflict-free sets and naive semantics. For the other considered semantics, namely admissible, stable, stage, semi-stable, preferred as well as complete semantics we have to accept a strict weaker expressibility if we stick to compact AFs. In order to prove that in case of these semantics the full range of expressiveness indeed relies on the concept of rejected arguments we have to come up with witnessing extension-sets. Consider therefore the following example.

\begin{example} The extension-set $\Ss = \{\{a,b\},\{a,d,e\},\{b,c,e\}\}$ is realizable under preferred as well as semi-stable semantics (cf.\ Example~\ref{example:pref} for a realizing non-compact framework). Let $\sigma\in\{\prf,\semi\}$. Now suppose there exists an AF $\F=(\{a,b,c,d,e\},R)$, s.t.\ $\Ext_{\sigma}(\F)~=~\Ss$. Since $\{a,d,e\},\{b,c,e\}\in\Ss$ and $\sigma\subseteq\cf$ we conclude that there is no attack in $R$ involving $e$, i.e.\ $e$ is an isolated argument in $\F$. But then, $e$ is contained in each $\sigma$-extension of $\F$ contradicting $\{a,b\}\in\Ss$.
In Summary, $\Ss \in \Sigma_{\Ext_\sigma}^f \sm \Sigma_{\Ext_\sigma}^{f,c}$.
\end{example}

For further witnessing extension-sets we refer the reader to \cite[Propositions 35 and 57]{compactj} and proceed with the main theorem.

\begin{theorem}
\label{prop:compact_signatures}
It holds that
\begin{enumerate}
\item
$\Sigma^{f,c}_{\Ext_\sigma} = \Sigma^f_{\Ext_\sigma}$ for $\sigma \in \{\cf,\nav,\grd,\id,\eag\}$, and
\item
$\Sigma^{f,c}_{\Ext_\sigma} \subset \Sigma^f_{\Ext_\sigma}$ for $\sigma \in \{\adm,\stb,\stg,\semi,\prf,\com\}$.
\end{enumerate}
\end{theorem}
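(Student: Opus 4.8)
\textbf{Proof plan for Theorem~\ref{prop:compact_signatures}.}

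The plan is to establish the two items separately, handling each semantics through explicit canonical constructions or impossibility arguments. For item~1, the easy direction $\Sigma^{f,c}_{\Ext_\sigma} \subseteq \Sigma^f_{\Ext_\sigma}$ is immediate since $\CAF^f_\sigma \subseteq \{\F\in\m{F}\mid \F\text{ finite}\}$. For the reverse inclusion, I would argue case by case. For grounded, ideal and eager semantics, by Theorem~\ref{the:charid} a set is finitely realizable iff it is a one-element extension-set $\Ss=\{E\}$; such a set is realized by $\F_E=(E,\emptyset)$, which is trivially compact since every argument in $E$ occurs in the unique extension $E$. For conflict-free sets and naive semantics, I would invoke the canonical framework $\F^\cf_\Ss$ from Definition~\ref{def_canonical1}: by Propositions~\ref{prop:cf_real} and \ref{prop:naive_real} it realizes any non-empty, downward-closed, tight $\Ss$ (resp.\ non-empty, incomparable $\Ss$ with tight $\dcl(\Ss)$) under $\cf$ (resp.\ $\nav$). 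The key point is that $\F^\cf_\Ss$ introduces no artificial arguments: its argument set is exactly $\Args_\Ss$. Combined with the characterizations from Theorem~\ref{the:charcf}, every finitely $\cf$- or $\nav$-realizable set is already realized by a \emph{compact} framework, since in both cases $\Args_{\Ext_\sigma(\F^\cf_\Ss)} = \Args_\Ss$. Hence $\Sigma^f_{\Ext_\sigma} \subseteq \Sigma^{f,c}_{\Ext_\sigma}$ for these five semantics.

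For item~2, I must exhibit, for each $\sigma\in\{\adm,\stb,\stg,\semi,\prf,\com\}$, a finitely $\sigma$-realizable extension-set that is \emph{not} finitely, compactly $\sigma$-realizable. The main tool is an isolated-argument argument: if $e\in\Args_\Ss$ is such that every $S\in\Ss$ containing $e$ forces $e$ to be unattacked in any realizing AF $F$ with argument set exactly $\Args_\Ss$, then $e$ lies in \emph{every} $\sigma$-extension of $F$ (for any semantics with $\sigma\subseteq\cf$ that accepts unattacked arguments), contradicting the presence of a set in $\Ss$ that omits $e$. For preferred and semi-stable semantics I would use $\Ss = \{\{a,b\},\{a,d,e\},\{b,c,e\}\}$ from Example~\ref{example:pref}: it is non-empty, incomparable and conflict-sensitive, hence in $\Sigma^f_{\Ext_\prf} = \Sigma^f_{\Ext_\semi}$ by Theorem~\ref{the:charadm}, but a compact realization on $\{a,b,c,d,e\}$ would have $e$ isolated (since $\{a,d,e\}$ and $\{b,c,e\}$ are conflict-free and cover all pairs involving $e$ except $(a,b)$-type conflicts that do not touch $e$), forcing $e$ into the extension $\{a,b\}$ — a contradiction. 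For admissible sets I would adapt the framework $\F$ displayed just before Section~\ref{sec:compact_signatures}, where $\Ext_{\adm}(\F) = \{\emptyset,\{a,b\}\}$ cannot be compactly realized because admissibility of $\{a,b\}$ on the two-element universe $\{a,b\}$ would entail admissibility of $\{a\}$ and $\{b\}$. For stable and stage semantics I would similarly use a witnessing extension-set (e.g.\ along the lines of the sets cited from \cite[Propositions~35 and 57]{compactj}), again blocking compact realization via a forced-acceptance argument; complete semantics can be handled by a variant of the admissible example since $\CAF^f_\com = \CAF^f_\adm$.

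The main obstacle I anticipate is the stable and stage cases, where the clean isolated-argument trick does not directly apply because these semantics demand full range rather than merely conflict-freeness plus maximality. Here the impossibility proof is subtler: one must show that compactness (every argument appears in some extension) combined with the range condition over-constrains the attack structure, so that a candidate compact AF is forced to produce additional or fewer extensions than $\Ss$ prescribes. The cleanest route is probably to reuse the detailed combinatorial extension-sets from \cite{compactj} and quote their non-compact-realizability proofs, rather than re-deriving them; the contribution of the present statement is then the organization into a single theorem together with item~1. A secondary subtlety is ensuring, in item~1, that the characterization theorems really do guarantee the canonical witnesses are compact — this is where the precise shapes of $\F^\cf_\Ss$ (no auxiliary arguments) and $\F_E=(E,\emptyset)$ do the work, and one should double-check that $\Args_{\Ext_\sigma(\cdot)}$ equals the full argument set rather than merely containing it.
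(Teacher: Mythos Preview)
Your proposal is correct and essentially matches the paper's approach: for item~1 the paper likewise observes that $\F_E=(E,\emptyset)$ and the canonical framework $\F^\cf_\Ss$ use only arguments from $\Args_\Ss$, hence are compact; for item~2 the paper uses exactly the extension-set $\Ss=\{\{a,b\},\{a,d,e\},\{b,c,e\}\}$ with the isolated-$e$ argument for $\prf$ and $\semi$, the set $\{\emptyset,\{a,b\}\}$ for $\adm$, and defers the remaining cases ($\stb$, $\stg$, $\com$) to \cite[Propositions~35 and~57]{compactj}. One small caution: your remark that ``complete semantics can be handled by a variant of the admissible example since $\CAF^f_\com=\CAF^f_\adm$'' is not a valid inference as stated---equality of the compact AF classes does not by itself transfer a witness from one signature to the other, since the semantics evaluate the same AFs differently---so for $\com$ you indeed need a separate witnessing extension-set (as the paper does by citing \cite{compactj}).
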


In both cases we may benefit of characterization theorems for finite signatures (cf.\ Theorems~\ref{the:charcf}, \ref{the:charadm}~and~\ref{the:charid}). If both signatures are identical (first item), then necessary and sufficient properties for being finitely $\sigma$-realizable immediately carry over to finite, compact $\sigma$-realizibility. If we observe a strict subset relation (second item), then we obtain at least necessary properties for being in the finite, compact $\sigma$-signature.

\begin{theorem}
\label{the:charcompactsig}
Given a set $\Ss \subseteq 2^{\m{U}}$, then
\begin{enumerate}

\item $\!\Ss\in \Sigma_{\Ext_{\cf}}^{f,c} \ToT \Ss\! \text{ is\! a non-empty, downward-closed and tight extension set,}$
\item $\!\Ss\in \Sigma_{\Ext_{\nav}}^{f,c} \ToT \Ss\! \text{ is\! a non-empty, incomparable extension set and } \dcl{(\Ss)} \text{ is tight},$
\item $\!\Ss\in \Sigma_{\Ext_{\grd}}^{f,c}  \ToT \Ss\! \text{ is\! an extension set with} \card{\Ss} = 1,$
\item $\!\Ss\in \Sigma_{\Ext_{\id}}^{f,c}   \ToT \Ss\! \text{ is\! an extension set with} \card{\Ss} = 1,$
\item $\!\Ss\in \Sigma_{\Ext_{\eag}}^{f,c}   \ToT \Ss\! \text{ is\! an extension set with} \card{\Ss} = 1$ and
\item $\!\Ss\in \Sigma_{\Ext_{\stb}}^{f,c}   \To \Ss\! \text{ is\! an incomparable and tight extension set,}$
\item $\!\Ss\in \Sigma_{\Ext_{\stg}}^{f,c}  \To \Ss\! \text{ is\! a non-empty, incomparable and tight extension set,}$
\item $\!\Ss\in \Sigma_{\Ext_{\adm}}^{f,c}  \To \Ss\! \text{ is\! a conflict-sensitive extension set containing }\emptyset,$
\item $\!\Ss\in \Sigma_{\Ext_{\prf}}^{f,c}  \To \Ss\! \text{ is\! a non-empty, incomparable and conflict-sensitive extension set,}$
\item $\!\Ss\in \Sigma_{\Ext_{\semi}}^{f,c}  \To \Ss\! \text{ is\! a non-empty, incomparable and conflict-sensitive extension set.}$

\end{enumerate}
\end{theorem}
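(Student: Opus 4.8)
The statement to prove is Theorem~\ref{the:charcompactsig}, which comes in two flavours: for $\sigma \in \{\cf,\nav,\grd,\id,\eag\}$ we claim a full characterization (``$\ToT$'') of $\Sigma_{\Ext_\sigma}^{f,c}$, and for $\sigma \in \{\stb,\stg,\adm,\prf,\semi\}$ we only claim necessary conditions (``$\To$''). The plan is to reduce everything to results already established: the characterizations of the \emph{unrestricted} finite signatures in Theorems~\ref{the:charcf}, \ref{the:charadm} and~\ref{the:charid}, together with the containment $\Sigma_\sigma^{f,c} \subseteq \Sigma^f_\sigma$ and the identities $\Sigma^{f,c}_{\Ext_\sigma} = \Sigma^f_{\Ext_\sigma}$ for $\sigma \in \{\cf,\nav,\grd,\id,\eag\}$ from Theorem~\ref{prop:compact_signatures}, item~1. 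So the bulk of the theorem is a one-line corollary; the only genuine content is making sure the canonical frameworks supplied earlier really are compact.

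First I would handle items 1--5 (the equivalences). Since Theorem~\ref{prop:compact_signatures}(1) gives $\Sigma^{f,c}_{\Ext_\sigma} = \Sigma^f_{\Ext_\sigma}$ for $\sigma \in \{\cf,\nav,\grd,\id,\eag\}$, each of these five equivalences is literally the corresponding clause of Theorem~\ref{the:charcf} (clauses 1 and 2 for $\cf$ and $\nav$) or Theorem~\ref{the:charid} (clauses 1--3 for $\grd$, $\id$, $\eag$), with the signature symbol rewritten. To make the deduction of Theorem~\ref{prop:compact_signatures}(1) airtight in the direction we need, I would note that the canonical framework $\F^\cf_\Ss$ of Definition~\ref{def_canonical1} introduces no auxiliary arguments ($\Args_{\F^\cf_\Ss} = \Args_\Ss$), so by Propositions~\ref{prop:naive_real} and~\ref{prop:cf_real} it witnesses compact realizability for $\cf$ and $\nav$; and for $\grd,\id,\eag$ the trivial framework $\F_E = (E,\emptyset)$ realizing a singleton $\{E\}$ is compact by inspection (its unique extension is $E$, covering all arguments). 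Combined with $\Sigma_\sigma^{f,c} \subseteq \Sigma_\sigma^f$, this gives equality, hence the five equivalences.

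Next, items 6--10 (the one-directional implications). These are immediate from $\Sigma^{f,c}_{\Ext_\sigma} \subseteq \Sigma^f_{\Ext_\sigma}$ (trivial, since every compact AF is an AF) followed by the ``only if'' direction of the matching clause in Theorem~\ref{the:charcf} (clauses 3, 4 for $\stb$, $\stg$) and Theorem~\ref{the:charadm} (clauses 1, 2, 3 for $\adm$, $\prf$, $\semi$). No constructions are needed here because we are only claiming necessity: a compactly realizable set is in particular realizable, and realizable sets already satisfy incomparability/tightness/conflict-sensitivity/etc.

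I do not expect a serious obstacle; the only point requiring minor care is the justification of Theorem~\ref{prop:compact_signatures}(1) used above, i.e.\ verifying that the canonical frameworks $\F^\cf_\Ss$ and $\F_E$ carry no rejected arguments. This is essentially a remark rather than a proof, but it is the one place where the argument could go wrong if, say, some argument of $\Args_\Ss$ failed to appear in any extension of $\F^\cf_\Ss$ --- which cannot happen because every singleton $\{a\}$ with $a \in \Args_\Ss$ is conflict-free in $\F^\cf_\Ss$ (as $(a,a) \in \Pairs_\Ss$, so $F^\cf_\Ss$ is self-loop-free) and hence contained in some naive/maximal-conflict-free extension. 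Everything else is bookkeeping: matching signature symbols to the already-proved characterizations.
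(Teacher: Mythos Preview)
Your proposal is correct and follows essentially the same route as the paper: the theorem is stated as an immediate consequence of Theorem~\ref{prop:compact_signatures} combined with the finite-signature characterizations of Theorems~\ref{the:charcf}, \ref{the:charadm} and~\ref{the:charid}. Your extra remark that $\F^\cf_\Ss$ and $\F_E$ are compact (no rejected arguments) is exactly the observation the paper makes in the discussion preceding Theorem~\ref{prop:compact_signatures}, so nothing is missing.
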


\subsection{Comparing Finite, Compact Signatures and Final Remarks}

In the following we relate the finite, compact signatures of the semantics under consideration to each other.
Recall that for finite signatures it holds that $\Sigma^f_{\Ext_\nav}\ \subset \Sigma^f_{\Ext_\stg} = \left(\Sigma^f_{\Ext_\stb} \sm \{\emptyset\}\right) \subset \Sigma^f_{\Ext_\semi} = \Sigma^f_{\Ext_\prf}$ (cf.\ Figure~\ref{fig:venn}). This picture changes dramatically when considering the relationships between finite, compact signatures as depicted in Figure~\ref{fig:venn2} (incomparable semantics only) and formally stated in Theorem~\ref{thm:signatures_compact}. The dashed areas represent particular intersections for which the question of existence of extension-sets is still an open question. 


\begin{figure}[H]
  \centering
  \def\firstcircle{(0,0) circle (1.5cm)}
  \def\secondcircle{(0:2cm) circle (1.5cm)}
  \def\prfshape{(100:1.3cm) ellipse (2.1cm and 1.5cm)}
  \def\stgshape{(180:1cm) ellipse (2.3cm and 1.5cm)}
  \def\semishape{(10:1cm) ellipse (2.3cm and 1.5cm)}
  \def\stbshape{(8:.25cm) circle (1.04cm)}
  \def\navshape{(65:.6cm) ellipse (.8cm and .5cm)}
  \colorlet{circle edge}{gray!50}
  \colorlet{circle area}{gray!50}
  \tikzset{filled/.style={fill=circle area, draw=circle edge, thick,opacity=.5},
      outline/.style={draw=circle edge, thick}}
  \begin{tikzpicture}
    \fill[gray!10] \stgshape;
    \fill[gray!20] \semishape;
    \fill[gray!30] \prfshape;
    \fill[gray!60] \stbshape;
    \fill[gray!80] \navshape;
    \begin{scope}
      \clip \stgshape;
      \clip \semishape;
      \fill[pattern=horizontal lines, pattern color=gray!50,even odd rule]
	\semishape {} \stbshape;
    \end{scope}
    \begin{scope}
      \clip \stgshape;
      \clip \prfshape;
      \fill[pattern=vertical lines, pattern color=gray!50,even odd rule]
	\stgshape {} \stbshape; 
    \end{scope}
    \draw[outline] \prfshape node(prs) {};
        \draw[outline] \stgshape node(sgs) {};
        \draw[outline] \semishape node(sms) {};
        \draw[outline] \stbshape node(sbs) {};
        \draw[outline] \navshape node(nas) {$\Sigma_{\Ext_\nav}^{f,c}$};
        \node at (0.05cm,2cm)  {$\Sigma_{\Ext_\prf}^{f,c}$};
        \node at (-1.9cm,-0.5cm) {$\Sigma_{\Ext_\stg}^{f,c}$};
        \node at (2cm,-0.5cm) {$\Sigma_{\Ext_\semi}^{f,c}$};
        \node at (0.4cm,-0.5cm) {$\Sigma_{\Ext_\stb}^{f,c}$};
  \end{tikzpicture}
  \caption{Subset Relations between Finite, Compact Signatures}
  \label{fig:venn2}
\end{figure}
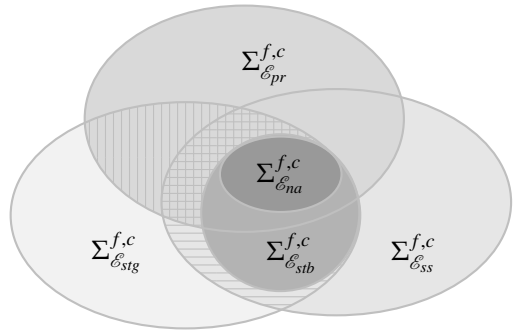

We proceed with an enumeration of relationships between finite, compact signature including further semantics like conflict-free and admissible sets as well as grounded, ideal, eager and complete semantics. For formal proofs we refer the interested reader to \cite[Theorem~36, Proposition~58]{compactj}.

\begin{theorem} The following relations hold:
\label{thm:signatures_compact}

\begin{enumerate}
\item $\Sigma_{\Ext_\sigma}^{f,c} \subset \Sigma_{\Ext_\nav}^{f,c} \subset \Sigma_{\Ext_\tau}^{f,c}$ for $\sigma\in\{\grd,\id,\eag\}$ and $\tau\in \{\stb,\stg,\semi,\prf\}$,
\item $\Sigma_{\Ext_\stb}^{f,c} \subset \Sigma_{\Ext_\sigma}^{f,c}$ for $\sigma \in \{\stg,\semi\}$,
\item $\Sigma_{\Ext_\cf}^{f,c} \subset \Sigma_{\Ext_\adm}^{f,c}$,
\item $\Sigma_{\Ext_\com}^{f,c} \sm \Sigma_{\Ext_\sigma}^{f,c} \neq\emptyset$ and $\Sigma_{\Ext_\sigma}^{f,c} \sm \Sigma_{\Ext_\com}^{f,c} \neq\emptyset$ for $\sigma\in\{\cf,\adm\}$,
\item $\Sigma_{\Ext_\prf}^{f,c} \setminus (\Sigma_{\Ext_\stb}^{f,c} \cup \Sigma_{\Ext_\semi}^{f,c} \cup \Sigma_{\Ext_\stg}^{f,c}) \neq \emptyset$,
\item $\Sigma_{\Ext_\stg}^{f,c} \setminus (\Sigma_{\Ext_\stb}^{f,c} \cup \Sigma_{\Ext_\prf}^{f,c} \cup \Sigma_{\Ext_\semi}^{f,c}) \neq \emptyset$,
\item $\Sigma_{\Ext_\stb}^{f,c} \setminus \Sigma_{\Ext_\prf}^{f,c} \neq \emptyset$,
\item $(\Sigma_{\Ext_\prf}^{f,c} \cap \Sigma^{f,c}_{\Ext_\semi}) \setminus (\Sigma_{\Ext_\stb}^{f,c} \cup \Sigma_{\Ext_\stg}^{f,c}) \neq \emptyset$ and
\item $\Sigma_{\Ext_\semi}^{f,c} \setminus (\Sigma_{\Ext_\stb}^{f,c} \cup \Sigma_{\Ext_\prf}^{f,c} \cup \Sigma_{\Ext_\stg}^{f,c}) \neq \emptyset$.
\end{enumerate}
\end{theorem}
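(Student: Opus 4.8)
\textbf{Proof plan for Theorem~\ref{thm:signatures_compact}.}

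The plan is to prove each of the nine relations largely independently, using the characterization results for finite signatures (Theorems~\ref{the:charcf}, \ref{the:charadm}, \ref{the:charid}) together with the characterizations of finite, compact signatures collected in Theorem~\ref{the:charcompactsig}, and to exhibit explicit witnessing extension-sets (or witnessing AFs) wherever a separation ``$\setminus \ldots \neq \emptyset$'' is asserted. The subset relations of items~1--3 follow the same strategy as the analogous statements for full finite signatures: for item~1, note that every one-element extension-set $\Ss = \{E\}$ is realized by the compact AF $\F_E = (E,\emptyset)$, and that $\F_E \in \CAF^f_\tau$ for every $\tau \in \{\stb,\stg,\semi,\prf\}$ since $E$ is always a $\tau$-extension of an edge-free AF; combining this with the known strictness results (there exist compact extension-sets with $|\Ss| \geq 2$) gives the strict inclusions. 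For item~2 I would use Theorem~\ref{thm:caf_relations}, which already gives $\CAF^f_\stb \subset \CAF^f_\stg$ and $\CAF^f_\stb \subset \CAF^f_\semi$, together with the observation that on a compact AF $\F \in \CAF^f_\stb$, stable and stage extensions coincide (a stable extension has range $A$, hence is trivially range-maximal; and since $\F$ is $\stb$-compact there is at least one) — more care is needed here, so I would instead argue directly that $\Ext_\stb(\F) = \Ext_\stg(\F)$ whenever $\F$ has a stable extension, so $\Sigma^{f,c}_{\Ext_\stb} \subseteq \Sigma^{f,c}_{\Ext_\stg}$, and similarly for $\semi$; strictness then comes from a witnessing extension-set realizable compactly under $\stg$ (resp.\ $\semi$) but not under $\stb$. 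Item~3 is handled by the canonical defense-framework $\F^\ddef_\Ss$ of Definition~\ref{def_adm_constr}: it is $\adm$-compact when $\Ss$ is conflict-sensitive and contains $\emptyset$ with $\Args_\Ss$ exhausted, and the canonical framework $\F^\cf_\Ss$ realizing a downward-closed tight $\Ss$ under $\cf$ is also $\adm$-realizing a suitable superset, giving the inclusion; strictness is witnessed by an admissible extension-set that is not downward-closed.

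The separation items~4--9 are the heart of the theorem and each requires constructing one explicit extension-set with the right membership/non-membership profile; for these I would lean on the sufficient/necessary conditions in Theorem~\ref{the:charcompactsig}. For item~4, the two non-empty differences follow from the fact (already noted in the excerpt, citing \cite[Proposition~4]{DunDLW15}) that the complete signature has necessary properties incomparable with conflict-sensitivity and downward-closedness — I would take, on one side, a complete extension-set that fails to be conflict-sensitive (hence not in $\Sigma^{f,c}_{\Ext_\adm}$ and not downward-closed, hence not in $\Sigma^{f,c}_{\Ext_\cf}$), and on the other side the extension-set $\{\emptyset\}$ or a small downward-closed tight set that is not complete-realizable. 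For items~5, 6, 8, 9 I would reuse and adapt the separating AFs from Example~\ref{ex:semistage_notssubseteq_pref} and the constructions in \cite{compactj}: the AF $\F$ of that example already gives $\Ext_\prf(\F) \neq \Ext_\semi(\F) \neq \Ext_\stg(\F)$ on a framework that is $\prf$-compact but neither $\semi$- nor $\stg$-compact, and variations on this ``gadget'' (attaching self-attacking arguments to kill unwanted range-maximal sets, or the $x_i/y_i/a_i/b_i/z$ pattern) produce extension-sets living in exactly one of the compact signatures while provably lying outside the others via the necessary conditions of Theorem~\ref{the:charcompactsig} (e.g.\ a set that is conflict-sensitive but whose tightness fails, ruling out $\Sigma^{f,c}_{\Ext_\stg}$ and $\Sigma^{f,c}_{\Ext_\stb}$, yet is compactly $\prf$- and $\semi$-realizable). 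Item~7 ($\Sigma^{f,c}_{\Ext_\stb} \setminus \Sigma^{f,c}_{\Ext_\prf} \neq \emptyset$) exploits that stable semantics alone can compactly realize extension-sets that are incomparable and tight but \emph{not} conflict-sensitive in the stronger sense needed for compact preferred realizability — a small explicit AF on four or five arguments with a stable extension set that, were it the preferred set of a compact AF, would force an extra preferred extension, as in the loss-of-expressiveness examples.

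The main obstacle I anticipate is \emph{not} the subset inclusions but the six non-emptiness claims: for each one must pin down a concrete finite extension-set and then verify \emph{two} things — that it is compactly realizable under the intended semantics (which means either invoking a canonical-construction proposition or exhibiting an AF and checking all its extensions), and that it is \emph{not} compactly realizable under each of the excluded semantics (which, since compact signatures lack a full two-sided characterization for $\stb,\stg,\adm,\prf,\semi,\com$, generally cannot be done by a clean criterion and instead requires a direct combinatorial argument: assume a compact realizing AF on exactly $\Args_\Ss$ exists, and derive a contradiction from conflict-freeness forcing certain non-attacks, hence forcing an unwanted extension). Keeping these $\Args_\Ss$-bounded impossibility arguments rigorous — essentially bespoke case analyses on which attacks can and cannot be present — is the technically delicate part, and is exactly where I would be most careful; the excerpt defers the full details to \cite[Theorem~36, Proposition~58]{compactj}, and I would do the same while sketching the key gadget for each item.
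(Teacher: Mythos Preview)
Your overall strategy matches the paper's treatment: the paper itself gives no proof of this theorem but defers entirely to \cite[Theorem~36, Proposition~58]{compactj}, and your plan---establish the subset inclusions via known characterizations and canonical frameworks, then handle each separation claim by an explicit witnessing extension-set plus a bespoke impossibility argument on $\Args_\Ss$---is exactly the shape of those proofs. Your identification of the technical bottleneck (the $\Args_\Ss$-bounded impossibility arguments for items~5--9, where no two-sided characterization of the compact signature is available) is accurate.

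One concrete slip: for item~3 you invoke $\F^{\ddef}_\Ss$ and claim it is $\adm$-compact. It is not---by Definition~\ref{def_adm_constr} it introduces self-defeating arguments $\alpha_{a\gamma}$, none of which can appear in any admissible set, so $\F^{\ddef}_\Ss$ is never $\adm$-compact unless all defense-formulae are tautological. The correct witness for the inclusion $\Sigma^{f,c}_{\Ext_\cf} \subseteq \Sigma^{f,c}_{\Ext_\adm}$ is the one you half-mention afterwards: the canonical framework $\F^\cf_\Ss$ itself. Since $\F^\cf_\Ss$ is symmetric and self-loop-free, every conflict-free set is already admissible (each argument counter-attacks its attackers), so $\Ext_\adm(\F^\cf_\Ss) = \Ext_\cf(\F^\cf_\Ss) = \Ss$, and $\F^\cf_\Ss$ is $\adm$-compact because every singleton is admissible. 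Your phrase ``$\adm$-realizing a suitable superset'' is therefore also off---it realizes exactly $\Ss$, not a superset. With that correction your sketch for item~3 goes through.
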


Comparing the results on expressiveness of the considered semantics as stated in Theorems~\ref{thm:rel} and~\ref{thm:signatures_compact} we observe notable differences.
When allowing rejected arguments,
preferred and semi-stable semantics are equally expressive and at the same time
strictly more expressive than stable and stage semantics.
As we have seen, this property does not carry over to the compact setting (with the exceptions $\Sigma_{\Ext_\stb}^{f,c} \subset \Sigma_{\Ext_\semi}^{f,c}$ and $\Sigma_{\Ext_\stb}^{f,c} \subset \Sigma_{\Ext_\stg}^{f,c}$) where signatures become incomparable.

Finally, regarding the open issues represented as dashed areas in Figure~\ref{fig:venn2}. More precisely, it is an open problem whether there are extension-sets
lying in the intersection between $\Sigma_{\Ext_\prf}^{f,c}$ (resp.\ $\Sigma_{\Ext_\semi}^{f,c}$)
and $\Sigma_{\Ext_\stg}^{f,c}$ but outside of $\Sigma_{\Ext_\stb}^{f,c}$. In \cite{compactj} it is conjectured that such extension-sets do not exist. 

\begin{conjecture}[\cite{compactj}]
It holds that
$\Sigma_{\Ext_\prf}^{f,c} \cap \Sigma_{\Ext_\stg}^{f,c} \subset \Sigma_\stb^{f,c}$ and
$\Sigma_{\Ext_\semi}^{f,c} \cap \Sigma_{\Ext_\stg}^{f,c} = \Sigma_{\Ext_\stb}^{f,c}$.
\end{conjecture}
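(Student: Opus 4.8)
The plan is to separate each equality into an easy half and a hard half, and to concentrate all the difficulty in a single \emph{combination lemma}. For the easy halves, recall from Theorem~\ref{thm:signatures_compact}(2) that $\Sigma_{\Ext_\stb}^{f,c} \subseteq \Sigma_{\Ext_\stg}^{f,c}$ and $\Sigma_{\Ext_\stb}^{f,c} \subseteq \Sigma_{\Ext_\semi}^{f,c}$, which at once yields $\Sigma_{\Ext_\stb}^{f,c} \subseteq \Sigma_{\Ext_\semi}^{f,c} \cap \Sigma_{\Ext_\stg}^{f,c}$, i.e.\ the $\supseteq$-direction of the second claim; and Theorem~\ref{thm:signatures_compact}(7), $\Sigma_{\Ext_\stb}^{f,c} \setminus \Sigma_{\Ext_\prf}^{f,c} \neq \emptyset$, shows that the inclusion $\Sigma_{\Ext_\prf}^{f,c} \cap \Sigma_{\Ext_\stg}^{f,c} \subseteq \Sigma_{\Ext_\stb}^{f,c}$, once proved, is automatically strict. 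So both statements reduce to the \textbf{combination lemma}: if an extension-set $\Ss$ is finitely and compactly realizable under stage semantics and also under semi-stable (respectively preferred) semantics, then $\Ss \in \Sigma_{\Ext_\stb}^{f,c}$. I would moreover first try to show $\Sigma_{\Ext_\prf}^{f,c} \cap \Sigma_{\Ext_\stg}^{f,c} \subseteq \Sigma_{\Ext_\semi}^{f,c}$ directly (intersecting with the stage signature might well collapse the known gap between $\Sigma_{\Ext_\prf}^{f,c}$ and $\Sigma_{\Ext_\semi}^{f,c}$), which would make the preferred case a corollary of the semi-stable case.

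Two routes to the combination lemma present themselves. The \emph{characterization route} would first pin down an exact local characterization of $\Sigma_{\Ext_\stb}^{f,c}$ — Theorem~\ref{the:charcompactsig}(6) only records the necessary direction ``incomparable and tight'' — and then show that membership in $\Sigma_{\Ext_\stg}^{f,c}$ together with membership in $\Sigma_{\Ext_\semi}^{f,c}$ forces the missing sufficient property. The relevant combinatorial fact is that $\Ss$ admits a finite self-loop-free AF on the argument set $\Args_\Ss$ with stable extensions exactly $\Ss$ iff one can pick an attack relation on $\Args_\Ss$ making $\Ss$ the family of maximal conflict-free sets while rendering every \emph{other} maximal conflict-free set non-stable, which can only be achieved by asymmetric attacks (a ``domination-breaking'' orientation). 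Describing exactly which $\Ss$ allow this, and verifying that the intersection signature lands inside that class, is the combinatorial heart of the problem.

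The \emph{constructive route} starts from realizers $\F_1 = (\Args_\Ss, R_1)$ with $\Ext_\stg(\F_1) = \Ss$ and $\F_2 = (\Args_\Ss, R_2)$ with $\Ext_\semi(\F_2) = \Ss$ — both carried by $\Args_\Ss$, by compactness — and builds a stable realizer from them. The first candidate is $\F = (\Args_\Ss, R_1 \cup R_2)$, whose conflict-free sets are precisely the sets conflict-free in both $\F_1$ and $\F_2$, so that $\Ss$ stays conflict-free; one must then (i) promote each $S \in \Ss$ from range-maximal to range-total, exploiting that $S$ is range-maximal in \emph{both} $\F_1$ and $\F_2$, and (ii) show that no conflict-free set outside $\Ss$ becomes range-total in $\F$. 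If the plain union is insufficient, the fallback is to splice in a controlled ``defeater gadget'' in the spirit of Definition~\ref{def:st}, killing any surplus stable extension, and then re-compactify by checking that each added argument can be absorbed into an existing extension. Either way, \textbf{step (ii) is the main obstacle}: adding attacks can turn a previously non-stable conflict-free set into a stable one, and it is exactly here that the proof must genuinely exploit — not merely cite — the interplay between the stage realizer (which constrains ranges among conflict-free sets) and the semi-stable realizer (which constrains ranges among admissible sets). Before committing to either route I would run an exhaustive search over small $\Ss$, both to guard against a counterexample and to conjecture the precise extra property needed in the characterization route.
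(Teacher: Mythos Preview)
The statement you are trying to prove is explicitly a \emph{conjecture} in the paper, not a theorem. The paper introduces it precisely as an open problem: ``it is an open problem whether there are extension-sets lying in the intersection between $\Sigma_{\Ext_\prf}^{f,c}$ (resp.\ $\Sigma_{\Ext_\semi}^{f,c}$) and $\Sigma_{\Ext_\stg}^{f,c}$ but outside of $\Sigma_{\Ext_\stb}^{f,c}$.'' There is no proof in the paper to compare your attempt against; the dashed areas in Figure~\ref{fig:venn2} are the visual acknowledgment that these questions remain unresolved.

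Your proposal is not a proof but an outline of possible attack strategies, and you are aware of this yourself --- you close by suggesting an exhaustive small-case search ``to guard against a counterexample.'' The reductions you identify are correct: the $\supseteq$-direction of the second claim and the strictness of the first do follow immediately from Theorem~\ref{thm:signatures_compact}. But the substantive content, what you call the combination lemma, is exactly the open part. Neither your characterization route nor your constructive route is carried through; in particular, for the constructive route you correctly flag step~(ii) as the main obstacle, and nothing in the proposal overcomes it. The union $R_1 \cup R_2$ can indeed create new stable extensions, and there is no argument given for why the interplay of the stage and semi-stable realizers rules this out. So what you have written is a reasonable research plan for attacking an open problem, not a proof, and the paper offers nothing stronger.
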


\section{Signatures w.r.t.\ Finite, Analytic AFs} \label{sec:analytic}

We now turn to a further phenomenon, so-called \textit{implicit conflicts} which can be frequently observed in typical argumentation scenarios. Consider therefore the following AF $\F$. 
\vspace{2mm}

\begin{tikzpicture}[xscale=2,>=stealth]
      \path
      (-30:.5) node[arg] (x1) {$x_1$}
      +( -30:1) node[arg] (y1) {$y_1$}
			( 90:.5) node[arg] (x2) {$x_2$}
      +( 90:1) node[arg] (y2) {$y_2$}
      (210:.5) node[arg] (x3) {$x_3$}
      +(210:1) node[arg] (y3) {$y_3$}
			+(201:1.3) node (F) {$\F:\ $}
      ;
      \path[<->,thick,bend right]
      (x1) edge (x2)
      (x2) edge (x3)
      (x3) edge (x1)
      ;
      \path[->,bend right, thick]
      (x1) edge (y1)
      (x2) edge (y2)
      (x3) edge (y3)
			(x1) edge (y2)
      (x2) edge (y3)
      (x3) edge (y1)
      ;
			\path [-, thick, dashed, bend right=30, gray]
			(y1) edge (y2)
			(y3) edge (y1)
			(y2) edge (y3);
    \end{tikzpicture}
 
Let us consider stable semantics. Please note that any $x_i$ is jointly acceptable with one specific $y_j$. More precisely, $\Ext_{\stb}(\F)=\{\{x_1,y_3\},\{x_2,y_1\},\{x_3,y_2\}\}$ implying that we do not have any rejected arguments, i.e.\ $\F$ is stable compact. What can be said about the two pairs of arguments $x_1$ and $x_2$ as well as $y_1$ and $y_2$? First of all, both pairs represent a semantical conflict in $\F$ since neither of those pairs occur together in any stable extension. In case of $x_1$ and $x_2$, the conflict is even a syntactical one since both arguments attack each other in contrast to the pair consisting of $y_1$ and $y_2$. This difference leads to the distinction between syntactically underlined \textit{explicit conflicts} and syntactically unfounded \textit{implicit} ones (indicated by dashed lines). In order to understand how implicit conflicts contribute to the expressiveness of a certain semantics, the set of \textit{analytic} AFs and its induced signatures were introduced and studied \cite{LinSW15,compactj}. An analytic framework, i.e.\ a framework which is free of implicit conflicts maximizes the information on conflicts. One main question is: under which circumstances an arbitrary framework can be transformed into an equivalent analytic one? This question is interesting from a theoretical as well as practical
point of view. On the one hand, analytic frameworks are natural candidates for normal
forms of AFs, and on the other maximizing the number of explicit conflicts might help argumentation systems to evaluate AFs more efficiently.

Let us consider again the extension-set $\Ss = \{\{x_1,y_3\},\{x_2,y_1\},\{x_3,y_2\}\}$ stemming from the AF $\F$ depicted above. Replacing the dashed arrows with symmetric attacks in $\F$ shows that $\Ss$ can be analytically realized under stable semantics. Interestingly, this is no coincidence, since it was shown that in case of stable semantics any AF can be transformed into an equivalent analytical one. However, in general it is not that easy to make implicit conflicts explicit since there are frameworks where any suitable transformation requires the use of additional arguments as shown in \cite{LinSW15}.

\subsection{Central Definitions and Preliminary Observations}
\label{sec:analytic_afs}

In this section we consider the central notions of \textit{analytic argumentation frameworks}, \textit{analytic realizability} as well as \textit{analytic signatures}. In order to define analytic AF we have to differentiate between the concept of an attack
(as a syntactical element) and the concept of a conflict 
(with respect to the evaluation under a given semantics). More precisely, if two arguments cannot be accepted together, i.e.\ no reasonable position contain them jointly as elements, we say that these arguments are in conflict. If this conflict is syntactically underlined by an attack between them, we call this conflict explicit, otherwise implicit. Now, an analytic framework is an AF which simply does not contain any implicit conflicts. Consider the following definition.
 
\begin{definition}
\label{def:analytic}
Given a semantics $\sigma: \m{F}\rightarrow 2^{\left(2^\m{U}\right)^n}$, an AF $\F = (A,R)$ and two arguments $a,b\in A$. We say that

\begin{enumerate}
	\item $a$ and $b$ are in \textit{conflict for} $\sigma$ if $(a,b) \notin \Pairs_{\Ext_{\sigma}(\F)}$ (in case of $n = 1$) or $(a,b) \notin \Pairs_{\{\LI \mid \L\in\Lab_{\sigma}(\F)\}}$ (for $n \geq 2$), respectively,
	\item the conflict is \textit{explicit} w.r.t.~$\sigma$ if $(a,b)\!\in\! R$ or $(b,a)\!\in\! R$, otherwise \textit{implicit}, 
	\item the AF $\F$ is \textit{analytic} for $\sigma$ (or $\sigma$-analytic) if all conflicts are explicit.
\end{enumerate}
\end{definition}

Please notice that Definition~\ref{def:analytic} does not require $a$ and $b$ to be different arguments.
In particular, an argument that is not contained in any reasonable position is in conflict with itself. 
This conflict is explicit if the argument is self-attacking and implicit otherwise. Furthermore, for all considered semantics~$\sigma$ we observe there is no need to distinguish between $\sigma$-analyticality w.r.t.\ the extension-based version of $\sigma$ and $\sigma$-analyticality w.r.t.\ the labelling-based version of $\sigma$ (similarly as in case of $\sigma$-compactness as stated in Fact~\ref{fact:extlabcom}). Please note that this coincidence is justified for any semantics $\sigma$ whenever $\Ext_{\sigma}(\F) = \{\LI \mid \L\in\Lab_{\sigma}(\F)\}$ is guaranteed.

\begin{fact} \label{fact:extlabana} For any $\sigma\in\{\stb,\semi,\stg,\cfzwei,\stgzwei,\prf,\adm,\com,\grd,\id,\eag,\nav,\cf\}$ and any AF $\F$ we have: $\F$ is analytic for $\Ext_\sigma$  iff $\F$ is analytic for $\Lab_\sigma$.
\end{fact}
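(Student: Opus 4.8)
The plan is to prove Fact~\ref{fact:extlabana} by the same schema used for Fact~\ref{fact:extlabcom}, namely by observing that analyticality only depends on the \emph{set of accepted sets} $\{\LI \mid \L\in\Lab_\sigma(\F)\}$ extracted from the labellings, and that for each semantics $\sigma$ in the list this set coincides with $\Ext_\sigma(\F)$. Concretely, for a fixed AF $\F = (A,R)$ and a fixed pair $a,b\in A$, Definition~\ref{def:analytic} says that the conflict status (explicit vs.\ implicit) is determined purely by $R$ together with membership of $(a,b)$ in $\Pairs_{\Ext_\sigma(\F)}$ resp.\ $\Pairs_{\{\LI \mid \L\in\Lab_\sigma(\F)\}}$. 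Since the ``explicit'' part only refers to $R$, the two notions of analyticality differ at most via the two $\Pairs$-sets. By Definition~\ref{def:extension-set}, $\Pairs_\Ss$ depends only on $\Ss$, so it suffices to show $\Pairs_{\Ext_\sigma(\F)} = \Pairs_{\{\LI \mid \L\in\Lab_\sigma(\F)\}}$, for which in turn it suffices to establish $\Ext_\sigma(\F) = \{\LI \mid \L\in\Lab_\sigma(\F)\}$.

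First I would reduce the statement to this single equality: for every $\sigma$ in the list and every AF $\F$,
\begin{gather*}
  \Ext_\sigma(\F) = \{\LI \mid \L\in\Lab_\sigma(\F)\}.
\end{gather*}
Then the chain of implications is immediate: equal sets of accepted positions give equal $\Pairs$-sets; with the same $R$, a conflict of $\F$ is explicit under $\Ext_\sigma$ iff it is explicit under $\Lab_\sigma$; hence ``all conflicts explicit'' holds for $\Ext_\sigma$ iff it holds for $\Lab_\sigma$, which is exactly the claimed biconditional. I would phrase this part as a short, semantics-independent argument so that it need only be combined with the case analysis once.

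Second I would discharge the equality $\Ext_\sigma(\F) = \{\LI \mid \L\in\Lab_\sigma(\F)\}$ for each $\sigma$. For the twelve ``mature'' semantics ($\stb,\semi,\stg,\cfzwei,\stgzwei,\prf,\com,\grd,\id,\eag,\nav$ and also $\adm$ and $\cf$) I would invoke the standard extension--labelling correspondence recalled in the excerpt: for the mature semantics there is a one-to-one correspondence between $\sigma$-extensions and $\sigma$-labellings that preserves the accepted arguments (the paper points to Facts~\ref{fact1} and~\ref{fact2} and to Section~\ref{sec:baspro}); for $\adm$ and $\cf$ the correspondence is not one-to-one, but still every $\sigma$-extension induces at least one $\sigma$-labelling and every $\sigma$-labelling has its $\LI$-component a $\sigma$-extension, so the two sets of accepted positions are again equal. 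This is exactly the hypothesis ``$\Ext_\sigma(\F) = \{\LI \mid \L\in\Lab_\sigma(\F)\}$'' that the paragraph preceding the Fact singles out as sufficient.

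The main obstacle is that the needed extension--labelling correspondences for all listed semantics (in particular the recursive $\cfzwei$ and $\stgzwei$, and the parametrized $\id$, $\eag$) are not proved in this excerpt; I would need to either cite the forthcoming Section~\ref{sec:baspro} / Facts~\ref{fact1}--\ref{fact2} as established background, or, where a direct argument is short, spell out that the labelling-based definition of $\sigma$ is set up precisely so that its ``in''-part ranges over the $\sigma$-extensions. In the write-up I would therefore keep the case analysis compact: state the reduction lemma, then say ``for each $\sigma$ in the list, $\Ext_\sigma(\F)=\{\LI\mid\L\in\Lab_\sigma(\F)\}$ by the extension--labelling correspondence (cf.\ Facts~\ref{fact1} and~\ref{fact2} and Section~\ref{sec:baspro})'', and conclude via the $\Pairs$-argument above. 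No new construction is required; the proof is essentially ``analyticality factors through the accepted positions, and the accepted positions agree''.
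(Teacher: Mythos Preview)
Your proposal is correct and follows exactly the approach the paper takes: the paper remarks immediately before the Fact that ``this coincidence is justified for any semantics $\sigma$ whenever $\Ext_{\sigma}(\F) = \{\LI \mid \L\in\Lab_{\sigma}(\F)\}$ is guaranteed,'' which is precisely your reduction via $\Pairs$-sets together with the extension--labelling correspondence. Your write-up is in fact more detailed than the paper's one-line justification, but the underlying argument is identical.
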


In the following we denote the set of all $\sigma$-analytic AFs as $\XAF_\sigma$. To indicate that the frameworks under consideration are finite we use $\XAF_\sigma^f$. We proceed with an example illustrating the new definitions. 

\begin{example}
  \label{ex:simple_ex}
  As a simple example consider the following AF $\F$ depicted below.

\begin{tikzpicture}[scale=1.5,>=stealth]
  \path
    node[arg,opacity=0]{$c$}
    ++(3,0) node[arg,opacity=0]{$d$}
    ;
		\path   node[arg](c){$c$}
		  ++(-0.5,0)	node(F){$\F:$}
			++(1.5,0)	node[arg](a){$a$}
                       ++(1,0)	node[arg](b){$b$}
                      ++(1,0)	node[arg](d){$d$}
			;
		\path [<->, thick] 
		        (a) edge (b)
			 ;	
		\path [->, thick]
		        (a) edge (c)
			(b) edge (d)
                       ;
		\path [-, thick, dashed, bend right=30, gray]
			(d) edge (c);
\end{tikzpicture}

	For $\sigma \in \{\stb,\prf,\semi,\stg\}$ we
  have $\Ext_{\sigma}(\F)=\{\{a,d\},\{b,c\}\}$.
  Observe that there is only one implicit conflict, namely the conflict between the arguments $c$ and $d$,
  denoted by a dashed line. Hence, $\F$ is not
  $\sigma$-analytic, i.e.\ $\F\notin\XAF_\sigma^f$.
  However, since $\Ext_{\nav}(\F)=\Ext_{\sigma}(\F)\cup\{\{c,d\}\}$ we have that
  $\F$ is $\nav$-analytic, i.e.\ $\F\in\XAF_\nav^f$.

\end{example}

As indicated in Example~\ref{ex:simple_ex} the sets of analytic AFs can
differ for different semantics. Just like in case of compact AFs (cf.\ Lemma~\ref{lemma:subset})
one may easily verify the following lemma which allows to obtain a plenty of subset relations between sets of analytic AFs.

\begin{lemma}[\cite{compactj}] \label{lemma:xaf_subset}

For any two semantics $\sigma$ and $\tau$ such that
for each AF\ $\F$,
for every $S \in \Ext_{\sigma}(\F)$
there is some $S' \in \Ext_{\tau}(\F)$
with $S \subseteq S'$,
we have $\XAF_\sigma \subseteq \XAF_\tau$.
  
 \end{lemma}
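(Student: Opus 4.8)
\textbf{Proof plan for Lemma~\ref{lemma:xaf_subset}.}

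The plan is to mirror the argument that establishes the analogous statement for compact AFs (Lemma~\ref{lemma:subset}), working directly from Definition~\ref{def:analytic}. Fix two semantics $\sigma$ and $\tau$ satisfying the stated domination hypothesis: for every AF $\F$ and every $S\in\Ext_{\sigma}(\F)$ there is some $S'\in\Ext_{\tau}(\F)$ with $S\subseteq S'$. We must show $\XAF_\sigma\subseteq\XAF_\tau$, i.e.\ that every $\sigma$-analytic AF is also $\tau$-analytic. So let $\F=(A,R)\in\XAF_\sigma$ and let $a,b\in A$ be two arguments (not necessarily distinct) that are in conflict for $\tau$; by Definition~\ref{def:analytic} it suffices to show this conflict is explicit, that is, $(a,b)\in R$ or $(b,a)\in R$.

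The key step is the contrapositive at the level of $\Pairs$: the domination hypothesis implies that $\Pairs_{\Ext_{\sigma}(\F)}\subseteq\Pairs_{\Ext_{\tau}(\F)}$. Indeed, if $(a,b)\in\Pairs_{\Ext_{\sigma}(\F)}$ then there is some $S\in\Ext_{\sigma}(\F)$ with $\{a,b\}\subseteq S$; taking $S'\in\Ext_{\tau}(\F)$ with $S\subseteq S'$ gives $\{a,b\}\subseteq S'$, hence $(a,b)\in\Pairs_{\Ext_{\tau}(\F)}$. Now if $a$ and $b$ are in conflict for $\tau$, then $(a,b)\notin\Pairs_{\Ext_{\tau}(\F)}$, and by the inclusion just shown $(a,b)\notin\Pairs_{\Ext_{\sigma}(\F)}$, so $a$ and $b$ are in conflict for $\sigma$ as well. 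Since $\F$ is $\sigma$-analytic, this conflict must be explicit, i.e.\ $(a,b)\in R$ or $(b,a)\in R$ — which is exactly what we needed to conclude that the conflict for $\tau$ is explicit. As $a,b$ were arbitrary, every conflict in $\F$ for $\tau$ is explicit, so $\F\in\XAF_\tau$, completing the argument.

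The only subtlety worth flagging — and it is not really an obstacle — is the treatment of the self-conflict case $a=b$: an argument $a$ in conflict with itself for $\tau$ (i.e.\ lying in no $\tau$-extension-pair, equivalently in no $\tau$-extension in the $n=1$ case) is likewise in conflict with itself for $\sigma$ by the same $\Pairs$-inclusion, and $\sigma$-analyticality then forces $(a,a)\in R$, making the conflict explicit for $\tau$ too. One should also note that the $n\geq 2$ (labelling) case is handled identically, reading $\Pairs_{\{\LI\mid\L\in\Lab_{\sigma}(\F)\}}$ in place of $\Pairs_{\Ext_{\sigma}(\F)}$ throughout; the domination hypothesis is phrased in terms of extension-like sets and the same monotonicity of $\Pairs$ under ``every set is contained in some set of the larger family'' applies verbatim. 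Thus the whole proof is a one-line set-theoretic observation about $\Pairs$ plus an unwinding of Definition~\ref{def:analytic}, and no genuinely hard step arises.
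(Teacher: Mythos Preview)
Your proof is correct and is exactly the approach the paper intends: the paper does not spell out a proof but simply notes that ``just like in case of compact AFs (cf.\ Lemma~\ref{lemma:subset}) one may easily verify the following lemma'', and your argument via the inclusion $\Pairs_{\Ext_{\sigma}(\F)}\subseteq\Pairs_{\Ext_{\tau}(\F)}$ is precisely that easy verification.
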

In line with the existing literature we restrict our considerations to finite AFs. Regarding universal (but not uniquely) defined semantics we obtain the same relations as in case of compact AFs (see explanations below Lemma~\ref{lemma:subset}). In any case we have $\XAF_\grd^f \subseteq \XAF_\id^f \subseteq \XAF_\eag^f$ since ideal semantics accepts more arguments than grounded semantics and eager semantics is even more credulous than ideal semantics. Furthermore, $\XAF_\eag^f\subseteq\XAF_\semi^f$ because the unique eager extension is contained in all semi-stable extension by definition and moreover, semi-stable semantics guarantees reasonable positions in case of finite AFs. Now, let $\F = (A,R)$ be analytic w.r.t.\ eager semantics and $\Ext_{\eag}(\F) = \{E\}$. We deduce that all arguments in $A\sm E$ have to be self-defeating. Consequently, its corresponding (conflict-free) base semantics (cf.\ Definition~\ref{def:parsemantics}) warrants exactly one extension for $\F$. More precisely, $\Ext_{\semi}(\F) = \{E\}$. Finally, due to the self-conflicting arguments and the admissibility of $E$ we obtain $\Ext_{\prf}(\F) = \{E\}$ and thus, $\Ext_{\id}(\F) = \{E\}$ showing that $\F$ is even analytic w.r.t.\ ideal semantics, i.e.\ $\XAF^f_\id = \XAF^f_\eag$. The AF $\F = (\{a,b\},\{(a,b),(b,a),(b,b)\})$ proves that a similar result in case of grounded and ideal semantics does not hold. Detailed proofs for the relations between stable, semi-stable, preferred, stage and naive semantics can be found in \cite[Theorem~4]{compactj}.

\begin{theorem}

\label{thm:XAFrelations}
The following relations hold: 
\begin{enumerate}
\item $\XAF^f_\grd \subset \XAF^f_\id = \XAF^f_\eag \subset \XAF^f_\semi $,
\item $\XAF^f_\prf = \XAF^f_\com = \XAF^f_\adm$,
\item $\XAF^f_\nav = \XAF^f_\cf$,
\item $\XAF^f_\stb \subset \XAF^f_\semi \subset \XAF^f_\prf \subset \XAF^f_\nav$,
\item $\XAF^f_\stb \subset \XAF^f_\stg \subset \XAF^f_\nav$,
\item $\XAF^f_\stg \not\subseteq \XAF^f_\sigma$ and $\XAF^f_\sigma \not\subseteq \XAF^f_\stg$ for any $\sigma \in \{\prf,\semi\}$,
\item $\XAF^f_\sigma \not\subseteq \XAF^f_\tau$ and $\XAF^f_\tau \not\subseteq \XAF^f_\sigma$ for any $\sigma\! \in\! \{\grd,\id,\eag\}$, $\tau\!\in\!\{\stb,\stg\}$.
\end{enumerate}
\end{theorem}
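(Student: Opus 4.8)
The statement to prove is Theorem~\ref{thm:XAFrelations}, which records the subset and incomparability relations between the sets $\XAF^f_\sigma$ of finite analytic AFs for the semantics $\sigma\in\{\cf,\nav,\stb,\stg,\semi,\prf,\com,\grd,\id,\eag\}$. The plan is to split the work into the positive inclusions (items 1--5) and the incomparability claims (items 6--7), exactly mirroring the structure already used for compact AFs in Theorem~\ref{thm:caf_relations}. For the inclusions, the main engine is Lemma~\ref{lemma:xaf_subset}: whenever every $\sigma$-extension is contained in some $\tau$-extension for every AF, we get $\XAF_\sigma\subseteq\XAF_\tau$. First I would invoke Proposition~\ref{Pro:semrel} together with Lemma~\ref{lem:prna} to read off all the ``easy'' inclusions: $\stb\subseteq\semi\subseteq\prf$, $\stb\subseteq\stg$, $\prf\subseteq\com\subseteq\adm$, $\nav\subseteq\cf$, and the fact that every $\prf$-extension is below a $\com$-extension (and vice versa, they coincide) and every $\cf$-set is below a $\nav$-extension. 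This immediately yields items 2 and 3 as equalities and the chains $\XAF^f_\stb\subseteq\XAF^f_\semi\subseteq\XAF^f_\prf\subseteq\XAF^f_\nav$ and $\XAF^f_\stb\subseteq\XAF^f_\stg\subseteq\XAF^f_\nav$; strictness of these particular inclusions can be imported from \cite[Theorem~4]{compactj} or witnessed by small AFs in the spirit of Example~\ref{ex:simple_ex}.

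The genuinely new part, compared with the compact case, is item~1, namely $\XAF^f_\grd\subset\XAF^f_\id=\XAF^f_\eag\subset\XAF^f_\semi$. Here I would argue directly along the lines sketched in the paragraph preceding the theorem statement. The inclusion $\XAF^f_\grd\subseteq\XAF^f_\id\subseteq\XAF^f_\eag$ follows from Lemma~\ref{lemma:xaf_subset} because the grounded extension is contained in the ideal extension, which is contained in the eager extension (all three being unique on finite AFs by Theorem~\ref{the:grduni}, Corollary~\ref{the:universalid}, Corollary~\ref{the:minor4}). For $\XAF^f_\eag\subseteq\XAF^f_\semi$ one uses $\eag\subseteq\semi$ together with universal definedness of semi-stable on finite AFs (Table~\ref{Eval:run}) and Lemma~\ref{lemma:xaf_subset}. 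The crux is the reverse inclusion $\XAF^f_\semi\subseteq\XAF^f_\eag$, equivalently $\XAF^f_\id=\XAF^f_\eag$: given a finite AF $\F=(A,R)$ that is $\eag$-analytic with $\Ext_{\eag}(\F)=\{E\}$, every argument of $A\setminus E$ is in conflict with itself and, by analyticity, must be self-attacking; from this I would deduce that the conflict-free base semantics of $\eag$ has exactly one extension on $\F$, so $\Ext_{\semi}(\F)=\{E\}$, and since $E$ is admissible and all remaining arguments are self-defeating, $\Ext_{\prf}(\F)=\{E\}$, hence $\Ext_{\id}(\F)=\{E\}$ — so $\F$ is also $\id$-analytic. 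The strictness $\XAF^f_\grd\subsetneq\XAF^f_\id$ is witnessed by $\F=(\{a,b\},\{(a,b),(b,a),(b,b)\})$, and $\XAF^f_\eag\subsetneq\XAF^f_\semi$ by a suitable small example (or imported from \cite{compactj}).

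For items 6 and 7 — the incomparability of $\XAF^f_\stg$ with each of $\XAF^f_\prf,\XAF^f_\semi$, and of each of $\XAF^f_\grd,\XAF^f_\id,\XAF^f_\eag$ with each of $\XAF^f_\stb,\XAF^f_\stg$ — the plan is simply to exhibit concrete finite AFs on each side of each non-inclusion. For $\XAF^f_\stg\not\subseteq\XAF^f_{\prf/\semi}$ and the converse I would adapt the separating examples already known for compact AFs (Theorem~\ref{thm:caf_relations} item~6, e.g.\ the AF of Example~\ref{ex:semistage_notssubseteq_pref} or a variant), checking in each case which conflicts are explicit. For the grounded/ideal/eager versus stable/stage comparisons, one direction is clear because a $\grd$-analytic AF with empty $R$ is trivially $\stb$- and $\stg$-analytic, so the real content is AFs that are analytic for $\stb$ or $\stg$ but whose unique grounded/ideal/eager extension leaves a non-self-attacking argument unaccepted, and vice versa; here I expect to reuse or lightly modify the compact-case witnesses, since the referenced detailed proofs for the conflict-free/stable/semi-stable/preferred/stage relations are in \cite[Theorem~4]{compactj}. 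The main obstacle is the bookkeeping in the $\XAF^f_\semi\subseteq\XAF^f_\eag$ argument — one must be careful that ``every non-accepted argument is self-attacking'' genuinely forces $\Ext_{\semi}$, $\Ext_{\prf}$ and $\Ext_{\id}$ all to collapse to $\{E\}$, rather than merely containing $E$; making this chain of collapses precise, using admissibility of $E$ and the SCC-free structure induced by the self-loops, is where the proof needs the most care, and it is the one step I would not just defer to \cite{compactj}.
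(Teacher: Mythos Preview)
Your approach is essentially the paper's own: invoke Lemma~\ref{lemma:xaf_subset} plus the known extension-inclusions for items 2--5 (deferring strictness and item~6 to \cite[Theorem~4]{compactj}), and for item~1 argue directly that an $\eag$-analytic $\F$ with $\Ext_\eag(\F)=\{E\}$ forces all of $A\setminus E$ to be self-loops, whence $\Ext_\semi(\F)=\Ext_\prf(\F)=\Ext_\id(\F)=\{E\}$, with the witness $(\{a,b\},\{(a,b),(b,a),(b,b)\})$ separating $\grd$ from $\id$.

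One genuine slip, though: what you twice label as ``the reverse inclusion $\XAF^f_\semi\subseteq\XAF^f_\eag$'' is neither needed nor true (the theorem asserts $\XAF^f_\eag\subsetneq\XAF^f_\semi$). The inclusion you actually have to prove, and the one your argument in fact establishes, is $\XAF^f_\eag\subseteq\XAF^f_\id$: you start from an $\eag$-analytic $\F$ and end with ``$\F$ is also $\id$-analytic''. So the mathematics is right, but the bookkeeping label is wrong in both places; fix the label and the proof goes through exactly as in the paper. Also, in item~7 your remark that ``one direction is clear because a $\grd$-analytic AF with empty $R$ is trivially $\stb$- and $\stg$-analytic'' does not help: exhibiting an AF in the intersection says nothing about either non-inclusion. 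You need concrete witnesses on \emph{both} sides (e.g.\ an $\id$-analytic AF with a non-self-attacking argument outside every stable extension, and a $\stb$-analytic AF whose ideal extension misses a non-self-attacking argument), which is what the paper defers to \cite{compactj}.
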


The following figure summarizes all relation in a compact way. Similarly to Figure~\ref{fig:subcom}, a directed arrow between two boxes has to be interpreted as strict subset relation between the mentioned sets of analytic AFs therein. 

\begin{figure}[H]
\centering
\begin{tikzpicture}[scale=1.0]

\node (G) at (-1.6,3.5) [rectangle, thick, draw,text width = 1.43cm, text centered]{$\XAF^f_\grd$};

\node (I) at (0.8,3.5) [rectangle, thick, draw,text width = 1.43cm, text centered]{$\XAF^f_\id$\\
$=$\\
$\XAF^f_\eag$};

\node (B) at (0.8,1.5) [rectangle, thick, draw,text width = 1.43cm, text centered]{$\XAF^f_\stb$};

\node (F) at (4.4,0.5) [rectangle, thick, draw,text width = 1.43cm, text centered]{$\XAF^f_\stg$};

\node (W) at (5.6,2.5) [rectangle, thick, draw,text width = 1.43cm, text centered]{$\XAF^f_\prf$\\
$=$\\
$\XAF^f_\com$\\
$=$\\
$\XAF^f_\adm$};

\node (X) at (3.2,2.5) [rectangle, thick, draw,text width = 1.43cm, text centered]{$\XAF^f_\semi$};

\node (O) at (8,1.5) [rectangle, thick, draw,text width = 1.43cm, text centered]{$\XAF^f_\nav$\\
$=$\\
$\XAF^f_\cf$};

\draw[->, thick, double] (G) to [thick,double] (I);

\draw[->, thick, double] (I) to [thick,double] (X);

\draw[->, thick, double] (B) to [thick,double] (X);
\draw[->, thick, double] (X) to [thick,double] (W);
\draw[->, thick, double] (B) to [thick,double] (F);
\draw[->, thick, double] (F) to [thick,double] (O);
\draw[->, thick, double] (W) to [thick,double] (O);

\end{tikzpicture}
\caption{\label{fig:subana}Subset Relations between Finite, Analytic AFs}
\end{figure}
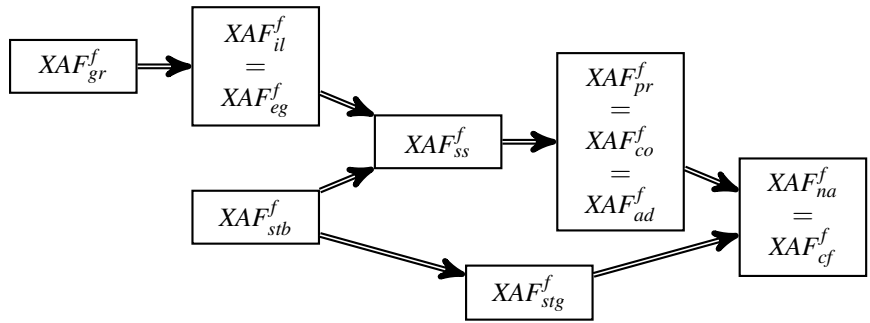 

At this point we want to mention that although Figures~\ref{fig:subcom}~and~\ref{fig:subana} look very similar we have that compactness and analyticality are sufficiently distinct properties. More precisely, apart from the uniquely defined semantics as well as naive semantics and conflict-free sets no subset relations between the sets of compact and analytic frameworks can be stated in general. Sticking to self-loop-free AFs allows one to draw further relations such as analyticality implies compactness for any considered semantics. The main reason for this general relation is that rejected arguments has to be self-defeating in case of analytic frameworks. A selection of proofs of relations listed below can be found in \cite[Proposition 5-8]{compactj}.

\begin{proposition}
\label{prop:relationXAFCAF}
Given an AF $\F$, then
\begin{enumerate}
	\item $\CAF^f_\sigma \subset \XAF^f_\sigma$ for $\sigma\in\{\grd,\id,\eag,\nav,\cf\}$, 
	\item $\CAF^f_\sigma \not\subseteq \XAF^f_\sigma$ and $\XAF^f_\sigma \not\subseteq \CAF^f_\sigma$ for $\sigma\in\{\adm,\stb,\semi,\prf,\stg,\com\}$.
\end{enumerate}
If $\F$ is self-loop-free in addition, then
\begin{enumerate}\setcounter{enumi}{2}
	\item $\F\in\XAF^f_\sigma$ and $\F \in \CAF^f_\sigma$ for $\sigma\in\{\nav,\cf\}$,
	\item $\F\in\XAF^f_\sigma \ToT \F \in \CAF^f_\sigma$ for $\sigma\in\{\grd,\id,\eag\}$ and
 \item $\F\in\XAF^f_\sigma \To \F\in\CAF^f_\sigma$ for $\sigma\in\{\adm,\stb,\semi,\prf,\stg,\com\}$.
\end{enumerate}

\end{proposition}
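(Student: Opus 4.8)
The statement to prove is Proposition~\ref{prop:relationXAFCAF}, which lists five relationships between the classes of finite analytic AFs ($\XAF^f_\sigma$) and finite compact AFs ($\CAF^f_\sigma$), both for arbitrary finite AFs (items 1--2) and for self-loop-free finite AFs (items 3--5). The plan is to prove each item by unwinding Definition~\ref{def:caf} (an AF is $\sigma$-compact iff every argument occurs in some $\sigma$-extension) and Definition~\ref{def:analytic} (an AF is $\sigma$-analytic iff every conflict is explicit, i.e.\ syntactically realised as an attack), combining these with the already-established subset lattices of Theorems~\ref{thm:caf_relations} and~\ref{thm:XAFrelations} and with Lemmas~\ref{lemma:subset} and~\ref{lemma:xaf_subset}. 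First I would record the key structural fact underlying almost everything here: in a $\sigma$-analytic AF, any argument $a$ not appearing in any $\sigma$-extension is in conflict with itself, and since that self-conflict must be \emph{explicit}, $a$ must carry a self-loop $(a,a)\in R$. Conversely, an argument with no self-loop that is absent from all extensions witnesses an implicit self-conflict, hence non-analyticality.

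\textbf{Items 3--5 (self-loop-free case).} These are the cleanest, so I would do them first. If $\F$ is self-loop-free, then by the fact just noted a $\sigma$-analytic $\F$ can contain \emph{no} rejected arguments at all (a rejected argument would need a self-loop), so $\F\in\XAF^f_\sigma \Rightarrow \F\in\CAF^f_\sigma$; this is exactly item 5 for $\sigma\in\{\adm,\stb,\semi,\prf,\stg,\com\}$, and combined with the reverse direction it gives item 4 for the uniquely defined semantics $\grd,\id,\eag$ (where the converse holds because a unique-extension, self-loop-free AF that is compact has every argument in its single extension, which forces every potential conflict to be a syntactic attack since non-attacking distinct arguments in a conflict-free-based unique semantics would both be accepted or jointly acceptable --- I would spell out the short argument using $\Ext_{\grd}(\F)=\{E\}$ with $A=E$, whence $E$ conflict-free and there is simply nothing to be implicitly in conflict). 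Item 3 is immediate: a self-loop-free AF has $\Ext_{\cf}(\F)$ downward-closed and covering all of $A$ (singletons are conflict-free), and for $\nav$ one uses that $\Ext_{\nav}$ still covers $A$; analyticality for $\cf$/$\nav$ then holds because two arguments are in conflict iff they attack each other (a non-attacking pair is conflict-free, hence jointly in some naive extension). I would present item 3 first, then 5, then derive 4.

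\textbf{Items 1--2 (arbitrary finite case).} Item 1 asserts $\CAF^f_\sigma \subset \XAF^f_\sigma$ for $\sigma\in\{\grd,\id,\eag,\nav,\cf\}$. The inclusion $\subseteq$: if $\F=(A,R)$ is $\sigma$-compact, every argument appears in some $\sigma$-extension, so there are no rejected arguments and in particular no self-conflicts; for these five semantics one then checks directly that every conflict between distinct arguments is explicit --- for $\cf,\nav$ this is the observation above; for $\grd,\id,\eag$, compactness forces $R=\emptyset$ (as already noted in the discussion preceding Theorem~\ref{thm:caf_relations}), so the single extension is all of $A$, which is conflict-free, and there are no conflicts whatsoever, hence $\F$ is trivially analytic. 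Strictness ($\subset$) is witnessed by small AFs carrying a self-loop: e.g.\ $\F=(\{a,b\},\{(a,b),(b,a),(b,b)\})$ is $\sigma$-analytic for these $\sigma$ (the self-conflict of $b$ is explicit via its self-loop) but not $\sigma$-compact ($b$ is rejected); I would cite or re-exhibit the example already used in the text for grounded/ideal and adapt it for $\cf$/$\nav$. Item 2 claims incomparability of $\CAF^f_\sigma$ and $\XAF^f_\sigma$ for $\sigma\in\{\adm,\stb,\semi,\prf,\stg,\com\}$; here I would produce two families of witnesses: (a) an AF that is $\sigma$-analytic but not $\sigma$-compact --- again a self-looped rejected argument whose self-conflict is explicit, e.g.\ a copy of the Example~\ref{ex:simple_ex}-style construction with a self-loop added on the would-be rejected vertex; and (b) an AF that is $\sigma$-compact but not $\sigma$-analytic --- the AF $\F$ drawn just before Section~\ref{sec:analytic_afs} (the three $x_i$'s mutually attacking, each $x_i$ attacking $y_i$ and $y_{i+1}$) is stable-compact yet has an implicit conflict between $y_1$ and $y_2$, and analogous constructions work for the other admissibility-based semantics. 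For full details of (b) across all six semantics I would defer to \cite[Propositions 5--8]{compactj} as the text already does.

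\textbf{Main obstacle.} The routine-but-nontrivial part is item 2, specifically producing, for \emph{each} of the six admissibility-based semantics simultaneously, a single compact-but-not-analytic witness and a single analytic-but-not-compact witness; the analytic-but-not-compact direction is uniform and easy (add a self-loop to a rejected argument), but the compact-but-not-analytic direction requires a construction in which some semantical conflict is genuinely not backed by an attack \emph{while} every argument still appears in some extension --- the $x_i$/$y_i$ gadget handles $\stb$ (and hence propagates to $\stg,\semi,\prf$ via the extension-set coincidences and Lemmas~\ref{lemma:subset}/\ref{lemma:xaf_subset}), but I would need to double-check $\adm$ and $\com$ separately, since there compactness is easier to achieve (the empty set and more sets are admissible/complete) yet implicit conflicts can persist; this case-bookkeeping, rather than any deep idea, is where the real work lies, and it is exactly what is outsourced to \cite[Proposition 5--8]{compactj}.
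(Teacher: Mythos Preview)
Your approach is correct and matches the paper's own treatment, which records precisely your key structural fact (``rejected arguments have to be self-defeating in case of analytic frameworks'') and then defers all details to \cite[Propositions~5--8]{compactj}. Your plan is in fact more explicit than what the paper provides.

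One concrete slip to fix: the witness you propose for strictness in item~1, namely $\F=(\{a,b\},\{(a,b),(b,a),(b,b)\})$, does \emph{not} work for grounded semantics. Here $\Ext_{\grd}(\F)=\{\emptyset\}$ (neither $a$ nor $b$ is unattacked, and $\Gamma_\F(\emptyset)=\emptyset$), so $a$ is rejected as well; since $(a,a)\notin R$, the self-conflict of $a$ is implicit and $\F\notin\XAF^f_{\grd}$. The example the paper cites at this point is used to separate $\XAF^f_{\grd}$ from $\XAF^f_{\id}$, not to witness $\CAF^f_{\grd}\subsetneq\XAF^f_{\grd}$. A uniform witness that works for all five semantics in item~1 is simply $(\{a\},\{(a,a)\})$: the unique extension is $\emptyset$ (or $\{\emptyset\}$) for each of $\grd,\id,\eag,\nav,\cf$, the sole conflict $(a,a)$ is explicit, and $a$ is rejected. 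Also, for the inclusion $\CAF^f_\sigma\subseteq\XAF^f_\sigma$ with $\sigma\in\{\cf,\nav\}$ you should make explicit that $\sigma$-compactness already forces the AF to be self-loop-free (a self-looping argument lies in no conflict-free set), after which your item~3 argument applies verbatim.
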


We now precisely formalize the notions of realizibility as well as signatures relativised to finite, analytic AFs. This can be formally done via instantiating Definitions \ref{def:realizable} and \ref{def:signatures} with $\m{C} = \XAF^f_\sigma$.

\begin{definition}\label{def:analyticrealizable}
Given a semantics $\sigma: \m{F}\rightarrow 2^{\left(2^\m{U}\right)^n}$. A set $\Ss\subseteq \left(2^\m{U}\right)^n$ is \textit{finitely, analytically} $\sigma$-\textit{realizable} if there is an AF~$\F\in\XAF^f_\sigma$, s.t.\ $\sigma(\F)=\Ss$.

\end{definition}

\begin{definition} \label{def:signatures_finite_analytic}
Given a semantics $\sigma$. The finite, analytic $\sigma$-signature is defined as $\left\{\sigma(\F) \mid \F\in\XAF^f_\sigma \right\}$ abbreviated by $\Sigma_{\sigma}^{f,x}$. 
\end{definition}

\subsection{The Loss or Stability of Expressive Power}

Clearly, every set in the finite, analytic signature of a semantics
is also contained in the finite signature. Remember that in case of compact AFs we do not lose any expressive power if considering the uniquely defined grounded, ideal and eager semantics as well as naive semantics and conflict-free sets (Theorem~\ref{prop:compact_signatures}). These equal expressiveness results carry over to analytic AFs and moreover, even stable and stage semantics may realize the same sets.
For instance, consider again the non-analytic AF $\F$ as introduced in Example~\ref{ex:simple_ex}. One may easily verify that adding an attack from $c$ to $d$ or vice versa yields an AF $\F'$ analytic for stable semantics which does not change the set of stable extensions. 
However, in general it is not that easy to make implicit conflicts explicit but it was shown that the use of additional arguments indeed allows one to turn any finite framework in an analytical one without changing the set of stable or stage extensions, respectively \cite[Proposition 28, Theorem 29]{compactj}. 
For the sake of completeness, we mention that it was an open question for a while, known as \textit{Explicit Conflict Conjecture} \cite{compact}, whether it is possible, under stable semantics, to translate a given AF into an equivalent analytic one without adding further arguments. In \cite{compactj} the conjecture was refuted for stable and even stage semantics. For the remaining semantics, i.e.\ admissible, semi-stable, preferred and complete semantics the conjecture does not hold either since in case of these semantics we even have that the finite, analytic signature is a strict subset of the corresponding finite one. This means, the full range of expressiveness indeed relies on the use of implicit conflicts. Consider the following example firstly presented in \cite[Example 6]{compactj}.

\begin{example} Take into account the AF $\F=(A,R)$ as depicted below.

\vspace*{-0.8cm}
	\begin{tikzpicture}[scale = 1.3]
		\path
			(0,-1)node[arg,white]{$a_1$}
			++(-0.6,-2)node (F){$\F:$}
			(3,-3)node[arg,white]{$u_3$}
			;
		\foreach \x in{1,2,3}{
			\path (0,-\x)node[arg](a\x){$a_\x$}
				++(1,0)node[arg](b\x){$b_\x$}
				++(1,0)node[arg](x\x){$x_\x$}
				++(1,0)node[arg](u\x){$u_\x$}
			;
			\path [->, thick]
				(a\x) edge[bend right=20] (b\x)
				(b\x) edge[bend right=20] (a\x)
				(b\x) edge (x\x)
				(x\x) edge (u\x)
				;
		}
		\path [->, thick]
			(x1) edge (x2)
			(x2) edge (x3)
			(x3) edge[bend right=40] (x1)
			;
		\path [-, thick, dashed, gray]
			(a1) edge[out=-45,in=135] (x2)
			(a2) edge[out=-45,in=135] (x3)
			(a3) edge[out=135,in=135,looseness=2] (x1)
			;
\end{tikzpicture}

Formally, we have
  \begin{align*}
    A=&\left\{ a_i,b_i,x_i,u_i\mid i\in\{1,2,3\} \right\} \text{ and}\\
    R=&\left\{ (a_i,b_i),(b_i,a_i),(b_i,x_i),(x_i,u_i)\mid i\in\{1,2,3\} \right\}
\cup\left\{ (x_1,x_2),(x_2,x_3),(x_3,x_1) \right\}.
  \end{align*}

Regarding the extension-based version of preferred semantics we obtain the set $\Ss = \Ext_{\prf}(\F) = $ \linebreak $\{S_a,S_b,A_1,A_2,A_3,B_1,B_2,B_3\}$ with
\begin{align*}
    S_a&=\left\{ a_1,a_2,a_3 \right\}&
    S_b&=\left\{ b_1,b_2,b_3,u_1,u_2,u_3 \right\}\\
    A_1&=\left\{ a_2,a_3,b_1,x_2,u_1,u_3 \right\} &
    B_1&=\left\{ a_1,b_2,b_3,x_1,u_2,u_3 \right\}\\
    A_2&=\left\{ a_1,a_3,b_2,x_3,u_1,u_2 \right\}&
    B_2&=\left\{ a_2,b_1,b_3,x_2,u_1,u_3 \right\}\\
    A_3&=\left\{ a_1,a_2,b_3,x_1,u_2,u_3 \right\}&
    B_3&=\left\{ a_3,b_1,b_2,x_3,u_1,u_2 \right\}
  \end{align*}
	
We observe three implicit conflicts indicated by dashed lines. Consequently, $\F$ is not analytic w.r.t.\ preferred semantics. Moreover, we claim that $\Ss$ is not analytically $\prf$-realizable at all. For a contradiction we assume that there exists an AF $\G \in \XAF_\prf^f$, s.t.\ $\Ext_{\prf}(\G) = \Ss$.  We now investigate this hypothetical AF~$\G$. The main idea is to show that if the conflict between $a_1$ and $x_2$ is made explicit, then $\Ss\neq \Ext_{\prf}(\G)$. First, note that $\G$ contains at least all arguments in $A$ since $\Args_{\Ss} = A$. Due to $A_3$ and $B_3$ we deduce that
$S_a\cup\{u_2\}$ is conflict-free in $\G$. Furthermore, due to $A_1$, the admissibility of $S_a$ in $\G$ and the assumption that all conflicts has to be explicit, we infer that $a_1$ attacks $x_2$. Moreover, in consideration of $\Ss$, it is easy to see that $x_2$ is the only possible attacker of $u_2$ among $\Args_\Ss$. This implies that $S_a$ defends $u_2$ against all arguments in $\Args_{\Ss}$. Finally, any additional argument $z \notin \Args_\Ss$ in $\G$ must be attacked by
$S_a$ since $\G$ is analytic w.r.t.\ preferred semantics and $S_a$ must be admissible.
This causes $S_a \cup \{u_2\}$ to be admissible in $\G$ and
hence, $S_a$ cannot be preferred in $\G$.
In summary, any AF realizing $\Ss$ has to be non-analytic for preferred
semantics, i.e.\ $\Ss \in \Sigma_{\Ext_{\prf}}^f \sm \Sigma_{\Ext_{\prf}}^{f,x}$.
	
\end{example}

We proceed with the main theorem comparing finite signatures with their corresponding analytical ones.

\begin{theorem}[\cite{compactj}]
\label{prop:analytic_signatures}
It holds that
\begin{enumerate}
\item
$\Sigma^{f,x}_{\Ext_\sigma} = \Sigma_{\Ext_\sigma}^f$ for $\sigma \in \{\cf,\nav,\grd,\id,\eag,\stb,\stg\}$, and
\item
$\Sigma^{f,x}_{\Ext_\sigma} \subset \Sigma_{\Ext_\sigma}^f$ for $\sigma \in \{\adm,\semi,\prf,\com\}$.
\end{enumerate}
\end{theorem}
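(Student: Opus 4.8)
\textbf{Proof plan for Theorem~\ref{prop:analytic_signatures}.}
The plan is to treat the two items separately, and within each item to handle the individual semantics, exploiting the fact that for $\sigma\in\{\cf,\nav,\grd,\id,\eag\}$ the claim reduces to results already established for compact AFs (Theorem~\ref{prop:compact_signatures}) together with Proposition~\ref{prop:relationXAFCAF}, so that only the genuinely new cases are $\sigma\in\{\stb,\stg\}$ (item~1) and $\sigma\in\{\adm,\semi,\prf,\com\}$ (item~2). For the easy cases I would first observe that $\Sigma^{f,x}_{\Ext_\sigma}\subseteq\Sigma^f_{\Ext_\sigma}$ holds trivially for every semantics since analytic AFs form a subclass of all finite AFs; thus item~1 only needs the reverse inclusion. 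For $\sigma\in\{\cf,\nav,\grd,\id,\eag\}$ I would note that the canonical witnessing frameworks used to prove the finite-signature characterizations (Theorem~\ref{the:charcf}, Theorem~\ref{the:charid}) are already analytic: $\F^\cf_\Ss$ is self-loop-free and symmetric, and by construction draws an attack between $a$ and $b$ precisely when $(a,b)\notin\Pairs_\Ss$, hence between arguments that are in conflict — so it is analytic for $\cf$ and $\nav$; and $\F_E=(E,\emptyset)$ is trivially analytic for the uniquely defined semantics. This yields $\Sigma^f_{\Ext_\sigma}\subseteq\Sigma^{f,x}_{\Ext_\sigma}$ and hence equality.

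For the new equalities $\Sigma^{f,x}_{\Ext_\sigma}=\Sigma^f_{\Ext_\sigma}$ with $\sigma\in\{\stb,\stg\}$, the key is the transformation result cited just before the theorem (\cite[Proposition~28, Theorem~29]{compactj}): every finite AF can be turned into an equivalent analytic one under stable, respectively stage, semantics by adding auxiliary arguments. So given any $\Ss\in\Sigma^f_{\Ext_\sigma}$, pick a finite $\F$ with $\Ext_\sigma(\F)=\Ss$, apply the transformation to obtain an analytic $\F'$ with $\Ext_\sigma(\F')=\Ss$, and conclude $\Ss\in\Sigma^{f,x}_{\Ext_\sigma}$. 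I would state this cleanly as a lemma reference and not reprove the transformation. For item~2, the strict inclusion $\Sigma^{f,x}_{\Ext_\sigma}\subset\Sigma^f_{\Ext_\sigma}$ for $\sigma\in\{\adm,\semi,\prf,\com\}$ splits into: (a) the $\subseteq$-direction, again trivial; (b) exhibiting, for each such $\sigma$, an extension-set realizable by some finite AF but not by any finite analytic one. For $\prf$ I would use the AF $\F$ over $\{a_i,b_i,x_i,u_i\}_{i=1,2,3}$ from \cite[Example~6]{compactj} together with the argument shown there that its preferred extensions $\Ss$ cannot be analytically realized — the crux being that making the implicit conflict between $a_1$ and $x_2$ explicit forces $S_a$ to become non-maximal. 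For $\com$ I would lift the $\prf$-example by noting $\XAF^f_\prf=\XAF^f_\com$ (Theorem~\ref{thm:XAFrelations}) and that the complete extensions of the relevant AF coincide appropriately, or use a tailored witness from \cite{compactj}. For $\semi$ and $\adm$ I would cite the corresponding witnessing extension-sets from \cite{compactj}; the point is only that they lie in the finite signature but violate an analyticality-forced constraint.

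The main obstacle, and the only part requiring real work rather than bookkeeping, is the non-realizability arguments in item~2 — showing that a concrete $\Ss$ admits \emph{no} finite analytic AF realizing it under $\sigma$. This cannot be done by brute force because there is no a priori bound on the number of auxiliary arguments, so the proof must be structural: assume a hypothetical analytic $\G$ with $\Ext_\sigma(\G)=\Ss$, deduce that $\G$ contains at least $\Args_\Ss$, then exploit that every conflict in $\G$ is explicit to pin down specific attacks (e.g.\ that $a_1$ must attack $x_2$), and finally derive that some purported extension of $\Ss$ is not $\sigma$-maximal (or not complete) in $\G$, contradicting $\Ext_\sigma(\G)=\Ss$ — the delicate step being to rule out that \emph{added} arguments outside $\Args_\Ss$ could repair the defect, which uses analyticality once more (any added argument must be attacked by $S_a$ since $S_a$ is admissible and in conflict with it explicitly). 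Since all these witnesses and their non-realizability proofs are already in the literature, I would present item~2 largely by citation, giving the $\prf$-case in full as the representative argument and pointing to \cite{compactj} for $\adm,\semi,\com$.
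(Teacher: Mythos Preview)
Your proposal is correct and mirrors the paper's own treatment: the paper does not give a self-contained proof of this theorem but cites \cite{compactj}, and the surrounding discussion follows exactly your decomposition---the easy cases $\sigma\in\{\cf,\nav,\grd,\id,\eag\}$ are handled via the canonical witnesses (or equivalently via Theorem~\ref{prop:compact_signatures} plus $\CAF^f_\sigma\subset\XAF^f_\sigma$ from Proposition~\ref{prop:relationXAFCAF}), the cases $\stb,\stg$ via the transformation results \cite[Proposition~28, Theorem~29]{compactj}, and the strict inclusions of item~2 via the worked $\prf$-example \cite[Example~6]{compactj} plus further witnesses in \cite{compactj}. Your identification of the only nontrivial step---the structural non-realizability argument ruling out auxiliary arguments via analyticality---is exactly the crux the paper highlights in its discussion of that example.
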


In the following we present characterization theorems for finite, analytic signatures or at least necessary properties for being finitely, analytically realizable. All results can be verified via combining the main theorem above as well as the already presented characterization theorems for finite signatures, namely Theorems~\ref{the:charcf},~\ref{the:charadm}~and~\ref{the:charid}. 

\begin{theorem}
\label{the:charanalyticsig}
Given a set $\Ss \subseteq 2^\m{U}$, then
\begin{enumerate}

\item $\Ss\in \Sigma_{\Ext_{\cf}}^{f,x} \ToT \Ss \text{ is a non-empty, downward-closed and tight extension set,}$
\item $\Ss\in \Sigma_{\Ext_{\nav}}^{f,x} \ToT \Ss \text{ is a non-empty, incomparable extension set and } \dcl{(\Ss)} \text{ is tight},$
\item $\Ss\in \Sigma_{\Ext_{\grd}}^{f,x}  \ToT \Ss \text{ is an extension set with} \card{\Ss} = 1,$
\item $\Ss\in \Sigma_{\Ext_{\id}}^{f,x}  \ToT \Ss \text{ is an extension set with} \card{\Ss} = 1,$
\item $\Ss\in \Sigma_{\Ext_{\eag}}^{f,x}  \ToT \Ss \text{ is an extension set with} \card{\Ss} = 1,$
\item $\Ss\in \Sigma_{\Ext_{\stb}}^{f,x}  \ToT \Ss \text{ is a incomparable and tight extension set,}$
\item $\Ss\in \Sigma_{\Ext_{\stg}}^{f,x}  \ToT \Ss \text{ is a non-empty, incomparable and tight extension set}$ and
\item $\Ss\in \Sigma_{\Ext_{\adm}}^{f,x}   \To \Ss \text{ is a conflict-sensitive extension set containing }\emptyset,$
\item $\Ss\in \Sigma_{\Ext_{\prf}}^{f,x}  \To \Ss \text{ is a non-empty, incomparable and conflict-sensitive extension set,}$
\item $\Ss\in \Sigma_{\Ext_{\semi}}^{f,x}  \To \Ss \text{ is a non-empty, incomparable and conflict-sensitive extension set.}$

\end{enumerate}
\end{theorem}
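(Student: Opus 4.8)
The plan is to derive all ten items by combining Theorem~\ref{prop:analytic_signatures} with the already established characterizations of the finite signatures, namely Theorems~\ref{the:charcf}, \ref{the:charadm} and~\ref{the:charid}. No new constructions are required: the content of the present statement is already ``pre-packaged'' in those earlier results, and the only work is to transport each of them along the (in)equalities supplied by Theorem~\ref{prop:analytic_signatures}.

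\textbf{Step 1 (semantics with coinciding signatures).} First I would treat $\sigma \in \{\cf,\nav,\grd,\id,\eag,\stb,\stg\}$, i.e.\ items 1--7. By Theorem~\ref{prop:analytic_signatures}, item~1, we have $\Sigma^{f,x}_{\Ext_\sigma} = \Sigma^f_{\Ext_\sigma}$ for each of these semantics. Hence, for an extension-set $\Ss$, membership in $\Sigma^{f,x}_{\Ext_\sigma}$ is equivalent to membership in $\Sigma^f_{\Ext_\sigma}$, and each biconditional follows verbatim: items 1--2 and 6--7 are exactly Theorem~\ref{the:charcf}, while items 3--5 are exactly Theorem~\ref{the:charid}, now read with $\Sigma^{f,x}$ in place of $\Sigma^f$.

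\textbf{Step 2 (semantics with strict containment).} Next I would treat $\sigma \in \{\adm,\prf,\semi\}$, i.e.\ items 8--10. By Theorem~\ref{prop:analytic_signatures}, item~2, we only have $\Sigma^{f,x}_{\Ext_\sigma} \subset \Sigma^f_{\Ext_\sigma}$, so merely the necessary (``$\To$'') direction survives: if $\Ss \in \Sigma^{f,x}_{\Ext_\sigma}$ then $\Ss \in \Sigma^f_{\Ext_\sigma}$, and Theorem~\ref{the:charadm} then yields, respectively, that $\Ss$ is a conflict-sensitive extension set containing $\emptyset$ (for $\adm$), resp.\ a non-empty, incomparable and conflict-sensitive extension set (for $\prf$ and $\semi$). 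The converse directions are deliberately omitted, since---as the preceding example of a finitely but not finitely-analytically $\prf$-realizable set shows---they genuinely fail here; this is precisely the strictness asserted in Theorem~\ref{prop:analytic_signatures}, item~2. (Note that $\com$ does not appear in the statement at all, matching the fact that even the plain finite complete signature has no known exact characterization.)

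\textbf{Main obstacle.} Under the stated assumptions there is essentially no obstacle: everything reduces to invoking the cited theorems in the correct direction and bookkeeping which semantics fall into which of the two cases of Theorem~\ref{prop:analytic_signatures}. If one instead wanted the statement from first principles, the genuinely hard part would be re-proving Theorem~\ref{prop:analytic_signatures} itself: establishing $\Sigma^{f,x}_{\Ext_\sigma} = \Sigma^f_{\Ext_\sigma}$ for stable and stage requires a general construction turning any finite AF into a $\sigma$-equivalent analytic one (introducing auxiliary arguments to make all implicit conflicts explicit), while the strict containments for $\adm,\prf,\semi$ (and $\com$) require exhibiting and analysing a concrete extension-set that is realizable but provably not analytically realizable---both of which are carried out in~\cite{compactj}.
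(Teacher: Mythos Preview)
Your proposal is correct and follows essentially the same approach as the paper, which states explicitly that ``all results can be verified via combining the main theorem above [Theorem~\ref{prop:analytic_signatures}] as well as the already presented characterization theorems for finite signatures, namely Theorems~\ref{the:charcf},~\ref{the:charadm}~and~\ref{the:charid}.'' Your two-step split according to the two cases of Theorem~\ref{prop:analytic_signatures} is precisely the intended argument.
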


\subsection{Comparing Finite, Analytic Signatures and Final Remarks}

So far we have compared finite signatures and finite, analytic signatures
for the semantics under consideration. We have seen, for example, that preferred and semi-stable semantics can realize
strictly more when allowing the use of implicit conflicts, while this is not the case for stable and stage semantics.
In the following we relate the finite, analytic signatures of all considered semantics. Remember that we observed a considerable variety in the relations between incomparable semantics if sticking from finite to finite, compact signatures (cf.\ Figures~\ref{fig:venn}~and~\ref{fig:venn2}). However, in the analytic case we have slight differences only as illustrated in Figure~\ref{fig:venn_analytic} (for incomparable semantics)  and formally stated in Theorem~\ref{thm:analytic_signatures}. For instance, preferred and semi-stable signatures do not coincide anymore as shown by the following example taken from \cite[Figure 9, Proof of Theorem 34]{compactj}.

\begin{example} Consider the following AF $\F$ as depicted below.
\begin{center}
  \begin{tikzpicture}[scale=1.34,>=stealth]
    \path
    +( 30:1) node[arg](x1){$x_1$}
    +( 90:1.3) node[arg](x2){$x_2$}
    +(150:1) node[arg](x3){$x_3$}
    ++(0,-.5)
		+(197:1.6) node(F){$\F:$}
    +(210:1) node[arg](x4){$x_4$}
		
    +(270:1.3) node[arg](x5){$x_5$}
    +(330:1) node[arg](x6){$x_6$}
    ++(3, .5)
    +( 30:1) node[arg](z1){$z_1$}
    +( 90:1.3) node[arg](z2){$z_2$}
    +(150:1) node[arg](z3){$z_3$}
    ++(0,-.5)
    +(210:1) node[arg](z4){$z_4$}
    +(270:1.3) node[arg](z5){$z_5$}
    +(330:1) node[arg](z6){$z_6$}
    ++(3, .5)
    +( 30:1) node[arg](y1){$y_1$}
    +( 90:1.3) node[arg](y2){$y_2$}
    +(150:1) node[arg](y3){$y_3$}
    ++(0,-.5)
    +(210:1) node[arg](y4){$y_4$}
    +(270:1.3) node[arg](y5){$y_5$}
    +(330:1) node[arg](y6){$y_6$}
    ;
    \path [->, thick]
    (x1) edge[bend right] (z1)
    (x2) edge (z2)
    (x3) edge[bend right] (z3)
    (x4) edge[bend left] (z4)
    (x5) edge (z5)
    (x6) edge[bend left] (z6)
    (y1) edge[bend left] (z1)
    (y2) edge (z2)
    (y3) edge[bend left] (z3)
    (y4) edge[bend right] (z4)
    (y5) edge (z5)
    (y6) edge[bend right] (z6)
    ;
    \path [->, thick]
    (z1) edge (z2)
    (z2) edge (z3)
    (z3) edge (z1)
    (z4) edge (z5)
    (z5) edge (z6)
    (z6) edge (z4)
    ;
    \path [<->,thick]
    (x1) edge (x2)
    (x2) edge (x3)
    (x1) edge (x3)
    (x4) edge (x5)
    (x5) edge (x6)
    (x4) edge (x6)
    (y1) edge (y2)
    (y2) edge (y3)
    (y1) edge (y3)
    (y4) edge (y5)
    (y5) edge (y6)
    (y4) edge (y6)
    ;
    \foreach \a in {1,2,3}{
      \foreach \b in {4,5,6}{
	\path [<->,thick]
	(x\a) edge (x\b)
	(y\a) edge (y\b)
	;
      }
    }
  \end{tikzpicture}
\end{center}  

The preferred extension of $\F$ can be compactly presented via a cyclic successor function$\ssucc$. More precisely, if $\ssucc(1)=2,\ssucc(2)=3,\ssucc(3)=1$ and  
  $\ssucc(4)=5,\ssucc(5)=6,\ssucc(6)=4$, then 
  $\Ext_{\prf}(\F) = \Ss = \Ss_0\cup\Ss_1\cup\Ss_2$ with
  \begin{align*}
    \Ss_0&=\left\{\left\{x_i,y_j,z_{\ssucc(i)},z_{\ssucc(j)}\right\}\!\mid i\in\left\{1,2,3\right\}\!,j\in\left\{4,5,6\right\}\!\textit{\! or }i\in\left\{4,5,6\right\}\!,j\in\left\{1,2,3\right\}\right\}\!,\\
    \Ss_1&=\left\{\left\{x_i,y_i,z_{\ssucc(i)}\right\}\!\mid i\in\left\{1,2,3,4,5,6\right\}\right\} \text{ and}\\
    \Ss_2&=\left\{\left\{x_i,y_{\ssucc(i)},z_{\ssucc(\ssucc(i))}\right\},\left\{x_{\ssucc(i)},y_i,z_{\ssucc(\ssucc(i))}\right\}\!\mid i\in\left\{1,2,3,4,5,6\right\}\right\}.
  \end{align*}
This means, $\F$ is $\prf$-analytic and therefore, $\Ss\in\Sigma^{f,x}_{\Ext_\prf}$. We show now that $\Ss~\notin~\Sigma^{f,x}_{\Ext_\semi}$. Assume that there is some $\G = (B,S) \in\XAF_\semi^f$ with $\Ext_\prf(\G)~=~\Ss$.
  We take a look at $\Ss_1$ and more specifically
  $\{x_1,y_1,z_2\}\in~\Ss_1$. Now we need an explicit conflict between $x_1$ and
  $x_4$, but in the selected set only $x_1$ can possibly defend against this
  attack, hence $(x_1,x_4)\in S$. The same argument works for $x_1$ and $x_3$
  as well as $z_2$ and~$z_3$, meaning that also $(x_1,x_3),(z_2,z_3)\in S$.
  For symmetry reasons
  $\{(x_i,x_j),(x_j,x_i),(y_i,y_j),(y_j,y_i)\mid
  i\in\{1,2,3\},j\in\{4,5,6\}\}\subseteq S$ and $\{(x_{\ssucc(i)},x_i),(z_i,z_{\ssucc(i)})\mid
  i\in\{1,2\dots 6\}\}\subseteq S$.

  We take a look at $\Ss_2$ and more specifically $\{x_1,y_2,z_3\}\in\Ss_2$. As
  there should be an explicit conflict between $x_1$ and $x_2$ with only $x_1$
  possibly defending this extension against $x_2$ we need $(x_1,x_2)\in S$.
  Further as in this set only $y_2$ and $z_3$ can possibly attack $z_2$ we have
  the set $\{y_2,z_3\}$ attacking $z_2$. For symmetry reasons
  $\{(x_i,x_{\ssucc(i)}),(y_i,y_{\ssucc(i)})\mid i\in\{1,2\dots 6\}\}\subseteq S$ and
  each set $\{x_i,z_{\ssucc(i)}\},\{y_i,z_{\ssucc(i)}\}$ for $i\in\{1,2\dots 6\}$ attacks~$z_i$.

  Finally we take a look at $\Ss_0$ and specifically the set
  $I = \{x_1,y_4,z_2,z_5\}~\in~\Ss_0$. Since $I$ necessarily is an admissible
  extension in an analytic AF we have that $I$ attacks all rejected
  arguments. By the above observations we now have that $I$ even attacks all
  arguments not being member of $I$ in $\G$, which means that $I$ is a stable
  extension and stable semantics and semi-stable semantics thus coincide on
  $\G$. But then, with $J = \{x_1,y_1,z_2\}\in\Ss_1$ not being in conflict with
  for instance $z_4$ we have that $J$ can not be a stable or semi-stable
  extension in $\G$ concluding
  $\Ss\notin\Sigma^{f,x}_{\Ext_\semi}$. 

\end{example}

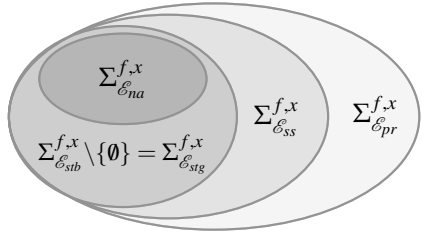
\begin{figure}[H]
  \centering
  \def\firstcircle{(0,0) circle (1.5cm)}
  \def\secondcircle{(0:2cm) circle (1.5cm)}
  \def\prfshape{(1.2cm,0) ellipse (2.7cm and 1.5cm)}
  \def\semishape{(0.6cm,0) ellipse (2.1cm and 1.35cm)}
  \def\stbshape{(0,0) ellipse (1.5cm and 1.2cm)}
  \def\navshape{(0,0.5) ellipse (1.1cm and 0.6cm)}
 \colorlet{circle edge}{gray!85}
  \colorlet{circle area}{gray!10}
  \tikzset{filled/.style={fill=circle area, draw=circle edge, thick,opacity=.5},
      outline/.style={draw=circle edge, thick}}
  \begin{tikzpicture}
    
    \fill[filled,fill=gray!15] \prfshape;
		\fill[filled,fill=gray!35] \semishape;
    \fill[filled,fill=gray!55] \stbshape;
    \fill[filled,fill=gray!75] \navshape;

    \draw[outline] \prfshape node(prs) {};
    \draw[outline] \semishape node(sms) {};
    \draw[outline] \stbshape node(sbs) {};
    \draw[outline] \navshape node (nas) {$\Sigma_{\Ext_\nav}^{f,x}$};
    \node[anchor=west] at (2.9cm,0) {$\Sigma_{\Ext_{\prf}}^{f,x}$};
    \node[anchor=west] at (1.6cm,0) {$\Sigma_{\Ext_{\semi}}^{f,x}$};
	\node at (0,-0.5cm) {\small $\Sigma_{\Ext_{\stb}}^{f,x}{\setminus}\{\emptyset\} = \Sigma_{\Ext_{\stg}}^{f,x}$};
  \end{tikzpicture}
  \caption{Subset Relations between Finite, Analytic Signatures}
  \label{fig:venn_analytic}
\end{figure}

\begin{theorem}[\cite{DunDLW15}]\label{thm:analytic_signatures}

The following relations hold: 

\begin{enumerate}
	\item $\Sigma_{\Ext_\sigma}^{f,x} \subset \Sigma_{\Ext_\nav}^{f,x} \subset \Sigma_{\Ext_\stg}^{f,x} \subset \Sigma_{\Ext_\semi}^{f,x} \subset \Sigma_{\Ext_\prf}^{f,x}$ for $\sigma\in\{\grd,\id,\eag\}$,
	\item $\Sigma_{\Ext_\stb}^{f,x} = \Sigma_{\Ext_\stg}^{f,x} \cup \{\emptyset\}$,
	\item $\Sigma_{\Ext_\cf}^{f,x} \subset \Sigma_{\Ext_\adm}^{f,x}$ and
	\item $\Sigma_{\Ext_\com}^{f,x} \sm \Sigma_{\Ext_\sigma}^{f,x} \neq\emptyset$ and $\Sigma_{\Ext_\sigma}^{f,x} \sm \Sigma_{\Ext_\com}^{f,x} \neq\emptyset$ for $\sigma\in\{\cf,\adm\}$.
	
\end{enumerate}

\end{theorem}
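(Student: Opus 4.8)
The statement to prove is Theorem~\ref{thm:analytic_signatures}, listing four families of relations between the finite, analytic signatures of the considered semantics. My plan is to reduce each relation to its finite (non-analytic) counterpart by exploiting Theorem~\ref{prop:analytic_signatures}, which says that $\Sigma^{f,x}_{\Ext_\sigma} = \Sigma^f_{\Ext_\sigma}$ for $\sigma\in\{\cf,\nav,\grd,\id,\eag,\stb,\stg\}$ while $\Sigma^{f,x}_{\Ext_\sigma}\subset\Sigma^f_{\Ext_\sigma}$ for $\sigma\in\{\adm,\semi,\prf,\com\}$. For item~(2), since both $\stb$ and $\stg$ are among the ``stable'' semantics of Theorem~\ref{prop:analytic_signatures}, we have $\Sigma^{f,x}_{\Ext_\stb}=\Sigma^f_{\Ext_\stb}$ and $\Sigma^{f,x}_{\Ext_\stg}=\Sigma^f_{\Ext_\stg}$, whence item~(2) follows immediately from the already established $\Sigma^f_{\Ext_\stb} = \Sigma^f_{\Ext_\stg}\cup\{\emptyset\}$ (Theorem~\ref{thm:rel}, item~2).

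For item~(1), the chain $\Sigma^{f,x}_{\Ext_\sigma}\subset\Sigma^{f,x}_{\Ext_\nav}\subset\Sigma^{f,x}_{\Ext_\stg}$ for $\sigma\in\{\grd,\id,\eag\}$ again lives entirely in the ``stable'' column, so these three relations transfer verbatim from Theorem~\ref{thm:rel}; I only need to observe that all the separating examples used there (a one-element extension-set witnessing $|\Ss|=1$ that is not naive-realizable, a naive-realizable set that is not stage-realizable via a loop-free canonical framework, etc.) are already loop-free and hence analytic for the relevant semantics by Proposition~\ref{prop:relationXAFCAF}. The non-trivial parts of item~(1) are the two rightmost inclusions $\Sigma^{f,x}_{\Ext_\stg}\subseteq\Sigma^{f,x}_{\Ext_\semi}\subseteq\Sigma^{f,x}_{\Ext_\prf}$ together with their strictness. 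The inclusions follow from Theorem~\ref{the:charanalyticsig}: every member of $\Sigma^{f,x}_{\Ext_\stg}$ is a non-empty, incomparable, tight extension-set, hence by Lemma~\ref{lemma:tight->admclosed} also conflict-sensitive, hence a member of $\Sigma^{f,x}_{\Ext_\semi}=\Sigma^{f,x}_{\Ext_\prf}$ --- wait, these two are \emph{not} equal in the analytic setting, which is exactly the content of the long example preceding the theorem. So the inclusion $\Sigma^{f,x}_{\Ext_\semi}\subseteq\Sigma^{f,x}_{\Ext_\prf}$ needs the realizability constructions: given an analytic AF realizing $\Ss$ under semi-stable semantics, one shows $\Ss$ is also analytically preferred-realizable (e.g.\ via the translation of Dvo\v{r}\'ak--Woltran composed with an argument to remove residual implicit conflicts, or directly from the characterizing-property inclusions). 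Strictness of $\Sigma^{f,x}_{\Ext_\semi}\subset\Sigma^{f,x}_{\Ext_\prf}$ is precisely witnessed by the elaborate $z_i$/$x_i$/$y_i$ example just stated, and strictness of $\Sigma^{f,x}_{\Ext_\stg}\subset\Sigma^{f,x}_{\Ext_\semi}$ and of $\Sigma^{f,x}_{\Ext_\nav}\subset\Sigma^{f,x}_{\Ext_\stg}$ follows from Theorem~\ref{prop:analytic_signatures} combined with Theorem~\ref{thm:rel}: for instance any $\Ss\in\Sigma^f_{\Ext_\stg}\sm\Sigma^f_{\Ext_\nav}$ is automatically in $\Sigma^{f,x}_{\Ext_\stg}\sm\Sigma^{f,x}_{\Ext_\nav}$.

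For items~(3) and~(4) I would proceed analogously. Item~(3), $\Sigma^{f,x}_{\Ext_\cf}\subset\Sigma^{f,x}_{\Ext_\adm}$: the inclusion part is easiest to read off Theorem~\ref{the:charanalyticsig} --- a non-empty, downward-closed, tight extension-set is conflict-sensitive (Lemma~\ref{lemma:tight->admclosed}) and contains $\emptyset$ (downward-closed and non-empty), so it satisfies the necessary direction for $\Sigma^{f,x}_{\Ext_\adm}$; but since the latter is only a ``$\To$'', I instead argue directly: take a conflict-free-analytic AF $\F$ realizing $\Ss$, it is self-loop-free by Proposition~\ref{prop:relationXAFCAF}, and the canonical construction $\F^\cf_\Ss$ (which realizes $\Ss$ under $\cf$) can be refined to the canonical defense-AF $\F^\ddef_\Ss$ realizing $\dcl(\Ss)=\Ss$ under $\adm$; since $\F^\ddef_\Ss$ is analytic for $\adm$ when $\Ss$ is conflict-sensitive, $\Ss\in\Sigma^{f,x}_{\Ext_\adm}$. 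Strictness follows because $\{\emptyset\}$ is in both but some conflict-sensitive set containing $\emptyset$ that is \emph{not} downward-closed (e.g.\ $\{\emptyset,\{a,b\}\}$) lies in $\Sigma^{f,x}_{\Ext_\adm}\sm\Sigma^{f,x}_{\Ext_\cf}$ --- and one must check the realizing AF is analytic, which is routine. Item~(4) is the incomparability of $\Sigma^{f,x}_{\Ext_\com}$ with $\Sigma^{f,x}_{\Ext_\cf}$ and $\Sigma^{f,x}_{\Ext_\adm}$, which I would obtain by transporting the witnessing examples from the non-analytic incomparability statement (implicit in \cite{DunDLW15}) through Theorem~\ref{prop:analytic_signatures}; here one must verify that the separating AFs on the $\cf$/$\adm$ side are analytic (automatic by the ``$=$'' cases) and that a complete-analytic AF realizing something outside $\Sigma^{f,x}_{\Ext_\adm}$ exists --- this last point is the one place where an explicit new construction, or a careful reuse of a known complete-realizability example combined with the analyticality check, is genuinely needed.

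The main obstacle will be the inclusion $\Sigma^{f,x}_{\Ext_\semi}\subseteq\Sigma^{f,x}_{\Ext_\prf}$ and, to a lesser extent, the complete-semantics incomparability in item~(4). Everything else is bookkeeping: either a relation lives purely among the ``rigid'' semantics (where $\Sigma^{f,x}=\Sigma^f$ and Theorem~\ref{thm:rel} applies directly) or it reduces to a characterizing-property implication via Theorem~\ref{the:charanalyticsig} plus Lemma~\ref{lemma:tight->admclosed}. For the hard inclusion, I expect the cleanest route is: take $\F\in\XAF^f_\semi$ with $\Ext_\semi(\F)=\Ss$; then $\Ss$ is non-empty, incomparable, conflict-sensitive; apply the analytic realizability construction for preferred semantics (the canonical defense-AF $\F^\ddef_\Ss$, which by the reasoning after Proposition~\ref{theo_adm_signature} realizes $\Ss$ under $\prf$), and verify $\F^\ddef_\Ss$ is analytic for $\prf$ --- the self-defeating $\alpha_{a\gamma}$ arguments are in conflict with themselves explicitly (self-loops), and all other conflicts are the explicit symmetric attacks of $\F^\cf_\Ss$, so no implicit conflict survives. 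This closes the inclusion, and the preceding displayed example closes strictness.
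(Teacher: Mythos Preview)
The paper does not give its own proof of this theorem; it is stated with a citation and no argument. So there is nothing to compare against directly, and I evaluate your plan on its own terms.

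Your high-level strategy is sound: reduce everything possible to the finite (non-analytic) signature relations via Theorem~\ref{prop:analytic_signatures}, and treat separately only those inclusions that involve a semantics $\sigma\in\{\adm,\semi,\prf,\com\}$ for which $\Sigma^{f,x}_{\Ext_\sigma}\subsetneq\Sigma^f_{\Ext_\sigma}$. Items~(2) and the left half of item~(1) go through exactly as you say. The identification of $\Sigma^{f,x}_{\Ext_\stg}\subset\Sigma^{f,x}_{\Ext_\semi}\subset\Sigma^{f,x}_{\Ext_\prf}$ as the crux is also correct.

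However, your proposed verification that $\F^{\ddef}_\Ss$ is analytic for preferred semantics is \emph{wrong}, and this breaks the argument for $\Sigma^{f,x}_{\Ext_\semi}\subseteq\Sigma^{f,x}_{\Ext_\prf}$. The auxiliary arguments $\alpha_{a\gamma}$ are self-attacking and hence appear in no preferred extension. Consequently, for any two distinct auxiliary arguments $\alpha_{a\gamma}$ and $\alpha_{a'\gamma'}$ we have $(\alpha_{a\gamma},\alpha_{a'\gamma'})\notin\Pairs_{\Ext_\prf(\F^{\ddef}_\Ss)}$, so they are in conflict; yet Definition~\ref{def_adm_constr} places \emph{no} attack between distinct $\alpha$-arguments. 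Likewise, an argument $b\in\Args_\Ss$ with $b\neq a$ and $b\notin\gamma$ is in conflict with $\alpha_{a\gamma}$ but neither attacks nor is attacked by it. Hence $\F^{\ddef}_\Ss$ has implicit conflicts and is not analytic. Your sentence ``all other conflicts are the explicit symmetric attacks of $\F^\cf_\Ss$'' overlooks precisely these cases. The fix is not hard---augment $\F^{\ddef}_\Ss$ by symmetric attacks among all $\alpha$-arguments and by attacks from every $b\in\Args_\Ss$ to every $\alpha$-argument; this does not alter the preferred extensions (self-attackers remain rejected, and no new defence obligations arise for the $\Args_\Ss$-arguments) but does make the framework analytic---yet you must carry this out, and the analogous issue arises for your sketch of $\Sigma^{f,x}_{\Ext_\stg}\subseteq\Sigma^{f,x}_{\Ext_\semi}$ if you route it through $\F^{\stb}_\Ss$. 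For item~(4), the same caution applies: the $\com$-side witnesses need an explicit analyticality check that you have not supplied.
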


\section{Remarks on Unrestricted AFs and Intertranslatability} \label{sec:sigunrestrict}
Recently, some first results regarding expressibility w.r.t.\ unrestricted frameworks were presented in \cite{BauS17}. Remember that the set of unrestricted frameworks, abbreviated by \m{F}, contains all AFs $\F = (A,R)$, s.t.\ $A\subseteq\m{U}$. This means, \m{F} contains finite as well as infinite AFs and especially, AFs possessing all available arguments. It is obvious that signatures w.r.t.\ unrestricted frameworks contain more realizable sets then their finite counterparts since finite AFs may realize finite as well as finitely many extensions only. The following definition formally captures all considered types of signatures (cf.\ Definitions~\ref{def:signatures_finite},~\ref{def:signatures_finite_compact}~and~\ref{def:signatures_finite_analytic}) without any finite assumption.

\begin{definition} 
Given a semantics $\sigma$. We call the set $S$ the 
\begin{enumerate}
	\item (\textit{unrestricted}) $\sigma$-\textit{signature} if $S = \left\{\sigma(\F) \mid \F\in\m{F}\right\}$ abbreviated by $\Sigma_{\sigma}$,
	\item \textit{compact} $\sigma$-\textit{signature} if $S = \left\{\sigma(\F) \mid \F\in\CAF\right\}$ abbreviated by $\Sigma_{\sigma}^c$ and
	\item \textit{analytic} $\sigma$-\textit{signature} if $S = \left\{\sigma(\F) \mid \F\in\XAF\right\}$ abbreviated by $\Sigma_{\sigma}^x$.
\end{enumerate}
\end{definition}

In \cite{BauS17} the authors were interested in a comparison of the expressive power of several mature semantics in the unrestricted setting. The following result shows that the relation between unrestricted signatures is intimately connected to their relation in case of finite, compact signatures. More precisely, non-empty relative complements in case of finite, compact signatures between two semantics carry over to their unrestricted versions. The main reason for this relation is the fact that unrestricted frameworks may contain any available argument of the universe \m{U}.  

\begin{theorem}[\cite{BauS17}]
  \label{thm:sigtausig}
  Given two semantics 
  $\sigma$ and $\tau$ with $\sigma,\tau\in$\linebreak $\{\nav,\stb,\stg,\semi,\prf,\com,\grd,\id,\eag,\cf,\adm\}$ we
  have:

	\begin{enumerate}
		\item If
  $\Sigma_{\Ext_\sigma}^{f,c}\setminus\Sigma_{\Ext_\tau}^{f,c}\neq\emptyset$, then 
 $\Sigma_{\Ext_\sigma}^c\setminus\Sigma_{\Ext_\tau}^c\neq\emptyset$ and 
\item If
  $\Sigma_{\Ext_\sigma}^c\setminus\Sigma_{\Ext_\tau}^c\neq\emptyset$, then 
 $\Sigma_{\Ext_\sigma}\setminus\Sigma_{\Ext_\tau}\neq\emptyset$. 
	\end{enumerate}
	
\end{theorem}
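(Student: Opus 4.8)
The plan is to establish both implications by lifting witnesses from the finite, compact level to the compact level and then from the compact level to the unrestricted level, exploiting that the universe $\m{U}$ is infinite. The key observation to prepare is a ``fresh copy'' operation: given a finite AF $\F=(A,R)$, since $\card{\m{U}}\geq\card{\N}$ we may pick countably infinitely many pairwise disjoint isomorphic copies of $\F$, or more generally relocate $\F$ to any fresh set of arguments, and the considered semantics $\sigma$ (being defined purely in terms of the attack graph) commutes with such relabellings and with disjoint unions in a controlled way. This is the structural feature that makes non-emptiness of relative complements robust under enlarging the class of admissible witness-AFs.

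For item~1, suppose $\Sigma^{f,c}_{\Ext_\sigma}\setminus\Sigma^{f,c}_{\Ext_\tau}\neq\emptyset$, witnessed by a finite, $\sigma$-compact AF $\F$ with $\Ext_\sigma(\F)=\Ss$ and $\Ss\notin\Sigma^{f,c}_{\Ext_\tau}$. First I would argue $\F\in\CAF_\sigma$ (finiteness is dropped going up, so a finitely compact AF is \emph{a fortiori} compact), hence $\Ss\in\Sigma^c_{\Ext_\sigma}$. The hard part is showing $\Ss\notin\Sigma^c_{\Ext_\tau}$, i.e.\ ruling out that some \emph{infinite} compact AF $\G$ realizes $\Ss$ under $\tau$. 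Here I would use that $\Args_\Ss$ is finite: if $\Ext_\tau(\G)=\Ss$ then $\Args_{\Ext_\tau(\G)}=\Args_\Ss$ is finite, and since $\G$ is $\tau$-compact every argument of $\G$ lies in some $\tau$-extension, so $\G$ has only the finitely many arguments in $\Args_\Ss$; thus $\G$ is in fact a finite, $\tau$-compact AF realizing $\Ss$, contradicting $\Ss\notin\Sigma^{f,c}_{\Ext_\tau}$. So the ``lifting'' for item~1 is really a \emph{collapse} argument: compact realizability of a finite extension-set forces a finite witness.

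For item~2, suppose $\Sigma^c_{\Ext_\sigma}\setminus\Sigma^c_{\Ext_\tau}\neq\emptyset$, witnessed by a compact AF $\F$ (possibly infinite) with $\Ext_\sigma(\F)=\Ss$ and $\Ss\notin\Sigma^c_{\Ext_\tau}$. Immediately $\Ss\in\Sigma_{\Ext_\sigma}$. To show $\Ss\notin\Sigma_{\Ext_\tau}$, suppose for contradiction some unrestricted AF $\G\in\m{F}$ has $\Ext_\tau(\G)=\Ss$. The idea is to delete the rejected arguments of $\G$, i.e.\ pass to the induced subframework $\G'=\G|_{\Args_\Ss}$. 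The crucial step is to verify that this restriction does not change the $\tau$-extensions: for the semantics in the list this holds because a rejected argument (one in no $\tau$-extension) contributes nothing to any $\tau$-extension and its removal, together with the attacks incident to it, cannot create new extensions either --- one needs a short argument per semantics, or a uniform lemma to the effect that if $\Args_{\Ext_\tau(\G)}=X$ then $\Ext_\tau(\G|_X)=\Ext_\tau(\G)$. Granting this, $\G'$ is compact for $\tau$ and realizes $\Ss$, contradicting $\Ss\notin\Sigma^c_{\Ext_\tau}$. Hence $\Ss\notin\Sigma_{\Ext_\tau}$, completing item~2.

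The main obstacle in both parts is the same and is semantics-specific rather than combinatorial: I must justify that restricting an AF to the arguments appearing in its $\tau$-extensions preserves the set of $\tau$-extensions, for every $\tau$ in the stated list. For conflict-free, naive, admissible, preferred, complete, stable, stage, semi-stable this is a standard (if slightly tedious) case analysis; for grounded, ideal, eager it follows once one observes that the removed arguments are all attacked by / irrelevant to the relevant fixpoints and intersections. I would package this as a single preliminary lemma (``rejected-argument deletion is $\tau$-invariant'') and then both implications become the short arguments sketched above. Everything else --- that finitely compact implies compact, that a finite extension-set realized compactly forces a finite witness, that passing to larger AF-classes only enlarges signatures --- is immediate from the definitions.
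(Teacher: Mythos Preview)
Your argument for item~1 is correct and in fact cleaner than the paper's illustrated approach: if a $\tau$-compact AF $\G$ realizes $\Ss$, then $A(\G)=\Args_{\Ext_\tau(\G)}=\Args_\Ss$ is finite, so $\G\in\CAF^f_\tau$ and $\Ss\in\Sigma^{f,c}_{\Ext_\tau}$ after all. No ``fresh copies'' are needed.

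Your argument for item~2, however, rests on the lemma ``$\Ext_\tau(\G|_{\Args_{\Ext_\tau(\G)}})=\Ext_\tau(\G)$'', and this is \emph{false} for the range-based semantics in the list. For $\tau=\stg$ take $\G=(\{a,b,c\},\{(a,b),(b,c),(c,c)\})$: the ranges of $\{a\}$ and $\{b\}$ are $\{a,b\}$ and $\{b,c\}$, incomparable, so $\stg(\G)=\{\{a\},\{b\}\}$ and $c$ is rejected; but in $\G|_{\{a,b\}}=(\{a,b\},\{(a,b)\})$ the range of $\{b\}$ collapses to $\{b\}\subsetneq\{a,b\}$, whence $\stg(\G|_{\{a,b\}})=\{\{a\}\}$. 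The same phenomenon breaks the lemma for $\semi$ (and hence $\eag$): deleting a rejected self-loop that sits in the range of some admissible sets but not others creates new range-maximal sets. So the clause ``its removal \ldots\ cannot create new extensions either'' is simply wrong for these $\tau$, and your proposed uniform lemma cannot be proved.

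The paper avoids this by going the other way. Instead of shrinking a hypothetical $\tau$-realization, it first \emph{expands} the $\sigma$-compact witness $\F=(A,R)$ to $\AH=(\m{U},R)$, adding all remaining arguments of the universe as isolated points; this yields $\m{E}'=\{E\cup(\m{U}\setminus A):E\in\Ss\}$ with $\Args_{\m{E}'}=\m{U}$. Now any AF $\G'$ with $\Ext_\tau(\G')=\m{E}'$ must have $A(\G')=\m{U}$ and is therefore automatically $\tau$-compact, so $\m{E}'\in\Sigma_{\Ext_\tau}$ already forces $\m{E}'\in\Sigma^c_{\Ext_\tau}$. Conflict-freeness of the extensions in $\m{E}'$ then shows the arguments of $\m{U}\setminus A$ are isolated in $\G'$, so $\G'$ splits as a disjoint union and one can restrict to $A$ --- but this restriction is along a disconnected component with a unique full extension, which \emph{does} preserve $\tau$-extensions for every semantics in the list, unlike arbitrary rejected-argument deletion.
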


The following example illustrates the main proof idea.

\begin{example}
Let $\m{E}\in\Sigma_{\Ext_\prf}^{f,c} \sm \Sigma_{\Ext_\stb}^{f,c}$ (cf.\ Figure~\ref{fig:venn2}) and $\F = (A,R)$ a witnessing framework. This means, $\F$ is finite, $\Ext_\prf(\F)=\m{E}$ and $\prf$-compact, i.e.\ $\bigcup\m{E} = A$. Consider now $\AH=(\m{U},R)$. Obviously,  
  $\m{E}' = \Ext_\prf(\AH) = \{E\cup(\m{U}\setminus A):E\in\m{E}\}$ and $\bigcup\m{E}' = \m{U}$ showing the $\sigma$-compactness of $\AH$. In particular, $\m{E}'\in\Sigma_{\Ext_\prf}^{c}$. Note that any $\stb$-realization of $\m{E}'$ has to be compact too since there
  are no additional arguments available. Assume $\m{E}'\in\Sigma_{\Ext_\stb}^{c}$, i.e.\ there is an AF $\AG' = (\m{U},R')$, s.t.\ $\Ext_\stb(\AG') = \m{E}'$.
  We observe that due to conflict-freeness there can not be attacks in $\G'$
  between arguments from $A$ and $\m{U}\setminus A$ nor
  between any of the arguments from $\m{U}\setminus A$. Consequently, $\G=(A,R')$ is finite, $\Ext_\stb(\G)=\m{E}$ and $\stb$-compact implying that $\m{E}\in\Sigma_{\Ext_\stb}^{f,c}$ in 
  contradiction to the initial assumption.
	
\end{example}

Now we are prepared for a comparison in case of unrestricted frameworks. Ignoring the superscripts in Figure~\ref{fig:venn2} provides you with a graphical representation for selected semantics.

\begin{theorem}
  \label{thm:unrstrsig}
  For unrestricted signatures the following hold:

  \begin{enumerate}
    \item $\{\{E\} \mid E\subseteq\m{U}\} = \Sigma_{\Ext_\sigma} \subset \Sigma_{\Ext_\nav} \subset \Sigma_{\Ext_\tau}$ for $\sigma\in\{\grd,\id\}$, $\tau \in \{\stb,\stg,\semi,\prf\}$,
    \item $\Sigma_{\Ext_\eag} \subset \Sigma_{\Ext_\prf}$,
    \item $\Sigma_{\Ext_\stb} \subset \Sigma_{\Ext_\sigma}$ for $\sigma \in \{\stg,\semi\}$,
    \item $\Sigma_{\Ext_\prf} \setminus (\Sigma_{\Ext_\stb} \cup \Sigma_{\Ext_\semi} \cup \Sigma_{\Ext_\stg}) \neq \emptyset$,
    \item $\Sigma_{\Ext_\stg} \setminus (\Sigma_{\Ext_\stb} \cup \Sigma_{\Ext_\prf} \cup \Sigma_{\Ext_\semi}) \neq \emptyset$,
    \item $\Sigma_{\Ext_\stb} \setminus \Sigma_{\Ext_\prf} \neq \emptyset$,
    \item $\Sigma_{\Ext_\semi} \setminus (\Sigma_{\Ext_\stb} \cup \Sigma_{\Ext_\prf} \cup \Sigma_{\Ext_\stg}) \neq \emptyset$,
    \item $\Sigma_{\Ext_\com} \sm \Sigma_{\Ext_\sigma} \neq\emptyset$ and $\Sigma_{\Ext_\sigma} \sm \Sigma_{\Ext_\com} \neq\emptyset$ for $\sigma\in\{\cf,\adm\}$,
    \item$\Sigma_{\Ext_\cf}\subset\Sigma_{\Ext_\adm}$.
  \end{enumerate}
\end{theorem}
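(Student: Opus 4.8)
The final statement to prove is Theorem~\ref{thm:unrstrsig}, which collects nine facts about unrestricted signatures. Since the paper says ``Ignoring the superscripts in Figure~\ref{fig:venn2} provides you with a graphical representation'', the plan is to derive each item by \emph{lifting} the already-established finite and finite-compact results to the unrestricted case, using Theorem~\ref{thm:sigtausig} as the main bridge, together with direct constructions for the positive containments.

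\medskip

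\noindent\textbf{Plan for the positive containments (items 1, 2, 3, 9 and the subset parts).} First I would handle the ``$\subseteq$'' directions and the explicit characterizations. For item~1, the equality $\{\{E\}\mid E\subseteq\m{U}\}=\Sigma_{\Ext_\sigma}$ for $\sigma\in\{\grd,\id\}$ follows because grounded and ideal semantics are uniquely defined on all AFs (Theorem~\ref{the:grduni}, Corollary~\ref{the:universalid}), so every unrestricted $\sigma$-signature element is a singleton, and conversely any singleton $\{E\}$ is realized by the edgeless AF $(E,\emptyset)$; the strict inclusions into $\Sigma_{\Ext_\nav}$ and then into $\Sigma_{\Ext_\tau}$ come from the finite witnesses already used in Theorem~\ref{thm:rel}/Theorem~\ref{thm:signatures_compact} (a two-element naive signature is realizable but not a singleton, etc.), and from Lemma~\ref{lem:prna} for $\nav\subseteq\tau$-type inclusions lifted pointwise. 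For item~2, $\eag\subseteq\prf$ (Proposition~\ref{Pro:semrel}) gives $\Sigma_{\Ext_\eag}\subseteq\Sigma_{\Ext_\prf}$ after noting that any eager-realizable set is a singleton (eager is universally defined on finite AFs and on finitary AFs; on infinite AFs Proposition~\ref{pro:eagernouni} shows eager signatures can be infinite, but those still lie in $\Sigma_{\Ext_\prf}$ since each eager extension of the witnessing AF\ is admissible\dots\ here I would be careful and instead argue directly that every $\Ext_\eag(\F)$ equals $\Ext_{\adm^\eag}(\F)$, and use a realization argument); strictness uses that the empty set is not eager-realizable but is preferred-realizable only as part of a larger family --- more carefully, one exhibits a preferred signature that no eager realization can match. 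For item~3, $\Sigma_{\Ext_\stb}\subseteq\Sigma_{\Ext_\sigma}$ for $\sigma\in\{\stg,\semi\}$ follows since on any AF every stable extension is semi-stable and stage, so $\Ext_\stb(\F)\in\{\Ext_\stg(\F),\Ext_\semi(\F)\}$ whenever $\Ext_\stb(\F)\neq\emptyset$, and the empty stable signature is realized by $(\{a\},\{(a,a)\})$ which also has stage/semi-stable extension $\{\emptyset\}$\dots\ so one needs a small extra argument to get $\emptyset$ itself into $\Sigma_{\Ext_\stb}$ but not the others; strictness is item~6's complement. Item~9, $\Sigma_{\Ext_\cf}\subset\Sigma_{\Ext_\adm}$, lifts Theorem~\ref{thm:rel}(3): every conflict-free-realizing AF can be turned into an admissible-realizing one (the canonical framework $\F^\cf_\Ss$ is symmetric, hence conflict-free sets coincide with admissible sets on it when no argument is self-attacking), and strictness is witnessed by any admissible signature containing $\emptyset$ together with a non-downward-closed family.

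\medskip

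\noindent\textbf{Plan for the non-empty-relative-complement items (4, 5, 6, 7, 8).} These are the ones that Theorem~\ref{thm:sigtausig} is designed for. For each pair $(\sigma,\tau)$ appearing in items 4--8, I would start from the corresponding \emph{finite, compact} separation already recorded in Theorem~\ref{thm:signatures_compact} and Figure~\ref{fig:venn2}: e.g.\ $\Sigma^{f,c}_{\Ext_\prf}\setminus(\Sigma^{f,c}_{\Ext_\stb}\cup\Sigma^{f,c}_{\Ext_\semi}\cup\Sigma^{f,c}_{\Ext_\stg})\neq\emptyset$ for item~4, $\Sigma^{f,c}_{\Ext_\stg}\setminus(\Sigma^{f,c}_{\Ext_\stb}\cup\Sigma^{f,c}_{\Ext_\prf}\cup\Sigma^{f,c}_{\Ext_\semi})\neq\emptyset$ for item~5, $\Sigma^{f,c}_{\Ext_\stb}\setminus\Sigma^{f,c}_{\Ext_\prf}\neq\emptyset$ for item~6, $\Sigma^{f,c}_{\Ext_\semi}\setminus(\dots)\neq\emptyset$ for item~7, and the complete-vs-$\{\cf,\adm\}$ separations from Theorem~\ref{thm:signatures_compact}(4) for item~8. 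Then apply Theorem~\ref{thm:sigtausig}(1) to pass from the finite-compact complement to the (infinite) compact complement, and Theorem~\ref{thm:sigtausig}(2) to pass from the compact complement to the unrestricted complement. One technical point to check is that Theorem~\ref{thm:sigtausig} is stated only for pairs of single semantics, whereas items 4, 5, 7 involve a union of three right-hand semantics; I would dispatch this by applying the theorem three times and observing that if $\m{E}$ is a finite-compact witness for $\Sigma^{f,c}_\sigma\setminus\Sigma^{f,c}_{\tau_i}$ \emph{simultaneously} for all $i$ (which is exactly what the cited finite results give, since the witness lies outside a triple union), then its ``blow-up'' to the full universe $\m{U}$ --- the AF $(\m{U},R)$ obtained by adding all universe arguments as isolated points, as in the displayed Example after Theorem~\ref{thm:sigtausig} --- remains outside all three unrestricted signatures $\Sigma_{\tau_i}$, by the same conflict-freeness/no-spare-arguments argument used there. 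For item~8 I also need the direction $\Sigma_{\Ext_\sigma}\setminus\Sigma_{\Ext_\com}\neq\emptyset$ for $\sigma\in\{\cf,\adm\}$, which likewise lifts the finite-compact fact from Theorem~\ref{thm:signatures_compact}(4).

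\medskip

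\noindent\textbf{Main obstacle.} The delicate part is \emph{not} the lifting machinery --- Theorem~\ref{thm:sigtausig} does most of the work --- but rather making sure the ``blow-up'' construction $(A,R)\mapsto(\m{U},R)$ behaves correctly for \emph{every} semantics in the statement simultaneously, and in particular that the universal extra arguments really cannot be exploited by a hypothetical realization under the ``forbidden'' semantics. Concretely, when I assume for contradiction that $\m{E}'=\Ext_\tau((\m{U},R))$ for some AF $(\m{U},R')$, I must argue there are no attacks between the ``real'' arguments $A$ and the spare arguments $\m{U}\setminus A$, nor among the spares; for stable semantics this is forced by conflict-freeness (as in the Example), but for semi-stable and stage one must additionally rule out that a spare argument is self-attacking and thereby changes the range --- this requires a short separate argument that any self-loop on a spare would either kill an intended extension's range-maximality or leave the spare rejected, contradicting compactness of $(\m{U},R')$. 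Handling stage semantics here is the subtlest point, because stage extensions are range-maximal among \emph{conflict-free} (not admissible) sets, so the defense-based reasoning available for preferred/semi-stable is unavailable and one falls back on a direct range comparison. I expect a couple of short lemmas of this flavour, modeled on \cite[Theorem~2, Proposition~58]{compactj} and on the proof of Theorem~\ref{thm:sigtausig}, to close these gaps; the rest is bookkeeping across the nine items.
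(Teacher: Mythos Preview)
Your overall architecture is exactly what the paper intends: the separation items (4)--(8) and the strict parts of (1)--(3), (9) are all obtained by applying Theorem~\ref{thm:sigtausig} to the finite-compact separations of Theorem~\ref{thm:signatures_compact}, and you correctly observe that a \emph{single} finite-compact witness lying outside a union $\bigcup_i\Sigma^{f,c}_{\tau_i}$ blows up to a single unrestricted witness outside $\bigcup_i\Sigma_{\tau_i}$, since the same construction is used for every~$\tau_i$.

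Two points deserve correction, however.

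\textbf{Your ``main obstacle'' dissolves immediately.} In the blow-up $\AH=(\m{U},R)$, every element of $\m{U}\setminus A$ lies in \emph{every} member of $\m{E}'$. Hence in any hypothetical realizing AF $\G'=(\m{U},R')$ with $\Ext_\tau(\G')=\m{E}'$, conflict-freeness of the extensions already forbids \emph{all} attacks involving $\m{U}\setminus A$ --- including self-loops. There is no separate range-based argument needed for stage or semi-stable; the reasoning in the Example after Theorem~\ref{thm:sigtausig} goes through verbatim for every $\tau$ satisfying $\tau\subseteq\cf$.

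\textbf{Some of your positive-containment reasoning is wrong as stated.} The implication ``$\rho\subseteq\tau$ hence $\Sigma_\rho\subseteq\Sigma_\tau$'' is invalid in general, and your appeal to Lemma~\ref{lem:prna} for ``$\nav\subseteq\tau$-type inclusions'' is unrelated to signatures. The clean arguments are direct realizations: for item~1, given any $\F$ remove self-loops and pass to the symmetric closure to obtain a symmetric self-loop-free $\G$ with $\Ext_\nav(\G)=\Ext_\nav(\F)$, and on such $\G$ one has $\nav=\stb=\stg=\semi=\prf$; this needs no finiteness. For item~2, argue by the dichotomy of Proposition~\ref{pro:eagernouni}: either $\Ext_\semi(\F)\neq\emptyset$ and then $\Ext_\eag(\F)$ is a singleton (Theorem~\ref{the:parauni}), realizable under $\prf$ by an edgeless AF, or $\Ext_\semi(\F)=\emptyset$ and then $\Ext_\eag(\F)=\Ext_\prf(\F)$ outright. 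For item~3, your argument is fine once you add that $\emptyset\in\Sigma_{\Ext_\stg}\cap\Sigma_{\Ext_\semi}$ is witnessed by the collapsing AF of Example~\ref{ex:noext}. Item~9 is handled by the same symmetrization trick (on symmetric self-loop-free AFs, $\cf=\adm$).
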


Finally, we briefly consider the  closely related topic of \textit{intertranslatability}. Intertranslatability revolves around the idea of mapping one semantics to another. One main motivation for studying this issue is the possibility to reuse a solver for one semantics for another \cite{DvoW11}. The main tool for this endeavour are functions mapping AFs to AFs, so-called \textit{translations} formally defined as follows.

\begin{definition}\cite{DvoW11}
  Given two semantics $\sigma,\tau$. A function $f:\m{F}\to\m{F}$ is
  called an \textit{exact translation} for $\sigma\rightarrow\tau$, if
  $\sigma(\F)=\tau(f(\F))$ for any AF $\F$. It is called a \textit{faithful translation} if
  for any AF $\F$ first $\lvert\sigma(\F)\rvert=\lvert\tau(f(\F))\rvert$ and
  second $\sigma(\F)=\{S\cap A(\F)\mid S\in\tau(f(\F))\}$.
\end{definition}

Please note that for some semantics there are no exact translations available due to reasons inherent to those semantics. For instance, preferred semantics satisfies \textit{I-maximality}, i.e.\  for any AF $\F$, $\Ext_\prf(\F)$ forms a $\subseteq$-antichain \cite{BarG07}. This implies that an exact translation $\Ext_\adm\rightarrow\Ext_\prf$ can not exist since for $\F = (\{a\},\emptyset)$ we observe $\{\emptyset,\{a\}\} = \Ext_\adm(\F)$. Sticking to faithful translations provides us with a positive answer if we consider finite AFs only \cite[Translation 3.1.85]{Spanring12}. Interestingly, the considered translation does not serve in the general unrestricted case and interestingly, it was shown that a
search for a suitable translation will never succeed (cf.\ \cite[Example~6]{BauS17}).

The following theorem (a generalization of the finite version from~\cite[Section~6.1]{DvoS16}) establishes a close relation between realizability and intertranslatability as promised, namely: if $\tau$ is not less expressive than $\sigma$, then $\sigma$ can be exactly translated to $\tau$ and vice versa. 

\begin{theorem}[\cite{BauS17}] \label{the:bridgesig}
  Given semantics $\sigma$, $\tau$. We have: $\Sigma_\sigma\subseteq\Sigma_\tau$
  if and only if there is an exact translation for $\sigma\rightarrow\tau$.
\end{theorem}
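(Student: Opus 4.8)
\textbf{Proof plan for Theorem~\ref{the:bridgesig}.} The statement is an ``if and only if'' between the semantic inclusion $\Sigma_\sigma \subseteq \Sigma_\tau$ and the existence of an exact translation $f$ for $\sigma \to \tau$. I would prove the two directions separately, and the easy direction first.

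\emph{The direction ``exact translation $\Rightarrow$ inclusion''.} Suppose $f : \m{F} \to \m{F}$ is an exact translation, i.e.\ $\sigma(\F) = \tau(f(\F))$ for every AF $\F$. Let $\Ss \in \Sigma_\sigma$ be arbitrary; by definition of the signature there is some $\F \in \m{F}$ with $\sigma(\F) = \Ss$. Then $\Ss = \sigma(\F) = \tau(f(\F))$, and since $f(\F) \in \m{F}$ this witnesses $\Ss \in \Sigma_\tau$. As $\Ss$ was arbitrary, $\Sigma_\sigma \subseteq \Sigma_\tau$. This direction is essentially definitional and should take only a couple of lines.

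\emph{The direction ``inclusion $\Rightarrow$ exact translation''.} This is the substantive part. Assume $\Sigma_\sigma \subseteq \Sigma_\tau$. I would construct $f$ pointwise: given an AF $\F \in \m{F}$, consider its set of $\sigma$-extensions $\sigma(\F)$, which is an element of $\Sigma_\sigma$ and hence, by the assumed inclusion, also of $\Sigma_\tau$. This means there exists at least one AF $\G \in \m{F}$ with $\tau(\G) = \sigma(\F)$. Using the axiom of choice (a set-theoretic principle that the surrounding text repeatedly invokes for the unrestricted setting), select one such $\G$ for each $\F$ and set $f(\F) = \G$. By construction $\tau(f(\F)) = \sigma(\F)$ for every $\F$, so $f$ is an exact translation for $\sigma \to \tau$. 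The only thing to double-check is that such a $\G$ always exists (i.e.\ that $\sigma(\F)$ is genuinely in $\Sigma_\sigma$), which is immediate since $\F$ itself realizes it — so non-emptiness of the set of candidate images is never an issue, even in collapsing cases where $\sigma(\F) = \emptyset$.

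\emph{Main obstacle.} There is no deep obstacle here; the theorem is a clean reformulation of ``inclusion of signatures'' as ``existence of a function respecting the semantics.'' The one point worth stating carefully is the appeal to choice: the family $\{\,\{\G \in \m{F} \mid \tau(\G) = \sigma(\F)\} \mid \F \in \m{F}\,\}$ is a family of non-empty sets indexed by a proper class of AFs modulo isomorphism (or a set, once one fixes the universe $\m{U}$), and picking one representative from each requires a choice principle; this is harmless in the ambient set theory used throughout the chapter. I would remark that the resulting $f$ is in general neither faithful (it may not preserve the number of extensions, and need not restrict correctly to $A(\F)$) nor canonical (it depends on the choice function), which is exactly why the earlier discussion of faithful translations and the finite case is genuinely more delicate than this existence result.
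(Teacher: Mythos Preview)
Your proof is correct and is the natural argument; the paper itself does not supply a proof of this theorem (it is simply cited from \cite{BauS17}), so there is nothing substantive to compare against. Both directions are handled as one would expect: the forward direction is definitional, and the backward direction is a choice-function construction over the nonempty fibres $\{\G \in \m{F} \mid \tau(\G) = \sigma(\F)\}$, which are sets once $\m{U}$ is fixed.

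One small correction to your closing remark: an exact translation \emph{is} automatically faithful. If $\tau(f(\F)) = \sigma(\F)$, then certainly $|\sigma(\F)| = |\tau(f(\F))|$, and since every $S \in \sigma(\F)$ satisfies $S \subseteq A(\F)$ by the definition of a semantics, we get $\{S \cap A(\F) \mid S \in \tau(f(\F))\} = \{S \cap A(\F) \mid S \in \sigma(\F)\} = \sigma(\F)$. So the caveat you add about $f$ being ``in general neither faithful'' is not accurate; exactness is the stronger notion. This does not affect the validity of your proof, only the commentary.
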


The following Figure~\ref{fig:intertrans} illustrates translational (im)possibilities in an eye-catching way. 
Figure~\ref{fig:intertrans.restricted} summarizes known results regarding faithful translations 
 in the finite case~\cite{DvoW11,Spanring12,DvoS16}, augmented with obvious
insights for unique status semantics $\id$ and $\eag$.
For semantics $\sigma,\tau$, encirclement in the same component indicates
bidirectional translations.
An arrow from $\sigma$ to $\tau$ means directional translations.
If there is no directed path (for instance for $\nav$ to $\cf$, or for $\cf$ to
$\grd$), then there is no translation. Figure~\ref{fig:intertrans.unrestricted} features the same visualization for
unrestricted AFs. 
Dropping the finiteness restriction has some further consequences for the
considered semantics, namely exact and faithful intertranslatability coincide. 
It is an open question whether both forms of translations are essentially the same in the general unrestricted setting.
In consideration of Theorem~\ref{the:bridgesig} we may interpret
Figure~\ref{fig:intertrans.unrestricted} as a comparison of the expressiveness
of the considered semantics. 
That is, $\Sigma_\sigma\subset\Sigma_\tau$ if and only if there is a directed path from $\sigma$ to~$\tau$.

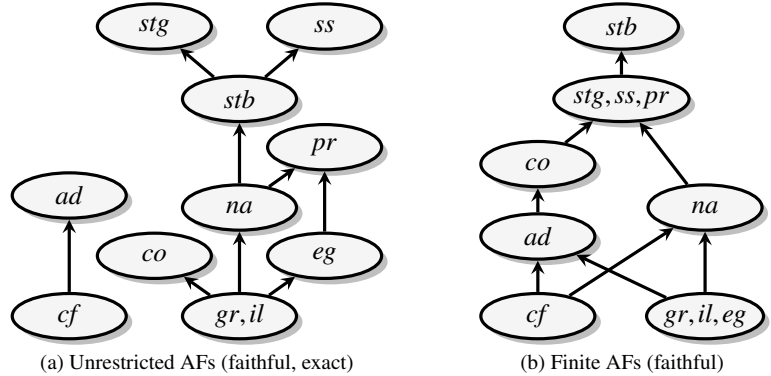
\begin{figure}[h]
  \centering
\subfloat[Unrestricted AFs (faithful, exact)]{
\label{fig:intertrans.unrestricted}
\begin{tikzpicture}[>=stealth,xscale=.8,yscale=.8]
    \path
      (0,0)
      ++(0,0) node[box](cf){$\cf$}
      ++(90:2) node[box](ad){$\adm$}
      ;
    \path
      (2.8,0)   node[box](gr){$\grd,\id$}
      +(-1.4, 1)node[box](co){$\com$}
      ++(1.4, 1)node[box](eg){$\eag$}
       +( 0,1.8)node[box](pr){$\prf$}
      (2.8,0)    
      ++( 0,1.8)node[box](na){$\nav$}
      ++( 0,1.8)node[box](st){$\stb$}
      +(-1.4,1.2)node[box](sg){$\stg$}
      +( 1.4,1.2)node[box](sm){$\semi$}
      ;
    \tikzstyle{trans}=[left,->,very thick]
    \path [trans]
      (cf) edge (ad)
      (gr) edge (eg)
           edge (co)
	   edge (na)
      (eg) edge (pr)
      (na) edge (pr)
           edge (st)
      (st) edge (sg)
           edge (sm)
      ;
\end{tikzpicture}
} \hspace{1cm}
\subfloat[Finite AFs (faithful)]{
\label{fig:intertrans.restricted}
\begin{tikzpicture}[>=stealth,yscale=.48,xscale=1.1]
  \path 
    (2,0)node[box](ground){$\grd,\id,\eag$}
    (2,3)node[box](naive){$\nav$}
    (0,0)node[box](cf){$\cf$}
    (0,2.2)node[box](ad){$\adm$}
    (0,4.2)node[box](co){$\com$}
    (1,6)node[box](others){$\stg,\semi,\prf$}
    (1,8)node[box](stable){$\stable$};
    ;
  \tikzstyle{trans}=[left,->,very thick]
  \path [trans]
    (ground) edge (naive)
    (cf) edge (naive)
    (cf) edge (ad)
    (ad) edge (co)
    (ground) edge (ad)
    (co) edge (others)
    (naive) edge (others)
    (others) edge (stable);
\end{tikzpicture}
}

\caption{Translational (Im)Possibilities}
\label{fig:intertrans}
\end{figure}

As a final note, in contrast to the unrestricted setting Baumann and Spanring observed that for slightly restricted AFs $\F = (A,R)$, s.t.\ 
$\card{A} \leq \card{\m{U}\setminus A}$ it is possible to provide exact and efficiently computable translations from preferred to semi-stable semantics
via $f(\F) = \F' = (A',R')$ with $A'=A\cup\{a'\!\mid\! a\in A\}$ and
$R' = R \cup \{(a,a'),(a',a')\!\mid\! a\in A\}$. It is an interesting question whether this restriction allows for similar translational possibilities as in case of finite AFs. 

\section{Realizability and Signatures for Labelling-based Versions}

Although any considered semantics $\sigma$ possesses an extension-based version (indicated by $\Ext_\sigma$) as well as a closely related 3-valued labelling-based version (indicated by $\Lab_\sigma$) we formally have that both versions are different semantics (or more precisely, functions) in the sense of Definition~\ref{def:semantics}. This formal difference has some impact on realizability as well as signatures. Let us consider realizability in the realm of finiteness. As a matter of fact, for any considered 3-valued labelling-based version $\Lab_\sigma$ we have: if $\F = (A,R)$ and $\L = (\LI,\LO,\LU) \in \Lab_\sigma(\F)$, then $A = \LI \cup \LO \cup \LU$. This means, $\sigma$-labellings assign a status to any argument in $\F$. Now, in case of finite AFs we know that potentially realizable sets of labellings have to involve finitely many arguments only. Moreover, these finitely many arguments precisely determine the set $A$ of witnessing AFs $\F = (A,R)$.\footnote{This is exactly the point which does not carry over to finite realizability in case of extension-based semantics (cf.\ statement 2 of Theorem~\ref{prop:compact_signatures}).} Consider therefore the following example. 

\begin{example} Consider the following set of 3-valued labellings  $\Ss = \left\{(\{a\},\emptyset,\{b,c\}),(\{a,b\},\{c\},\emptyset)\right\}$. Is $\Ss$ $\com$-realizable? Since $\{a\} \cup \emptyset \cup \{b,c\} = \{a,b,c\}$ we deduce that candidates have to be members of $\m{C} = \{\F = (A,R) \mid A = \{a,b,c\}\}$. Note that $\card{\m{C}} = 2^{\card{\{a,b,c\}}^2} = 2^{9} = 512$. Clearly, this is a huge number, but it is a finite one. Consequently, the question of realizability can be decided by computing the $\sigma$-labellings of all AFs in $\m{C}$. Of course, any intelligent search algorithm would involve further information like $\{a,b\}$ has to be conflict-free in a witnessing AF. Such an observation would decrease the number of candidates to $2^5 = 32$. However, in both cases one would find the unique witnessing framework $\F$, i.e.\ $\Lab_{\com}(\F) = \Ss$, as depicted below.
\vspace{1mm}

\begin{tikzpicture}

    \node (A1'') at (0,-2.5) [circle,minimum size=0.7cm, thick, draw, label = left:$\F\!:$]{$a$};
    \node (B1'') at (1.4,-2.5) [circle, minimum size=0.7cm, thick, draw]{$b$};
		\node (C1'') at (2.8,-2.5) [circle, minimum size=0.7cm, thick, draw]{$c$};


\draw[->,thick] (C1'') to [loop,thick, distance=0.5cm] (C1'');
\draw[->, thick] (B1'') to [thick,bend right] (C1'');
\draw[->, thick] (C1'') to [thick,bend right] (B1'');

\end{tikzpicture} 

\end{example}

The example above shows that the search space can be very large even in case of small numbers of arguments. Consequently, locally verifiable necessary as well as sufficient properties for realizability just like in case of extension-based semantics are of high interest too. To the best of our knowledge only two papers have dealt with labelling-based realizability in the context of AFs. The first study was presented by Dyrkolbotn \cite{Dyr14}. The author showed that, as long as additional arguments are allowed any finite set of labellings is realizable under preferred and semi-stable semantics. It is important to emphasize that Dyrkolbotn uses a more relaxed notion of realizibility, namely \textit{realizibility under projection} (cf.\ Definition~\ref{def:realizableprojection}). The other work \cite{LinPS16} deals with the standard notion of finite realizability (Definition~\ref{def:realizable_finite}). The authors presented an algorithm which returns either ``No'' in case of non-realizibility or a witnessing AF $\F$ in the positive case. The algorithm is not purely a guess-and-check method since it also includes a propagation step where certain necessary properties of witnessing AFs are processed. Remarkably, the algorithm is not restricted to the formalism of abstract argumentation frameworks only. In fact, it can also be used to decide realizability in case of the more general abstract dialectical frameworks as well as various of its sub-classes \cite{ADF,ADF2}.

\subsection{Preliminary Results for Labelling-based Signatures}

In the following we shed light on general relations between the labelling-based and extension-based signatures of the considered semantics. Fortunately, due to former characterization results we will even achieve characterizing or at least necessary properties for finite realizability regarding labelling-based versions. We proceed with the definition of an \textit{labelling-set} which is the $n$-valued analogon (for $n \geq 2$) to an extension-set as introduced in Definition~\ref{def:extension-set}. A labelling-set is a finite set of $n$-tuples which are dealing with the same set of arguments and moreover, any $n$-tuple assigns exactly one status to each argument in question. 

\begin{definition}
Given $\Ss \subseteq \left(2^\m{U}\right)^n$. $\Args_\Ss$ denotes $\bigcup_{\L = (\L_1,...,\L_n)\in\Ss} \bigcup_{i=1}^n \L_i$ and $\Card{\Ss}$ stands for $\card{\Args_\Ss}$. We say that $\Ss$ is a \textit{labelling-set} if 

\begin{enumerate}
	\item $\Card{\Ss}$ is a finite cardinal,
	\item for any $\L = (\L_1,\ldots,\L_n) \in\Ss$, $\Args_\Ss = \bigcup_{i=1}^n \L_i$ and
	\item for any $\L = (\L_1,\ldots,\L_n) \in\Ss$, $\L_1,\ldots,\L_n$ are pairwise disjoint.
	
\end{enumerate}
 \end{definition}

 The following proposition establishes a connection between extension-based and labelling-based realizibility for any considered semantics. Roughly speaking, it states that labelling-based realizability requires extension-based realizability of the corresponding sets of in-labelled arguments. For a $3$-tuple $\L = (\L_1,\L_2,\L_3)$ we also write $(\LI,\LO,\LU)$ as usual.

\begin{proposition} \label{pro:signaturerel} Given a set of 3-tuples $\Ss \subseteq \left(2^\m{U}\right)^3$. For any semantics \\ $\sigma\in\{\stb,\semi,\stg,\cfzwei,\stgzwei,\prf,\adm,\com,\grd,\id,\eag,\nav,\cf\}$ we have,

\begin{enumerate}
	\item $\Ss\in\Sigma_{\Lab_{\sigma}} \To \{\LI \mid \L\in\Ss\}\in\Sigma_{\Ext_{\sigma}}$ \hfill{(unrestricted realizability)}
	\item $\Ss\in\Sigma_{\Lab_{\sigma}}^c \To \{\LI \mid \L\in\Ss\}\in\Sigma_{\Ext_{\sigma}}^c$ \hfill{(compact realizability)}
	\item $\Ss\in\Sigma_{\Lab_{\sigma}}^x \To \{\LI \mid \L\in\Ss\}\in\Sigma_{\Ext_{\sigma}}^x$ \hfill{(analytic realizability)}
	\item $\Ss\in\Sigma_{\Lab_{\sigma}}^f \To \{\LI \mid \L\in\Ss\}\in\Sigma_{\Ext_{\sigma}}^f$ \hfill{(finite realizability)}
	\item $\Ss\in\Sigma_{\Lab_{\sigma}}^{f,c} \To \{\LI \mid \L\in\Ss\}\in\Sigma_{\Ext_{\sigma}}^{f,c}$ \hfill{(finite, compact realizability)}
	\item $\Ss\in\Sigma_{\Lab_{\sigma}}^{f,x} \To \{\LI \mid \L\in\Ss\}\in\Sigma_{\Ext_{\sigma}}^{f,x}$ \hfill{(finite, analytic realizability)}
\end{enumerate}

\end{proposition}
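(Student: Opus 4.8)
The plan is to prove all six implications at once by exhibiting, for each direction of realizability, how a witnessing AF for the labelling-set $\Ss$ yields a witnessing AF for the extension-set $\set{\LI \mid \L\in\Ss}$ \emph{with the same set of arguments}, so that every side-condition (finiteness, compactness, analyticity) transfers automatically. The single structural fact driving all six cases is the well-known correspondence between the three-valued $\sigma$-labellings and the $\sigma$-extensions of an AF: for each $\sigma$ in the listed collection and each AF $\F$ one has $\set{\LI \mid \L\in\Lab_{\sigma}(\F)} = \Ext_{\sigma}(\F)$ (this is precisely the one-to-one correspondence invoked repeatedly in the earlier chapters, see the discussion around Facts~\ref{fact:extlabcom} and~\ref{fact:extlabana} and Section~\ref{sec:baspro}). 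Granting this, the argument is almost immediate.

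First I would treat the base case, item~1 (unrestricted realizability). Suppose $\Ss\in\Sigma_{\Lab_\sigma}$, so there is an AF $\F\in\m{F}$ with $\Lab_\sigma(\F)=\Ss$. Then $\set{\LI \mid \L\in\Ss} = \set{\LI \mid \L\in\Lab_\sigma(\F)} = \Ext_\sigma(\F)$ by the labelling--extension correspondence, and since $\F\in\m{F}$ this shows $\set{\LI \mid \L\in\Ss}\in\Sigma_{\Ext_\sigma}$. Item~4 (finite realizability) is the same argument with the additional observation that the witnessing $\F$ is finite and the correspondence does not change the underlying AF, so the very same $\F$ witnesses finite extension-based realizability.

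Next I would handle the four conditioned cases, items~2,3,5,6, in a uniform way. The key point is that the witnessing AF is left \emph{unchanged}: if $\F$ realizes $\Ss$ under $\Lab_\sigma$, then the same $\F$ realizes $\set{\LI \mid \L\in\Ss}=\Ext_\sigma(\F)$ under $\Ext_\sigma$. It therefore suffices to check that the property of $\F$ that defines each restricted class of AFs is the same whether phrased via $\Lab_\sigma$ or via $\Ext_\sigma$. For compactness this is exactly Fact~\ref{fact:extlabcom}: $\F$ is compact for $\Lab_\sigma$ iff $\F$ is compact for $\Ext_\sigma$ (recall $\CAF_\sigma$ was defined via $\Args_{\set{\LI\mid\L\in\Lab_\sigma(\F)}}=A$, and $\Args_{\set{\LI\mid\L\in\Lab_\sigma(\F)}}=\Args_{\Ext_\sigma(\F)}$). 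For analyticity this is Fact~\ref{fact:extlabana}: $\F$ is analytic for $\Lab_\sigma$ iff $\F$ is analytic for $\Ext_\sigma$, again because the relevant set $\Pairs$ is computed from the same set of accepted-argument sets. Hence $\F\in\CAF_\sigma$ (resp.\ $\CAF^f_\sigma$, $\XAF_\sigma$, $\XAF^f_\sigma$) in the labelling sense entails $\F$ is in the corresponding class in the extension sense, and $\Ext_\sigma(\F)=\set{\LI\mid\L\in\Ss}$ gives items~2,3,5,6 respectively.

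The main obstacle, such as it is, is purely bookkeeping: one must make sure the labelling--extension correspondence is stated and cited in the exact form $\set{\LI\mid\L\in\Lab_\sigma(\F)}=\Ext_\sigma(\F)$ for \emph{every} $\sigma$ in the list, including the more delicate ones $\cf$, $\adm$ (where a single extension may induce several labellings) and the recursively defined $\cfzwei$, $\stgzwei$. For $\cf$ and $\adm$ the correspondence is not bijective, but one still has the set-equality of first components (each conflict-free/admissible set arises as $\LI$ of at least one labelling, and every labelling's $\LI$ is conflict-free/admissible), which is all that is needed here; this is already noted in Section~\ref{sec:baspro} and used in Facts~\ref{fact:extlabcom} and~\ref{fact:extlabana}. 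For $\cfzwei$ and $\stgzwei$ one invokes the one-to-one correspondence recorded in Fact~\ref{fact:extlabcom}'s list. Once these citations are in place, the proof is a two-line deduction in each of the six items, and I would present it compactly: prove item~1 and item~4 directly from the correspondence, then note that items~2,3,5,6 follow by the same computation together with Facts~\ref{fact:extlabcom} and~\ref{fact:extlabana} guaranteeing that the witnessing framework stays in the required subclass.
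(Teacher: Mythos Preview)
Your proposal is correct and follows exactly the approach the paper takes. The paper does not give a detailed proof but simply remarks that the implications are justified for any semantics $\sigma$ satisfying $\Ext_{\sigma}(\F) = \{\LI \mid \L\in\Lab_{\sigma}(\F)\}$ for every relevant AF $\F$; your write-up spells out this one-line observation in full, including the appeal to Facts~\ref{fact:extlabcom} and~\ref{fact:extlabana} to ensure the witnessing framework stays in the required (compact/analytic/finite) subclass.
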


Please note that the implications above are justified for any semantics $\sigma$ whenever the different versions of it satisfy $\Ext_{\sigma}(\F) = \{\LI \mid \L\in\Lab_{\sigma}(\F)\}$ for any relevant AF $\F$. In the former sections we already presented characterization theorems or at least necessary properties for being finitely realizable regarding extension-based versions (cf.\ Theorems~\ref{the:charcf},~\ref{the:charadm}~and~\ref{the:charid}). Combining these results with the proposition above yields the following necessary properties for finite realizability in the labelling-based case. Note that the mentioned implications apply to finite, compact as well as finite, analytic signatures too since $\Sigma_{\Lab_{\sigma}}^{f,c} \subseteq \Sigma_{\Lab_{\sigma}}^f$ as well as $\Sigma_{\Lab_{\sigma}}^{f,x} \subseteq \Sigma_{\Lab_{\sigma}}^f$ by definition. In case of grounded, ideal and eager semantics we have that being an one-element labelling-set is necessary and even sufficient for being finitely realizable. One may easily verify that the only-if-directions of these semantics are justified by the witnessing framework $\F_\L = (\LI\cup\LO\cup\LU, \{(i,o)\mid i\in\LI,o\in\LO\} \cup \{(u,u)\mid u\in\LU\})$ given that $\Ss = \{\L\}$. 

\begin{theorem}
\label{the:charlabellingbased}
Given a set of 3-tuples $\Ss \subseteq \left(2^\m{U}\right)^3$, then
\begin{enumerate}
\item $\Ss\in \Sigma_{\Lab_{\cf}}^f \To \{\LI\mid \L\in\Ss\} \text{ is a non-empty, downward-closed and}$ tight extension-set,
\item $\Ss\in \Sigma_{\Lab_{\nav}}^f \To \{\LI\mid \L\in\Ss\}\text{ is a non-empty, incomparable extension-set}$ and $\dcl{(\Ss)} \text{ is tight},$
\item $\Ss\in \Sigma_{\Lab_{\grd}}^f \ToT \Ss \text{ is a labelling-set with} \card{\Ss} = 1,$
\item $\Ss\in \Sigma_{\Lab_{\id}}^f  \ToT \Ss \text{ is a labelling-set with} \card{\Ss} = 1,$ 
\item $\Ss\in \Sigma_{\Lab_{\eag}}^f \ToT \Ss \text{ is a labelling-set with} \card{\Ss} = 1,$ 
\item $\Ss\in \Sigma_{\Lab_{\stb}}^f  \To \{\LI\mid \L\in\Ss\} \text{ is a incomparable and tight extension-set,}$
\item $\Ss\in \Sigma_{\Lab_{\stg}}^f  \To \{\LI\mid \L\in\Ss\} \text{ is a non-empty, incomparable and}$ tight extension-set,
\item $\Ss\in \Sigma_{\Lab_{\adm}}^f  \To \{\LI\mid \L\in\Ss\} \text{ is a conflict-sensitive extension set containing }\emptyset,$
\item $\Ss\in \Sigma_{\Lab_{\prf}}^f  \To \{\LI\mid \L\in\Ss\} \text{ is a non-empty, incomparable and}$ conflict-sensitive extension-set,
\item $\Ss\in \Sigma_{\Lab_{\semi}}^f  \To \{\LI\mid \L\in\Ss\} \text{ is a non-empty, incomparable and}$conflict-sensitive extension-set.

\end{enumerate}
\end{theorem}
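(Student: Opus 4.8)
The plan is to derive Theorem~\ref{the:charlabellingbased} as a direct corollary of two ingredients already available: Proposition~\ref{pro:signaturerel} (specifically item~4, finite realizability) and the characterization theorems for finite, extension-based signatures, namely Theorems~\ref{the:charcf}, \ref{the:charadm} and~\ref{the:charid}. For the necessary-direction entries (items 1, 2, 6--10), this is almost mechanical: suppose $\Ss \in \Sigma_{\Lab_\sigma}^f$; by Proposition~\ref{pro:signaturerel}(4) we get $\{\LI \mid \L\in\Ss\} \in \Sigma_{\Ext_\sigma}^f$; then the left-to-right implication of the corresponding entry in Theorem~\ref{the:charcf} (for $\sigma\in\{\cf,\nav,\stb,\stg\}$) or Theorem~\ref{the:charadm} (for $\sigma\in\{\adm,\prf,\semi\}$) yields exactly the stated property of $\{\LI\mid\L\in\Ss\}$. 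One should note that Proposition~\ref{pro:signaturerel} is justified because for every semantics $\sigma$ in the list we have $\Ext_\sigma(\F) = \{\LI \mid \L\in\Lab_\sigma(\F)\}$ for any AF $\F$ (cf.\ Facts~\ref{fact1} and~\ref{fact2}), so the set of in-labelled arguments of a realizing AF's labellings is precisely its set of $\sigma$-extensions.

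For the three uniquely defined semantics (items 3, 4, 5), a genuine biconditional is claimed and so both directions must be handled. The forward direction again follows from Proposition~\ref{pro:signaturerel}(4) together with Theorem~\ref{the:charid}: if $\Ss\in\Sigma_{\Lab_\sigma}^f$ then $\{\LI\mid\L\in\Ss\}$ is a one-element extension-set, and since moreover a $\sigma$-labelling set for these semantics is a singleton iff its set of in-parts is a singleton (uniqueness of the extension forces uniqueness of the labelling, by the one-to-one correspondence between $\sigma$-extensions and $\sigma$-labellings), we conclude $|\Ss| = 1$ and that $\Ss$ is a labelling-set. For the converse, given a labelling-set $\Ss = \{\L\}$ with $\L = (\LI,\LO,\LU)$, I would exhibit the witnessing framework
$$\F_\L = \bigl(\LI\cup\LO\cup\LU,\ \{(i,o)\mid i\in\LI,\ o\in\LO\}\cup\{(u,u)\mid u\in\LU\}\bigr)$$
and verify that $\Lab_\grd(\F_\L) = \Lab_\id(\F_\L) = \Lab_\eag(\F_\L) = \{\L\}$. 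The verification is routine: every argument in $\LU$ is self-attacking (hence undecided under any reasonable labelling), the arguments in $\LI$ are unattacked except possibly by self-loops which are absent, so they are all labelled in; the arguments in $\LO$ receive an attack from some in-argument (since $\LI\neq\emptyset$ whenever $\LO\neq\emptyset$ in a well-formed $\sigma$-labelling for these semantics, or else the construction degenerates harmlessly) and are labelled out. One checks this gives the grounded labelling, and since grounded, ideal and eager labellings all coincide when the grounded labelling is already ``complete enough'', the claim follows.

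**Main obstacle.**

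I expect the only subtle point to be the converse direction for the unique-status semantics --- specifically, confirming that the simple canonical construction $\F_\L$ really produces $\L$ as the unique labelling under \emph{all three} of grounded, ideal and eager semantics simultaneously, including the degenerate edge cases (e.g.\ when $\LI=\emptyset$ but $\LO\neq\emptyset$, which cannot arise in a genuine $\sigma$-labelling but which the abstract ``labelling-set'' definition does not a priori exclude --- though in fact for these semantics the out-part is always defended-against, so a labelling-set realizable at all must have $\LO\neq\emptyset \implies \LI\neq\emptyset$, and one should make this explicit). The necessary-direction entries carry no difficulty at all; they are pure transport of the already-proven extension-based characterizations through Proposition~\ref{pro:signaturerel}. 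Finally I would remark that the statement silently inherits, via the same route, that these necessary conditions also apply to $\Sigma_{\Lab_\sigma}^{f,c}$ and $\Sigma_{\Lab_\sigma}^{f,x}$, since those are subsets of $\Sigma_{\Lab_\sigma}^f$ by definition.
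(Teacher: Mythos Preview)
Your proposal is correct and follows essentially the same approach as the paper: the necessary directions (items 1, 2, 6--10) are obtained by combining Proposition~\ref{pro:signaturerel}(4) with Theorems~\ref{the:charcf}, \ref{the:charadm} and~\ref{the:charid}, and the biconditionals for the unique-status semantics (items 3--5) are completed via exactly the same witnessing framework $\F_\L = (\LI\cup\LO\cup\LU,\ \{(i,o)\mid i\in\LI,\ o\in\LO\}\cup\{(u,u)\mid u\in\LU\})$. Your closing remark about the conditions transferring to $\Sigma_{\Lab_\sigma}^{f,c}$ and $\Sigma_{\Lab_\sigma}^{f,x}$ also mirrors the paper's discussion verbatim; if anything, your explicit flagging of the degenerate case $\LI=\emptyset$, $\LO\neq\emptyset$ is more careful than the paper, which leaves that edge case implicit.
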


\subsection{Realizibility under Projection}
\label{sec:sjur}
We turn now to \textit{realizability under projection} which was firstly considered in \cite{Dyr14}. In order to realize a set of labellings $\Ss$ under projection it suffices to come up with an AF $\F$, s.t.\ its set of labellings restricted to the relevant arguments coincide with~$\Ss$. Consider therefore the following illustrating example.

\begin{example} \label{ex:labelprojection} Given $\Ss = \{(\{a\},\{b\},\emptyset),(\{b\},\{a\},\emptyset),(\emptyset,\{a,b\},\emptyset)\}$. We observe that the corresponding set of sets of in-labelled arguments $\Ss^I = \{\emptyset,\{a\},\{b\}\}$ violates incomparability. Thus, applying statement~9 of Theorem~\ref{the:charlabellingbased} we derive that $\Ss$ is not finitely $\prf$-realizable. Consider now the following AF~$\F$.

\begin{tikzpicture}[xscale=2,>=stealth]
      \path
      (-30:.5) node[arg] (x1) {$c$}
     
			( 90:.5) node[arg] (x2) {$b$}
     
      (210:.5) node[arg] (x3) {$a$}
      
			(201:0.8) node (F) {$\F:\ $}
      ;
      \path[<->,thick,bend right]
      (x1) edge (x2)
      (x2) edge (x3)
      (x3) edge (x1)
      ;
    \end{tikzpicture}
		
We obtain $\Lab_{\prf}(\F) = \{(\{a\},\{b,c\},\emptyset),(\{b\},\{a,c\},\emptyset),(\{c\},\{a,b\},\emptyset)\}$. Now, if we restrict any labelling $\L = (\LI,\LO,\LU) \in\Lab_{\prf}(\F)$ to the arguments $a$ and $b$, i.e.\ $\L|_{\{a,b\}} = (\LI\cap\{a,b\},\LO\cap\{a,b\},\LU\cap\{a,b\})$ we obtain exactly all labellings in $\Ss$. In this sense, $\Ss$ is $\prf$-realizable under projection. 
\end{example}  

We proceed with the formal definitions. For the sake of completeness we introduce realizability under projection and its corresponding signatures w.r.t.\ any kind of semantics as defined in Definition~\ref{def:semantics}. 

\begin{definition}\label{def:realizableprojection}
Given a semantics $\sigma: \m{F}\rightarrow 2^{\left(2^\m{U}\right)^n}$. A set $\Ss\subseteq \left(2^\m{U}\right)^n$ is $\sigma$-\textit{realizable under projection} if there is an AF $\F$, s.t.\ $\sigma(F)|_{\Args_\Ss} = \{E|_{\Args_\Ss}\mid E\in\Ext_{\sigma}(\F)\} = \Ss$ (in case of $n = 1$) or $\sigma(\F)|_{\Args_\Ss} = \{\L|_{\Args_\Ss}\mid \L\in\Lab_{\sigma}(\F)\} = \Ss$ (for $n \geq 2$), respectively.
\end{definition}

\begin{definition} \label{def:signaturesprojection}
Given a semantics $\sigma$. The unrestricted as well as finite $\sigma$-projection-signatures are defined as follows: 

\begin{enumerate}
	\item $\Sigma_{\sigma}^p = \left\{\sigma(\F)|_{B} \mid \F = (A,R)\in\m{F}, B\subseteq A \right\}$ and
	\item $\Sigma_{\sigma}^{f,p} = \left\{\sigma(\F)|_{B} \mid \F = (A,R)\in\m{F}, \F \text{ is finite, } B\subseteq A \right\}$
	
\end{enumerate}

\end{definition}

Analogously to Proposition~\ref{pro:signaturerel} we state the following relation between labelling-based and extension-based versions of the considered semantics.

\begin{proposition} \label{pro:signaturerelpro} Given a set of 3-tuples $\Ss \subseteq \left(2^\m{U}\right)^3$. For any semantics \\ $\sigma\in\{\stb,\semi,\stg,\cfzwei,\stgzwei,\prf,\adm,\com,\grd,\id,\eag,\nav,\cf\}$ we have,
\begin{enumerate}
	\item $\Ss\in\Sigma_{\Lab_{\sigma}}^p \To \{\LI \mid \L\in\Ss\}\in\Sigma_{\Ext_{\sigma}}^p$ \hfill{(unrestricted realizability under projection)}
	\item $\Ss\in\Sigma_{\Lab_{\sigma}}^{f,p} \To \{\LI \mid \L\in\Ss\}\in\Sigma_{\Ext_{\sigma}}^{f,p}$ \hfill{(finite realizability under projection)}
\end{enumerate}

\end{proposition}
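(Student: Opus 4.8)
The plan is to reduce both implications to the single structural fact that already drives Proposition~\ref{pro:signaturerel}: for every semantics $\sigma$ in the list and every AF $\F$ on which $\sigma$ is defined one has $\Ext_\sigma(\F) = \{\LI \mid \L\in\Lab_\sigma(\F)\}$ — this is the property isolated in the remark following Proposition~\ref{pro:signaturerel}, and it holds for each of the listed semantics by statements~1 and~2 of Fact~\ref{fact1} (one-to-one correspondence is not needed, only that every $\sigma$-extension arises as the in-part of some $\sigma$-labelling and vice versa). So I would begin by fixing such a $\sigma$ and a set of $3$-tuples $\Ss$ with $\Ss\in\Sigma^p_{\Lab_\sigma}$, and unfolding Definition~\ref{def:realizableprojection}: there is an AF $\F=(A,R)$ with $\Lab_\sigma(\F)|_{\Args_\Ss} = \{\L|_{\Args_\Ss} \mid \L\in\Lab_\sigma(\F)\} = \Ss$. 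A one-line preliminary is that $\Args_\Ss\subseteq A$: since each $\sigma$-labelling of $\F$ assigns a status to \emph{every} argument of $\F$, the argument set underlying each restricted tuple $\L|_{\Args_\Ss}$ equals $A\cap\Args_\Ss$, so taking the union over $\Ss$ gives $\Args_\Ss = A\cap\Args_\Ss\subseteq A$ (the case $\Ss=\emptyset$ being trivial, as then also $\Ext_\sigma(\F)=\emptyset$). Hence $B:=\Args_\Ss$ is a legitimate projection set for $\F$.

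The heart of the argument is then the computation, using that restricting a labelling is componentwise intersection (so $(\L|_B)^{\text{\tiny I}} = \LI\cap B$) together with the structural fact above:
\begin{align*}
  \Ext_\sigma(\F)|_B
  &= \{E\cap B \mid E\in\Ext_\sigma(\F)\}
  = \{\LI\cap B \mid \L\in\Lab_\sigma(\F)\} \\
  &= \{(\L|_B)^{\text{\tiny I}} \mid \L\in\Lab_\sigma(\F)\}
  = \{\MI \mid \M\in\Lab_\sigma(\F)|_B\}
  = \{\MI \mid \M\in\Ss\}.
\end{align*}
Thus $\F$ together with the projection set $B=\Args_\Ss\subseteq A$ exhibits $\{\LI\mid\L\in\Ss\}$ as a member of $\Sigma^p_{\Ext_\sigma}$ (reading the latter through Definition~\ref{def:signaturesprojection}, which only asks for \emph{some} AF $\F'$ and \emph{some} $B'\subseteq A(\F')$; if one prefers the verbatim form of Definition~\ref{def:realizableprojection}, one notes $\Args_{\{\LI\mid\L\in\Ss\}}\subseteq B$ and that the displayed equality is preserved under the further restriction to this smaller set — a routine set-theoretic check). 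This settles item~1.

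For item~2 the identical reasoning applies: if $\Ss\in\Sigma^{f,p}_{\Lab_\sigma}$ the witnessing AF $\F$ may be taken finite, and the same finite $\F$ with projection set $\Args_\Ss$ then witnesses $\{\LI\mid\L\in\Ss\}\in\Sigma^{f,p}_{\Ext_\sigma}$. The only genuinely delicate point, and the step I would spend the most care on, is the bookkeeping around the two restriction operators — that projecting $\Ext_\sigma(\F)$ to $\Args_\Ss$ (and, if needed, further to $\Args_{\{\LI\mid\L\in\Ss\}}$) yields exactly the in-parts of the projected labellings — together with the degenerate case $\Ss=\emptyset$. No special handling of the recursively defined $\cfzwei,\stgzwei$ is required: the equality $\Ss=\Lab_\sigma(\F)|_{\Args_\Ss}$ forces $\sigma$ to be defined on $\F$, and on such $\F$ the structural fact $\Ext_\sigma(\F)=\{\LI\mid\L\in\Lab_\sigma(\F)\}$ holds for those semantics too.
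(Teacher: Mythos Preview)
Your proposal is correct and follows exactly the approach the paper intends: the paper does not spell out a proof for this proposition but merely states it ``analogously to Proposition~\ref{pro:signaturerel}'', whose justification rests on the structural fact $\Ext_\sigma(\F)=\{\LI\mid\L\in\Lab_\sigma(\F)\}$ --- precisely the reduction you carry out in detail. One minor remark: Fact~\ref{fact1} as stated only lists eight semantics, so for the remaining five ($\stg,\nav,\cf,\cfzwei,\stgzwei$) you should appeal instead to the paper's standing assumption (cf.\ the remarks surrounding Facts~\ref{fact:extlabcom} and~\ref{fact:extlabana}) that the same extension--labelling correspondence holds for all thirteen semantics in the list.
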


As a matter of fact, any projection signature is a superset of the corresponding signature. The following question then arises naturally: how much more sets can be generated if we stick to realizability under projection? For instance, we have already seen that even comparable sets are realizable under projection by semantics satisfying incomparability (Example~\ref{ex:labelprojection}). It was the main result in \cite[Theorem 3.1]{Dyr14} that in case of semi-stable and preferred semantics indeed any $3$-valued labelling-set is finitely realizable under projection. The proof relies on two basic constructions. The first step \textit{generates} an AF, consisting of so-called \textit{circuits}, s.t.\ its set of preferred as well as semi-stable labellings restricted to the relevant arguments contains any possible labelling. The second construction \textit{eliminates} undesired labellings step by step. Combining this realizability result with statement 2 of Proposition~\ref{pro:signaturerelpro} yields the following theorem. 

\begin{theorem} Let $\sigma\in\{\prf,\semi\}$. We have,

\begin{enumerate}
\item $\Sigma_{\Lab_{\sigma}}^{f,p} = \left\{\Ss\subseteq \left(2^{\m{U}}\right)^3 \mid \Ss \text{ is a labelling-set} \right\}$ and
	\item $\Sigma_{\Ext_{\sigma}}^{f,p} = \left\{\Ss\subseteq 2^{\m{U}} \mid \Ss \text{ is an extension-set} \right\}$.
	
\end{enumerate} 
\end{theorem}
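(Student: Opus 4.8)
The statement to prove is the final theorem: for $\sigma \in \{\prf, \semi\}$, the finite projection-signatures (labelling-based and extension-based) are exactly all labelling-sets / extension-sets respectively.

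My plan is to prove both items by showing the two inclusions for each. The "$\subseteq$" direction is immediate from the definitions: any element of $\Sigma_{\Lab_\sigma}^{f,p}$ is by construction of the form $\sigma(\F)|_{\Args_\Ss}$ for a finite AF $\F$, and such a restriction is always a labelling-set (it has finitely many arguments, each labelling assigns exactly one status to each argument in $\Args_\Ss$, and the three components of each labelling are pairwise disjoint — these properties are inherited from the ambient $\sigma$-labellings of a finite AF and preserved under restriction to a subset of arguments). The extension-based version is analogous: a restriction of extensions to a finite argument set is an extension-set (it has a finite carrier). So I would dispatch "$\subseteq$" in a sentence or two.

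The "$\supseteq$" direction is where the content lies, and here I would invoke the cited results rather than reprove them. For item 1 (labelling-based), I would cite Dyrkolbotn's Theorem~3.1 from \cite{Dyr14}, which states precisely that for semi-stable and preferred semantics, every $3$-valued labelling-set is finitely realizable under projection — via his two-step construction (first generating an AF of "circuits" whose preferred/semi-stable labellings, restricted to the relevant arguments, contain every possible labelling, then eliminating undesired ones step by step). That gives item 1 directly. For item 2 (extension-based), the cleanest route is to derive it from item 1 using the relationship between labelling-based and extension-based realizability under projection. Given an arbitrary extension-set $\Ss \subseteq 2^\m{U}$, I would form the labelling-set $\hat{\Ss} = \{(\E, \Args_\Ss \setminus \E, \emptyset) \mid \E \in \Ss\}$ — this is a legitimate labelling-set (finite carrier $\Args_\Ss$, each triple covers $\Args_\Ss$, components disjoint). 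By item 1 there is a finite AF $\F$ with $\Lab_\sigma(\F)|_{\Args_\Ss} = \hat{\Ss}$. Then since for $\sigma \in \{\prf, \semi\}$ we have $\Ext_\sigma(\F) = \{\LI \mid \L \in \Lab_\sigma(\F)\}$, taking first components and intersecting with $\Args_\Ss$ commutes with restriction, so $\Ext_\sigma(\F)|_{\Args_\Ss} = \{\LI|_{\Args_\Ss} \mid \L \in \Lab_\sigma(\F)\}|_{\Args_\Ss} = \Ss$. Hence $\Ss \in \Sigma_{\Ext_\sigma}^{f,p}$.

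The main obstacle — or rather the main thing to be careful about — is the bookkeeping in the restriction-commutes-with-projection step for item 2: one has to check that $\{\LI \mid \L \in \Lab_\sigma(\F)\}$ restricted to $\Args_\Ss$ equals $\{(\LI \cap \Args_\Ss) \mid \L \in \Lab_\sigma(\F)\}$ equals $\{\E : (\E, \cdot, \cdot) \in \hat{\Ss}\} = \Ss$, i.e. that passing to in-labels, then restricting, is the same as restricting the whole labelling then passing to in-labels. This is routine but worth stating explicitly, and it uses exactly the identity $\Ext_\sigma(\F) = \{\LI \mid \L \in \Lab_\sigma(\F)\}$ (which holds for preferred and semi-stable, cf.\ the one-to-one correspondence between extensions and labellings discussed earlier). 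An alternative to deriving item 2 from item 1 would be to cite directly statement~2 of Proposition~\ref{pro:signaturerelpro} in the other direction combined with Dyrkolbotn's extension-level construction, but the reduction above is self-contained given item 1. I would also remark that the genuinely hard work — the circuit construction and the elimination gadgets — is entirely inside the cited \cite{Dyr14}, so the proof here is essentially a packaging argument plus the observation that arbitrary extension-sets embed as (degenerate) labelling-sets.
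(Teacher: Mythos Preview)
Your proposal is correct and follows essentially the same route as the paper: item~1 is Dyrkolbotn's Theorem~3.1 from \cite{Dyr14}, and item~2 is obtained from item~1 by passing through a labelling-set and using the relation $\Ext_\sigma(\F)=\{\LI\mid\L\in\Lab_\sigma(\F)\}$. The paper phrases the derivation of item~2 as ``combining this realizability result with statement~2 of Proposition~\ref{pro:signaturerelpro}'', which is precisely the mechanism you spell out by hand via the embedding $\E\mapsto(\E,\Args_\Ss\setminus\E,\emptyset)$---indeed you note this alternative citation yourself at the end.
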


\section{Final Remarks and Conclusion} \label{sec:sumcon2}

We have dealt with different forms of realizability in the context of abstract argumentation frameworks. In accordance with the existing literature the main part of
this section was devoted to finite realizability for extension-based semantics. However, for any semantics $\sigma$ we may state the following general subset relations depicted as Venn-diagram.

\begin{figure}[H]
  \centering
  \def\firstcircle{(0,0) circle (1.5cm)}
  \def\secondcircle{(0:2cm) circle (1.5cm)}
  \def\xshape{(-0.6cm,0) ellipse (2.7cm and 1.5cm)}
	\def\prfshape{(0.6,0) ellipse (2.7cm and 1.5cm)}
  \def\stgshape{(0cm,0) ellipse (4.2cm and 2.2cm)}
  \def\semishape{(0,0) ellipse (2.1cm and 1.35cm)}
  \def\stbshape{(0,-0.5) ellipse (1.1cm and 0.6cm)}
  \def\navshape{(0,0.5) ellipse (1.1cm and 0.6cm)}
  \colorlet{circle edge}{gray!50}
  \colorlet{circle area}{gray!50}
  \tikzset{filled/.style={fill=circle area, draw=circle edge, thick,opacity=.5},
      outline/.style={draw=circle edge, thick}}
  \begin{tikzpicture}
    \fill[filled,fill=gray!10] \stgshape;
		\fill[filled,fill=gray!25] \prfshape;
		\fill[filled,gray!40] \xshape;
    \fill[filled,fill=gray!55] \semishape;
    \fill[filled,gray!10] \navshape;
    \fill[filled,gray!25] \stbshape;

    \draw[outline] \prfshape node(prs) {};
    \draw[outline] \stgshape node(sgs) {};
    \draw[outline] \semishape node(sms) {};
    \draw[outline] \stbshape node(sbs) {};
		\draw[outline] \xshape node(sbs) {};
    \draw[outline] \navshape node (nas) {$\Sigma_{\sigma}^{f,c}$};
    \node[anchor=west] at (2.2cm,0) {$\Sigma_{\sigma}^{f,p}$};
    \node[anchor=west] at (1.2cm,0) {$\Sigma_{\sigma}^f$};
		\node[anchor=west] at (-3cm,0) {$\Sigma_{\sigma}$};
		\node[anchor=west] at (-4.05cm,0) {$\Sigma_{\sigma}^p$};
	\node at (0,-0.5cm) {$\Sigma_{\sigma}^{f,x}$};
  \end{tikzpicture}
  \caption{Subset Relations between Different Kinds of Signatures}
  \label{fig:venn_analytic}
\end{figure}
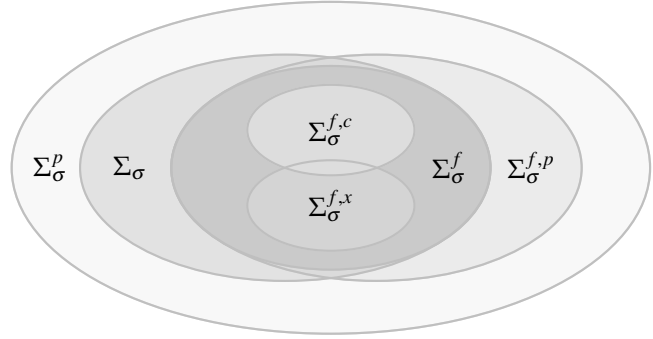

In case of the extension-based versions of naive, grounded, ideal, eager, stable, stage, preferred and semi-stable semantics as well as conflict-free and admissible sets we provided exact characterizations for their corresponding general signatures. We have seen that for some semantics we do not lose any expressive power if sticking to compact or analytic AFs, i.e.\ $\Sigma_\sigma^f = \Sigma_\sigma^{f,c}$ or $\Sigma_\sigma^f = \Sigma_\sigma^{f,x}$, respectively. However, for certain prominent semantics, e.g.\ preferred semantics we have that the expressive power indeed relies on the use of rejected arguments or implicit conflicts. For such semantics, it remains an open problem to present exact characterizations for finite, compact or finite, analytic realizability, respectively. 
In case of labelling-based versions of semi-stable and preferred semantics we have seen that any labelling-set is realizable under projection. In \cite{Dyr14} it was already noted that this equality does not hold for any semantics. For instance, the empty labelling is admissible for any AF $\F$. Hence, in case of admissible semantics, no labelling-set is realizable under projection if it fails to include the empty labelling. 

Finally, let us mention some computational issues not considered so far. It can be said that on the one hand, the classes of finite, compact and finite, analytic 
provide computational benefits both in practice and in terms of theoretical 
worst-case analysis. On the other hand testing for membership in one of
the classes is, for most of the semantics, of rather high complexity and thus 
these classes cannot be directly used to improve systems. We refer the interested reader to \cite{compactj} for more details. Moreover, in general, given an extension-set $\Ss$, deciding whether $\Ss$ is compactly realizable
is a hard problem, that is, by definition of the decision problem there are no good reasons to believe that we can do
any better than guessing a compact AF and checking whether its extension-set
coincides with~$\Ss$. Nevertheless, for some semantics we have seen that finite, compact realizability can be characterized locally, i.e.\ by properties of $\Ss$ itself (as shown in  Theorem~\ref{the:charcompactsig}). In this case, finite, compact realizability can be checked in polynomial time as for standard finite realizability \cite[Theorem 6]{DunDLW15}. Moreover, in \cite{compactj} a huge number of shortcuts
to detect non-compactness are provided. By shortcut we mean a property of the given
extension-set $\Ss$ that is easily computable (preferably in polynomial time) which (sometimes)
provides us with a definitive answer to the decision problem. These
shortcuts are related to numerical aspects of argumentation frameworks like results concerning maximal number of extensions \cite{characteristic}.

\chapter{Replaceability of Subframeworks} \label{cha:replace}

Given a certain logical formalism $\m{L}$ and two syntactically different $\m{L}$-theories $T_1$ and $T_2$. One central question is whether, and if so, how to decide that these $\m{L}$-theories represent the same information? Of course, in order to answer this question we have to clarify what we exactly mean by sharing the same information first. Note that there is neither a uniquely determined, nor a certain preferred interpretation by the formalism $\m{L}$ itself. For instance, equating information with possessing the same semantics yields to the well-known notion of \textit{ordinary} or \textit{standard equivalence}. This means, assuming that $\sigma_{\m{L}}$ is the semantics of $\m{L}$ we might answer that $T_1$ and $T_2$ are equivalent if and only if $\sigma_{\m{L}}(T_1) = \sigma_{\m{L}}(T_2)$. A more demanding interpretation of sharing the same information is to require that $T_1$ and $T_2$ are semantically indistinguishable even if further $\m{L}$-theories $T$ are added to both simultaneously. More formally, we may state: $T_1$ and $T_2$ are considered to be equivalent if and only if $\sigma_{\m{L}}(T_1 \cup T) = \sigma_{\m{L}}(T_2 \cup T)$ for any theory~$T$. This notion is known as \textit{strong equivalence} and is of high interest for any logical formalism since it allows one to locally replace, and thus give rise for simplification, parts of a given theory without changing the semantics of the latter. In contrast to classical logics where standard and strong equivalence coincide (cf.\ Chapter~\ref{cha:intersect}), it is possible to find ordinary but not strongly equivalent objects for any nonmonotonic formalism available in the literature. Consequently, much effort has been devoted to characterizing strong equivalence for nonmonotonic formalisms such as logic programs~\cite{DBLP:journals/tocl/LifschitzPV01}, causal theories \cite{DBLP:conf/lpnmr/Turner04}, default logic \cite{Turner01} as well as nonmonotonic logics in general~\cite{DBLP:journals/amai/Truszczynski06}. 

In \cite{strong} the authors introduced the notion of strong equivalence for abstract AFs. They provided a series of characterization theorems for deciding strong equivalence of two AFs with respect to several semantics. In view of the fact that strong equivalence is defined semantically it is the main and quite surprisingly insight that being strongly equivalent can be decided syntactically. More precisely, they introduced the notion of a \textit{kernel} of an AF $\AF$ which is (informally speaking) a subgraph of $\AF$ where certain attacks are deleted and showed that syntactical identity of suitably chosen kernels characterizes strong equivalence w.r.t.\ the considered semantics. Strong equivalence is, as its name suggests, a very (and often unnecessarily to) strong notion of equivalence if dynamic evolvements are considered. 
In many argumentation scenarios the type of modification which may potentially occur
can be anticipated and furthermore, more importantly, does not range over \textit{arbitrary expansions} as required for strong equivalence. 
Let us consider the instantiation-based context where AFs are built from an underlying knowledge base. Here, we typically observe that older arguments and their corresponding attacks survive and only new arguments which may interact with the previous ones arise given that a new
piece of information is added to the underlying knowledge base. This type of dynamic evolvement is a so-called \textit{normal expansion} and its corresponding equivalence notion were firstly studied in \cite{normal}. 
Over the last five years several equivalence notions taking into account specific types of evolvements reflecting the nature of various argumentation scenarios were defined and characterized. The considered dynamic scenarios range from the most general form, so-called
\textit{updates} \cite{Bau14} where arguments and attacks can be deleted and added to different types of \textit{expansions} \cite{strong,normal,Bau10} and \textit{deletions} \cite{Bau14} where arguments and/or attacks are allowed to be added or deleted in a certain way only. 

Into the year 2015 all characterization theorems were stated in terms of extension-based semantics. Recently, Baumann presents their labelling-based counterparts and showed that, although labelling-based semantics contain more information then there extension-based counterpart, there is a majority
of equivalence relations where labelling-based and extension-based versions coincide \cite{Bau16}. Even more recently, a first consideration of strong equivalence regarding unrestricted frameworks were presented in \cite{BauS17}. It turned out that there are no characterizational differences compared to the finite case as long as the AFs in question are \textit{jointly expandable}, i.e.\ that the existence of fresh arguments is guaranteed.

Another approach somehow complementary to the ones mentioned before is presented in \cite{inputoutput} where sharing the same information is interpreted as possessing the same Input/Output behavior. Roughly speaking, the main idea is to consider an argumentation framework as a kind of black box which receives some input from the external world (i.e, a set of external arguments) via incoming attacks and produces an output to the external world via outgoing attacks. Such an interacting module is called an \textit{argumentation multipole}. Two multipoles connected with the same external world are considered as \textit{Input/Output equivalent} if the effects, i.e.\ the produced labellings for external arguments are the same for any reasonable input-labelling. This notion yields the possibility of replacing a
multipole with another one embedded in a larger framework without affecting the
labellings of the unmodified part of the initial framework. The interested reader is referred to \cite{BarGL18} for further information. In the following we shed light on equivalence notions induced by certain dynamic scenarios.

\section{Dynamic Scenarios and Corresponding Equivalence Notions}

There are two main classes of dynamic scenarios, namely \textit{expansions} and \textit{deletions}. Both of them can be further divided in \textit{normal} and \textit{local} versions. These scenarios are motivated by real-world argumentation as well as instantiation-based argumentation \cite{cameva}. For instance, let us consider the dynamics of a discussion or dispute illustrated by the following citation \cite{BesH09}:

\begin{quote}
How does argumentation usually take place? Argumentation starts
when an initial argument is put forward, making some claim. An objection
is raised, in the form of a counterargument. The latter is addressed
in turn, eventually giving rise to a counter-counterargument,
if any. And so on.
\end{quote}

This means, in order to strengthen the own point of view or to rebut the opponents arguments it is natural that one tries to come up with \textit{stronger} arguments, i.e.\ new arguments which are not attacked by the former arguments. This type of dynamics is formally captured by so-called \textit{strong expansions} \cite{Bau10}. The formal counterpart of it, so-called \textit{weak expansions} \cite{Bau10}, where the new arguments do not attack (but may be attacked by) the old ones seem to be more an academic exercise than a task with practical relevance with regard to real-world argumentation.\footnote{We mention that they do play a decisive role w.r.t.\ computational issues, so-called  \textit{splitting methods} (cf.\ \cite{split,tafa,split2}).} Let us turn to instantiation-based argumentation where arguments and attacks stem from an underlying knowledge base (cf.\ Chapter 6 for detailed information as well as Figure 2 in Chapter 4 for an illustration). What happens on the abstract level if a new piece of information is added? It turns out that in almost all deductive
argumentation systems older arguments and their corresponding attacks survive
and only new arguments which may interact with the previous ones arise. This type of dynamic evolvement is formally captured by so-called \textit{normal expansions}. \textit{Local expansions} in contrast, i.e.\ expansions where new attacks are added only correspond to re-instantiations if we change to a less restrictive notion of attack (cf.\ \cite{hunterlink} for different attack notions).

We start with the definition of the different types of expansions together with some introducing examples.

\begin{definition}[\cite{Bau10}] \label{def:expansion}
An AF $\G$ is an \textit{expansion} of AF $\F = (A,R)$ (for short, $\F\preceq_E \G$) iff $\G = (A\ \dot{\cup}\ B, R\ \dot{\cup}\ S)$ for some (maybe empty) sets $B$ and $S$, s.t.\ $A\cap B = R\cap S = \emptyset$. An expansion is called

\begin{enumerate}
    \item  \textit{normal} ($\F\preceq_{N}\!\G$) iff\ $\forall ab\ \left((a,b)\in S \to a\in B \vee b\in B\right)$,
    \item \textit{strong}~($\F\preceq_S\!\G$)~iff~$\F\preceq_{N}\!\G \text{ and }\forall ab\ \left((a,b)\in S \rightarrow \neg(a\in A \wedge b\in B)\right)$,
		\item \textit{weak}~($\F\preceq_W\!\G$)~iff~$\F\preceq_{N}\!\G \text{ and } \forall ab\ \left((a,b)\in S \rightarrow \neg(a\in B \wedge b\in A)\right)$,
		\item \textit{local} ($\F\preceq_{L}\!\G$) iff $B = \emptyset$.
\end{enumerate}
\end{definition}
For short, being a normal expansion means that new attacks
must involve at least one new argument in contrast to local expansions where new attacks involve old arguments only. Moreover, strong and weak expansions are normal and their names refer to properties of the additional arguments, namely arguments which are never attacked by former arguments (so-called \textit{strong} arguments) and arguments which do not attack former arguments (so-called \textit{weak} arguments). 

Observe that any arbitrary expansion can be splitted up in a normal and a local part. This can be nicely seen in the following example.

\begin{example} The AF $\F$ is the initial framework. An arbitrary, normal, strong, weak or local expansion of it are given by $\F_E$, $\F_N$, $\F_S$, $\F_W$ and $\F_L$, respectively. Grey-highlighted arguments or attacks represent added information. 
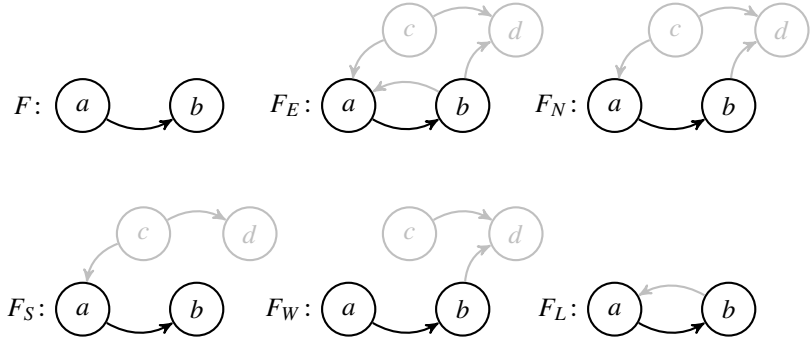
\begin{figure}[H]
\centering
\begin{tikzpicture}
    \node (A1) at (0,0) [circle, minimum size=0.7cm, thick, draw, label = left:$\F\!:$]{$a$};
    \node (B1) at (1.5,0) [circle, minimum size=0.7cm, thick, draw]{$b$};
		
		 \node (A1'') at (3.5,0) [circle,minimum size=0.7cm, thick, draw, label = left:$\F_E\!:$]{$a$};
    \node (B1'') at (5,0) [circle, minimum size=0.7cm, thick, draw]{$b$};
		\node (C1'') at (4.3,1) [circle, minimum size=0.7cm, thick, draw,gray!50]{$c$};
    \node (D1'') at (5.7,1) [circle, minimum size=0.7cm, thick, draw,gray!50]{$d$};

		\node (A2'') at (7,0) [circle, minimum size=0.7cm, thick, draw, label = left:$\F_N\!:$] {$a$};
    \node (B2'') at (8.5,0) [circle, minimum size=0.7cm,  thick, draw] {$b$};
    \node (C2'') at (7.8,1) [circle, minimum size=0.7cm, thick, draw,gray!50]{$c$};
		\node (D2'') at (9.2,1) [circle, minimum size=0.7cm, thick, draw,gray!50]{$d$};


		\node (A3) at (0,-2.7) [circle, minimum size=0.7cm, thick, draw, label = left:$\F_S\!:$] {$a$};
    \node (B3) at (1.5,-2.7) [circle, minimum size=0.7cm, thick, draw] {$b$};
    \node (C3) at (0.8,-1.7) [circle, minimum size=0.7cm, thick, draw,gray!50]{$c$};
		\node (D3) at (2.2,-1.7) [circle, minimum size=0.7cm, thick, draw,gray!50]{$d$};

		\node (A2) at (3.5,-2.7) [circle, minimum size=0.7cm, thick, draw, label = left:$\F_W\!:$] {$a$};
    \node (B2) at (5,-2.7) [circle,minimum size=0.7cm , thick, draw] {$b$};
    \node (C2) at (4.3,-1.7) [circle, minimum size=0.7cm, thick, draw,gray!50]{$c$};
		\node (D2) at (5.7,-1.7) [circle, minimum size=0.7cm, thick, draw,gray!50]{$d$};
		
		\node (A3'') at (7,-2.7) [circle, minimum size=0.7cm, thick, draw, label = left:$\F_L\!:$] {$a$};
    \node (B3'') at (8.5,-2.7) [circle, minimum size=0.7cm, thick, draw] {$b$};
		\node (X'') at (9.2,-1.7) [circle, minimum size=0.7cm, thick] {};

\draw[->,thick] (A1'') to [thick,bend right] (B1'');
\draw[->,gray!50,thick] (C1'') to [thick,bend right,gray!50] (A1'');
\draw[->,gray!50,thick] (B1'') to [thick,bend left,gray!50] (D1'');
\draw[->,gray!50,thick] (B1'') to [thick,bend right,gray!50] (A1'');
\draw[->,gray!50,thick] (C1'') to [thick,bend left,gray!50] (D1'');

\draw[->,thick] (A2'') to [thick,bend right] (B2'');
\draw[->,thick,gray!50] (C2'') to [thick,bend right,gray!50] (A2'');
\draw[->,thick,gray!50] (B2'') to [thick,bend left,gray!50] (D2'');
\draw[->,thick,gray!50] (C2'') to [thick,bend left,gray!50] (D2'');

\draw[->,thick] (A3'') to [thick,bend right] (B3'');

\draw[->,thick,gray!50] (B3'') to [thick,bend right,gray!50] (A3'');

\draw[->,thick] (A1) to [thick,bend right] (B1);

\draw[->,thick] (A2) to [thick,bend right] (B2);
\draw[->,thick,gray!50] (B2) to [thick,bend left,gray!50] (D2);
\draw[->,thick,gray!50] (C2) to [thick,bend left,gray!50] (D2);

\draw[->,thick] (A3) to [thick,bend right] (B3);
\draw[->,thick,gray!50] (C3) to [thick,bend right,gray!50] (A3);

\draw[->,thick,gray!50] (C3) to [thick,bend left,gray!50] (D3);

\end{tikzpicture}
\caption{Different Kinds of Expansions}
\label{fig:expansion}
\end{figure}

\end{example}

In 2014 the natural counter-parts (or more precisely, inverse operations) to arbitrary, normal and local expansions, so-called \textit{deletions} were introduced \cite{Bau14}. Furthermore, the most general form of a dynamic scenario (where expansion and deletion can be combined) a so-called \textit{update} were considered too. Analogously to expansions, any arbitrary deletion can be splitted in a normal and a local part. This means, a \textit{normal deletion} retract arguments and their corresponding attacks. \textit{Local deletions} in contrast delete attacks only.\footnote{We mention that \textit{strong} as well as \textit{weak deletions} are not introduced/considered so far. They could be easily defined as inverse operations of their expansion counterparts. Before doing so, it would be interesting to identify real-world situations or instantiation-based dynamics were such kind of evolvements naturally occur.} 
The main motivation behind these notions stems from instantiation-based context. More precisely, a normal deletion on the abstract level correspond to deleting information of a given knowledge base. Changing to a more restrictive notion of attack correspond to a local deletion and a combination of both of them give rise to an arbitrary deletion on the abstract level. We proceed with the formal definitions as well as introductory examples.

\begin{definition}[\cite{Bau14}] \label{def:update} 
Given an AF $\F = (A,R)$, a set of arguments $B$ and a set of attacks~$S$ as well as a further AF $\H$.  The AF 
$$\G = \left(\F\sm[B,S]\right)\dcup\H \eqdef  \left((A,R\sm S)|_{A\sm B}\right)\dcup\H$$
is called an \textit{update} of $\F$ (for short, $\F\asymp_U \G$). An update is called a
\begin{enumerate}
\item \textit{deletion} ($\F\succeq_{D}\G$) iff $\H = (\emptyset,\emptyset)$,
    \item \textit{normal deletion} ($\F\succeq_{ND}\G$)
    iff ($\F\succeq_{D}\G$) and $S = \emptyset$,
    \item \textit{local deletion} ($\F\succeq_{LD}\G$)
    iff $\F\succeq_{D}\G$ and $B = \emptyset$.
\end{enumerate}
\end{definition}

Let us take a closer look at the definition of $\G = \left(\F\sm[B,S]\right)\dcup\H$. The AF $\H$ plays the role of added information, i.e.\ it contains new arguments and attacks. Consequently, for all kind of deletions we have $\H = (\emptyset,\emptyset)$ which leaves us with $\G = \F\sm[B,S]$. The set $B$ contains arguments which have to deleted. Since attacks depend on arguments we have to delete the attacks which involve arguments from $B$ too. This operation is formally captured by the restriction of $\F$ to $A\sm B$. Furthermore, the set $S$ contains particular attacks which have to be deleted. This means, the pair $\pair$ does not necessarily have to be an AF. Therefore we use $\pair$ instead of $(B,S)$. If clear from context we use $B$ and $S$ instead of $[B,\emptyset]$ or $[\emptyset,S]$, i.e.\ we simply write $\F\sm B$ as well as $\F\sm S$ for normal or local deletions, respectively.

\begin{example} The AF $\F$ represents the initial situation. An update as well as arbitrary, normal or local deletion of it are given by $\F_U$, $\F_D$, $\F_{ND}$ and $\F_{LD}$. Grey-highlighted arguments or attacks represent added information in contrast to dotted arguments and attacks which represent deleted objects.\footnote{This convention will be used throughout the whole chapter.} More formally, in accordance with Definition~\ref{def:update} we have that $\F_U = \left(\F\sm[B,S]\right)\dcup\H$, $\F_D = \F\sm[B,S]$, $\F_{ND} = \F\sm B$, $\F_{LD} = \F\sm S$ where the set of arguments $B = \{c\}$, the set of attacks $S = \{(b,a)\}$ and the AF $\H = (\{b,d,e,f\},\{(d,b),(e,f),(f,d)\})$.

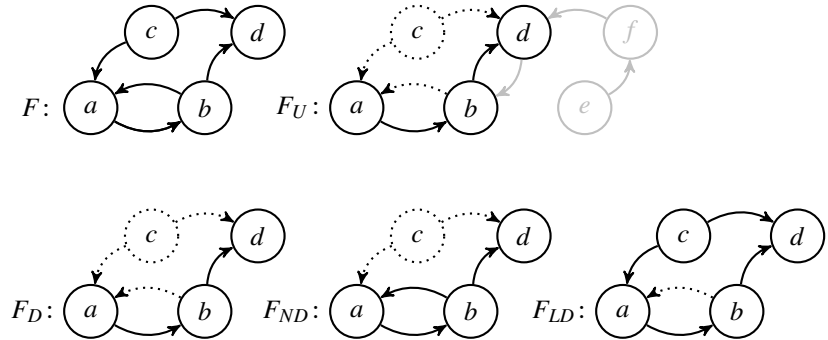
\begin{figure}[H]
\centering
\begin{tikzpicture}

		 \node (A1'') at (0,0) [circle,minimum size=0.7cm, thick, draw, label = left:$\F\!:$]{$a$};
    \node (B1'') at (1.5,0) [circle, minimum size=0.7cm, thick, draw]{$b$};
		\node (C1'') at (0.8,1) [circle, minimum size=0.7cm, thick, draw]{$c$};
    \node (D1'') at (2.2,1) [circle, minimum size=0.7cm, thick, draw]{$d$};

		\node (A2'') at (3.5,0) [circle, minimum size=0.7cm, thick, draw, label = left:$\F_U\!:$] {$a$};
    \node (B2'') at (5,0) [circle, minimum size=0.7cm,  thick, draw] {$b$};
    \node (C2'') at (4.3,1) [circle, minimum size=0.7cm, thick, draw,dotted]{$c$};
		\node (D2'') at (5.7,1) [circle, minimum size=0.7cm, thick, draw]{$d$};
		\node (E2'') at (6.5,0) [circle, minimum size=0.7cm,  thick, draw,gray!50] {$e$};
		\node (F2'') at (7.1,1) [circle, minimum size=0.7cm, thick, draw,gray!50]{$f$};


		\node (A3) at (0,-2.7) [circle, minimum size=0.7cm, thick, draw, label = left:$\F_D\!:$] {$a$};
    \node (B3) at (1.5,-2.7) [circle, minimum size=0.7cm, thick, draw] {$b$};
    \node (C3) at (0.8,-1.7) [circle, minimum size=0.7cm, thick, draw, dotted]{$c$};
		\node (D3) at (2.2,-1.7) [circle, minimum size=0.7cm, thick, draw]{$d$};

		\node (A2) at (3.5,-2.7) [circle, minimum size=0.7cm, thick, draw, label = left:$\F_{ND}\!:$] {$a$};
    \node (B2) at (5,-2.7) [circle,minimum size=0.7cm , thick, draw] {$b$};
    \node (C2) at (4.3,-1.7) [circle, minimum size=0.7cm, thick, draw, dotted]{$c$};
		\node (D2) at (5.7,-1.7) [circle, minimum size=0.7cm, thick, draw]{$d$};
		
		\node (A3'') at (7,-2.7) [circle, minimum size=0.7cm, thick, draw, label = left:$\F_{LD}\!:$] {$a$};
    \node (B3'') at (8.5,-2.7) [circle, minimum size=0.7cm, thick, draw] {$b$};
		\node (C3'') at (7.8,-1.7) [circle, minimum size=0.7cm, thick, draw] {$c$};
    \node (D3'') at (9.3,-1.7) [circle, minimum size=0.7cm, thick, draw] {$d$};
		\node (X'') at (9.2,-1.7) [circle, minimum size=0.7cm, thick] {};

\draw[->,thick] (A1'') to [thick,bend right] (B1'');
\draw[->,thick] (C1'') to [thick,bend right,gray!50] (A1'');
\draw[->,thick] (B1'') to [thick,bend left,gray!50] (D1'');
\draw[->,thick] (B1'') to [thick,bend right,gray!50] (A1'');
\draw[->,thick] (C1'') to [thick,bend left,gray!50] (D1'');

\draw[->,thick] (A2'') to [thick,bend right] (B2'');
\draw[->,thick, dotted] (C2'') to [thick,bend right, dotted] (A2'');
\draw[->,thick] (B2'') to [thick,bend left, gray!50] (D2'');
\draw[->,thick, dotted] (C2'') to [thick,bend left,dotted] (D2'');
\draw[->,thick,  dotted] (B2'') to [thick,bend right, dotted] (A2'');

\draw[->,thick, gray!50] (E2'') to [thick,bend right,red] (F2'');
\draw[->,thick, gray!50] (F2'') to [thick,bend right,red] (D2'');
\draw[->,thick, gray!50] (D2'') to [thick,bend left,red] (B2'');

\draw[->,thick] (A3'') to [thick,bend right] (B3'');
\draw[->,thick, dotted] (B3'') to [thick,bend right,red] (A3'');
\draw[->,thick] (C3'') to [thick,bend right] (A3'');
\draw[->,thick] (C3'') to [thick,bend left] (D3'');
\draw[->,thick] (B3'') to [thick,bend left] (D3'');

\draw[->,thick] (A1) to [thick,bend right] (B1);

\draw[->,thick] (A2) to [thick,bend right] (B2);
\draw[->,thick] (B2) to [thick,bend left] (D2);
\draw[->,thick] (B2) to [thick,bend right] (A2);
\draw[->,thick, dotted] (C2) to [thick,bend left,red] (D2);
\draw[->,thick, dotted] (C2) to [thick,bend right,red] (A2);

\draw[->,thick] (A3) to [thick,bend right] (B3);
\draw[->,thick] (B3) to [thick,bend left] (D3);
\draw[->,thick, dotted] (B3) to [thick,bend right] (A3);
\draw[->,thick, dotted] (C3) to [thick,bend right] (A3);

\draw[->,thick, dotted] (C3) to [thick,bend left,red] (D3);

\end{tikzpicture}
\caption{An Update and Different Kinds of Deletions}
\label{fig:update}
\end{figure}

\end{example}

We now turn to the corresponding equivalence notions (cf.\
\cite[Section 3.8]{BauSt15} for chronological
order). Two AFs $\F$ and $\G$ are said to be \textit{ordinarily equivalent} w.r.t.\ a 
semantics $\sigma$ if they possess the same $\sigma$-extensions/labellings.
In this case, we say that $\F$ and $\G$ possess the same \textit{explicit} information.
In contrast, sharing the same \textit{implicit} information, i.e.\ being semantically indistinguishable w.r.t.\ any suitable future
scenario is a much more demanding property which allows to replace $\F$ and $\G$ by each other without loss of semantical information. 

\begin{example} Consider the following AFs $\F$ and $\G$. We have $\m{E}_{\prf}(\F) = \m{E}_{\prf}(\G) = \{\{a\}\}$. This means, $\F$ and $\G$ possess the same explicit information w.r.t.\ preferred semantics or in other words, they are ordinarily equivalent.

\begin{tikzpicture}

    \node (A1'') at (0,-2.5) [circle,minimum size=0.7cm, thick, draw, label = left:$\F\!:$]{$a$};
    \node (B1'') at (1.5,-2.5) [circle, minimum size=0.7cm, thick, draw]{$b$};
		\node (C1'') at (3,-2.5) [circle, minimum size=0.7cm, thick, draw]{$c$};

		\node (A2'') at (5,-2.5) [circle, minimum size=0.7cm, thick, draw, label = left:$\G\!:$] {$a$};
    \node (B2'') at (6.5,-2.5) [circle, minimum size=0.7cm,  thick, draw] {$b$};
    \node (C2'') at (8,-2.5) [circle, minimum size=0.7cm, thick, draw]{$c$};

\draw[->, thick] (A1'') to [thick,bend right] (B1'');
\draw[->, thick] (A1'') to [thick,bend left,out=44,in=120] (C1'');

\draw[->, thick] (B1'') to [thick,bend right] (C1'');
\draw[->, thick] (C1'') to [thick,bend right] (B1'');

\draw[->, thick] (A2'') to [thick,bend right] (B2'');
\draw[->, thick] (A2'') to [thick,bend left,out=44,in=120] (C2'');

\draw[->, thick] (C2'') to [thick,bend right] (B2'');
\draw[->, thick] (B2'') to [thick,bend right] (A2'');

\end{tikzpicture}

Assume that expansions as well deletions are the dynamic scenarios of interest. This means, we ask whether the AFs $\F$ and $\G$ even possess the same implicit information w.r.t.\ expansions or deletions, respectively? In order to give a negative answer one has to come up with one single dynamic scenario were the revised versions possess different preferred extensions. A positive answer in contrast is a statement about infinitely many dynamic scenarios (even in case of finite AFs). In this example, we give a negative answer for both modification types.

In case of expansions, we conjoin to both the AF $\H = (\{a,b\},\{(b,a)\})$. Consider the resulting frameworks below. We have $\m{E}_{\prf}(\F\dcup\H) = \{\{a\},\{b\}\}$ and since $\G\dcup\H = \G$ we obtain $\m{E}_{\prf}(\G\dcup\H) = \{\{a\}\}$ without re-computing.
 
\begin{tikzpicture}

    \node (A1'') at (0,-2.5) [circle,minimum size=0.7cm, thick, draw, label = left:$\F\dcup\H\!:$]{$a$};
    \node (B1'') at (1.5,-2.5) [circle, minimum size=0.7cm, thick, draw]{$b$};
		\node (C1'') at (3,-2.5) [circle, minimum size=0.7cm, thick, draw]{$c$};

		\node (A2'') at (5.7,-2.5) [circle, minimum size=0.7cm, thick, draw, label = left:$\G\dcup\H\!:$] {$a$};
    \node (B2'') at (7.2,-2.5) [circle, minimum size=0.7cm,  thick, draw] {$b$};
    \node (C2'') at (8.7,-2.5) [circle, minimum size=0.7cm, thick, draw]{$c$};

\draw[->, thick] (A1'') to [thick,bend right] (B1'');
\draw[->, thick] (A1'') to [thick,bend left,out=44,in=120] (C1'');

\draw[->, thick] (B1'') to [thick,bend right] (C1'');
\draw[->, thick, gray!50] (B1'') to [thick,bend right] (A1'');
\draw[->, thick] (C1'') to [thick,bend right] (B1'');

\draw[->, thick] (A2'') to [thick,bend right] (B2'');
\draw[->, thick] (A2'') to [thick,bend left,out=44,in=120] (C2'');

\draw[->, thick] (C2'') to [thick,bend right] (B2'');
\draw[->, thick] (B2'') to [thick,bend right] (A2'');

\end{tikzpicture}

To reveal the inherent difference between $\F$ and $\G$ in case of deletions we may retract with the argument $c$. Consider the resulting (normal) deletions $\F\sm\{c\}$ and $\G\sm\{c\}$ of $\F$ or $\G$, respectively. Now, $\{b\}$ becomes a preferred extension in $\F\sm\{c\}$ but still not in $\G\sm\{c\}$.

\begin{tikzpicture}

    \node (A1'') at (0,-2.5) [circle,minimum size=0.7cm, thick, draw, label = left:$\F\sm\{c\}\!:$]{$a$};
    \node (B1'') at (1.5,-2.5) [circle, minimum size=0.7cm, thick, draw]{$b$};
		\node (C1'') at (3,-2.5) [circle, minimum size=0.7cm, thick, draw, dotted]{$c$};

		\node (A2'') at (5.7,-2.5) [circle, minimum size=0.7cm, thick, draw, label = left:$\G\sm\{c\}\!:$] {$a$};
    \node (B2'') at (7.2,-2.5) [circle, minimum size=0.7cm,  thick, draw] {$b$};
    \node (C2'') at (8.7,-2.5) [circle, minimum size=0.7cm, thick, draw, dotted]{$c$};

\draw[->, thick] (A1'') to [thick,bend right] (B1'');
\draw[->, thick, dotted] (A1'') to [thick,bend left,out=44,in=120] (C1'');

\draw[->, thick, dotted] (B1'') to [thick,bend right] (C1'');
\draw[->, thick, dotted] (C1'') to [thick,bend right] (B1'');

\draw[->, thick] (A2'') to [thick,bend right] (B2'');
\draw[->, thick, dotted] (A2'') to [thick,bend left,out=44,in=120] (C2'');

\draw[->, thick, dotted] (C2'') to [thick,bend right] (B2'');
\draw[->, thick] (B2'') to [thick,bend right] (A2'');

\end{tikzpicture}
\end{example}

We now formally define what we precisely mean by possessing the same implicit information. 
As already stated, the first paper in this line of work was \cite{strong} engaged with
characterizing \textit{strong equivalence}. For the sake of clarity and
comprehensibility we use the term \textit{expansion equivalence} since strong equivalence \cite[Definition~2]{strong}
corresponds to semantical indistinguishability w.r.t.\ arbitrary expansions. 

Before presenting the definitions let us introduce some further notational convention which will be extensively used throughout the whole chapter. For a given AF $\G = (B,S)$, we use $A(\G) = B$, $R(\G) = S$, $L(\G) = \{a\in B\mid (a,a)\in S\}$ and $NL(\G) = B\sm L(\G)$.

\begin{definition} \label{def:equivalence} Given a semantics $\sigma$. Two AFs $\AF$ and $\AG$ are

\begin{enumerate}
\item \textit{ordinarily equivalent w.r.t.}\ $\sigma$ (for short, $\AF\equiv^{\sigma}\!\AG$) iff $\sigma(\AF)=\sigma(\AG)$,

\item \textit{expansion equivalent w.r.t.}\ $\sigma$ (for short, $\AF\equiv_{E}^{\sigma}\!\AG$) iff
	for each AF $\AH$ we have, $\AF\dcup \AH \equiv^{\sigma}\! \AG\dcup \AH$,
	
	\item \textit{normal expansion equivalent w.r.t.}\ $\sigma$ (for short, $\AF\equiv_{N}^{\sigma}\!\AG$) iff
	for each AF~$\AH$, such that $\AF \preceq_N\! \AF\dcup \AH$ and $\AG \preceq_N\! \AG\dcup \AH$ we have, $\AF\dcup \AH \equiv^{\sigma}\! \AG\dcup \AH$,
	
	\item \textit{strong expansion equivalent w.r.t.}\ $\sigma$ (for short, $\AF\equiv_{S}^{\sigma}\!\AG$) iff
	for each AF~$\AH$, such that $\AF \preceq_S\! \AF\dcup \AH$ and $\AG \preceq_S\! \AG\dcup \AH$ we have, $\AF\dcup \AH \equiv^{\sigma}\! \AG\dcup \AH$,
	
	\item \textit{weak expansion equivalent w.r.t.}\ $\sigma$ (for short, $\AF\equiv_{W}^{\sigma}\!\AG$) iff
	for each AF~$\AH$, such that $\AF \preceq_W\! \AF\dcup \AH$ and $\AG \preceq_W\! \AG\dcup \AH$ we have, $\AF\dcup \AH \equiv^{\sigma}\! \AG\dcup \AH$,
	
	\item \textit{local expansion equivalent}\footnote{Note that a suitable AF $\AH$ is not necessarily a local expansion of $\AF$ and $\AG$ in the sense of Definition~\ref{def:expansion}. Nevertheless, we may loosely speak about local expansions.} \textit{w.r.t.}\ $\sigma$ (for short, $\AF\equiv_{L}^{\sigma}\!\AG$) iff
	for each AF~$\AH$ with $A(\AH)\subseteq A(\AF\dcup \AG)$ we have, $\AF\dcup \AH \equiv^{\sigma}\! \AG\dcup \AH$.
	
\item \textit{update equivalent w.r.t.}\ $\sigma$ (for short, $\AF\equiv^{\sigma}_{U}\!\AG$) iff for any pair $\pair$ and any AF $\AH$ we have, $\left(\AF\sm[B,S]\right)\dcup\AH\equiv^{\sigma}\!\left(\AG\sm[B,S]\right)\dcup\AH$,

\item \textit{deletion equivalent w.r.t.}\ $\sigma$ (for short, $\AF\equiv^{\sigma}_{D}\!\AG$) iff for any pair $\pair$ we have, $\AF\sm[B,S]\equiv^{\sigma}\!\AG\sm[B,S]$,

\item \textit{normal deletion equivalent w.r.t.}\ $\sigma$ (for short, $\AF\equiv^{\sigma}_{ND}\!\AG$) iff for any set of arguments $B$ we have, $\AF\sm B\equiv^{\sigma}\!\AG\sm B$,

\item \textit{local deletion equivalent w.r.t.}\ $\sigma$ (for short, $\AF\equiv^{\sigma}_{LD}\!\AG$) iff for any set of attacks $S$ we have,\linebreak $\AF\sm S\equiv^{\sigma}\!\AG\sm S$,
\end{enumerate}
\end{definition}

Remember that there are several relations between the considered dynamic scenarios. For instance, in accordance with Definitions~\ref{def:expansion} and \ref{def:update}, any normal expansion (deletion) is an arbitrary expansion (deletion). Furthermore, in the light of Definition~\ref{def:equivalence}, we certainly affirm that expansion equivalence is much more demanding then local expansion equivalence. In other words, local expansion equivalence of two AFs is an immediate and unavoidable consequence of being expansion equivalent. Finally, any considered equivalence notion is at least as demanding then ordinary equivalence.\footnote{The empty framework $(\emptyset,\emptyset)$ as well as the empty pair $[\emptyset,\emptyset]$ justifies this assertion for any type of expansions or deletions, respectively.}  Please note that these relations do not depend on certain properties of a considered semantics. Consequently, Figure~\ref{fig:prelimrel} gives a preliminary overview for such interrelations (arising from the definitions) between the introduced equivalence notions for any possible semantics. For reasons, which will become clearer later, we also consider the identity relation. For two equivalence notion $\Phi$ and $\Psi$ we have $\Phi\subseteq\Psi$ iff there is a link from $\Phi$ to $\Psi$. 

\begin{figure}
\centering
\begin{tikzpicture}[scale=1.0]

\node (I) at (-1.6,1.5) [rectangle, thick, draw,text width = 1.3cm, text centered]{\textbf{identity}\\
\footnotesize{relation}};
\node (A) at (0,0) [rectangle, thick, draw,text width = 1.39cm, text centered]{\textbf{update}\\
\footnotesize{equivalence}};
\node (B) at (2.2,1.8) [rectangle, thick, draw,text width = 1.65cm, text centered]{\textbf{expansion}\\
\footnotesize{equivalence}};
\node (C) at (2.2,-1.8) [rectangle, thick, draw,text width = 1.43cm, text centered]{\textbf{deletion}\\
\footnotesize{equivalence}};
\node (D) at (4.8,-2.6) [rectangle, thick, draw,text width = 1.43cm, text centered]{\textbf{local}\\
\textbf{deletion}\\
\footnotesize{equivalence}};
\node (E) at (4.8,-1) [rectangle, thick, draw,text width = 1.43cm, text centered]{\textbf{normal}\\
\textbf{deletion}\\
\footnotesize{equivalence}};
\node (F) at (4.8,1) [rectangle, thick, draw,text width = 1.65cm, text centered]{\textbf{local}\\
\textbf{expansion}\\
\footnotesize{equivalence}};
\node (G) at (4.8,2.6) [rectangle, thick, draw,text width = 1.65cm, text centered]{\textbf{normal}\\
\textbf{expansion}\\
\footnotesize{equivalence}};
\node (H) at (7.7,3.4) [rectangle, thick, draw,text width = 1.65cm, text centered]{\textbf{strong}\\
\textbf{expansion}\\
\footnotesize{equivalence}};
\node (W) at (7.7,1.8) [rectangle, thick, draw,text width = 1.65cm, text centered]{\textbf{weak}\\
\textbf{expansion}\\
\footnotesize{equivalence}};
\node (O) at (8.6,-1.5) [rectangle, thick, draw,text width = 1.43cm, text centered]{\textbf{ordinary}\\
\footnotesize{equivalence}};

\draw[->, thick, double] (I) to [thick,double, bend right] (A);
\draw[->, thick, double] (A) to [thick,double, bend left] (B);
\draw[->, thick, double] (A) to [thick,double, bend right] (C);
\draw[->, thick, double] (C) to [thick,double, bend right] (D);
\draw[->, thick, double] (C) to [thick,double, bend left] (E);
\draw[->, thick, double] (B) to [thick,double, bend right] (F);
\draw[->, thick, double] (B) to [thick,double, bend left] (G);
\draw[->, thick, double] (G) to [thick,double, out=30, in= 170] (H);
\draw[->, thick, double] (G) to [thick,double, out=330, in = 190] (W);
\draw[->, thick, double] (D) to [thick,double, out=330, in= 200] (O);
\draw[->, thick, double] (E) to [thick,double, out=30, in= 180] (O);
\draw[->, thick, double] (H) to [thick,double, out=320, in= 80] (O);
\draw[->, thick, double] (W) to [thick,double, out=240, in= 120] (O);

\end{tikzpicture}
\caption{\label{fig:prelimrel}Preliminary Subset Relations between Equivalence Notions}
\end{figure}
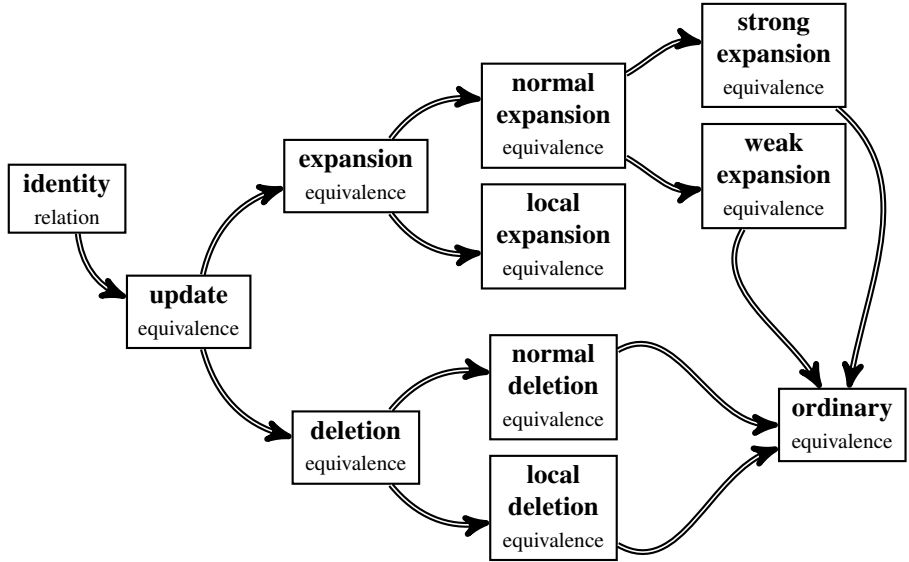

In the remainder of this section we shed light on the question of \textit{how} to determine whether two AFs are equivalent w.r.t.\ certain scenarios? As a by-product of these characterization results we will see that for many semantics the preliminary relations between the introduced equivalence notions depicted above can be delineated in a much more compact way. The majority of the presented characterization results is devoted to finite AFs as well as extension-based semantics. We will see that there are some differences if sticking to unrestricted frameworks or the corresponding labelling-based versions.

\section{Characterization Theorems for Extension-based Semantics}

\subsection{The Central Notion of Expansion Equivalence}

In order to get an idea of how to find a characterization we start with some reflections. For this purpose we consider the most restrictive semantics, namely the stable one as well as the most prominent type of equivalence, namely expansion equivalence. What are necessary features of expansion equivalence w.r.t.\ stable semantics, i.e.\ which properties are implied if two AFs $\F$ and $\G$ are expansion equivalent? In consideration of Figure~\ref{fig:prelimrel} we deduce their ordinary equivalence, i.e.\ $\m{E}_{\stb}(\F) = \m{E}_{\stb}(\G)$. Note that possessing the same set of extensions neither imply sharing the same arguments nor sharing the same self-loops as shown in the following example.

\begin{example} \label{ex:nonequi} Consider the AFs $\F$, $\G$ and $\H$. Each two of them are ordinarily equivalent since $\m{E}_{\stb}(\F) = \m{E}_{\stb}(\G) = \m{E}_{\stb}(\H) = \{\{a\}\}$.

\begin{tikzpicture}

    \node (A1'') at (0,-2.5) [circle,minimum size=0.7cm, thick, draw, label = left:$\F\!:$]{$a$};
    \node (B1'') at (1.4,-2.5) [circle, minimum size=0.7cm, thick, draw]{$b$};
		\node (C1'') at (2.8,-2.5) [circle, minimum size=0.7cm, thick, draw]{$c$};

		\node (A2'') at (4.3,-2.5) [circle, minimum size=0.7cm, thick, draw, label = left:$\G\!:$] {$a$};
    \node (B2'') at (5.7,-2.5) [circle, minimum size=0.7cm,  thick, draw] {$b$};
		
		\node (A3'') at (7.2,-2.5) [circle, minimum size=0.7cm, thick, draw, label = left:$\H\!:$] {$a$};
    \node (B3'') at (8.6,-2.5) [circle, minimum size=0.7cm,  thick, draw] {$b$};
		\node (C3'') at (10,-2.5) [circle, minimum size=0.7cm,  thick, draw] {$c$};

\draw[->, thick] (A1'') to [thick,bend right] (B1'');
\draw[->, thick] (A1'') to [thick,bend left,out=44,in=120] (C1'');

\draw[->, thick] (A2'') to [thick,bend right] (B2'');

\draw[->, thick] (A3'') to [thick,bend right] (B3'');
\draw[->, thick] (A3'') to [thick,bend left,out=44,in=120] (C3'');
\draw[->, thick] (B3'') to [thick,loop,distance=0.5cm] (B3'');

\end{tikzpicture}

The AFs $\I_1 = (\{c\},\emptyset)$ and $\I_2 = (\{a,b,c\},\{(b,a),(b,c)\})$ witness that neither $\F$ and $\G$, nor $\F$ and $\H$ are expansion equivalent w.r.t.\ stable semantics. Convince yourself that $\m{E}_{\stb}(\F\dcup\I_1) = \{\{a\}\} \neq \{\{a,c\}\} = \m{E}_{\stb}(\G\dcup\I_1)$ and $\m{E}_{\stb}(\F\dcup\I_2) = \{\{a\},\{b\}\} \neq \{\{a\}\} = \m{E}_{\stb}(\G\dcup\I_2)$.

\end{example}

Restricting ourselves to finite AFs, it is not difficult to see that in case of expansion equivalence w.r.t.\ stable semantics the observed relation between non-sharing the same arguments/loops and non-equivalence does hold in general. In other words, possessing the same arguments as well as possessing the same loops are indeed necessary conditions for being expansion equivalent in the finite setting.

Let us summarize our observations in the following fact.

\begin{fact} \label{fact:expansionequi} Given two finite AFs $\F$ and $\G$. If $\AF\equiv^{\m{E}_{\stb}}_{E}\!\AG$, then
\begin{enumerate}
	\item $\m{E}_{\stb}(\F) = \m{E}_{\stb}(\G)$,
	\item $A(\F) = A(\G)$ and
	\item $L(\F) = L(\G)$.
\end{enumerate}
\end{fact}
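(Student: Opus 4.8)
The plan is to prove the three implications in sequence, using the preliminary subset relations from Figure~\ref{fig:prelimrel} for the first one and then two carefully chosen witnessing expansions for the other two. Throughout I work with finite AFs and with stable semantics in its extension-based form.

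\emph{Item 1 (ordinary equivalence).} This is immediate: Figure~\ref{fig:prelimrel} records that expansion equivalence implies ordinary equivalence for any semantics whatsoever (the empty framework $(\emptyset,\emptyset)$ witnesses this, since $\F \dcup (\emptyset,\emptyset) = \F$ and $\G \dcup (\emptyset,\emptyset) = \G$). So $\AF\equiv^{\m{E}_{\stb}}_{E}\!\AG$ gives $\m{E}_{\stb}(\F) = \m{E}_{\stb}(\G)$ directly. I would simply cite the figure and this one-line argument.

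\emph{Item 2 (same arguments).} I would argue by contraposition. Suppose $A(\F) \neq A(\G)$, and without loss of generality pick $c \in A(\F) \setminus A(\G)$. The idea is to expand both frameworks by a fresh argument that attacks everything ``in sight'' so as to annihilate all current stable extensions, and then to let $c$ (which survives in $\F$ but is absent in $\G$) make the difference. Concretely, take a new argument $z \notin A(\F \dcup \G)$ and let $\H = (\{z\} \cup A(\F\dcup\G), \{(z,a) \mid a \in A(\F\dcup\G)\})$; this is a legitimate expansion of both $\F$ and $\G$. In $\G \dcup \H$, the argument $z$ is unattacked, so $z$ is in every stable extension, forcing $\m{E}_{\stb}(\G\dcup\H) = \{\{z\}\}$ if $\{z\}$ is stable (which it is, since $z$ attacks all other arguments of $\G \dcup \H$) — so $\m{E}_{\stb}(\G\dcup\H)=\{\{z\}\}$. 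In $\F \dcup \H$ the same reasoning applies: $z$ attacks everything in $A(\F\dcup\G)$, hence $\{z\}$ is stable there too and is the unique stable extension. That gives equality, not a difference — so this naive expansion does not separate them. I would instead refine the construction: add $z$ attacking only $A(\G)$ (equivalently $A(\F\dcup\G)\setminus\{c\}$ if $c$ is the only extra argument, but more robustly attack exactly $A(\G)\cup\{c\}$ minus $c$, i.e.\ $A(\G)$), together with a gadget on $c$. The cleanest route: let $\H$ consist of $z$ attacking every argument of $A(\G)$ and also being attacked by $c$ via an added attack $(c,z)$ is not allowed in an arbitrary expansion? — actually an arbitrary expansion \emph{does} permit new attacks among old arguments only if at least... no: an \emph{arbitrary} expansion (Definition~\ref{def:expansion}, item before ``normal'') allows any disjoint addition $B,S$ with $A\cap B = R\cap S = \emptyset$, so $S$ may contain attacks between old arguments. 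Thus I may add $z$, the attacks $(z,a)$ for $a \in A(\G)$, and the attack $(z, c')$ for every $c' \in A(\F)\setminus A(\G)$ except one distinguished $\bar c$; then in $\F\dcup\H$ the set $\{\bar c\}$ (or a superset) can be stable while $z$ is not unattacked, whereas in $\G \dcup\H$, $z$ is unattacked so $\{z\}$ is the unique stable extension — a genuine difference provided $\m{E}_{\stb}(\F\dcup\H)\neq\{\{z\}\}$. The bookkeeping to guarantee non-equality is the main obstacle here; I expect to pin it down by choosing $\H$ so that $\F\dcup\H$ retains a stable extension not containing $z$ (e.g.\ by also having $\bar c$ attack $z$).

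\emph{Item 3 (same self-loops).} Again by contraposition, assuming $A(\F)=A(\G)$ (granted by Item~2) but $L(\F)\neq L(\G)$; pick $c$ with, say, $(c,c)\in R(\F)$ and $(c,c)\notin R(\G)$. Here the witnessing expansion should exploit that $c$ can never enter a stable extension of $\F$ (it attacks itself) whereas it is available in $\G$. A natural choice is $\H = (A(\F\dcup\G) \cup \{z\}, S)$ where $z$ is fresh, $z$ attacks $c$, and we also add attacks making $c$ ``isolated except for $z$'' is impossible since we cannot delete old attacks in an expansion; but we \emph{can} add, for every old attacker $b$ of $c$ in $\G$, an attack $z \to b$, so that $\{z, c\}$ becomes a candidate extension in $\G\dcup\H$ — wait, $z$ attacks $c$, so $c \notin \{z\}^{\oplus}$-compatible... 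I would instead make $z$ \emph{defend} $c$: add $z$ with $z$ attacking all attackers of $c$ (in the common framework), and with $c$ attacking $z$ (allowed, arbitrary expansion) only in... hmm. The clean version: take $\H$ adding a fresh argument $z$ together with attacks $z \to a$ for all $a \in A(\F)\setminus\{c\}$. Then in $\G\dcup\H$ the set $\{c,z\}$ is conflict-free (no loop on $c$, and no $c$--$z$ attack) with range covering everything, so it is stable, while in $\F\dcup\H$ the loop $(c,c)$ forbids $c$ from any conflict-free set, so $\{c,z\}\notin\m{E}_{\stb}(\F\dcup\H)$; checking that this actually forces $\m{E}_{\stb}(\F\dcup\H)\neq\m{E}_{\stb}(\G\dcup\H)$ (rather than both being empty) is the remaining obstacle, handled by verifying $\{c,z\}$ is the unique stable extension of $\G\dcup\H$ and that $\F\dcup\H$ has a different stable extension or none.

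In all three items the skeleton is: assume the conclusion fails, build a single expansion $\H$ (legal for both $\F$ and $\G$), and compute $\m{E}_{\stb}$ on both sides to exhibit a discrepancy, contradicting $\AF\equiv^{\m{E}_{\stb}}_{E}\!\AG$. The genuinely delicate part is tuning $\H$ so that the two resulting stable-extension sets are provably \emph{unequal} and not accidentally both empty — a phenomenon stable semantics is prone to — which is why the added ``activator'' argument $z$ (unattacked or self-loop-free and attacking almost everything) is the key device: it guarantees at least one side has a non-empty, explicitly computable stable extension. Finiteness is used implicitly to ensure these candidate sets indeed yield stable extensions (full range is a finite condition to check) and, more importantly, it is what makes the converse direction (the full characterization via kernels, Theorem~\ref{the:stbkernel}) available, but for this Fact only the ``only if'' direction and the three small gadget constructions are needed.
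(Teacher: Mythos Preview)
The paper does not give a formal proof of this Fact: it is stated as an observation following Example~\ref{ex:nonequi}, with the remark that the phenomenon there ``does hold in general''. So there is no paper-proof to compare against line by line; the question is simply whether your direct argument goes through.

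Item~1 is fine, and your final construction for Item~3 is actually correct and complete --- your hedging is unwarranted. With $A(\F)=A(\G)$, $(c,c)\in R(\F)\setminus R(\G)$, fresh $z$, and $\H$ adding attacks $(z,a)$ for all $a\in A(\F)\setminus\{c\}$, the set $\{c,z\}$ is conflict-free in $\G\dcup\H$ (no loop on $c$, no $c$--$z$ attack since $z$ is fresh) and has full range, so $\{c,z\}\in\m{E}_\stb(\G\dcup\H)$. In $\F\dcup\H$ the self-loop on $c$ makes $\{c,z\}$ conflicting. That alone gives $\m{E}_\stb(\F\dcup\H)\neq\m{E}_\stb(\G\dcup\H)$; there is no ``remaining obstacle'' about both sides possibly being empty, because you have exhibited an element of one side.

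Item~2 is where the genuine gap is. Your successive attempts never settle on a working $\H$, and the difficulty you keep running into (both sides ending up with the same unique stable extension $\{z\}$, or both collapsing to $\emptyset$) is real for the constructions you try. The clean idea you are missing is simply to \emph{omit $c$ from $A(\H)$}. If $c\in A(\F)\setminus A(\G)$, set $B=A(\F)\cup A(\G)$, pick a fresh $w$, and take
\[
\H=\bigl((B\setminus\{c\})\cup\{w\},\ \{(w,a)\mid a\in B\setminus\{c\}\}\bigr).
\]
Then $c\notin A(\G\dcup\H)$ at all, $w$ is unattacked and attacks every other argument, so $\m{E}_\stb(\G\dcup\H)=\{\{w\}\}$. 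In $\F\dcup\H$, however, $c$ is present and not attacked by $w$; since $w$ is still unattacked, any stable extension is contained in $\{w,c\}$ and must cover $c$, forcing $c\in E$. Hence $\m{E}_\stb(\F\dcup\H)$ is $\{\{w,c\}\}$ if $(c,c)\notin R(\F)$ and $\emptyset$ otherwise --- in either case different from $\{\{w\}\}$. This is essentially your Item~3 device, sharpened by keeping $c$ out of $\H$ so that it cannot sneak into $\G\dcup\H$.
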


As already stated in Figure~\ref{fig:prelimrel}, being identical (i.e.\ $A(\F) = A(\G)$ and $R(\F) = R(\G)$) is sufficient for being expansion equivalent. Combining this undeniable fact together with the second and third items of Fact~\ref{fact:expansionequi} encourages one to search for syntactical properties sufficient as well as necessary for being expansion equivalent. In order to guarantee the first item of Fact~\ref{fact:expansionequi} we have to
identify attacks which do not contribute anything when computing stable extensions. Moreover, these attacks which do not affect the evaluation of a given AF $\F$ have to be \textit{redundant}, no matter how $\F$ is extended. Remember that being a stable extension can be simply verified by checking whether the set in question is conflict-free and possesses a full range.\footnote{The topic of \textit{verifiability} of argumentation semantics $\sigma$ will be considered in Chapter~\ref{cha:ver}. The main question is which (minimal amount of) information on top of conflict-free sets is exactly needed to determine whether a certain set is a $\sigma$-extension.} This means, good candidates for ``useless'' attacks w.r.t.\ stable semantics should fulfill the following two properties: firstly, having or not having such an attack does not change the status of a set from being conflict-free to conflicting or vice versa and secondly, having or not having such an attack does not affect the range of a conflict-free set. Certainly, an attack $(a,b)$ stemming from a self-defeating argument $a$ does not change the conflict status of a certain set $E$. This can be seen as follows: If $a\in E$, then $E$ was conflicting as well as remains conflicting after deleting or adding $(a,b)$. Furthermore, if $a\notin E$, then $E$ might be conflicting or not. In either case the conflict status of $E$ does not change if (a,b) is added or removed since $\{a,b\}\nsubseteq E$. Finally, such an attack $(a,b)$ might have an influence on the range of conflicting sets but it definitely has not in case of conflict-free sets since $a\notin E$ can not be questioned. 

\begin{example} \label{ex:stableind} Consider the following AF $\F$. We have, $\m{E}_{\stb}(\F) = \{\{a\}\}$.

\begin{tikzpicture}

    \node (A1'') at (0,-2.5) [circle,minimum size=0.7cm, thick, draw, label = left:$\F\!:$]{$a$};
    \node (B1'') at (1.4,-2.5) [circle, minimum size=0.7cm, thick, draw]{$b$};
		\node (C1'') at (2.8,-2.5) [circle, minimum size=0.7cm, thick, draw]{$c$};
    
\draw[->, thick] (A1'') to [thick,bend right] (B1'');
\draw[->, thick] (A1'') to [thick,bend left,out=44,in=120] (C1'');
\draw[->, thick] (B1'') to [thick,loop,distance=0.5cm] (B1'');
\draw[->, thick] (B1'') to [thick,bend right] (A1'');

\end{tikzpicture}

According to our considerations above adding or deleting an attack stemming from the self-defeating argument $b$ does not change the semantics. Consider therefore the following three possible ``manipulations''.

\begin{tikzpicture}
    \node (A1'') at (0,-2.5) [circle,minimum size=0.7cm, thick, draw, label = left:$\G_1\!:$]{$a$};
    \node (B1'') at (1.2,-2.5) [circle, minimum size=0.7cm, thick, draw]{$b$};
		\node (C1'') at (2.4,-2.5) [circle, minimum size=0.7cm, thick, draw]{$c$};

		\node (A2'') at (3.9,-2.5) [circle, minimum size=0.7cm, thick, draw, label = left:$\G_2\!:$] {$a$};
    \node (B2'') at (5.1,-2.5) [circle, minimum size=0.7cm,  thick, draw] {$b$};
		\node (C2'') at (6.3,-2.5) [circle, minimum size=0.7cm,  thick, draw] {$c$};
		
		\node (A3'') at (7.8,-2.5) [circle, minimum size=0.7cm, thick, draw, label = left:$\G_3\!:$] {$a$};
    \node (B3'') at (9,-2.5) [circle, minimum size=0.7cm,  thick, draw] {$b$};
		\node (C3'') at (10.2,-2.5) [circle, minimum size=0.7cm,  thick, draw] {$c$};

\draw[->, thick] (A1'') to [thick,bend right] (B1'');
\draw[->, thick] (A1'') to [thick,bend left,out=44,in=120] (C1'');
\draw[->, thick, dotted] (B1'') to [thick,bend right] (A1'');
\draw[->, thick] (B1'') to [thick,loop,distance=0.5cm] (B1'');

\draw[->, thick] (A2'') to [thick,bend right] (B2'');
\draw[->, thick] (A2'') to [thick,bend left,out=44,in=120] (C2'');
\draw[->, thick, dotted] (B2'') to [thick,bend right] (A2'');
\draw[->, thick, gray!50] (B2'') to [thick,bend right] (C2'');
\draw[->, thick] (B2'') to [thick,loop,distance=0.5cm] (B2'');

\draw[->, thick] (A3'') to [thick,bend right] (B3'');
\draw[->, thick] (A3'') to [thick,bend left,out=44,in=120] (C3'');
\draw[->, thick] (B3'') to [thick,bend right] (A3'');
\draw[->, thick, gray!50] (B3'') to [thick,bend right] (C3'');
\draw[->, thick] (B3'') to [thick,loop,distance=0.5cm] (B3'');

\end{tikzpicture}

Indeed, $\m{E}_{\stb}(\F) = \m{E}_{\stb}(\G_1) = \m{E}_{\stb}(\G_2) = \m{E}_{\stb}(\G_3) = \{\{a\}\}$ support our claims for the static case. We encourage the reader to try to do the impossible, namely semantically distinguish the AFs $\F$ and its manipulations by an arbitrary expansion.

\end{example}

It was the main result in \cite{strong} that expansion equivalence can be indeed decided by looking at the syntax only. The authors introduced so-called \textit{kernels} which are simply functions mapping each AF $\F$ to its redundancy-free version. This means, the kernel of an AF $\F$ does not possess any redundant attack. Put it differently, for any surviving attack exist at least one dynamic scenario were deleting this attack would cause a semantical difference. We proceed with the formal definition of the very first kernels already introduced in \cite{strong}. We sometimes call them \textit{classical}.

\begin{definition} \label{def:kernel}
Let $\sigma\in\{\stb,\adm,\grd,\com\}$. The $\sigma$-kernel $\k(\sigma): \m{F} \to \m{F}$ with $\k(\sigma)(\F) = \AF^{\k(\sigma)} = \left(A,R^{\k(\sigma)}\right)$ for a given AF $\AF = (A,R)$ is defined as:
 \begin{align*}
R^{\k(\stb)} =&\ R \sm \left\{(a, b) \mid a \neq b, (a, a) \in R\right\}\!,\\
R^{\k(\adm)} =&\ R \sm \{(a, b) \mid a \neq b, (a, a) \in R, \{(b,a),(b,b)\}\cap R \neq\emptyset\},\\
R^{\k(\grd)} =&\ R \sm \{(a, b) \mid a \neq b, (b, b) \in R, \{(a,a),(b,a)\}\cap R \neq\emptyset\},\\
R^{\k(\com)} =&\ R\sm \{(a, b) \mid a \neq b, (a,a),(b,b)\in R \}.
\end{align*}
\end{definition}

In order to get an idea of how the classical kernels work we proceed with an example. 

\begin{example} Consider again the AF $\G_3$ depicted in Example~\ref{ex:stableind}. We apply now all classical kernels. 

\begin{tikzpicture}

		\node (A2'') at (4.9,-2.5) [circle, minimum size=0.7cm, thick, draw, label = left:$\G_3^{\k(\stb)}\!:$] {$a$};
    \node (B2'') at (6.1,-2.5) [circle, minimum size=0.7cm,  thick, draw] {$b$};
		\node (C2'') at (7.3,-2.5) [circle, minimum size=0.7cm,  thick, draw] {$c$};
		
		\node (A3'') at (0,-2.5) [circle, minimum size=0.7cm, thick, draw, label = left:$\G_3\!:$] {$a$};
    \node (B3'') at (1.2,-2.5) [circle, minimum size=0.7cm,  thick, draw] {$b$};
		\node (C3'') at (2.4,-2.5) [circle, minimum size=0.7cm,  thick, draw] {$c$};
		
		\node (A2') at (4.9,-4) [circle, minimum size=0.7cm, thick, draw, label = left:$\G_3^{\k(\grd)}\!:$] {$a$};
    \node (B2') at (6.1,-4) [circle, minimum size=0.7cm,  thick, draw] {$b$};
		\node (C2') at (7.3,-4) [circle, minimum size=0.7cm,  thick, draw] {$c$};
		
		\node (A3') at (0,-4) [circle, minimum size=0.7cm, thick, draw, label = left:$\G_3^{\k(\adm)}\!:$] {$a$};
    \node (B3') at (1.2,-4) [circle, minimum size=0.7cm,  thick, draw] {$b$};
		\node (C3') at (2.4,-4) [circle, minimum size=0.7cm,  thick, draw] {$c$};


\draw[->, thick] (A2'') to [thick,bend right] (B2'');
\draw[->, thick] (A2'') to [thick,bend left,out=44,in=120] (C2'');

\draw[->, thick] (B2'') to [thick,loop,distance=0.5cm] (B2'');
\draw[->, thick, dotted] (B2'') to [thick,bend right] (A2'');
\draw[->, thick, dotted] (B2'') to [thick,bend right] (C2'');

\draw[->, thick] (A3'') to [thick,bend right] (B3'');
\draw[->, thick] (A3'') to [thick,bend left,out=44,in=120] (C3'');
\draw[->, thick] (B3'') to [thick,bend right] (A3'');
\draw[->, thick] (B3'') to [thick,bend right] (C3'');
\draw[->, thick] (B3'') to [thick,loop,distance=0.5cm] (B3'');

\draw[->, thick, dotted] (A2') to [thick,bend right] (B2');
\draw[->, thick] (A2') to [thick,bend left,out=44,in=120] (C2');

\draw[->, thick] (B2') to [thick,loop,distance=0.5cm] (B2');
\draw[->, thick] (B2') to [thick,bend right] (A2');
\draw[->, thick] (B2') to [thick,bend right] (C2');

\draw[->, thick] (A3') to [thick,bend right] (B3');
\draw[->, thick] (A3') to [thick,bend left,out=44,in=120] (C3');
\draw[->, thick, dotted] (B3') to [thick,bend right] (A3');
\draw[->, thick] (B3') to [thick,bend right] (C3');
\draw[->, thick] (B3') to [thick,loop,distance=0.5cm] (B3');

\end{tikzpicture}

The stable kernel deletes all attacks $(a,b)$ stemming from a self-defeating argument~$a$. A deletion of $(a,b)$ in case of the grounded kernel additionally requires that $a$ is counter-attacked by $b$ or $b$ is self-defeating or both. Interchanging $a$ and $b$ yields the condition for deletion in case of the grounded kernel. Finally, $\G_3^{\k(\com)} = \G_3$ since deleting an attack $(a,b)$ w.r.t.\ the complete kernel requires that both arguments $a$ and $b$ are self-defeating.
\end{example}  

Before turning to characterization theorems, we collect some useful properties of the introduced kernels. The following fact contains intrinsic properties of the classical kernels.\footnote{Although most of the properties are immediately clear even in case of unrestricted frameworks we will state all of them for finite AFs only as done in the existing literature. The same applies to Fact~\ref{fact:kernel2}. Some results regarding unrestricted frameworks can be found in Section~\ref{chap:equiunrestr}.} More precisely, any classical kernel $\k$ is \textit{node-preserving} and \textit{loop-preserving}, i.e.\ the sets of arguments and self-defeating arguments do not change if applying $\k$. Moreover, in the absence of self-loops, each AF coincides
with its classical kernels. Furthermore, the decision whether
an attack $(a, b)$ has to be deleted does not depend on further arguments than
$a$ and~$b$. Put differently, the reason of being redundant is \textit{context-free}, i.e.\ it stems from
the arguments themselves. The last two properties claim that equality of kernels is \textit{robust} w.r.t.\ further compositions as well as deleting arguments and corresponding attacks. For a given AF $\F = (B,S)$ we use $A(\F), R(\F)$ and $L(\F)$ to refer to its arguments, attacks and self-defeating arguments, i.e.\ $A(\F) = B$, $R(\F) = S$ and $L(\F) = \{a\in A(\F)\mid (a,a)\in R(\F)\}$.

\begin{fact}[cf.\ \cite{strong,Bau14}] \label{fact:kernel} Given $\k\in\{\k(\stb),\k(\adm),\k(\grd),\k(\com)\}$. For any finite AF $\AF$ we have:

\begin{enumerate}
  \item $A(\AF) = A\left(\AF^\k\right)$, \hfill{(node-preserving)}
	\item $L(\AF) = L\left(\AF^\k\right)$, \hfill{(loop-preserving)}
	\item $L(\AF) = \emptyset\ \To\ \AF = \AF^\k$ and \hfill{(sufficient condition for identity)}
	\item $(a,b)\in R\left(\F^\k\right) \ToT (a,b)\in R\left(\left(\F|_{\{a,b\}}\right)^\k\right).$ \hfill{(context-freeness)}
\end{enumerate}
 Furthermore, for finite AFs $\AF$ and $\AG$ we have:
\begin{enumerate}  \setcounter{enumi}{4}
	\item If $\AF^\k = \AG^\k$, then $(\AF\dcup\AH)^\k = (\AG\dcup\AH)^\k$ for any finite AF $\AH$ and  \hfill{($\dcup$-robustness)}
	\item If $\AF^\k = \AG^\k$, then $(\AF\sm B)^\k = (\AG\sm B)^\k$ for any finite set of args $B$.   \hfill{($\sm$-robustness)}
	
\end{enumerate}

\end{fact}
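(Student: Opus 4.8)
\textbf{Proof plan for Fact~\ref{fact:kernel}.}

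The plan is to verify the six listed properties essentially by unfolding the definitions of the four kernels in Definition~\ref{def:kernel}. All four kernels share the template $R^{\k(\sigma)} = R \setminus D_\sigma$ where $D_\sigma$ is a set of ``redundant'' attacks, each of the form $(a,b)$ with $a\neq b$, subject to a side condition that only mentions the presence or absence in $R$ of the four pairs among $\{a,b\}$ (namely $(a,a)$, $(b,b)$, $(a,b)$, $(b,a)$). This uniform shape is what makes all six properties routine.

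First I would handle items~1 through~4. For item~1 (node-preservation), observe that $\k(\sigma)$ only removes attacks and never touches the argument set, so $A(\AF)=A(\AF^\k)$ is immediate from $\AF^\k=(A,R^{\k(\sigma)})$. For item~2 (loop-preservation), note that every pair removed by any of the four kernels satisfies $a\neq b$, hence no self-loop $(a,a)$ is ever deleted; thus $L(\AF)=L(\AF^\k)$. For item~3, if $L(\AF)=\emptyset$ then no argument is self-defeating, so for every potential redundant pair $(a,b)$ the side condition fails (each side condition requires at least one of $(a,a)\in R$ or $(b,b)\in R$), whence $D_\sigma=\emptyset$ and $\AF=\AF^\k$. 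For item~4 (context-freeness), fix $a,b$ with $a\neq b$; the restriction $\AF|_{\{a,b\}}$ has exactly the same pairs among $\{(a,a),(b,b),(a,b),(b,a)\}$ present as $\AF$ does, so the side condition deciding whether $(a,b)\in D_\sigma$ evaluates identically in $\AF$ and in $\AF|_{\{a,b\}}$; this gives the claimed biconditional (one should also note $(a,a)$ and $(b,b)$ are never deleted, so self-loops behave trivially, and the case $a=b$ is covered by item~2).

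Then I would turn to items~5 and~6, the robustness properties, which are the only ones requiring a small argument beyond pure unfolding. For $\dcup$-robustness: suppose $\AF^\k=\AG^\k$. By item~1, $A(\AF)=A(\AG)$; by item~2, $L(\AF)=L(\AG)$. To show $(\AF\dcup\AH)^\k=(\AG\dcup\AH)^\k$, I would argue pointwise on attacks: an attack $(a,b)$ survives the kernel in $\AF\dcup\AH$ iff $(a,b)\in R(\AF)\cup R(\AH)$ and the side condition (which, by context-freeness, depends only on which of the four pairs among $\{a,b\}$ lie in $R(\AF)\cup R(\AH)$) holds. Since $R(\AF)$ and $R(\AG)$ agree on all self-loops (by $L(\AF)=L(\AG)$) and $\AF^\k=\AG^\k$ ensures they agree on which non-loop attacks survive, a short case distinction on whether $(a,b)$ comes from $\AH$ or from the original framework shows the two kernels of the expansions coincide. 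The $\sm$-robustness in item~6 is analogous and in fact easier, since deleting a set $B$ of arguments only removes attacks and, by context-freeness, cannot change the redundancy status of any surviving attack; one checks that $(\AF\setminus B)^\k$ and $(\AG\setminus B)^\k$ have the same argument set $A(\AF)\setminus B=A(\AG)\setminus B$ and the same surviving attacks.

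The main obstacle, such as it is, lies in items~5 and~6: one must be careful that the equality $\AF^\k=\AG^\k$ alone does \emph{not} imply $R(\AF)=R(\AG)$ (the kernels may have erased genuinely different redundant attacks), so the pointwise argument must only ever appeal to (i) the surviving attacks, which agree by hypothesis, and (ii) the self-loops, which agree by loop-preservation --- and crucially these are exactly the data that the side conditions of all four kernels depend on. Getting this bookkeeping right, together with the case split on the origin of each attack in the expansion, is the one place where care is needed; everything else is direct verification. For full details one may consult \cite{strong} and \cite{Bau14}.
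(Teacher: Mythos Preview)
The paper does not prove this fact; it is stated with citations to \cite{strong} and \cite{Bau14}, so there is no in-paper argument to compare against. Your plan is correct and items~1--4 are indeed immediate from Definition~\ref{def:kernel} exactly as you say.

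For item~5 your bookkeeping is right in spirit but one claim needs sharpening. You assert that the side conditions of all four kernels depend only on self-loops and on surviving (kernelised) attacks. This is literally true for $\k(\stb)$ and $\k(\com)$, whose deletion conditions mention only $(a,a)$ and $(b,b)$. For $\k(\adm)$ and $\k(\grd)$, however, the deletion condition for $(a,b)$ also refers to $(b,a)\in R$ --- an attack in the \emph{original} relation, which $\AF$ and $\AG$ need not share even when $\AF^\k=\AG^\k$. The fix is an extra observation you do not make explicit: for $\k(\adm)$, if $(b,a)\in R$ but $(b,a)\notin R^{\k(\adm)}$ then necessarily $(b,b)\in R$, so the disjunct $\{(b,a),(b,b)\}\cap R\neq\emptyset$ is equivalent to ``$(b,a)\in R^{\k(\adm)}$ or $(b,b)\in R$''; the analogous reformulation works for $\k(\grd)$ with $(a,a)$ in place of $(b,b)$. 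With this in hand the side conditions really do depend only on loops and kernel attacks, and your pointwise case split goes through. This is exactly the ``care'' you flag, so the plan is sound --- just supply this step when writing it out. Item~6 is, as you say, the easier of the two and follows directly from context-freeness.
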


We proceed with extrinsic properties, i.e.\ features of kernels in presence of semantics. More precisely, stable, admissible, grounded and complete semantics are insensitive w.r.t.\ the application of their corresponding classical $\sigma$-kernel, i.e.\ the set of $\sigma$-extensions remains unchanged. Furthermore, the admissible kernel neither effects semi-stable, eager, preferred and ideal semantics. Similarly
in case of stable kernel and stage semantics.
\pagebreak
\begin{fact}[\cite{strong,GagW13}] \label{fact:kerneldecisive} For any finite AF $\AF$ we have:

\begin{enumerate}
  \item $\Ext_{\sigma}(\AF) = \Ext_{\sigma}\!\left(\AF^{\k(\sigma)}\right)$ for $\sigma\in\{\stb,\adm,\grd,\com\}$,
	\item $\Ext_{\sigma}(\AF) = \Ext_{\sigma}\!\left(\AF^{\k(\adm)}\right)$ for $\sigma\in\{\semi,\eag,\prf,\id\}$ and
	\item $\Ext_{\stg}(\AF) = \Ext_{\stg}\!\left(\AF^{\k(\stb)}\right)$.
\end{enumerate}

\end{fact}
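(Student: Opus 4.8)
The plan is to prove each clause by relating the semantics of $\AF$ to that of the relevant kernel through two uniform observations about the deleted attacks. First, for every kernel of Definition~\ref{def:kernel} the set $R^{\k}$ is contained in $R$, so $\AF^{\k}$ is a spanning subgraph of $\AF$ (Fact~\ref{fact:kernel}, node-preserving), and one half of every claimed equality — the ``$\supseteq$'' half for the extensions defined by maximisation, the appropriate monotone inclusion elsewhere — is routine from $R^{\k}\subseteq R$. Second, and crucially, every attack $(a,b)$ removed by $\k(\stb),\k(\adm),\k(\grd),\k(\com)$ involves at least one self-looping endpoint whose self-attack is \emph{retained}: $\k(\stb)$ and $\k(\adm)$ require $(a,a)\in R$, $\k(\grd)$ requires $(b,b)\in R$, and $\k(\com)$ requires both, and no kernel touches self-attacks. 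From this I would first establish a preliminary lemma: each kernel preserves conflict-freeness, $\Ext_\cf(\AF)=\Ext_\cf(\AF^{\k})$ — a set witnessing a conflict in $\AF$ through a deleted attack $(a,b)$ already witnesses one through the retained self-attack of whichever endpoint self-loops — and, since no conflict-free set contains a self-looping argument, no attack \emph{originating in} a conflict-free set $E$ is ever deleted, so the range $E^{\oplus}$ and the set of attacks $E$ launches at any would-be attacker are computed identically in $\AF$ and $\AF^{\k}$.

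For part (1), stable semantics then follows at once: conflict-freeness and range coincide and $A(\AF)=A(\AF^{\k(\stb)})$, so $E^{\oplus}=A(\AF)$ iff the same holds in $\AF^{\k(\stb)}$. For admissible and complete semantics I would show that \emph{defence} is preserved, using that admissible (hence complete) extensions are loop-free, so attackers of a member of $E$ and attacks launched by members of $E$ are never deleted; the only extra point is the ``reverse'' direction forced by the side conditions of $\k(\adm)$ and $\k(\com)$: if a deleted attack $(y,x)$ points at $x\in E$ then, $x$ being loop-free, the side condition forces $(x,y)\in R$, so $x$ defends against $y$ itself. This gives $\Ext_\adm(\AF)=\Ext_\adm(\AF^{\k(\adm)})$ and, via ``$E$ complete iff $E$ conflict-free and $\Gamma_{\AF}(E)=E$'', also $\Ext_\com(\AF)=\Ext_\com(\AF^{\k(\com)})$. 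For grounded semantics I would induct on the iteration $\Gamma_{\AF}^{\,n}(\emptyset)$ (which terminates, $\AF$ being finite): assuming $\Gamma_{\AF}^{\,n}(\emptyset)=\Gamma_{\AF^{\k(\grd)}}^{\,n}(\emptyset)=:S_n$ and using that $S_n$ is admissible, hence loop-free, one checks $\Gamma_{\AF}(S_n)=\Gamma_{\AF^{\k(\grd)}}(S_n)$; the one genuinely new case is a self-looping attacker $y$ of some $x\in\Gamma_{\AF}(S_n)$, where a minimal-entry-level argument — exploiting that the iteration starts from $\emptyset$ — produces an attacker $z^{*}\in S_n$ of $y$ that $y$ does not strike back, so the defending attack $(z^{*},y)$ survives $\k(\grd)$; the least fixpoints therefore agree (alternatively invoke monotonicity and Knaster--Tarski \cite{Tar55}).

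Part (3) is then immediate: $\Ext_\stg$ picks the $\subseteq$-range-maximal conflict-free sets, and by the preliminary lemma the poset of conflict-free sets ordered by range inclusion is literally the same for $\AF$ and $\AF^{\k(\stb)}$, so its maximal elements coincide. For part (2), under $\k(\adm)$ I already have $\Ext_\adm(\AF)=\Ext_\adm(\AF^{\k(\adm)})$, and the preliminary lemma also gives that every admissible set has the same range in both frameworks; hence the range-maximal admissible sets ($=\Ext_\semi$) and the $\subseteq$-maximal admissible sets ($=\Ext_\prf$) agree. Finally $\id$ and $\eag$ are, respectively, the $\prf$- and $\semi$-parametrised admissible semantics of Definition~\ref{def:parsemantics}, so $\Ext_\id(\AF)$ and $\Ext_\eag(\AF)$ are determined by $\Ext_\adm(\AF)$ together with $\Ext_\prf(\AF)$, resp.\ $\Ext_\semi(\AF)$ — all just shown $\k(\adm)$-invariant — and the equalities follow.

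The step I expect to be the main obstacle is grounded semantics in part (1). Unlike the stable, stage, semi-stable and preferred cases it does not reduce to ``same conflict-free sets plus same range'', unlike admissible/complete it is not a one-step fixpoint condition, and it does \emph{not} follow from the complete case because $\k(\grd)\neq\k(\com)$; the inductive argument above is what makes it work, with the delicate point being the well-foundedness observation that, since the characteristic-function iteration starts from $\emptyset$, any self-looping argument attacked by the current stage is attacked by a \emph{base} defender that $y$ cannot strike back, so the corresponding attack escapes $\k(\grd)$. A lesser subtlety in part (2) is that ideal and eager must be handled through the parametrised-semantics description rather than their textbook definitions via $\Ext_\com$, since complete extensions need not be $\k(\adm)$-invariant.
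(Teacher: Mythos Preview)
The paper does not prove this statement: it is presented as a \emph{Fact} with citations to \cite{strong} and \cite{GagW13}, and no argument is given in the text. So there is no ``paper's own proof'' to compare against; what I can do is assess whether your sketch is sound.

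Your overall architecture is correct and matches the standard arguments in the cited literature: preserve conflict-freeness, show that ranges of conflict-free sets are unaffected by $\k(\stb)$ and $\k(\adm)$ (which handles $\stb$, $\stg$, $\semi$), show that defence is preserved for $\k(\adm)$ and $\k(\com)$ (which handles $\adm$, $\prf$, $\com$), run the well-founded induction on $\Gamma^n(\emptyset)$ for $\k(\grd)$, and derive $\id$ and $\eag$ from the parametrised description. Your identification of the grounded case as the delicate one, with the minimal-entry-level defender argument, is exactly right, and your remark that $\id$ and $\eag$ should be read through Definition~\ref{def:parsemantics} rather than the $\com$-based clause of Definition~\ref{def:extsem} is a genuinely useful subtlety.

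One point to tighten: your preliminary lemma asserts that ``no attack originating in a conflict-free set $E$ is ever deleted'' for \emph{all four} classical kernels. This is true for $\k(\stb)$, $\k(\adm)$ and $\k(\com)$, whose deletion conditions force the \emph{source} to self-loop, but it fails for $\k(\grd)$: there the deletion condition only forces the \emph{target} $b$ to self-loop, so an attack $(a,b)$ with $a\in E$, $(b,b)\in R$ and $(b,a)\in R$ is removed. This does not damage your proof, since you never invoke range preservation for $\k(\grd)$ and handle grounded semantics by the separate induction; but the lemma should be stated only for the three kernels where it holds. A second minor omission is the easy observation, implicitly used in both the complete and grounded cases, that a self-looping argument can never lie in $\Gamma_\AF(E)$ for conflict-free $E$ (any defender $z\in E$ of $x$ against $x$ is itself a non-self-looping attacker of $x$ that $E$ cannot counter-attack); making this explicit closes the $\supseteq$-direction of $\Gamma_\AF(S_n)=\Gamma_{\AF^{\k(\grd)}}(S_n)$ and the analogous step for $\k(\com)$ cleanly.
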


As already mentioned, kernels play a decisive role in deciding expansion equivalence. In general, we say that an equivalence notion $\equiv$ \textit{is characterizable through} $\k$ or simply, $\k$ \textit{is a characterizing kernel\linebreak (of $\equiv$)} if for any two AFs $\F$ and $\G$, $\F\equiv\G$ iff $\F^\k= \G^\k$. This means, proving whether two frameworks are equivalent can be done by simply checking whether the corresponding kernels are identical. Note that all classical kernels can be efficiently constructed from a given AF. The following main theorem states that for all nine considered semantics $\sigma$ there is a certain classical kernel $\k$, s.t.\ expansion equivalence w.r.t.~$\sigma$ is characterizable through~$\k$ in the finite setting. This is a very remarkable result since expansion equivalence is defined semantically. For instance, two finite AFs $\F$ and $\G$ are expansion equivalent w.r.t.\ stable semantics if and only if the associated stable kernels $\F^{\k(\stb)}$ and $\G^{\k(\stb)}$ are syntactically equal. Observe that there is no need to introduce further kernels since one single kernel may serve for different semantics.

\begin{theorem}{\cite{strong,GagW13}} \label{the:strong} For finite AFs $F$,$G$ we have:
\begin{enumerate}
	\item $\AF\equiv^{\Ext_\sigma}_E\! \AG \ToT \AF^{\k(\sigma)} = \AG^{\k(\sigma)}$ for any $\sigma\in\{\stb,\adm,\com,\grd\}$,
	\item  $\AF\equiv^{\Ext_\sigma}_E\! \AG \ToT \AF^{\k(\adm)} = \AG^{\k(\adm)}$ for any $\sigma\in\{\prf,\id,\semi,\eag\}$ and
	\item $\AF\equiv^{\Ext_\stg}_E\! \AG \ToT \AF^{\k(\stb)} = \AG^{\k(\stb)}$.
\end{enumerate}
\end{theorem}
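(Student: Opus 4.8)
\textbf{Proof plan for Theorem~\ref{the:strong}.}

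The plan is to prove all three equivalences by establishing the two directions separately, exploiting the intrinsic and extrinsic kernel properties collected in Facts~\ref{fact:kernel} and~\ref{fact:kerneldecisive}. For the direction ``\,$\AF^{\k(\sigma)} = \AG^{\k(\sigma)}$ implies $\AF \equiv^{\Ext_\sigma}_E \AG$\,'' the argument is uniform across all cases: suppose the relevant kernels agree, and let $\AH$ be an arbitrary finite AF. By $\dcup$-robustness (Fact~\ref{fact:kernel}, Item~5) we get $(\AF \dcup \AH)^{\k} = (\AG \dcup \AH)^{\k}$ for the kernel $\k$ in question. Then the extrinsic property (Fact~\ref{fact:kerneldecisive}) gives $\Ext_{\sigma}(\AF \dcup \AH) = \Ext_{\sigma}\!\left((\AF \dcup \AH)^{\k}\right) = \Ext_{\sigma}\!\left((\AG \dcup \AH)^{\k}\right) = \Ext_{\sigma}(\AG \dcup \AH)$. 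Since $\AH$ was arbitrary, $\AF \equiv^{\Ext_\sigma}_E \AG$. This works verbatim for $\sigma \in \{\stb,\adm,\com,\grd\}$ with $\k = \k(\sigma)$ (Item~1 of Fact~\ref{fact:kerneldecisive}), for $\sigma \in \{\prf,\id,\semi,\eag\}$ with $\k = \k(\adm)$ (Item~2), and for $\sigma = \stg$ with $\k = \k(\stb)$ (Item~3).

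The converse direction ``\,$\AF \equiv^{\Ext_\sigma}_E \AG$ implies $\AF^{\k} = \AG^{\k}$\,'' is where the real work lies. First I would establish that expansion equivalence forces $A(\AF) = A(\AG)$ and $L(\AF) = L(\AG)$ (the finitary version of Fact~\ref{fact:expansionequi}, generalised to the respective semantics): if some argument $a$ occurs in only one of the two AFs, a carefully chosen expansion $\AH$ (e.g.\ one making $a$ join or leave every extension) separates them semantically; similarly for self-loops, using the discussion around Example~\ref{ex:nonequi} as a template, adapted to each $\sigma$. Once the node and loop sets coincide, by context-freeness (Fact~\ref{fact:kernel}, Item~4) it suffices to show that for every pair $a,b \in A(\AF)$, the attack $(a,b)$ survives in $\AF^{\k}$ iff it survives in $\AG^{\k}$; and by the definition of the kernels this is equivalent to showing that $(a,b) \in R(\AF)$ and $(a,b) \in R(\AG)$ agree whenever $(a,b)$ is \emph{not} redundant, i.e.\ whenever the local configuration on $\{a,b\}$ is not one of the patterns the kernel erases. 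The contrapositive strategy: assuming $(a,b) \in R(\AF) \setminus R(\AG)$ with $(a,b)$ non-redundant for $\k(\sigma)$, I would construct an explicit small expansion $\AH$ on fresh arguments (attached appropriately to $a$ and $b$, and possibly to the rest of the framework through a ``gadget'' that neutralises all other attacks) such that $\Ext_\sigma(\AF \dcup \AH) \neq \Ext_\sigma(\AG \dcup \AH)$, contradicting expansion equivalence. The choice of gadget depends on $\sigma$: for stable semantics one adds an argument $c$ attacking everything outside a candidate set to control the range condition; for admissibility one needs $\AH$ to turn the presence/absence of $(a,b)$ into a defence failure; the cases for $\prf,\id,\semi,\eag,\stg$ reduce to the $\k(\adm)$ or $\k(\stb)$ case via Items~2 and~3 of Fact~\ref{fact:kerneldecisive} together with the observation that the separating witness for admissible (resp.\ stable) extensions can be arranged to also separate the finer semantics.

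The main obstacle I anticipate is the case analysis in this converse direction: for each non-redundant attack pattern and each kernel one must exhibit a concrete witnessing expansion, and getting these witnesses to work simultaneously in $\AF$ and $\AG$ (which may differ on many attacks at once, not just on $(a,b)$) requires isolating the single attack $(a,b)$ from the rest of the graph. The standard remedy is to pick $(a,b)$ \emph{minimally} in some ordering, or to add enough fresh attackers in $\AH$ so that only $a$ and $b$ remain ``active'', so that the only relevant difference between $\AF \dcup \AH$ and $\AG \dcup \AH$ is the status of $(a,b)$. I would lean on the detailed constructions already carried out in~\cite{strong} and~\cite{GagW13} for the individual semantics rather than reproving each from scratch, citing those papers for the witnessing expansions and focusing the exposition on how the node-preservation, loop-preservation, context-freeness and robustness properties (Fact~\ref{fact:kernel}) together with the kernel-invariance of the semantics (Fact~\ref{fact:kerneldecisive}) assemble into the stated biconditionals.
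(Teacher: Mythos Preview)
The paper does not prove this theorem itself; it is stated with citations to \cite{strong} and \cite{GagW13} and the surrounding text only sketches the forward direction informally (kernel insensitivity plus $\dcup$-robustness, i.e.\ exactly Facts~\ref{fact:kernel} and~\ref{fact:kerneldecisive}). Your plan matches that forward direction precisely, and your outline of the converse---force $A(\AF)=A(\AG)$ and $L(\AF)=L(\AG)$ via suitable expansions, then use context-freeness to localise to a single non-redundant attack $(a,b)$ and build a witnessing $\AH$---is the standard route taken in the cited sources.

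One point to tighten: for Items~2 and~3 the reduction ``a witness separating admissible (resp.\ stable) extensions can be arranged to also separate $\prf,\id,\semi,\eag$ (resp.\ $\stg$)'' is not automatic from Fact~\ref{fact:kerneldecisive} alone. That fact only gives you that $\Ext_\sigma$ is invariant under applying the kernel, not that $\AF\not\equiv^{\Ext_\adm}_E\AG$ implies $\AF\not\equiv^{\Ext_\prf}_E\AG$. In the original proofs the witnessing $\AH$ is constructed so that the relevant admissible sets are already $\subseteq$-maximal (resp.\ range-maximal), which is why the same $\AH$ separates the stronger semantics; you should say this explicitly rather than leaving it as an ``observation''. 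With that clarification your plan is correct and essentially the approach of \cite{strong,GagW13}.
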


Having Theorem~\ref{the:strong} at hand we can now formally verify that all AFs depicted in Example~\ref{ex:stableind} are expansion equivalent w.r.t.\ stable semantics. This means, the recommended search for arbitrary expansions revealing semantical difference between them will never succeed. As an aside, one might get the impression that the syntactical characterization presented in Theorem~\ref{the:strong} is somehow unique. This is not true. Consider therefore the equivalence class $[\F]^{\Ext_\stb}_{E} = \{\G\mid \AF\equiv^{\Ext_\stb}_E\! \AG \}$ induced by $\F$. Mathematically speaking, the stable kernel $\F^{\k(\stb)}$ represents the least (w.r.t.\ subgraph-relation) element in $[\F]^{\Ext_\stb}_{E}$. It is not difficult to prove that $[\F]^{\Ext_\stb}_{E}$ even possesses a greatest element, namely $\F^{\k'(\stb)} = \left(A(\F), R(\F) \cup \{(a, b) \mid a \neq b, (a, a) \in R(\F)\}\right)$, i.e.\ the framework resulting from $\F$ by adding (instead of deleting) all redundant attacks. In case of finite AFs it can be shown with reasonable effort that expansion equivalence w.r.t.\ stable semantics is characterizable through $\k'(\stb)$ too. In the same manner, all other semantics considered in Theorem~\ref{the:strong} possess alternative ``greatest elements'' characterizations. We will see that the so-called \textit{naive kernel} (compare Definition~\ref{def:kernel2}) provides such a kind of characterization for naive semantics. The reason for this ``choice'' is simply that the induced equivalence classes do not necessarily possess a least element in case of naive semantics. 

Finally, let us turn to the more exotic cf2 as well as stage2 semantics which are defined via a recursive schema based on the decomposition of AFs along their strongly connected components (SCCs). These semantics are exceptional regarding expansion equivalence since in contrast to all other semantics considered in this section we have that even attacks between two self-attacking arguments are \textit{meaningful}. This means, the presence or absence of such attacks may change the outcome of an AF. Moreover, it turned out that any attack is non-redundant. In summary, expansion equivalence coincides with syntactical identity or more formally, for any finite AF $\F$, $\card{[\F]^{\Ext_\cfzwei}_{E}} = \card{[\F]^{\Ext_\stgzwei}_{E}} = \card{\{\F\}} = 1$.  

\begin{theorem}{\cite{GagW13,GagD14}} \label{the:strongSCC} Given $\sigma\in\{\cfzwei,\stgzwei\}$. For finite AFs $F$ and $G$ we have,
$$\AF\equiv^{\Ext_\sigma}_E \AG \ToT \AF = \AG.$$
\end{theorem}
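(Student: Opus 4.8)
The plan is to establish both directions of the biconditional $\AF\equiv^{\Ext_\sigma}_E \AG \iff \AF = \AG$ for $\sigma\in\{\cfzwei,\stgzwei\}$. The right-to-left direction is trivial: if $\AF = \AG$, then for any $\AH$ we have $\AF\dcup\AH = \AG\dcup\AH$ and hence $\Ext_\sigma(\AF\dcup\AH) = \Ext_\sigma(\AG\dcup\AH)$, so $\AF\equiv^{\Ext_\sigma}_E \AG$. The whole content of the theorem lies in the left-to-right direction, which I would prove by contraposition: assuming $\AF \neq \AG$, I would exhibit a single framework $\AH$ witnessing $\Ext_\sigma(\AF\dcup\AH) \neq \Ext_\sigma(\AG\dcup\AH)$.

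First I would split the case $\AF\neq\AG$ into two subcases: (i) the argument sets differ, $A(\AF)\neq A(\AG)$, and (ii) the argument sets coincide but the attack relations differ, $A(\AF)=A(\AG)$ and $R(\AF)\neq R(\AG)$. For subcase (i), by symmetry assume there is some $a\in A(\AF)\setminus A(\AG)$; then the witness $\AH$ can be built to react to the mere presence of $a$ — for instance, taking $\AH$ to introduce a fresh argument attacked only by $a$ forces that fresh argument to be defeated in $\AF\dcup\AH$ but accepted (in some SCC evaluated in isolation) in $\AG\dcup\AH$, using the SCC-recursive evaluation to localise the effect; one just needs to verify that naive (resp.\ stage) evaluation of the relevant initial SCC differs. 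For subcase (ii), assume without loss of generality $(a,b)\in R(\AF)\setminus R(\AG)$. Here I would use the key structural fact that for $\cfzwei$ and $\stgzwei$ \emph{every} attack is non-redundant, even attacks between two self-attacking arguments; the witness $\AH$ would be designed so that $\{a,b\}$ together with some auxiliary fresh arguments form an SCC of $\AF\dcup\AH$ in which the attack $(a,b)$ changes the naive/stage extensions of that SCC, while in $\AG\dcup\AH$, because $(a,b)$ is absent, the SCC structure (or its evaluation) is different. The precise construction of $\AH$ — in particular how to ``shield'' the rest of the framework so that the difference is not washed out by the $UP_\F(S,E)$ restriction in Definition~\ref{def:recursive} — is where care is needed.

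The main obstacle I expect is the SCC-recursive nature of $\cfzwei$ and $\stgzwei$: unlike for the classical kernel-based semantics, redundancy arguments are not context-free, so one cannot simply invoke a pointwise property of attacks. Instead the witness $\AH$ must be chosen so that, upon taking the union, the differing attack $(a,b)$ (or the extra argument $a$) lands in an initial or otherwise decisive SCC whose internal evaluation is provably different — and one must check that the recursive propagation to subsequent SCCs does not accidentally yield the same global extension set in both cases. I would handle this by making $\AH$ attach a small ``gadget'' of fresh arguments that (1) turns the relevant part of $\AF$ (resp.\ $\AG$) into its own SCC and (2) makes the global extensions of $\AF\dcup\AH$ and $\AG\dcup\AH$ distinguishable purely by restriction to $A(\AH)$. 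For the detailed verification one would appeal to the alternative non-recursive characterisations of $\cfzwei$ and $\stgzwei$ on finite AFs (Propositions~2.12 and 3.2 of \cite{GagD14}, referenced earlier), which make it more tractable to compute the extensions of the two small unions explicitly. Since the statement is already attributed to \cite{GagW13,GagD14}, I would follow their gadget constructions for the precise witnessing frameworks rather than reinventing them.
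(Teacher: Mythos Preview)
The paper does not actually contain a proof of this theorem: it is stated with attribution to \cite{GagW13,GagD14} and accompanied only by the informal remark that for $\cfzwei$ and $\stgzwei$ ``even attacks between two self-attacking arguments are meaningful'' and that ``any attack is non-redundant.'' So there is no in-paper argument to compare your proposal against.

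That said, your plan is exactly in line with the intuition the paper records and with the structure of the proofs in the cited sources. The contraposition, the case split on $A(\AF)\neq A(\AG)$ versus $R(\AF)\neq R(\AG)$, and the use of a small gadget $\AH$ with fresh arguments to isolate the differing attack inside a controllable SCC is precisely how \cite{GagW13} proceeds for $\cfzwei$ (and \cite{GagD14} for $\stgzwei$). Your identification of the main obstacle --- that redundancy is not context-free for SCC-recursive semantics, so the witness must control the SCC decomposition of the union --- is the correct diagnosis, and your decision to rely on the non-recursive characterisations from \cite{GagD14} to actually compute the extensions of the small unions is the right move. Since you explicitly defer the concrete gadget to the cited papers rather than spelling it out, your proposal is best read as a correct high-level outline rather than a self-contained proof; filling in the explicit $\AH$ (and verifying the $UP_\F(S,E)$ restrictions behave as intended) is where the real work in \cite{GagW13,GagD14} lies.
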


\subsection{Further Equivalence Notions Characterizable through Kernels} \label{sec:furthequi}
Let us turn to the remaining equivalence notions? Are there similar syntax-based characterization results?

\subsubsection{Weaker Notions of Expansion Equivalence}

Let us consider less demanding notions than expansion equivalence, e.g.\ normal and local expansion equivalence. In consideration of Definition~\ref{def:expansion} we do not have good reasons to believe that two AFs could be semantically distinguished by normal or local expansions, given that we only have a witnessing arbitrary expansion showing their non-equivalence. 
It was one surprising result in this line of research, that for many semantics, expansion equivalence coincide with definitorially weaker notions of it. This implies that weaker notions than expansion equivalence can be characterized by classical kernels too. The first results in this respect were already given in \cite[Theorem 8]{strong}. The authors showed that for some semantics expansion equivalence and local expansion equivalence coincide if considering finite AFs. 
It is worthwhile to gain a thorough understanding of this relation since it actually means that if there is an arbitrary expansion which semantically distinguish two finite AFs, than there has to be a local expansion doing likewise. Later it was shown that even normal expansion equivalence coincides with expansion equivalence for a whole bunch of semantics \cite{normal}. Interestingly, in contrast to local expansion equivalence, there are (to the best of our knowledge) no semantics together with witnessing AFs known which show that this coincidence does not hold in general.

\begin{example} \label{Ex:kernelad} Consider the following AFs $\F$ and $\G$. According to Theorem~\ref{the:strong} they are not expansion equivalent w.r.t.\ preferred semantics since $\F^{\k(\adm)} = \F \neq \G = \G^{\k(\adm)}$.

\begin{tikzpicture}

    \node (a) at (0,0) [circle, minimum size = 0.7cm, thick, draw, label = left:$\F:$] {$a$};
    \node (b) at (1.2,0) [circle, minimum size = 0.7cm, thick, draw] {$b$};
    \node (c) at (2.4,0) [circle, minimum size=0.7cm,  thick, draw] {$c$};
    
    \node (a'') at (4.9,0) [circle, minimum size = 0.7cm, thick, draw, label = left:$\G:$] {$a$};
    \node (b'') at (6.1,0) [circle, minimum size = 0.7cm, thick, draw] {$b$};
    \node (c'') at (7.3,0) [circle, minimum size = 0.7cm, thick, draw] {$c$};


\draw[->,thick] (a) to [bend left,thick] (b);
\draw[->,thick] (b) to [loop,thick, distance=0.5cm] (b);
\draw[->,thick] (b) to [bend left,thick] (c);
\draw[->,thick] (c) to [bend left,thick] (a);

\draw[->,thick] (b'') to [loop,thick, distance=0.5cm] (b'');
\draw[->,thick] (b'') to [bend left,thick] (c'');
\draw[->,thick] (c'') to [bend left,thick] (a'');

\end{tikzpicture}

As already stated (up to now) normal expansion equivalence coincides with expansion equivalence for any considered semantics. One possible scenario which makes the predicted different behaviour explicit is the following.

\begin{tikzpicture}

    \node (a) at (0,0) [circle, minimum size = 0.7cm, thick, draw, label = left:$\F\dcup\H\!:$] {$a$};
    \node (b) at (1.2,0) [circle, minimum size = 0.7cm, thick, draw] {$b$};
    \node (c) at (2.4,0) [circle, minimum size = 0.7cm, thick, draw] {$c$};
		\node (d) at (2.4,1.2) [circle, minimum size = 0.7cm, thick, draw, gray!50] {$d$};
    
    \node (a'') at (4.9,0) [circle, minimum size = 0.7cm, thick, draw, label = left:$\G\dcup\H\!:$] {$a$};
    \node (b'') at (6.1,0) [circle, minimum size = 0.7cm, thick, draw] {$b$};
    \node (c'') at (7.3,0) [circle, minimum size = 0.7cm, thick, draw] {$c$};
		\node (d'') at (7.3,1.2) [circle, minimum size = 0.7cm, thick, draw, gray!50] {$d$};


\draw[->,thick] (a) to [bend left,thick] (b);
\draw[->,thick] (b) to [loop,thick, distance=0.5cm] (b);
\draw[->,thick] (b) to [bend left,thick] (c);
\draw[->,thick] (c) to [bend left,thick] (a);
\draw[->,thick, gray!50] (b) to [thick] (d);
\draw[->,thick, gray!50] (d) to [bend left,thick ] (c);

\draw[->,thick] (b'') to [loop,thick, distance=0.5cm] (b'');
\draw[->,thick] (b'') to [bend left,thick] (c'');
\draw[->,thick] (c'') to [bend left,thick] (a'');
\draw[->,thick, gray!50] (b'') to [thick] (d'');
\draw[->,thick, gray!50] (d'') to [bend left,thick] (c'');

\end{tikzpicture}

Formally, we define $\H = (\{b,c,d\},\{(b,d),(d,c)\})$ and we obtain $\{\{a,d\}\} = \Ext_{\prf}(\F\dcup\H)\neq \{\emptyset\} = \Ext_{\prf}(\G\dcup \H)$. We encourage the reader to try to find a witnessing example showing that $\F$ and $\G$ are not local expansion equivalent w.r.t.\ preferred semantics. Due to Theorem~\ref{the:strong2} there has to be at least one distinguishing local expansion.
\end{example} 

How do the semantics behave in case of strong expansion equivalence? Remember, a special feature of strong expansions is that a former attack between old arguments will never become a counterattack to an added attack. In this sense, former attacks do not play a role with respect to being a potential defender of an added argument. Hence, in contrast to arbitrary expansions where such attacks might be relevant, we may delete them without changing the behavior with respect to further evaluations. To make this point clearer consider again the AF $\F\dcup\H$ depicted in Example~\ref{Ex:kernelad}. Note that the already existing attack $(a,b)$ in $\F$ becomes a defending attack of the newly added argument~$d$. This means, such attacks in fact play an important role with respect to further evaluation in case of arbitrary expansions. It was one main result in \cite{normal} that for some semantics attacks like $(a,b)$ in $\F$ are indeed redundant w.r.t.\ strong expansions. Even more surprising, strong expansion equivalence is characterizable through kernels. Therefore, more involved kernel definitions, so-called \textit{$\sigma$-*-kernels} had to be introduced. These kernels allow more deletions than their classical counterparts for expansion equivalence. In contrast to them, $\sigma$-*-kernels are \textit{context-sensitive}, i.e.\ the question whether an attack $(a,b)$ is redundant can not be answered by considering the arguments $a$ and $b$ only~\cite{Bau14}. 

The first three kernels presented in the definition below were firstly introduced in \cite{normal} with the objective to characterize strong expansion equivalence with respect to certain semantics. For the sake of completeness we also present the so-called \textit{$\stg$-*-kernel} as well as \textit{$\nav$-kernel} \cite{equisurvey,BauLW16}.\footnote{As an aside, we use the supplement ``*'', whenever the kernel in question is non-classical and expansion equivalence  is already characterized by another kernel.}

\begin{definition} \label{def:kernel2}
Let $\sigma\in\{\adm,\grd,\com,\stg\}$. The $\sigma$-*-kernel $\k^*(\sigma): \m{F} \to \m{F}$ with $\k^*(\sigma)(\F) = \AF^{\k^*(\sigma)} = \left(A,R^{\k^*(\sigma)}\right)$ for a given AF $\AF = (A,R)$ is defined as:
\begin{align*}
R^{\k^*(\adm)}=&\  R\ \sm \{(a, b) \mid a \neq b, \left((a, a) \in R \wedge \{(b,a),(b,b)\}\cap\ R \neq\emptyset\right)\\
&\vee\left((b,b) \in R \wedge \forall c\ ( (b,c)\in R \to \{(a,c),(c,a),(c,c),(c,b)\}\cap R \neq\emptyset)\right) \},\\
R^{\k^*(\grd)}=&\  R\sm \{(a, b) \mid a \neq b, \left((b, b) \in R \wedge \{(a,a),(b,a)\}\cap\ R\neq\emptyset\right)\\ &\vee\left((b,b) \in R \wedge \forall c\ ( (b,c)\in R \to \{(a,c),(c,a),(c,c)\}\cap R \neq\emptyset)\right) \},\\
R^{\k^*(\com)}=&\  R\sm \{(a, b) \mid a \neq b, \left((a, a),(b,b) \in R \right) \vee \left((b,b) \in R \wedge (b,a)\notin R\right.\\
& \left.\wedge \forall c\ ( (b,c)\in R \to \{(a,c),(c,a),(c,c),(c,b)\}\cap R \neq\emptyset)\right) \},\\
R^{\k^{*}(\stg)} =&\ R\ \sm \{(a, b) \mid a \neq b, (a, a) \in R \vee \forall c\ ( c\neq a \to (c,c)\in R)\}\\
R^{\k(\nav)} =&\ R\cup \{(a, b) \mid a \neq b, \{(a,a),(b,a),(b,b)\}\cap R\neq\emptyset \}.
\end{align*}
The latter represents the so-called \textit{$\nav$-kernel} $\AF^{\k(\nav)} = \left(A,R^{\k(\nav)}\right)$.
\end{definition}

For an illustrating example we refer the reader to Example~\ref{ex:admstar}. Analogously to Fact~\ref{fact:kernel} we collect some properties of the newly introduced kernels. The first three properties are immediately clear by definition.\footnote{The AF $\F = (\{a,b\},\{(a,b)\})$ shows that the naive kernel has to be excluded from item~3 of Fact~\ref{fact:kernel2} since $\F^{\k(\nav)} = (\{a,b\},\{(a,b),(b,a)\})\neq\F$.} The robustness w.r.t.\ deletions and corresponding attacks is less obvious but it is already shown for all considered kernels (except the $\stg$-*-kernel) in case of finite AFs (cf.\ \cite[Theorems 6 and 14]{Bau14}).

\begin{fact} \label{fact:kernel2} Given $\k\in\{\k^*(\adm),\k^*(\grd),\k^*(\com),\k^*(\stg),\k(\nav)\}$ and $\k^*\in\{\k^*(\adm),\k^*(\grd),\k^*(\com),\k^*(\stg)\}$. For two finite AFs $\AF$ and $\AG$ we have:

\begin{enumerate}
  \item $A(\AF) = A\left(\AF^\k\right)$, \hfill{(node-preserving)}
	\item $L(\AF) = L\left(\AF^\k\right)$, \hfill{(loop-preserving)}
	\item $L(\AF) = \emptyset\ \To\ \AF = \AF^{\k^*}$ and \hfill{(sufficient condition for identity)}
	\item If $\AF^\k = \AG^\k$, then $(\AF\sm B)^\k = (\AG\sm B)^\k$ for any finite set of arguments $B$.\hfill{($\sm$-robustness)}
	\end{enumerate}
\end{fact}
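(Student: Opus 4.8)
The statement to prove is \textbf{Fact~\ref{fact:kernel2}}, which collects four properties of the $\sigma$-*-kernels together with the naive kernel. My plan is to treat the five kernels uniformly wherever possible and handle the exceptions separately. First I would observe that all five kernel operators act by the formula $\AF^\k = (A, R^\k)$ where $R^\k$ is obtained from $R$ either by deleting a set of attacks characterised by a predicate on pairs (in the four starred cases) or by adding such a set (in the naive case); in either event the argument set $A$ is never touched. This gives property~1 (node-preservation) immediately. For property~2 (loop-preservation) I would note that every deletion/addition predicate in Definition~\ref{def:kernel2} carries the side condition $a\neq b$, so self-loops $(a,a)$ are never removed (in the starred kernels) and never created (in the naive kernel); hence $L(\AF)=L(\AF^\k)$. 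These two steps are routine inspections of the definitions.

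For property~3 I would restrict attention to the four starred kernels $\k^*$ and argue that if $L(\AF)=\emptyset$ then no attack can satisfy any of the deletion predicates. Concretely, each disjunct in the deletion conditions for $\k^*(\adm)$, $\k^*(\grd)$, $\k^*(\com)$, $\k^*(\stg)$ either explicitly requires $(a,a)\in R$ or $(b,b)\in R$, or --- in the case of $\k^*(\stg)$ --- requires $\forall c\,(c\neq a\to (c,c)\in R)$, which forces some self-loop to exist as soon as $A$ has at least two elements (and is vacuous or trivial otherwise, but then there is nothing to delete anyway). So in the loop-free case $R^{\k^*}=R$ and $\AF=\AF^{\k^*}$. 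I would add the footnoted remark that the naive kernel is genuinely excluded here, witnessed by $\F=(\{a,b\},\{(a,b)\})$, which is loop-free yet $\F^{\k(\nav)}=(\{a,b\},\{(a,b),(b,a)\})\neq\F$.

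The substantive step is property~4, $\sm$-robustness: if $\AF^\k=\AG^\k$ then $(\AF\sm B)^\k=(\AG\sm B)^\k$ for every finite argument set $B$. Here I would appeal directly to the already-established results cited in the excerpt --- \cite[Theorems 6 and 14]{Bau14} --- which give exactly this for $\k(\nav)$, $\k^*(\adm)$, $\k^*(\grd)$, $\k^*(\com)$, so those four require nothing further than quoting them. The one case not covered by that reference is the $\stg$-*-kernel $\k^*(\stg)$, and this is where I expect the real work to lie. For it I would proceed by a direct argument: since $A(\AF\sm B)=A(\AG\sm B)=A(\AF)\setminus B$ (using that $\AF^\k=\AG^\k$ forces $A(\AF)=A(\AG)$ by property~1), it suffices to show the attack sets coincide. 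The key observation is that membership of $(a,b)$ in $R^{\k^*(\stg)}$ depends only on whether $(a,b)\in R$, whether $(a,a)\in R$, and whether $A\setminus\{a\}$ is contained in the loop-set $L(\AF)$. Restricting to $A\setminus B$ changes the last clause from ``$\forall c\,(c\neq a\to (c,c)\in R)$ over all of $A$'' to ``... over $A\setminus B$''; I would need to check that this weakened clause, evaluated in $\AF\sm B$, agrees for $\AF$ and $\AG$ given that their full $\k^*(\stg)$-kernels agree. This uses loop-preservation ($L(\AF)=L(\AF^{\k^*(\stg)})=L(\AG^{\k^*(\stg)})=L(\AG)$) together with the fact that $(a,b)\in R(\AF\sm B)$ iff $(a,b)\in R(\AF)$ and $a,b\notin B$, and similarly for $\AG$; one then does a careful case split on whether $(a,a)\in R$ and whether the surviving non-$a$ arguments are all self-looping, reducing everything to the equality $R(\AF)\cap(\overline B\times\overline B)$ versus $R(\AG)\cap(\overline B\times\overline B)$ being reconstructible from the kernels. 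The bookkeeping in this case split --- making sure the ``forall'' clause behaves monotonically under restriction in the way needed --- is the main obstacle; everything else is inspection of definitions or citation.
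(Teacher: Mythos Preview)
Your proposal is correct and matches the paper's treatment exactly for items~1--3 (inspection of the definitions) and for item~4 on the four kernels covered by \cite[Theorems~6 and~14]{Bau14}. The paper itself does not supply a proof for the $\k^*(\stg)$ case of item~4 --- it only remarks that this case is not covered by the cited reference --- so here you go beyond the paper. Your sketch for $\k^*(\stg)$ is sound: from $\AF^{\k^*(\stg)}=\AG^{\k^*(\stg)}$ one gets $A(\AF)=A(\AG)$ and $L(\AF)=L(\AG)$; the deletion predicate for a non-loop attack $(a,b)$ depends only on whether $a\in L$ and whether $A\setminus\{a\}\subseteq L$, both of which are determined by the shared loop set, and the universal clause $\forall c\neq a\,((c,c)\in R)$ indeed only becomes \emph{easier} to satisfy after restricting to $A\setminus B$. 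Hence any attack present in exactly one of $R(\AF),R(\AG)$ must already satisfy the deletion predicate in the full framework, and therefore also after restriction, which closes the case split.
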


Let us consider the $\adm$-*-kernel (which, as we shall
see, characterizes strong expansion equivalence for preferred semantics) in more detail. Consider the first disjunct. This first condition is exactly the same as in case of the $\adm$-kernel (compare Definition~\ref{def:kernel}), i.e.\ an attack $(a,b)$ has to be deleted if $a$ is self-attacking and at least one of the attacks $(b,a)$ or $(b,b)$ exist. The second disjunct provides one with further options to delete an attack $(a,b)$, namely if $b$ is self-defeating and furthermore, for all arguments $c$ which are attacked by $b$ at least one of the following conditions has to be fulfilled: 
	  
		\begin{enumerate}
			\item $a$ attacks $c$,
			\item $c$ attacks $a$,
			\item $c$ attacks $c$,
			\item $c$ attacks $b$.
		\end{enumerate}

The motivation for the second disjunct is the following: At first observe that $b$ cannot be an element of any conflict-free set. Consequently, in case of strong expansions the attack $(a,b)$ may only be relevant with respect to the defense of $c$. In the first three cases this relevance becomes unimportant since $\{a,c\}$ is conflicting. In the fourth case the redundancy of $(a,b)$ with respect to the defense of $c$ is given by the fact that $c$ already defends itself against $b$. Please note that the consideration of $c=a$ or $c=b$ is not excluded by Definition~\ref{def:kernel2}. The following frameworks exemplify different cases.

\begin{example} \label{ex:admstar} The following graphs show six frameworks and their corresponding $\adm$-*-kernels. The dotted attacks represent initial attacks which have to be deleted if applying the $\adm$-*-kernel.

\begin{tikzpicture}

		\node (A5) at (0,-2.5) [circle, minimum size=0.7cm, thick, draw, label = left:$\F_1\!:$] {$a$};
    \node (B5) at (1.2,-2.5) [circle, minimum size=0.7cm,  thick, draw] {$b$};
		\node (C5) at (2.4,-2.5) [circle, minimum size=0.7cm,  thick, draw] {$c$};
		
		\node (A1) at (4.9,-2.5) [circle, minimum size=0.7cm, thick, draw, label = left:$\F_2\!:$] {$a$};
    \node (B1) at (6.1,-2.5) [circle, minimum size=0.7cm,  thick, draw] {$b$};
		\node (C1) at (7.3,-2.5) [circle, minimum size=0.7cm,  thick, draw] {$c$};
		
		\node (A2) at (0,-4) [circle, minimum size=0.7cm, thick, draw, label = left:$\F_3\!:$] {$a$};
    \node (B2) at (1.2,-4) [circle, minimum size=0.7cm,  thick, draw] {$b$};
		\node (C2) at (2.4,-4) [circle, minimum size=0.7cm,  thick, draw] {$c$};
		
		\node (A3) at (4.9,-4) [circle, minimum size=0.7cm, thick, draw, label = left:$\F_4\!:$] {$a$};
    \node (B3) at (6.1,-4) [circle, minimum size=0.7cm,  thick, draw] {$b$};
		\node (C3) at (7.3,-4) [circle, minimum size=0.7cm,  thick, draw] {$c$};
		
		\node (A4) at (0,-5.5) [circle, minimum size=0.7cm, thick, draw, label = left:$\F_5\!:$] {$a$};
    \node (B4) at (1.2,-5.5) [circle, minimum size=0.7cm,  thick, draw] {$b$};
		\node (C4) at (2.4,-5.5) [circle, minimum size=0.7cm,  thick, draw] {$c$};

		\node (A6) at (4.9,-5.5) [circle, minimum size=0.7cm, thick, draw, label = left:$\F_6\!:$] {$a$};
    \node (B6) at (6.1,-5.5) [circle, minimum size=0.7cm,  thick, draw] {$b$};
		\node (C6) at (7.3,-5.5) [circle, minimum size=0.7cm,  thick, draw] {$c$};


\draw[->, thick, dotted] (A1) to [thick,bend right] (B1);
\draw[->, thick, dotted] (A2) to [thick,bend right] (B2);
\draw[->, thick, dotted] (A3) to [thick,bend right] (B3);
\draw[->, thick, dotted] (A4) to [thick,bend right] (B4);
\draw[->, thick, dotted] (A5) to [thick,bend right] (B5);
\draw[->, thick, dotted] (A6) to [thick,bend right] (B6);
\draw[->, thick, dotted] (B6) to [thick,bend right] (A6);

\draw[->, thick] (B1) to [thick,loop,distance=0.5cm] (B1);
\draw[->, thick] (B2) to [thick,loop,distance=0.5cm] (B2);
\draw[->, thick] (B3) to [thick,loop,distance=0.5cm] (B3);
\draw[->, thick] (B4) to [thick,loop,distance=0.5cm] (B4);
\draw[->, thick] (B5) to [thick,loop,distance=0.5cm] (B5);
\draw[->, thick] (B6) to [thick,loop,distance=0.5cm] (B6);

\draw[->, thick] (B1) to [thick,bend left] (C1);
\draw[->, thick] (B2) to [thick,bend left] (C2);
\draw[->, thick] (B3) to [thick,bend left] (C3);
\draw[->, thick] (B4) to [thick,bend left] (C4);

\draw[->, thick] (A1) to [thick,bend left,out=44,in=120] (C1);
\draw[->, thick] (C2) to [thick,bend left,out=44,in=120] (A2);
\draw[->, thick] (C3) to [thick,loop,distance=0.5cm] (C3);
\draw[->, thick] (C4) to [thick,bend left] (B4);

\end{tikzpicture}

Consider the formal description of $R^{\k^*(\adm)}$ as given in Definition~\ref{def:kernel2}. The AF $F_1$ is somehow the base case since the only argument $c$, s.t.\ $(b,c)\in R(\F_1)$ is~$b$ itself. Since $(b,b) \in R(\F_1)$ we deduce that the considered intersection is non-empty and thus, the deletion of $(a,b)$ is justified. The subsequent four frameworks $\F_2$, $\F_3$, $\F_4$ and $\F_5$ are the base case plus one further argument $c$ different from $a$ and $b$, s.t.\ for any $i\in\{2,3,4,5\}$, $(b,c)\in R(\F_i)$. The last framework $\F_6$ illustrates the case $b$ counterattacks $a$. Note that the reason to delete $(a,b)$ is somehow self-referential since (additionally to the base case) it is justified by $(a,b)\in R(\F_6)$. Due to the first disjunct (i.e.\ just like in case of the classical $\adm$-kernel) even the attack $(b,a)$ has to be deleted.

\end{example}  

We proceed with further characterization theorems.\footnote{Please note that the results in case of cf2 and stage2 semantics have never been published before.} An comprehensive overview of equivalence notion and their characterizing kernels in case of finite AFs and extension-based semantics is presented in Figure~\ref{fig:extrel}. 
\pagebreak
\begin{theorem}{\cite{strong,normal,equisurvey,BauLW16a}} \label{the:strong2} For finite AFs $F$ and $G$ we have the following coincidences.
\begin{enumerate}
	\item $\AF\equiv^{\Ext_\sigma}_E\! \AG \ToT \AF\equiv^{\Ext_\sigma}_N\! \AG $ for $\sigma\in\{\stg,\stb,\semi,\eag,\adm,\prf,\id,\grd,\com,\nav,\cfzwei,\stgzwei\}$,
	\item $\AF\equiv^{\Ext_\sigma}_E\! \AG \ToT \AF\equiv^{\Ext_\sigma}_L\! \AG $ for $\sigma\in\{\semi,\eag,\adm,\prf,\id,\nav\}$ and
	\item $\AF\equiv^{\Ext_\sigma}_E\! \AG \ToT \AF\equiv^{\Ext_\sigma}_S\! \AG $ for $\sigma\in\{\stg,\stb,\semi,\eag,\nav\}$.
\end{enumerate}
Furthermore, for any two finite AFs $F$ and $G$ we have the following non-classical characterizations.
\begin{enumerate} \setcounter{enumi}{3}
	\item  $\AF\equiv^{\Ext_\stg}_L\! \AG \ToT \AF^{\k^*(\stg)} = \AG^{\k^*(\stg)}$,
	\item $\AF\equiv^{\Ext_\sigma}_S\! \AG \ToT \AF^{\k^*(\adm)} = \AG^{\k^*(\adm)}$ for $\sigma\in\{\adm,\prf,\id\}$, 
	\item $\AF\equiv^{\Ext_\sigma}_S\! \AG \ToT \AF^{\k^*(\sigma)} = \AG^{\k^*(\sigma)}$ for $\sigma\in\{\com,\grd\}$ and 
	\item  $\AF\equiv^{\Ext_\nav}_E\! \AG \ToT \AF^{\k(\nav)} = \AG^{\k(\nav)}$.
\end{enumerate}
\end{theorem}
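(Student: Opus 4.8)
\textbf{Proof plan for Theorem~\ref{the:strong2}, item~7 ($\AF\equiv^{\Ext_\nav}_E\! \AG \ToT \AF^{\k(\nav)} = \AG^{\k(\nav)}$).}
The plan is to prove the two implications separately, leveraging the intrinsic properties of the $\nav$-kernel that are analogous to Facts~\ref{fact:kernel} and~\ref{fact:kernel2}. First I would establish the ``semantic invariance'' lemma for the $\nav$-kernel, namely $\Ext_\nav(\F) = \Ext_\nav(\F^{\k(\nav)})$ for every finite $\F$. The idea is that adding an attack $(a,b)$ with $a\neq b$ whenever one of $(a,a)$, $(b,a)$, $(b,b)$ is already present cannot create or destroy any maximal conflict-free set: if $(a,a)\in R$ or $(b,b)\in R$, then no conflict-free set contains $a$ (resp.\ $b$), so adding $(a,b)$ changes neither conflict-freeness nor $\subseteq$-maximality of any conflict-free set; if $(b,a)\in R$, then $\{a,b\}$ is already conflicting, so $(a,b)$ is redundant for conflict-freeness and hence for naiveness. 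I would also record the $\dcup$-robustness property for the $\nav$-kernel: if $\F^{\k(\nav)}=\G^{\k(\nav)}$ then $(\F\dcup\AH)^{\k(\nav)}=(\G\dcup\AH)^{\k(\nav)}$ for any finite $\AH$ — this follows because the decision to add $(a,b)$ is context-free (depends only on the presence of loops at $a,b$ and the reverse attack $(b,a)$, all of which are preserved and monotone under $\dcup$), and one checks that taking the kernel commutes appropriately with componentwise union.

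For the direction $\AF^{\k(\nav)} = \AG^{\k(\nav)} \To \AF\equiv^{\Ext_\nav}_E\! \AG$, I would argue: for any finite $\AH$, by $\dcup$-robustness $(\F\dcup\AH)^{\k(\nav)} = (\G\dcup\AH)^{\k(\nav)}$, hence by the semantic invariance lemma $\Ext_\nav(\F\dcup\AH) = \Ext_\nav((\F\dcup\AH)^{\k(\nav)}) = \Ext_\nav((\G\dcup\AH)^{\k(\nav)}) = \Ext_\nav(\G\dcup\AH)$, which is exactly expansion equivalence. For the converse $\AF\equiv^{\Ext_\nav}_E\! \AG \To \AF^{\k(\nav)} = \AG^{\k(\nav)}$, I would take the contrapositive and, assuming $\AF^{\k(\nav)} \neq \AG^{\k(\nav)}$, construct a distinguishing expansion. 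First, expansion equivalence forces $A(\F)=A(\G)$ and $L(\F)=L(\G)$ (using the same kind of argument as in Fact~\ref{fact:expansionequi}: an isolated fresh argument probes the argument set, and a suitable small gadget attacking a candidate loop probes the loop set). So the kernels differ in some attack: WLOG there is a pair $(a,b)$ with $a\neq b$, $a,b\in A(\F)=A(\G)$, such that $(a,b)\in R(\F^{\k(\nav)})\setminus R(\G^{\k(\nav)})$. By definition of the $\nav$-kernel this means: in $\G$, none of $(a,a),(b,a),(b,b)$ holds and $(a,b)\notin R(\G)$ (so the attack is genuinely absent and not even added), whereas in $\F$ the attack $(a,b)$ is present in the kernel. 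I would then add a fresh argument $c$ together with attacks making $b$ ``needed'' in a way that $a$ and $b$ can coexist in $\G\dcup\AH$ but not in $\F\dcup\AH$ — concretely, something like $\AH$ attacking everything in $A\setminus\{a,b\}$ via $c$-style gadgets and leaving $\{a,b\}$ conflict-free in $\G\dcup\AH$ but conflicting in $\F\dcup\AH$, thereby producing different naive extensions.

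The main obstacle I expect is the last step: carefully designing the witnessing expansion $\AH$ so that the semantic difference manifests at the level of \emph{naive} (i.e.\ $\subseteq$-maximal conflict-free) sets rather than merely conflict-free sets, and doing so uniformly across all the ways the two kernels can differ. Because the $\nav$-kernel \emph{adds} attacks rather than deleting them, the bookkeeping is slightly dual to the classical case: I must ensure that the extra attack present in one kernel but not the other actually blocks a maximal conflict-free set in the combined framework, which requires the gadget in $\AH$ to ``pin down'' the rest of the arguments so that maximality is decided precisely by whether $a$ and $b$ are jointly admissible as a conflict-free pair. Once the case analysis on why $(a,b)$ lies in one kernel and not the other is organized (present-vs-absent in the original, loop-induced addition vs.\ not), each sub-case should yield a concrete $\AH$ by a small, explicit construction, but verifying correctness of each gadget is where the real work lies. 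I would also double-check the edge cases $c=a$ and $c=b$ and the situation where $A(\F)$ is empty, to make sure the probing arguments for $A(\cdot)$ and $L(\cdot)$ are legitimate finite expansions.
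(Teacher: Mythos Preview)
Your plan is correct and follows exactly the kernel-based template the paper uses throughout (semantic invariance of the kernel, $\dcup$-robustness, then a contrapositive with a witnessing expansion); the paper does not spell out a proof of item~7 but refers to the cited sources, and your outline matches that standard methodology.

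One remark on the step you flag as the main obstacle: it is considerably easier than you anticipate once you observe that the $\nav$-kernel admits the closed form
\[
R\!\left(\F^{\k(\nav)}\right)=\bigl\{(a,b)\in A(\F)\times A(\F)\;\big|\;\{a,b\}\notin\Ext_\cf(\F)\bigr\},
\]
i.e.\ $(a,b)$ is in the kernel iff the pair $\{a,b\}$ is conflicting in $\F$ (check this directly from Definition~\ref{def:kernel2}; the case $a=b$ is loop-preservation). Hence $\F^{\k(\nav)}=\G^{\k(\nav)}$ is equivalent to $A(\F)=A(\G)$ together with $\Ext_\cf(\F)=\Ext_\cf(\G)$ on singletons and pairs, which by downward-closure means $\Ext_\cf(\F)=\Ext_\cf(\G)$. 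With this in hand, $\dcup$-robustness is immediate, and for the contrapositive you do not need any fresh-argument gadget: assuming $A(\F)=A(\G)$ and $L(\F)=L(\G)$, a kernel difference gives a pair $\{a,b\}$ with $a\neq b$, $a,b\notin L(\F)$, that is conflict-free in exactly one of $\F,\G$. The \emph{local} expansion $\AH=\bigl(A(\F),\{(d,d)\mid d\in A(\F)\setminus\{a,b\}\}\bigr)$ then yields $\Ext_\nav(\G\dcup\AH)=\{\{a,b\}\}$ versus $\Ext_\nav(\F\dcup\AH)=\{\{a\},\{b\}\}$ (or vice versa), with no case analysis on ``why'' the attack is present in one kernel and not the other. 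This also dispatches your worry about promoting a $\cf$-difference to a $\nav$-difference.
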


At this point we want to highlight a very surprising relation. Remember that normal expansion equivalence and normal deletion equivalence are completely unrelated in the general picture (cf.\ Figure~\ref{fig:prelimrel}). The observation that the characterizing kernels (including the identity map in case of cf2 and stage2 semantics) of normal expansion equivalence w.r.t.\ all considered semantics in this section satisfy $\sm$-robustness (cf.\ Facts~\ref{fact:kernel} and \ref{fact:kernel2}) reveals that normal expansion equivalence implies normal deletion equivalence for these semantics.

\begin{corollary} {\label{cor:normal}} Given $\sigma\in\{\stg,\stb,\semi,\eag,\adm,\prf,\id,\grd,\com,\nav,\cfzwei,\stgzwei\}$ and two finite AFs $\F$and $\G$. We have: $\F\equiv^{\Ext_\sigma}_{N}\!\G \To \F\equiv^{\Ext_\sigma}_{ND}\!\G.$ 
\end{corollary}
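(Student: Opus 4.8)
The plan is to route everything through the syntactic kernel characterisations already established, exactly as the paragraph preceding the corollary suggests. For each $\sigma$ in the list, associate a \emph{characterising kernel} $\k_\sigma$: namely $\k_\sigma = \k(\sigma)$ for $\sigma\in\{\stb,\adm,\com,\grd\}$, $\k_\sigma = \k(\adm)$ for $\sigma\in\{\prf,\id,\semi,\eag\}$, $\k_\sigma = \k(\stb)$ for $\sigma=\stg$, $\k_\sigma = \k(\nav)$ for $\sigma=\nav$ (all by Theorems~\ref{the:strong} and \ref{the:strong2}(7)), and $\k_\sigma$ the identity map on $\m{F}$ for $\sigma\in\{\cfzwei,\stgzwei\}$ (by Theorem~\ref{the:strongSCC}). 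Since by Theorem~\ref{the:strong2}(1) normal expansion equivalence w.r.t.\ $\sigma$ coincides with expansion equivalence w.r.t.\ $\sigma$, in all cases $\F\equiv^{\Ext_\sigma}_N\G$ is equivalent to the purely syntactic condition $\F^{\k_\sigma} = \G^{\k_\sigma}$.

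The key step then invokes $\sm$-robustness. Assume $\F^{\k_\sigma} = \G^{\k_\sigma}$ and fix an arbitrary finite set of arguments $B$. If $\k_\sigma$ is one of the classical kernels $\k(\stb),\k(\adm),\k(\grd),\k(\com)$ I would apply item~6 of Fact~\ref{fact:kernel}; if $\k_\sigma = \k(\nav)$ I would apply item~4 of Fact~\ref{fact:kernel2}; if $\k_\sigma$ is the identity map the claim is immediate. In every case this yields $(\F\sm B)^{\k_\sigma} = (\G\sm B)^{\k_\sigma}$. Applying the characterisation a second time, now in the $\impliedby$ direction (Theorems~\ref{the:strong}, \ref{the:strong2}(7) and \ref{the:strongSCC}), gives $\F\sm B \equiv^{\Ext_\sigma}_E \G\sm B$, and setting $\H = (\emptyset,\emptyset)$ in the defining universal quantifier of expansion equivalence produces the ordinary equivalence $\Ext_\sigma(\F\sm B) = \Ext_\sigma(\G\sm B)$. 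As $B$ was arbitrary, $\F\equiv^{\Ext_\sigma}_{ND}\G$ by Definition~\ref{def:equivalence}(9).

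The only real care needed is the bookkeeping of which kernel serves which semantics, and the check that each of those kernels — together with the identity map — carries the $\sm$-robustness clause in Facts~\ref{fact:kernel} and \ref{fact:kernel2}; this is routine, since item~6 of Fact~\ref{fact:kernel} and item~4 of Fact~\ref{fact:kernel2} between them list precisely $\k(\stb),\k(\adm),\k(\grd),\k(\com)$ and $\k(\nav)$, and the identity map trivially commutes with deletion of arguments (if $\F=\G$ then $\F\sm B=\G\sm B$). No new combinatorial reasoning about attacks is required: the statement is a pure composition of the already-proved characterisation theorems with the $\sm$-robustness properties, and I expect the mild ``obstacle'' to be nothing more than spelling out the case distinction on $\sigma$ cleanly.
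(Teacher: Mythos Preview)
Your proposal is correct and follows essentially the same route as the paper: the paragraph immediately preceding Corollary~\ref{cor:normal} already points out that the characterising kernels of normal expansion equivalence (including the identity map for $\cfzwei$ and $\stgzwei$) all satisfy $\sm$-robustness via Facts~\ref{fact:kernel} and~\ref{fact:kernel2}, which is exactly the pivot you make explicit. Your write-up is a clean unpacking of that sketch, with the case distinction on $\sigma$ and the final appeal to ordinary equivalence spelled out.
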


The attentive reader may have noticed that we do not have characterized local expansion
equivalence w.r.t.\ stable, complete as well as grounded
extension-based semantics. We mention that all three equivalence notions are already characterized but the characterization theorems are not purely kernel-based (cf.\ \cite[Theorems~9,10,11]{strong}). Furthermore, it can be checked that none of the kernels presented in Definitions~\ref{def:kernel} and~\ref{def:kernel2} serve as a characterizing kernel. Consider therefore the following example~\cite[Example~15]{strong}.

\begin{example} 
The AFs $\F$ and $\G$ are local expansion equivalent w.r.t.\ stable semantics. This can be seen as follows. Given an AF $\H$, s.t.\ $A(\H) \subseteq\{a, b\}$. If $(a, b) \in R(\H)$ and $(a,a) \notin R(\H)$,
we obtain $\Ext_{\stb}(\F\dcup\H) = \Ext_{\stb}(\G\dcup\H) = \{\{a\}\}$. Otherwise, $\Ext_{\stb}(\F\dcup\H) = \Ext_{\stb}(\G\dcup\H) = \emptyset$.

\begin{tikzpicture}

    \node (a) at (0,0) [circle, minimum size = 0.7cm, thick, draw, label = left:$\F:$] {$a$};
    \node (b) at (1.4,0) [circle, minimum size = 0.7cm, thick, draw] {$b$};

    \node (b'') at (4.0,0) [circle, minimum size = 0.7cm, thick, draw, label = left:$\G:$] {$b$};


\draw[->,thick] (b) to [bend left,thick] (a);
\draw[->,thick] (b) to [loop,thick, distance=0.5cm] (b);

\draw[->,thick] (b'') to [loop,thick, distance=0.5cm] (b'');

\end{tikzpicture}

Remember that all introduced kernels are node-preserving (Facts \ref{fact:kernel} and \ref{fact:kernel2}). Consequently, none of them may serve as a characterizing kernel for local expansion equivalence w.r.t.\ stable semantics.
\end{example}

We mention that weak expansion equivalence is already characterized in case of stable semantics \cite[Proposition 3]{normal} as well as admissible, preferred and complete semantics \cite[Theorem 1]{zoo2}. All characterization results are not kernel-based. For instance, two AFs are weak expansion equivalent w.r.t.\ stable semantics iff both do not possess stable extensions at all or if they share the same arguments and at the same time possess the same stable extensions. Consequently, $\F = (\{a\},\{(a,a)\})$ and $\G = (\{a,b,c\},\{(a,b),(b,c),(c,a)\})$ are weak expansion equivalent w.r.t.\ stable semantics. Both frameworks witness that any potential characterizing kernel $\k$ is necessarily neither node- nor loop-preserving. 

As a final note, we are not aware of any study of weaker notions of expansion equivalence in case of cf2 as well as stage2 semantics. 

\subsubsection{Notions of Deletion Equivalence and Update Equivalence}

We start with local deletion equivalence. Remember that local deletion equivalent AFs cannot be semantically distinguished by deleting a certain set of attacks in both simultaneously. How ``strong'' is this notion? Are there redundant attacks or even redundant arguments? 

\begin{example} \label{ref:local} Consider the following AFs $\F$, $\G$ and $\H$.

\begin{tikzpicture}

    \node (A1'') at (0,-2.5) [circle,minimum size=0.7cm, thick, draw, label = left:$\F\!:$]{$a$};
    \node (B1'') at (1.2,-2.5) [circle, minimum size=0.7cm, thick, draw]{$b$};
		\node (C1'') at (2.4,-2.5) [circle, minimum size=0.7cm, thick, draw]{$c$};

		\node (A2'') at (4.0,-2.5) [circle, minimum size=0.7cm, thick, draw, label = left:$\G\!:$] {$a$};
    \node (B2'') at (5.2,-2.5) [circle, minimum size=0.7cm,  thick, draw] {$b$};
		
		\node (A3'') at (6.8,-2.5) [circle, minimum size=0.7cm, thick, draw, label = left:$\H\!:$] {$a$};
    \node (B3'') at (8,-2.5) [circle, minimum size=0.7cm,  thick, draw] {$b$};
		\node (C3'') at (9.2,-2.5) [circle, minimum size=0.7cm,  thick, draw] {$c$};

\draw[->, thick] (A1'') to [thick,bend right] (B1'');
\draw[->, thick] (A1'') to [thick,bend left,out=44,in=120] (C1'');
\draw[->, thick] (B1'') to [thick,loop,distance=0.5cm] (B1'');
\draw[->, thick] (B1'') to [thick,bend right] (A1'');

\draw[->, thick] (A2'') to [thick,loop,distance=0.5cm] (A2'');
\draw[->, thick] (B2'') to [thick,bend right] (A2'');

\draw[->, thick] (A3'') to [thick,bend right] (B3'');
\draw[->, thick] (A3'') to [thick,bend left,out=44,in=120] (C3'');
\draw[->, thick] (B3'') to [thick,loop,distance=0.5cm] (B3'');
\draw[->, thick] (B3'') to [thick,bend left] (C3'');

\end{tikzpicture}

The AFs $\F$ and $\G$ do not possess the same arguments. Let us delete all occurring attacks, i.e.\ $S_A = R(\F) \cup R(\G)$. We obtain the following local deletions where $\{a,b,c\}\in\Ext_{\sigma}(\F\sm S_A)\sm \Ext_{\sigma}(\G\sm S_A)$ for all semantics $\sigma$ considered in this section.  

\begin{tikzpicture}

    \node (A1'') at (0,-2.5) [circle,minimum size=0.7cm, thick, draw, label = left:$\F\sm S_A\!:$]{$a$};
    \node (B1'') at (1.2,-2.5) [circle, minimum size=0.7cm, thick, draw]{$b$};
		\node (C1'') at (2.4,-2.5) [circle, minimum size=0.7cm, thick, draw]{$c$};

		\node (A2'') at (4.8,-2.5) [circle, minimum size=0.7cm, thick, draw, label = left:$\G\sm S_A\!:$] {$a$};
    \node (B2'') at (6.0,-2.5) [circle, minimum size=0.7cm,  thick, draw] {$b$};

\draw[->, thick, dotted] (A1'') to [thick,bend right] (B1'');
\draw[->, thick, dotted] (A1'') to [thick,bend left,out=44,in=120] (C1'');
\draw[->, thick, dotted] (B1'') to [thick,loop,distance=0.5cm] (B1'');
\draw[->, thick, dotted] (B1'') to [thick,bend right] (A1'');

\draw[->, thick, dotted] (A2'') to [thick,loop,distance=0.5cm] (A2'');
\draw[->, thick, dotted] (B2'') to [thick,bend right] (A2'');

\end{tikzpicture}

The AFs $\F$ and $\H$ possess the same arguments but differ in their attack-relation, e.g. $(b,c)\in R(\H) \sm R(\F)$. This difference can be made more explicit if defining $S_R = \left(R(\F) \cup R(\H)\right)\sm \{(b,c)\}$. Consider the resulting local deletions.     

\begin{tikzpicture}

    \node (A1'') at (0,-2.5) [circle,minimum size=0.7cm, thick, draw, label = left:$\F\sm S_R\!:$]{$a$};
    \node (B1'') at (1.2,-2.5) [circle, minimum size=0.7cm, thick, draw]{$b$};
		\node (C1'') at (2.4,-2.5) [circle, minimum size=0.7cm, thick, draw]{$c$};

		\node (A3'') at (4.8,-2.5) [circle, minimum size=0.7cm, thick, draw, label = left:$\H\sm S_R\!:$] {$a$};
    \node (B3'') at (6,-2.5) [circle, minimum size=0.7cm,  thick, draw] {$b$};
		\node (C3'') at (7.2,-2.5) [circle, minimum size=0.7cm,  thick, draw] {$c$};

\draw[->, thick, dotted] (A1'') to [thick,bend right] (B1'');
\draw[->, thick, dotted] (A1'') to [thick,bend left,out=44,in=120] (C1'');
\draw[->, thick, dotted] (B1'') to [thick,loop,distance=0.5cm] (B1'');
\draw[->, thick, dotted] (B1'') to [thick,bend right] (A1'');

\draw[->, thick, dotted] (A3'') to [thick,bend right] (B3'');
\draw[->, thick] (B3'') to [thick,bend left] (C3'');
\draw[->, thick, dotted] (A3'') to [thick,bend left,out=44,in=120] (C3'');
\draw[->, thick, dotted] (B3'') to [thick,loop,distance=0.5cm] (B3'');

\end{tikzpicture}

Once again we have $\{a,b,c\}\in\Ext_{\sigma}(\F\sm S_R)$ for all known semantics $\sigma$ and $\{a,b,c\}\notin \Ext_{\sigma}(\H\sm S_R)$ if assuming conflict-freeness of the considered semantics.

\end{example}

The observations above indicate that there is not much space for redundancy in case of local expansion equivalence and indeed, it was one main result in \cite{Bau14} that local expansion equivalence collapse to identity for all semantics considered in this section. Moreover, instead of proving this one by one for any semantics the author followed the line in \cite{BarG07} and provide abstract criteria guaranteeing the coincidence with syntactical identity. These criteria are very weak requirements, namely \textit{conflict-freeness} (\m{CF}) and the principle of \textit{isolate-inclusion} (\m{II}). The latter is fulfilled by a semantics $\sigma$ iff for any AF $\F$, the set of all isolated arguments is contained in at least one $\sigma$-extension. Observe that any considered semantics apart from stable semantics satisfy \m{II}.\footnote{A counter-example in case of stable semantics is given by $\F = (\{a,b\},\{(b,b)\})$. Obviously, $a$ is isolated but $\Ext_{\stb}(\F) = \emptyset$. Nevertheless, local expansion equivalence in case of stable semantics collapse to identity too.} 

\begin{theorem}[\cite{Bau14}] \label{the:identity} Given a semantics $\sigma$ satisfying \m{CF} and \m{II}. For two finite AFs $\F$ and $\G$ we have: 
$$\F\equiv^{\Ext_\sigma}_{LD}\G \ToT \F = \G.$$ 
\end{theorem}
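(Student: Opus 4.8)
\textbf{Proof plan for Theorem~\ref{the:identity}.}
The plan is to prove the two directions separately. The direction from left to right ($\F = \G \implies \F\equiv^{\Ext_\sigma}_{LD}\G$) is immediate: if $\F$ and $\G$ are literally the same AF, then $\F\sm S = \G\sm S$ for every set of attacks $S$, hence $\Ext_\sigma(\F\sm S) = \Ext_\sigma(\G\sm S)$, so $\F\equiv^{\Ext_\sigma}_{LD}\G$ by Definition~\ref{def:equivalence}. All the work is in the converse: assume $\F\equiv^{\Ext_\sigma}_{LD}\G$ and derive $\F = \G$, i.e.\ $A(\F) = A(\G)$ and $R(\F) = R(\G)$.

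First I would establish that $\F$ and $\G$ have the same arguments. Suppose toward a contradiction that there is an argument $a\in A(\F)\sm A(\G)$ (the symmetric case is analogous). Take $S$ to be the set of \emph{all} attacks occurring in either framework, $S = R(\F)\cup R(\G)$. Then $\F\sm S = (A(\F),\emptyset)$ and $\G\sm S = (A(\G),\emptyset)$ are both edgeless. In an edgeless AF every argument is isolated, so by the principle of isolate-inclusion~(\m{II}) there is a $\sigma$-extension of $\F\sm S$ containing all of $A(\F)$, and since $\sigma$ satisfies conflict-freeness~(\m{CF}) the only candidate is $A(\F)$ itself; hence $A(\F)\in\Ext_\sigma(\F\sm S)$. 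By the same reasoning $A(\G)\in\Ext_\sigma(\G\sm S)$, and moreover every $\sigma$-extension of an edgeless AF must be conflict-free and, being a subset of the unique isolate-inclusion set, actually equals the whole argument set; so $\Ext_\sigma(\F\sm S) = \{A(\F)\}$ and $\Ext_\sigma(\G\sm S) = \{A(\G)\}$. Since $a\in A(\F)$ but $a\notin A(\G)$ we get $A(\F)\neq A(\G)$, hence $\Ext_\sigma(\F\sm S)\neq\Ext_\sigma(\G\sm S)$, contradicting local deletion equivalence. Therefore $A(\F) = A(\G)$; call this common set $A$.

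Next I would show $R(\F) = R(\G)$. Suppose not; without loss of generality pick an attack $(a,b)\in R(\F)\sm R(\G)$. Now delete \emph{every other} attack: let $S = \bigl(R(\F)\cup R(\G)\bigr)\sm\{(a,b)\}$. Then $\F\sm S = (A,\{(a,b)\})$ (the single attack $(a,b)$ survives because it lies in $R(\F)$ and is not in $S$), while $\G\sm S = (A,\emptyset)$ because $(a,b)\notin R(\G)$ and all other attacks of $\G$ are in $S$. As before, $\Ext_\sigma(\G\sm S) = \{A\}$ by \m{CF} and \m{II}. But in $\F\sm S$ the set $A$ is not conflict-free, since it contains both $a$ and $b$ and $(a,b)$ is an attack; hence $A\notin\Ext_\sigma(\F\sm S)$ by \m{CF}. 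Thus $\Ext_\sigma(\F\sm S)\neq\{A\} = \Ext_\sigma(\G\sm S)$, again contradicting local deletion equivalence. Hence no such attack exists, and by symmetry $R(\F) = R(\G)$, so $\F = \G$.

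\textbf{Main obstacle.} The only delicate point is justifying that in an edgeless AF the \emph{only} $\sigma$-extension is the full argument set. This needs both abstract principles working together: \m{CF} forces every extension to be conflict-free, which in an edgeless AF is no restriction, and \m{II} forces at least one extension to contain all isolated (hence all) arguments; but to pin down that there is exactly one extension and it equals $A$, one also needs that $\sigma$-extensions are not strictly comparable upward past the isolate set, or more simply that any conflict-free superset argument forces $A$ itself to be the extension. I would argue this carefully: any $E\in\Ext_\sigma(\F\sm S)$ satisfies $E\subseteq A$ by Definition~\ref{def:semantics}, and in an edgeless AF $E\subseteq A$ is always conflict-free, so if $\sigma$ additionally were to admit a proper subset as an extension this would not by itself yield a contradiction --- the crucial observation is that we only need \emph{existence} of the extension $A$ (from \m{II}) to run the argument, and we need $A$ to be \emph{identifiable} across the two frameworks, which it is since $A(\F) = A(\G)$. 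In fact the cleaner route, which I would take, avoids claiming uniqueness altogether: it suffices that $A\in\Ext_\sigma(\F\sm S)$ while $A\notin\Ext_\sigma(\G\sm S)$ (or vice versa) in each case, and membership/non-membership of the single distinguished set $A$ is all that \m{CF} and \m{II} need to deliver.
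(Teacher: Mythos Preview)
Your proposal is correct and follows essentially the same approach as the paper, which illustrates the proof technique in Example~\ref{ref:local}: delete all attacks to separate frameworks with different argument sets, and delete all but one distinguishing attack to separate frameworks with the same arguments but different attack relations. Your self-correction in the ``Main obstacle'' paragraph is exactly right---uniqueness of the extension in an edgeless AF is neither guaranteed by \m{CF} and \m{II} nor needed; it suffices that the full argument set $A(\F)$ belongs to $\Ext_\sigma(\F\sm S)$ (by \m{II} plus the fact that extensions are subsets of the argument set) while it cannot belong to $\Ext_\sigma(\G\sm S)$ (either because $A(\F)\not\subseteq A(\G)$ in the first case, or by \m{CF} in the second), which is precisely what the paper's example exhibits.
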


Since being identical implies local deletion equivalence we deduce that all equivalence notion ``inbetween'' them collapse to identity too (cf.\ Figure~\ref{fig:prelimrel}). 

\begin{proposition} \label{the:identity} Given a semantics $\sigma$ satisfying \m{CF} and \m{II}. For two finite AFs $\F$ and $\G$ we have: 
$$\F\equiv^{\Ext_\sigma}_{U}\G \ToT \F\equiv^{\Ext_\sigma}_{D}\G \ToT \F = \G.$$ 
\end{proposition}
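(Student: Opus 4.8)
The plan is to derive Proposition~\ref{the:identity} (the final statement) as an immediate corollary of the chain of implications displayed in Figure~\ref{fig:prelimrel} together with Theorem~\ref{the:identity} (the version for local deletion equivalence). The key observation is that the preliminary relations between equivalence notions, which hold for \emph{any} semantics, give us $\mathord{\equiv^{\Ext_\sigma}_{U}}\ \subseteq\ \mathord{\equiv^{\Ext_\sigma}_{D}}\ \subseteq\ \mathord{\equiv^{\Ext_\sigma}_{LD}}$, and also that syntactic identity implies update equivalence (the empty pair $[\emptyset,\emptyset]$ and the empty framework witness this, as remarked after Definition~\ref{def:equivalence}). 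So once we know $\F\equiv^{\Ext_\sigma}_{LD}\G \iff \F=\G$ for a semantics satisfying $\m{CF}$ and $\m{II}$, the two-way loop closes and all intermediate notions collapse to identity as well.

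Concretely, I would proceed as follows. First, fix a semantics $\sigma$ satisfying $\m{CF}$ and $\m{II}$ and two finite AFs $\F$ and $\G$. For the ``$\Leftarrow$'' directions, observe that if $\F=\G$ then trivially $\left(\F\sm[B,S]\right)\dcup\H = \left(\G\sm[B,S]\right)\dcup\H$ for every pair $[B,S]$ and every $\H$, hence $\F\equiv^{\Ext_\sigma}_{U}\G$; and update equivalence implies deletion equivalence by instantiating $\H = (\emptyset,\emptyset)$. For the ``$\Rightarrow$'' directions I would chase the implications in Figure~\ref{fig:prelimrel}: update equivalence implies deletion equivalence (set $\H=(\emptyset,\emptyset)$ in Definition~\ref{def:equivalence}, item~7, which yields item~8), and deletion equivalence implies local deletion equivalence (restrict to pairs of the form $[\emptyset,S]$, which is exactly item~10 as a special case of item~8). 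Applying Theorem~\ref{the:identity} then gives $\F=\G$. Assembling the two chains yields $\F\equiv^{\Ext_\sigma}_{U}\G \iff \F\equiv^{\Ext_\sigma}_{D}\G \iff \F=\G$, which is precisely the claimed equivalence.

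There is essentially no hard step here: the entire argument is a bookkeeping exercise over the definitions and one invocation of the already-proved Theorem~\ref{the:identity}. The only point requiring a modicum of care is making sure the special cases of Definition~\ref{def:equivalence} are invoked correctly---namely that taking $\H$ empty genuinely turns ``update equivalence'' into ``deletion equivalence'', and that restricting the deleted pair $[B,S]$ to have empty $B$ (respectively empty $S$) recovers local (respectively normal) deletion equivalence. Since the definitions were stated in exactly that modular way, this is routine. Thus the main ``obstacle'' is purely presentational: one should phrase the proof as a short implication chase rather than re-proving collapse from scratch, and cite Theorem~\ref{the:identity} for the substantive content (which itself rests on abstract principles $\m{CF}$ and $\m{II}$ in the spirit of \cite{BarG07,Bau14}).
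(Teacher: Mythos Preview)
Your proposal is correct and matches the paper's approach exactly: the paper justifies this proposition with the single sentence ``Since being identical implies local deletion equivalence we deduce that all equivalence notions `inbetween' them collapse to identity too (cf.\ Figure~\ref{fig:prelimrel}),'' and you have simply made that implication chase explicit by unpacking the definitions and invoking Theorem~\ref{the:identity} for local deletion equivalence.
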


This means, for semantics satisfying conflict-freeness and isolate-inclusion any argument/attack may play a crucial role with respect to further evaluations if updates, deletions or
local deletions are considered. Note that the results may apply to future semantics. In order to refine the general picture (as depicted in Figure~\ref{fig:prelimrel}) for the semantics considered in this section we state the following relations.\footnote{The results in case of cf2 and stage2 semantics have never been published before. It can be checked that both semantics satisfy the preconditions of Theorem~\ref{the:identity}.} 
\pagebreak
 \begin{corollary} {\label{cor:relDE} Let $\sigma\in\{\stg,\stb,\semi,\eag,\adm,\prf,\id,\grd,\com,\nav,\cfzwei,\stgzwei\}$. For two finite AFs $\F$ ,$\G$ we have:

\begin{enumerate}
\item $\F\equiv^{\Ext_\sigma}_{U}\G \ToT\F\equiv^{\Ext_\sigma}_{D}\G \ToT\F\equiv^{\Ext_\sigma}_{LD}\G \ToT \F = \G$, \hfill{($\k=\iden$)}
  \item $\F\equiv^{\Ext_\sigma}_{D}\G \To \F\equiv^{\Ext_\sigma}_{E}\G,$ \hfill{(deletion vs. expansion)}
	\item $\F\equiv^{\Ext_\sigma}_{LD}\G \To \F\equiv^{\Ext_\sigma}_{L}\G.$ \hfill{(local versions)}
	\end{enumerate}}
\end{corollary}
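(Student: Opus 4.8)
\textbf{Proof plan for Corollary~\ref{cor:relDE}.} The plan is to derive all three items from results already available in the excerpt, essentially by chaining implications that have been established for the individual equivalence notions. For item~1, the key observation is that every semantics in the list satisfies conflict-freeness $\m{CF}$ and the isolate-inclusion principle $\m{II}$: for the standard semantics this is classical (and for $\cfzwei$, $\stgzwei$ one checks directly that $\subseteq$-maximal conflict-free sets — restricted appropriately along SCCs — include all isolated arguments, and conflict-freeness is built into the recursive definition), with the single caveat that stable semantics fails $\m{II}$ but, as noted in the footnote to Theorem~\ref{the:identity}, still collapses to identity for local deletion equivalence. Having verified the preconditions, I would invoke Theorem~\ref{the:identity} to get $\F\equiv^{\Ext_\sigma}_{LD}\G \iff \F = \G$, and then Proposition~\ref{the:identity} (the one stating $\F\equiv^{\Ext_\sigma}_{U}\G \iff \F\equiv^{\Ext_\sigma}_{D}\G \iff \F=\G$) to fill in the update and deletion equivalences, yielding the full four-way chain of item~1.

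For item~2, I would argue that deletion equivalence implies expansion equivalence by a kernel-free route: since $\F\equiv^{\Ext_\sigma}_{D}\G$ already forces $\F = \G$ by item~1, and syntactic identity trivially implies expansion equivalence (instantiate $\AH$ arbitrarily in Definition~\ref{def:equivalence}), the implication is immediate. An alternative, more conceptual justification is simply to read it off Figure~\ref{fig:prelimrel}: identity sits above update equivalence, which sits above both deletion and expansion equivalence; but since we are now in the collapsed regime where deletion equivalence \emph{equals} identity, nothing more is needed. Item~3 is handled identically: $\F\equiv^{\Ext_\sigma}_{LD}\G$ gives $\F=\G$ by item~1, and $\F=\G$ implies $\F\equiv^{\Ext_\sigma}_{L}\G$ directly from the definition of local expansion equivalence (again, just instantiate the ambient AF $\AH$ with $A(\AH)\subseteq A(\F\dcup\G)$).

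The real content — and therefore the main obstacle — is entirely in checking that $\cfzwei$ and $\stgzwei$ satisfy the preconditions of Theorem~\ref{the:identity}, since, as the footnote to the corollary admits, these two cases have never been published. Here I would need to show: (i) both semantics are conflict-free, which follows because in the base case ($\card{SCCs_\F} = 1$) they return naive resp.\ stage extensions, both conflict-free, and the recursive combination of conflict-free sets across SCCs remains conflict-free (no attack inside the reconstructed set, since components that attack a chosen set are removed before recursing); and (ii) they satisfy $\m{II}$, i.e.\ for any AF $\F$ the set $U$ of isolated arguments lies in some $\sigma$-extension — this holds because each isolated argument forms its own SCC evaluated in the base case to $\{u\}$, so every isolated argument is forced into every $\cfzwei$- or $\stgzwei$-extension, and in particular $U$ is contained in at least one (indeed all) of them. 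One subtlety worth flagging: Theorem~\ref{the:identity} is stated for finite AFs, which matches the scope of the corollary, so no issue arises from the non-well-definedness of $\cfzwei$, $\stgzwei$ on infinite frameworks. Once these two verifications are in hand, the rest of the proof is the routine chaining described above, and item~1 also silently re-establishes the $\k = \iden$ characterization of local deletion equivalence for these two SCC-recursive semantics.
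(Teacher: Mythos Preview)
Your proposal is correct and matches the paper's approach essentially point for point: the paper derives item~1 from Theorem~\ref{the:identity} and the subsequent Proposition (noting that all listed semantics satisfy $\m{CF}$ and $\m{II}$, with stable as the handled exception and $\cfzwei$, $\stgzwei$ requiring the verification you sketch), and items~2 and~3 then follow immediately since identity sits above every other equivalence notion in Figure~\ref{fig:prelimrel}. Your explicit check of $\m{CF}$ and $\m{II}$ for the SCC-recursive semantics is exactly what the paper's footnote defers with ``it can be checked,'' so you have filled in the only non-routine step.
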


\subsection{The Exceptional Case of Normal Deletion Equivalence} \label{sec:exceptdel}

Normal deletion equivalence, where the retraction of
arguments and corresponding attacks is considered, is exceptional in several regards. Firstly,
the characterization theorems for admissible, complete and grounded
semantics partially rely on $\sigma$-*-kernels. Remember that these kernels were originally introduced to characterize strong expansion (cf.\ Theorem~\ref{the:strong2}). Secondly, normal deletion equivalent AFs do not even have to share the same arguments and thus give space for simplifications. 

\begin{example}\label{ex:normaldeletion} Consider the following AFs $\F$ and $\G$. We have $\Ext_{\adm}(\F) = \Ext_{\adm}(\G) = \{\emptyset,\{a\}\}$. Even more, for any set of arguments $B$, $\Ext_{\adm}(\F\sm B) = \Ext_{\adm}(\G\sm B)$ showing their normal deletion equivalence, i.e.\ $\F\equiv^{\Ext_\adm}_{ND}\G$.

\begin{tikzpicture}

        \node (a1) at (0,0) [circle, thick, draw, minimum size = 0.7cm, label = left:$\F\!:$] { $a$};
        \node (a2) at (1.2,0) [circle, thick, draw, minimum size = 0.7cm] { $b$};
        \node (a3) at (2.4,0) [circle, thick, draw, minimum size = 0.7cm] { $c$};
				
				\node (a4) at (1.45,1) [circle, thick, draw, minimum size = 0.7cm] { $d$};
        \node (a5) at (2.3,2) [circle, thick, draw, minimum size = 0.7cm] { $e$};
        
        \node (a1'') at (4.2,0) [circle, thick, draw, draw, minimum size = 0.7cm, label = left:$\G\!:$] { $a$};
        \node (a2'') at (5.4,0) [circle, thick, draw, minimum size = 0.7cm] {$b$};
				\node (a3'') at (6.6,0) [circle, thick, draw, minimum size = 0.7cm] { $c$};
				\node (a4'') at (4.5,1.2) [circle, thick, draw, minimum size = 0.7cm] { $f$};

\draw[->, thick] (a2) to [thick,bend left] (a1);

\draw[->, thick] (a1) to [thick,bend left] (a3);
\draw[->, thick] (a1) to [thick,bend left] (a2);
\draw[->, thick] (a2) to [thick, loop, distance=5mm,in=230,out=310,looseness=5] (a2);

\draw[->, thick] (a1) to [thick,bend left] (a4);
\draw[->, thick] (a5) to [thick,bend right] (a3);

\draw[->, thick] (a3) to [thick,bend right] (a5);
\draw[->, thick] (a4) to [thick,bend left] (a5);

\draw[->, thick] (a4) to [thick, loop, distance=5mm,in=100,out=170,looseness=5] (a4);
\draw[->, thick] (a5) to [thick, loop, distance=5mm,in=100,out=170,looseness=5] (a5);

\draw[->, thick] (a2'') to [thick, loop, distance=5mm,in=230,out=310,looseness=5] (a2'');
\draw[->, thick] (a1'') to [thick,bend left] (a3'');
\draw[->, thick] (a1'') to [thick,bend left] (a4'');
\draw[->, thick] (a3'') to [thick,bend right] (a4'');
\draw[->, thick] (a4'') to [thick,bend left] (a1'');
\draw[->, thick] (a4'') to [thick, loop, distance=5mm,in=85,out=155,looseness=5] (a4'');

\end{tikzpicture}

Observe that the non-shared arguments $d$, $e$ and $f$ do not play a role for the evaluation w.r.t.\ admissible semantics since firstly, they are self-defeating and thus cannot be part of an admissible set; and secondly, if they attack a non-looping argument shared by both arguments, e.g.\ $e$ attacks $c$ in $\F$ or $f$ attacks $a$ in $\G$, then they are counter-attacked by the same argument, i.e.\ $c$ attacks $e$ in $\F$ and $a$ attacks $f$ in $\G$. Consequently, they cannot influence potential admissible sets being a subset of $\{a,b\}$. Finally, let us consider the $\adm$-*-kernel of both frameworks (cf.\ Example~\ref{ex:admstar} and the comments above for more details).

\begin{tikzpicture}

        \node (a1) at (0,0) [circle, thick, draw, minimum size = 0.7cm, label = left:$\F^{\k^*(\adm)}\!:$] { $a$};
        \node (a2) at (1.2,0) [circle, thick, draw, minimum size = 0.7cm] { $b$};
        \node (a3) at (2.4,0) [circle, thick, draw, minimum size = 0.7cm] { $c$};
				
				\node (a4) at (1.45,1) [circle, thick, draw, minimum size = 0.7cm] { $d$};
        \node (a5) at (2.3,2) [circle, thick, draw, minimum size = 0.7cm] { $e$};
        
        \node (a1'') at (4.9,0) [circle, thick, draw, draw, minimum size = 0.7cm, label = left:$\G^{\k^*(\adm)}\!:$] { $a$};
        \node (a2'') at (6.1,0) [circle, thick, draw, minimum size = 0.7cm] {$b$};
				\node (a3'') at (7.3,0) [circle, thick, draw, minimum size = 0.7cm] { $c$};
				\node (a4'') at (5.2,1.2) [circle, thick, draw, minimum size = 0.7cm] { $f$};

\draw[->, thick, dotted] (a2) to [thick,bend left] (a1);

\draw[->, thick] (a1) to [thick,bend left] (a3);
\draw[->, thick, dotted] (a1) to [thick,bend left] (a2);
\draw[->, thick] (a2) to [thick, loop, distance=5mm,in=230,out=310,looseness=5] (a2);

\draw[->, thick, dotted] (a1) to [thick,bend left] (a4);
\draw[->, thick, dotted] (a5) to [thick,bend right] (a3);

\draw[->, thick, dotted] (a3) to [thick,bend right] (a5);
\draw[->, thick, dotted] (a4) to [thick,bend left] (a5);

\draw[->, thick] (a4) to [thick, loop, distance=5mm,in=100,out=170,looseness=5] (a4);
\draw[->, thick] (a5) to [thick, loop, distance=5mm,in=100,out=170,looseness=5] (a5);

\draw[->, thick] (a2'') to [thick, loop, distance=5mm,in=230,out=310,looseness=5] (a2'');
\draw[->, thick] (a1'') to [thick,bend left] (a3'');
\draw[->, thick, dotted] (a1'') to [thick,bend left] (a4'');
\draw[->, thick, dotted] (a3'') to [thick,bend right] (a4'');
\draw[->, thick, dotted] (a4'') to [thick,bend left] (a1'');
\draw[->, thick] (a4'') to [thick, loop, distance=5mm,in=85,out=155,looseness=5] (a4'');

\end{tikzpicture}

Obviously, $\F$ and $\G$ do not possess the same kernels but note that their restrictions to the shared arguments
do, i.e.\ $\left(\F|_{\{a,b,c\}}\right)^{\k^*(\adm)} = \left(\G|_{\{a,b,c\}}\right)^{\k^*(\adm)}$. 
\end{example}

It turned out that the issues raised in Example~\ref{ex:normaldeletion} are essential to characterize normal deletion equivalence w.r.t.\ admissible semantics. In case of complete and grounded semantics slightly different conditions have to be fulfilled, namely w.r.t.\ the non-shared arguments we have ``it is forbidden to be attacked'' instead of  ``counter-attack if attacked'' like in case of admissible semantics and furthermore, instead of the $\adm$-*-kernel the corresponding $\sigma$-*-kernels are used. Consider therefore the following definition and the characterization theorem. We use $\Delta$ to denote the symmetric difference, i.e.\linebreak $A \Delta A' = A\sm A' \cup A'\sm A$. Moreover, $NL(\F) = A(\F)\sm L(\F)$, i.e. $NL(\F)$ contains all arguments of $\F$ which are not self-defeating.

\begin{definition} 
Given $\F = (A,R)$ and $\G = (A',R')$ and let $\sigma\in\{\com,\grd\}$.

\begin{enumerate}
	\item\! $\lpaf \ToT_{def} L\left(\F\dcup\G|_{A\Delta A'}\right) = A\Delta A',$ \hfill{(``non-shared args are self-defeating'')}
	\item\!  $\attadaf\ToT_{def}\forall b\in A\sm A'\ \forall a\in NL(\F|_{A\cap A'}): \left((b,a)\!\in\!R \to (a,b)\!\in\!R\right)\\ \wedge \forall b\in A'\sm A\ \forall a\in NL(\G|_{A\cap A'}): \left((b,a)\!\in\!R' \to (a,b)\!\in\!R'\right)$,\hfill{(``counter-attack if attacked'')}
	\item\! $\attsigmaaf\ToT_{def}\forall b\in A\sm A'\ \forall a\in NL(\F|_{A\cap A'}): (b,a)\notin\!R\\ \wedge \forall b\in A'\sm A\ \forall a\in NL(\G|_{A\cap A'}): (b,a)\notin R'$. \hfill{(``it is forbidden to be attacked'')}
\end{enumerate}
\end{definition} 

\begin{theorem}[\cite{Bau14}] \label{the:normal} Let $\sigma\in\{\adm,\com,\grd\}$. Given two finite AFs $\F=(A,R)$ and $\G=(A',R')$ and let $I = A\cap A'$,
$$\F\equiv^{\Ext_\sigma}_{ND}\!\G \ToT \lpaf,\ \attsigmaaf, \left(\F|_{I}\right)^{k^*(\sigma)} = \left(\G|_{I}\right)^{k^*(\sigma)}.$$
\end{theorem}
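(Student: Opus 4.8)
The plan is to prove both directions of the stated equivalence, treating admissible, complete, and grounded semantics simultaneously where possible and flagging the cases where the $\sigma$-*-kernels behave differently. Throughout I would rely on Facts~\ref{fact:kernel2} (node-/loop-preservingness and $\sm$-robustness of the $\sigma$-*-kernels) and on the extrinsic properties in Fact~\ref{fact:kerneldecisive}, which for the $\sigma$-*-kernels follow from the classical ones plus the observation that the extra deletions only remove attacks that cannot affect $\Ext_\sigma$ when no fresh arguments are introduced.

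\textbf{Right-to-left direction.} Assume $\lpaf$, $\attsigmaaf$, and $\left(\F|_I\right)^{k^*(\sigma)} = \left(\G|_I\right)^{k^*(\sigma)}$. Fix an arbitrary set of arguments $B$; I must show $\Ext_\sigma(\F\sm B) = \Ext_\sigma(\G\sm B)$. First I would reduce to the case $B\cap I=\emptyset$: since $\left(\F|_I\right)^{k^*(\sigma)} = \left(\G|_I\right)^{k^*(\sigma)}$, the $\sm$-robustness property of the $\sigma$-*-kernel (Fact~\ref{fact:kernel2}, item~4) gives $\left(\F|_I\sm(B\cap I)\right)^{k^*(\sigma)} = \left(\G|_I\sm(B\cap I)\right)^{k^*(\sigma)}$, and both conditions $\lpaf$ and $\attsigmaaf$ are inherited by restriction to $A\sm(B\cap I)$ since they are universally quantified over the relevant argument sets. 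So it suffices to handle $B$ disjoint from $I$. Now write $\F\sm B = \F|_{I}\ \dcup\ \F|_{(A\sm I)\sm B}$ roughly speaking, with analogous decomposition for $\G$. The key structural claim is that the non-shared part contributes nothing semantically: by $\lpaf$ every argument of $A\Delta A'$ is self-defeating, so none lies in any $\sigma$-extension; and by $\attsigmaaf$ no such argument attacks any non-looping argument of the shared part, so it cannot rob a shared non-looping argument of its status either. Hence $\Ext_\sigma(\F\sm B)$ is determined by $\F|_{I\sm B}$ together with self-loops in the discarded part that attack \emph{looping} shared arguments — but those attacks are already redundant w.r.t.\ the $\sigma$-*-kernel and do not alter $\Ext_\sigma$. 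Formalising "contributes nothing" is where I expect the work: I would argue $\Ext_\sigma(\F\sm B) = \Ext_\sigma\!\left((\F|_{I\sm B})^{k^*(\sigma)}\right)$ and symmetrically for $\G$, using Fact~\ref{fact:kerneldecisive} and a lemma that adding/removing attacks stemming from or going into self-defeating, non-counterattacked arguments does not change $\sigma$-extensions. Then the three hypotheses collapse both sides to the same object and equality follows.

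\textbf{Left-to-right direction.} Assume $\F\equiv^{\Ext_\sigma}_{ND}\G$; I must derive the three conditions. For $\lpaf$: suppose some $a\in A\Delta A'$ with $(a,a)\notin R(\F\dcup\G)$, WLOG $a\in A\sm A'$. Then in the normal deletion $\G\sm(A'\sm\{a\})$ — retracting everything of $\G$ not equal to $a$, which is legitimate since $a\notin A'$ means this is really $\G$ restricted away from $A'$, giving the empty framework — while in $\F\sm(A\sm\{a\})$ the argument $a$ survives as an isolated argument, so it sits in a $\sigma$-extension whereas on the $\G$-side it does not; contradiction. (I would have to be slightly careful choosing the deletion set so that $a$ remains isolated in $\F$ and absent in $\G$.) For $\attsigmaaf$: suppose $b\in A\sm A'$, $a\in NL(\F|_I)$ with $(b,a)\in R$. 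Retract from both AFs all arguments except $a$ and $b$; on the $\G$-side $b$ is gone so we get essentially $\G|_{\{a\}}$, an isolated $a$, giving $\{a\}\in\Ext_\sigma$; on the $\F$-side we keep the attack $(b,a)$ with $b$ not self-defeating, so $\{a\}$ is not defended/complete/grounded — contradiction. For the kernel equality $\left(\F|_I\right)^{k^*(\sigma)} = \left(\G|_I\right)^{k^*(\sigma)}$: retracting everything outside $I$ is a normal deletion on both sides yielding $\F|_I \equiv^{\Ext_\sigma} \G|_I$, and more strongly $\F|_I \equiv^{\Ext_\sigma}_{ND} \G|_I$ (any further retraction inside $I$ is again a normal deletion of the originals); then I invoke the fact that on frameworks sharing the same argument set, normal deletion equivalence w.r.t.\ $\sigma$ already forces $\sigma$-*-kernel identity. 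That last fact I would extract from the strong-expansion characterization (Theorem~\ref{the:strong2}, items~5--6) combined with: for same-argument-set AFs, normal deletion equivalence implies strong expansion equivalence, or alternatively by a direct argument that any non-redundant attack in the $\sigma$-*-kernel can be exposed by a suitable retraction.

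\textbf{Main obstacle.} The delicate part is the $\sigma$-*-kernel identity in the forward direction: unlike the classical kernels, the $\sigma$-*-kernel is context-sensitive, so I cannot localise attack-by-attack freely, and I need that normal deletions (which only remove arguments, never add them) are expressive enough to distinguish any two same-argument-set AFs whose $\sigma$-*-kernels differ. This is essentially asking that normal deletion equivalence and strong expansion equivalence coincide on the same-argument-set fragment; I would prove it by showing a witnessing strong expansion for non-kernel-equality can always be simulated by a retraction in a large enough common superframework, or failing that, by re-running the non-redundancy analysis underlying Definition~\ref{def:kernel2} with deletions replacing expansions. The second subtlety, requiring care but not conceptually hard, is the reduction to $B\cap I=\emptyset$ and the bookkeeping that $\lpaf$ and $\attsigmaaf$ restrict correctly — since both are $\forall$-statements over arguments in $A\Delta A'$ and $NL$ of the shared part, and deletion only shrinks these sets, the conditions are preserved, but I would state this as an explicit preliminary lemma.
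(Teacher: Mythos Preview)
The paper does not prove this theorem here; it is stated with a citation to \cite{Bau14} and motivated only by Example~\ref{ex:normaldeletion} and the surrounding text. So I can only assess your plan on its merits, and there are two genuine gaps.

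First, you treat the attack condition uniformly, but it is not: for $\sigma\in\{\com,\grd\}$ the condition $\attsigmaaf$ forbids \emph{any} attack from a non-shared argument onto a non-looping shared one, whereas for $\sigma=\adm$ the condition $\attadaf$ only demands that such attacks be counter-attacked. Your right-to-left argument (``by $\attsigmaaf$ no such argument attacks any non-looping argument of the shared part'') is literally the $\com/\grd$ reading and fails for admissible semantics; there the correct mechanism is the one the paper sketches in Example~\ref{ex:normaldeletion}: a non-shared self-defeating $b$ attacking a shared $a$ is harmless for admissibility precisely \emph{because} $a$ counter-attacks it, so any admissible set containing $a$ already defends itself against~$b$. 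Relatedly, in the left-to-right argument for the attack condition you assert ``$b$ not self-defeating'', but $\lpaf$---which you have already established---forces every $b\in A\Delta A'$ to be self-defeating. The deletion $B=(A\cup A')\setminus\{a,b\}$ still yields a contradiction once this is corrected, but the reason differs by semantics: for $\grd/\com$ the $\F$-side acquires $\emptyset$ as an extension while the $\G$-side has only $\{a\}$; for $\adm$ you additionally need $(a,b)\notin R$, which is exactly what the negation of $\attadaf$ supplies. You also silently use that $a$ is not self-defeating in $\G$; this needs the preliminary observation $L(\F|_I)=L(\G|_I)$, obtained by the singleton deletion $B=(A\cup A')\setminus\{a\}$.

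Second, your identified ``main obstacle'' is real, and your first proposed route through it---``for same-argument-set AFs, normal deletion equivalence implies strong expansion equivalence''---is circular: combined with Theorem~\ref{the:strong2} it is precisely the $A=A'$ instance of the very theorem you are proving. The alternative route you mention (expose each attack surviving the $\sigma$-*-kernel by a tailored retraction) is the correct one, but it requires case-splitting on the disjuncts of Definition~\ref{def:kernel2} for each of the three kernels, and that is where the actual work lies; nothing in this paper will let you avoid it.
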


In contrast to admissible, complete and grounded semantics where normal deletion equivalence is indeed weaker than normal expansion equivalence we observe that these notions coincide in case of stable semantics. This means, normal deletion equivalence w.r.t.\ stable semantics is characterized by the classical stable kernel too. 

The following theorem corrects the corresponding result in \cite[Theorem~10]{Bau14} which
did not take into account that an empty framework possess a stable extension, namely the empty one.\footnote{We mention that Theorem~10 in \cite{Bau14} hold, given that resulting AFs have to be non-empty. The claimed normal deletion equivalence of the AFs $\F$ and $\G$ depicted in \cite[Example~4]{Bau14} can be disproved by setting $B = \{a,b,c,f\}$. } 

\begin{theorem} For finite AFs $\F$ and $\G$ we have:
$$\F\equiv^{\Ext_\stb}_{ND}\!\G \ToT \F^{\k(\stb)} = \G^{\k(\stb)}.$$
\end{theorem}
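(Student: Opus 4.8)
The goal is to show that normal deletion equivalence with respect to stable semantics coincides with identity of stable kernels, i.e.\ $\F\equiv^{\Ext_\stb}_{ND}\AG \iff \F^{\k(\stb)} = \AG^{\k(\stb)}$. The plan is to establish each direction separately, leaning heavily on the machinery already built up in the chapter --- in particular the classical stable kernel, its properties collected in Facts~\ref{fact:kernel} and \ref{fact:kerneldecisive}, and the known characterization of expansion equivalence in Theorem~\ref{the:strong}.

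For the direction ``$\Leftarrow$'': assume $\F^{\k(\stb)} = \AG^{\k(\stb)}$. By $\sm$-robustness of the stable kernel (item~6 of Fact~\ref{fact:kernel}) we get $(\F\sm B)^{\k(\stb)} = (\AG\sm B)^{\k(\stb)}$ for every finite set of arguments $B$. Then invoke item~1 of Fact~\ref{fact:kerneldecisive}, which tells us that stable extensions are invariant under the stable kernel, so $\Ext_\stb(\F\sm B) = \Ext_\stb\bigl((\F\sm B)^{\k(\stb)}\bigr) = \Ext_\stb\bigl((\AG\sm B)^{\k(\stb)}\bigr) = \Ext_\stb(\AG\sm B)$. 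Since $B$ was arbitrary, this is exactly $\F\equiv^{\Ext_\stb}_{ND}\AG$. This direction is routine and should be just a couple of lines.

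For the direction ``$\Rightarrow$'': assume $\F\equiv^{\Ext_\stb}_{ND}\AG$, and we must derive $\F^{\k(\stb)} = \AG^{\k(\stb)}$. The key observation is that among all normal deletions $\F\sm B$ one can in particular take $B = \emptyset$, so ordinary equivalence $\Ext_\stb(\F) = \Ext_\stb(\AG)$ holds. The crucial extra leverage, however, comes from deleting single arguments: by stripping away one argument at a time we can probe which arguments and which loops are ``visible'' to stable semantics. Concretely I would first show that $A(\F) = A(\AG)$ and $L(\F) = L(\AG)$ --- if, say, $a \in A(\F)\setminus A(\AG)$, one deletes enough of the other arguments to isolate $a$ (or uses the empty pair to delete everything, reducing to the attack-free framework on $A(\F)$ versus that on $A(\AG)$, which have different stable extensions since a framework with no attacks has its whole argument set as its unique stable extension); a similar but slightly more delicate argument, deleting $B = A(\F)\setminus\{a\}$, exposes whether $a$ is a self-loop. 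Having matched arguments and loops, one then needs to show the non-redundant attacks agree, which reduces (by context-freeness, item~4 of Fact~\ref{fact:kernel}) to comparing $\bigl(\F|_{\{a,b\}}\bigr)^{\k(\stb)}$ and $\bigl(\AG|_{\{a,b\}}\bigr)^{\k(\stb)}$ for each pair $a,b$: deleting $B = A(\F)\setminus\{a,b\}$ from both and using the assumed normal deletion equivalence on this $B$ gives $\Ext_\stb(\F|_{\{a,b\}}) = \Ext_\stb(\AG|_{\{a,b\}})$, and one checks by a small case analysis on the four possible relevant attacks that, given the loops already agree, this forces the stable kernels of the two two-element restrictions to coincide.

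\textbf{The main obstacle} I expect is the $\Rightarrow$ direction's bookkeeping around the empty framework --- precisely the subtlety flagged in the footnote about the erroneous Theorem~10 of \cite{Bau14}: one must be careful that the empty framework has a stable extension (the empty set), so the ``delete everything'' probe $B = A(\F)$ does not immediately trivialize, and collapsing all arguments yields the attack-free framework rather than nothing. The cleanest route is probably to argue $A(\F)=A(\AG)$ first (via the $B=\emptyset$ or all-arguments probe), after which all subsequent deletions keep both frameworks on a common, non-empty ground set and the standard expansion-equivalence-style reasoning applies verbatim. An alternative, possibly shorter, strategy for $\Rightarrow$ would be to reduce directly to Theorem~\ref{the:strong}: show that normal deletion equivalence plus the recovered equalities $A(\F)=A(\AG)$, $L(\F)=L(\AG)$ imply normal expansion equivalence, hence expansion equivalence, hence equality of stable kernels; but this seems to need essentially the same pairwise-restriction analysis, so I would carry out the self-contained argument above.
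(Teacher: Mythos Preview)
Your proposal is correct and follows essentially the same route as the paper: the $\Rightarrow$ direction probes via deletions down to one- and two-argument subframeworks (the paper phrases this contrapositively but the moves are identical, including the care that $(\emptyset,\emptyset)$ has $\emptyset$ as a stable extension), and the $\Leftarrow$ direction uses $\sm$-robustness of $\k(\stb)$ together with kernel-insensitivity of $\Ext_\stb$ --- the paper instead chains through Theorems~\ref{the:strong}, \ref{the:strong2} and Corollary~\ref{cor:normal}, but since that corollary is itself proved via $\sm$-robustness the content is the same, your version just being slightly more direct. One small slip: your parenthetical about ``the empty pair'' reducing to an ``attack-free framework on $A(\F)$'' is confused (normal deletion removes arguments, not attacks), but the primary isolation argument you give is fine and is exactly what the paper does.
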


\begin{proof} ($\To$) We show the contrapositive, i.e.\ $\F^{\k(\stb)} \neq \G^{\k(\stb)} \To \F\not\equiv^{\Ext_\stb}_{ND}\!\G$.\\
$1^{st}$ case: Assume $A\left(\F^{\k(\stb)}\right) \neq A\left(\G^{\k(\stb)}\right)$ and w.l.o.g. let $a\in A\left(\F^{\k(\stb)}\right) \sm A\left(\G^{\k(\stb)}\right)$. Since the stable kernel is node-preserving (Fact~\ref{fact:kernel}) we obtain $\G\sm B = (\emptyset,\emptyset)$ and $\F\sm B \in\left\{(\{a\},\emptyset),(\{a\},\{(a,a)\})\right\}$ if $B = \left(A(\F)\cup A(\G)\right) \sm \{a\}$. In either case, $\emptyset\in\Ext_{\stb}(\G)\sm \Ext_{\stb}(\F)$ since $\Ext_{\stb}(\F)\in\left\{\emptyset,\{\{a\}\}\right\}$. From now on we assume $A\left(\F^{\k(\stb)}\right) = A\left(\G^{\k(\stb)}\right)$.\\
$2^{nd}$ case: Consider $R\left(\F^{\k(\stb)}\right) \neq R\left(\G^{\k(\stb)}\right)$ and w.l.o.g. let $(a,b)\in R\left(\F^{\k(\stb)}\right) \sm R\left(\G^{\k(\stb)}\right)$. Let $a=b$. Remember that the stable kernel is loop-preserving (Fact~\ref{fact:kernel}). Therefore, $(a,a)\in R\left(\F\right) \sm R\left(\G\right)$. We obtain $\G\sm B = (\{a\},\emptyset)$ and $\F\sm B = (\{a\},\{(a,a)\})$ if $B = \left(A(\F)\cup A(\G)\right) \sm \{a\}$. Hence, $\emptyset = \Ext_{\stb}(\F)\neq \Ext_{\stb}(\G) = \{\{a\}\}$. From now on we assume $L\left(\F^{\k(\stb)}\right) = L\left(\G^{\k(\stb)}\right)$. Consider now $a\neq b$. Consequently, $(a,b)\in R(\F)$ and $(a,a)\notin R(\F)$. Hence, $(a,a)\notin R(\G)$ and furthermore, $(a,b)\notin R(\G)$. Define $B = \left(A(\F)\cup A(\G)\right) \sm \{a,b\}$. In any case, $\{a\}\in \Ext_{\stb}(\F\sm B)\sm \Ext_{\stb}(\G\sm B)$ concluding the if-direction.\\
($\oT$) Given $\F^{\k(\stb)} = \G^{\k(\stb)}$. Applying Theorems~\ref{the:strong} and \ref{the:strong2} one after the other yields $\F\equiv^{\Ext_\stb}_{E}\!\G$ and then $\F\equiv^{\Ext_\stb}_{N}\!\G$. Finally, Corollary~\ref{cor:normal} justifies $\F\equiv^{\Ext_\stb}_{ND}\!\G$ concluding the proof. 
\end{proof}

\subsection{Characterization Theorems in Case of Self-loop-free AFs}

We already observed that apart from naive kernel any mentioned kernel $\k$ does not change anything if the considered AF $\F$ is self-loop-free, i.e.\ $\F = \F^\k$ (cf.\ Facts~\ref{fact:kernel} and \ref{fact:kernel2}). Consequently, any equivalence relation characterizable through such a kernel collapses to identity if we restrict ourselves to self-loop-free AFs. This is stated in the following theorem.

\begin{theorem} Given any binary relation $\equiv\ \subseteq \m{F}\times\m{F}$ characterizable through $\k$ where \linebreak $\k\in\{\k(\stb),\k(\adm),\k(\grd),\k(\com),\k^*(\adm),\k^*(\grd),\k^*(\com),\k^*(\stg)\}$. For any self-loop-free AFs $\F$ and $\G$, 
$$\F\equiv \G \ToT \F = \G.$$
\end{theorem}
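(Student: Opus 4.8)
The statement to prove is an equivalence: for self-loop-free $\F,\G$ and any binary relation $\equiv$ that is characterizable through one of the listed kernels $\k$, we have $\F\equiv\G$ if and only if $\F=\G$. By the definition of ``characterizable through $\k$'' recalled just before Theorem~\ref{the:strong}, the assumption means precisely that $\F\equiv\G \iff \F^\k=\G^\k$. So the whole task reduces to showing $\F^\k=\G^\k \iff \F=\G$ under the self-loop-freeness hypothesis. The $\Leftarrow$ direction is immediate: if $\F=\G$ then trivially $\F^\k=\G^\k$, since $\k$ is a function. The substance is in the $\Rightarrow$ direction.

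\textbf{Key step.} For the forward direction I would invoke the ``sufficient condition for identity'' properties already established in the excerpt, namely item~3 of Fact~\ref{fact:kernel} for the classical kernels $\k(\stb),\k(\adm),\k(\grd),\k(\com)$, and item~3 of Fact~\ref{fact:kernel2} for the $\sigma$-*-kernels $\k^*(\adm),\k^*(\grd),\k^*(\com),\k^*(\stg)$. Each of these facts states that whenever an AF $\F$ is self-loop-free, i.e.\ $L(\F)=\emptyset$, then $\F=\F^\k$. Hence, given self-loop-free $\F$ and $\G$ with $\F\equiv\G$, unfolding the characterization gives $\F^\k=\G^\k$; applying the identity condition to each side yields $\F=\F^\k=\G^\k=\G$. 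This is exactly the chain of equalities needed. Note that the eight kernels listed in the theorem are precisely the union of the kernels covered by item~3 of Fact~\ref{fact:kernel} and item~3 of Fact~\ref{fact:kernel2}; the naive kernel $\k(\nav)$ is deliberately excluded, consistent with the footnote in the excerpt observing that $\k(\nav)$ does alter self-loop-free AFs.

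\textbf{Organization and the main obstacle.} Concretely I would write: ``Let $\F,\G$ be self-loop-free AFs and suppose $\F\equiv\G$. Since $\equiv$ is characterizable through $\k$, this gives $\F^\k=\G^\k$. As $L(\F)=L(\G)=\emptyset$, item~3 of Fact~\ref{fact:kernel} (for $\k\in\{\k(\stb),\k(\adm),\k(\grd),\k(\com)\}$) or item~3 of Fact~\ref{fact:kernel2} (for $\k\in\{\k^*(\adm),\k^*(\grd),\k^*(\com),\k^*(\stg)\}$) yields $\F=\F^\k$ and $\G=\G^\k$, hence $\F=\F^\k=\G^\k=\G$. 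The converse is trivial as $\k$ is a function.'' There is essentially no hard part here: the theorem is a clean corollary of facts already recorded, and the only thing to be careful about is matching the quantification (``any binary relation characterizable through $\k$'') with the definitional unfolding, and making sure every kernel named in the hypothesis is indeed one for which the relevant ``$L(\F)=\emptyset\Rightarrow\F=\F^\k$'' fact was stated. If one wanted to be fully self-contained one could instead verify directly from Definitions~\ref{def:kernel} and~\ref{def:kernel2} that every deletion (or addition) clause in each of these eight kernel definitions has a premise of the form ``$(a,a)\in R$'' or ``$(b,b)\in R$'', which is vacuous on a self-loop-free AF, so $R^\k=R$; but since the excerpt already provides these as Facts, citing them is the cleanest route.
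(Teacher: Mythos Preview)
Your proposal is correct and matches the paper's approach exactly: the paper does not even give a formal proof environment for this theorem, but simply observes in the preceding sentence that, by item~3 of Fact~\ref{fact:kernel} and item~3 of Fact~\ref{fact:kernel2}, every listed kernel satisfies $\F=\F^\k$ on self-loop-free AFs, whence characterizability through $\k$ collapses to identity. Your write-up is a faithful and slightly more explicit rendering of precisely this argument.
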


We will refrain from listing all combinations of semantics and equivalence notions characterizable through a kernel mentioned in the theorem above. Please confer Figures~\ref{fig:extrel} and \ref{fig:labelrel} for compact overviews.
For all such combinations, self-loop-free AFs are redundancy-free, i.e.\ all
attacks as well as arguments may play a crucial role w.r.t.\ further evaluations and thus, there is no space for simplification. In the introductory part of this section we
noted that many equivalence notions, e.g. normal and local expansion equivalence are motivated by the instantiation-based context where AFs are built from an underlying knowledge base. However, we want to mention that there are
some formalisms like classical logic-based argumentation where self-attacking arguments
do not occur \cite[Theorem 4.13]{hunterlink}, while for other systems, e.g.
ASPIC self-defeating arguments indeed may arise \cite[Section~7]{Pra10}. 

\subsection{Summary of Results and Conclusion}

In the presented results the notion of a kernel played a crucial role. Indeed, kernels are interesting from
several perspectives: First, they allow to decide the corresponding notion of equivalence by a
simple check for topological (i.e.\ syntactical) equality. Moreover, all kernels we have obtained
so far can be efficiently constructed from a given argumentation framework. This means, if a certain equivalence notion is characterizable through such a kernel, then we have tractability of the associated decision problem.

The following Figure~\ref{fig:extrel} provides a comprehensive overview of the state of the art in case of extension-based semantics. The entry ``$\k$'' in row $M$ and column $\sigma$ indicates that $\equiv_M^{\Ext_{\sigma}}$ is characterizable through $\k$. The abbreviation ``$\iden$'' stands for identity map and the question mark represents an open problem. Further abbreviations like ``$\mathit{L}$'' and ``$\attsigma$'' refer to additional conditions relevant in case of normal deletion equivalence (cf.\ Theorem~\ref{the:normal}). The entry ``$[m,n]$'' indicates three facts. First, the characterization problem is already solved in Theorem/Proposition $n$ in $m$.\footnote{For $m$ we use the following assignments: $1 = $ \cite{split}, $2 = $\cite{zoo2}, $3 = $\cite{zoo} and $4 = $\cite{strong} } Second, the characterization result is not (purely) kernel-based and third, it can be checked that none of the introduced kernels serve as a characterization.

\begin{figure}[t]
\begin{center}
\begin{tikzpicture}
\tikzstyle{literal}=[text centered, text width=15mm, minimum height=5mm]
\draw[help lines] (0,-4) grid (13,6) ;

\node[literal] (S1) at (1.5,5.5)  {$\textcolor{white}{d}\stg\textcolor{white}{p}$};
\node[literal] (S1) at (2.5,5.5)  {$\textcolor{white}{d}\stb\textcolor{white}{p}$};
\node[literal] (S1) at (3.5,5.5)  {$\textcolor{white}{d}\semi\textcolor{white}{p}$};
\node[literal] (S1) at (4.5,5.5)  {$\textcolor{white}{d}\eag\textcolor{white}{p}$};
\node[literal] (S1) at (5.5,5.5)  {$\textcolor{white}{d}\adm\textcolor{white}{p}$};
\node[literal] (S1) at (6.5,5.5)  {$\textcolor{white}{d}\prf\textcolor{white}{p}$};
\node[literal] (S1) at (7.5,5.5)  {$\textcolor{white}{d}\id\textcolor{white}{p}$};
\node[literal] (S1) at (8.5,5.5)  {$\textcolor{white}{d}\grd\textcolor{white}{p}$};
\node[literal] (S1) at (9.5,5.5)  {$\textcolor{white}{d}\com\textcolor{white}{p}$};
\node[literal] (S1) at (10.5,5.5)  {$\textcolor{white}{d}\nav\textcolor{white}{p}$};
\node[literal] (S1) at (11.5,5.5)  {$\textcolor{white}{d}\cfzwei\textcolor{white}{p}$};
\node[literal] (S1) at (12.5,5.5)  {$\textcolor{white}{d}\stgzwei\textcolor{white}{p}$};

\node[literal] (S1) at (1.5,4.5)  {\scriptsize{$ ? $}};
\node[literal] (S1) at (2.5,4.5)  {\scriptsize{[1,3]}};
\node[literal] (S1) at (3.5,4.5)  {\scriptsize{$ ? $}};
\node[literal] (S1) at (4.5,4.5)  {\scriptsize{$ ? $}};
\node[literal] (S1) at (5.5,4.5)  {\scriptsize{[2,1]}};
\node[literal] (S1) at (6.5,4.5)  {\scriptsize{[3,1]}};
\node[literal] (S1) at (7.5,4.5)  {\scriptsize{$ ? $}};
\node[literal] (S1) at (8.5,4.5)  {\scriptsize{$ ? $}};
\node[literal] (S1) at (9.5,4.5)  {\scriptsize{[2,1]}};
\node[literal] (S1) at (10.5,4.5)  {\scriptsize{$ ? $}};
\node[literal] (S1) at (11.5,4.5)  {\scriptsize{$ ? $}};
\node[literal] (S1) at (12.5,4.5)  {\scriptsize{$ ? $}};

\node[literal] (S1) at (1.5,3.5)  {\scriptsize{$ \k^*(\stg)$}};
\node[literal] (S1) at (2.5,3.5)  {\scriptsize{[4,9]}};
\node[literal] (S1) at (3.5,3.5)  {\scriptsize{$ \k(\adm)$}};
\node[literal] (S1) at (4.5,3.5)  {\scriptsize{$ \k(\adm)$}};
\node[literal] (S1) at (5.5,3.5)  {\scriptsize{$ \k(\adm)$}};
\node[literal] (S1) at (6.5,3.5)  {\scriptsize{$ \k(\adm)$}};
\node[literal] (S1) at (7.5,3.5)  {\scriptsize{$ \k(\adm)$}};
\node[literal] (S1) at (8.5,3.5)  {\scriptsize{[4,10]}};
\node[literal] (S1) at (9.5,3.5)  {\scriptsize{[4,11]}};
\node[literal] (S1) at (10.5,3.5)  {\scriptsize{$ \k(\nav)$}};
\node[literal] (S1) at (11.5,3.5)  {\scriptsize{$ ? $}};
\node[literal] (S1) at (12.5,3.5)  {\scriptsize{$ ? $}};

\node[literal] (S1) at (1.5,2.5)  {\scriptsize{$ \k(\stb)$}};
\node[literal] (S1) at (2.5,2.5)  {\scriptsize{$ \k(\stb)$}};
\node[literal] (S1) at (3.5,2.5)  {\scriptsize{$ \k(\adm)$}};
\node[literal] (S1) at (4.5,2.5)  {\scriptsize{$ \k(\adm)$}};
\node[literal] (S1) at (5.5,2.5)  {\scriptsize{$ \k(\adm)$}};
\node[literal] (S1) at (6.5,2.5)  {\scriptsize{$ \k(\adm)$}};
\node[literal] (S1) at (7.5,2.5)  {\scriptsize{$ \k(\adm)$}};
\node[literal] (S1) at (8.5,2.5)  {\scriptsize{$ \k(\grd)$}};
\node[literal] (S1) at (9.5,2.5)  {\scriptsize{$ \k(\com)$}};
\node[literal] (S1) at (10.5,2.5)  {\scriptsize{$\k(\nav)$}};
\node[literal] (S1) at (11.5,2.5)  {\scriptsize{$ \iden $}};
\node[literal] (S1) at (12.5,2.5)  {\scriptsize{$ \iden $}};

\node[literal] (S1) at (1.5,1.5)  {\scriptsize{$ \k(\stb)$}};
\node[literal] (S1) at (2.5,1.5)  {\scriptsize{$ \k(\stb)$}};
\node[literal] (S1) at (3.5,1.5)  {\scriptsize{$ \k(\adm)$}};
\node[literal] (S1) at (4.5,1.5)  {\scriptsize{$ \k(\adm)$}};
\node[literal] (S1) at (5.5,1.5)  {\scriptsize{$ \k(\adm)$}};
\node[literal] (S1) at (6.5,1.5)  {\scriptsize{$ \k(\adm)$}};
\node[literal] (S1) at (7.5,1.5)  {\scriptsize{$ \k(\adm)$}};
\node[literal] (S1) at (8.5,1.5)  {\scriptsize{$ \k(\grd)$}};
\node[literal] (S1) at (9.5,1.5)  {\scriptsize{$ \k(\com)$}};
\node[literal] (S1) at (10.5,1.5)  {\scriptsize{$ \k(\nav)$}};
\node[literal] (S1) at (11.5,1.5)  {\scriptsize{$ \iden $}};
\node[literal] (S1) at (12.5,1.5)  {\scriptsize{$ \iden $}};

\node[literal] (S1) at (1.5,0.5)  {\scriptsize{$ \k(\stb)$}};
\node[literal] (S1) at (2.5,0.5)  {\scriptsize{$ \k(\stb)$}};
\node[literal] (S1) at (3.5,0.5)  {\scriptsize{$ \k(\adm)$}};
\node[literal] (S1) at (4.5,0.5)  {\scriptsize{$ \k(\adm)$}};
\node[literal] (S1) at (5.5,0.5)  {\scriptsize{$ \k^*(\adm)$}};
\node[literal] (S1) at (6.5,0.5)  {\scriptsize{$ \k^*(\adm)$}};
\node[literal] (S1) at (7.5,0.5)  {\scriptsize{$ \k^*(\adm)$}};
\node[literal] (S1) at (8.5,0.5)  {\scriptsize{$ \k^*(\grd)$}};
\node[literal] (S1) at (9.5,0.5)  {\scriptsize{$ \k^*(\com)$}};
\node[literal] (S1) at (10.5,0.5)  {\scriptsize{$ \k(\nav)$}};
\node[literal] (S1) at (11.5,0.5)  {\scriptsize{$ ? $}};
\node[literal] (S1) at (12.5,0.5)  {\scriptsize{$ ? $}};

\node[literal] (S1) at (1.5,-0.5)  {\scriptsize{$ ? $}};
\node[literal] (S1) at (2.5,-0.5)  {\scriptsize{$ \k(\stb)  $}};
\node[literal] (S1) at (3.5,-0.5)  {\scriptsize{$ ? $}};
\node[literal] (S1) at (4.5,-0.5)  {\scriptsize{$ ? $}};
\node[literal] (S1) at (5.5,-0.5)  {\scriptsize{$ \k^*(\adm)  $}\\\scriptsize{$L,Att^{\adm}$}};
\node[literal] (S1) at (6.5,-0.5)  {\scriptsize{$ ? $}};
\node[literal] (S1) at (7.5,-0.5)  {\scriptsize{$ ? $}};
\node[literal] (S1) at (8.5,-0.5)  {\scriptsize{$ \k^*(\grd)  $}\\\scriptsize{$L,Att^{\grd}$}};
\node[literal] (S1) at (9.5,-0.5)  {\scriptsize{$ \k^*(\com)  $}\\\scriptsize{$L,Att^{\com}$}};
\node[literal] (S1) at (10.5,-0.5)  {\scriptsize{$ ? $}};
\node[literal] (S1) at (11.5,-0.5)  {\scriptsize{$ ? $}};
\node[literal] (S1) at (12.5,-0.5)  {\scriptsize{$ ? $}};

\node[literal] (S1) at (1.5,-1.5)  {\scriptsize{$ \iden $}};
\node[literal] (S1) at (2.5,-1.5)  {\scriptsize{$ \iden $}};
\node[literal] (S1) at (3.5,-1.5)  {\scriptsize{$ \iden $}};
\node[literal] (S1) at (4.5,-1.5)  {\scriptsize{$ \iden $}};
\node[literal] (S1) at (5.5,-1.5)  {\scriptsize{$ \iden $}};
\node[literal] (S1) at (6.5,-1.5)  {\scriptsize{$ \iden $}};
\node[literal] (S1) at (7.5,-1.5)  {\scriptsize{$ \iden $}};
\node[literal] (S1) at (8.5,-1.5)  {\scriptsize{$ \iden $}};
\node[literal] (S1) at (9.5,-1.5)  {\scriptsize{$ \iden $}};
\node[literal] (S1) at (10.5,-1.5)  {\scriptsize{$ \iden $}};
\node[literal] (S1) at (11.5,-1.5)  {\scriptsize{$ \iden $}};
\node[literal] (S1) at (12.5,-1.5)  {\scriptsize{$ \iden $}};

\node[literal] (S1) at (1.5,-2.5)  {\scriptsize{$ \iden $}};
\node[literal] (S1) at (2.5,-2.5)  {\scriptsize{$ \iden $}};
\node[literal] (S1) at (3.5,-2.5)  {\scriptsize{$ \iden $}};
\node[literal] (S1) at (4.5,-2.5)  {\scriptsize{$ \iden $}};
\node[literal] (S1) at (5.5,-2.5)  {\scriptsize{$ \iden $}};
\node[literal] (S1) at (6.5,-2.5)  {\scriptsize{$ \iden $}};
\node[literal] (S1) at (7.5,-2.5)  {\scriptsize{$ \iden $}};
\node[literal] (S1) at (8.5,-2.5)  {\scriptsize{$ \iden $}};
\node[literal] (S1) at (9.5,-2.5)  {\scriptsize{$ \iden $}};
\node[literal] (S1) at (10.5,-2.5)  {\scriptsize{$ \iden $}};
\node[literal] (S1) at (11.5,-2.5)  {\scriptsize{$ \iden $}};
\node[literal] (S1) at (12.5,-2.5)  {\scriptsize{$ \iden $}};

\node[literal] (S1) at (1.5,-3.5)  {\scriptsize{$ \iden $}};
\node[literal] (S1) at (2.5,-3.5)  {\scriptsize{$ \iden $}};
\node[literal] (S1) at (3.5,-3.5)  {\scriptsize{$ \iden $}};
\node[literal] (S1) at (4.5,-3.5)  {\scriptsize{$ \iden $}};
\node[literal] (S1) at (5.5,-3.5)  {\scriptsize{$ \iden $}};
\node[literal] (S1) at (6.5,-3.5)  {\scriptsize{$ \iden $}};
\node[literal] (S1) at (7.5,-3.5)  {\scriptsize{$ \iden $}};
\node[literal] (S1) at (8.5,-3.5)  {\scriptsize{$ \iden $}};
\node[literal] (S1) at (9.5,-3.5)  {\scriptsize{$ \iden $}};
\node[literal] (S1) at (10.5,-3.5)  {\scriptsize{$ \iden $}};
\node[literal] (S1) at (11.5,-3.5)  {\scriptsize{$ \iden $}};
\node[literal] (S1) at (12.5,-3.5)  {\scriptsize{$ \iden $}};

\node[literal] (S1) at (0.5,4.5)  {W};
\node[literal] (S1) at (0.5,3.5)  {L};
\node[literal] (S1) at (0.5,2.5)  {E};
\node[literal] (S1) at (0.5,1.5)  {N};
\node[literal] (S1) at (0.5,0.5)  {S};
\node[literal] (S1) at (0.5,-0.5)  {ND};
\node[literal] (S1) at (0.5,-1.5)  {D};
\node[literal] (S1) at (0.5,-2.5)  {LD};
\node[literal] (S1) at (0.5,-3.5)  {U};

\end{tikzpicture}
\end{center}
\caption{\label{fig:extrel}Extension-based Characterizations for Finite AFs}

\end{figure}

Remember that any arbitrary expansion (deletion) can be split into a normal and local
part. So one natural conjecture is that normal and local
expansion (deletion) equivalence jointly imply expansion (deletion) equivalence. Using the results
presented in this section we can not only verify the
addressed conjecture but even give a significantly stronger result. In fact, the
main and quite surprisingly relations for the considered semantics can be briefly
and concisely stated in the following two equations, namely ``normal expansion equivalence = expansion equivalence''  and ``local deletion equivalence = deletion equivalence''. 
\pagebreak
The fact that different notions of equivalence might or might not coincide is interesting from a conceptual point of view. To illustrate this let us have a look 
at normal and strong expansion equivalence. Recall that 
normal expansions add new arguments and possibly new attacks which involve at least one of the fresh arguments, while strong expansions (a subclass of normal expansions) restrict the
possible attacks between the new arguments and the old ones to a single direction. In dynamic settings, both concepts can be justified in the sense that 
new arguments might be raised but this will not influence the relation between already existing 
arguments. For strong expansions, only strong arguments will be raised, i.e.\ arguments which cannot be attacked by existing ones. The corresponding equivalence notions now check whether two AFs are
``equally robust'' to such new arguments, and indeed, normal expansion equivalence always
implies strong expansion equivalence but the other direction is only true for some of the
semantics, namely stage, stable, semi-stable, eager and naive semantics. One interpretation is that when 
two AFs are not normal expansion equivalent, then this can be made explicit by only posing
strong arguments (not attacked by existing ones), while for the other semantics this 
is not the case. For this particular example, it seems that the notion of admissibility
which is more ``explicit'' in the admissible, preferred, ideal, grounded and complete semantics is responsible for
the fact that frameworks might be strong expansion equivalent but not normal expansion 
equivalent. 

In Figure~\ref{fig:prelimrel} we presented preliminary relations between several notions of equivalence
which hold for any semantics. The refinement depicted in Figure~\ref{fig:rel} applies to any extension-based semantics considered in this section. 

\begin{figure}[H]
\centering
\begin{tikzpicture}[scale=1.0]

\node (I) at (-1.6,1.5) [rectangle, thick, draw,text width = 1.45cm, text centered]{\textbf{identity}\\
$=$\\
\textbf{update}\\
$=$\\
\textbf{deletion}\\
$=$\\
\textbf{local}\\
\textbf{deletion}\\
\footnotesize{equivalence}};

\node (B) at (1.2,1.5) [rectangle, thick, draw,text width = 1.65cm, text centered]{\textbf{expansion}\\
$=$\\
\textbf{normal}\\
\textbf{expansion}\\
\footnotesize{equivalence}};

\node (E) at (4.4,-1.2) [rectangle, thick, draw,text width = 1.65cm, text centered]{\textbf{normal}\\
\textbf{deletion}\\
\footnotesize{equivalence}};
\node (F) at (4.4,0.6) [rectangle, thick, draw,text width = 1.65cm, text centered]{\textbf{local}\\
\textbf{expansion}\\
\footnotesize{equivalence}};

\node (H) at (4.4,4.4) [rectangle, thick, draw,text width = 1.65cm, text centered]{\textbf{strong}\\
\textbf{expansion}\\
\footnotesize{equivalence}};
\node (W) at (4.4,2.6) [rectangle, thick, draw,text width = 1.65cm, text centered]{\textbf{weak}\\
\textbf{expansion}\\
\footnotesize{equivalence}};
\node (O) at (7.8,1.5) [rectangle, thick, draw,text width = 1.45cm, text centered]{\textbf{ordinary}\\
\footnotesize{equivalence}};

\draw[->, thick, double] (I) to [thick,double] (B);

\draw[->, thick, double] (B) to [thick,double, bend right] (F);
\draw[->, thick, double] (B) to [thick,double, bend right] (E);
\draw[->, thick, double] (B) to [thick,double, bend left] (H);
\draw[->, thick, double] (B) to [thick,double, bend left] (W);

\draw[->, thick, double] (E) to [thick, bend right,double] (O);
\draw[->, thick, double] (F) to [thick, bend right,double] (O);
\draw[->, thick, double] (H) to [thick, bend left,double] (O);
\draw[->, thick, double] (W) to [thick, bend left,double] (O);

\end{tikzpicture}
\caption{\label{fig:rel}Relations for $\sigma\in\{\stg,\stb,\semi,\eag,\adm,\prf,\id,\grd,\com,\nav,\cfzwei,\stgzwei\}$ - Extension-based Versions and Finite AFs}
\end{figure}

Finally, we present the overall picture for the most prominent semantics, namely the stable one. Interestingly, in contrast to Figure~\ref{fig:rel} all equivalence notions are comparable, i.e.\ they are totally ordered w.r.t.\ $\subseteq$. Comprehensive overviews for single semantics can be found in \cite[Section 5.5.2]{diss} or \cite{zoo2}. The latter also contains a comparison to different notions of \textit{minimal change equivalence} which are related to the so-called \textit{minimal change problem} considered \cite{minimal,BauB13}. As an aside, very recently the authors of \cite{BauDLS17} introduced so-called \textit{$C$-relativized equivalence} that subsumes ordinary and expansions equivalence as its extreme corner cases. The set $C$ represents so-called \textit{core} arguments
which will not be directly touched by the possible
expansions. This means, for any set $C$ we obtain a further intermediate notion between expansion and ordinary equivalence. However, due to its recency further relations are not studied so far.

\begin{figure}[H]
\centering
\begin{tikzpicture}[scale=1.0]

\node (I) at (-1.6,1.5) [rectangle, thick, draw,text width = 1.4cm, text centered]{\textbf{identity}\\
$=$\\
\textbf{update}\\
$=$\\
\textbf{deletion}\\
$=$\\
\textbf{local}\\
\textbf{deletion}\\
\footnotesize{equivalence}};

\node (B) at (1,1.5) [rectangle, thick, draw,text width = 1.65cm, text centered]{\textbf{expansion}\\
$=$\\
\textbf{normal}\\
\textbf{expansion}\\
$=$\\
\textbf{strong}\\
\textbf{expansion}\\
$=$\\
\textbf{normal}\\
\textbf{deletion}\\
\footnotesize{equivalence}};

\node (F) at (3.6,1.5) [rectangle, thick, draw,text width = 1.65cm, text centered]{\textbf{local}\\
\textbf{expansion}\\
\footnotesize{equivalence}};

\node (W) at (6.2,1.5) [rectangle, thick, draw,text width = 1.65cm, text centered]{\textbf{weak}\\
\textbf{expansion}\\
\footnotesize{equivalence}};
\node (O) at (8.8,1.5) [rectangle, thick, draw,text width = 1.43cm, text centered]{\textbf{ordinary}\\
\footnotesize{equivalence}};

\draw[->, thick, double] (I) to [thick,double] (B);

\draw[->, thick, double] (B) to [thick,double] (F);
\draw[->, thick, double] (F) to [thick,double] (W);

\draw[->, thick, double] (W) to [thick,double] (O);

\end{tikzpicture}
\caption{\label{fig:relstb}Stable Semantics - Extension-based Version and Finite AFs}
\end{figure}
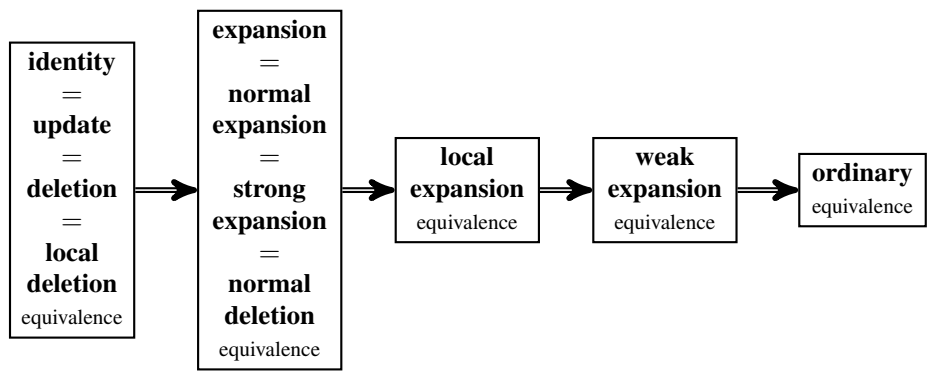

\section{Equivalence in the Light of Unrestricted Frameworks}
\label{chap:equiunrestr}
Recently, a first study of several abstract properties in the unrestricted setting were presented in \cite{BauS17}. The main result regarding expansion equivalence can be summarized as follows: All characterization results carry over to the unrestricted setting as long as the AFs in question are \textit{jointly expandable} (w.r.t.\ $\m{U}$). Consider therefore the following definition and the corresponding characterization theorem.

\begin{definition} $\F$ and $\G$ are jointly expandable if $\m{U}\sm (A(\F) \cup A(\G)) \neq\emptyset$. \label{def:jointexp}
\end{definition}

\begin{theorem}{\cite{BauS17}} \label{the:expequi_joinexp} For jointly expandable AFs $\F$ and $\G$ we have:
\begin{enumerate}
	\item $\AF\equiv^{\Ext_\sigma}_E\! \AG \ToT \AF^{\k(\sigma)} = \AG^{\k(\sigma)}$ for any $\sigma\in\{\stb,\adm,\com,\grd,\nav\}$,
	\item  $\AF\equiv^{\Ext_\sigma}_E\! \AG \ToT \AF^{\k(\adm)} = \AG^{\k(\adm)}$ for any $\sigma\in\{\prf,\id,\semi,\eag\}$ and
	\item $\AF\equiv^{\Ext_\stg}_E\! \AG \ToT \AF^{\k(\stb)} = \AG^{\k(\stb)}$.
\end{enumerate}
\end{theorem}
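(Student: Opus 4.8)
\textbf{Proof plan for Theorem~\ref{the:expequi_joinexp}.}

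The plan is to reduce the unrestricted statement to the already-established finite case (Theorem~\ref{the:strong}) by exploiting joint expandability. The key observation is that all the kernels appearing in the statement are context-free (Fact~\ref{fact:kernel}, Item~4): whether an attack $(a,b)$ survives in $\F^{\k(\sigma)}$ depends only on the subframework $\F|_{\{a,b\}}$. Hence, for each direction I will work locally on pairs of arguments and small finite subframeworks, so that the only role of the unrestricted setting is to make sure there is enough ``room'' (a fresh argument outside $A(\F)\cup A(\G)$) to build the witnessing expansions.

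First I would treat the $(\impliedby)$-direction, which should hold without any joint-expandability assumption. Given $\F^{\k(\sigma)}=\G^{\k(\sigma)}$, one wants $\sem_\rho$-style intersection behaviour: for every AF $\AH$ one must show $\rho(\F\dcup\AH)=\rho(\G\dcup\AH)$. The strategy is to first establish $\dcup$-robustness of the kernels in the unrestricted case, i.e.\ $\F^{\k(\sigma)}=\G^{\k(\sigma)}$ implies $(\F\dcup\AH)^{\k(\sigma)}=(\G\dcup\AH)^{\k(\sigma)}$ for arbitrary $\AH$; since the kernels are context-free this is a purely local check on each pair $\{a,b\}\subseteq A(\F\dcup\AH)\cup A(\G\dcup\AH)$, and the finite-case proof of $\dcup$-robustness (Fact~\ref{fact:kernel}, Item~5) goes through verbatim. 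Then, using Fact~\ref{fact:kerneldecisive} (which I would first argue also holds for infinite AFs, again a local/argument-wise verification since the kernel operations do not depend on finiteness) one gets $\rho(\F\dcup\AH)=\rho((\F\dcup\AH)^{\k(\sigma)})=\rho((\G\dcup\AH)^{\k(\sigma)})=\rho(\G\dcup\AH)$, for the appropriate pairing of $\rho$ and $\k(\sigma)$ as in the three items.

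For the $(\implies)$-direction I would argue contrapositively: assume $\F^{\k(\sigma)}\neq\G^{\k(\sigma)}$ and, using joint expandability, construct a single AF $\AH$ with $\rho(\F\dcup\AH)\neq\rho(\G\dcup\AH)$. The difference of the kernels is witnessed either by an argument in $A(\F)\triangle A(\G)$ or by an attack $(a,b)$ with $(a,b)\in R(\F^{\k(\sigma)})\setminus R(\G^{\k(\sigma)})$ (or symmetrically). In each case the finite-case constructions from the proof of Theorem~\ref{the:strong} produce a witnessing expansion that uses only the arguments $a,b$ together with one or two auxiliary fresh arguments. The role of the hypothesis $\m{U}\setminus(A(\F)\cup A(\G))\neq\emptyset$ (Definition~\ref{def:jointexp}) is precisely to guarantee that such fresh arguments are available inside the universe $\m{U}$; with them in hand, the finite witnessing AF $\AH$ is legitimately an element of $\m{F}$ and attaches to both $\F$ and $\G$ to separate them semantically.

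The main obstacle I expect is the $(\implies)$-direction when the kernels agree on all \emph{shared} arguments but $\F$ and $\G$ differ on arguments that are self-looping and ``benign'' --- i.e.\ showing that whenever the kernels genuinely differ, some difference is still detectable by an expansion even though many attacks/arguments are redundant; this is exactly where one must be careful that the witnessing construction does not accidentally require more fresh arguments than a single one (which is all joint expandability guarantees), and where one must verify that the finite-case witnesses from \cite{strong,GagW13} can indeed always be chosen to use at most one new argument. The rest --- $\dcup$-robustness, context-freeness, and the kernel-insensitivity of the semantics --- is routine once it is noted that none of those facts actually used finiteness in an essential way.
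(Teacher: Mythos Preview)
Your proposal is correct and matches the paper's own treatment: the paper does not give a detailed proof but states that ``the main proof strategies are straightforward extensions of those presented in \cite{strong}'' and that joint expandability is needed precisely to guarantee fresh arguments for the witnessing expansions in the $(\Rightarrow)$-direction --- exactly your plan. One small caveat: you describe kernel-insensitivity of the semantics (the analogue of Fact~\ref{fact:kerneldecisive}) and the related steps as ``routine'', but the paper explicitly warns that in the infinite setting one must check that implicit uses of finiteness (subset relations like $\semi\subseteq\prf$, definedness of $\semi$/$\eag$, etc.) still hold; these do go through, but they deserve the careful verification you only gesture at.
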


The main proof strategies are straightforward extensions of those presented in \cite{strong}. However, finiteness assumptions are often used implicitly and one has to pay attention whether a certain reasoning step (e.g.\ subset relation between semantics, definedness statuses of semantics, finitely many extensions etc.) carry over to the infinite setting.

Interestingly, in case of the admissible as well as naive kernel we may even drop the restriction of joint expandability as stated in the following theorem.

\begin{theorem}{\cite{BauS17}} \label{the:expequi_unrestricted} For unrestricted AFs $F$ and~$G$ we have:
\begin{enumerate}
	\item $\AF\equiv^{\Ext_\nav}_E\! \AG \ToT \AF^{\k(\nav)} = \AG^{\k(\nav)}$ and
	\item  $\AF\equiv^{\Ext_\sigma}_E\! \AG \ToT \AF^{\k(\adm)} = \AG^{\k(\adm)}$ for any $\sigma\in\{\adm,\prf,\id,\semi,\eag\}$.
	\end{enumerate}
\end{theorem}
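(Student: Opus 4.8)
The plan is to prove Theorem~\ref{the:expequi_unrestricted} by reducing the two claimed equivalences to the finite case (Theorems~\ref{the:strong} and~\ref{the:strong2}) wherever possible, and otherwise mimicking the finite proof while checking carefully that each step survives the passage to the infinite and to non-jointly-expandable frameworks. The $\impliedby$ direction of both items is the routine one: if the relevant kernels coincide, then by Fact~\ref{fact:kernel} (in particular $\dcup$-robustness, which holds for the admissible and naive kernels even in the unrestricted setting) we get $(\F\dcup\H)^\k = (\G\dcup\H)^\k$ for every AF $\H$, and then by Fact~\ref{fact:kerneldecisive} -- specifically $\Ext_\sigma(\cdot)=\Ext_\sigma(\cdot^{\k(\adm)})$ for $\sigma\in\{\adm,\prf,\id,\semi,\eag\}$ and $\Ext_\nav(\cdot)=\Ext_\nav(\cdot^{\k(\nav)})$ -- we conclude $\Ext_\sigma(\F\dcup\H)=\Ext_\sigma(\G\dcup\H)$. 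So the first task is to verify that $\dcup$-robustness and the kernel-insensitivity facts are genuinely valid for unrestricted (possibly infinite) AFs; this is essentially a syntactic check since the kernel-deletion conditions are local (involve only the two endpoints of an attack), and the extension-invariance arguments for $\k(\adm)$ and $\k(\nav)$ do not use finiteness.

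The real work is the $\implies$ direction, i.e.\ showing that expansion equivalence forces kernel identity \emph{without} assuming joint expandability. First I would dispose of the naive case. Observe that $\equiv^{\Ext_\nav}_E$ implies, via the empty expansion, ordinary equivalence, hence $A(\F)=A(\G)$ (since $\nav$ is universally defined and always yields at least one extension whose union-with-range determines the argument set; more directly, every argument of $\F$ lies in some naive extension up to adding a suitable self-loop-free witness using existing arguments only). The crucial point is that the naive kernel only \emph{adds} attacks and the relevant distinguishing expansions in the finite proof of Theorem~\ref{the:strong2}(7) use only arguments already present in $\F\dcup\G$: to detect that $(a,b)\in R(\F^{\k(\nav)})\setminus R(\G^{\k(\nav)})$ one extends by attacks among $\{a,b\}$ and perhaps a self-loop on $a$ or $b$ -- no fresh argument is needed. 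Hence the finite argument transfers verbatim, giving item~1.

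For item~2 I would argue along the same lines: expansion equivalence gives ordinary equivalence, and from there $A(\F)=A(\G)$ and $L(\F)=L(\G)$ (using that a self-loop on $a\in A(\F)\setminus A(\G)$ or a loop-mismatch can be exposed by an expansion confined to the shared vocabulary). Then for a purported mismatch $(a,b)\in R(\F^{\k(\adm)})\setminus R(\G^{\k(\adm)})$ with $a\ne b$, the $\k(\adm)$-deletion condition -- $a$ self-looping and ($b$ attacks $a$ or $b$ self-loops) -- means the witnessing expansion in the finite proof is again built from $a,b$ together with at most one auxiliary argument that can be taken from $A(\F)\cup A(\G)$ when that set is non-empty and a trivial framework otherwise. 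I expect the \textbf{main obstacle} to be exactly this: showing that one never actually needs a fresh argument, so that joint expandability is dispensable. Concretely, the danger is a pair of AFs that exhaust $\m{U}$, agree as ordinary-equivalent frameworks, yet have different $\k(\adm)$-kernels; I would rule this out by a case analysis on the deletion condition, in each case constructing a distinguishing $\H$ whose arguments all already occur in $\F\dcup\G$ (e.g.\ reusing $b$ as both the attacker-of-$a$ and the extension member, exploiting that $b$ is self-looping hence never in any admissible set). Once every case is handled this way, the $\implies$ direction follows, completing the proof; and since the unrestricted case subsumes the finite one, consistency with Theorems~\ref{the:strong}--\ref{the:strong2} is automatic.
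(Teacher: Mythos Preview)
The paper does not prove this theorem: it is stated with attribution to \cite{BauS17}, and only the contrasting negative examples for $\k(\stb)$ and $\k(\grd)$ are worked out in the surrounding text. So there is no in-paper argument to compare against.

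Your overall strategy is the right one and presumably matches the cited source. The $\impliedby$ direction rests on $\dcup$-robustness and kernel-insensitivity, both of which extend to unrestricted AFs because the $\k(\adm)$ and $\k(\nav)$ conditions are local (involve only the two endpoints of an attack); one nit is that $\dcup$-robustness for $\k(\nav)$ is not recorded in Fact~\ref{fact:kernel} or~\ref{fact:kernel2}, so you must verify it directly rather than cite it. For $\implies$ you correctly isolate the crux---that distinguishing expansions for these two kernels never need a fresh argument---which is exactly what separates them from $\k(\stb)$ and $\k(\grd)$.

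Your closing paragraph, however, is muddled. The parenthetical ``reusing $b$ as both the attacker-of-$a$ and the extension member, exploiting that $b$ is self-looping hence never in any admissible set'' is self-contradictory: a self-looping $b$ cannot sit inside any conflict-free set, so it cannot serve as an extension member. The promised case analysis---on whether $a$ self-loops, and on whether the kernel mismatch stems from the attack $(a,b)$ itself or from a differing counter-attack $(b,a)$---has to be carried out explicitly, with a concrete $\H$ on the arguments $\{a,b\}$ exhibited in each subcase and the resulting $\sigma$-extensions computed. You have located the obstacle but not crossed it.
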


The following two examples taken from \cite{BauS17} show that this assertion does not hold for all kernels considered in this section. The main reason for this different behaviour is that for some semantics it plays a decisive role whether AFs can be expanded by  ``fresh'' arguments which is not given for unrestricted frameworks in general but guaranteed for jointly expandable AFs .

\begin{example} \label{ex:non-reg} Given $c\!\in\!\m{U}$ and define the following two AFs, $\F = (\m{U}\sm\!\{c\},$ $\{(a,a)\mid a\in\m{U}\sm\{c\}\})$ and $\G = (\m{U}, \{(a,a)\mid a\in\m{U}\sm\{c\}\})$. For any $\AH$ we observe $\Ext_\stb(\F\dcup\AH) = \Ext_\stb(\G\dcup\AH)$. In particular, 

\mbox{$\Ext_\stb(\F\dcup\AH) =      
\begin{cases}
    \{\{c\}\}, & \text{if}\ \{(c,a)\mid a\in \m{U}\sm\{c\}\}\subseteq R(\AH) \text{and}\ (c,c)\notin R(\AH)\\
   \emptyset, & \text{otherwise} 
    \end{cases}  $}
		
\hspace{-1em} Consequently, $\F\!\equiv^{\Ext_\stb}_E\!\G$ although $A(\F) \neq A(\G)$ (and thus, $\F^{\k(\stb)}\neq\G^{\k(\stb)}$). 
\end{example}

\begin{example} Consider the AFs $\F\! =\! (\m{U},\! \{(a,a) \mid a\in\m{U}\})$ and $\G = (\m{U}, \{(a,b)\mid a,b\in\m{U}, a\neq b)$.  Applying the grounded kernel does not change anything for either framework, i.e. $\F^{\k(\grd)}\! =\! \F$ and $\G\! =\! \G^{\k(\grd)}$. Due to the absence of unattacked arguments we deduce $\Ext_\grd(\F\dcup\AH) = \Ext_\grd(\G\dcup\AH) = \{\emptyset\}$ for any AF $\AH$. Consequently, $\F\! \equiv^{\Ext_\grd}_E\! \G$ although	$\F^{\k(\grd)} \neq \G^{\k(\grd)}$.
\end{example}

\pagebreak
\section{Characterization Theorems for Labelling-Based Semantics} \label{sec:equivlabel}

We now return to the finite setting and consider the second main approach used for evaluating argumentation scenarios, namely labelling-based semantics. As a matter of fact, the labelling-based versions of all considered semantics provides one with more information than their extension-based counter-parts. More precisely, the defined 3-valued labellings assign a status to any argument of the considered AF $\F$, i.e.\ in addition to the information which arguments are \textit{accepted} we also have labels for the remaining arguments indicating that they are either \textit{rejected} or \textit{undecided} with respect to $\F$. 
It is
well known that many semantics establish a one-to-one
correspondence between their extension-based and
labelling-based versions (for more details see the subsequent Section~\ref{sec:baspro}). This means, any labelling is associated
with exactly one extension and vice versa. It is not immediately apparent whether this property guarantees
that there is a coincidence of the extension-based and labelling-based equivalence notions. In \cite{Bau16} a negative answer was given. The main reason for the invalidity is that
AFs may possess the same extensions without sharing the
same arguments which is impossible in case of labellings since any argument has to be labelled.
Furthermore, even sharing the same arguments does not ensure the validity of the converse direction. Consider therefore the following example.

\begin{example} \label{ex:lab} Consider the AFs $\F$ and $\G$ as depicted below. Although both frameworks possess the same unique preferred extension, they do not share the same preferred labellings. More precisely, $\Ext_{\prf}(\F) = \Ext_{\prf}(\G) = \{\{a\}\}$ but $\{(\{a\},\{b\},\emptyset)\} = \Lab_{\prf}(\F) \neq \Lab_{\prf}(\G) = \{(\{a\},\emptyset,\{b\})\}$.

\begin{tikzpicture}

		\node (A5) at (0,-2.5) [circle, minimum size=0.7cm, thick, draw, label = left:$\F:$] {$a$};
    \node (B5) at (1.2,-2.5) [circle, minimum size=0.7cm,  thick, draw] {$b$};

		\node (A1) at (3.6,-2.5) [circle, minimum size=0.7cm, thick, draw, label = left:$\G:$] {$a$};
    \node (B1) at (4.8,-2.5) [circle, minimum size=0.7cm,  thick, draw] {$b$};


\draw[->, thick] (A5) to [thick,bend right] (B5);

\draw[->, thick] (B1) to [thick,loop,distance=0.5cm] (B1);

\draw[->, thick] (B5) to [thick,loop,distance=0.5cm] (B5);

\end{tikzpicture}

Moreover, observe that $\F^{\k^*(\adm)} = \G = \G^{\k^*(\adm)}$. Consequently, both frameworks are even strong expansion equivalent w.r.t.\ preferred extension-based semantics (Theorem~\ref{the:strong2}). This means, equivalence notions may differ considerably if considered under the extension-based or labelling-based approach. 

\end{example}

In contrast to extension-based semantics where characterization results are spread over a high number of publications there is only one reference, namely \cite{Bau16} concerned with labelling-based semantics. The author considered 8 different equivalence notions w.r.t.\ 8 prominent labelling-based semantics in the finite setting. In effect, similarly to extension-based semantics, almost all labelling-based equivalence notions can be decided
syntactically. Differently from the extension-based approach we observe a much more homogeneous picture. For instance, there is no need for the more sophisticated $\sigma$-*-kernels as we will see.

\subsection{Basic Properties and a Fundamental Relation} \label{sec:baspro}
Before turning to the main results we start with some preliminary facts relating $\sigma$-extensions and $\sigma$-labellings. In the following we restrict ourselves to the semantics considered in \cite{Bau16}. For any 3-valued labelling $\L = (\L_1,\L_2,\L_3)$ we use $\L = (\LI,\LO,\LU)$ as usual.

\begin{fact} \label{fact1} Given a finite AF $\AF = (A,R)$ and $E\subseteq A$. We write $E^{\Lab}$ for $(E,E^+,A\sm E^\oplus))$. For all $\sigma\in\{\stb,\semi,\eag,\adm,\prf,\id,\grd,\com\}$ we have,
\begin{enumerate}
  \item If $\L\in\Lab_{\sigma}(\AF)$, then $\LI\in\Ext_{\sigma}(\AF)$, \hfill{(extension induced by labelling)} 
	\item If $\E\in\Ext_{\sigma}(\AF)$, then $\E^{\Lab}\in\Lab_{\sigma}(\AF)$ and \hfill{(labelling induced by extension)} 
	\item Obviously, $(E^{\Lab})^I = E$. \hfill{($I\circ \Lab = \iden$)} 
	\end{enumerate}
\end{fact}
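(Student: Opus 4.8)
\textbf{Proof plan for Fact~\ref{fact1}.}

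The three assertions have to be verified for each of the eight semantics $\sigma\in\{\stb,\semi,\eag,\adm,\prf,\id,\grd,\com\}$, but the whole argument reduces to known correspondences between $\sigma$-extensions and $\sigma$-labellings together with straightforward bookkeeping about the range operator. The plan is to first dispose of item~3, which is purely set-theoretic and requires no semantics at all: by definition $E^{\Lab}=(E,E^+,A\sm E^\oplus)$, so $(E^{\Lab})^I=E$ holds immediately since the first component of the triple is literally $E$. This is the trivial part and I would simply state it.

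For items~1 and~2, the natural approach is to invoke the standard bijective correspondence between extensions and labellings that is already established in the literature for exactly these semantics (cf.\ the discussion in Section~\ref{sec:flagship} and the references therein). Concretely, for complete-based semantics one uses the classical result that a labelling $\L$ is a $\com$-labelling of $\AF$ iff $\LI$ is a complete extension and $\L=(\LI)^{\Lab}$, i.e.\ $\LO$ is exactly the set of arguments attacked by $\LI$ and $\LU$ is the rest. I would first treat $\com$ in this way, then observe that $\adm$, $\grd$, $\prf$, $\semi$, $\stb$, $\id$, $\eag$ all arise from $\com$ (or from $\adm$) by imposing an additional selection criterion --- $\subseteq$-minimality of the in-set (grounded), $\subseteq$-maximality of the in-set (preferred), maximality of the range $E^\oplus$ (semi-stable) or $\LU=\emptyset$ (stable), and intersection-type constraints for ideal and eager. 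Since the labelling-characterisations of these criteria (maximal in-set $\leftrightarrow$ maximal in-label, maximal range $\leftrightarrow$ minimal undec-label, empty undec-label $\leftrightarrow$ full range, etc.) are well known, item~1 follows by reading off $\LI$ from $\L$, and item~2 follows by checking that $E^{\Lab}$ has the required (minimal/maximal/empty) label set whenever $E$ has the corresponding extension-theoretic property. The key small lemma I would make explicit is that the map $E\mapsto E^{\Lab}$ and the map $\L\mapsto\LI$ are mutually inverse on complete extensions/labellings, so that every extension-level optimality condition transfers verbatim to the labelling level.

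The only mildly delicate cases are the parametrised semantics $\id$ and $\eag$, where one must check that the admissible set sitting below $\bigcap\Ext_{\prf}(\AF)$ (resp.\ $\bigcap\Ext_{\semi}(\AF)$) corresponds under $E\mapsto E^{\Lab}$ to the ideal (resp.\ eager) labelling as usually defined; here I would appeal to the fact that these semantics are defined on top of $\adm$ via Definition~\ref{def:parsemantics}, so once the correspondence is known for $\adm$ and for the base semantics $\prf$/$\semi$, it lifts to $\adm^{\sigma}$ without further work. I do not expect any real obstacle --- the statement is essentially a catalogue of facts that are folklore in the abstract-argumentation community --- so the main ``work'' is just making sure the range operator $E^\oplus=E^+\cup E$ is handled consistently and that finiteness of $\AF$ (used implicitly to guarantee the bijection for all these semantics) is invoked where needed. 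I would therefore present the proof as: (i) item~3 by inspection; (ii) the $\com$-case by the classical extension/labelling correspondence; (iii) all remaining semantics by transporting the relevant optimality/emptiness condition across that correspondence.
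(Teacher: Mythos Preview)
The paper does not actually prove this statement: it is presented as a \emph{Fact}, i.e.\ a known result taken from the literature (the surrounding text refers to \cite{Bau16} and the standard extension/labelling correspondences), and no argument is supplied beyond the word ``Obviously'' attached to item~3. Your proposal is therefore not competing against a proof in the paper but filling in what the paper deliberately omits.

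That said, your plan is correct and matches the standard route one finds in the cited sources: item~3 is immediate from the definition of $E^{\Lab}$, and items~1 and~2 follow from the well-known Caminada-style correspondence for complete labellings together with the observation that each of the remaining semantics is obtained from $\com$ or $\adm$ by an extremality or intersection condition that transfers verbatim under $E\mapsto E^{\Lab}$. One small point worth being explicit about: for $\adm$ the maps are \emph{not} mutually inverse (the paper itself flags this right after the Fact, noting that admissible labellings are not uniquely determined by their in-set), so when you lift to $\id$ and $\eag$ you should rely only on items~1 and~2 as stated---existence of an induced extension/labelling---rather than on a bijection at the admissible level. Your sketch already respects this distinction, but it is the one place where a careless write-up could overclaim.
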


We point out that the first two properties mentioned in Fact~\ref{fact1} do not ensure that there is a one-to-one correspondence between $\sigma$-labellings and $\sigma$-extensions. This desirable feature (which would indeed justify the terms $\sigma$-labellings and $\sigma$-extensions) is given if additionally, labellings are uniquely determined by their in-labelled arguments.  

\begin{fact} \label{fact2} Given a finite AF $\AF = (A,R)$ and a set $E\subseteq A$. For all semantics $\sigma\in\{\stb,\semi,\eag,\prf,\id,\grd,\com\}$ we have,
\begin{enumerate}
  \item For any $\L,\M\in\Lab_{\sigma}(\AF), \LI = \MI$ iff $\L = \M$, \hfill{(uniquely determined by in-labels)}
	\item Given $\L\in\Lab_{\sigma}(\AF)$, then $(\LI)^{\Lab} = \L$ and \hfill{($\Lab\circ I = \iden$)} 
	\item $\card{\Lab_{\sigma}(\AF)} = \card{\Ext_{\sigma}(\AF)}$. \hfill{(same cardinality)} 
\end{enumerate}
\end{fact}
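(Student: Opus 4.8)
\textbf{Proof plan for Fact~\ref{fact2}.}

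The plan is to establish the three items in sequence, using the fact that all seven semantics under consideration ($\stb$, $\semi$, $\eag$, $\prf$, $\id$, $\grd$, $\com$) are known to produce \emph{complete} labellings, and leveraging the characteristic property that distinguishes this list from $\adm$: the rejected and undecided labels of a complete labelling are functionally determined by the in-labelled arguments. First I would prove item~1, the claim that for $\L,\M\in\Lab_\sigma(\AF)$ we have $\LI=\MI$ iff $\L=\M$. The ``if'' direction is trivial. For the ``only if'' direction, I would recall that each $\sigma$-labelling is in particular a complete labelling, and in a complete labelling $\L$ one has $\LO = \{b\in A\mid \exists a\in\LI: (a,b)\in R\} = (\LI)^+$ and $\LU = A\setminus(\LI\cup\LO) = A\setminus(\LI)^\oplus$. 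Hence $\LO$ and $\LU$ are both fixed once $\LI$ is fixed, so $\LI=\MI$ forces $\LO=\MO$ and $\LU=\MU$, i.e.\ $\L=\M$. This is where the restriction to the seven listed semantics (excluding $\adm$) is essential, since admissible labellings need not be complete and the undecided label is then not determined by the in-set.

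Next I would derive item~2, that $(\LI)^\Lab = \L$ for $\L\in\Lab_\sigma(\AF)$. Here $(\LI)^\Lab$ is by definition (see Fact~\ref{fact1}) the triple $(\LI, (\LI)^+, A\setminus(\LI)^\oplus)$. By the computation just given for complete labellings, this triple is exactly $(\LI,\LO,\LU)=\L$. Alternatively, and perhaps more cleanly for the write-up, I would argue that by Fact~\ref{fact1}(2) applied to $E=\LI$ (which is a $\sigma$-extension by Fact~\ref{fact1}(1)), the triple $(\LI)^\Lab$ is itself a $\sigma$-labelling; it shares its in-set with $\L$ by Fact~\ref{fact1}(3); hence by item~1 it equals $\L$. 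This second route has the advantage of only invoking item~1 plus Fact~\ref{fact1}, keeping the proof modular.

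Finally, item~3, the cardinality equality $\card{\Lab_\sigma(\AF)}=\card{\Ext_\sigma(\AF)}$, follows by exhibiting the map $\LI\colon\Lab_\sigma(\AF)\to\Ext_\sigma(\AF)$, $\L\mapsto\LI$, as a bijection. It is well-defined by Fact~\ref{fact1}(1), injective by item~1, and surjective because for any $E\in\Ext_\sigma(\AF)$ the labelling $E^\Lab$ is a $\sigma$-labelling (Fact~\ref{fact1}(2)) with $(E^\Lab)^I=E$ (Fact~\ref{fact1}(3)). The inverse map is $E\mapsto E^\Lab$, and items~2 and~\ref{fact1}(3) confirm the two round-trip identities. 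I do not anticipate a genuine obstacle here; the only subtlety worth flagging in the exposition is making explicit \emph{why} $\adm$ is excluded from Fact~\ref{fact2}, namely that admissible labellings fail item~1 (e.g.\ on $(\{a,b\},\{(a,b)\})$ the admissible set $\{a\}$ induces two distinct admissible labellings differing only on whether $b$ is labelled out or undecided), so the bijection argument breaks down at injectivity.
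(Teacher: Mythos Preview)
Your proposal is correct. The paper does not actually provide a proof of this statement: it is presented as a ``Fact'' (i.e., a well-known result drawn from the literature, consistent with the paper's use of the \texttt{fact} environment), followed only by two brief remarks. Your argument fills in the details the paper omits, and the reasoning---that every $\sigma$-labelling for the listed semantics is in particular a complete labelling, whence $\LO=(\LI)^+$ and $\LU=A\setminus(\LI)^\oplus$ are forced---is the standard and correct justification.

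Two small points of comparison. First, the paper's one substantive remark is that ``the first two items of Fact~\ref{fact2} are equivalent independently of any semantics definition''; your ``second route'' for item~2 (deriving it from item~1 together with Fact~\ref{fact1}) is precisely one direction of this equivalence, so your modular presentation aligns well with the paper's intended structure. Second, for the exclusion of $\adm$, the paper uses the AF from Example~\ref{ex:lab}, namely $(\{a,b\},\{(a,b),(b,b)\})$ with a self-loop on $b$, rather than your loop-free $(\{a,b\},\{(a,b)\})$. Your example also works under the standard Caminada definition of admissible labelling, but if you want to match the paper exactly, use theirs.
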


As an aside, we mention that (although not immediately apparent) the first two items of Fact~\ref{fact2} are equivalent independently of any semantics definition. Please note that admissible labellings are excluded from Fact~\ref{fact2}. The AF $\F$ depicted in Example~\ref{ex:lab} shows that this is no coincidence. It possesses two admissible labellings associated with one admissible extension. More precisely, the admissible labellings $(\{a\},\{b\},\emptyset)$ as well as $(\{a\},\emptyset,\{b\})$ refer to the same admissible extension $\{a\}$. 
		
We proceed with a general relation between labelling-based and extension-based versions of certain equivalence notion. More precisely, for any considered semantics and any equivalence notion presented in Definition~\ref{def:equivalence} we have that being equivalent w.r.t.\ labellings implies being equivalent w.r.t.\ extensions. The main reason for this fundamental relation is the following lemma stating that possessing the same labellings implies sharing the same extensions. We mention that this property is already guaranteed if the semantics $\sigma$ in question satisfies that any $\sigma$-extension induces an $\sigma$-labelling and vice versa (cf.\ statements 1 and 2 of Fact~\ref{fact1}).

\begin{lemma}[\cite{Bau16}] \label{pro:labext} Given two finite AFs $\AF$,$\AG$. For any $\sigma\in\{\stb,\semi,\eag,\adm,\prf,\id,\grd,\com\}$ we have,
$$\Lab_{\sigma}(\AF) = \Lab_{\sigma}(\AG) \To \Ext_{\sigma}(\AF) = \Ext_{\sigma}(\AG).$$
\end{lemma}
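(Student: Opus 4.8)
The plan is to prove the contrapositive in the obvious direction is false, so we instead argue directly: assuming $\Lab_{\sigma}(\AF) = \Lab_{\sigma}(\AG)$, we want $\Ext_{\sigma}(\AF) = \Ext_{\sigma}(\AG)$. The key observation is Fact~\ref{fact1}: for every $\sigma$ in the list, a labelling $\L \in \Lab_{\sigma}(\AF)$ induces an extension $\LI \in \Ext_{\sigma}(\AF)$ (item~1), and conversely an extension $\E \in \Ext_{\sigma}(\AF)$ induces a labelling $\E^{\Lab} \in \Lab_{\sigma}(\AF)$ with $(\E^{\Lab})^I = \E$ (items~2 and~3). So the map $\L \mapsto \LI$ from $\Lab_{\sigma}(\AF)$ onto $\Ext_{\sigma}(\AF)$ is surjective, and likewise for $\AG$. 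That is really all that is needed.

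First I would establish the set equality $\Ext_{\sigma}(\AF) \subseteq \Ext_{\sigma}(\AG)$. Take any $\E \in \Ext_{\sigma}(\AF)$. By item~2 of Fact~\ref{fact1}, $\E^{\Lab} \in \Lab_{\sigma}(\AF)$. By the hypothesis $\Lab_{\sigma}(\AF) = \Lab_{\sigma}(\AG)$, we get $\E^{\Lab} \in \Lab_{\sigma}(\AG)$. Now apply item~1 of Fact~\ref{fact1} (to $\AG$): the in-component $(\E^{\Lab})^I$ lies in $\Ext_{\sigma}(\AG)$. Finally item~3 of Fact~\ref{fact1} gives $(\E^{\Lab})^I = \E$, so $\E \in \Ext_{\sigma}(\AG)$. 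The reverse inclusion $\Ext_{\sigma}(\AG) \subseteq \Ext_{\sigma}(\AF)$ follows by the symmetric argument, swapping the roles of $\AF$ and $\AG$. Hence $\Ext_{\sigma}(\AF) = \Ext_{\sigma}(\AG)$, as claimed.

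I do not expect any real obstacle here: the statement is essentially a bookkeeping consequence of the extension--labelling correspondence packaged in Fact~\ref{fact1}, and the proof is a two-line chase through items~1--3. The only point worth a sentence of care is that we must use item~2 (every extension gives a labelling) and item~1 (every labelling gives an extension) \emph{together} with item~3 (the $I$-projection undoes the $\Lab$-construction); none of these alone suffices, but their conjunction holds uniformly for all $\sigma\in\{\stb,\semi,\eag,\adm,\prf,\id,\grd,\com\}$, which is exactly the list in the lemma. Note in particular that we do \emph{not} need the stronger one-to-one correspondence of Fact~\ref{fact2} (which excludes $\adm$); the weaker surjectivity from Fact~\ref{fact1} is enough, and that is why $\adm$ can be included. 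If desired, one could also phrase the whole argument more abstractly: for each of these semantics, $\Ext_{\sigma}(\F) = \{\,\LI \mid \L \in \Lab_{\sigma}(\F)\,\}$ holds, and the lemma is then immediate since equal labelling-sets have equal images under the projection $\L \mapsto \LI$.
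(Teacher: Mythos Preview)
Your proof is correct and essentially identical to the paper's: the paper phrases the same chain of applications of items~1--3 of Fact~\ref{fact1} as a reductio ad absurdum (assume $\E\in\Ext_{\sigma}(\AF)\setminus\Ext_{\sigma}(\AG)$, derive $\E\in\Ext_{\sigma}(\AG)$, contradiction), but the logical content is exactly the direct inclusion argument you wrote. Your remark that only Fact~\ref{fact1} (not the stronger Fact~\ref{fact2}) is needed, and that this is why $\adm$ can be included, matches the paper's commentary surrounding the lemma.
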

\begin{proof} Reductio ad absurdum. Assume $\Ext_{\sigma}(\AF) \neq \Ext_{\sigma}(\AG)$. Then, w.l.o.g. exists $\E\in\Ext_{\sigma}(\AF) \sm\Ext_{\sigma}(\AG)$. Consequently, $\E^{\Lab}\in\Lab_{\sigma}(\AF)$ (item 2 of Fact~\ref{fact1}). Thus, $\E^{\Lab}\in\Lab_{\sigma}(\AG)$ (assumption). Hence, $(\E^{\Lab})^I\in\Ext_{\sigma}(\AG)$ (Item 1 of Fact~\ref{fact1}). Furthermore, $(\E^{\Lab})^I = E\in\Ext_{\sigma}(\AG)$ (item 3 of Fact~\ref{fact1}). Contradiction! 
\end{proof}

We now present the fundamental relation between labelling-based and extension-based equivalence notion. 

\begin{theorem}[\cite{Bau16}] \label{the:mainrelation} Given two finite AFs $\AF$ and $\AG$. For any $\sigma\in\{\stb,\semi,\eag,\adm,\prf,\id,\grd,\com\}$ and any $M\in\{W,L,E,N,S,ND,D,LD,U\}$ we have,

$$\AF\equiv_{M}^{\Lab_{\sigma}}\!\AG \To \AF\equiv_{M}^{\Ext_{\sigma}}\!\AG.$$

\end{theorem}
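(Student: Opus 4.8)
The plan is to prove the statement by unfolding the definitions of the various equivalence notions (Definition~\ref{def:equivalence}) and reducing each case to a single common mechanism. The key observation that does all the work is Lemma~\ref{pro:labext}: sharing the same $\sigma$-labellings implies sharing the same $\sigma$-extensions, for every semantics $\sigma\in\{\stb,\semi,\eag,\adm,\prf,\id,\grd,\com\}$. So the idea is: whenever two AFs (after some dynamic modification) are ordinarily equivalent in the labelling sense, they are also ordinarily equivalent in the extension sense; and since every modification type $M$ builds its equivalence notion out of ordinary equivalence of modified frameworks, the implication propagates to $M$-equivalence.

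Concretely, I would first note that every equivalence notion $\equiv^{\Ext_\sigma}_M$ and $\equiv^{\Lab_\sigma}_M$ in Definition~\ref{def:equivalence} has the same shape: there is a class $\mathcal{M}(\F,\G)$ of admissible modifications (AFs $\AH$ in the expansion cases, pairs $[B,S]$ and an AF $\AH$ in the update case, sets $B$ or $S$ in the deletion cases), and one requires that for all admissible modifications the two modified frameworks are ordinarily equivalent. Importantly, the class $\mathcal{M}(\F,\G)$ depends only on the \emph{syntax} of $\F$ and $\G$ (which attacks and arguments they have), not on the chosen semantics or on whether we use extensions or labellings. Therefore it suffices to prove the ``atomic'' statement: for any two finite AFs $\F',\G'$ and any $\sigma$ in the list, $\Lab_\sigma(\F')=\Lab_\sigma(\G')$ implies $\Ext_\sigma(\F')=\Ext_\sigma(\G')$ --- but this is exactly Lemma~\ref{pro:labext}.

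Then the argument is a short case distinction over $M\in\{W,L,E,N,S,ND,D,LD,U\}$, each case identical in structure. I would carry it out for one representative case, say $M=E$ (arbitrary expansion), and remark that the others are verbatim analogous with $\F\dcup\AH$ replaced by the appropriate modified framework. So: assume $\AF\equiv^{\Lab_\sigma}_E\AG$. Let $\AH$ be an arbitrary AF. By hypothesis $\Lab_\sigma(\AF\dcup\AH)=\Lab_\sigma(\AG\dcup\AH)$, hence by Lemma~\ref{pro:labext} we get $\Ext_\sigma(\AF\dcup\AH)=\Ext_\sigma(\AG\dcup\AH)$, i.e.\ $\AF\dcup\AH\equiv^{\Ext_\sigma}\AG\dcup\AH$. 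Since $\AH$ was arbitrary, $\AF\equiv^{\Ext_\sigma}_E\AG$. For $M=N,S,W,L$ one restricts $\AH$ to the relevant subclass of expansions exactly as in Definition~\ref{def:equivalence}; for $M=U,D,ND,LD$ one replaces $\AF\dcup\AH$ by $(\AF\sm[B,S])\dcup\AH$, $\AF\sm[B,S]$, $\AF\sm B$, $\AF\sm S$ respectively and quantifies over the corresponding pairs/sets; in every case Lemma~\ref{pro:labext} applied to the modified frameworks gives the extension-level equality, and the universal quantifier is simply carried through.

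There is essentially no obstacle here --- the statement is a clean ``definition-chasing'' corollary once Lemma~\ref{pro:labext} is in hand; the only thing to be slightly careful about is to observe that the \emph{admissible modification classes} in Definition~\ref{def:equivalence} are phrased purely syntactically (e.g.\ ``$\AF\preceq_N\AF\dcup\AH$ and $\AG\preceq_N\AG\dcup\AH$'', ``$A(\AH)\subseteq A(\AF\dcup\AG)$'', ``any pair $[B,S]$''), so that the quantifier ranges over exactly the same set of modifications in the labelling and extension versions, and therefore restricting to that set on the labelling side transfers without change to the extension side. I would write this remark explicitly so the reader sees why the case distinction is genuinely uniform, and then present the $M=E$ case in full and note ``the remaining eight cases are proved analogously, replacing the modified framework $\AF\dcup\AH$ by the appropriate construction from Definition~\ref{def:equivalence} and applying Lemma~\ref{pro:labext} to it.''
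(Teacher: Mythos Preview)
Your proposal is correct and follows essentially the same approach as the paper: both arguments reduce the claim to Lemma~\ref{pro:labext} applied to the modified frameworks, relying on the fact that the admissible modifications for each $M$ are defined purely syntactically. The only cosmetic difference is that the paper argues by contrapositive (a scenario distinguishing extensions also distinguishes labellings), whereas you argue directly; this is the same proof read in the other direction.
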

\begin{proof} We show the contrapositive. Assume $\AF\not\equiv_{M}^{\Ext_{\sigma}}\!\AG$. This means, there is a certain scenario $S$ according to $M$, s.t.\ $\Ext_{\sigma}(S(\AF)) \neq \Ext_{\sigma}(S(\AG))$.\footnote{For instance, in case of expansion equivalence (i.e.\ M = E) a scenario $S$ is simply the union with a further AF $\AH$, i.e.\ $S(\AF) = \AF\dcup \AH$ and $S(\AG) = \AG\dcup \AH$.} Consequently, $\Lab_{\sigma}(S(\AF)) \neq \Lab_{\sigma}(S(\AG))$ (Lemma~\ref{pro:labext}) proving $\AF\not\equiv_{M}^{\Lab_{\sigma}}\!\AG$. 
\end{proof}

In Example~\ref{ex:lab} we have seen that the converse direction does not hold in general. Nevertheless, there is huge number of equivalence notions where labelling-based and extension-based versions do indeed coincide (cf.\ Figure~\ref{fig:labelrel} for an overview).


\subsection{Coincidences of Extension-based and \mbox{Labelling-based} Versions}

Remember that the identity relation is the finest equivalence relation. Furthermore, it is already shown that deletion, local deletion as well as update equivalence w.r.t.\ $\Ext_{\sigma}$ collapse to identity (see Figure~\ref{fig:rel}). Consequently, applying the fundamental relation stated in Theorem~\ref{the:mainrelation} we obtain the identical characterization results w.r.t.\ labelling-based semantics.

\begin{theorem}[\cite{Bau16}] \label{the:del} For finite AFs $\AF$ and $\AG$, a scenario $M\in\{D,LD,U\}$ and a semantics $\sigma\in\{\stb,\semi,\eag,\adm,\prf,\id,\grd,\com\}$ we have,
 $$\AF\equiv_{M}^{\Lab_{\sigma}}\AG \iff \AF=\AG.$$
\end{theorem}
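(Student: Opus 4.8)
The plan is to derive this result as a straightforward corollary of the machinery already assembled, avoiding any fresh combinatorial work. First I would recall the two directions that must be shown: the ``$\Leftarrow$'' direction is immediate, since if $\AF = \AG$ then trivially $\AF \equiv^{\Lab_\sigma}_M \AG$ for any scenario $M$. The substance lies in ``$\Rightarrow$''. Here the key observation is the chain of implications that the preceding development supplies: by Theorem~\ref{the:mainrelation}, labelling-based equivalence implies extension-based equivalence, so $\AF \equiv^{\Lab_\sigma}_M \AG$ gives $\AF \equiv^{\Ext_\sigma}_M \AG$ for each $M \in \{D, LD, U\}$. Now Corollary~\ref{cor:relDE} (the extension-based collapse result) tells us that for every semantics $\sigma \in \{\stg,\stb,\semi,\eag,\adm,\prf,\id,\grd,\com,\nav,\cfzwei,\stgzwei\}$ we have $\AF \equiv^{\Ext_\sigma}_U \AG \ToT \AF \equiv^{\Ext_\sigma}_D \AG \ToT \AF \equiv^{\Ext_\sigma}_{LD} \AG \ToT \AF = \AG$. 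Since all eight semantics in the present theorem are contained in that list, we conclude $\AF = \AG$. Stringing these together yields the biconditional.

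Concretely I would present it as follows. Fix $\sigma \in \{\stb,\semi,\eag,\adm,\prf,\id,\grd,\com\}$ and $M \in \{D, LD, U\}$. The direction $\AF = \AG \Rightarrow \AF \equiv^{\Lab_\sigma}_M \AG$ holds because every scenario applied to syntactically identical frameworks yields syntactically identical results, hence identical $\sigma$-labellings. For the converse, suppose $\AF \equiv^{\Lab_\sigma}_M \AG$. By Theorem~\ref{the:mainrelation}, $\AF \equiv^{\Ext_\sigma}_M \AG$. Because $\sigma$ satisfies conflict-freeness (\m{CF}) --- trivially, since all these semantics return conflict-free sets --- Proposition~\ref{the:identity} (or equivalently the relevant clause of Corollary~\ref{cor:relDE}) applies: update, deletion, and local deletion equivalence w.r.t.\ $\Ext_\sigma$ each coincide with syntactic identity. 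Hence $\AF = \AG$.

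The only point that requires a moment's care is checking that each of the eight semantics here genuinely falls under the hypotheses of the extension-based collapse result --- that is, that each satisfies \m{CF} (needed for Proposition~\ref{the:identity}) and appears in the list of Corollary~\ref{cor:relDE}. Both are immediate: $\stb,\semi,\eag,\adm,\prf,\id,\grd,\com$ are all defined as refinements of conflict-freeness, and all eight appear explicitly in the statement of Corollary~\ref{cor:relDE}. I do not anticipate any real obstacle; the theorem is essentially a transport of the extension-based collapse along the implication of Theorem~\ref{the:mainrelation}, and the proof should occupy only a few lines. If one wished to be fully self-contained one could instead unfold Theorem~\ref{the:mainrelation}'s proof --- noting that $\Lab_\sigma(S(\AF)) = \Lab_\sigma(S(\AG))$ forces $\Ext_\sigma(S(\AF)) = \Ext_\sigma(S(\AG))$ via Lemma~\ref{pro:labext} for every admissible scenario $S$ of type $M$ --- but invoking the already-proved theorem is cleaner.
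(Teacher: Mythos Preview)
Your proposal is correct and follows essentially the same route as the paper: the paper's justification (given in the paragraph preceding the theorem) is precisely that identity is the finest equivalence relation, that deletion/local deletion/update equivalence w.r.t.\ $\Ext_\sigma$ collapse to identity (Corollary~\ref{cor:relDE}), and that Theorem~\ref{the:mainrelation} transports labelling-based equivalence to extension-based equivalence. The only minor imprecision is that Proposition~\ref{the:identity} requires both \m{CF} \emph{and} \m{II}, not just \m{CF}; but since you also invoke Corollary~\ref{cor:relDE}, which already covers all eight semantics explicitly, the argument stands.
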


Analogously to extension-based semantics (cf.\ Fact~\ref{fact:kerneldecisive}) we have that there are combinations of kernels and semantics $\sigma$, s.t.\ the application of a kernel does not vary the set of $\sigma$-labellings.

\begin{fact} \label{lem:somekernel} For any finite AF $\AF$, 

\begin{enumerate}
	\item $\Lab_{\sigma}(\AF) = \Lab_{\sigma}\left(\AF^{\k(\sigma)}\right)$ for $\sigma\in\{\com,\stb,\grd\}$ and
	\item $\Lab_{\tau}(\AF) = \Lab_{\tau}\left(\AF^{\k(\adm)}\right)$ for $\tau\in\{\semi,\eag,\prf,\id\}$.
\end{enumerate}
 
\end{fact}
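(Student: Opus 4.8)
The statement to prove is Fact~\ref{lem:somekernel}, which asserts two things: first, that $\Lab_\sigma(\AF) = \Lab_\sigma(\AF^{\k(\sigma)})$ for $\sigma \in \{\com,\stb,\grd\}$; second, that $\Lab_\tau(\AF) = \Lab_\tau(\AF^{\k(\adm)})$ for $\tau \in \{\semi,\eag,\prf,\id\}$. Since this is labelled a ``Fact'' rather than a theorem, the expected approach is to reduce it to already-established results rather than to argue from scratch.

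The plan is to leverage the one-to-one correspondence between $\sigma$-labellings and $\sigma$-extensions together with Fact~\ref{fact:kerneldecisive}. First I would recall that for each semantics $\sigma$ in either list, every $\sigma$-extension $\E$ of an AF induces a unique $\sigma$-labelling $\E^\Lab = (\E, \E^+, A\setminus\E^\oplus)$ (Fact~\ref{fact1}, item~2), every $\sigma$-labelling $\L$ induces the $\sigma$-extension $\LI$ (Fact~\ref{fact1}, item~1), and these maps are mutually inverse (Facts~\ref{fact1} and~\ref{fact2}, noting that admissible semantics is \emph{not} among the $\tau$'s so the one-to-one correspondence holds for all semantics listed). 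The key observation is then that the labelling $\E^\Lab$ is computed purely from $\E$, the attack structure out of $\E$ ($\E^+$), and the range ($\E^\oplus$). Since the classical kernels are node-preserving (Fact~\ref{fact:kernel}, item~1), so $A(\AF) = A(\AF^{\k(\sigma)})$, I would need to check that for the relevant $\E$ the out-set $\E^+$ and hence the range $\E^\oplus$ is the same in $\AF$ and in $\AF^{\k(\sigma)}$. This is where the content lies, and I'd want to be slightly careful: the kernels delete attacks $(a,b)$ with $a$ self-defeating, which does affect $\E^+$ for sets $\E$ containing such an $a$ --- but no $\sigma$-extension (for the semantics at hand) contains a self-defeating argument, so for the labellings of interest the out-set is unchanged.

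Concretely, the key steps in order: (1) Fix $\sigma \in \{\com,\stb,\grd\}$ (resp.\ $\tau$ with kernel $\k(\adm)$). By Fact~\ref{fact:kerneldecisive}, $\Ext_\sigma(\AF) = \Ext_\sigma(\AF^{\k(\sigma)})$ (resp.\ $\Ext_\tau(\AF) = \Ext_\tau(\AF^{\k(\adm)})$). (2) By Fact~\ref{fact2} item~3, $\card{\Lab_\sigma(\AF)} = \card{\Ext_\sigma(\AF)} = \card{\Ext_\sigma(\AF^{\k(\sigma)})} = \card{\Lab_\sigma(\AF^{\k(\sigma)})}$, so it suffices to show one inclusion, say $\Lab_\sigma(\AF) \subseteq \Lab_\sigma(\AF^{\k(\sigma)})$. (3) Take $\L \in \Lab_\sigma(\AF)$; by Fact~\ref{fact2} item~2, $\L = (\LI)^\Lab$ where the operation $\cdot^\Lab$ is taken in $\AF$. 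By step~(1), $\LI \in \Ext_\sigma(\AF^{\k(\sigma)})$, so $(\LI)^\Lab_{\AF^{\k(\sigma)}} \in \Lab_\sigma(\AF^{\k(\sigma)})$ by Fact~\ref{fact1} item~2. (4) Show $(\LI)^\Lab_{\AF} = (\LI)^\Lab_{\AF^{\k(\sigma)}}$: the argument set is the same since the kernel is node-preserving; the range $(\LI)^\oplus$ agrees because $\LI$ contains no self-defeating argument (it is conflict-free, indeed admissible or complete depending on $\sigma$), and the kernels only remove attacks originating from self-defeating arguments, so no attack from an element of $\LI$ is deleted, giving $(\LI)^+_{\AF} = (\LI)^+_{\AF^{\k(\sigma)}}$ and hence equal ranges and equal undecided-sets.

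The main obstacle --- more a subtlety than a genuine difficulty --- is step~(4), specifically making sure that the out-set $\E^+$ really is preserved. One must verify that for each semantics under consideration no $\sigma$-extension contains a self-loop (immediate from conflict-freeness, which all these semantics satisfy), and that this is exactly the property needed: the deleted attacks $(a,b)$ all have $(a,a)$ in the attack relation, so $a \notin \E$, so deleting $(a,b)$ does not change $\E^+$. A secondary point of care is simply to confirm that admissible semantics is genuinely excluded from the $\tau$-list (it is), so that Fact~\ref{fact2}'s one-to-one correspondence may be invoked --- without it, the cardinality argument in step~(2) would fail. Given Facts~\ref{fact1}, \ref{fact2}, \ref{fact:kernel} and~\ref{fact:kerneldecisive} are all available, the proof is essentially a bookkeeping assembly of these, and I would write it as such rather than reproving anything.
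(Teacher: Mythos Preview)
The paper states this result as a Fact without proof, so there is no ``paper's own proof'' to compare against; the result is presumably imported from \cite{Bau16}. Your overall strategy---use Fact~\ref{fact:kerneldecisive} to transfer extensions, then argue that the induced labelling $(\LI)^\Lab$ coincides in $\AF$ and $\AF^{\k}$ via preservation of $(\LI)^+$---is exactly the right shape, and the cardinality shortcut in step~(2) is fine.

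There is, however, a genuine gap in step~(4) for the grounded kernel. Your claim that ``the kernels only remove attacks originating from self-defeating arguments'' is correct for $\k(\stb)$, $\k(\adm)$ and $\k(\com)$, but \emph{false} for $\k(\grd)$: by Definition~\ref{def:kernel}, the attack $(a,b)$ is deleted under $\k(\grd)$ when $(b,b)\in R$ and $\{(a,a),(b,a)\}\cap R\neq\emptyset$. So if $(b,b),(b,a)\in R$ but $(a,a)\notin R$, the attack $(a,b)$ is removed even though $a$ is not self-defeating---and such an $a$ can perfectly well belong to the grounded extension $E$. Hence your argument that ``no attack from an element of $\LI$ is deleted'' fails here, and with it the direct conclusion $E^+_{\AF}=E^+_{\AF^{\k(\grd)}}$.

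The equality does still hold, but it needs the iterative structure of the grounded extension. Suppose $a\in E$ with $(a,b)$ deleted; then $(b,a)\in R$, so by admissibility some $c\in E$ attacks $b$. If $(c,b)$ is also deleted, then $(b,c)\in R$ and we repeat. Using that $E=\bigcup_k\Gamma^k(\emptyset)$ and tracking the minimal level of each successive defender, the levels strictly decrease, so the process terminates at some $e\in E$ with $(e,b)\in R$ and $(b,e)\notin R$; this attack survives in $\AF^{\k(\grd)}$, giving $b\in E^+_{\AF^{\k(\grd)}}$. You should replace the blanket claim in step~(4) with a case split: for $\k(\stb),\k(\adm),\k(\com)$ your original argument works verbatim; for $\k(\grd)$ insert the descent argument just sketched.
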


The fact above is the decisive property which allows one to carry over further kernel-based characterization results for extension-based semantics to their labelling-based version. In order to show this result it was necessary to find a condition for equality of two complete labellings of different AFs. Remember that two complete labellings of the same framework are identical if and only if they possess the same in-labelled arguments (Fact~\ref{fact2}). In case of different AFs we have to require additionally that both frameworks share the same arguments and the same range w.r.t.\ the set of in-labelled arguments.

\begin{fact} \label{pro:equalitylabel} Given two finite  AFs $\AF$ and $\AG$ as well as $\L\in\Lab_{\com}(\AF)$ and \linebreak $\M\in\Lab_{\com}(\AG)$. We have $\L = \M$ iff simultaneously $A(\AF) = A(\AG)$, $\LI = \MI$ and $R^+_{\AF}(\LI) = R^+_{\AG}(\MI)$. 
\end{fact}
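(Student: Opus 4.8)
The statement to prove is Fact~\ref{pro:equalitylabel}: two complete labellings $\L\in\Lab_{\com}(\AF)$ and $\M\in\Lab_{\com}(\AG)$ of possibly different AFs are equal if and only if $A(\AF)=A(\AG)$, $\LI=\MI$, and $R^+_{\AF}(\LI)=R^+_{\AG}(\MI)$.

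The plan is to prove both directions by unpacking the definition of a 3-valued complete labelling in terms of the ranges of its in-labelled set. First I would recall (or establish) the structural description of a complete labelling: if $\L=(\LI,\LO,\LU)\in\Lab_{\com}(\AF)$, then $\LI$ is a complete extension of $\AF$, and by the canonical extension-to-labelling map of Fact~\ref{fact1} we have $\L=(\LI)^{\Lab}=(\LI,\LI^{+}_{\AF},A(\AF)\setminus\LI^{\oplus}_{\AF})$, i.e.\ $\LO=R^+_{\AF}(\LI)$ and $\LU=A(\AF)\setminus(\LI\cup R^+_{\AF}(\LI))$. The key point here is that a complete labelling is \emph{entirely determined} by the pair $(A(\AF),\LI)$ together with the attack structure only through the derived set $R^+_{\AF}(\LI)$; this is exactly statement~2 of Fact~\ref{fact2} ($\Lab\circ I=\iden$) combined with statement~2 of Fact~\ref{fact1} ($\E^{\Lab}\in\Lab_\sigma(\AF)$), specialised to $\sigma=\com$.

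For the ``only if'' direction: assume $\L=\M$. Since the three components of a labelling of $\AF$ partition $A(\AF)$ (recall $A(\F)=\LI\cup\LO\cup\LU$ for labellings, as stressed in the text), we get $A(\AF)=\LI\cup\LO\cup\LU=\MI\cup\MO\cup\MU=A(\AG)$ immediately. Equality of the first components gives $\LI=\MI$. For the range condition, use the structural description above: $\LO=R^+_{\AF}(\LI)$ and $\MO=R^+_{\AG}(\MI)$, so $\L=\M$ forces $R^+_{\AF}(\LI)=\LO=\MO=R^+_{\AG}(\MI)$.

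For the ``if'' direction: assume $A(\AF)=A(\AG)$, $\LI=\MI$, and $R^+_{\AF}(\LI)=R^+_{\AG}(\MI)$; call the common argument set $A$, the common in-set $E$, and the common out-set $O=R^+_{\AF}(E)=R^+_{\AG}(E)$. By the structural description, $\L=(E,O,A\setminus(E\cup O))$ and $\M=(E,O,A\setminus(E\cup O))$, hence $\L=\M$. The main obstacle — really the only thing requiring care — is justifying the structural description $\L=(\LI)^{\Lab}$ for complete labellings; this is where one must invoke that $\com$ belongs to the list of semantics in Fact~\ref{fact2} (so that $\Lab\circ I=\iden$), and note that admissible labellings are \emph{not} covered, which is precisely why the analogous statement can fail for $\adm$. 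I would present the argument cleanly by first stating the structural lemma as a one-line consequence of Facts~\ref{fact1} and~\ref{fact2}, then deriving both implications in a couple of lines each. No nontrivial calculation is involved; the content is bookkeeping about how the out- and undec-components of a complete labelling are functions of $A(\AF)$ and $\LI$ alone.
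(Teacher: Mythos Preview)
Your proof is correct. The paper states this result as a Fact without proof (it is attributed to \cite{Bau16}), but the surrounding discussion explicitly points to Fact~\ref{fact2} as the key ingredient for the same-framework case, and your argument via the structural description $\L=(\LI)^{\Lab}$ from Facts~\ref{fact1} and~\ref{fact2} is precisely the intended route.
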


Please observe that admissible labellings do not fulfill Fact~\ref{pro:equalitylabel}. Consider for instance again the AF $\F$ depicted in Example~\ref{ex:lab} and its two admissible labellings $(\{a\},\{b\},\emptyset)$ and $(\{a\},\emptyset,\{b\})$.


We proceed with the main coincidence theorem. It stipulates that several expansion equivalence relations as well as weaker notions do not distinguish between their labelling-based and extension-based version. This means, kernel-based characterization results (depicted in Figure~\ref{fig:prelimrel}) carry over to labelling-based semantics. Similarly to extension-based semantics we present an overview of characterizing kernels at the end of this section (cf.\ Figure~\ref{fig:labelrel}).

\begin{theorem}[\cite{Bau16}] \label{the:coincidence} Given finite AFs $\AF$ and $\AG$. We have, 
\begin{enumerate}
	\item $\AF\equiv_{M}^{\Ext_{\sigma}}\AG \iff \AF\equiv_{M}^{\Lab_{\sigma}}\AG$ for $\sigma\in\{\stb,\semi,\eag,\prf,\id,\grd,\com\}, M\in\{E,N\}$,
	\item $\AF\equiv_{L}^{\Ext_{\sigma}}\AG \iff \AF\equiv_{L}^{\Lab_{\sigma}}\AG$ for $\sigma\in\{\semi,\eag,\prf,\id\}$ and 
	\item $\AF\equiv_{S}^{\Ext_{\sigma}}\AG \iff \AF\equiv_{S}^{\Lab_{\sigma}}\AG$ for $\sigma\in\{\stb,\semi,\eag\}$.
\end{enumerate}
\end{theorem}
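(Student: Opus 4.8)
The statement to prove is Theorem~\ref{the:coincidence}, which asserts the coincidence of labelling-based and extension-based equivalence for various notions $M$ and semantics $\sigma$. One direction is already free: by Theorem~\ref{the:mainrelation} we always have $\AF\equiv_{M}^{\Lab_{\sigma}}\AG \To \AF\equiv_{M}^{\Ext_{\sigma}}\AG$. So the real work is in the converse direction, i.e.\ showing that extension-based $M$-equivalence implies labelling-based $M$-equivalence for the listed cases. The plan is to reduce each converse implication to two ingredients: (i) the known kernel characterization of the extension-based notion (Theorems~\ref{the:strong} and~\ref{the:strong2}), and (ii) the insensitivity of the relevant labelling-based semantics to the same kernel (Fact~\ref{lem:somekernel}).

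First I would treat item~1 (expansion and normal expansion equivalence) for $\sigma\in\{\stb,\com,\grd\}$, using the admissible kernel variants for $\sigma\in\{\semi,\eag,\prf,\id\}$ analogously. Suppose $\AF\equiv_{E}^{\Ext_{\sigma}}\AG$. By Theorem~\ref{the:strong} this means $\AF^{\k} = \AG^{\k}$ for the appropriate kernel $\k$ (namely $\k(\sigma)$ for $\sigma\in\{\stb,\com,\grd\}$ and $\k(\adm)$ for the preferred-family semantics). Now take an arbitrary AF $\AH$. Since all classical kernels are $\dcup$-robust (Fact~\ref{fact:kernel}, Item~5), we get $(\AF\dcup\AH)^{\k} = (\AG\dcup\AH)^{\k}$. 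Applying Fact~\ref{lem:somekernel} twice yields $\Lab_{\sigma}(\AF\dcup\AH) = \Lab_{\sigma}\bigl((\AF\dcup\AH)^{\k}\bigr) = \Lab_{\sigma}\bigl((\AG\dcup\AH)^{\k}\bigr) = \Lab_{\sigma}(\AG\dcup\AH)$, which is exactly $\AF\equiv_{E}^{\Lab_{\sigma}}\AG$. For normal expansion equivalence the argument is the same except one restricts $\AH$ to normal-expansion witnesses; since $\equiv_E^{\Ext_\sigma}$ and $\equiv_N^{\Ext_\sigma}$ already coincide (Theorem~\ref{the:strong2}), and likewise on the labelling side by the $\dcup$-robustness argument applied to normal expansions, the coincidence follows. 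The same template handles items~2 and~3: for local expansion equivalence one uses that $\equiv_L^{\Ext_\sigma}$ coincides with $\equiv_E^{\Ext_\sigma}$ for $\sigma\in\{\semi,\eag,\prf,\id\}$ (Theorem~\ref{the:strong2}, Item~2) and then reduces to item~1 via Theorem~\ref{the:mainrelation}; for strong expansion equivalence one uses that $\equiv_S^{\Ext_\sigma}$ coincides with $\equiv_E^{\Ext_\sigma}$ for $\sigma\in\{\stb,\semi,\eag\}$ (Theorem~\ref{the:strong2}, Item~3), again reducing to item~1.

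The only genuinely delicate point I anticipate is making sure the reductions via Theorem~\ref{the:strong2} go through \emph{on the labelling side} and not merely on the extension side. Concretely: once we know $\AF\equiv_{L}^{\Ext_{\sigma}}\AG$, Theorem~\ref{the:strong2} tells us this is the same as $\AF\equiv_{E}^{\Ext_{\sigma}}\AG$, hence (by item~1 just proved) the same as $\AF\equiv_{E}^{\Lab_{\sigma}}\AG$; but $\equiv_E^{\Lab_\sigma}$ trivially implies $\equiv_L^{\Lab_\sigma}$ since local expansions form a subclass of arbitrary ones, and the reverse direction $\equiv_L^{\Lab_\sigma}\To\equiv_L^{\Ext_\sigma}$ is Theorem~\ref{the:mainrelation}. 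So the chain $\equiv_L^{\Ext_\sigma}\iff\equiv_E^{\Ext_\sigma}\iff\equiv_E^{\Lab_\sigma}\To\equiv_L^{\Lab_\sigma}\To\equiv_L^{\Ext_\sigma}$ closes the loop and forces all four notions to coincide. The analogous closed chain works for strong expansion equivalence. I would present the argument once in full for the expansion case of $\sigma=\stb$ and then indicate that $\com,\grd$ are identical with $\k(\stb)$ replaced by $\k(\com),\k(\grd)$, while the preferred family uses $\k(\adm)$ and Fact~\ref{lem:somekernel}, Item~2; the remaining items then follow from the coincidence results of Theorem~\ref{the:strong2} combined with Theorem~\ref{the:mainrelation} as just sketched. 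The main obstacle, such as it is, is purely bookkeeping: confirming that every coincidence in Theorem~\ref{the:strong2} that is invoked covers exactly the semantics listed in each item of Theorem~\ref{the:coincidence}, and that the kernel-insensitivity facts for labellings (Fact~\ref{lem:somekernel}) and the robustness facts (Facts~\ref{fact:kernel}) are available for precisely those kernels --- which, by inspection, they are.
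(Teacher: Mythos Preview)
Your proposal is correct and follows essentially the same approach as the paper: one direction is Theorem~\ref{the:mainrelation}, and for the converse you combine the kernel characterizations (Theorem~\ref{the:strong}), $\dcup$-robustness of the classical kernels (Fact~\ref{fact:kernel}, Item~5), and kernel-insensitivity of the labelling-based semantics (Fact~\ref{lem:somekernel}), then leverage the extension-based coincidences of Theorem~\ref{the:strong2} together with the trivial inclusions $\equiv_E^{\Lab_\sigma}\subseteq\equiv_M^{\Lab_\sigma}$ to close the loop for $M\in\{N,L,S\}$. This is exactly the route the paper indicates in the discussion preceding the theorem; the paper itself does not spell out a proof but cites \cite{Bau16} and points to Fact~\ref{lem:somekernel} as ``the decisive property'' enabling the transfer, which is precisely how you use it.
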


\subsection{\mbox{Non-Coincidence of Extension-based and Labelling-based Versions}}
We now leave the realm of uniformity of extension-based and labelling-based characterizations. This section is divided into three parts. We start with characterization theorems for admissible labellings. In particular, we will see that the admissible kernel (originally introduced to characterize equivalence notions w.r.t.\ admissible extension-based semantics) does not serve as characterizing kernel for admissible labellings. We then proceed with strong expansion equivalence w.r.t.\ labellings. We will see that the remaining notions are characterizable via traditional kernels instead of $\sigma$-*-kernels. In the third part we consider normal deletion equivalence w.r.t.\ labelling-based semantics. In contrast to their extension-based versions where many notions has defied any attempt of solving, we present characterization theorems based on traditional kernels for all eight considered semantics.

\subsubsection{Expansion Equivalence w.r.t.\ Admissible Labellings}

Expansion equivalence as well as its local, normal and strong versions w.r.t.\ admissible extensions are characterizable through the admissible kernel. The following example shows that this assertion does not hold in case of admissible labellings.

\begin{example}	\label{ex:admlab}	The following two AFs possess the same admissible kernels, namely $\AF^{\k(\adm)} = \AG^{\k(\adm)} = \AF$. Consequently, applying characterization theorems for extension-based semantics we obtain $\AF\equiv^{\Ext_{\adm}}_M\AG$ for $M\in\{L,E,N\}$ (cf.\ Figure~\ref{fig:extrel}).  

\begin{tikzpicture}
    
		\node (A1'') at (0,-2.5) [circle,minimum size=0.7cm, thick, draw, label = left:$\AF:$]{$a$};
    \node (B1'') at (1.2,-2.5) [circle, minimum size=0.7cm, thick, draw]{$b$};
		\node (A2'') at (3.6,-2.5) [circle, minimum size=0.7cm, thick, draw, label = left:$\AG:$] {$a$};
    \node (B2'') at (4.8,-2.5) [circle, minimum size=0.7cm,  thick, draw] {$b$};

\draw[->, thick] (A1'') to [thick,bend right] (B1'');
\draw[->, thick] (B1'') to [thick,bend right] (A1'');
\draw[->, thick] (B2'') to [thick,bend right] (A2'');
\draw[->, thick] (A1'') to [thick, loop, distance=5mm,in=50, out=130,looseness=5] (A1'');
\draw[->, thick] (A2'') to [thick, loop, distance=5mm,in=50, out=130,looseness=5] (A2'');

\end{tikzpicture} 

Observe that $(\{b\},\emptyset,\{a\})\in\Lab_{\adm}(\AG)\sm\Lab_{\adm}(\AF)$ because the argument $a$ cannot be undecided in $\AF$ since it attacks the in-labelled argument $b$. Thus $\AF\not\equiv^{\Lab_{\adm}}_M\AG$ for $M\in\{L,E,N,S\}$.
\end{example}

Let us assume that the equivalence notions considered in the example above are characterizable through a certain kernel $\k$. Due to the fundamental relation (Theorem~\ref{the:mainrelation}) and the characterization results w.r.t.\ admissible extensions (Figure~\ref{fig:extrel}), we already know that the kernel $\k$ has to satisfy the following implication: \mbox{$\F^\k = \G^\k \To \F^{\k(\adm)} = \G^{\k(\adm)}$} for any two AFs $\F$ and $\G$. This means, we are looking for a weaker kernel than the admissible one in the sense that first, everything which is redundant w.r.t.~$\k$ has to be redundant w.r.t.\ the admissible kernel too; and second, an attack from $a$ to $b$ has to survive even if $a$ is self-defeating and $b$ counterattacks $a$. One candidate for $\k$ is the complete kernel since redundancy w.r.t.\ the complete kernel implies redundancy w.r.t.\ to the admissible one, and furthermore, it deletes an attack between two arguments if and only if both are self-defeating. And indeed, it was shown that expansion equivalence as well as its local, normal and strong variant w.r.t.\ admissible labellings are characterizable through the complete kernel as stated by the following theorem. 


\begin{theorem}[\cite{Bau16}] \label{the:adcomplete} Given finite AFs $\AF$ and $\AG$. We have,
$$\AF\equiv_{M}^{\Lab_{\adm}}\AG \iff \AF^{\k(\com)} = \AG^{\k(\com)} \text{ with } M\in\{L,E,N,S\}.$$
\end{theorem}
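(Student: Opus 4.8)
The plan is to prove the four equivalences $\AF\equiv_M^{\Lab_\adm}\AG \iff \AF^{\k(\com)}=\AG^{\k(\com)}$ for $M\in\{L,E,N,S\}$ at once, exploiting the preliminary containments from Figure~\ref{fig:prelimrel}: since $\equiv_E \subseteq \equiv_N$, $\equiv_E\subseteq\equiv_L$ and $\equiv_N\subseteq\equiv_S$, it suffices to show (i) $\AF^{\k(\com)}=\AG^{\k(\com)} \implies \AF\equiv_E^{\Lab_\adm}\AG$, i.e.\ the strongest notion follows from kernel identity, and (ii) $\AF\equiv_S^{\Lab_\adm}\AG \implies \AF^{\k(\com)}=\AG^{\k(\com)}$, i.e.\ kernel identity follows from the weakest notion. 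Together these pin all four notions between the two endpoints and force the biconditional for every $M$ in the list.

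For direction (i), the first step is to record that the complete kernel is insensitive to admissible labellings, i.e.\ $\Lab_\adm(\F)=\Lab_\adm\!\left(\F^{\k(\com)}\right)$ for every finite $\F$. This is the labelling analogue of Fact~\ref{fact:kerneldecisive}; I would verify it directly from Definition~\ref{def:kernel}, observing that an attack $(a,b)$ deleted by $\k(\com)$ has both $a$ and $b$ self-defeating, so $b$ can never be legally \texttt{in} and $a$ can never be legally \texttt{in}, whence the presence or absence of $(a,b)$ changes neither the \texttt{in}/\texttt{out}/\texttt{undec} status constraints of any admissible labelling (a self-defeating attacker that is \texttt{undec} does not force its target \texttt{out}, and a self-defeating target that is \texttt{undec} is unaffected). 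The second step is $\dcup$-robustness of $\k(\com)$ (Fact~\ref{fact:kernel}, item~5): from $\AF^{\k(\com)}=\AG^{\k(\com)}$ we get $(\AF\dcup\AH)^{\k(\com)}=(\AG\dcup\AH)^{\k(\com)}$ for every finite $\AH$. Chaining these, $\Lab_\adm(\AF\dcup\AH)=\Lab_\adm((\AF\dcup\AH)^{\k(\com)})=\Lab_\adm((\AG\dcup\AH)^{\k(\com)})=\Lab_\adm(\AG\dcup\AH)$, which is exactly $\AF\equiv_E^{\Lab_\adm}\AG$.

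For direction (ii), suppose $\AF\equiv_S^{\Lab_\adm}\AG$ but $\AF^{\k(\com)}\neq\AG^{\k(\com)}$; I would derive a contradiction by exhibiting a \emph{strong} expansion that separates them at the labelling level. By the fundamental relation (Theorem~\ref{the:mainrelation}) $\equiv_S^{\Lab_\adm}$ implies $\equiv_S^{\Ext_\adm}$, and by Figure~\ref{fig:extrel} the latter is characterized by $\k^*(\adm)$, so at least $\AF^{\k^*(\adm)}=\AG^{\k^*(\adm)}$; in particular (Fact~\ref{fact:kernel}) the two AFs share arguments and self-loops. Since the $\k^*(\adm)$-kernels agree but the $\k(\com)$-kernels differ, the discrepancy is an attack $(a,b)$ with $a\neq b$ present in one $\k(\com)$-kernel, absent in the other, yet \emph{not} present in either $\k^*(\adm)$-kernel; tracing the definitions, $(a,b)$ must be an attack with $b$ self-defeating, $(b,a)\notin R$, that $\k^*(\adm)$ strips (via its second disjunct) but $\k(\com)$ keeps because $a$ is not self-defeating. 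The separating gadget is then a single fresh argument $c$ attacking $b$ (a strong expansion, since $c$ is a strong argument and no old argument attacks $c$): in the framework containing $(a,b)$, $c$ defends itself trivially and $\{a\}\cup\{c\}$-style admissible labellings behave differently from the framework where $(a,b)$ is absent --- concretely, the label forced on $a$ or on $b$ when $c$ is added differs, because in one AF the path $a\to b$ interacts with $c\to b$ and in the other it does not. I would spell out one explicit pair of admissible labellings of $\AF\dcup\AH$ versus $\AG\dcup\AH$ (with $\AH=(\{b,c\},\{(c,b)\})$ or a symmetric variant) that differ, contradicting $\AF\equiv_S^{\Lab_\adm}\AG$. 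This is the technically delicate step: one must be careful that the witnessing expansion is genuinely \emph{strong}, and that the separating labellings are admissible in both AFs rather than merely conflict-free; the case analysis splitting on whether $a$ itself is attacked in the base AF, and whether $c=a$ or $b$ coincidences are possible, is where the real work lies, and it is the main obstacle in the proof.

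Once both directions are in hand, the string of implications $\AF^{\k(\com)}=\AG^{\k(\com)} \implies \AF\equiv_E^{\Lab_\adm}\AG \implies \AF\equiv_N^{\Lab_\adm}\AG \implies \AF\equiv_S^{\Lab_\adm}\AG \implies \AF^{\k(\com)}=\AG^{\k(\com)}$, together with $\equiv_E\subseteq\equiv_L\subseteq$ (ordinary) and $\equiv_E\subseteq\equiv_L$ giving $\AF^{\k(\com)}=\AG^{\k(\com)}\implies\AF\equiv_L^{\Lab_\adm}\AG\implies\AF\equiv_E^{\Lab_\adm}\AG$ back into the cycle, closes all four biconditionals simultaneously. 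I would present the final write-up as: a lemma establishing $\Lab_\adm(\F)=\Lab_\adm(\F^{\k(\com)})$, a short deduction of the ``$\Leftarrow$'' direction for $M=E$ (hence for all $M$), and then the contrapositive argument for the ``$\Rightarrow$'' direction for $M=S$ (hence for all $M$), with the gadget construction carried out in full only for the critical attack type identified above.
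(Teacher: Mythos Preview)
The paper does not prove this theorem in the text; it is cited from \cite{Bau16} with only heuristic motivation (the paragraph preceding the theorem explains why $\k(\com)$ is the natural candidate). So there is no paper proof to compare against, and I evaluate your plan on its own merits.

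Your sandwich strategy for $M\in\{E,N,S\}$ is sound: the lemma $\Lab_\adm(\F)=\Lab_\adm(\F^{\k(\com)})$ together with $\dcup$-robustness of $\k(\com)$ (Fact~\ref{fact:kernel}) gives the ``$\Leftarrow$'' direction for $E$ and hence for $N,S$, and proving ``$\Rightarrow$'' for $S$ by contraposition closes $E,N,S$ simultaneously. However, two genuine gaps remain.

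First, the $M=L$ case is not handled. Your final paragraph asserts $\AF\equiv_L^{\Lab_\adm}\AG\implies\AF\equiv_E^{\Lab_\adm}\AG$, but the preliminary containment goes the other way ($\equiv_E\subseteq\equiv_L$), and there is no general relation between $\equiv_L$ and $\equiv_S$. Your separating gadget introduces a \emph{fresh} argument $c$, so it is not a local expansion and cannot witness $\AF\not\equiv_L^{\Lab_\adm}\AG$. You need a separate argument for $L$: starting from $\AF^{\k(\adm)}=\AG^{\k(\adm)}$ (which you do get via Theorem~\ref{the:mainrelation} and the $L$-row of Figure~\ref{fig:extrel}) and $\AF^{\k(\com)}\neq\AG^{\k(\com)}$, the discrepancy is necessarily an attack $(a,b)$ with $(a,a)\in R$, $(b,b)\notin R$, $(b,a)\in R$ --- precisely the configuration of Example~\ref{ex:admlab} --- and you must build a \emph{local} expansion (adding only attacks among existing arguments) that exposes the difference in admissible labellings.

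Second, your case analysis in direction~(ii) is incomplete. Tracing the definitions of $\k^*(\adm)$ and $\k(\com)$, the discrepancy attack $(a,b)$ can arise from \emph{either} disjunct of $\k^*(\adm)$: the first disjunct gives $(a,a)\in R$, $(b,b)\notin R$, $(b,a)\in R$, while the second gives $(b,b)\in R$, $(a,a)\notin R$ plus the $\forall c$ condition. You only treat the second disjunct; the first is exactly the case (i) above, and your gadget (fresh $c$ attacking $b$) is tailored to a self-defeating $b$, which fails when $b$ is not self-defeating. Both cases need their own witnessing strong expansion, and in each you must exhibit an explicit admissible labelling present in one framework and absent in the other.
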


\subsubsection{Strong Expansion Equivalence for Preferred, Ideal, Grounded and Complete Labellings}
In this subsection we present characterization theorems for strong expansion equivalence w.r.t.\ labelling-based preferred, ideal, grounded and complete semantics. Remember that in case of strong expansions a former attack between old arguments will never become a counterattack to an added attack. Consequently, in contrast to arbitrary expansions former attacks do not play a role with respect to being a potential defender of an added argument. The context-sensitive $\sigma$-*-kernels took these considerations into account and allow for more deletions than their classical counterparts. 

\begin{example} According to Definition~\ref{def:kernel2} we have, $\AF^{\k^*(\sigma)} = \AG^{\k^*(\sigma)}$ for any semantics $\sigma\in\{\adm,\grd,\com\}$. More precisely, the attacks $(a,b)$ in $\F$ as well as $(c,b)$ in $\G$ are redundant w.r.t.\ all three $\sigma$-*-kernels. This means, in consideration of Figure~\ref{fig:extrel} both frameworks are strong expansion equivalent w.r.t.\ the extension-based versions of preferred, ideal, grounded and complete semantics.

\begin{tikzpicture}

		\node (A5) at (0,-2.5) [circle, minimum size=0.7cm, thick, draw, label = left:$\F:$] {$a$};
    \node (B5) at (1.2,-2.5) [circle, minimum size=0.7cm,  thick, draw] {$b$};
		\node (C5) at (2.4,-2.5) [circle, minimum size=0.7cm,  thick, draw] {$c$};
		
		\node (A1) at (4.8,-2.5) [circle, minimum size=0.7cm, thick, draw, label = left:$\G:$] {$a$};
    \node (B1) at (6,-2.5) [circle, minimum size=0.7cm,  thick, draw] {$b$};
		\node (C1) at (7.2,-2.5) [circle, minimum size=0.7cm,  thick, draw] {$c$};


\draw[->, thick] (A5) to [thick,bend right] (B5);
\draw[->, thick] (C1) to [thick,bend right] (B1);

\draw[->, thick] (B1) to [thick,loop,distance=0.5cm] (B1);

\draw[->, thick] (B5) to [thick,loop,distance=0.5cm] (B5);

\end{tikzpicture}

Consider the following dynamic scenario where a stronger argument than the former ones is added. Formally, we conjoin the AF $\H = (\{c,d\},\{(d,c)\})$ to both frameworks $\F$ and $\G$.

\begin{tikzpicture}

		\node (A5) at (0,-2.5) [circle, minimum size=0.7cm, thick, draw, label = left:$\F\dcup\H:$] {$a$};
    \node (B5) at (1.2,-2.5) [circle, minimum size=0.7cm,  thick, draw] {$b$};
		\node (C5) at (2.4,-2.5) [circle, minimum size=0.7cm,  thick, draw] {$c$};
		\node (D5) at (1.8,-1.3) [circle, minimum size=0.7cm,  thick, gray!50, draw] {$d$};
		
		\node (A1) at (5.8,-2.5) [circle, minimum size=0.7cm, thick, draw, label = left:$\G\dcup\H:$] {$a$};
    \node (B1) at (7,-2.5) [circle, minimum size=0.7cm,  thick, draw] {$b$};
		\node (C1) at (8.2,-2.5) [circle, minimum size=0.7cm,  thick, draw] {$c$};
		\node (D1) at (7.6,-1.3) [circle, minimum size=0.7cm,  thick, gray!50, draw] {$d$};


\draw[->, thick] (A5) to [thick,bend right] (B5);
\draw[->, thick] (C1) to [thick,bend right] (B1);

\draw[->, thick, gray!50] (D5) to [thick,bend left] (C5);
\draw[->, thick, gray!50] (D1) to [thick,bend left] (C1);

\draw[->, thick] (B1) to [thick,loop,distance=0.5cm] (B1);

\draw[->, thick] (B5) to [thick,loop,distance=0.5cm] (B5);

\end{tikzpicture}

Note that both frameworks has to possess the same $\sigma$-extension since \mbox{$\AG\equiv^{\Ext_{\sigma}}_S\!\AH$} for $\sigma\in\{\prf,\id,\grd,\com\}$ is already ensured. Furthermore, we observe $(\{a,d\},\{b,c\},\emptyset)\in\Lab_{\sigma}(\AF\dcup\H)\sm\Lab_{\sigma}(\AG\dcup\H)$ since $b$ cannot be out-labelled in $\AG\dcup\AH$ because there is no in-labelled attacker. This means, $\AF\not\equiv^{\Lab_{\sigma}}_S\AG$ for $\sigma\in\{\prf,\id,\grd,\com\}$. 
 
\end{example}

Analogously to the previous section let us assume that strong expansion equivalence w.r.t.\ the considered labelling-based semantics are characterizable through a certain kernel $\k$. We immediately obtain, \mbox{$\F^\k\!=\!\G^\k\To\F^{\k^*(\sigma)}\!=\!\G^{\k^*(\sigma)}$} for any two AFs $\F$ and $\G$. Possible candidates are the classical counterparts of the $\sigma$-*-kernels and indeed it was shown that these kernels guarantee the desired outcome. This means, in case of strong expansion equivalence w.r.t.\ preferred, ideal, grounded and complete semantics we have that the labelling-based version is characterizable through a classical $\sigma$-kernel if and only if the extension-based version is characterizable through the corresponding $\sigma$-*-kernel. 



\begin{theorem}[\cite{Bau16}] \label{the:strongexpansion} Given finite AFs $\AF$ and $\AG$. We have,
\begin{enumerate}
	\item $\AF\equiv_{S}^{\Lab_{\sigma}}\AG \ToT \AF^{\k(\adm)} = \AG^{\k(\adm)}$ for $\sigma\in\{\prf,\id\}$,
	\item $\AF\equiv_{S}^{\Lab_{\grd}}\AG \ToT \AF^{\k(\grd)} = \AG^{\k(\grd)}$ and
	\item $\AF\equiv_{S}^{\Lab_{\com}}\AG \ToT \AF^{\k(\com)} = \AG^{\k(\com)}$.
	\end{enumerate}
\end{theorem}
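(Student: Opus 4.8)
The goal is to characterize strong expansion equivalence for the labelling-based versions of preferred, ideal, grounded, and complete semantics, using the \emph{classical} kernels $\k(\adm)$, $\k(\grd)$, $\k(\com)$ rather than the more elaborate $\sigma$-*-kernels that appear on the extension side. The plan is to follow the two-directional pattern used throughout Section~\ref{sec:equivlabel}: first establish that equality of the relevant classical kernel is preserved under strong expansions (plus the other needed robustness properties), then show that these kernels are \emph{insensitive} to the labelling-based semantics in question so that kernel equality implies strong expansion equivalence; and conversely, construct separating strong expansions whenever the kernels differ.

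For the ``$\impliedby$'' direction, I would proceed as follows. First I would note that, by Fact~\ref{fact:kernel}, each classical kernel is node- and loop-preserving and satisfies $\dcup$-robustness; since strong expansions are in particular arbitrary expansions, $\F^\k = \G^\k$ gives $(\F\dcup\H)^\k = (\G\dcup\H)^\k$ for any $\AH$. Then I need the key insensitivity facts: $\Lab_\com(\AF)=\Lab_\com(\AF^{\k(\com)})$, $\Lab_\grd(\AF)=\Lab_\grd(\AF^{\k(\grd)})$ (Fact~\ref{lem:somekernel}), and $\Lab_\prf(\AF)=\Lab_\prf(\AF^{\k(\adm)})$, $\Lab_\id(\AF)=\Lab_\id(\AF^{\k(\adm)})$ (again Fact~\ref{lem:somekernel}). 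Combining these, for $\sigma\in\{\prf,\id\}$ and $\k=\k(\adm)$ (resp.\ $\sigma=\grd$, $\k=\k(\grd)$; $\sigma=\com$, $\k=\k(\com)$): if $\F^\k=\G^\k$ then for every strong expansion $\AH$ we get $\Lab_\sigma(\F\dcup\AH)=\Lab_\sigma((\F\dcup\AH)^\k)=\Lab_\sigma((\G\dcup\AH)^\k)=\Lab_\sigma(\G\dcup\AH)$, hence $\AF\equiv_S^{\Lab_\sigma}\AG$. This direction is mostly bookkeeping once the cited facts are in hand, and it in fact yields the \emph{stronger} conclusion that kernel equality implies even expansion equivalence (not just the strong variant) w.r.t.\ these labellings.

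For the ``$\implies$'' direction I would argue the contrapositive: assume $\F^\k\neq\G^\k$ and produce a strong expansion witnessing $\AF\not\equiv_S^{\Lab_\sigma}\AG$. By node- and loop-preservation of $\k$, the disagreement is either in the arguments, in the self-loops, or in a non-loop attack present in one kernel but not the other. I would split into cases. If $A(\F^\k)\neq A(\F)$ or loops differ, a framework on the symmetric difference of arguments (possibly a single fresh argument with no attacks) already separates the labellings, since labellings must assign a status to every argument and a loop forces an \texttt{undec} or \texttt{out} status that the loop-free framework cannot mimic. The harder case is a surviving non-loop attack $(a,b)\in R(\F^{\k})\setminus R(\G^{\k})$: here I would mirror the separating constructions used in the proof of Theorem~\ref{the:strongexpansion} for the corresponding $\sigma$-*-kernel on the extension side --- that is, restrict attention to $\{a,b\}$ via context-freeness (Fact~\ref{fact:kernel}, item~4), take the witnessing strong expansion that already separates \emph{extensions} (it exists because the classical $\k(\adm)$/$\k(\grd)$/$\k(\com)$ kernels are \emph{finer} than the $\sigma$-*-kernels, so $\F^{\k}\neq\G^{\k}$ need \emph{not} give a separation on the extension side), and lift it to a separation of labellings using Fact~\ref{pro:equalitylabel} (equality of complete labellings requires agreement of arguments, in-labels, \emph{and} the range of the in-labels). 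Concretely, an attack surviving the classical kernel but killed by the $\sigma$-*-kernel affects the \texttt{out}-set (range) of some labelling without necessarily changing which extensions arise, and Fact~\ref{pro:equalitylabel} is precisely the tool that turns such a range discrepancy into a labelling discrepancy.

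The main obstacle I anticipate is exactly this last lifting step: finding, for each pair $(\sigma,\k)$ in the statement, the right strong-expansion gadget that makes a ``range-only'' discrepancy visible at the labelling level. On the extension side such attacks are genuinely redundant (that is why the $\sigma$-*-kernel deletes them), so the separating expansion must exploit the extra information carried by labellings --- e.g.\ adding a strong argument $d$ that attacks $b$ so that whether $b$ is labelled \texttt{out} (via an in-labelled attacker $a$) or forced to be \texttt{undec} depends on the presence of the attack $(a,b)$. I would handle the four semantics essentially uniformly, checking that the gadget is an admissible (hence strong, hence normal) expansion in the sense of Definition~\ref{def:expansion}, and that the induced $\sigma$-labellings of the two expanded frameworks genuinely differ by invoking the uniqueness-by-in-labels property (Fact~\ref{fact2}) together with the range condition of Fact~\ref{pro:equalitylabel}. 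A secondary subtlety is the ideal and preferred cases sharing the kernel $\k(\adm)$: I would verify that the same gadget separates both, using that ideal labellings sit inside all preferred ones, so a preferred-labelling discrepancy of the right shape forces an ideal-labelling discrepancy as well, or construct a dedicated gadget if needed.
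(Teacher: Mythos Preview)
The paper does not prove this theorem in-text; it is stated with a citation to \cite{Bau16}, so there is no paper proof to compare against directly. Your overall strategy is the natural one and matches the surrounding machinery: the $\impliedby$ direction via $\dcup$-robustness (Fact~\ref{fact:kernel}) together with kernel-insensitivity of labellings (Fact~\ref{lem:somekernel}) is correct and, as you observe, even yields the stronger conclusion $\equiv_E^{\Lab_\sigma}$.

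For the $\implies$ direction your high-level plan is right, but one passage is garbled and hides the real case split. You write ``take the witnessing strong expansion that already separates \emph{extensions}'' and then parenthetically note that, because the classical kernels are finer than the $*$-kernels, such a separation ``need \emph{not}'' exist --- these two claims contradict each other. What you presumably intend is: if additionally $\F^{\k^*}\neq\G^{\k^*}$, then Theorem~\ref{the:strong2} supplies a strong expansion separating \emph{extensions}, and Theorem~\ref{the:mainrelation} lifts this to labellings; whereas if $\F^{\k^*}=\G^{\k^*}$ but $\F^{\k}\neq\G^{\k}$, the offending attack $(a,b)$ is one removed only by the second disjunct of the $*$-kernel (so $b$ is self-attacking and $a$ is not), and here you must construct the gadget from scratch using Fact~\ref{pro:equalitylabel} to detect the range discrepancy. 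Your informal gadget idea is in the right spirit --- the example immediately preceding the theorem illustrates exactly this phenomenon --- but you have not actually specified it, and this second case is where essentially all the work lies. Also, your reference to ``the proof of Theorem~\ref{the:strongexpansion}'' is circular: that is the theorem you are proving; you presumably meant Theorem~\ref{the:strong2}.
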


\subsubsection{Normal Deletion Equivalence}

Characterizing normal deletion equivalence in case of extension-based semantics is exceptional in several regards. Remember that normal deletions retract arguments and their corresponding attacks. Firstly, only a few characterization
results are achieved (cf.\ Figure~\ref{fig:extrel}). Furthermore, apart from stable semantics, none of the characterization results is purely kernel-based, i.e.\ beside the equality of kernels on certain parts of the frameworks further loop- as well as attack-conditions have to be satisfied. Finally, quite surprisingly, normal deletion equivalent AFs do not even have to share the same arguments enabling equivalence classes with an infinite number of elements. Being equivalent w.r.t.\ labellings and possessing different arguments at the same time is impossible in case of labellings since any argument has to be labelled. It turned out that any considered labelling-based semantics is characterizable through traditional kernels and thus, do not share any of the features mentioned above. Consider the following main theorem.

\begin{theorem}[\cite{Bau16}] \label{the:normaldeletion} Given finite AFs $\AF$ and $\AG$. We have,
\begin{enumerate}
	\item $\AF\equiv_{ND}^{\Lab_{\stb}}\AG \ToT \AF^{\k(\stb)} = \AG^{\k(\stb)}$,
	\item \mbox{$\AF\equiv_{ND}^{\Lab_{\sigma}}\!\AG \ToT \AF^{\k(\adm)} = \AG^{\k(\adm)}$  for $\sigma\in\{\semi,\eag,\prf,\id\}$,}
	\item $\AF\equiv_{ND}^{\Lab_{\sigma}}\!\AG \ToT \AF^{\k(\com)} = \AG^{\k(\com)}$ for $\sigma\in\{\adm,\com\}$ and
	\item $\AF\equiv_{ND}^{\Lab_{\grd}}\!\AG \ToT \AF^{\k(\grd)} = \AG^{\k(\grd)}$.
	\end{enumerate}
\end{theorem}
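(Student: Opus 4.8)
The plan is to prove each of the four biconditionals in Theorem~\ref{the:normaldeletion} by combining three ingredients that are already available in the excerpt: (i) the fundamental relation of Theorem~\ref{the:mainrelation}, which says that $\AF\equiv_{ND}^{\Lab_\sigma}\AG$ implies $\AF\equiv_{ND}^{\Ext_\sigma}\AG$; (ii) the kernel-invariance facts from Fact~\ref{lem:somekernel} together with the $\sm$-robustness of the relevant classical kernels (Fact~\ref{fact:kernel}, item~6); and (iii) the equality-criteria for labellings, in particular Fact~\ref{fact2} (labellings uniquely determined by their in-labels) and Fact~\ref{pro:equalitylabel} (equality of complete labellings of different AFs requires sharing arguments, in-labels, and the range of in-labels). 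The overall shape is: the $(\impliedby)$ direction is a soundness argument built from $\sm$-robustness and kernel-invariance, while the $(\implies)$ direction is a completeness argument proved by contraposition, exhibiting an explicit normal deletion that separates the two frameworks.

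For the $(\impliedby)$ direction of each item, suppose $\AF^{\k} = \AG^{\k}$ for the appropriate kernel $\k\in\{\k(\stb),\k(\adm),\k(\com),\k(\grd)\}$. Given any set of arguments $B$, $\sm$-robustness yields $(\AF\sm B)^{\k} = (\AG\sm B)^{\k}$. By Fact~\ref{lem:somekernel} the set of $\sigma$-labellings is invariant under passing to the $\k$-kernel (for the $(\sigma,\k)$ pair in question, e.g.\ $\k(\com)$ works for $\sigma\in\{\adm,\com\}$, $\k(\adm)$ for $\sigma\in\{\semi,\eag,\prf,\id\}$, etc.), hence $\Lab_\sigma(\AF\sm B) = \Lab_\sigma((\AF\sm B)^{\k}) = \Lab_\sigma((\AG\sm B)^{\k}) = \Lab_\sigma(\AG\sm B)$. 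Since $B$ was arbitrary, $\AF\equiv_{ND}^{\Lab_\sigma}\AG$. Here I would need to check carefully that Fact~\ref{lem:somekernel} indeed covers each pair; for the $\adm$ and $\com$ cases with the complete kernel this is the statement about $\Lab_\com$, and for $\adm$ one additionally uses that $\AF^{\k(\com)}$ and $\AF$ induce the same admissible labellings — this should follow from the fact that the complete kernel only deletes attacks between two self-looping arguments, none of which can appear in or be out-labelled by an admissible labelling, so the argument is local and robust.

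For the $(\implies)$ direction I would argue the contrapositive: assuming $\AF^{\k}\neq\AG^{\k}$, I construct a normal deletion (i.e.\ a set $B$ of arguments to remove) witnessing $\AF\sm B\not\equiv^{\Lab_\sigma} \AG\sm B$. The key point, and the reason the labelling-based picture is cleaner than the extension-based one, is the following: unlike extensions, labellings of different AFs can only coincide if the AFs share the same argument set (an argument always gets labelled). So I split into cases. If $A(\AF^{\k})\neq A(\AG^{\k})$ — equivalently, since all four kernels are node-preserving (Fact~\ref{fact:kernel}, item~1), $A(\AF)\neq A(\AG)$ — pick $a$ in the symmetric difference, set $B = (A(\AF)\cup A(\AG))\sm\{a\}$, and observe that $\AF\sm B$ and $\AG\sm B$ have different argument sets (one contains $a$, one does not), so they cannot share any labelling. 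Otherwise $A(\AF)=A(\AG)$; if $L(\AF^{\k})\neq L(\AG^{\k})$ then by loop-preservation $L(\AF)\neq L(\AG)$, pick a loop $a$ in only one of them, restrict to $B$'s complement $\{a\}$, and the two singletons $(\{a\},\emptyset)$ vs.\ $(\{a\},\{(a,a)\})$ have different $\sigma$-labellings. Otherwise the kernels differ in a non-loop attack $(a,b)$ with $a\neq b$; here I restrict to $\{a,b\}$ (or a suitable superset) using context-freeness of the classical kernels (Fact~\ref{fact:kernel}, item~4) to localize the disagreement, and then exhibit, on this small framework, two distinct $\sigma$-labellings — using Fact~\ref{pro:equalitylabel} to detect the difference via the range of in-labelled arguments. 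The item for $\stb$ additionally requires the remark (flagged in the excerpt's correction of \cite[Theorem~10]{Bau14}) that the empty framework possesses the empty stable labelling, which must be accounted for when, e.g., $\AG\sm B$ becomes empty.

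The main obstacle I anticipate is the non-loop-attack case in the $(\implies)$ direction for the non-stable semantics, where the classical kernel deletions are more subtle ($\k(\adm)$, $\k(\com)$, $\k(\grd)$ each have their own side-conditions). There I need to verify that context-freeness really does let me pass to the two-argument subframework $\AF|_{\{a,b\}}$ — this is stated for the classical kernels, so it applies — and then carry out a short finite case analysis showing that $\AF|_{\{a,b\}}$ and $\AG|_{\{a,b\}}$, which by hypothesis differ in the presence of $(a,b)$ after taking kernels, genuinely yield different $\sigma$-labellings (not merely different attack-sets). Fact~\ref{pro:equalitylabel} is the right tool: two complete labellings agree only if they agree on arguments, in-labels, and the range; the deleted/surviving attack $(a,b)$ must influence one of these three quantities in at least one of the frameworks, precisely because the kernel construction only removes attacks that are provably irrelevant to exactly those quantities. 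Making that last implication airtight — connecting "the attack survives the $\sigma$-kernel" to "there is a concrete $\sigma$-labelling affected by it" — is the crux, and it is essentially a re-reading of why the $\sigma$-kernel is the right redundancy notion, now at the level of labellings rather than extensions. Once this is in place, Corollary~\ref{cor:normal}-style reasoning and the already-known extension-based facts can be invoked to streamline the remaining bookkeeping.
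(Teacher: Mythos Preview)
The paper does not include a proof of this theorem; it is stated with a citation to \cite{Bau16} and immediately followed by the summary subsection. Your proposed strategy is the natural one and is essentially correct: the $(\impliedby)$ direction follows from $\sm$-robustness of the classical kernels (Fact~\ref{fact:kernel}, item~6) together with labelling-invariance under kernels (Fact~\ref{lem:somekernel}); the $(\implies)$ direction proceeds by contrapositive via a case split on where the kernels differ (arguments, loops, non-loop attacks), using context-freeness (Fact~\ref{fact:kernel}, item~4) to localize the last case to a two-argument subframework.

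Two remarks. First, for item~1 the $(\implies)$ direction admits a shortcut you do not use: Theorem~\ref{the:mainrelation} gives $\AF\equiv_{ND}^{\Lab_\stb}\AG \Rightarrow \AF\equiv_{ND}^{\Ext_\stb}\AG$, and the paper's corrected extension-based characterization of $\equiv_{ND}^{\Ext_\stb}$ via $\k(\stb)$ then yields $\AF^{\k(\stb)}=\AG^{\k(\stb)}$ directly, bypassing the case analysis. For items~2--4 this route is blocked because the extension-based normal-deletion characterizations use the $\sigma$-$*$-kernels with side conditions (Theorem~\ref{the:normal}) or are open, so your direct argument is genuinely needed there. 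Second, as you flag, Fact~\ref{lem:somekernel} does not literally assert $\Lab_\adm(\AF)=\Lab_\adm(\AF^{\k(\com)})$; your local argument (the complete kernel removes only attacks between two self-loopers, neither of which can be in-labelled nor serve as an in-attacker justifying an out-label) is correct and closes this. The remaining work --- the per-kernel case analysis on two-argument frameworks in the non-loop-attack case --- is routine but does need to be written out for each of $\k(\adm)$, $\k(\com)$, $\k(\grd)$; Fact~\ref{pro:equalitylabel} is indeed the right separating tool for the complete-labelling-based cases.
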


\subsection{Summary of Results and Conclusion}

The following Figure~\ref{fig:labelrel} presents a comprehensive overview of the state of the art in case of labelling-based semantics. Analogously to Figure~\ref{fig:extrel} the entry ``$\k$'' in row $M$ and column $\sigma$ indicates that $\equiv_M^{\Lab_{\sigma}}$ is characterizable through $\k$ given the finiteness restriction. The abbreviation ``$\iden$'' stands for identity map and the question mark represents an open problem.\footnote{In contrast to extension-based semantics the labelling versions of conflict-free-based semantics like stage, naive, cf2 as well as stage2 semantics (cf.\ \cite{Cam11,GagD14}) as well as weak expansion equivalence at all were not considered so far and thus, represent open problems too.} A grey-highlighted entry reflects the situation that extension-based and labelling-based version do not coincide. 

\begin{figure}[H]
\center
\begin{tikzpicture}
\tikzstyle{literal}=[text centered, text width=15mm, minimum height=5mm]
\draw[help lines] (0,-4) grid (9,5) ;

\node[literal] (S1) at (1.5,4.5)  {$\textcolor{white}{d}\stb\textcolor{white}{p}$};
\node[literal] (S1) at (2.5,4.5)  {$\textcolor{white}{d}\semi\textcolor{white}{p}$};
\node[literal] (S1) at (3.5,4.5)  {$\textcolor{white}{d}\eag\textcolor{white}{p}$};
\node[literal] (S1) at (4.5,4.5)  {$\textcolor{white}{d}\adm\textcolor{white}{p}$};
\node[literal] (S1) at (5.5,4.5)  {$\textcolor{white}{d}\prf\textcolor{white}{p}$};
\node[literal] (S1) at (6.5,4.5)  {$\textcolor{white}{d}\id\textcolor{white}{p}$};
\node[literal] (S1) at (7.5,4.5)  {$\textcolor{white}{d}\grd\textcolor{white}{p}$};
\node[literal] (S1) at (8.5,4.5)  {$\textcolor{white}{d}\com\textcolor{white}{p}$};

\node[literal] (S1) at (1.5,3.5)  {\textcolor{gray!50}{\scriptsize{$ ? $}}};
\node[literal] (S1) at (2.5,3.5)  {\scriptsize{$ \k(\adm)$}};
\node[literal] (S1) at (3.5,3.5)  {\scriptsize{$ \k(\adm)$}};
\node[literal] (S1) at (4.5,3.5)  {\textcolor{gray!50}{\scriptsize{$ \k(\com) $}}};
\node[literal] (S1) at (5.5,3.5)  {\scriptsize{$ \k(\adm)$}};
\node[literal] (S1) at (6.5,3.5)  {\scriptsize{$ \k(\adm)$}};
\node[literal] (S1) at (7.5,3.5)  {\textcolor{gray!50}{\scriptsize{$ ? $}}};
\node[literal] (S1) at (8.5,3.5)  {\textcolor{gray!50}{\scriptsize{$ ? $}}};

\node[literal] (S1) at (1.5,2.5)  {\scriptsize{$ \k(\stb)$}};
\node[literal] (S1) at (2.5,2.5)  {\scriptsize{$ \k(\adm)$}};
\node[literal] (S1) at (3.5,2.5)  {\scriptsize{$ \k(\adm)$}};
\node[literal] (S1) at (4.5,2.5)  {\textcolor{gray!50}{\scriptsize{$ \k(\com) $}}};
\node[literal] (S1) at (5.5,2.5)  {\scriptsize{$ \k(\adm)$}};
\node[literal] (S1) at (6.5,2.5)  {\scriptsize{$ \k(\adm)$}};
\node[literal] (S1) at (7.5,2.5)  {\scriptsize{$ \k(\grd)$}};
\node[literal] (S1) at (8.5,2.5)  {\scriptsize{$ \k(\com)$}};

\node[literal] (S1) at (1.5,1.5)  {\scriptsize{$ \k(\stb)$}};
\node[literal] (S1) at (2.5,1.5)  {\scriptsize{$ \k(\adm)$}};
\node[literal] (S1) at (3.5,1.5)  {\scriptsize{$ \k(\adm)$}};
\node[literal] (S1) at (4.5,1.5) {\textcolor{gray!50}{\scriptsize{$ \k(\com) $}}};
\node[literal] (S1) at (5.5,1.5)  {\scriptsize{$ \k(\adm)$}};
\node[literal] (S1) at (6.5,1.5)  {\scriptsize{$ \k(\adm)$}};
\node[literal] (S1) at (7.5,1.5)  {\scriptsize{$ \k(\grd)$}};
\node[literal] (S1) at (8.5,1.5)  {\scriptsize{$ \k(\com)$}};

\node[literal] (S1) at (1.5,0.5)  {\scriptsize{$ \k(\stb)$}};
\node[literal] (S1) at (2.5,0.5)  {\scriptsize{$ \k(\adm)$}};
\node[literal] (S1) at (3.5,0.5)  {\scriptsize{$ \k(\adm)$}};
\node[literal] (S1) at (4.5,0.5) {\textcolor{gray!50}{\scriptsize{$ \k(\com) $}}};
\node[literal] (S1) at (5.5,0.5)  {\textcolor{gray!50}{\scriptsize{$ \k(\adm) $}}};
\node[literal] (S1) at (6.5,0.5)  {\textcolor{gray!50}{\scriptsize{$ \k(\adm) $}}};
\node[literal] (S1) at (7.5,0.5)  {\textcolor{gray!50}{\scriptsize{$ \k(\grd) $}}};
\node[literal] (S1) at (8.5,0.5)  {\textcolor{gray!50}{\scriptsize{$ \k(\com) $}}};

\node[literal] (S1) at (1.5,-0.5)  {{\scriptsize{$ \k(\stb) $}}};
\node[literal] (S1) at (2.5,-0.5) {\textcolor{gray!50}{\scriptsize{$ \k(\adm) $}}};
\node[literal] (S1) at (3.5,-0.5) {\textcolor{gray!50}{\scriptsize{$ \k(\adm) $}}};
\node[literal] (S1) at (4.5,-0.5) {\textcolor{gray!50}{\scriptsize{$ \k(\com) $}}};
\node[literal] (S1) at (5.5,-0.5)  {\textcolor{gray!50}{\scriptsize{$ \k(\adm) $}}};
\node[literal] (S1) at (6.5,-0.5)  {\textcolor{gray!50}{\scriptsize{$ \k(\adm) $}}};
\node[literal] (S1) at (7.5,-0.5)  {\textcolor{gray!50}{\scriptsize{$ \k(\grd) $}}};
\node[literal] (S1) at (8.5,-0.5)  {\textcolor{gray!50}{\scriptsize{$ \k(\com) $}}};

\node[literal] (S1) at (1.5,-1.5)  {\scriptsize{$ \iden $}};
\node[literal] (S1) at (2.5,-1.5)  {\scriptsize{$ \iden $}};
\node[literal] (S1) at (3.5,-1.5)  {\scriptsize{$ \iden $}};
\node[literal] (S1) at (4.5,-1.5)  {\scriptsize{$ \iden $}};
\node[literal] (S1) at (5.5,-1.5)  {\scriptsize{$ \iden $}};
\node[literal] (S1) at (6.5,-1.5)  {\scriptsize{$ \iden $}};
\node[literal] (S1) at (7.5,-1.5)  {\scriptsize{$ \iden $}};
\node[literal] (S1) at (8.5,-1.5)  {\scriptsize{$ \iden $}};

\node[literal] (S1) at (1.5,-2.5)  {\scriptsize{$ \iden $}};
\node[literal] (S1) at (2.5,-2.5)  {\scriptsize{$ \iden $}};
\node[literal] (S1) at (3.5,-2.5)  {\scriptsize{$ \iden $}};
\node[literal] (S1) at (4.5,-2.5)  {\scriptsize{$ \iden $}};
\node[literal] (S1) at (5.5,-2.5)  {\scriptsize{$ \iden $}};
\node[literal] (S1) at (6.5,-2.5)  {\scriptsize{$ \iden $}};
\node[literal] (S1) at (7.5,-2.5)  {\scriptsize{$ \iden $}};
\node[literal] (S1) at (8.5,-2.5)  {\scriptsize{$ \iden $}};

\node[literal] (S1) at (1.5,-3.5)  {\scriptsize{$ \iden $}};
\node[literal] (S1) at (2.5,-3.5)  {\scriptsize{$ \iden $}};
\node[literal] (S1) at (3.5,-3.5)  {\scriptsize{$ \iden $}};
\node[literal] (S1) at (4.5,-3.5)  {\scriptsize{$ \iden $}};
\node[literal] (S1) at (5.5,-3.5)  {\scriptsize{$ \iden $}};
\node[literal] (S1) at (6.5,-3.5)  {\scriptsize{$ \iden $}};
\node[literal] (S1) at (7.5,-3.5)  {\scriptsize{$ \iden $}};
\node[literal] (S1) at (8.5,-3.5)  {\scriptsize{$ \iden $}};

\node[literal] (S1) at (0.5,3.5)  {L};
\node[literal] (S1) at (0.5,2.5)  {E};
\node[literal] (S1) at (0.5,1.5)  {N};
\node[literal] (S1) at (0.5,0.5)  {S};
\node[literal] (S1) at (0.5,-0.5)  {ND};
\node[literal] (S1) at (0.5,-1.5)  {D};
\node[literal] (S1) at (0.5,-2.5)  {LD};
\node[literal] (S1) at (0.5,-3.5)  {U};

\end{tikzpicture}
\caption{\label{fig:labelrel}Labelling-based Characterizations for Finite AFs}
\end{figure}

In contrast to extension-based semantics we observe a much more homogeneous picture. Firstly, there is no need for the more sophisticated $\sigma$-*-kernels. Secondly, normal deletion equivalence w.r.t.\ labelling-based semantics is naturally incorporated in the overall picture in the sense that it coincides with its corresponding expansion, normal expansion and strong expansion equivalence notions. 

The following Figure~\ref{fig:rellaball} applies to each one of the eight labelling-based semantics considered in this section. In comparison to Figure~\ref{fig:prelimrel} where preliminary relations are depicted it illustrates (to a certain extent) a collapse of the diversity of the introduced equivalence notions in case of labelling-based semantics. 

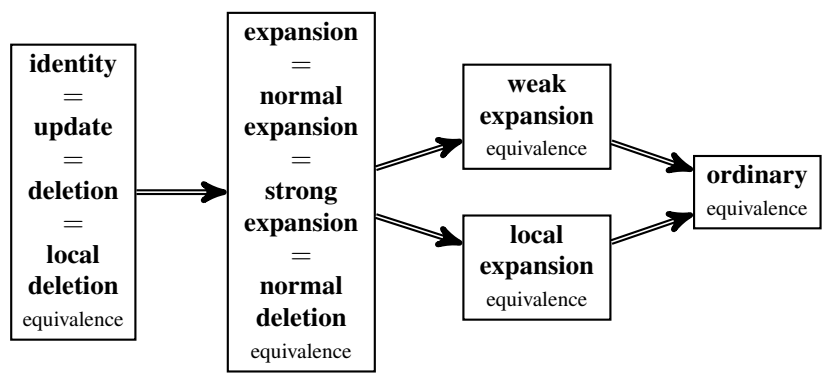
\begin{figure}[H]
\centering
\begin{tikzpicture}[scale=1.0]

\node (I) at (-1.6,1.5) [rectangle, thick, draw,text width = 1.4cm, text centered]{\textbf{identity}\\
$=$\\
\textbf{update}\\
$=$\\
\textbf{deletion}\\
$=$\\
\textbf{local}\\
\textbf{deletion}\\
\footnotesize{equivalence}};

\node (B) at (1.4,1.5) [rectangle, thick, draw,text width = 1.7cm, text centered]{\textbf{expansion}\\
$=$\\
\textbf{normal}\\
\textbf{expansion}\\
$=$\\
\textbf{strong}\\
\textbf{expansion}\\
$=$\\
\textbf{normal}\\
\textbf{deletion}\\
\footnotesize{equivalence}};

\node (F) at (4.5,0.5) [rectangle, thick, draw,text width = 1.7cm, text centered]{\textbf{local}\\
\textbf{expansion}\\
\footnotesize{equivalence}};

\node (W) at (4.5,2.5) [rectangle, thick, draw,text width = 1.7cm, text centered]{\textbf{weak}\\
\textbf{expansion}\\
\footnotesize{equivalence}};

\node (O) at (7.4,1.5) [rectangle, thick, draw,text width = 1.43cm, text centered]{\textbf{ordinary}\\
\footnotesize{equivalence}};

\draw[->, thick, double] (I) to [thick,double] (B);

\draw[->, thick, double] (B) to [thick,double] (W);
\draw[->, thick, double] (B) to [thick,double] (F);
\draw[->, thick, double] (F) to [thick,double] (O);
\draw[->, thick, double] (W) to [thick,double] (O);

\end{tikzpicture}
\caption{\label{fig:rellaball}Relations for $\sigma\in\{\stb,\semi,\eag,\adm,\prf,\id,\grd,\com\}$ -
Labelling-based Versions and Finite AFs}
\end{figure}

\section{Final Remarks} \label{sec:sumcon3}

In this chapter we motivated and discussed several notions of equivalence in the context of abstract argumentation and provided an exhaustive number of characterization theorems for extension-based as well as labelling-based semantics. In general we may state that Dung's abstract argumentation frameworks are a very compact formalism since the majority of the considered equivalence notion possess only little space for redundancy. Moreover, most of these notions collapse to identity if self-loop-free AFs are considered. This means, in this case any subframework of the AF in question may play a decisive role w.r.t.\ further evaluations and thus, cannot be locally replaced by another. This insight is sometimes used as an argument against the usefulness of the study of equivalence notions in the context of abstract argumentation. Obviously, we agree that if you are expecting much space for simplification, then the results are somehow disappointing but let us not lose sight of the fact that this is only clear \textit{after} it has been proved. Furthermore, as already stated, the results underline that in case of abstract argumentation (almost) everything is meaningful similar to other non-monotonic formalisms available in the literature (cf.\ \cite{DBLP:journals/tocl/LifschitzPV01} for logic programs, \cite{DBLP:conf/lpnmr/Turner04} for causal theories, \cite{Turner01} for default logic and \cite{DBLP:journals/amai/Truszczynski06} for nonmonotonic logics in general). However, one decisive difference to these formalisms is that equivalence notions in case of abstract argumentation can be decided syntactically. Indeed, kernels are interesting from several perspectives: First, they allow to decide the corresponding notion of equivalence by a simple check for topological equality and second, all kernels we have obtained so far can be efficiently constructed from a given argumentation framework. This means, if
a certain equivalence notion is characterizable through such a kernel, then we
have tractability of the associated decision problem.

\chapter{Verifiability of Argumentation Semantics} \label{cha:ver}

Over the last 20 years a series of abstract argumentation semantics were introduced. The
motivations of these semantics range from the desired treatment of specific examples
to fulfilling a number of abstract principles. The comparison via abstract criteria of the different semantics available is a topic which emerged quite recently in the community (\cite{BarG07} can be seen as the first paper in this line). 
In this chapter we take a further step towards a comprehensive understanding of argumentation semantics. In particular, we study the following question: Do we really need the entire AF $\AF$ to compute a certain argumentation semantics $\sigma$? In other words, is it possible to unambiguously determine acceptable sets  w.r.t.\ $\sigma$, given only partial information of the underlying framework $\AF$. In order to solve this problem let us start with the following reflections:

\begin{enumerate}
	\item As a matter of fact, one basic requirement of almost all existing semantics (exemptions are given in \cite{JakV99,Ari12,GroM15})
	is that of conflict-freeness, i.e. arguments within a reasonable position are not allowed to attack each other. Consequently, knowledge about conflict-free sets is an essential part for computing semantics.
	\item The second step is to ask the following: Which information on top on conflict-free sets has to be added? Imagine the set of conflict-free sets given by $\{\emptyset, \{a\}, \{b\}\}$. Consequently, there has to be at least one attack between $a$ and $b$. Unfortunately, this information is not sufficient to compute any standard semantics (except naive extensions, which are defined as $\subseteq$-maximal conflict-free sets) since we know nothing precise about the neighborhood of $a$ and $b$. The following three AFs possess exactly the mentioned conflict-free sets, but differ with respect to other semantics.
	
	\begin{center}
\begin{tikzpicture}
	\node (a1) at (-0.6,0) [circle, thick, draw, label = left:$\F:$]{$a$};
    \node (b1) at (0.4,0) [circle, thick, draw]{$b$};
    
    \node (a2) at (2,0) [circle, thick, draw, label = left:$\G:$] {$a$};
    \node (b2) at (3.0,0) [circle, thick, draw] {$b$};
		
		\node (a3) at (4.6,0) [circle, thick, draw, label = left:$\H:$] {$a$};
    \node (b3) at (5.6,0) [circle, thick, draw] {$b$};

\draw[->,thick] (a1) to [thick, bend right] (b1);
\draw[->,thick] (b2) to [thick, bend right] (a2);
\draw[->,thick] (a3) to [thick, bend right] (b3);
\draw[->,thick] (b3) to [thick, bend right] (a3);
\end{tikzpicture}
\end{center}
\vspace{-8pt}	
	\item The final step is to try to minimize the added information. In other words, which kind of knowledge about the neighborhood is somehow dispensable in the light of computation? Clearly, this will depend on the considered semantics. For instance, in case of stage semantics \cite{Ver96}, which requests conflict-free sets of maximal range, we do not need any information about incoming attacks. This information can not be omitted in case of admissible-based semantics since incoming attacks require counterattacks.

\end{enumerate}
\vspace{-6pt} 

%
%
%
%
%
%
%
%
%
%
    %
%
\noindent
The above considerations motivate the introduction of 
\emph{verification classes} specifying a certain amount of information. 
In a first step,
we study the relation of these classes to each other. We therefore
introduce the notion of being \textit{more informative}, capturing the intuition that a certain class can reproduce the information of another. We present a hierarchy w.r.t.\ this ordering,
containing
15 different verification classes only. This is
because
many syntactically different classes collapse to the same amount of information. 

We then formally define the essential property of a semantics $\sigma$ being \textit{verifiable} w.r.t.\ a certain verification class. We present a general theorem stating that any \textit{rational} semantics is exactly verifiable w.r.t.\ one of the $15$ different verification classes. Roughly speaking, a semantics is rational if attacks inbetween two self-loops can be omitted without affecting the set of extensions. An important aside hereby is that even the most informative class contains indeed less information than the entire framework by itself.  

We consider a representative set of 
standard semantics. All of them satisfy rationality and thus, are exactly verifiable w.r.t.\ a certain class. Since the theorem does not provide an answer to which verification class perfectly matches a certain rational semantics we study this problem one by one for any considered semantics. As a result, only six different classes are essential to classify the considered standard semantics.

In the last section we study an application of the concept of verifiability. More precisely, we address the question of strong equivalence for semantics lying inbetween known semantics, so-called \textit{intermediate semantics}. 
Strong equivalence is the natural counterpart to ordinary equivalence in monotonic theories
(see \cite{strong,Bau16} for abstract argumentation and
\cite{Mah86,DBLP:journals/tocl/LifschitzPV01,DBLP:conf/lpnmr/Turner04,DBLP:journals/amai/Truszczynski06} for other nonmonotonic theories). We provide characterization theorems relying on the notion of verifiability and thus, contributing to a more abstract understanding of the different features argumentation semantics offer. 

\section{Preliminaries and Strongly Admissible Sets}

In this chapter we will consider finite AFs and extension-based semantics only. In Section~\ref{sec:flagship} we introduced the so-called \textit{range} $\E^\oplus_{\F}$ of a certain set $\E$ w.r.t.\ a given AF $\F = (A,R)$. It is given by $\E^\oplus_\F = \E\cup \E^+_\F$ where $\E^+_\F = \{b\mid (a,b)\in R, a\in \E\}$. 
Similarly, we will use $\E^\ominus_\F$
to denote the \textit{anti-range} of $\E$ defined as
$E\cup E^-_\F$ with $E^-_\F = \{b\mid (b,a)\in R, a\in \E\}$. If clear from context we drop the indices and simply write $\E^\oplus$, $\E^+$, $\E^\ominus$ or $\E^-$ respectively.

We now introduce the concepts of \textit{intermediate semantics} and \textit{rationality} which will play a decisive role in this chapter. Remember that we use $\sigma\subseteq\tau$ to indicate that $\sigma(\AF)\subseteq\tau(\AF)$ for each AF $\AF$. 

\begin{definition}
Given three semantics $\rho$,$\sigma$,$\tau$. If we have $\rho\subseteq\sigma$ and $\sigma\subseteq\tau$,
we say that $\sigma$ is \linebreak \textit{\interm{\rho}{\tau}}.
\end{definition}
Several examples of intermediate semantics can be found in Proposition~\ref{Pro:semrel}. For instance, we have that $\semi$ is \interm{\stb}{\prf}. 
		
\begin{definition}
\label{def:semantics_conditions}
A semantics $\sigma$ \emph{rational} if for each AF $\F$, $\sigma(\F) = \sigma(\F^l)$. We have
$\F^l = (A(\F),
         R(\F) \sm \{(a,b) \in R(\F) \mid (a,a), (b,b) \in R(\F), a \neq b\})$.
\end{definition}

Indeed, all semantics introduced in Definition~\ref{def:extsem} are rational. 
A prominent semantics that is based on conflict-free sets, but is not rational 
is the $\cfzwei$-semantics~\cite{BarGG05}, since here chains of self-loops can have an 
influence on the SCCs of an AF (see also \cite{GagW13}).



In order to present an exhaustive analysis of intermediate semantics (cf.\ Section~\ref{sec:intermediate}) we provide a missing characterizing kernel for strongly admissible sets firstly introduced in \cite{BarG07}. 
We will see that,
besides grounded \cite{strong} and resolution-based grounded semantics \cite{BarDG11,DvoLOW14},
strongly admissible sets are characterizable through the grounded kernel. Consider the following 
self-referential
definition
taken
from \cite{Cam14}.

\begin{definition} \label{def:strad} Given an AF $\AF = (A,R)$. A set $S\subseteq A$ is \textit{strongly admissible}, i.e. $S\in\sad(\AF)$ iff any $a\in S$ is defended by a strongly admissible set $S' \subseteq S\sm\{a\}$. 
\end{definition} 

The following properties are needed to prove the characterization theorem. $(1)$ and $(2)$ are already shown in \cite{BarG07}, $(3)$ is an immediate consequence of the former.

\begin{proposition} \label{pro:strad:prop} Given two AFs $\AF$ and $\G$, it holds that 

\begin{enumerate}
	\item $\grd(\AF)\subseteq\sad(\AF)\subseteq\adm(\AF)$,
	\item if $S\in\grd(\AF)$ we have: $S'\subseteq S$ for all $S'\in\sad(\AF)$, and
	\item $\sad(\AF) = \sad(\AG)$ implies $\grd(\AF) = \grd(\AG)$. 
\end{enumerate}
\end{proposition}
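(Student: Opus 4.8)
The statement collects three properties of strongly admissible sets. Items (1) and (2) are cited as already known from \cite{BarG07}, so the real work is item (3), and the plan is to derive it as an easy consequence of (1) and (2). The key observation is that the grounded extension is \emph{definable} from the collection of strongly admissible sets alone: by (1) the grounded extension is itself strongly admissible, and by (2) it contains every strongly admissible set, so it is precisely the $\subseteq$-greatest element of $\sad(\AF)$ — equivalently, $\grd(\AF) = \set{\bigcup_{S\in\sad(\AF)}S}$. Since this description refers only to $\sad(\AF)$ and not to the syntax of $\AF$, equal collections of strongly admissible sets must yield equal grounded extensions.

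\textbf{Key steps in order.} First I would record the auxiliary fact that $\sad(\AF)$ always has a $\subseteq$-greatest element, namely the unique grounded extension: writing $\grd(\AF) = \set{G}$ (grounded semantics is uniquely defined, cf.\ Theorem~\ref{the:grduni}), item (1) gives $G\in\sad(\AF)$, and item (2) gives $S\subseteq G$ for every $S\in\sad(\AF)$; hence $G = \bigcup\sad(\AF)$ is the maximum of $(\sad(\AF),\subseteq)$. Second, assuming $\sad(\AF) = \sad(\AG)$, I would apply this characterisation to both frameworks: the maximum of $(\sad(\AF),\subseteq)$ equals the maximum of $(\sad(\AG),\subseteq)$ because the underlying ordered sets are literally the same, so $\bigcup\sad(\AF) = \bigcup\sad(\AG)$, i.e.\ the grounded extensions coincide, giving $\grd(\AF) = \grd(\AG)$. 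That closes item (3); items (1) and (2) are simply quoted.

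\textbf{Main obstacle.} There is essentially no obstacle here — the argument is a two-line deduction once (1) and (2) are in hand. The only point requiring a moment's care is making sure the maximum-of-$\sad$ description is justified: it relies on grounded semantics being not merely universally but \emph{uniquely} defined, so that ``the'' grounded extension makes sense, and on the fact that a poset with a greatest element has that element as its unique maximum. Both are standard. If one wanted a fully self-contained treatment one could alternatively re-prove (2) directly by induction on the iteration of the characteristic function $\Gamma_\F$, showing every strongly admissible set is contained in each $\Gamma_\F^n(\emptyset)$ and hence in the least fixpoint; but since (2) is already available from \cite{BarG07}, I would not do this and would keep the proof to the short derivation above.
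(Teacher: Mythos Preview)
Your proposal is correct and matches the paper's approach exactly: the paper states that (1) and (2) are taken from \cite{BarG07} and that (3) ``is an immediate consequence of the former,'' and you have spelled out precisely this immediate consequence---the grounded extension is the $\subseteq$-maximum of $\sad(\AF)$ by (1) and (2), so equal $\sad$-sets force equal grounded extensions. (A minor aside: in your final parenthetical, ``contained in each $\Gamma_\F^n(\emptyset)$'' should read ``contained in $\bigcup_n\Gamma_\F^n(\emptyset)$,'' but since you explicitly do not pursue that route it does not affect your argument.)
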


We now provide
an alternative criterion for
being a strongly admissible set.
In contrast to the former it allows one to construct strongly admissible sets step by step which enables a construction method. 

\begin{definition} \label{def:stradnew} Given an AF $\AF = (A,R)$. A set $S\subseteq A$ is \textit{strongly admissible}, i.e. $S\in\sad(\AF)$ iff there are finitely many and pairwise disjoint sets $A_1,...,A_n$, s.t. $S = \bigcup_{1\leq i \leq n} A_i$ and $A_1\subseteq\Gamma_{\AF}(\emptyset)$\footnote{Hereby, $\Gamma$ is the so-called \textit{characteristic function} (cf.\ Section~\ref{sec:flagship}). Note that the term $\Gamma_{\AF}(\emptyset)$ can be equivalently replaced by $\{a\in A\mid a \text{ is unattacked} \}$.}  and furthermore, $\bigcup_{1\leq i \leq j} A_i \text{ defends } A_{j+1} \text{ for } 1\leq j \leq n-1$.

\end{definition} 
The following proof shows that both definitions are indeed equivalent.

\begin{proposition} \label{pro:stradnew} Definitions~\ref{def:strad} and \ref{def:stradnew} are equivalent.
\end{proposition}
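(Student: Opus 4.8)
The goal is to show that the two characterizations of strongly admissible sets agree: a set $S$ satisfies Definition~\ref{def:strad} (the self-referential one via iterated defense) if and only if it satisfies Definition~\ref{def:stradnew} (the layered construction starting from unattacked arguments). The plan is to establish both inclusions separately, with the layered construction being the concrete handle for one direction and the self-referential structure being exploited for the other.

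First I would prove that Definition~\ref{def:stradnew} implies Definition~\ref{def:strad}. Assume $S = \bigcup_{1 \leq i \leq n} A_i$ with $A_1 \subseteq \Gamma_\AF(\emptyset)$ and $\bigcup_{1 \leq i \leq j} A_i$ defends $A_{j+1}$ for each $j$. I would argue by induction on $n$ that each partial union $S_j := \bigcup_{1 \leq i \leq j} A_i$ is strongly admissible in the sense of Definition~\ref{def:strad}. The base case $S_1 = A_1$: every $a \in A_1$ is unattacked, hence trivially defended by the empty set, which is strongly admissible (vacuously, since it has no elements to defend), and $\emptyset \subseteq S_1 \setminus \{a\}$. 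For the inductive step, suppose $S_{j}$ is strongly admissible. Take any $a \in S_{j+1}$. If $a \in S_j$, the inductive hypothesis gives a strongly admissible $S' \subseteq S_j \setminus \{a\} \subseteq S_{j+1} \setminus \{a\}$ defending $a$. If $a \in A_{j+1}$, then by assumption $S_j$ defends $a$, $S_j$ is strongly admissible by induction, and $S_j \subseteq S_{j+1} \setminus \{a\}$ because the $A_i$ are pairwise disjoint (so $a \notin S_j$). Hence $S_{j+1}$ is strongly admissible, and in particular $S = S_n$ is.

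Next I would prove the converse, that Definition~\ref{def:strad} implies Definition~\ref{def:stradnew}. Here the natural approach is to define the layers explicitly from the characteristic function. Given $S$ strongly admissible in the sense of Definition~\ref{def:strad}, set $B_0 = \emptyset$ and $B_{k+1} = \Gamma_\AF(B_k) \cap S$, i.e. iterate the characteristic function but intersect with $S$ at each step; let $A_{k+1} = B_{k+1} \setminus B_k$. Finiteness of $\AF$ guarantees this stabilizes after finitely many steps, say at $B_n = \bigcup_{i} A_i$. I need two things: (i) $B_n = S$, and (ii) the layering conditions hold. Condition (ii) is essentially by construction — $A_1 = \Gamma_\AF(\emptyset) \cap S \subseteq \Gamma_\AF(\emptyset)$, and $B_j = \bigcup_{1 \leq i \leq j} A_i$ defends $A_{j+1}$ since $A_{j+1} \subseteq \Gamma_\AF(B_j)$ by definition, plus the $A_i$ are pairwise disjoint as successive set differences. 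The real work is (i), $S \subseteq B_n$: I would argue that for each $a \in S$, by Definition~\ref{def:strad} there is a chain of strongly admissible subsets witnessing defense, and one shows by induction on the ``defense depth'' of $a$ inside $S$ that $a \in B_k$ for some finite $k$. More precisely, one proves: if $S$ is strongly admissible and $S'\subseteq S\setminus\{a\}$ is the strongly admissible set defending $a$, then $S'$ is itself contained in $B_n$ (by an induction on a suitable rank), hence $a$ is defended by $S' \subseteq B_n$ so $a \in \Gamma_\AF(B_n) \cap S \subseteq B_n$ (using monotonicity of $\Gamma_\AF$ and $B_n$ being a fixpoint of $B \mapsto \Gamma_\AF(B) \cap S$). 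This well-founded induction is the step I expect to be the main obstacle, since the self-referential nature of Definition~\ref{def:strad} requires care in setting up the rank function (e.g. the least $k$ such that $a$ lies in the $k$-th layer of \emph{some} valid decomposition, or an induction on $|S|$); making the circularity bottom out cleanly is the delicate point. Everything else is routine bookkeeping with pairwise-disjointness and monotonicity of $\Gamma_\AF$.
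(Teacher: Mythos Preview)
Your proposal is correct and follows essentially the same approach as the paper: for the direction Definition~\ref{def:stradnew} $\Rightarrow$ Definition~\ref{def:strad} you use an ascending induction on the layers where the paper uses an equivalent descending-chain argument, and for the converse both you and the paper build the layers via $B_{k+1} = \Gamma_\AF(B_k) \cap S$ and then argue $S \subseteq B_n$. The only cosmetic difference is that the paper packages the ``delicate point'' you identify as a direct contradiction---assuming $S^* = S \setminus B_n \neq \emptyset$ and iteratively peeling off elements of $S^*$ using Definition~\ref{def:strad} to produce a strictly decreasing chain of strongly admissible subsets until one lands inside $B_n$, contradicting that no element of $S^*$ is defended by $B_n$---whereas your induction on $|T|$ over strongly admissible $T \subseteq S$ is the same well-foundedness argument stated positively.
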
 

\begin{proof} For the proof we use $S\in\sad_{k}(\AF)$ as a shorthand for $S\in\sad(\AF)$ in the sense of Definition~$k$.\\
$(\oT)$ Given $S\in\sad_{\ref{def:stradnew}}(\AF)$. Hence, there is a finite partition, s.t. $S = \bigcup_{1\leq i \leq n} A_i$, $A_1\subseteq\Gamma_{\AF}(\emptyset)$ and $\bigcup_{1\leq i \leq j} A_i$  defends $A_{j+1} \text{ for } 1\leq j \leq n-1$. Observe that $\bigcup_{1\leq i \leq j} A_i \in \sad_{\ref{def:stradnew}}(\AF)$ for any $j\leq n$. Let $a\in S$. Consequently, there is an index $i^*$, s.t. $a\in A_{i^*}$. Furthermore, since $\bigcup_{1\leq i \leq i^*-1} A_i \text{ defends } A_{i^*}$ by definition, we deduce that $\bigcup_{1\leq i \leq i^*-1} A_i\subseteq S\sm\{a\}$ defends $a$. We have to show now that (the smaller set w.r.t.\ $\subseteq$) $\bigcup_{1\leq i \leq i^*-1} A_i\in\sad_{\ref{def:strad}}(\AF)$. Note that $\bigcup_{1\leq i \leq i^*-1} A_i\in\sad_{\ref{def:stradnew}}(\AF)$.  Since we are dealing with finite AFs we may iterate our construction. Hence, no matter which elements are chosen we end up with a $\subseteq$-chain, s.t. $\emptyset\subseteq \bigcup_{1\leq i \leq i_e} A_i \subseteq S_e\sm{a_e} $ and $\emptyset$ defends $a_{e}$ for some index $i_e$, set $S_e$ and element $a_e$. This means, the question whether $S\in\sad_{\ref{def:strad}}(\AF)$ can be decided positively by proving  $\emptyset\in\sad_{\ref{def:strad}}(\AF)$. Since the empty set does not contain any elements we find $\emptyset\in\sad_{\ref{def:strad}}(\AF)$ concluding $\sad_{\ref{def:stradnew}}\subseteq\sad_{\ref{def:strad}}$.\\
$(\To)$ Given $S\in\sad_{\ref{def:strad}}(\AF)$, consider the following sets~$S_i$: $S_1 = \left(\Gamma(\emptyset)\sm \emptyset\right) \cap S$, $S_2 = \left(\Gamma(S_1)\sm S_1\right) \cap S$, $S_3 = \left(\Gamma(\bigcup_{i=1}^2 S_i)\sm \bigcup_{i=1}^2 S_i\right) \cap S$, \dots , $S_n = \left(\Gamma(\bigcup_{i=1}^{n-1} S_{i})\sm \bigcup_{i=1}^{n-1} S_{i}\right) \cap S$. Since we are dealing with finite AFs there has to be a natural $n\in\N$, s.t. $S_n = S_{n+1} = S_{n+2} = \dots$. Consider now the union of these sets, i.e. $\bigcup_{i=1}^{n} S_{i}$. We show now that $\bigcup_{i=1}^{n} S_{i}\in\sad_{\ref{def:stradnew}}(\AF)$ and $\bigcup_{i=1}^{n} S_{i} = S$. By construction we have $S_1 \subseteq \Gamma(\emptyset)$. Moreover, $\bigcup_{1\leq i \leq j} S_i \text{ defends } S_{j+1} \text{ for } 1\leq j \leq n-1$. This can be seen as follows. By definition $S_{j+1} = \left(\Gamma(\bigcup_{i=1}^{j} S_{i})\sm \bigcup_{i=1}^{j} S_{i}\right) \cap S$. This means, $S_{j+1} \subseteq \Gamma\left(\bigcup_{i=1}^{j} S_{i}\right)$. Since $\Gamma\left(\bigcup_{i=1}^{j} S_{i}\right)$ contains all elements defended by $\bigcup_{i=1}^{j} S_{i}$ we obtain $\bigcup_{i=1}^{n} S_{i}\in\sad_{\ref{def:stradnew}}(\AF)$. Obviously, $\bigcup_{i=1}^{j} S_{i} \subseteq S$. In order to derive a contradiction we suppose $S\not\subseteq \bigcup_{i=1}^{n} S_{i}$. This means there is a nonempty set $S^*$, s.t. $S = S^* \cup \bigcup_{i=1}^{n} S_{i}$. Let $S^* = \{s_1,\dots,s_k\}$. Observe that no element $s_i$ is defended by $\bigcup_{i=1}^{n} S_{i}$ (*). Since $S\in\sad_{\ref{def:strad}}(\AF)$ we obtain a set $S^*_1 \subseteq S\sm\{s_1\}$, s.t. $S^*_1\in\sad_{\ref{def:strad}}(\AF)$ and $S^*_1$ defends $s_1$. We now iterate this procedure ending up with a set $S^*_k \subseteq S^*_{k-1}\sm\{s_k\}\subseteq \bigcup_{i=1}^{n} S_{i}$, s.t. $S^*_k\in\sad_{\ref{def:strad}}(\AF)$ and $S^*_k$ defends $s_k$ contradicting (*) and concluding the proof. 
\end{proof}


\begin{example} Consider the following AF $\AF$.
\vspace{-8pt}
\begin{center}
\begin{tikzpicture}
	\node (a) at (-0.6,0) [minimum size = 7mm, circle, thick, draw, label = left:$\AF:$]{$a$};
    \node (b) at (0.6,0) [minimum size = 7mm, circle, thick, draw]{$b$};
    \node (c) at (1.8,0) [minimum size = 7mm, circle, thick, draw] {$c$};
		\node (d) at (3,0) [minimum size = 7mm, circle, thick, draw] {$d$};
    \node (e) at (4.2,0) [minimum size = 7mm, circle, thick, draw] {$e$};
		\node (f) at (5.4,0) [minimum size = 7mm, circle, thick, draw] {$f$};

\draw[->,thick] (e) to [thick,loop,distance=0.5cm] (e);
\draw[->,thick] (a) to [thick, bend right] (b);
\draw[->,thick] (b) to [thick, bend left] (c);
\draw[->,thick] (c) to [thick, bend right] (e);
\draw[->,thick] (e) to [thick, bend right] (f);
\draw[->,thick] (f) to [thick, bend right] (e);
\draw[->,thick] (d) to [thick, bend left] (e);

\end{tikzpicture}
\end{center}
We have $\Gamma_{\AF}(\emptyset) = \{a,d\}$. Hence, for all $S\subseteq\{a,d\}$, $S\in\sad(\AF)$. Furthermore, $\Gamma_{\AF}(\{a\}) = \{a,c\}$, $\Gamma_{\AF}(\{d\}) = \{d,f\}$ and $\Gamma_{\AF}(\{a,d\}) = \{a,d,c,f\}$. This means, additionally $\{a,c\},\{d,f\},\{a,d,c\},\{a,d,f\}, $ $\{a,d,c,f\}\in\sad(\AF)$. Finally, $\Gamma_{\AF}(\{a,c\}) = \{a,c,f\}$ justifying the last missing set $\{a,c,f\}\in\sad(\AF)$. 
\end{example}

The following corollary is an immediate consequence of Definition~\ref{def:stradnew}. It is essential to prove the characterization theorem for strongly admissible sets.

\begin{corollary} \label{cor:strad}
Given an AF $\AF$ and two sets $B,B'\subseteq A(F)$. If $B$ defends $B'$, then $B\cup B'$ is strongly admissible if $B$ is. 
\end{corollary}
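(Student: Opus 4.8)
The statement to prove is Corollary~\ref{cor:strad}: given an AF $\AF$ and sets $B, B' \subseteq A(\F)$ such that $B$ defends $B'$, if $B$ is strongly admissible then $B \cup B'$ is strongly admissible.

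The plan is to work with the constructive characterization of strong admissibility from Definition~\ref{def:stradnew}, which is the natural tool here since it builds strongly admissible sets up in layers. First I would unpack the hypothesis that $B$ is strongly admissible: by Definition~\ref{def:stradnew} there exist finitely many pairwise disjoint sets $A_1, \ldots, A_n$ with $B = \bigcup_{1 \leq i \leq n} A_i$, with $A_1 \subseteq \Gamma_{\AF}(\emptyset)$, and with $\bigcup_{1 \leq i \leq j} A_i$ defending $A_{j+1}$ for $1 \leq j \leq n-1$. The idea is then to extend this layered decomposition by one more layer: set $A_{n+1} = B' \setminus B$. Since the $A_i$ are pairwise disjoint and $A_{n+1}$ is disjoint from each of them by construction, and since $B \cup B' = \bigl(\bigcup_{1 \leq i \leq n} A_i\bigr) \cup A_{n+1}$, the only thing left to check is the defence condition for the new layer, namely that $\bigcup_{1 \leq i \leq n} A_i = B$ defends $A_{n+1}$.

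The defence condition follows because $B$ defends $B'$ by hypothesis, hence $B$ defends every argument in $B'$, in particular every argument in $A_{n+1} = B' \setminus B \subseteq B'$. One small point of care: if $B' \subseteq B$ then $A_{n+1} = \emptyset$ and $B \cup B' = B$ is strongly admissible trivially (this is essentially the $n$-layer decomposition already in hand), so we may assume $A_{n+1} \neq \emptyset$ without loss of generality; alternatively one checks that the definition allows empty layers harmlessly. Another degenerate case worth a line is $B = \emptyset$, where $n$ can be taken to be zero layers (or one empty layer), and then $B \cup B' = B'$ must be defended by $\emptyset$, which means $B'$ consists of unattacked arguments, so $B' \subseteq \Gamma_{\AF}(\emptyset)$ and $B'$ is strongly admissible directly as a first layer. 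Putting these pieces together, $A_1, \ldots, A_n, A_{n+1}$ witnesses $B \cup B' \in \sad(\AF)$ via Definition~\ref{def:stradnew}.

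I do not expect a genuine obstacle here — the corollary is deliberately an easy consequence of the layered definition, which is exactly why Definition~\ref{def:stradnew} was introduced and proved equivalent to Definition~\ref{def:strad} in Proposition~\ref{pro:stradnew}. The only thing requiring mild attention is bookkeeping: ensuring the new layer is genuinely disjoint from the previous ones (handled by taking $A_{n+1} = B' \setminus B$ rather than $B'$ itself) and handling the trivial edge cases $B' \subseteq B$ and $B = \emptyset$. Since Proposition~\ref{pro:stradnew} licenses us to use either definition interchangeably, the conclusion $B \cup B' \in \sad(\AF)$ in the sense of Definition~\ref{def:strad} follows immediately.
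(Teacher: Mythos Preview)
Your proposal is correct and is exactly the approach the paper has in mind: the text states that the corollary ``is an immediate consequence of Definition~\ref{def:stradnew}'' without further detail, and you have simply spelled out that immediate consequence by appending $B'\setminus B$ as a new final layer to the decomposition of $B$.
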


The grounded kernel is insensitive w.r.t.\ strongly admissible sets,
which then allows us to state the main result for strongly admissible sets.

\begin{lemma} \label{lem:strad} For any AF $\AF$,  $\sad(\AF) = \sad\left(\AF^{k(\grd)}\right)$.
\end{lemma}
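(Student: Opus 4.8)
The goal is to show that applying the grounded kernel $k(\grd)$ to an arbitrary (finite) AF $\AF$ does not change the set of strongly admissible sets. I would exploit the new, constructive characterisation of strong admissibility from Definition~\ref{def:stradnew}: a set $S$ is strongly admissible iff it can be built up as a finite disjoint union $S = \bigcup_{1\le i\le n} A_i$ with $A_1 \subseteq \Gamma_{\AF}(\emptyset)$ and $\bigcup_{1\le i\le j} A_i$ defending $A_{j+1}$ for each $j$. Since this construction is driven entirely by the characteristic function $\Gamma$, it suffices to understand how $\Gamma$ behaves under the grounded kernel. Recall $R^{k(\grd)} = R \sm \{(a,b) \mid a\neq b, (b,b)\in R, \{(a,a),(b,a)\}\cap R \neq\emptyset\}$; every deleted attack $(a,b)$ points \emph{into} a self-loop $b$.

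The key lemma I would isolate and prove first is: for every set $B \subseteq A(\AF)$,
\begin{gather*}
  \Gamma_{\AF}(B) \sm L(\AF) = \Gamma_{\AF^{k(\grd)}}(B) \sm L(\AF),
\end{gather*}
where $L(\AF)$ is the set of self-looping arguments (which is preserved by the kernel, cf.\ Fact~\ref{fact:kernel}, item~2). In words: the grounded kernel does not change which \emph{non-self-looping} arguments are defended by any given set. This is because a deleted attack $(a,b)$ always has a self-loop at its target $b$, so $b$ is never an element of any strongly admissible (indeed any conflict-free admissible) set, and therefore the only role $(a,b)$ could play is in the \emph{defense} of some $c$ by attacking an attacker $b$ of $c$; but whenever $(a,b)$ is deleted we have $(a,a)\in R$ or $(b,a)\in R$, and I would check that in either case $b$ is already attacked within $\{a\}\cup(\text{rest of }B)$ in a way that survives in the kernel, or else $a$ itself is looping and cannot contribute to a strongly admissible set anyway. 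More carefully: the relevant direction is that a non-looping argument $c$ defended against attacker $b$ (with $b$ looping) in $\AF$ by some $a\in B$ via $(a,b)$ — if this attack is deleted, then $(b,b)\in R$ and $\{(a,a),(b,a)\}\cap R\neq\emptyset$; since $a$ defends $c$ it should itself eventually lie in a strongly admissible set so $(a,a)\notin R$, forcing $(b,a)\in R$, i.e.\ $b$ attacks $a$, so $a$ cannot be part of a strongly admissible set either — contradiction. This shows the problematic deletions never actually remove a defender that matters for strongly admissible sets.

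With this $\Gamma$-invariance in hand, the proof of $\sad(\AF)=\sad(\AF^{k(\grd)})$ proceeds by a straightforward induction on the length $n$ of the layered construction in Definition~\ref{def:stradnew}. For the forward inclusion, given $S\in\sad(\AF)$ with witnessing partition $A_1,\dots,A_n$, I would show the same partition witnesses $S\in\sad(\AF^{k(\grd)})$: each $A_i$ consists of non-self-looping arguments (strongly admissible sets are conflict-free), so the defense relations ``$\bigcup_{i\le j}A_i$ defends $A_{j+1}$'' transfer verbatim using the lemma above (applied with $B=\bigcup_{i\le j}A_i$ and noting $A_{j+1}\cap L(\AF)=\emptyset$), and likewise $A_1\subseteq\Gamma_{\AF}(\emptyset)\sm L(\AF)=\Gamma_{\AF^{k(\grd)}}(\emptyset)\sm L(\AF)$. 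The reverse inclusion is symmetric, using that $R^{k(\grd)}\subseteq R$ makes defense only easier to lose, combined again with the lemma; here I may additionally invoke Corollary~\ref{cor:strad} to glue layers together cleanly. The main obstacle I anticipate is the case analysis in the $\Gamma$-invariance lemma — specifically making rigorous the claim that a deleted attack can never be the \emph{sole} reason a non-looping argument is defended by a prospective strongly admissible set; one has to be slightly careful about the possibility $c=a$ or $c=b$, which Definition~\ref{def:stradnew} explicitly allows, but in both degenerate cases the argument in question is self-looping or attacks itself, ruling it out of any strongly admissible set. Everything else is bookkeeping.
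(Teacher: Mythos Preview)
Your key lemma --- that $\Gamma_{\AF}(B)\setminus L(\AF)=\Gamma_{\AF^{k(\grd)}}(B)\setminus L(\AF)$ for \emph{every} $B\subseteq A(\AF)$ --- is false as stated. Take $A=\{a,b,c\}$ and $R=\{(a,b),(b,a),(b,b),(b,c)\}$: the grounded kernel deletes $(a,b)$, and with $B=\{a\}$ one has $c\in\Gamma_\F(\{a\})\setminus L(\AF)$ but $c\notin\Gamma_{\AF^{k(\grd)}}(\{a\})$. So at minimum you must restrict $B$ to the strongly admissible prefixes $\bigcup_{i\le j}A_i$ arising in the induction. Even then, your argument for the lemma breaks at the decisive step: from $(a,a)\notin R$ and $(b,a)\in R$ you conclude ``$b$ attacks $a$, so $a$ cannot be part of a strongly admissible set either --- contradiction.'' That inference is simply wrong: being attacked never excludes an argument from a strongly admissible set; it only demands a defender. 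There is no contradiction in $\F$ at this point, because $a\in B$ and $B$ is admissible in $\F$.

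What actually closes the gap --- and this is the idea your plan is missing --- is to use the induction hypothesis \emph{in the kernel}, not in $\F$. From $(b,a)\in R$ and $(a,a)\notin R$ one checks directly from Definition~\ref{def:kernel} that $(b,a)$ is \emph{not} deleted, so $(b,a)\in R(\AF^{k(\grd)})$. The induction hypothesis says $B=\bigcup_{i\le j}A_i$ is strongly admissible in $\AF^{k(\grd)}$, hence admissible there; since $b$ attacks $a\in B$ in the kernel, some $a''\in B$ must attack $b$ in the kernel, contradicting the assumption that no element of $B$ counterattacks $b$ in $\AF^{k(\grd)}$. In short, a standalone $\Gamma$-invariance lemma cannot be proved first and then fed into the induction; the induction itself has to carry the statement ``the prefix is strongly admissible in the kernel,'' which is precisely what supplies a surviving counterattack. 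Organise the proof that way and the $(\subseteq)$ direction goes through; your sketch of $(\supseteq)$ is essentially right once you note that any extra attacker $(b,c)\in R\setminus R^{k(\grd)}$ forces $(c,c)\in R$, violating conflict-freeness of $S$.
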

\begin{proof} The grounded kernel is node- and loop-preserving, i.e.\ $A(\AF) = A\left(\AF^{k(\grd)}\right)$ and $L(\AF)~=~L\left(\AF^{k(\grd)}\right)$. Furthermore, $\cf(\AF)\! =\! \cf\left(\AF^{k(\grd)}\right)$ and $\Gamma_{\AF}(\emptyset)~=~\Gamma_{\AF^{k(\grd)}}(\emptyset)$ as shown in \cite[Lemma~6]{strong}.\\
($\subseteq$) Given $S\in\sad(\F)$. The proof is by induction on $n$ indicating the number of  sets forming a suitable (according to Definition~\ref{def:stradnew}) partition of $S$. Let $n=1$. In consideration of the grounded kernel we observe $\Gamma_{\F}(\emptyset) = \Gamma_{\F^{k(\grd)}}(\emptyset)$, i.e. the set of unattacked arguments does not change. Since $S\subseteq\Gamma_{\F}(\emptyset)$ is assumed we are done. Assume now that the assertion is proven for any $k$-partition. Let $S$ be a $(k+1)$-partition, i.e. $S = \bigcup_{i=1}^{k+1} A_i$. According to induction hypothesis as well as Corollary~\ref{cor:strad} it suffices to prove $\bigcup_{i=1}^k A_i$ defends $A_{k+1}$ in $\F^{k(\grd)}$. Assume not, i.e. there are arguments $b\in A(\F)\sm S$, $c\in A_{k+1}$ s.t. $(b,c)\in R\left(\F^{k(\grd)}\right) \subseteq R(\AF)$ and for all $a\in \bigcup_{i=1}^k A_i$, $(a,b)\notin R\left(\F^{k(\grd)}\right)$ (*). Since $\bigcup_{i=1}^k A_i$ defends $A_{k+1}$ in $\F$ we deduce the existence of an argument $a\in\bigcup_{i=1}^k A_i$
 s.t. $(a,b)\in R(\F)$. Thus, $(a,b)$ is redundant w.r.t.\ the grounded kernel. According to
 Definition~\ref{def:kernel} and due to the conflict-freeness of $\bigcup_{i=1}^k A_i$ we have $(a,a)\notin R(\F)$ and $(b,a),(b,b)\in R(\F)$. Consequently, $(b,a)\in\F^{k(\grd)}$.
 Since $\bigcup_{i=1}^k A_i$ is a strong admissible $k$-partition in $\F$ we obtain by induction hypothesis that $\bigcup_{i=1}^k A_i$ is strongly admissible in $\F^{k(\grd)}$ and therefore, admissible in $\F^{k(\grd)}$ (Proposition~\ref{pro:strad:prop}). Hence there has to be an argument $a\in \bigcup_{i=1}^k A_i$, s.t.\ $(a,b)\in R\left(\F^{k(\grd)}\right)$, contradicting~(*).\\
($\supseteq$) Assume $S\in\sad\left(\F^{k(\grd)}\right)$. We show $S\in\sad(\F)$ by induction on $n$ indicating that $S$ is a $n$-partition in $\F^{k(\grd)}$. Due to $\Gamma_{\F}(\emptyset) = \Gamma_{\F^{k(\grd)}}(\emptyset)$ the base case is immediately clear. For the induction step let $S$ be a $(k+1)$-partition, i.e. $S = \bigcup_{i=1}^{k+1} A_i$. By induction hypothesis we may assume that $\bigcup_{i=1}^k A_i$ is strongly admissible in $\F$. Using Corollary~\ref{cor:strad} it suffices to prove $\bigcup_{i=1}^k A_i$ defends $A_{k+1}$ in $\F$. Assume not, i.e. there are arguments $b\in A(\F)\sm S$, $c\in A_{k+1}$ s.t. $(b,c)\in R(\F)$ and for all $a\in \bigcup_{i=1}^k A_i$, $(a,b)\notin R(\F)$. We even have $(a,b)\notin R\left(\F^{k(\grd)}\right)$ since $R\left(\F^{k(\grd)}\right)\subseteq R(\F)$. Consequently, $(b,c)$ has to be deleted in $\F^{k(\grd)}$. Definition~\ref{def:kernel} requires $(c,c)\in R\left(\F^{k(\grd)}\right)$ contradicting the conflict-freeness of $S$ in $\F^{k(\grd)}$.
\end{proof}


\begin{theorem} For any two AFs $\F$ and $\G$, we have
$\F\equiv^{\sad}_{E} \G \ToT \F^{k(\grd)} = \G^{k(\grd)}$.
\end{theorem}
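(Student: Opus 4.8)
The plan is to prove the equivalence $\F\equiv^{\sad}_{E}\G \iff \F^{k(\grd)} = \G^{k(\grd)}$ by establishing each direction separately, leaning heavily on the tools already assembled in this section: Lemma~\ref{lem:strad} (the grounded kernel is insensitive w.r.t.\ strongly admissible sets), Proposition~\ref{pro:strad:prop} (the chain $\grd\subseteq\sad\subseteq\adm$ and that $\sad$-equality implies $\grd$-equality), and the $\dcup$-robustness of the grounded kernel from Fact~\ref{fact:kernel}.

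For the direction $(\oT)$, assume $\F^{k(\grd)} = \G^{k(\grd)}$. Given any AF $\AH$, $\dcup$-robustness of $k(\grd)$ yields $(\F\dcup\AH)^{k(\grd)} = (\G\dcup\AH)^{k(\grd)}$. Applying Lemma~\ref{lem:strad} to both sides gives $\sad(\F\dcup\AH) = \sad\!\left((\F\dcup\AH)^{k(\grd)}\right) = \sad\!\left((\G\dcup\AH)^{k(\grd)}\right) = \sad(\G\dcup\AH)$. Since $\AH$ was arbitrary, $\F\equiv^{\sad}_{E}\G$. This is the routine direction and should go through cleanly.

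For the direction $(\To)$, I would prove the contrapositive: assume $\F^{k(\grd)} \neq \G^{k(\grd)}$ and exhibit a witnessing AF $\AH$ with $\sad(\F\dcup\AH) \neq \sad(\G\dcup\AH)$. The natural strategy is to mimic the argument used for the grounded kernel in the characterization of expansion equivalence for grounded semantics (Theorem~\ref{the:strong}, from \cite{strong}): one distinguishes whether the kernels differ in their argument sets or in their attack sets, and in each case builds a small expansion that forces a difference. Concretely, if $a \in A(\F^{k(\grd)}) \sm A(\G^{k(\grd)})$, take $\AH$ to be an AF on the remaining arguments (e.g.\ attacking all of $A(\F)\cup A(\G)$ except $a$ via a fresh argument, or restricting both frameworks appropriately) so that $a$ becomes unattacked and hence belongs to a strongly admissible set in $\F\dcup\AH$ but not in $\G\dcup\AH$; similarly for a self-loop present in one kernel but not the other, using loop-preservation. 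If the kernels share arguments and self-loops but differ on a non-loop attack $(a,b)$, one uses the defining condition of the grounded kernel (Definition~\ref{def:kernel}): $(a,b) \in R(\F)\sm R(\F^{k(\grd)})$ requires $(b,b)\in R(\F)$ and $\{(a,a),(b,a)\}\cap R(\F)\neq\emptyset$, and one designs $\AH$ so that the presence/absence of this attack changes whether a particular argument can be defended, hence changes $\sad$. Alternatively, and perhaps more cleanly, since $\sad$-equality implies $\grd$-equality (Proposition~\ref{pro:strad:prop}(3)), I could invoke the known result that $\F\equiv^{\grd}_{E}\G \iff \F^{k(\grd)}=\G^{k(\grd)}$: from $\F\equiv^{\sad}_{E}\G$ one gets $\sad(\F\dcup\AH)=\sad(\G\dcup\AH)$ for all $\AH$, hence $\grd(\F\dcup\AH)=\grd(\G\dcup\AH)$ for all $\AH$, i.e.\ $\F\equiv^{\grd}_{E}\G$, and then Theorem~\ref{the:strong} immediately gives $\F^{k(\grd)}=\G^{k(\grd)}$.

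The main obstacle is really just confirming that the second, slicker route is fully rigorous — it is, since it only chains Proposition~\ref{pro:strad:prop}(3) with the already-cited grounded-kernel characterization of expansion equivalence — so I would present that as the proof of $(\To)$ and the Lemma~\ref{lem:strad}-plus-robustness argument for $(\oT)$. If for some reason one wanted a self-contained proof of $(\To)$ not routed through grounded semantics, the delicate point would be handling the non-loop attack case of the kernel, where one must carefully verify that the constructed expansion $\AH$ genuinely separates the strongly admissible sets rather than merely the admissible ones; strong admissibility's iterative/constructive characterization (Definition~\ref{def:stradnew}) would be the right tool there, tracking which arguments get defended at each stage.
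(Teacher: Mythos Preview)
Your proposal is correct and matches the paper's proof essentially verbatim: the $(\oT)$ direction is exactly Lemma~\ref{lem:strad} plus $\dcup$-robustness of $k(\grd)$, and your ``slicker route'' for $(\To)$ via Proposition~\ref{pro:strad:prop}(3) and Theorem~\ref{the:strong} is precisely the paper's argument (presented there in contrapositive form). Your instinct to prefer this route over the direct case analysis is the right one.
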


\begin{proof} ($\To$) We show the contrapositive, i.e. $\F^{k(\grd)}~\neq~\G^{k(\grd)} \To \F\not\equiv^{\sad}_{E} \G$. Assuming $\F^{k(\grd)} \neq \G^{k(\grd)}$ implies $\F\not\equiv^{\grd}_{E} \G$ (cf.\ Theorem~\ref{the:strong}). This means, there is an AF $\H$, s.t. $\grd(F\dcup H) \neq \grd(G\dcup H)$.
Due to statement 3 of Proposition~\ref{pro:strad:prop}, we deduce $\sad(F\dcup H) \neq \sad(G\dcup H)$ proving $\F\not\equiv^{\sad}_{E} \G$.\\
($\oT$) Given $\F^{k(\grd)} = \G^{k(\grd)}$. Since expansion equivalence is a congruence w.r.t.\ $\dcup$ we obtain $(\F\dcup\H)^{k(\grd)} = (\G\dcup\H)^{k(\grd)}$ for any AF $\H$. Consequently, $\sad\left((\F\dcup\H)^{k(\grd)}\right) = \sad\left((\G\dcup\H)^{k(\grd)}\right)$. Due to Lemma~\ref{lem:strad} we deduce $\sad(\F\dcup\H) = \sad(\G\dcup\H)$, concluding the proof. 
\end{proof}

\section{Verifiability}
\label{sec:verify}
 
In this section we study the question whether we really need
the entire AF $\F$ to compute the extensions of a given semantics.
Consider naive semantics.
Obviously, in order to determine naive extensions it suffices
to 
know
all conflict-free sets.
Conversely, knowing $\cf(\F)$ only does not allow to reconstruct $\F$ unambiguously.
This means, knowledge about $\cf(\F)$ is indeed less information than the entire AF by itself.
In fact, most of the existing semantics do not need information about the entire AF.
We will categorize the amount of information by taking the conflict-free sets as a basis
and distinguish between different amounts of knowledge about the neighborhood of these sets.
One the one hand this is natural since conflict-freeness is the most basic concept used by argumentation semantics
and on the other hand the neighborhood (i.e.\ arguments attacking and being attacked by an argument)


\begin{definition}
\label{def:classes}
We call a function $\r^x : \powerset{\m{U}}\times \powerset{\m{U}} \to \left(\powerset{\m{U}}\right)^n$
($n>0$),
which is expressible via basic set operations only,
\emph{neighborhood function}.
A neighborhood function $\r^x$ induces the \emph{verification class}
mapping each AF $\F$ to
$\widetilde{\F}^x = \{(S,\r^x(S^\oplus_\F,S^\ominus_\F)) \mid S \in \cf(\F)\}$.
%
\end{definition}

We coined the term neighborhood function because the induced verification classes apply these functions to the neighborhoods, i.e. range and anti-range of conflict-free sets. The notion of {\it expressible via basic set operations} simply means that (in case of $n = 1$) the expression $\r^x(A,B)$ is in the language generated by the BNF
$X ::= A\mid B \mid (X\cup X) \mid (X\cap X) \mid (X\setminus X)$.
Consequently, in case of $n=1$, we may distinguish eight
set theoretically different neighborhood functions, namely
\vspace{-5pt}
\begin{align*}
\r^\epsilon(S,S') &= \emptyset &
\r^{+}(S,S') &=  S  &
\r^{-}(S,S') &=  S' &
\r^{\mp}(S,S') &= S' \sm S \\[-3pt]
\r^{\pm}(S,S') &=  S \sm S' &
\r^{\cap}(S,S') &=  S \cap S' &
\r^{\cup}(S,S') &= S \cup S' &
\r^{\Delta}(S,S') &= (S \cup S') \sm (S \cap S')
\end{align*}

\vspace{-6pt}
The names of the neighborhood functions are inspired by
their usage in the verification classes they induce (cf.\ Definition~\ref{def:classes}).
A verification class encapsulates a certain amount of information about an AF, as the following example illustrates.

\begin{example}
\label{ex:verification_class}
Consider the following AF $\F=(\{a,b,c\},\{(a,b),(b,a),(b,b),(c,b)\})$.
\begin{center}
\begin{tikzpicture}
	\node (a) at (-0.6,0) [circle, thick, draw, label = left:$\F:$]{$a$};
    \node (b) at (1,0) [circle, thick, draw]{$b$};

    \node (c) at (2.6,0) [circle, thick, draw] {$c$};

\draw[->,thick] (b) to [thick,loop,distance=0.5cm] (b);

\draw[->,thick] (b) to [thick, bend right] (a);
\draw[->,thick] (a) to [thick, bend right] (b);
\draw[->,thick] (c) to [thick, bend right] (b);
\end{tikzpicture}
\end{center}
Now take, for instance, the verification class induced by $\r^+$, that is
$\widetilde{\F}^+ = \{(S,\r^+(S^\oplus_\F,S^\ominus_\F))\mid S\in\cf(\F)\} = \{(S,S^\oplus_\F)\mid S\in\cf(\F)\}$,
storing information about conflict-free sets together with their associated ranges w.r.t.\ $\F$.
It contains the following tuples:
$(\emptyset,\emptyset)$, $(\{a\},\{a,b\})$, $(\{c\},\{b,c\})$, and $(\{a,c\},\{a,b,c\})$.
For he verification class induced by $\r^\pm$,
on the other hand,
we have
$\widetilde{\F}^\pm = \{(\emptyset,\emptyset),(\{a\},\emptyset),(\{c\},\{b\}),(\{a,c\},\emptyset)\}$.
\end{example}

 Intuitively, it should be clear that the set $\widetilde{\F}^+$ suffices to compute stage extensions (i.e., range-maximal conflict-free sets) of~$\F$. This intuitive understanding of \textit{verifiability} will be formally specified in Definition~\ref{def:verifiable}.  
Note that a neighborhood function $\r^x$ may return $n$-tuples. Consequently, in consideration of the eight
basic functions we obtain (modulo reordering, duplicates, empty set) $2^7+1$ syntactically different neighborhood functions and therefore the same number of verification classes. As usual, we
denote the $n$-ary combination of basic functions $(\r^{x_1}(S,S'), \dots, \r^{x_n}(S,S'))$ as $\r^x(S,S')$ by $x=x_1\dots x_n$.

With the following definition we can put neighborhood functions into relation w.r.t.\ their information.
This will help us to show that actually many of the induced classes collapse to the same amount of information.

\begin{definition}
\label{def:informative}
Given neighborhood functions $\r^x$ and $\r^y$ returning $n$-tuples and $m$-tuples, respectively,
we say that $\r^x$ is \textit{more informative} than $\r^y$,
for short $\r^x \succeq \r^y$,
iff there is a function
$\delta : \left(\powerset{\m{U}}\right)^n \to \left(\powerset{\m{U}}\right)^m$
such that for any two sets of arguments $S,S' \subseteq \m{U}$,
we have $\delta(\r^x(S,S')) = \r^y(S,S')$.
We denote the strict part of $\succeq$ by $\succ$,
i.e.\ $\r^x \succ \r^y$ iff $\r^x \succeq \r^y$ and $\r^y \not\succeq \r^x$.
Finally, $\r^x \approx \r^y$ ($\r^x$ \emph{represents} $\r^y$ and vice versa)
in case $\r^x \succeq \r^y$ and $\r^y \succeq \r^x$.
\end{definition}

\begin{figure}[t]
\centering
\begin{tikzpicture}
\matrix (a) [matrix of math nodes, column sep=0.41cm, row sep=0.5cm]{
& & & +- \\
+\pm & +\mp & \pm\mp & & \cap\cup & -\pm & -\mp \\[0.5cm]
+ & \pm & \cap & \Delta & \cup & \mp & - \\
& & & \epsilon \\};

\foreach \i/\j in {2-1/1-4, 2-2/1-4,  2-3/1-4, 2-5/1-4, 2-6/1-4, 2-7/1-4,
                   3-1/2-1, 3-1/2-2,
                   3-2/2-1, 3-2/2-3, 3-2/2-6,
                   3-3/2-1, 3-3/2-5, 3-3/2-7,
                   3-4/2-3, 3-4/2-5,
                   3-5/2-2, 3-5/2-5, 3-5/2-6,
                   3-6/2-2, 3-6/2-3, 3-6/2-7,
                   3-7/2-6, 3-7/2-7,
                   4-4/3-1, 4-4/3-2, 4-4/3-3, 4-4/3-4, 4-4/3-5, 4-4/3-6, 4-4/3-7}
         \draw[->] (a-\i) -- (a-\j);
\end{tikzpicture}
\vspace{-10pt}
\caption{Representatives of neighborhood functions and their relation w.r.t.\ information;
         a node $x$ stands for the neighborhood function $\r^x$;
         an arrow from $x$ to $y$ means $\r^x \prec \r^y$.}
\label{fig:verification_classes}
\end{figure}
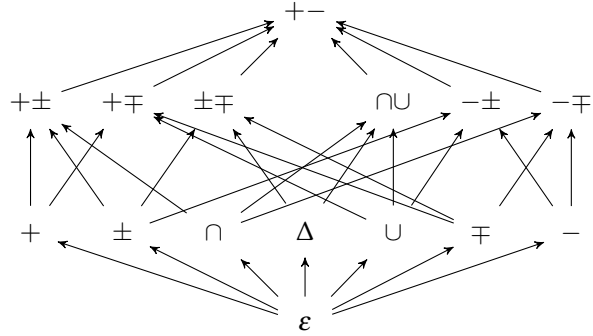

It turns out that many neighborhood functions amount to the same amount of information.
In particular, $\r^{+-}$ represents all $\r^{x_1,\dots,x_n}$ with $n>2$.

\begin{lemma}
\label{lemma:verification_classes}
All neighborhood functions are represented by the ones
depicted in Figure~\ref{fig:verification_classes}
and
the $\prec$-relation represented by arcs in Figure~\ref{fig:verification_classes} holds.
\end{lemma}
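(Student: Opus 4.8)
\textbf{Plan for the proof of Lemma~\ref{lemma:verification_classes}.}
The claim has two parts: (i) every neighborhood function $\r^x$ (for arbitrary $n$) is represented by one of the fifteen functions listed in Figure~\ref{fig:verification_classes}, and (ii) the $\prec$-relations drawn as arcs in that figure actually hold. I would attack these separately, handling (i) first since it constrains what (ii) needs to verify.

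For part (i), the key observation is that a neighborhood function built from the BNF in the text produces, in each coordinate, one of the eight basic set expressions in the variables $S$ (the range) and $S'$ (the anti-range): $\emptyset$, $S$, $S'$, $S\setminus S'$, $S'\setminus S$, $S\cap S'$, $S\cup S'$, $(S\cup S')\setminus(S\cap S')$. This is a finite syntactic fact about Boolean combinations of two sets: up to equality of set expressions there are only these eight regions/combinations (the free Boolean algebra on two generators has $2^{4}=16$ elements, but the ``complement'' regions $\overline{S}$, etc., are not expressible since complementation is absent from the BNF, and constant $\m{U}$ is likewise absent, leaving exactly these eight). So an $n$-tuple neighborhood function is, up to componentwise equality, a tuple over an $8$-element alphabet. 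Reordering coordinates and deleting duplicate coordinates does not change the amount of information (one can always recover a permuted or duplicated tuple by an appropriate $\delta$), and a coordinate equal to $\emptyset$ carries nothing, so every neighborhood function is represented by a \emph{subset} of $\{+,-,\pm,\mp,\cap,\cup,\Delta\}$ (seven nontrivial basic functions), giving $2^{7}$ candidates plus the trivial $\epsilon$. I would then show that any such subset is represented by one of the fifteen in the figure. The crucial reduction is that $\r^{+-}$ — the pair (range, anti-range) — represents \emph{all} of the others: from $S$ and $S'$ one recovers every Boolean combination by $\delta$, hence $\r^{+-}\succeq\r^{x}$ for every $x$. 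Dually, any subset containing two basic functions from which both $S$ and $S'$ are recoverable is already as informative as $\r^{+-}$, hence $\approx\r^{+-}$; checking which pairs (and larger subsets) recover $\{S,S'\}$ is a short finite case analysis. The remaining subsets — those from which one cannot recover both generators — are shown to collapse onto the second- and third-level representatives ($+\pm$, $+\mp$, $\pm\mp$, $\cap\cup$, $-\pm$, $-\mp$ on the middle level; $+$, $\pm$, $\cap$, $\Delta$, $\cup$, $\mp$, $-$ on the lower level) by exhibiting, for each, the retraction $\delta$ witnessing $\succeq$ in the easy direction. I expect this bookkeeping — enumerating the $2^{7}$ subsets and matching each to its representative — to be the main obstacle, though it is entirely mechanical; the right way to organize it is to stratify subsets by ``which of $S,S'$ (or their symmetric parts) are recoverable'' so that one never literally writes out all $128$ cases.

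For part (ii), I would verify each arc $x\prec y$ in Figure~\ref{fig:verification_classes} by (a) giving the function $\delta$ witnessing $\r^{y}\succeq\r^{x}$, which is typically a one-line set-algebra identity (e.g.\ $S=(S\setminus S')\cup(S\cap S')$ gives $\r^{+}\preceq\r^{\pm\cap}\approx\cdots$, and $\r^{\epsilon}\preceq\r^{x}$ for every $x$ via the constant map to $\emptyset$), and (b) proving strictness, i.e.\ $\r^{x}\not\succeq\r^{y}$, by producing two pairs of argument sets $(S_{1},S_{1}')$ and $(S_{2},S_{2}')$ with $\r^{x}(S_{1},S_{1}')=\r^{x}(S_{2},S_{2}')$ but $\r^{y}(S_{1},S_{1}')\neq\r^{y}(S_{2},S_{2}')$. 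These separating examples are tiny (two or three arguments suffice), and I would pick a small menu of them — e.g.\ $(\{a\},\{a\})$ vs.\ $(\{a\},\{a,b\})$ separates any $y$ that sees $S'$ or $\mp$ from any $x$ that sees only $S$ — and reuse them across all the arcs. Transitivity of $\succeq$ then lets me restrict attention to the covering arcs actually drawn, rather than all comparabilities, which keeps the count manageable. Finally I would remark that no further arcs are present, i.e.\ incomparable nodes in the Hasse diagram are genuinely $\succeq$-incomparable, using the same separating-example technique in both directions; this is what makes Figure~\ref{fig:verification_classes} a faithful picture of the $\succeq$-preorder modulo $\approx$.
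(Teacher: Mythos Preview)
Your proposal is correct and follows essentially the same approach as the paper: both reduce to the eight basic set expressions, count $2^{7}+1$ candidates modulo reordering and duplicates, collapse these to fifteen representatives via explicit $\delta$-maps (with $\r^{+-}$ at the top), and then verify the arcs by exhibiting the easy $\succeq$ direction and arguing strictness. The paper's proof is organized as an explicit enumeration of the $n=2$ equivalence classes (e.g.\ $\r^{+\pm}\approx\r^{+\cap}\approx\r^{\pm\cap}$ via three concrete $\delta$'s) rather than your stratification by ``which of $S,S'$ are recoverable'', and it is terser on strictness (simply asserting the non-existence of $\delta$ rather than constructing separating pairs), but these are presentational differences rather than substantive ones.
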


\begin{proof}
We begin by showing that all neighborhood functions are represented in Figure~\ref{fig:verification_classes}.
Clearly, each neighborhood function $\r^x$ represents itself, i.e. $\r^x \approx \r^x$.
All neighborhood functions for $n=1$
are depicted in Figure~\ref{fig:verification_classes}.
We turn to $n=2$.
Consider the neighborhood functions $\r^{+\pm}$, $\r^{+\cap}$, and $\r^{\pm\cap}$,
defined as
$\r^{+\pm}(S,S') = (S,S \sm S')$,
$\r^{+\cap}(S,S') = (S,S \cap S')$, and
$\r^{\pm\cap}(S,S') = (S \sm S',S \cap S')$
for $S,S' \subseteq \m{U}$.
Observe that $S = (S \sm S') \cup (S \cap S')$.
Hence, we can easily define functions in the spirit of Definition~\ref{def:informative}
mapping the images of the function to one another.

\begin{itemize}
\item
$\delta_1(\r^{+\pm}(S,S')) = \delta_1(S,S \sm S') =_{def} (S,S \sm (S \sm S')) = (S,S \cap S') = \r^{+\cap}(S,S')$;
\item
$\delta_2(\r^{+\cap}(S,S')) = \delta_2(S,S \cap S') =_{def} (S \sm (S \cap S'),S \cap S') = (S \sm S',S \cap S') = \r^{\pm\cap}(S,S')$;
\item
$\delta_3(\r^{\pm\cap}(S,S')) = \delta_3(S \sm S',S \cap S') =_{def} ((S \sm S') \cup (S \cap S'),S \sm S')$ $= (S,S \sm S')$ = $\r^{+\pm}(S,S')$.
\end{itemize}

Therefore, $\r^{+\pm} \approx \r^{+\cap} \approx \r^{\pm\cap}$.
In particular, they are all represented by $\r^{\pm}$.
We can apply the same reasoning to other combinations of neighborhood functions
and get the following equivalences w.r.t.\ information content:

\begin{itemize}
\item
$\r^{+\mp} \approx \r^{+\cup} \approx \r^{\mp\cup}$;
\item
$\r^{\pm\mp} \approx \r^{\pm\Delta} \approx \r^{\mp\Delta}$;
\item
$\r^{\cap\cup} \approx \r^{\cap\Delta} \approx \r^{\cup\Delta}$;
\item
$\r^{-\pm} \approx \r^{-\cup} \approx \r^{\pm\cup}$; and
\item
$\r^{-\mp} \approx \r^{-\cap} \approx \r^{\mp\cap}$,
\end{itemize}
with the functions stated first
acting as representatives in Figure~\ref{fig:verification_classes}.

For the remaining functions returning $2$-tuples we get
$\r^{+-} \approx \r^{+\Delta} \approx \r^{-\Delta}$
by the fact that $\r^{\Delta}(S,S') = (S \cup S') \sm (S \cap S')$.
\pagebreak
\begin{itemize}[leftmargin=15pt]
\item
$\delta_4(\r^{+-}(S,S')) = \delta_4(S,S') =_{def} (S,(S\cup S') \sm (S\cap S')) = \r^{+\Delta}(S,S')$;
\item
$\delta_5(\r^{+\Delta}(S,S')) = \delta_5(S,(S\cup S') \sm (S\cap S')) =_{def} ((S\sm((S\cup S') \sm (S\cap S')))\cup ((S\cup S') \sm (S\cap S'))\sm S,(S\cup S') \sm (S\cap S')) = (S',(S\cup S') \sm (S\cap S')) = \r^{-\cap}(S,S')$;
\item
$\delta_6(\r^{-\Delta}(S,S')) = \delta_6(S',(S\cup S') \sm (S\cap S')) =_{def} ((S'\sm((S\cup S') \sm (S\cap S')))\cup ((S\cup S') \sm (S\cap S'))\sm S',S') = (S,S') = \r^{+-}(S,S')$.
\end{itemize}

Finally, every neighborhood function $\r^{x_1 \dots x_n}$
with $x_1,\dots,x_n \in \{+,-,\pm,\mp,\cup,\cap,\Delta\}$ and
$n \geq 3$
is represented by $\r^{+-}$ since we can compute all possible sets
from $S$ and $S'$.
Therefore $\r^{+-}$ (together with all of these equally informative function) is the most informative
neighborhood function.


Now consider two functions $\r^x$ and $\r^y$ such that
there is an arrow from $x$ to $y$ in Figure~\ref{fig:verification_classes}.
It is easy to see that $\r^y \succeq \r^x$ since,
for sets of arguments $S$ and $S'$,
$\r^x(S,S')$ is either contained in $\r^y(S,S')$ or
obtainable from $\r^y(S,S')$ by basic set operations.
The fact that $\r^x  \not\succeq \r^y$, entailing $\r^y \succ \r^x$,
follows from the impossibility of finding a function $\delta$
such that $\delta(\r^x(S,S')) = \r^y(S,S')$.
%

\end{proof}

If the information provided by a neighborhood function 
is sufficient to compute the extensions under a semantics,
we say that the semantics is verifiable by
the class induced by the neighborhood function.

\begin{definition} \label{def:verifiable}
A semantics $\sigma$ is \textit{verifiable} by
the verification class induced by
the neighborhood function
$\r^x$ returning $n$-tuples (or simply, $x$\textit{-veri\-fi\-able}) iff
there is a function (also called \textit{criterion})
$\gamma_{\sigma} : \left(\powerset{\m{U}}\right)^n \times \powerset{\m{U}} \to \powerset{\powerset{\m{U}}}$
s.t. for every AF $\F \in \m{U}$ we have:
$\gamma_{\sigma}\left(\widetilde{\F}^x, A(\F)\right) = \sigma(\F)$.
Moreover, $\sigma$ is \emph{exactly $x$-verifiable} iff
$\sigma$ is $x$-verifiable and there is no verification class induced by $\r^y$ with
$\r^y \prec \r^x$ such that $\sigma$ is $y$-verifiable.
\end{definition}

Observe that if a semantics $\sigma$ is $x$-verifiable then for any two AFs $\F$ and $\G$ with $\widetilde{\F}^x = \widetilde{\G}^x$ and\linebreak $A(\F) = A(\G)$
it must hold that $\sigma(F) = \sigma(G)$.

We proceed with a list of criteria showing that well-known semantics mentioned in Definition~\ref{def:extsem} are verifiable by a verification class induced by a certain neighborhood function. 
In the following, we abbreviate the tuple $\left(\widetilde{\F}^x,A(\F)\right)$ by $\widetilde{\F}^x_A$. 


\begin{align*}
\gamma_{\nav}\left(\widetilde{\F}^\epsilon_A\right) =  \Big\{& S\mid 
           S\in\widetilde{\F}, S \textit{ is }\subseteq\textit{-maximal in }\widetilde{\F} \Big\}; \\
\gamma_{\stg}\left(\widetilde{\F}^+_A\right) = \Big\{ & S\mid 
           (S,S^+)\in\widetilde{\F}^+,
             S^+ \textit{ is }\subseteq\textit{-maximal in } \{C^+\mid (C,C^+)\in\widetilde{\F}^+\} \Big\}; \\
\gamma_{\stb}\left(\widetilde{\F}_A^+\right) = \Big\{ & S\mid 
           (S,S^+)\in\widetilde{\F}^+, S^+= A\Big\}; \\
\gamma_{\adm}\left(\widetilde{\F}^\mp_A\right) = \Big\{ & S\mid 
           (S,S^\mp)\in\widetilde{\F}^\mp, S^\mp = \emptyset\Big\}; \\
\gamma_{\prf}\left(\widetilde{\F}^\mp_A\right) = \Big\{ & S\mid 
           S\in\gamma_{\adm}\left(\widetilde{\F}^\mp_A\right),
             S \textit{ is }\subseteq\textit{-maximal in } \gamma_{\adm}\left(\widetilde{\F}^\mp_A\right) \Big\}; \\
\gamma_{\semi}\left(\widetilde{\F}^{+\mp}_A\right) = \Big\{ & S\mid
           S\in\gamma_{\adm}\left(\widetilde{\F}^\mp_A\right), 
             S^+ \textit{ is }\subseteq\textit{-maximal in } 
             \Big\{C^+\mid (C,C^+,C^\mp)\in\widetilde{\F}^{+\mp}, C\in\gamma_{\adm}\left(\widetilde{\F}^\mp_A\right)\Big\} \Big\};\\
\gamma_{\id}\left(\widetilde{\F}^\mp_A\right) = \Big\{ & S\mid
           S \textit{ is }\subseteq\textit{-maximal in } 
           \Big\{C \mid C \in \gamma_{\adm}\left(\widetilde{\F}^\mp_A\right), C \subseteq \bigcap \gamma_\prf\left(\widetilde{\F}^\mp_A\right)\Big\} \Big\};\\
\gamma_{\eag}\left(\widetilde{\F}^{+\mp}_A\right) = \Big\{ & S\mid
           S \textit{ is }\subseteq\textit{-maximal in } 
           \Big\{C \mid C \in \gamma_{\adm}\left(\widetilde{\F}^\mp_A\right), C \subseteq \bigcap \gamma_\semi\left(\widetilde{\F}^{+\mp}_A\right)\Big\}\Big\};
\end{align*}
\begin{align*}
\gamma_{\sad}\left(\widetilde{\F}^{-\pm}_A\right) = \Big\{ & S \mid 
           (S,S^-,S^\pm) \in \widetilde{\F}^{-\pm}, 
           \exists (S_0,S_0^-,S_0^\pm),\dots,(S_n,S_n^-,S_n^\pm) \in \widetilde{\F}^{-\pm}: \\
           & (\emptyset = S_0 \subset \dots \subset S_n = S \wedge
           \forall i \in \{1,\dots,n\} : S_i^- \subseteq S_{i-1}^\pm) \Big\};\\
\gamma_{\grd}\left(\widetilde{\F}^{-\pm}_A\right) = \Big\{ & S \mid
           S \in \gamma_{\sad}\left(\widetilde{\F}^{-\pm}_A\right), 
            \forall (\bar{S},\bar{S}^-,\bar{S}^\pm) \in \widetilde{\F}^{-\pm} : 
             \bar{S} \supset S \Rightarrow (\bar{S}^- \sm S^\pm) \neq \emptyset) \Big\}; \\
\gamma_{\com}\left(\widetilde{\F}^{+-}_A\right) = \Big\{ & S \mid 
           (S,S^+,S^-) \in \widetilde{\F}^{+-}, (S^- \sm S^+) = \emptyset, 
           \forall (\bar{S},\bar{S}^+,\bar{S}^-) \in \widetilde{\F}^{+-} : 
             \bar{S} \supset S \Rightarrow (\bar{S}^- \sm S^+) \neq \emptyset) \Big\}.
\end{align*}
It is easy to see that the naive semantics
is verifiable by the verification class induced by $\r^\epsilon$ 
since the naive extensions can be determined by the conflict-free sets.
Stable and stage semantics, on the other hand,
utilize the range of each conflict-free set in addition.
Hence they are verifiable by the verification class induced by $\r^+$.
Now consider admissible sets.
Recall that a conflict-free $S$ set is admissible if and only if
it attacks all attackers.
This is captured exactly by the condition $S^\mp = \emptyset$,
hence admissible sets are verifiable by the verification class induced by $\r^\mp$.
The same holds for preferred semantics,
since we just have to determine the maximal conflict-free sets
with $S^\mp = \emptyset$.
Semi-stable semantics, however, needs the range of each conflict-free set
in addition, see $\gamma_\semi$,
which makes it verifiable by the verification class induced by $\r^{+\mp}$.
Finally consider the criterion $\gamma_\com$.
The first two conditions for a set of arguments $S$ stand
for conflict-freeness and admissibility, respectively.
Now assume the third condition does not hold,
i.e., there exists a tuple
$(\bar{S},\bar{S}^+,\bar{S}^-) \in \widetilde{F}^{+-}$
with $\bar{S} \supset S$ and $\bar{S}^- \sm S^+ = \emptyset$.
This means that every argument attacking $\bar{S}$ is attacked
by $S$, i.e., $\bar{S}$ is defended by $S$.
Hence $S$ is not a complete extension,
showing that $\gamma_\com(\widetilde{F}^{+-}_A) = \com(F)$ for each $F \in \m{U}$.
One can verify that all criteria from the list are adequate
in the sense that they describe the extensions of the corresponding semantics.

The concepts of verifiability and being more informative behave correctly
insofar as
more informative neighborhood functions do not lead to a loss of verification capacity.

\begin{proposition}
If a semantics $\sigma$ is $x$-verifiable,
then $\sigma$ is verifiable by all verification classes induced by some $\r^y$ with $\r^y \succeq \r^x$.
\end{proposition}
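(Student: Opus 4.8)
The plan is to unfold the definition of $x$-verifiability and then ``post-compose'' the supplied criterion $\gamma_\sigma$ with the $\delta$-function that witnesses $\r^y \succeq \r^x$. First I would recall what the hypotheses give us. By assumption $\sigma$ is $x$-verifiable, so there is a criterion $\gamma_\sigma : \left(\powerset{\m{U}}\right)^n \times \powerset{\m{U}} \to \powerset{\powerset{\m{U}}}$ with $\gamma_\sigma\!\left(\widetilde{\F}^x, A(\F)\right) = \sigma(\F)$ for every AF $\F$. And since $\r^y \succeq \r^x$ (where $\r^y$ returns $m$-tuples and $\r^x$ returns $n$-tuples), by Definition~\ref{def:informative} there is a function $\delta : \left(\powerset{\m{U}}\right)^m \to \left(\powerset{\m{U}}\right)^n$ with $\delta(\r^y(S,S')) = \r^x(S,S')$ for all $S, S' \subseteq \m{U}$.

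The key step is to translate this pointwise relation between $\r^x$ and $\r^y$ into a relation between the verification classes $\widetilde{\F}^x$ and $\widetilde{\F}^y$. Extend $\delta$ to a map $\widehat{\delta}$ on sets of tuples by $\widehat{\delta}(\m{S}) = \{(S,\delta(t)) \mid (S,t) \in \m{S}\}$. Then I would check, straight from Definition~\ref{def:classes}, that $\widehat{\delta}\!\left(\widetilde{\F}^y\right) = \widetilde{\F}^x$ for every AF $\F$: indeed $\widetilde{\F}^y = \{(S, \r^y(S^\oplus_\F, S^\ominus_\F)) \mid S \in \cf(\F)\}$, so applying $\widehat{\delta}$ replaces each $\r^y(S^\oplus_\F, S^\ominus_\F)$ by $\delta(\r^y(S^\oplus_\F, S^\ominus_\F)) = \r^x(S^\oplus_\F, S^\ominus_\F)$, and the set of first components (the conflict-free sets $S$) is untouched; this yields exactly $\widetilde{\F}^x$. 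Now define the candidate criterion for $\sigma$ w.r.t.\ $\r^y$ by $\gamma_\sigma'(\m{S}, A) = \gamma_\sigma(\widehat{\delta}(\m{S}), A)$. For any AF $\F$ we then get
\[
  \gamma_\sigma'\!\left(\widetilde{\F}^y, A(\F)\right) = \gamma_\sigma\!\left(\widehat{\delta}\!\left(\widetilde{\F}^y\right), A(\F)\right) = \gamma_\sigma\!\left(\widetilde{\F}^x, A(\F)\right) = \sigma(\F),
\]
which is precisely the statement that $\sigma$ is $y$-verifiable.

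One small point of care: Definition~\ref{def:verifiable} requires a criterion to be a function of the stated type, and we must ensure $\gamma_\sigma'$ really is well-defined on all of $\left(\powerset{\m{U}}\right)^m \times \powerset{\m{U}}$ — which it is, since $\widehat{\delta}$ is total on arbitrary sets of tuples and $\gamma_\sigma$ is total by hypothesis. I do not expect any genuine obstacle here; the only thing to be slightly careful about is bookkeeping with arities ($n$ versus $m$) and the distinction between $\delta$ acting on a single tuple and $\widehat{\delta}$ acting on the whole class $\widetilde{\F}^y$. The heart of the argument is the identity $\widehat{\delta}\!\left(\widetilde{\F}^y\right) = \widetilde{\F}^x$, which reduces the proposition to a one-line composition; everything else is routine unwinding of definitions.
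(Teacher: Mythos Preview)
Your proposal is correct and follows essentially the same approach as the paper: define the new criterion by first applying $\delta$ componentwise to the tuples in $\widetilde{\F}^y$ to recover $\widetilde{\F}^x$, and then feed the result to the original $\gamma_\sigma$. If anything, you are more explicit than the paper about the bookkeeping (introducing $\widehat{\delta}$, checking arities and totality), but the key identity $\widehat{\delta}\bigl(\widetilde{\F}^y\bigr) = \widetilde{\F}^x$ and the ensuing composition are exactly what the paper does.
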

\begin{proof}
As $\sigma$ is verifiable by the verification class induced by $\r^x$ it holds that
there is some $\gamma_\sigma$ such that for all $\F \in \m{U}$,
$\gamma_\sigma(\widetilde{\F}^x,A(\F)) = \sigma(\F)$.
Now let $\r^y \succeq \r^x$, meaning that there is some $\delta$ such that
$\delta(\r^y(S,S'))~=~\r^x$.
We define
$\gamma'_\sigma(\widetilde{\F}^y,A(\F)) = \gamma_\sigma(\{(S,\delta(\Ss)) \mid (S,\Ss) \in \widetilde{\F}^y\},A(\F))$ and
observe that
$\{(S,\delta(\Ss)) \mid (S,\Ss) \in \widetilde{\F}^y\} = \widetilde{\F}^x$,
hence
$\gamma'_\sigma(\widetilde{\F}^y,A(\F)) = \sigma(\F)$ for each $F \in \m{U}$.
\end{proof}

In order to prove unverifiability  of a semantics $\sigma$ w.r.t.\ a class induced by a certain $\r^x$
it suffices to present two AFs $\F$ and $\G$ such that
$\sigma(\F)\neq\sigma(\G)$
but, $\widetilde{\F}^x = \widetilde{\G}^x$ and $A(\F)~=~A(\G)$.
Then the verification class induced by $\r^x$ does not provide enough
information to verify $\sigma$. 
In the following we will use this strategy to show exact verifiability.
Consider a semantics $\sigma$ which is verifiable by a class induced by $\r^x$.
If $\sigma$ is unverifiable by all verifiability classes induced by $\r^y$ with
$\r^y \prec \r^x$
we have that $\sigma$ is exactly verifiable by $\r^x$.
The following examples study this issue for the
semantics under consideration.

\begin{example}
\label{ex:exact_verify}
The complete semantics is ${+-}$-verifiable
as seen before.
The following AFs show that it is even exactly verifiable by that class.

\begin{center}
\begin{tikzpicture}
	\node (a1) at (-0.6,0) [circle, thick, draw, label = left:$\F_1:$]{$a$};
    \node (b1) at (0.8,0) [circle, thick, draw]{$b$};

    \node (a1') at (4.4,0) [circle, thick, draw, label = left:$\F_1':$] {$a$};
    \node (b1') at (5.8,0) [circle, thick, draw] {$b$};

\draw[->,thick] (b1) to [thick,loop,distance=0.5cm,out=-40,in=40] (b1);
\draw[->,thick] (b1') to [thick,loop,distance=0.5cm,out=-40,in=40] (b1');

\draw[->,thick] (b1) to [thick, bend right] (a1);


	\node (a2) at (-0.6,-1.4) [circle, thick, draw, label = left:$\F_2:$]{$a$};
    \node (b2) at (0.8,-1.4) [circle, thick, draw]{$b$};
    \node (c2) at (2,-1.4) [circle, thick, draw]{$c$};

    \node (a2') at (4.4,-1.4) [circle, thick, draw, label = left:$\F_2':$] {$a$};
    \node (b2') at (5.8,-1.4) [circle, thick, draw] {$b$};
    \node (c2') at (7,-1.4) [circle, thick, draw] {$c$};

\draw[->,thick] (b2) to [thick,loop,distance=0.5cm] (b2);
\draw[->,thick] (b2') to [thick,loop,distance=0.5cm] (b2');

\draw[->,thick] (b2) to [thick, bend right] (c2);
\draw[->,thick] (c2) to [thick, bend right] (b2);
\draw[->,thick] (a2') to [thick, bend right] (b2');
\draw[->,thick] (c2') to [thick, bend right] (b2');
\draw[->,thick] (b2') to [thick, bend right] (c2');


	\node (a3) at (-0.6,-2.8) [circle, thick, draw, label = left:$\F_3:$]{$a$};
    \node (b3) at (0.8,-2.8) [circle, thick, draw]{$b$};

    \node (a3') at (4.4,-2.8) [circle, thick, draw, label = left:$\F_3':$] {$a$};
    \node (b3') at (5.8,-2.8) [circle, thick, draw] {$b$};

\draw[->,thick] (b3) to [thick,loop,distance=0.5cm,out=-40,in=40] (b3);
\draw[->,thick] (b3') to [thick,loop,distance=0.5cm,out=-40,in=40] (b3');

\draw[->,thick] (a3) to [thick, bend right] (b3);
\draw[->,thick] (b3) to [thick, bend right] (a3);


	\node (a4) at (-0.6,-4.2) [circle, thick, draw, label = left:$\F_4:$]{$a$};
    \node (b4) at (0.8,-4.2) [circle, thick, draw]{$b$};

    \node (a4') at (4.4,-4.2) [circle, thick, draw, label = left:$\F_4':$] {$a$};
    \node (b4') at (5.8,-4.2) [circle, thick, draw] {$b$};

\draw[->,thick] (b4) to [thick,loop,distance=0.5cm,out=-40,in=40] (b4);
\draw[->,thick] (b4') to [thick,loop,distance=0.5cm,out=-40,in=40] (b4');

\draw[->,thick] (a4) to [thick, bend right] (b4);
\draw[->,thick] (b4) to [thick, bend right] (a4);
\draw[->,thick] (b4') to [thick, bend right] (a4');


    \node (a5) at (-0.6,-5.6) [circle, thick, draw, label = left:$\F_5:$]{$a$};
    \node (b5) at (0.8,-5.6) [circle, thick, draw]{$b$};

    \node (a5') at (4.4,-5.6) [circle, thick, draw, label = left:$\F_5':$] {$a$};
    \node (b5') at (5.8,-5.6) [circle, thick, draw] {$b$};

\draw[->,thick] (b5) to [thick,loop,distance=0.5cm,out=-40,in=40] (b5);
\draw[->,thick] (b5') to [thick,loop,distance=0.5cm,out=-40,in=40] (b5');

\draw[->,thick] (a5) to [thick, bend right] (b5);
\draw[->,thick] (b5) to [thick, bend right] (a5);
\draw[->,thick] (a5') to [thick, bend right] (b5');

	\node (a6) at (-0.6,-7) [circle, thick, draw, label = left:$\F_6:$]{$a$};
    \node (b6) at (0.8,-7) [circle, thick, draw]{$b$};

    \node (a6') at (4.4,-7) [circle, thick, draw, label = left:$\F_6':$] {$a$};
    \node (b6') at (5.8,-7) [circle, thick, draw] {$b$};

\draw[->,thick] (b6) to [thick,loop,distance=0.5cm,out=-40,in=40] (b6);
\draw[->,thick] (b6') to [thick,loop,distance=0.5cm,out=-40,in=40] (b6');

\draw[->,thick] (b6') to [thick, bend right] (a6');
\draw[->,thick] (a6) to [thick, bend right] (b6);

\end{tikzpicture}
\end{center}

First consider the AFs $\F_1$ and $\F_1'$,
and observe that
$\widetilde{\F_1}^{+\pm} = \{(\emptyset,\emptyset,\emptyset),(\{a\},\emptyset,\emptyset)\} = \widetilde{\F_1'}^{+\pm}$.
On the other hand $\F_1$ and $\F_1'$
differ in their complete extensions
since $\com(\F_1) = \{\emptyset\}$ but
$\com(\F_1') = \{\{a\}\}$.
Therefore complete semantics is unverifiable
by the verification class induced by $\r^{+\pm}$.
Likewise, this can be shown for the classes induced by
$\r^{-\mp}$, $\r^{\pm\mp}$, $\r^{-\pm}$, $\r^{+\mp}$,
and $\r^{\cap\cup}$, respectively:

\begin{itemize} 
\item 
$\widetilde{\F_2}^{-\mp} =
= \widetilde{\F_2'}^{-\mp}$,
but
$\com(\F_2) = \{\{a\},\{a,c\}\} \neq \{\{a,c\}\} = \com(\F_2')$.
\item
$\widetilde{\F_3}^{\pm\mp} = 
\widetilde{\F_3'}^{\pm\mp}$,
but
$\com(\F_3) = \{\emptyset,\{a\}\} \neq \{\{a\}\} = \com(\F_3')$.
\item
$\widetilde{\F_4}^{-\pm} = 
\widetilde{\F_4'}^{-\pm}$,
but
$\com(\F_4) = \{\emptyset,\{a\}\} \neq \{\emptyset\} = \com(\F_4')$.
\item
$\widetilde{\F_5}^{+\mp} = 
\widetilde{\F_5'}^{+\mp}$,
but
$\com(\F_5) = \{\emptyset,\{a\}\} \neq \{\{a\}\} = \com(\F_5')$.
\item
$\widetilde{\F_6}^{\cap\cup} = 
\widetilde{\F_6'}^{\cap\cup}$,
but
$\com(\F_6) = \{\{a\}\} \neq \{\emptyset\} = \com(\F_6')$.
\end{itemize}

Hence 
complete semantics
is exactly verifiable by the verification class induced by $\r^{+-}$.
\end{example}

\begin{example}
\label{ex:semi_eager_unverifiable}
Consider the
semi-stable and eager semantics
and recall that they are ${+\mp}$-verifiable.
To show exact verifiability it suffices to show
unverifiability by the classes induced by $\r^+$, $\r^\cup$, and $\r^\mp$
(cf.\ Figure~\ref{fig:verification_classes});
$F_1$ and $F_6$ are taken from Example~\ref{ex:exact_verify} above.

\begin{itemize} 
\item
$\widetilde{\F_1}^{+} = 
\widetilde{\F_1'}^{+}$,
but
$\semi(\F_1) = \eag(\F_1) = \{\emptyset\} \neq \{\{a\}\} = \semi(\F_1') = \eag(\F_1')$.
\item
$\widetilde{\F_6}^{\cup} = 
\widetilde{\F_6'}^{\cup}$,
but
$\semi(\F_6) = \eag(\F_6) = \{\{a\}\} \neq \{\emptyset\} = \semi(\F_6') = \eag(\F_6')$.
\item
$\widetilde{\F_7}^{\mp} = 
\widetilde{\F_7'}^{\mp}$,
but
$\semi(\F_7) = \{\{b\}\} \neq \{\{a\},\{b\}\} = \semi(\F_7')$ and
$\eag(\F_7) = \{\{b\}\} \neq \{\emptyset\} = \eag(\F_7')$.
\end{itemize}
\begin{center}
\begin{tikzpicture}
	\node (a1) at (-0.6,0) [circle, thick, draw, label = left:$\F_7:$]{$a$};
    \node (b1) at (0.8,0) [circle, thick, draw]{$b$};
    \node (c1) at (2,0) [circle, thick, draw]{$c$};

    \node (a1') at (4.4,0) [circle, thick, draw, label = left:$\F_7':$] {$a$};
    \node (b1') at (5.8,0) [circle, thick, draw] {$b$};
    \node (c1') at (7.2,0) [circle, thick, draw] {$c$};

\draw[->,thick] (c1) to [thick,loop,distance=0.5cm] (c1);
\draw[->,thick] (c1') to [thick,loop,distance=0.5cm] (c1');

\draw[->,thick] (b1) to [thick, bend right] (a1);
\draw[->,thick] (a1) to [thick, bend right] (b1);
\draw[->,thick] (b1') to [thick, bend right] (a1');
\draw[->,thick] (a1') to [thick, bend right] (b1');
\draw[->,thick] (b1) to [thick, bend right] (c1);
\end{tikzpicture}
\end{center}
Hence, both the semi-stable and eager semantics
are exactly verifiable by the verification class induced by $\r^{+\mp}$.
\end{example}

\begin{example}
\label{ex:grd_sad_unverifiable}
Now consider the grounded and strong admissible semantics
and recall that they are ${-\pm}$-verifiable.
In order to show exact verifiability we have to show
unverifiability by the classes induced by $\r^\pm$, $\r^-$, and $\r^\cup$
(cf.\ Figure~\ref{fig:verification_classes}); again,
the AFs from Example~\ref{ex:exact_verify} can be reused.

\begin{itemize} 
\item
$\widetilde{\F_1}^{\pm} = \widetilde{\F_1'}^{\pm}$,
but
$\grd(\F_1) = \{\emptyset\} \neq \{\{a\}\} = \grd(\F_1')$ and
$\sad(\F_1) = \{\emptyset\} \neq \{\emptyset,\{a\}\} = \sad(\F_1')$.
\item
$\widetilde{\F_2}^{-} = \widetilde{\F_2'}^{-}$,
but
$\grd(\F_2) = \{\{a\}\} \neq \{\{a,c\}\} = \grd(\F_2')$ and
$\sad(\F_2) = \{\emptyset,\{a\}\} \neq \{\emptyset,\{a\},\{a,c\}\} = \sad(\F_2')$.
\item
$\widetilde{\F_6}^{\cup} = \widetilde{\F_6'}^{\cup}$,
but
$\grd(\F_6) = \{\{a\}\} \neq \{\emptyset\} = \grd(\F_6')$ and
$\sad(\F_6) = \{\emptyset,\{a\}\} \neq \{\emptyset\} = \sad(\F_6')$.
\end{itemize}

Hence, both the grounded and strong admissible semantics
are exactly verifiable by the verification class induced by $\r^{+\mp}$.
\end{example}

\begin{example}
\label{ex:last_unverifiable}
Finally consider stable, stage, admissible, preferred and ideal semantics.
They are either $+$-verifiable ($\stb$ and $\stg$) or $\mp$-verifiable ($\adm$, $\prf$, and $\id$).
We have to show unverifiability w.r.t.\ the verification class induced by $\r^\epsilon$.
Consider, for instance, the AFs $\F_4$ and $\F_4'$ from Example~\ref{ex:exact_verify}.
%
%
%
We have
$\widetilde{\F_4}^{\epsilon} = \widetilde{\F_4'}^{\epsilon}$,
but $\adm(\F_4) = \{\emptyset,\{a\}\} \neq  \{\emptyset\} = \adm(\F_4')$,
$\stb(\F_4) = \{\{a\}\} \neq  \emptyset = \stb(\F_4')$, and
$\sigma(\F_4) = \{\{a\}\} \neq \{\emptyset\} = \sigma(\F_4')$ for
$\sigma \in \{\stg,\prf,\id\}$,
showing exactness of the respective verification classes.
\end{example}

The insights obtained through Examples~\ref{ex:exact_verify} ,~\ref{ex:semi_eager_unverifiable},~\ref{ex:grd_sad_unverifiable} and~\ref{ex:last_unverifiable}
show that the
verification classes obtained
from the criteria given above
are indeed exact. Figure~\ref{fig:classes_of_semantics} shows the resulting relation between the semantics
under consideration
with respect to their exact verification classes.

\begin{figure}[t]
\centering
\begin{tikzpicture}
\path
node[snode] (na) {$\epsilon$: $\nav$}
++(-3,1) node[snode] (stb) {$+$: $\stb$, $\stg$} 
++(3,0) node[snode] (pref) {$\mp$: $\adm$, $\prf$, $\id$}
++(-1.5,1) node[snode] (semi) {$+\mp$: $\semi$, $\eag$}
++(4.5,0) node[snode] (grd) {$-\pm$: $\grd$, $\sad$}   
++(-3,1) node[snode] (co) {$+-$: $\com$};
				\path [->, thick,>=stealth]
			(na) edge (stb)
            (na) edge (pref)
            (na) edge (grd)
            (pref) edge (semi)
            (stb) edge (semi)
            (semi) edge (co)
            (grd) edge (co)
			;	
\end{tikzpicture}
\caption{Semantics and their Exact Verification Classes}
\label{fig:classes_of_semantics}
\end{figure}
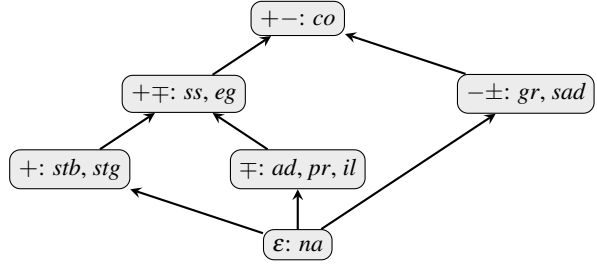

\iflong
We turn now to the main theorem stating that any \textit{rational} semantics is exactly verifiable by one of the $15$ different verification classes. 
\fi

\begin{theorem}
Every semantics which is 
rational
is exactly verifiable by a verification class induced by
one of the neighborhood functions
presented in Figure~\ref{fig:verification_classes}.
\end{theorem}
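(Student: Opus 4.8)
The plan is to show that every rational semantics is verifiable by \emph{some} class from Figure~\ref{fig:verification_classes}, since exactness then follows automatically: once a semantics is verifiable by some induced class, there is a $\preceq$-minimal such class by finiteness of the collection depicted in Figure~\ref{fig:verification_classes}, and verifiability by that minimal one is precisely exact verifiability. Thus the crux is to establish verifiability by the \emph{most informative} class, namely the one induced by $\r^{+-}$, and then observe that exactness picks out the right representative further down the diagram. First I would recall from Lemma~\ref{lemma:verification_classes} that $\r^{+-}$ represents all neighborhood functions returning $n$-tuples with $n\geq 2$; in particular $\widetilde{\F}^{+-} = \{(S, S^\oplus_\F, S^\ominus_\F) \mid S\in\cf(\F)\}$ records, for every conflict-free set, both its range and its anti-range with respect to $\F$.

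The key step is the following reconstruction claim: if $\F$ and $\G$ are two AFs with $A(\F)=A(\G)$ and $\widetilde{\F}^{+-}=\widetilde{\G}^{+-}$, then $\F^l = \G^l$, i.e.\ $\F$ and $\G$ have the same \emph{loop-reduced} attack relation (recall $\F^l$ from Definition~\ref{def:semantics_conditions}, which deletes attacks $(a,b)$ with $a\neq b$ and $(a,a),(b,b)\in R(\F)$). To prove this I would argue as follows. Self-loops are recoverable: $(a,a)\in R(\F)$ iff $\{a\}\notin\cf(\F)$ iff the pair with first component $\{a\}$ is absent from $\widetilde{\F}^{+-}$; so $\F$ and $\G$ agree on self-loops. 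For an attack $(a,b)$ with $a\neq b$ where not both $a$ and $b$ carry self-loops, consider the conflict-free set built from $\{a,b\}$ — more carefully, since $\{a,b\}$ itself need not be conflict-free, I would instead look at the singleton or two-element conflict-free sets available: if $\{a\}\in\cf(\F)$, then $b\in\{a\}^\oplus_\F$ iff $(a,b)\in R(\F)$, and $b\in\{a\}^\ominus_\F$ iff $(b,a)\in R(\F)$, so the full neighborhood of each non-self-looping argument, restricted to attacks it participates in, is read off from the tuple with first component its singleton; the remaining case is attacks between two self-looping arguments, which is exactly what $\F^l$ is allowed to forget. Assembling these observations gives $\F^l=\G^l$.

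With this reconstruction claim in hand the theorem is immediate: given a rational semantics $\sigma$, define the criterion $\gamma_\sigma(\widetilde{\F}^{+-}, A(\F)) := \sigma(\F^l)$, where $\F^l$ is rebuilt from $\widetilde{\F}^{+-}$ and $A(\F)$ via the recovery procedure above (the construction of $\F^l$ from this data being expressible by the basic operations permitted in Definition~\ref{def:classes}); rationality gives $\sigma(\F^l)=\sigma(\F)$, so $\gamma_\sigma(\widetilde{\F}^{+-},A(\F))=\sigma(\F)$ for every $\F\in\m{U}$, and $\sigma$ is $(+-)$-verifiable. Hence $\sigma$ is verifiable by at least one class from Figure~\ref{fig:verification_classes}, and by finiteness it is exactly verifiable by a $\preceq$-minimal such class, which is again one of the classes in the figure. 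The main obstacle I anticipate is the careful bookkeeping in the reconstruction claim — in particular handling the degenerate cases where $\{a\}$ or $\{b\}$ is not conflict-free (i.e.\ carries a self-loop), checking that precisely the attacks between two self-looping arguments are the ones genuinely lost, and confirming that the rebuilding map stays inside the set-operation language so that it legitimately counts as a criterion in the sense of Definition~\ref{def:verifiable}. The rest is routine, and the worked examples (Examples~\ref{ex:exact_verify}--\ref{ex:last_unverifiable}) already illustrate that exactness then lands on the representatives shown in Figure~\ref{fig:classes_of_semantics} for the standard semantics.
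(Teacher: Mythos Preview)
Your proposal is correct and takes essentially the same approach as the paper: both arguments hinge on the reconstruction claim that $\widetilde{\F}^{+-}$ together with $A(\F)$ determines $\F^l$, with the paper phrasing this contrapositively (not $(+-)$-verifiable implies not rational) while you construct the criterion $\gamma_\sigma$ directly. One minor clarification: your worry that the rebuilding map must ``stay inside the set-operation language'' is unfounded --- that restriction (Definition~\ref{def:classes}) applies only to neighborhood functions $\r^x$, not to criteria $\gamma_\sigma$ (Definition~\ref{def:verifiable}), which are arbitrary functions, so this anticipated obstacle does not exist.
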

\begin{proof}
First of all note that by Lemma~\ref{lemma:verification_classes},
$\r^{\epsilon}$ is the least informative neighborhood function and for every other
neighborhood function $\r^x$ 
it holds that $\r^\epsilon \preceq \r^{x}$.
Therefore, if a semantics is verifiable by the verification class induced by any $\r^x$
then it is exactly verifiable by a verification class induced by some $\r^y$
with $\r^\epsilon \preceq \r^y \preceq \r^x$.
Moreover, if a semantics is exactly verifiable by a class, then it is by definition also verifiable by this class.
Hence it remains to show that every semantics which is rational is verifiable by a verification class
presented in Figure~\ref{fig:verification_classes}.

We show the contrapositive, i.e.,
if a semantics is not verifiable by a verification class
induced by one of the neighborhood functions
presented in Figure~\ref{fig:verification_classes}
then it is not rational.
Assume a semantics $\sigma$ is not verifiable by one of the verification classes.
This means $\sigma$ is not verifiable by the verification class induced by $\r^{+-}$.
Hence there exist two AFs $\F$ and $\G$ such that
$\widetilde{\F}^{+-} = \widetilde{\G}^{+-}$ and
$A(\F) = A(\G)$,
but $\sigma(\F) \neq \sigma(\G)$.
For every argument $a$ which is not self-attacking,
a tuple $(\{a\},\{a\}^+,\{a\}^-)$ is contained in $\widetilde{\F}^{+-}$ (and in $\widetilde{\G}^{+-}$).
Hence $\F$ and $\G$ have the same not-self-attacking arguments and,
moreover these arguments have the same ingoing and outgoing attacks in $\F$ and $\G$.
This, together with $A(\F)=A(\G)$ implies that $\F^l = \G^l$ (see Definition~\ref{def:semantics_conditions}) holds.
But since $\sigma(\F) \neq \sigma(\G)$
we get that $\sigma$ is not rational,
which was to show.
\end{proof}



Note that the criterion giving evidence for verifiability
of a semantics by a certain class
has access to the set of arguments of a given AF.
In fact, only the criterion for stable semantics makes use of that --
it can be omitted for the other semantics.

\section{Intermediate Semantics} \label{sec:intermediate}
A type of semantics which has aroused quite some interest in the literature (see e.g.\ \cite{BarG07a} and \cite{NieOZ11})
are intermediate semantics,
i.e.\ semantics which 
yield results
lying
between two existing semantics. The introduction of $\sigma$-$\tau$-intermediate semantics can be motivated by deleting 
\textit{undesired} (or add \textit{desired}) $\tau$-extensions\footnote{Recently, the so-called \textit{extension removal problem} was studied \cite{}. Here, instead of sticking to an intermediate semantics the authors studied whether it is possible - and if so how - to modify a given
AF in such a way that certain undesired extensions are no longer generated.} while 
guaranteeing all reasonable positions w.r.t.\ $\sigma$. 
In other words, \interm{\sigma}{\tau} semantics can be seen 
as sceptical or credulous acceptance shifts within the range of $\sigma$ and $\tau$. 

A natural question is whether we can make any statements
about compatible kernels of intermediate semantics.
In particular, if semantics $\sigma$ and $\tau$ are characterizable by some kernel $\k$,
is then every \interm{\sigma}{\tau} semantics characterizable by $\k$.
The following example answers this question negatively.

\begin{example}
\label{ex:stagle}
Recall from Theorem~\ref{the:strong} that both
stable and stage semantics are compatible with $k(\stb)$,
i.e.\ $\F\equiv^{\stb}_E\G \ToT \F\equiv^{\stg}_E~\G \ToT  \F^{k(\stb)} = \G^{k(\stb)}$.
Now we define the following \interm{\stb}{\stg} semantics, say \textit{stagle} semantics:
Given an AF $\F = (A,R)$, $S\in\sta(\F)$ iff
$S\in \cf(\F)$, $S^\oplus_\F \cup S^\ominus_\F = A$ and
for every $T\in \cf(\F)$ we have $S^\oplus_\F \not\subset T^\oplus_\F$.
Obviously, it holds that $\stb\subseteq\sta\subseteq\stg$ and 
$\stb\neq\sta$ as well as $\sta\neq\stg$,
as witnessed by the AF $\F$: 

\begin{center}
\begin{tikzpicture}
    \node (A) at (1.0,0.0) [circle, thick, draw, label = left:$\F:$]{$a$};
    \node (B) at (2.5,0.0) [circle, thick, draw]{$b$};
    \node (C) at (4.0,0.0) [circle, thick, draw]{$c$};
		
\draw[->,thick] (A) to [thick,loop,distance=0.5cm] (A);
\draw[->,thick] (B) to [thick,bend right] (C);
\draw[->,thick] (C) to [thick,bend right] (B);
\draw[->,thick] (A) to [thick,bend right] (B);

\end{tikzpicture}
\end{center}

It is easy to verify that $\stb(\F) = \emptyset \subset \sta(\F) = \{\{b\}\} \subset \stg(\F) = \{\{b\},\{c\}\}$.
We proceed by showing that stagle semantics is not characterizable by $\k(\stb)$.
To this end consider $\F^{\k(\stb)}$ which is depicted below.
\begin{center}
\begin{tikzpicture}
    \node (A) at (1.0,0.0) [circle, thick, draw, label = left:$\F^{k(\stb)}:$]{$a$};
    \node (B) at (2.5,0.0) [circle, thick, draw]{$b$};
    \node (C) at (4.0,0.0) [circle, thick, draw]{$c$};

\draw[->,thick] (A) to [thick,loop,distance=0.5cm] (A);
\draw[->,thick] (B) to [thick,bend right] (C);
\draw[->,thick] (C) to [thick,bend right] (B);

\end{tikzpicture}
\end{center}
Now, $\sta\left(\F^{k(\stb)}\right) = \{\{b\},\{c\}\}$ witnesses $\F\not\equiv^{\sta}\F^{k(\stb)}$ and therefore, $\F\not\equiv^{\sta}_E\F^{k(\stb)}$. Since $\F^{k(\stb)} = \left(\F^{k(\stb)}\right)^{k(\stb)}$ we are done, i.e. stagle semantics is indeed not characterizable by the stable kernel.

\end{example}

It is the main result of this section that characterizability of intermediate semantics w.r.t.\ a certain kernel can be guaranteed if verifiability w.r.t.\ a certain class is presumed. The provided characterization theorems generalize former results presented in Theorem~\ref{the:strong}. Moreover, due to the abstract character of the theorems the results are applicable to semantics which may be defined in the future.

Before turning to the characterization theorems we state some implications of verifiability. 
In particular, under the assumption that $\sigma$ is verifiable by a certain class, equality of certain kernels 
implies expansion equivalence w.r.t.\ $\sigma$.

\iflong
\begin{proposition}
\label{pro:cf-range-verifiable}
For any $+$-verifiable semantics $\sigma$ we have
$\F^{k(\stb)} = \G^{k(\stb)} \To \F\equiv^\sigma_E\G$.
\end{proposition}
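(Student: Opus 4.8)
The plan is to exploit the defining property of $+$-verifiability: there is a criterion $\gamma_\sigma$ with $\gamma_\sigma(\widetilde{\F}^+, A(\F)) = \sigma(\F)$ for every AF $\F$. Hence it suffices to show that equality of stable kernels, together with set-union robustness of $k(\stb)$, forces $\widetilde{(\F\dcup\AH)}^+ = \widetilde{(\G\dcup\AH)}^+$ and $A(\F\dcup\AH) = A(\G\dcup\AH)$ for every $\AH$; then applying $\gamma_\sigma$ to both sides yields $\sigma(\F\dcup\AH) = \sigma(\G\dcup\AH)$, which is exactly $\F\equiv^\sigma_E\G$.

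\textbf{Key steps.} First I would fix an arbitrary $\AH$. By $\dcup$-robustness of the stable kernel (Fact~\ref{fact:kernel}, item~5), $\F^{k(\stb)} = \G^{k(\stb)}$ implies $(\F\dcup\AH)^{k(\stb)} = (\G\dcup\AH)^{k(\stb)}$. Since $k(\stb)$ is node-preserving, this already gives $A(\F\dcup\AH) = A((\F\dcup\AH)^{k(\stb)}) = A((\G\dcup\AH)^{k(\stb)}) = A(\G\dcup\AH)$. The second, and central, step is to show that the verification class induced by $\r^+$ is invariant under the stable kernel, i.e.\ $\widetilde{\K}^+ = \widetilde{\K^{k(\stb)}}^+$ for any AF $\K$. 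For this I would recall that $k(\stb)$ only deletes attacks $(a,b)$ with $a\neq b$ and $(a,a)\in R(\K)$; such an $a$ lies in no conflict-free set, so $\cf(\K) = \cf(\K^{k(\stb)})$, and for any conflict-free $S$ the removed attacks never emanate from $S$, hence $S^\oplus_\K = S^\oplus_{\K^{k(\stb)}}$. Therefore the tuples $(S, S^\oplus)$ are the same in both AFs, establishing $\widetilde{\K}^+ = \widetilde{\K^{k(\stb)}}^+$. Applying this to $\K = \F\dcup\AH$ and $\K = \G\dcup\AH$ and combining with step one gives $\widetilde{(\F\dcup\AH)}^+ = \widetilde{((\F\dcup\AH)^{k(\stb)})}^+ = \widetilde{((\G\dcup\AH)^{k(\stb)})}^+ = \widetilde{(\G\dcup\AH)}^+$. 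Finally I would feed these equalities into $\gamma_\sigma$ and conclude $\sigma(\F\dcup\AH) = \sigma(\G\dcup\AH)$; as $\AH$ was arbitrary, $\F\equiv^\sigma_E\G$.

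\textbf{Main obstacle.} The only real work is the invariance $\widetilde{\K}^+ = \widetilde{\K^{k(\stb)}}^+$; essentially one needs the already-noted facts that the stable kernel preserves conflict-free sets and does not alter the range of any conflict-free set. This is the $+$-analogue of Fact~\ref{fact:kerneldecisive} (which records $\Ext_{\stg}(\K) = \Ext_{\stg}(\K^{k(\stb)})$), and indeed it is a strengthening of it, so the argument should be short. A minor point to watch is that joint expandability plays no role here since we are in the finite setting and $\AH$ ranges over all finite AFs, so no fresh-argument caveats are needed.
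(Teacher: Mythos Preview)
Your proposal is correct and follows essentially the same approach as the paper: use $\dcup$-robustness of $k(\stb)$ to propagate kernel equality to $\F\dcup\AH$ and $\G\dcup\AH$, establish the invariance $\widetilde{\K}^+ = \widetilde{\K^{k(\stb)}}^+$ via the observation that self-attackers never lie in conflict-free sets (so neither conflict-free sets nor their ranges change), and then apply the criterion $\gamma_\sigma$. Your version is in fact slightly more careful than the paper's in that you explicitly track the $A(\F)$ argument of $\gamma_\sigma$ and verify $A(\F\dcup\AH)=A(\G\dcup\AH)$ via node-preservation, whereas the paper suppresses this.
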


\begin{proof}
%
It was shown that 
\mbox{$\F^{k(\stb)} = \G^{k(\stb)} \To (\F\dcup\H)^{k(\stb)} = (\G\dcup\H)^{k(\stb)}$} (cf.\ In \cite{strong} or Item 5 of Fact~\ref{fact:kernel})(i).
Consider now a $+$-verifiable semantics $\sigma$. In order to show 
$\sigma(\F) = \sigma\left(\F^{k(\stb)}\right)$ (ii)
we prove $\widetilde{\F}^+ = \widetilde{\F^{k(\stb)}}^+$ (*) first.
It is easy to see that $S\in\cf(\F)$ iff $S\in\cf\left(\F^{k(\stb)}\right)$.
Furthermore, since $\k(\stb)$ deletes an attack $(a,b)$ only if $a$ is self-defeating
we deduce that ranges does not change as long as conflict-free sets are considered.
Thus, $\sigma(\F) =_{\text{ \tiny (Def.)}} \gamma_{\sigma}(\widetilde{\F}^+) =_{\text{ \tiny (*)}} \gamma_{\sigma}\left(\widetilde{\F^{k(\stb)}}^+\right) =_{\text{ \tiny (Def.)}} \sigma\left(\F^{k(\stb)}\right)$. 
Now assume that $\F^{k(\stb)} = \G^{k(\stb)}$ and
let $S\in\sigma(\F\dcup~\H)$ for some AF $\H$.
We have to show that $S\in\sigma(\G\dcup\H)$.
Applying (ii) we obtain $S\in\sigma\left((\F\dcup\H)^{k(\stb)}\right)$. Furthermore, using (i) we deduce
$S\in\sigma\left((\G\dcup\H)^{k(\stb)}\right)$.
Finally, $S\in\sigma(\G\dcup\H)$ by applying (ii),
which concludes the proof.
\end{proof}

The following results can be shown in a similar manner.

\begin{proposition}
\label{pro:cf-range-verifiable}
\label{pro:cf-range-?-verifiable}
\label{pro:cf-range-inrange-verifiable}
\label{pro:cf-inrange-pm-verifiable}
\label{pro:cf-verifiable}
For a semantics $\sigma$ it holds that
\begin{itemize}
\item if $\sigma$ is $+$-verifiable then $\F^{k(\stb)} = \G^{k(\stb)} \To \F\equiv^\sigma_E\G$.
\item if $\sigma$ is $+\mp$-verifiable then $\F^{k(\adm)} = \G^{k(\adm)} \To \F\equiv^\sigma_E\G$.
\item if $\sigma$ is $+-$-verifiable then $\F^{k(\com)} = \G^{k(\com)} \To \F\equiv^\sigma_E\G$.
\item if $\sigma$ is $-\pm$-verifiable then $\F^{k(\grd)} = \G^{k(\grd)} \To \F\equiv^\sigma_E\G$.
\item if $\sigma$ is $\epsilon$-verifiable then $\F^{k(\nav)} = \G^{k(\nav)} \To \F\equiv^\sigma_E\G$.
\end{itemize}
\end{proposition}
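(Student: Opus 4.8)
The plan is to prove the five implications uniformly, one per verification class, by imitating the argument already written out in detail for the $+$-verifiable case (Proposition~\ref{pro:cf-range-verifiable} in the excerpt). Each bullet asserts, for a semantics $\sigma$ verifiable by a class induced by $\r^x$, that equality of the associated kernel $\k$ implies expansion equivalence w.r.t.\ $\sigma$. In every case there are three ingredients: (i) $\dcup$-robustness of $\k$, i.e.\ $\F^\k = \G^\k \To (\F\dcup\H)^\k = (\G\dcup\H)^\k$, which is Item~5 of Fact~\ref{fact:kernel} (for $\k\in\{\k(\stb),\k(\adm),\k(\grd),\k(\com)\}$) and the analogous property for $\k(\nav)$; (ii) the claim that applying $\k$ does not change the relevant verification class, i.e.\ $\widetilde{\F}^x = \widetilde{\F^\k}^x$; and (iii) the trivial combination: given $S\in\sigma(\F\dcup\H)$, use (ii) to move to $\sigma((\F\dcup\H)^\k)$, then (i) to move to $\sigma((\G\dcup\H)^\k)$, then (ii) again to land in $\sigma(\G\dcup\H)$; symmetry finishes the argument.

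The only genuine work is establishing (ii) for each kernel/class pair, and this is where I would spend the effort. For $\k(\stb)$ and $\r^+$ this is already done in the excerpt: $\k(\stb)$ deletes an attack $(a,b)$ only when $a$ is self-defeating, so conflict-free sets are unchanged and their ranges are unchanged (a deleted attacker $a$ never lies in a conflict-free set). For $\k(\nav)$ and $\r^\epsilon$ one needs that $\k(\nav)$, which \emph{adds} attacks of the form $(a,b)$ with $\{(a,a),(b,a),(b,b)\}\cap R\neq\emptyset$, does not alter $\cf(\F)$: indeed any added attack involves an argument ($a$ or $b$) that is self-defeating or already in conflict with the other, so no previously conflict-free set becomes conflicting, and conversely $R\subseteq R^{\k(\nav)}$ gives the other inclusion; hence $\widetilde{\F}^\epsilon = \{(S)\mid S\in\cf(\F)\}$ is preserved. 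For $\k(\adm)$ and $\r^{+\mp}$, and for $\k(\grd)$ and $\r^{-\pm}$, and for $\k(\com)$ and $\r^{+-}$, one argues similarly: in each case the deleted attack $(a,b)$ has $a$ self-defeating (so $a$ is absent from every conflict-free set, leaving range $S^+$ untouched) \emph{and} $b$ self-defeating or counter-attacking $a$ in a way that is compensated by a retained reverse attack, so that the anti-range $S^-$ (equivalently $S^\mp$) is also preserved on conflict-free sets; the remaining retained attacks keep $S\in\cf(\F)\iff S\in\cf(\F^\k)$. I would verify these case distinctions directly from Definition~\ref{def:kernel}, checking for each deletion condition that, for a conflict-free $S$, neither $S^+$ nor $S^-$ (restricted to $A(\F)$) can change.

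The expected main obstacle is the anti-range preservation for $\k(\com)$ and $\r^{+-}$, and for $\k(\grd)$ and $\r^{-\pm}$: unlike the stable kernel, these kernels may delete an attack $(a,b)$ where $b$ is self-defeating but $a$ is \emph{not}, so one must check that such an $a$ still cannot belong to the anti-range $S^-$ of a conflict-free set $S$ — which holds precisely because the deletion condition requires $(b,b)\in R$ (for $\k(\com)$ also $(a,a)\in R$), forcing the ``target'' $b$ out of any conflict-free set, hence $(a,b)$ contributes to $S^-$ only via $b\in S$, which is impossible. Once this bookkeeping is done, (ii) follows, and the three-step sandwich argument of the excerpt's proof goes through verbatim with $\k(\stb)$ replaced by the appropriate kernel. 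No new ideas beyond Fact~\ref{fact:kernel} and Definitions~\ref{def:kernel},~\ref{def:classes},~\ref{def:verifiable} are needed; the proof is essentially a five-fold replay of the displayed $+$-verifiable case.
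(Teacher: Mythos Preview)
Your approach is the paper's: establish the three ingredients ($\dcup$-robustness of the kernel, the kernel preserves the relevant verification class, then the sandwich argument) and replay the $+$-verifiable proof for each pair. The structure is right, but there is an error in your case analysis for $\k(\grd)$. You write that ``in each case the deleted attack $(a,b)$ has $a$ self-defeating'', yet for the grounded kernel it is $b$ that must be self-defeating: $\k(\grd)$ deletes $(a,b)$ when $(b,b)\in R$ and $\{(a,a),(b,a)\}\cap R\neq\emptyset$, so when $(b,a)\in R$ but $(a,a)\notin R$ the argument $a$ is \emph{not} self-defeating and can lie in a conflict-free set $S$. Then deleting $(a,b)$ \emph{does} remove $b$ from $S^+$, so your range-preservation claim fails at exactly this point. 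Your ``obstacle'' paragraph notices that $a$ need not be self-defeating but then only checks the $S^-$ side and never addresses $S^\pm$. (That paragraph is also off about $\k(\com)$: there both $a$ and $b$ must self-loop, so the case ``$a$ not self-defeating'' never arises.)

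The repair is precisely the compensation idea you already sketch for $\k(\adm)$: for $\r^{-\pm}$ one does not need $S^\oplus$ itself preserved, only $S^\ominus$ and $S^\oplus\setminus S^\ominus$. You already have $S^\ominus$ preserved (since $b\notin S$, removing $(a,b)$ cannot change who attacks $S$). If $a\in S$ and $(a,b)$ is deleted with $(b,a)\in R$, then $b$ attacks $a\in S$, so $b\in S^-\subseteq S^\ominus$; hence $b\notin S^\oplus\setminus S^\ominus$ in either framework and this component is unchanged. One further bookkeeping point: $\dcup$-robustness for $\k(\nav)$ is not covered by Fact~\ref{fact:kernel} (classical kernels only), and Fact~\ref{fact:kernel2} lists only $\setminus$-robustness, so you need to verify it separately before the sandwich argument goes through in the $\epsilon$-verifiable case.
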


We proceed with general characterization theorems. The first one states that \interm{\stb}{\stg} semantics are characterizable via the stable kernel if $+$-verifiability is given. Consequently, stagle semantics as defined in Example~\ref{ex:stagle} can not be $+$-verifiable. 

\begin{theorem}
\label{the:stablefirst}
Given a semantics $\sigma$ which is $+$-verifiable and \interm{\stb}{\stg}, then \linebreak
$\F^{k(\stb)} = \G^{k(\stb)} \ToT \F\equiv^\sigma_E\G$.
\end{theorem}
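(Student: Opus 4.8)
The plan is to prove the two implications separately and to exploit the earlier building blocks as much as possible. For the direction ``$\Leftarrow$'', i.e.\ $\F^{k(\stb)} = \G^{k(\stb)} \To \F\equiv^\sigma_E\G$, I would simply invoke Proposition~\ref{pro:cf-range-verifiable}: since $\sigma$ is assumed to be $+$-verifiable, the equality of stable kernels already yields expansion equivalence w.r.t.\ $\sigma$. This direction therefore needs no new argument and does not rely on the intermediateness of $\sigma$ at all.

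\textbf{The hard direction.}
The substantive part is ``$\Rightarrow$'': from $\F\equiv^\sigma_E\G$ we must recover $\F^{k(\stb)} = \G^{k(\stb)}$. Here I would use the sandwich $\stb\subseteq\sigma\subseteq\stg$ together with Theorem~\ref{the:strong}, which tells us that both $\equiv^\stb_E$ and $\equiv^\stg_E$ are characterized by exactly the stable kernel. The key observation to establish is that $\F\equiv^\sigma_E\G$ implies $\F\equiv^\stb_E\G$ (equivalently $\F\equiv^\stg_E\G$). To see why expansion equivalence w.r.t.\ an intermediate semantics forces expansion equivalence w.r.t.\ the two bounding semantics, I would argue as follows. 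Suppose for contradiction that $\F^{k(\stb)} \neq \G^{k(\stb)}$. Then by Theorem~\ref{the:strong} we have $\F\not\equiv^\stb_E\G$, so there is an AF $\H$ with $\stb(\F\dcup\H)\neq\stb(\G\dcup\H)$; likewise $\F\not\equiv^\stg_E\G$, witnessed by some (possibly different) $\H'$. The task is to manufacture from such a witness an AF $\H''$ for which $\sigma(\F\dcup\H'')\neq\sigma(\G\dcup\H'')$. The natural approach is to look at a minimal witnessing attack in the symmetric difference of the stable kernels and build a small gadget $\H''$ around the two arguments involved (as in the proofs of Theorems~\ref{the:strong} and~\ref{the:strong2}, or in the proof given for the $\sad$ case), chosen so that on $\F\dcup\H''$ and $\G\dcup\H''$ \emph{every} conflict-free set of maximal range (stage extension) is in fact a stable extension. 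On such expanded frameworks $\stb$ and $\stg$ coincide, and since $\sigma$ is squeezed between them, $\sigma$ coincides with both there as well; hence the stable-level discrepancy propagates to a $\sigma$-level discrepancy, contradicting $\F\equiv^\sigma_E\G$. Concretely, adjoining for each relevant argument a fresh self-looping attacker that also attacks everything else forces full range of all maximal conflict-free sets, which is the standard trick to collapse $\stg$ onto $\stb$.

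\textbf{Assembling the argument.}
Putting the pieces together: $\F\equiv^\sigma_E\G \Rightarrow \F\equiv^\stb_E\G \Rightarrow \F^{k(\stb)}=\G^{k(\stb)}$ (the last step by Theorem~\ref{the:strong}), and conversely $\F^{k(\stb)}=\G^{k(\stb)} \Rightarrow \F\equiv^\sigma_E\G$ by Proposition~\ref{pro:cf-range-verifiable}. Note that $+$-verifiability is used only in the ``$\Leftarrow$'' direction; the ``$\Rightarrow$'' direction uses only that $\sigma$ is $\stb$-$\stg$-intermediate. I expect the main obstacle to be making the gadget construction in the ``$\Rightarrow$'' direction fully rigorous --- in particular verifying that the chosen expansion genuinely collapses stage onto stable on \emph{both} $\F\dcup\H''$ and $\G\dcup\H''$ while still preserving the stable-extension discrepancy --- but this is a routine adaptation of the self-loop padding technique already used repeatedly in the characterization proofs of Section~\ref{sec:furthequi}, so it should go through without conceptual difficulty.
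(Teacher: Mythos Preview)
Your overall plan is right and matches the paper's: the kernel-to-equivalence direction is exactly Proposition~\ref{pro:cf-range-verifiable}, and for the converse you correctly identify that the crux is to find an expansion on which $\stb$ and $\stg$ coincide, so that intermediateness squeezes $\sigma$ to agree with both. (A minor point: you have the $\Leftarrow$/$\Rightarrow$ labels reversed relative to how the biconditional is displayed in the statement.)

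The concrete gadget you propose, however, does not work. Adjoining ``for each relevant argument a fresh self-looping attacker that also attacks everything else'' introduces unattacked self-loops into the framework. Such an argument can never lie in a conflict-free set and is never attacked by anything, so \emph{no} conflict-free set of the expanded AF can have full range; you would force $\stb$ to become empty on both expanded frameworks rather than nonempty. When $\stb$ is empty, $\stg$ need not be, and an intermediate $\sigma$ can sit anywhere in between --- the desired collapse fails precisely in this case.

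The paper's construction is simpler and avoids this. There is no need to go back to individual attacks in the kernel difference: take the witness $\H$ with $\stb(\F\dcup\H)\neq\stb(\G\dcup\H)$ guaranteed by Theorem~\ref{the:strong}, let $B=A(\F)\cup A(\G)\cup A(\H)$, and augment by a \emph{single} fresh argument $a$ with \emph{mutual} attacks $(a,b),(b,a)$ to every $b\in B$. Then $\{a\}$ is stable in both expanded frameworks, and every former stable extension survives (it now also attacks $a$), so the two stable-extension sets become $\stb(\F\dcup\H)\cup\{\{a\}\}$ and $\stb(\G\dcup\H)\cup\{\{a\}\}$, still unequal. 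Since both are now nonempty, $\stb=\stg$ on each expanded framework (stage extensions are range-maximal conflict-free sets, and full range is attained), whence $\sigma=\stb$ there by intermediateness, yielding $\sigma(\F\dcup\H')\neq\sigma(\G\dcup\H')$.
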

\begin{proof}
($\To$) Follows directly from Proposition~\ref{pro:cf-range-verifiable}.

%
($\oT$)
We show the contrapositive, i.e.\ $\F^{k(\stb)} \neq \G^{k(\stb)} \To \F\not\equiv^\sigma_E\G$.
Assuming $\F^{k(\stb)} \neq \G^{k(\stb)}$ implies $\F \not\equiv^\stb_E \G$,
i.e.\ there exists an AF $\H$ such that $\stb(\F \dcup \H) \neq \stb(\G \dcup \H)$ and therefore, $\stb(\F \dcup \H) \neq \stb(\G \dcup \H)$.
Let $B = A(\F) \cup A(\G) \cup A(\H)$ and $\H' = (B \cup \{a\},\{(a,b),(b,a) \mid b \in B\})$.
It is easy to see that $\stb(\F \dcup \H') = \stb(\F \dcup \H) \cup \{\{a\}\}$ and
$\stb(\G \dcup \H') = \stb(\G \dcup \H) \cup \{\{a\}\}$.
Since now both \linebreak $\stb(\F \dcup \H') \neq \emptyset$ and $\stb(\G \dcup \H') \neq \emptyset$
it holds that $\stb(\F \dcup \H') = \stb(\F \dcup \H')$ and $\stb(\G \dcup \H') =$\linebreak $\stb(\G \dcup \H')$.
Hence $\sigma(\F \dcup \H') \neq \sigma(\F \dcup \H')$,
showing that $\F \not\equiv^\stb_E \G$.

\end{proof}

The following theorems can be shown in a similar manner.

\begin{theorem}
\label{the:grdfirst}
Given a semantics $\sigma$ which is $-\pm$-verifiable and \interm{\grd}{\sad},
it holds that
$\F^{k(\grd)} = \G^{k(\grd)} \ToT \F\equiv^\sigma_E\G$.
\end{theorem}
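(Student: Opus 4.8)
The plan is to follow the same two-direction template as Theorem~\ref{the:stablefirst}, replacing the stable kernel and stable semantics throughout by the grounded kernel and grounded semantics. For the forward direction ($\To$), I would simply invoke Proposition~\ref{pro:cf-range-verifiable}: since $\sigma$ is assumed $-\pm$-verifiable, the proposition (the fourth bullet) immediately yields $\F^{k(\grd)} = \G^{k(\grd)} \To \F\equiv^\sigma_E\G$. So this half costs nothing beyond citing the right bullet.

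For the converse ($\oT$), I would prove the contrapositive: $\F^{k(\grd)} \neq \G^{k(\grd)} \To \F\not\equiv^\sigma_E\G$. Assuming $\F^{k(\grd)} \neq \G^{k(\grd)}$, Theorem~\ref{the:strong} gives $\F\not\equiv^{\grd}_E\G$, so there is an AF $\H$ with $\grd(\F\dcup\H)\neq\grd(\G\dcup\H)$. The key step — the grounded analogue of the ``add a universal attacker $a$'' trick used for stable semantics — is to produce from this witness an AF $\H'$ such that $\grd$ and $\sigma$ \emph{coincide} on $\F\dcup\H'$ and on $\G\dcup\H'$, while still $\grd(\F\dcup\H')\neq\grd(\G\dcup\H')$. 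Recall from Proposition~\ref{pro:strad:prop} that $\grd(\AF)\subseteq\sad(\AF)$ and, more to the point, that $\grd$ and $\sad$ are in general singletons/behave uniquely; since $\sigma$ is \interm{\grd}{\sad}, i.e.\ $\grd\subseteq\sigma\subseteq\sad$, any AF on which $\grd = \sad$ (as sets of extensions) forces $\sigma$ to equal both. So the construction I want is one that collapses $\sad$ down to $\grd$ on the relevant frameworks without disturbing the grounded extension. A natural candidate is to add a fresh argument $a$ that attacks every argument of $B = A(\F)\cup A(\G)\cup A(\H)$ and is itself unattacked; then $a$ is in the grounded extension of both $\F\dcup\H'$ and $\G\dcup\H'$, it defeats everything old, and hence the grounded extension becomes exactly $\{a\}$ together with whatever $a$ defends — and crucially $\sad$ cannot exceed this since every old argument is attacked by the (strongly admissible) set $\{a\}$ with no counterattack available. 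One has to check carefully that $\grd(\F\dcup\H')$ and $\grd(\G\dcup\H')$ remain distinct after this surgery; this needs the original difference between $\grd(\F\dcup\H)$ and $\grd(\G\dcup\H)$ to survive, which is where I would lean on the fact that adding an unattacked universal attacker essentially re-runs the grounded computation one level up.

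The main obstacle I anticipate is precisely the preservation of the grounded \emph{difference} under the collapsing construction: unlike the stable case, where adding the universal attacker simply adds $\{a\}$ as a new stable extension and leaves the others untouched, adding an unattacked $a$ that attacks all of $B$ \emph{destroys} every old grounded extension by defeating its members, so the naive adaptation would make $\grd(\F\dcup\H') = \grd(\G\dcup\H') = \{\{a\}\cup(\text{stuff }a\text{ defends})\}$ and kill the difference. I would therefore instead need a subtler gadget — e.g.\ attach to each argument $b\in B$ a private fresh defender/attacker pair so that the grounded status of $b$ in $\F\dcup\H$ versus $\G\dcup\H$ is faithfully propagated, while simultaneously ensuring no proper superset of the grounded extension is (strongly) admissible, so that $\sad$ collapses to $\grd$. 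Designing this gadget so that it is a legitimate expansion (pointwise union of AFs), keeps $\F^{k(\grd)}\neq\G^{k(\grd)}$ relevant, and genuinely forces $\sad = \grd$ on the expanded frameworks is the technical crux; once it is in hand, $\grd(\F\dcup\H') \neq \grd(\G\dcup\H')$ combined with $\grd = \sigma = \sad$ there yields $\sigma(\F\dcup\H')\neq\sigma(\G\dcup\H')$, hence $\F\not\equiv^\sigma_E\G$, completing the contrapositive.
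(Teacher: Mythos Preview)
Your forward direction is fine and matches the paper: just cite the $-\pm$-bullet of Proposition~\ref{pro:cf-inrange-pm-verifiable}.

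The backward direction, however, takes a wrong turn. You correctly reduce to finding an $\AH$ with $\grd(\F\dcup\AH)\neq\grd(\G\dcup\AH)$ via Theorem~\ref{the:strong}. But from this point no gadget is needed at all --- the ``similar manner'' the paper alludes to refers only to the overall contrapositive scaffolding, not to the fresh-argument trick, which was specific to the stable case (where $\stb$ can collapse to $\emptyset$). Here the structural relation between $\grd$ and $\sad$ does all the work. Write $\grd(\F\dcup\AH)=\{E_\F\}$ and $\grd(\G\dcup\AH)=\{E_\G\}$ with $E_\F\neq E_\G$. Since $\grd\subseteq\sigma$, we have $E_\F\in\sigma(\F\dcup\AH)$; since $\sigma\subseteq\sad$, every $S\in\sigma(\F\dcup\AH)$ is strongly admissible and hence $S\subseteq E_\F$ by Proposition~\ref{pro:strad:prop}(2). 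Thus $E_\F$ is the unique $\subseteq$-maximal element of $\sigma(\F\dcup\AH)$, and symmetrically $E_\G$ is that of $\sigma(\G\dcup\AH)$. If $\sigma(\F\dcup\AH)=\sigma(\G\dcup\AH)$ these maxima would agree, forcing $E_\F=E_\G$ --- contradiction. Hence $\F\not\equiv^\sigma_E\G$ already with the original witness $\AH$.

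So your anticipated ``main obstacle'' is not an obstacle: you correctly saw that the naive universal-attacker construction destroys the grounded difference, but the right reaction is not to search for a subtler gadget --- it is to notice that you never needed one. The unspecified ``private defender/attacker pair'' construction you sketch is both unnecessary and left incomplete; as written, the proposal does not close the argument.
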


%
%
%

\begin{theorem} \label{the:admfirst}
Given a semantics $\sigma$ which is $+\mp$-verifiable and \interm{\rho}{\adm}
for any
$\rho\in\{\semi,\id,\eag\}$,
it holds that
$\F^{k(\adm)} = \G^{k(\adm)} \ToT \F\equiv^\sigma_E\G$.
\end{theorem}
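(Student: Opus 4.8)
The plan is to mimic the proof strategy of Theorem~\ref{the:stablefirst}, replacing the stable kernel and stable semantics by the admissible kernel and admissible semantics throughout. The forward direction ($\To$) is immediate: it is exactly the second bullet of Proposition~\ref{pro:cf-range-inrange-verifiable} (the \mbox{$+\mp$}-verifiable case), which gives $\F^{k(\adm)} = \G^{k(\adm)} \To \F\equiv^\sigma_E\G$ for any \mbox{$+\mp$}-verifiable $\sigma$. No use of the intermediate property or of $\rho$ is needed here.

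For the backward direction ($\oT$) I would argue the contrapositive: assume $\F^{k(\adm)} \neq \G^{k(\adm)}$ and produce a witnessing expansion showing $\F\not\equiv^\sigma_E\G$. By Theorem~\ref{the:strong}, $\F^{k(\adm)} \neq \G^{k(\adm)}$ means $\F \not\equiv^{\adm}_E \G$, and also $\F\not\equiv^\rho_E\G$ for each $\rho\in\{\prf,\id,\semi,\eag\}$ since all of these are characterized by $k(\adm)$. So there is an AF $\H$ with $\adm(\F\dcup\H) \neq \adm(\G\dcup\H)$. The delicate point, just as in Theorem~\ref{the:stablefirst}, is that an arbitrary witnessing difference at the level of $\adm$ need not lift to a difference at the level of an arbitrary \interm{\rho}{\adm} semantics $\sigma$, because $\sigma$ might discard precisely the distinguishing extensions. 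The idea is therefore to ``saturate'' the witness so that the distinguishing admissible set becomes simultaneously a $\rho$-extension (for the relevant $\rho$) in \emph{both} expanded frameworks, forcing $\sigma$ to contain it on one side but not the other. Concretely, I would take $B = A(\F)\cup A(\G)\cup A(\H)$ together with the distinguishing set $S$ (w.l.o.g.\ $S\in\adm(\F\dcup\H)\setminus\adm(\G\dcup\H)$), and build $\H'$ from $\H$ by adding machinery that (i) keeps $S$ admissible in $\F\dcup\H'$, (ii) makes $S$ maximal admissible / ideal / semi-stable / eager as appropriate, and (iii) does the analogous thing uniformly so that $\sigma(\F\dcup\H')$ and $\sigma(\G\dcup\H')$ still differ. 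A clean way to handle the several $\rho$ at once is to attach, for every argument not in $S$, a fresh self-attacking ``defeater'' attacked exactly by $S$, so that $S$ becomes the unique $\subseteq$-maximal conflict-free admissible set and its range is all of the universe of the expanded AF; this collapses $\prf$, $\semi$, $\id$ and $\eag$ to (essentially) $\{S\}$ on the $\F$-side while leaving $S$ non-admissible on the $\G$-side, whence $S\in\sigma(\F\dcup\H')\setminus\sigma(\G\dcup\H')$ by the sandwich $\rho\subseteq\sigma\subseteq\adm$.

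The main obstacle I expect is verifying clause (iii): after the saturation, one must ensure that the \emph{other} expanded framework $\G\dcup\H'$ genuinely fails to produce $S$ as a $\sigma$-extension, which requires that the added defeaters do not accidentally repair the admissibility of $S$ in $\G\dcup\H'$ — i.e.\ the argument that caused $S\notin\adm(\G\dcup\H)$ (an undefended attacker of $S$) must survive in $\G\dcup\H'$ and still be undefended. This is a careful but routine case analysis on why $S$ was not admissible in $\G\dcup\H$ (either $S$ is not conflict-free there, or it has an unattacked-by-$S$ attacker), handled separately; the conflict-free case is the easier one since conflicts persist under expansion, and the undefended-attacker case needs the observation that the freshly added self-loops are attacked only by $S$ and hence cannot serve as $S$'s defenders. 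Once these bookkeeping points are settled, combining them with $\rho\subseteq\sigma\subseteq\adm$ and with $\F^{k(\adm)} = \left(\F^{k(\adm)}\right)^{k(\adm)}$ closes the argument exactly as in the stable case, and the statement follows for all three choices $\rho\in\{\semi,\id,\eag\}$ uniformly.
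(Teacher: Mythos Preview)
Your forward direction is exactly right and matches the paper (which only says the result ``can be shown in a similar manner'' to Theorem~\ref{the:stablefirst}, so the backward direction is left implicit there too).

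Your backward strategy---contrapositive, pick $S\in\adm(\F\dcup\H)\setminus\adm(\G\dcup\H)$, then saturate so that $S$ becomes a $\rho$-extension on the $\F$-side while remaining non-admissible on the $\G$-side---is the right idea. The problem is your concrete saturation. Attaching, for each $c\notin S$, a fresh self-looping $d_c$ attacked by $S$ achieves neither of the two properties you claim: (i) it does \emph{not} make the range of $S$ the whole universe, since $S$ now covers each $d_c$ but still need not attack $c$ itself; (ii) it does \emph{not} make $S$ the unique $\subseteq$-maximal admissible set, since any admissible $T$ of $\F\dcup\H$ that meets $S$ (in particular any admissible proper superset of $S$) still defends itself against every $d_c$ and thus survives. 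So the sandwich $\rho\subseteq\sigma\subseteq\adm$ cannot be applied as you describe.

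A clean repair that stays within your plan: set $\H'=\H\dcup\big(B,\{(c,c)\mid c\in B\setminus S\}\big)$ with $B=A(\F)\cup A(\G)\cup A(\H)$, i.e.\ simply add a self-loop to every argument outside $S$. Then in $\F\dcup\H'$ every conflict-free set is a subset of $S$; since $S$ was admissible and no new attacker of $S$ is introduced, $S$ is the unique preferred extension, hence the unique $\rho$-extension for each $\rho\in\{\semi,\id,\eag\}$, giving $S\in\sigma(\F\dcup\H')$. On the $\G$-side the original obstruction persists: a conflict inside $S$ is untouched, and an undefended attacker $b$ of $S$ still attacks $S$ and is still not attacked by $S$ (adding $(b,b)$ does not help $S$), so $S\notin\adm(\G\dcup\H')\supseteq\sigma(\G\dcup\H')$. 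This yields $\F\not\equiv^\sigma_E\G$ exactly as in the stable case. With this corrected $\H'$ your case analysis becomes genuinely routine.
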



Recall that complete semantics is a \interm{\semi}{\adm} semantics. Furthermore, it is not characterizable by the admissible kernel as already observed in \cite{strong}. Consequently, 
it
is not $+\mp$-verifiable (as we have shown in Example~\ref{ex:exact_verify} with considerable effort).

\section{Conclusions} \label{sec:sumcon4}

In this chapter we have initiated a, to the best of our knowledge, novel
approach 
contributed to the analysis and comparison of abstract argumentation semantics.
The main idea of our approach
is to provide a novel categorization 
in terms of the amount of information
required for testing whether a set of arguments is an extension of a certain
semantics.
The resulting notion of verification
classes allows us to categorize any new semantics (given it is ``rational'')
with respect to the information needed and compare it to other semantics.
Thus our work is 
in the tradition of the principle-based evaluation due to 
Baroni and Giacomin
\cite{BarG07}
and 
paves the way for a more general view on argumentation
semantics, their 
common features, and their inherent differences.

Using our notion of verifiability, we were able to show characterizability
for certain intermediate semantics w.r.t.\ some classical kernels.
Concerning concrete semantics, our results yield the following observation:
While preferred, semi-stable, ideal and eager semantics coincide w.r.t.\
strong equivalence, verifiability of these semantics differs.
In fact, preferred and ideal semantics manage to be verifiable with strictly less information.

For future work we envisage an extension of the
notion of verifiability classes in order to categorize
semantics not captured by the approach followed in this chapter,
such as $\cfzwei$ \cite{BarGG05}.

\chapter{Summary and Final Remarks} \label{chap:conc}

The field of Artificial Intelligence (AI) brings together people from theoretical sciences like mathematics or philosophy and application oriented researchers. 
For a particular application the latter group is engaged with several design decisions ranging from \textit{how to represent the relevant knowledge?} and \textit{how to update a given knowledge base?} over \textit{how to simplify a given knowledge base?} to \textit{how much computing power is sufficient for the intelligent agent?}. 
In consideration of the large variety of existing logical formalisms it is of utmost importance to select the most adequate one for the specific purpose in mind. 
The presented habilitation treatise tackles several fundamental intrinsic properties of knowledge representation formalisms and thus contributes to an informed choice for the designer. 
The results can be compactly summarized as follows (for detailed summaries and conclusions we refer to the Sections~\ref{sec:discussion}, \ref{sec:sumcon}, \ref{sec:sumcon2}, \ref{sec:sumcon3}~and~\ref{sec:sumcon4}).
First, we tackled the open question whether strong equivalence of two theories (which guarantees their mutual replaceability without loss of information) in a certain formalism can be decided via ordinary equivalence in another classical logic-like formalism. 
We showed that the important case of
considering only finite knowledge bases guarantees the existence of a canonical characterizing formalism. This means that the search for characterizing logics for a given representation formalism -- analogously to the logic of here-and-there in case of normal logic programs under stable model semantics \citep{DBLP:journals/tocl/LifschitzPV01} -- is not doomed to failure. 
Secondly, we studied argumentation semantics
which play the flagship role in Dung's abstract argumentation theory in very detail \cite{Dung95}. 
In particular, we compared a representative number of semantics regarding the following properties: existence and uniqueness, expressibility, replaceability and verifiability. It turned out that the considered properties are highly sensitive to the chosen semantics as well as structural properties of the considered AFs. This variety can be seen as a positive feature since it allows the designer to choose the most appropriate one for the current task.

In the last few years AI has stepped more and more into the public. 
Omniscient question answering systems like IBM's \textit{Watson} \cite{Watson} or a superhuman chess, shogi as well as Go playing program like DeepMind's \textit{AlphaZero} \cite{Silver1140} left a remarkable impression in the wider public. Especially, the latter program is considered to be one of the greatest breakthroughs in AI. Even the most optimistic researchers did not expect that a game like Go will be played by a machine at super human level within the next 10 years since it is an extremely complex game, more so than chess. The Danish chess grandmaster Peter Heine Nielsen \cite{Nielsen} said about the chess skills of AlphaZero the following:

\begin{quotation} I always wondered how it would be if a superior species landed on earth and showed us how they played chess. Now I know. 
\end{quotation}

The public and scientific opinion about what AI is capable to achieve changed so many times since the by now famous \textit{Dartmouth Summer Workshop} in 1956 \cite{dartmouth} which is considered by most researchers as the beginning of AI research. The history of artificial intelligence is marked by the alternation of periods of enthusiasm equipped with broad funding opportunities and cycles of disappointment and criticism, so-called \textit{AI winters}. Right now we are definitely in a hype cycle. Several major biennial conferences in AI decided to become an annual event like the prestigious \textit{International Joint Conference on Artificial Intelligence} (from 2015 onwards) \cite{IJCAI} as well as the leading knowledge representation conference \textit{International Conference on Principles of Knowledge Representation and Reasoning} (from 2020 onwards) \cite{KR}. Moreover, the German government decided to spend more than 3 billion Euros for research and development of artificial intelligence over the next years up to 2025 \cite{Initiative}. In order to stop the \textit{AI brain drain} the initiative will include a funding of 100 new professorships. The goal is clear: Germany wants to become a worldwide leader in AI.

I'm sure that the next few years will be an exciting time regarding new AI developments. One of the main challenges will be how to transfer research results into industry and finally into our daily life. It is not too far-fetched to envisage that the expertise of programs like AlphaZero and Watson might be used to optimize dangerous missions like fire fighting or to assist elderly people in their daily decisions. What is not achieved by both programs so far is the ability to explain why certain decisions are derived. This means, the actions performed by these AIs are neither transparent, nor easily understandable by humans. In order to attain a broad approval for AI technologies explanation components has to be implemented. This issue or this demand is known as \textit{explainable AI} \cite{explainable}. Argumentation theory provides the formal ingredients for being able to have a discussion or justifying a certain proposal via explicitly mentioning the pros and cons. In this regard, I believe that the presented formal study of intrinsic properties of Dung's abstract argumentation formalism may contribute to the development of a new AI generation.

%
%

%
%
%
%
%
%
%
%
%
%
%
%
%
%
%
%
%
%

\bibliographystyle{apalike}
\bibliography{habil}



\end{document}